
\documentclass[11pt]{article}
\pdfoutput=1  
\usepackage{etoolbox}
\usepackage{preamble/color-edits}

\usepackage{authblk}

\usepackage[labelfont=bf]{caption}
\usepackage{booktabs}       %
\usepackage{bbm}       %
\newcommand{\twnote}[1]%
    {\textcolor{cyan}{\textbf{TW: #1}}}

\newtoggle{icml}
\togglefalse{icml} 
\newtoggle{arxiv}
\toggletrue{arxiv}

\def\ddefloop#1{\ifx\ddefloop#1\else\ddef{#1}\expandafter\ddefloop\fi}
\def\ddef#1{\expandafter\def\csname bb#1\endcsname{\ensuremath{\mathbb{#1}}}}
\ddefloop ABCDEFGHIJKLMNOPQRSTUVWXYZ\ddefloop

\def\ddefloop#1{\ifx\ddefloop#1\else\ddef{#1}\expandafter\ddefloop\fi}
\def\ddef#1{\expandafter\def\csname sf#1\endcsname{\ensuremath{\mathsf{#1}}}}
\ddefloop ABCDEFGHIJKLMNOPQRSTUVWXYZ\ddefloop

\def\ddefloop#1{\ifx\ddefloop#1\else\ddef{#1}\expandafter\ddefloop\fi}
\def\ddef#1{\expandafter\def\csname frak#1\endcsname{\ensuremath{\mathfrak{#1}}}}
\ddefloop ABCDEFGHIJKLMNOPQRSTUVWXYZ\ddefloop

\def\ddefloop#1{\ifx\ddefloop#1\else\ddef{#1}\expandafter\ddefloop\fi}
\def\ddef#1{\expandafter\def\csname fr#1\endcsname{\ensuremath{\mathfrak{#1}}}}
\ddefloop ABCDEFGHIJKLMNOPQRSTUVWXYZ\ddefloop

\def\ddefloop#1{\ifx\ddefloop#1\else\ddef{#1}\expandafter\ddefloop\fi}
\def\ddef#1{\expandafter\def\csname eul#1\endcsname{\ensuremath{\EuScript{#1}}}}
\ddefloop ABCDEFGHIJKLMNOPQRSTUVWXYZ\ddefloop

\def\ddefloop#1{\ifx\ddefloop#1\else\ddef{#1}\expandafter\ddefloop\fi}
\def\ddef#1{\expandafter\def\csname scr#1\endcsname{\ensuremath{\mathscr{#1}}}}
\ddefloop ABCDEFGHIJKLMNOPQRSTUVWXYZ\ddefloop

\def\ddefloop#1{\ifx\ddefloop#1\else\ddef{#1}\expandafter\ddefloop\fi}
\def\ddef#1{\expandafter\def\csname b#1\endcsname{\ensuremath{\mathbf{#1}}}}
\ddefloop ABCDEFGHIJKLMNOPQRSTUVWXYZ\ddefloop

\def\ddefloop#1{\ifx\ddefloop#1\else\ddef{#1}\expandafter\ddefloop\fi}
\def\ddef#1{\expandafter\def\csname bhat#1\endcsname{\ensuremath{\hat{\mathbf{#1}}}}}
\ddefloop ABCDEFGHIJKLMNOPQRSTUVWXYZ\ddefloop

\def\ddefloop#1{\ifx\ddefloop#1\else\ddef{#1}\expandafter\ddefloop\fi}
\def\ddef#1{\expandafter\def\csname btil#1\endcsname{\ensuremath{\tilde{\mathbf{#1}}}}}
\ddefloop ABCDEFGHIJKLMNOPQRSTUVWXYZ\ddefloop

\def\ddefloop#1{\ifx\ddefloop#1\else\ddef{#1}\expandafter\ddefloop\fi}
\def\ddef#1{\expandafter\def\csname bst#1\endcsname{\ensuremath{\mathbf{#1}^\star}}}
\ddefloop ABCDEFGHIJKLMNOPQRSTUVWXYZ\ddefloop

\def\ddefloop#1{\ifx\ddefloop#1\else\ddef{#1}\expandafter\ddefloop\fi}
\def\ddef#1{\expandafter\def\csname bst#1\endcsname{\ensuremath{\mathbf{#1}^\star}}}
\ddefloop abcdefghijklmnopqrstuvwxyz\ddefloop

\def\ddefloop#1{\ifx\ddefloop#1\else\ddef{#1}\expandafter\ddefloop\fi}
\def\ddef#1{\expandafter\def\csname bhat#1\endcsname{\ensuremath{\hat{\mathbf{#1}}}}}
\ddefloop abcdefghijklmnopqrstuvwxyz\ddefloop

\def\ddefloop#1{\ifx\ddefloop#1\else\ddef{#1}\expandafter\ddefloop\fi}
\def\ddef#1{\expandafter\def\csname b#1\endcsname{\ensuremath{\mathbf{#1}}}}
\ddefloop abcdeghijklnopqrstuvwxyz\ddefloop

\def\ddefloop#1{\ifx\ddefloop#1\else\ddef{#1}\expandafter\ddefloop\fi}
\def\ddef#1{\expandafter\def\csname barb#1\endcsname{\ensuremath{\bar{\mathbf{#1}}}}}
\ddefloop abcdefghijklmnopqrstuvwxyz\ddefloop

\def\ddef#1{\expandafter\def\csname c#1\endcsname{\ensuremath{\mathcal{#1}}}}
\ddefloop ABCDEFGHIJKLMNOPQRSTUVWXYZ\ddefloop
\def\ddef#1{\expandafter\def\csname h#1\endcsname{\ensuremath{\widehat{#1}}}}
\ddefloop ABCDEFGHIJKLMNOPQRSTUVWXYZ\ddefloop
\def\ddef#1{\expandafter\def\csname hc#1\endcsname{\ensuremath{\widehat{\mathcal{#1}}}}}
\ddefloop ABCDEFGHIJKLMNOPQRSTUVWXYZ\ddefloop
\def\ddef#1{\expandafter\def\csname t#1\endcsname{\ensuremath{\widetilde{#1}}}}
\ddefloop ABCDEFGHIJKLMNOPQRSTUVWXYZ\ddefloop
\def\ddef#1{\expandafter\def\csname tc#1\endcsname{\ensuremath{\widetilde{\mathcal{#1}}}}}
\ddefloop ABCDEFGHIJKLMNOPQRSTUVWXYZ\ddefloop

\usepackage{boxedminipage}
\usepackage{multirow,nicefrac}
\usepackage{makecell,upgreek}
\usepackage{footnote}
\usepackage{longtable}
\usepackage[shortlabels]{enumitem}
\usepackage{tablefootnote}
\usepackage[T1]{fontenc}
\usepackage{verbatim} 
\usepackage[utf8]{inputenc}

\usepackage{booktabs}   
\usepackage{float}

\usepackage{xcolor}

\usepackage{amsthm}

\iftoggle{arxiv}
{
\usepackage{subfig}
  \usepackage[breaklinks=true]{hyperref}

  \usepackage{fullpage}
  \usepackage[noend]{algpseudocode}
  \usepackage{algorithm}
}
{
\usepackage{subfig}
  \usepackage[breaklinks=true]{hyperref}
  \usepackage[noend]{algpseudocode}
  \usepackage{algorithm}
}

\usepackage{amsfonts,amssymb,mathrsfs}
\usepackage{mathtools}
\DeclareMathSymbol{\shortminus}{\mathbin}{AMSa}{"39}

\iftoggle{arxiv}{
	\usepackage{natbib}
  }
  {
  }

\usepackage{prettyref}

\usepackage[capitalise,nameinlink]{cleveref}
\Crefname{equation}{Eq.}{Eqs.}
\Crefname{assumption}{Assumption}{Assumptions}
\Crefname{condition}{Condition}{Conditions}
\Crefname{claim}{Claim}{Claims}

\usepackage{breakcites,bm}

\hypersetup{colorlinks,citecolor=blue,linkcolor = blue}
\usepackage{latexsym}
\usepackage{relsize}
\usepackage{thm-restate}
\usepackage{appendix}

\usepackage{xcolor}
\usepackage{dsfont}

\newcommand{\N}{\mathbb{N}}

\newcommand{\R}{\mathbb{R}}

\numberwithin{equation}{section}
\newcommand\numberthis{\addtocounter{equation}{1}\tag{\theequation}}

\newcommand{\eye}{\mathbf{I}}

\newcommand{\nablatwo}{\nabla^{\,2}}

\newcommand{\rmd}{\mathrm{d}}




\newcommand{\bzero}{\ensuremath{\mathbf 0}}

\def\bB{\mathbf{B}}

\def\bu{\mathbf{u}}

\def\by{\mathbf{y}}
\def\bz{\mathbf{z}}

\def\bu{\mathbf{u}}
\def\bv{\mathbf{v}}

\def\bw{\mathbf{w}}
\def\bA{\mathbf{A}}

\def\bI{\mathbf{I}}
\def\bJ{\mathbf{J}}
\def\bP{\mathbf{P}}
\def\bQ{\mathbf{Q}}

\DeclareFontFamily{U}{mathx}{\hyphenchar\font45}
\DeclareFontShape{U}{mathx}{m}{n}{
      <5> <6> <7> <8> <9> <10>
      <10.95> <12> <14.4> <17.28> <20.74> <24.88>
      mathx10
      }{}
\DeclareSymbolFont{mathx}{U}{mathx}{m}{n}
\DeclareFontSubstitution{U}{mathx}{m}{n}
\DeclareMathAccent{\widecheck}{0}{mathx}{"71}
\DeclareMathAccent{\wideparen}{0}{mathx}{"75}

\newcommand{\ignore}[1]{}



%
%
%

\DeclareMathOperator{\BigOm}{\mathcal{O}}

\newcommand{\BigOh}[1]{\BigOm\left({#1}\right)}

\newcommand{\iidsim}{\overset{\mathrm{i.i.d}}{\sim}}

\newcommand{\dimx}{d_{\mathsf{x}}}
\newcommand{\dimu}{d_{\mathsf{u}}}

\newcommand{\op}{\mathrm{op}}

\iftoggle{icml}
{
  \newcommand{\icmlpar}[1]{\textbf{#1}}
}
{
  \newcommand{\icmlpar}[1]{\paragraph{#1}}

}

\newcommand{\algcomment}[1]{\textcolor{blue!70!black}{\footnotesize{\texttt{\textbf{\% #1}}}}}

	\theoremstyle{plain}
	\newtheorem{theorem}{Theorem}
	
	\newtheorem{oracle}{Oracle}[section]

  \newtheorem{desideratum}{Desideratum}

	\newtheorem{lemma}{Lemma}[section]
	\newtheorem{claim}{Claim}[section]
	\newtheorem{corollary}{Corollary}[section]
	\newtheorem{proposition}[lemma]{Proposition}

	\theoremstyle{definition}

	\newtheorem{definition}{Definition}[section]

	\newtheorem{remark}{Remark}[section]

  \newtheorem{assumption}{Assumption}[section]
  \newtheorem{condition}{Condition}[section]


	
\makeatletter
\newcommand{\neutralize}[1]{\expandafter\let\csname c@#1\endcsname\count@}
\makeatother

\newtheorem*{theorem*}{Theorem}
\newtheorem*{lemma*}{Lemma}
\newtheorem*{corollary*}{Corollary}
\newtheorem*{proposition*}{Proposition}
\newtheorem*{claim*}{Claim}
\newtheorem*{fact*}{Fact}
\newtheorem*{observation*}{Observation}

\newtheorem*{definition*}{Definition}
\newtheorem*{remark*}{Remark}
\newtheorem*{example*}{Example}

\newtheoremstyle{named}{}{}{\itshape}{}{\bfseries}{}{.5em}{\Cref{#3} {\normalfont (informal)} }
{}
\theoremstyle{named}

\theoremstyle{plain}


%




\DeclareMathAlphabet{\mathbfsf}{\encodingdefault}{\sfdefault}{bx}{n}

\DeclareMathOperator*{\argmin}{arg\,min}

\let\Pr\relax
\DeclareMathOperator{\Pr}{\mathbb{P}}




\newcommand{\floor}[1]{\lfloor #1 \rfloor}

\newcommand{\poly}{\mathrm{poly}}

\newcommand{\half}{\frac{1}{2}}

\renewcommand{\leq}{~\le~}
\renewcommand{\geq}{~\ge~}

\let\oldtfrac\tfrac
\renewcommand{\tfrac}[2]{\smash{\oldtfrac{#1}{#2}}}

\let\nablaold\nabla
\renewcommand{\nabla}{\nablaold\mkern-2.5mu}

\newcommand{\Exp}{\mathbb{E}}

\newcommand{\sigw}{\sigma_{w}}


\newcommand{\Z}{\mathbb{Z}}


\newcommand{\Term}{\mathrm{Term}}
\newcommand{\rank}{\mathrm{rank}}

 %

\definecolor{disccolor}{rgb}{.1, 0, .5}
\definecolor{contcolor}{rgb}{1, 0, 0}

\newcommand{\discfont}[1]{{\color{disccolor} \mathtt{#1}}}

\newcommand{\interior}{\mathrm{int}}

\newcommand{\pione}{\pi^{(1)}}

\newcommand{\bPctk}[1][k]{\discfont{P}^{\mathrm{ct}}_{#1}}
\newcommand{\bPsubk}[1][k]{\discfont{P}^{\mathrm{sub}}_{#1}}

\newcommand{\bXctk}[1][k]{\discfont{X}^{\mathrm{ct}}_{#1}}
\newcommand{\bXsubk}[1][k]{\discfont{X}^{\mathrm{sub}}_{#1}}

\newcommand{\bYctk}[1][k]{\discfont{Y}^{\mathrm{ct}}_{#1}}
\newcommand{\bYsubk}[1][k]{\discfont{Y}^{\mathrm{sub}}_{#1}}

\newcommand{\bKctk}[1][k]{\discfont{K}^{\mathrm{ct}}_{#1}}

\newcommand{\deluk}[1][k]{\updelta {\discfont{u}}_{#1}}

\newcommand{\Mtaypi}{M_{\mathrm{tay},\pi}}
\newcommand{\bztilk}[1][k]{\tilde{\discfont{z}}_{#1}}

\newcommand{\stepric}{\step_{\mathrm{ric}}}
\newcommand{\stepdyn}{\step_{\mathrm{dyn}}}

\newcommand{\stepctrlpi}[1][\pi]{\step_{\mathrm{ctrl},#1}}
\newcommand{\stepricpi}[1][\pi]{\step_{\mathrm{ric},#1}}

\newcommand{\steptaypi}[1][\pi]{\step_{\mathrm{tay},#1}}

\newcommand{\Pialg}{\Pi_{\mathrm{alg}}}

\newcommand{\errpsi}{\mathrm{Err}_{\Psi}}
\newcommand{\errpsipi}[1][\pi]{\mathrm{Err}_{\Psi,#1}}
\newcommand{\errnab}{\mathrm{Err}_{\nabla}}
\newcommand{\errnabpi}[1][\pi]{\mathrm{Err}_{\nabla, #1}}

\newcommand{\dst}{d_{\star}}
\newcommand{\Evest}{\cE_{\mathrm{est}}}
\newcommand{\errx}{\mathrm{Err}_{\hat{x}}}

\newcommand{\Lol}{L_{\mathrm{ol}}}

\newcommand{\LQ}{L_{\mathrm{cost}}}
\newcommand{\MQ}{M_{Q}}
\newcommand{\bhatAkpi}[1][k]{\hat{\discfont{A}}^\pi_{#1}}
\newcommand{\bhatBkpi}[1][k]{\hat{\discfont{B}}^\pi_{#1}}
\newcommand{\gamcont}{\gamma_{\mathrm{ctr}}}
\newcommand{\bPpik}[1][k]{\discfont{P}_{#1}^{\pi}}
\newcommand{\bzk}[1][k]{\discfont{z}_{#1}}
\newcommand{\bPpist}{\bP^{\pi}_{\mathrm{opt}}}
\newcommand{\bPpikst}[1][k]{\discfont{P}_{\mathrm{opt},#1}^{ \pi}}

\newcommand{\bchuk}[1][k]{\check{\discfont{u}}_{#1}}

\newcommand{\btilxpict}{\btilx^{\pi,\mathrm{ct}}}
\newcommand{\btilupict}{\btilu^{\pi,\mathrm{ct}}}

\newcommand{\btilxpijac}{\btilx^{\pi,\mathrm{jac}}}
\newcommand{\btilupijac}{\btilu^{\pi,\mathrm{jac}}}

\newcommand{\Jpi}{\cJ_T^\pi}
\newcommand{\Jpijac}{\cJ_T^{\pi,\mathrm{jac}}}

\newcommand{\LFP}{\mu_{\mathrm{ric}}}

\newcommand{\btilx}{\tilde{\bx}}
\newcommand{\Lpi}{L_{\pi}}

\newcommand{\Jdisc}{\cJ_T^{\mathrm{disc}}}

\newcommand{\Kpiprst}{\mu_{\pi',\star}}
\newcommand{\Mtayjpi}[1][\pi]{M_{\cJ,\mathrm{tay},#1}}
\newcommand{\Btwo}{B_2}
\newcommand{\Binf}{B_{\infty}}
\newcommand{\kname}{\kappa}
\newcommand{\Kpitwo}{\kname_{\pi,2}}
\newcommand{\Kpione}{\kname_{\pi,1}}
\newcommand{\Kpiinf}{\kname_{\pi,\infty}}
\newcommand{\Ltaypiq}{L_{\mathrm{tay},q,\pi}}
\newcommand{\Ltaypiinf}{L_{\mathrm{tay},\infty,\pi}}
\newcommand{\Ltaypitwo}{L_{\mathrm{tay},2,\pi}}

\newcommand{\Lnabpiinf}[1][\pi]{L_{\nabla,#1,\infty}}

\newcommand{\Mtaypitwo}{M_{\mathrm{tay},2,\pi}}
\newcommand{\Mtaypiinf}{M_{\mathrm{tay},\inf,\pi}}
\newcommand{\Mtaypiq}{M_{\mathrm{tay},q,\pi}}
\newcommand{\Btaypitwo}{B_{\mathrm{tay},2,\pi}}
\newcommand{\Btaypiinf}{B_{\mathrm{tay},\inf,\pi}}
\newcommand{\Btaypiq}{B_{\mathrm{tay},q,\pi}}

\newcommand{\bmsf}[1]{\bm{\mathsf{#1}}}

\newcommand{\Sym}{\mathsf{Sym}}
\newcommand{\bLamk}[1][k]{\discfont{\Lambda}_{#1}}

\newcommand{\bOmegak}[1][k]{\discfont{\Omega}_{#1}}

\newcommand{\bZk}[1][k]{\discfont{Z}_{#1}}
\newcommand{\bYk}[1][k]{\discfont{Y}_{#1}}
\newcommand{\bTheta}{\bm{\Theta}}
\newcommand{\KB}{K_{B}}

\newcommand{\KA}{K_{A}}

\newcommand{\subbrack}[1]{#1}
\newcommand{\DelK}{\Delta_K}
\newcommand{\bBk}[1][k]{\discfont{B}_{\subbrack{#1}}}
\newcommand{\Delce}{\Delta_{\mathrm{ce}}}
\newcommand{\bAk}[1][k]{\discfont{A}_{\subbrack{#1}}}
\newcommand{\bAclk}[1][k]{\discfont{A}_{\mathrm{cl},\subbrack{#1}}}
\newcommand{\bPk}[1][k]{\discfont{P}_{\subbrack{#1}}}
\newcommand{\bKk}[1][k]{\discfont{K}_{\subbrack{#1}}}

\newcommand{\bXk}[1][k]{\discfont{X}_{\subbrack{#1}}}
\newcommand{\bXik}[1][k]{\discfont{\Xi}_{\subbrack{#1}}}
\newcommand{\DelA}{\Delta_A}
\newcommand{\DelB}{\Delta_B}

\newcommand{\bhatBk}[1][k]{\hat{\discfont{B}}_{\subbrack{#1}}}
\newcommand{\bhatAk}[1][k]{\hat{\discfont{A}}_{\subbrack{#1}}}
\newcommand{\bhatPk}[1][k]{\hat{\discfont{P}}_{\subbrack{#1}}}
\newcommand{\bhatKk}[1][k]{\hat{\discfont{K}}_{\subbrack{#1}}}

\newcommand{\bxk}[1][k]{\discfont{x}_{\subbrack{#1}}}
\newcommand{\buk}[1][k]{\discfont{u}_{\subbrack{#1}}}

\newcommand{\bstBk}[1][k]{\discfont{B}^\star_{\subbrack{#1}}}
\newcommand{\bstAk}[1][k]{\discfont{A}^\star_{\subbrack{#1}}}
\newcommand{\bstPk}[1][k]{\discfont{P}^\star_{\subbrack{#1}}}
\newcommand{\bstKk}[1][k]{\discfont{K}^\star_{\subbrack{#1}}}
\newcommand{\btilKk}[1][k]{\tilde{\discfont{K}}_{\subbrack{#1}}}
\newcommand{\btilPk}[1][k]{\tilde{\discfont{P}}_{\subbrack{#1}}}

\newcommand{\bXi}{\bm{\Xi}}
\newcommand{\bstTheta}{\bm{\Theta}^\star}
\newcommand{\bhatTheta}{\hat{\bm{\Theta}}}

\newcommand{\maxop}{\max,\mathrm{op}}
\newcommand{\Mtil}{\tilde{M}}
\newcommand{\Pce}{\discfont{P}^{\mathrm{ce}}}

\newcommand{\Poptk}[1][k]{\discfont{P}_{\subbrack{#1}}^{\mathrm{opt}}}
\newcommand{\Pcek}[1][k]{\discfont{P}_{\subbrack{#1}}^{\mathrm{ce}}}
\newcommand{\Pfeedk}[1][k]{\discfont{P}_{\subbrack{#1}}^{\mathrm{feed}}}

\newcommand{\Koptk}[1][k]{\discfont{K}_{\subbrack{#1}}^{\mathrm{opt}}}

    \newcommand{\btilYk}[1][k]{\tilde{\discfont{Y}}_{#1}}

\addauthor{dp}{cyan}
\addauthor{ms}{magenta}
\addauthor{st}{red}
\addauthor{tw}{purple}

\makeatletter
\newcommand{\printfnsymbol}[1]{%
  \textsuperscript{\@fnsymbol{#1}}%
}
\makeatother

\title{The Power of Learned Locally Linear Models for Nonlinear Policy Optimization}
\author[1]{Daniel Pfrommer\thanks{These authors contributed equally to this work.}}
\author[1]{Max Simchowitz\printfnsymbol{1}}
\author[2]{Tyler Westenbroek}
\author[3]{\authorcr Nikolai Matni}
\author[4]{Stephen Tu}
\affil[1]{Massachusetts Institute of Technology}
\affil[2]{University of California, Berkeley}
\affil[3]{University of Pennsylvania}
\affil[4]{Google Brain}

\begin{document}
\maketitle

\begin{abstract}

A common pipeline in learning-based control is to iteratively estimate a model of system dynamics, and apply a trajectory optimization algorithm - e.g.~$\mathtt{iLQR}$ - on the learned model  to minimize a target cost. This paper conducts a rigorous analysis of a simplified variant of this strategy for general nonlinear systems. We analyze an algorithm which iterates between estimating local linear models of nonlinear system dynamics and performing $\mathtt{iLQR}$-like policy updates. We demonstrate that this algorithm attains sample complexity polynomial in relevant problem parameters, and, by synthesizing locally stabilizing gains, overcomes exponential dependence in problem horizon. Experimental results validate the performance of our algorithm, and compare to natural deep-learning baselines.
\end{abstract}

\section{Introduction}\label{sec:intro}
\newcommand{\iLQR}{\mathtt{iLQR}}
\newcommand{\JSP}{\mathtt{JSP}}

Machine learning methods such as model-based reinforcement learning have lead to a number of breakthroughs in key applications across robotics and control \citep{kocijan2004gaussian,tassa2012synthesis,nagabandi2019deep}. A popular technique in these domains is learning-based model-predictive control (MPC) \citep{morari1999model,williams2017information}, wherein a model learned from data is used to repeatedly solve online planning problems to control the real system. It has long been understood that solving MPC \emph{exactly}--both with perfectly accurate dynamics and minimization to globally optimality for each planning problem--enjoys numerous beneficial control-theoretic properties \citep{jadbabaie2001stability}.

Unfortunately, the above situation is not reflective of practice. For one, most systems of practical interest are \emph{nonlinear}, and therefore exact global recovery of system dynamics suffers from a curse of dimensionality. And second, the nonlinear dynamics render any natural trajectory planning problem nonconvex, making global optimality elusive. In this work, we focus on learning-based trajectory optimization, the ``inner-loop'' in MPC. We ask \emph{when can we obtain  rigorous guarantees about the solutions to nonlinear trajectory optimization under unknown dynamics?}

We take as our point of departure the $\iLQR$ algorithm \citep{li2004iterative}. Initially proposed under known dynamics, $\iLQR$ solves a planning objective by solving an iterative linear control problem around a first-order Taylor expansion (the \emph{Jacobian linearization}) of the dynamics, and second-order Taylor expansion of the control costs. In solving this objective, $\iLQR$ synthesizes a sequence of locally-stabilizing feedback gains, and each $\iLQR$-update can be interpreted as a gradient-step through the closed-loop linearized dynamics in feedback with these gains. This has the dual benefit of proposing a locally stabilizing policy (not just an open-loop trajectory), and of stabilizing the gradients to circumvent exponential blow-up in planning horizon.
$\iLQR$, and its variants \citep{todorov2005generalized,williams2017information}, are now ubiquitous in robotics and control applications; and, when dynamics are unknown or uncertain, one can simply substitute the exact dynamics model with an estimate (e.g.~\citet{levine2013guided}). In this case, dynamics are typically estimated with neural networks. Thus, Jacobian linearizations can be computed by automated differentiation (AutoDiff) through the learned model.

\textbf{Contributions.} We propose and analyze an alternative to the aforementioned approach of first learning a deep neural model of dynamics, and then performing AutoDiff to conduct the $\iLQR$ update. We consider a simplified setting with fixed initial starting condition. Our algorithm maintains a \emph{policy}, specified by an open-loop input sequence and a sequence of stabilizing gains, and loops two steps: \textbf{(a)} it learns local linear model of the closed-loop linearized dynamics (in feedback with these gains), which we use to perform a gradient update; \textbf{(b)} it re-estimates a linear model after the gradient step, and synthesizes a new set of set gains from this new model. In contrast to past approaches, our algorithm \emph{only ever estimates linear models of system dynamics.} 

For our analysis, we treat the underlying system dynamics as continuous and policy as discrete; this reflects real physical systems, is representative of discrete-time simulated environments which update on smaller timescales than learned policies, and renders explicit the effect of discretization size on sample complexity. We consider an interaction model where we query an oracle for trajectories corrupted with measurement (but not process) noise. Our approach enjoys the following theoretical properties.
\iftoggle{icml}{}{\begin{itemize}}
\iftoggle{icml}{\icmlpar{1.} }{\item[\textbf{1.}]}
 Using a number of iterations and oracle queries \emph{polynomial} in relevent problem parameters and tolerance $\epsilon$, it computes a policy $\pi$ whose input sequence is an $\epsilon$-first order stationary point for the $\iLQR$ approximation of the planning objective (i.e.,~the gradient through the closed-loop linearized dynamics has norm $\le \epsilon$). Importantly, learning the linearized model at each iteration obviates the need for global dynamics models, allowing for sample complexity polynomial in dimension.

\iftoggle{icml}{\icmlpar{2.} }{\item[\textbf{2.}]}
 We show that contribution $\mathbf{1}$ implies convergence to a local-optimality criterion we call an $\epsilon$-approximate \emph{Jacobian Stationary Point} ($\epsilon$-$\JSP$); this roughly equates to the open-loop trajectory under $\pi$ having cost within $\epsilon$-\emph{globally optimal} for the linearized dynamics about its trajectory. 
\iftoggle{icml}{}{\end{itemize}}

$\JSP$s are purely a property of the open-loop inputs, allowing comparison of the quality of the open-loop plan with differing gains. Moreover, the results of \citet{westenbroek2021stability} show that an approximate $\JSP$s for certain planning objective enjoy favorable \emph{global properties},  despite (as we show) being computable from (local) gradient-based search (see \Cref{app:westenbroek_app} for elaboration).

\icmlpar{Experimental Findings.} We validate our algorithms on standard models of the quadrotor and inverted pendulum, finding an improved performance as iteration number increases, and that the synthesized gains prescribed by $\iLQR$ yield improved performance over vanilla gradient updates.

\icmlpar{Proof Techniques.} Central to our analysis are novel perturbation bounds for controlled nonlinear differential equations. Prior results primarily focus on the open-loop setting \citep[Theorem 5.6.9]{polak2012optimization}, and implicitly hide an exponential dependence on the time horizon for open-loop unstable dynamics. We provide what is to the best of our knowledge the first analysis which demonstrates that local feedback can overcome this pathology. Specifically, we show that if the feedback gains stabilize the Jacobian-linearized dynamics, then (a) the Taylor-remainder of the first-order  approximation of the dynamics does \emph{not} scale exponentially on problem horizon (\Cref{prop:tay_ex_body}), and (b) small perturbations to the nominal input sequence preserve closed-loop stability of the linearized dynamics. These findings are detailed in \Cref{app:taylor_exp_summary}, and enable us to bootstrap the many recent advances in statistical learning for linear systems to our nonlinear setting. 



\subsection{Related Work} 

$\iLQR$ \citep{li2004iterative} is a more computationally expedient variant of differential dynamic programming ($\mathtt{DPP}$) \cite{jacobson1970differential}; numerous variants exist, notably $\mathtt{iLQG}$ \citep{todorov2005generalized} and $\mathtt{MPPI}$ \citep{williams2017information} which better address problem stochasticity. $\iLQR$ is a predominant approach for the ``inner loop'' trajectory optimization step in MPC, with applications  in robotics \citep{tassa2012synthesis}, quadrotors \citep{torrente2021data}, and autonomous racing \citep{kabzan2019learning}.  

A considerable literature has combined $\iLQR$ with learned dynamics models;  here, the Jacobian linearization matrices are typically derived through automated differentiation \citep{levine2013guided,levine2014learning,koller2018learning}, though local kernel least squares regression has also been studied  \citep{rosolia2019learning,papadimitriou2020control}. In these works, the dynamics models are refined/re-estimated as the policy is optimized; thus, these approaches are one instantiation of the broader iterative learning control (ILC) paradigm \citep{arimoto1984bettering}; other instantiations of ILC  include \cite{kocijan2004gaussian, dai2021lyapunov,aswani2013provably,bechtle2020curious}. 

Recent years have seen multiple rigorous guarantees for learning system identification and control \citep{dean2017sample,simchowitz2018learning,oymak2019non,agarwal2019online,simchowitz2020naive}, though a general theory of learning for nonlinear control remains elusive. Recent progress includes nonlinear imitation learning \citep{pfrommer2022tasil}, learning systems with known nonlinearities in the dynamics \citep{sattar2020non,foster2020learning,mania2020active} or perception model \citep{mhammedi2020learning,dean2021certainty}.

Lastly, there has been recent theoretical attention given to the study of first-order trajectory optimization methods. \citet{roulet2019iterative} perform an extension theoretical study of the convergence properties of $\iLQR$, $\mathtt{iLQG}$, and $\mathtt{DPP}$ with \emph{exact} dynamics models, and corroborate their findings experimentally. \citet{westenbroek2021stability} show further that for certain classes of nonlinear systems, all $\epsilon$-first order stationary points of a suitable trajectory optimization objective induce trajectories which converge exponentially to desired system equilbria. In some cases, there may be multiple spurious local minima, each of which is nevertheless exponentially stabilizing. Examining the proof \cite{westenbroek2021stability} shows the result holds more generally for all $\epsilon$-$\JSP$s, and therefore we use their work justify the $\JSP$ criterion proposed in this paper.

\newcommand{\pin}[1][n]{\pi^{(#1)}}
\newcommand{\ltwou}{\cL_{2}(\cU)}

\newcommand{\bsfU}{\bmsf{U}}
\newcommand{\buvec}{\vec{\discfont{u}}}

\newcommand{\xoffpi}{\bx^{\pi}_{\mathrm{orac}}}
\newcommand{\uoffpi}{\bu_{\mathrm{orac}}^\pi}
\newcommand{\uoffpik}[1][k]{\discfont{u}_{\mathrm{orac},#1}^{\pi}}
\newcommand{\xoffpik}[1][k]{\discfont{x}_{\mathrm{orac},#1}^{\pi}}
\newcommand{\uoffpink}[1][k]{\discfont{u}_{\mathrm{orac},#1}^{\pin}}
\newcommand{\xoffpink}[1][k]{\discfont{x}_{\mathrm{orac},#1}^{\pin}}
\newcommand{\sigorac}{\sigma_{\mathrm{orac}}}
\newcommand{\xapxpi}{\bx^{\mathrm{apx},\pi}}
\newcommand{\uapxpi}{\bu^{\mathrm{apx},\pi}}
\newcommand{\uapxpik}[1][k]{\discfont{u}_{#1}^{\mathrm{apx},\pi}}
\newcommand{\xapxpik}[1][k]{\discfont{x}_{#1}^{\mathrm{apx},\pi}}

\newcommand{\oracpix}[1][\pi]{\orac_{\pi,x}}
\newcommand{\oracpiu}[1][\pi]{\orac_{#1,u}}
\newcommand{\orac}{\discfont{orac}}
\newcommand{\tcont}{t_{\mathrm{ctrl}}}
\newcommand{\nucont}{\nu_{\mathrm{ctrl}}}
\newcommand{\kcont}{k_{\mathrm{ctrl}}}

\newcommand{\dt}{\rmd t}
\newcommand{\ds}{\rmd s}
\newcommand{\ddt}{\frac{\rmd}{\rmd t}}
\newcommand{\step}{\uptau}

\newcommand{\dds}{\frac{\rmd}{\rmd s}}
\newcommand{\Utraj}{\cU}
\newcommand{\Utrajstep}{\Utraj_{\step}}
\newcommand{\Projstep}{\mathscr{P}_{\step}}
\newcommand{\istep}{\mathsf{ct}}
\newcommand{\nabstep}{\bar{\nabla}_{\step}}
\newcommand{\nabdisc}{\bm{\nabla}_{\mathrm{disc}}}
\newcommand{\Kdiscgrad}{K_{\mathrm{disc},\nabla}}

\newcommand{\rmD}{\mathrm{D}}

\newcommand{\KF}{\kname_f}

\newcommand{\Rdx}{\R^{\dimx}}
\newcommand{\Rdu}{\R^{\dimu}}
\newcommand{\Radx}{R_x}
\newcommand{\Radu}{R_u}
\newcommand{\fclpi}{f_{\mathrm{cl}}^\pi}
\newcommand{\cctwo}{\cC^2}
\newcommand{\epsfos}[1][\epsilon]{#1\text{-}\texttt{FOS}}

\newcommand{\vpi}{\bv^{\pi}}
\newcommand{\delv}{\updelta \bv}

\newcommand{\fcl}{f_{\mathrm{cl}}}
\newcommand{\bPhicl}{\bPhi_{\mathrm{cl}}}
\newcommand{\bPsicl}{\bPsi_{\mathrm{cl}}}

\newcommand{\nabu}{\nabla_{u}}
\newcommand{\nabx}{\nabla_{x}}

\newcommand{\Qxpi}{Q_{x}^\pi}
\newcommand{\Qupi}{Q_{u}^\pi}

\newcommand{\nabtil}{\tilde{\nabla}}
\newcommand{\tkt}{t_{k(t)}}
\newcommand{\tktpl}{t_{k(t)+1}}
\newcommand{\tkT}{t_{k(T)}}
\newcommand{\tks}{t_{k(s)}}

\newcommand{\bxkpi}[1][k]{\discfont{x}_{#1}^\pi}
\newcommand{\bvkpi}[1][k]{\discfont{v}_{#1}^\pi}\newcommand{\bhatPsi}[2]{\hat{\discfont{\Psi}}_{#1, #2}}

\newcommand{\Api}{\bA_{\mathrm{ol}}^{\pi}}
\newcommand{\Bpi}{\bB_{\mathrm{ol}}^{\pi}}
\newcommand{\Phiclpi}{\bPhi_{\mathrm{cl}}^{\pi}}
\newcommand{\Phitilclpi}{\tilde{\bPhi}_{\mathrm{cl}}^{\pi}}
\newcommand{\Psiclpi}{\bPsi_{\mathrm{cl}}^\pi}
\newcommand{\Phiolpi}{\bPhi_{\mathrm{ol}}^{\pi}}

\newcommand{\bukpi}[1][k]{\discfont{u}_{\subbrack{#1}}^\pi}

\newcommand{\nabdisctwo}{\discfont{\nabla}_{\mathrm{disc}}^{\,2}}
\newcommand{\nabdisck}[1][k]{\discfont{\nabla}_{\mathrm{disc},#1}}
\newcommand{\nabdisctwok}[1][k]{\discfont{\nabla}_{\mathrm{disc},#1}^{\,2}}
\newcommand{\Proj}{\mathrm{Proj}}
\newcommand{\berr}{\discfont{err}}

\newcommand{\bKkpi}[1][k]{\discfont{K}_{\subbrack{#1}}^\pi}
\newcommand{\bBkpi}[1][k]{\discfont{B}^\pi_{\mathrm{ol},\subbrack{#1}}}
\newcommand{\bAkpi}[1][k]{\discfont{A}^\pi_{\mathrm{ol},\subbrack{#1}}}
\newcommand{\bAclkpi}[1][k]{\discfont{A}^\pi_{\mathrm{cl},\subbrack{#1}}}

\newcommand{\btilu}{\tilde{\bu}}

\newcommand{\xvar}{\bx}
\newcommand{\uvar}{\bu}
\newcommand{\utraj}[1][\bu]{\bv^{#1}}
\newcommand{\xtraj}[1][\bu]{\bx^{#1}}
\newcommand{\xtrajol}[1][\bv]{\bx_{\mathrm{ol}}^{#1}}

\newcommand{\partx}{\partial_x}
\newcommand{\partu}{\partial_u}
\newcommand{\partxx}{\partial_{xx}}
\newcommand{\partxu}{\partial_{xu}}
\newcommand{\partuu}{\partial_{uu}}
\newcommand{\parttu}{\partial_{tu}}
\newcommand{\parttx}{\partial_{tx}}

\newcommand{\upi}{\bu^{\pi}}
\newcommand{\xpi}{\bx^{\pi}}

\newcommand{\Phicldisctil}[1]{\discfont{\Phi}^{\tilde{\pi}}_{\mathrm{cl},#1}}

\newcommand{\Phicldisc}[1]{\discfont{\Phi}^\pi_{\mathrm{cl},#1}}
\newcommand{\Psicldisc}[1]{\discfont{\Psi}^\pi_{\mathrm{cl},#1}}
\newcommand{\xipi}{\xi^{\pi}}
\newcommand{\fdyn}{f_{\mathrm{dyn}}}

\newcommand{\delu}{\updelta \bu}

\newcommand{\bnabtil}{\tilde{\discfont{\nabla}}_{\step}}
\newcommand{\Xspace}{\bbX}
\newcommand{\Xtraj}{\cX}
\newcommand{\LF}{L_{f}}
\newcommand{\MF}{M_{f}}
\newcommand{\barLF}{\bar{L}_{F}}
\newcommand{\barMF}{\bar{M}_{F}}
\newcommand{\barKF}{\bar{K}_{F}}
\newcommand{\barLFX}{\bar{L}_{x}}
\newcommand{\barLFU}{\bar{L}_{u}}
\newcommand{\xol}{\bx}
\newcommand{\xoltuxi}{\bx(t\mid \bu,\xi)}

\newcommand{\delx}{\updelta x}
\newcommand{\deldotx}{\updelta \dot{x}}
\newcommand{\bvk}[1][k]{\discfont{v}_{#1}}

\newcommand{\byk}[1][k]{\discfont{y}_{#1}}
\newcommand{\bbaryk}[1][k]{\bar{\discfont{y}}_{#1}}
\newcommand{\btilyk}[1][k]{\tilde{\discfont{y}}_{#1}}
\newcommand{\btilyki}[1][k]{\tilde{\discfont{y}}_{#1}^{(i)}}
\newcommand{\byki}[1][k]{\discfont{y}_{#1}^{(i)}}

\newcommand{\bwk}[1][k]{\discfont{w}_{#1}}
\newcommand{\bwki}[1][k]{\discfont{w}_{#1}^{(i)}}
\newcommand{\bxki}[1][k]{\discfont{x}_{#1}^{(i)}}
\newcommand{\btilxki}[1][k]{\tilde{\discfont{x}}_{#1}^{(i)}}
\newcommand{\bbarxk}[1][k]{\bar{\discfont{x}}_{#1}}
\newcommand{\bhatxk}[1][k]{\hat{\discfont{x}}_{#1}}

\newcommand{\FOS}{\mathtt{FOS}}

\newcommand{\btilxpik}[1][k]{\tilde{\discfont{x}}^\pi_{#1}}
\newcommand{\btilupik}[1][k]{\tilde{\discfont{u}}^\pi_{#1}}

\newcommand{\btilxk}[1][k]{\tilde{\discfont{x}}_{#1}}
\newcommand{\btiluk}[1][k]{\tilde{\discfont{u}}_{#1}}
\newcommand{\sigu}{\sigma_{u}}
\newcommand{\bbaru}{\bar{\bu}}
\newcommand{\trajoracle}{\bmsf{TrajOrac}}
\newcommand{\LFX}{L_{x}}
\newcommand{\LFU}{L_{u}}
\newcommand{\MFX}{M_{x}}
\newcommand{\MFU}{M_{u}}
\newcommand{\cgram}{\cC}
\newcommand{\cgramhat}{\hat{\cgram}}
\newcommand{\bhatPhi}{\hat{\bPhi}}
\newcommand{\opnorm}[1]{\|#1\|_{\op}}
\newcommand{\Lipone}{L_{[1]}}
\newcommand{\Liptwox}{L_{[2]}^x}
\newcommand{\Liptwou}{L_{[2]}^u}
\newcommand{\Liptwo}{L_{[2]}}
\newcommand{\Cld}{\mathsf{C}_{\textsc{ld}}}
\newcommand{\algfont}[1]{\textsc{#1}}
\newcommand{\estpsi}{\algfont{EstMarkov}}

\newcommand{\I}{\mathbb{I}}
\newcommand{\bPsi}{\boldsymbol{\Psi}}
\newcommand{\bnabla}{\discfont{\nabla}}
\newcommand{\bhatnabla}{\hat{\discfont{\nabla}}}
\newcommand{\bnabhatk}[1][k]{\hat{\discfont{\nabla}}_{#1}}
\newcommand{\discbrak}[1]{(#1)}

\newcommand{\btilxpi}{\tilde{\bx}^\pi}
\newcommand{\btilupi}{\tilde{\bu}^\pi}

\newcommand{\bnabhatkn}[1][k]{\hat{\discfont{\nabla}}_{#1}^{\,(n)}}
\newcommand{\bnabhatknout}[1][k]{\hat{\discfont{\nabla}}_{#1}^{\,(n_{\mathrm{out}})}}
\newcommand{\bukn}[1][k]{{\discfont{u}}_{#1}^{(n)}}
\newcommand{\bunpl}{{\mathbf{u}}^{(n+1)}}
\newcommand{\butilknpl}[1][k]{\tilde{\discfont{u}}_{#1}^{(n+1)}}
\newcommand{\butilnpl}{\tilde{\mathbf{u}}^{(n+1)}}
\newcommand{\bKkn}[1][k]{\discfont{K}_{\subbrack{#1}}^{(n)}}
\newcommand{\bKknpl}[1][n]{\discfont{K}_{\subbrack{#1}}^{(n+1)}}

\newcommand{\Jjac}{\cJ^{\mathrm{jac}}}
\newcommand{\xjac}{\bx^{\mathrm{jac}}}

\newcommand{\policy}{\mathsf{policy}}
\newcommand{\Uspace}{\mathscr{U}}
\newcommand{\delw}{\updelta w}
\newcommand{\bPhi}{\boldsymbol{\Phi}}
\newcommand{\bDel}{\boldsymbol{\Delta}}
\newcommand{\hvsd}{\mathscr{H}}
\newcommand{\estgrad}{\algfont{EstimateGrad}}
\newcommand{\estgains}{\algfont{EstGains}}

\newcommand{\estgradgains}{\algfont{EstimateGradGains}}

\newcommand{\buknpl}[1][k]{\discfont{u}^{(n+1)}_{#1}}

\newcommand{\cgramin}[1][k]{\cgram_{#1, \mathrm{in}}}
\newcommand{\cgramout}[1][k]{\cgram_{#1, \mathrm{out}}}
\newcommand{\cgramhatin}[1][k]{\cgramhat_{#1, \mathrm{in}}}
\newcommand{\cgramhatout}[1][k]{\cgramhat_{#1, \mathrm{out}}}

\newcommand{\nfin}{n_{\mathrm{iter}}}
\newcommand{\Pistep}{\Pi_{\step}}
\newcommand{\xiinit}{\xi_{\mathrm{init}}}
\newcommand{\JL}{$\mathtt{JL}${}}

\section{Setting}\label{sec:setting}
We consider a continuous-time nonlinear control system with state $\bx(t) \in \Rdx$, input $\bu(t) \in \Rdu$ with finite horizon $T > 0$, and fixed initial condition $\xiinit \in \R^{\dimx}$. We denote the space of bounded input signals $\cU := \{\bu(\cdot):[0,T] \to \Rdu: \sup_{t \in [0,T]}\|\bu(t)\| < \infty\}$. We endow $\cU$ with an inner product $\langle \bu(\cdot),\bu'(\cdot)\rangle_{\ltwou} := \int_{0}^T \langle \bu(s),\bu'(s)\rangle\rmd s$, where $\langle\cdot,\cdot \rangle$ is the standard Euclidean inner product, which induces a norm $\|\bu(\cdot)\|_{\ltwou}^2 := \langle \bu(\cdot),\bu(\cdot)\rangle_{\ltwou}$. For $\bu \in \cU$, the  open-loop dynamics are governed by the ordinary differential equation (ODE)
\begin{align*}
\tfrac{\rmd}{\rmd t} \bx(t \mid \bu) = \fdyn( \bx(t \mid \bu), \bu(t)), ~~ \bx(0 \mid \bu) = \xiinit, 
\end{align*}
where $\fdyn: \Rdx \times \Rdu \to \Rdx$ a $\cctwo$ map. Given a terminal cost $V(\cdot):\Rdx \to \R$ and running $Q(\cdot,\cdot,\cdot):\Rdx \times \Rdu \times [0,T] \to \R$, we optimize the control objective
\begin{align*}
\cJ_T(\bu) := \textstyle V(\xol(T \mid \bu)) + \int_{t = 0}^T Q(\bx(t\mid \bu),\bu(t),t)\rmd t.  
\end{align*}
We make the common assumption that the costs are strongly $\cctwo$, and that $\cQ$ is strongly convex:
\begin{assumption}\label{asm:convexity} For all $t \in [0,T]$, $V(\cdot)$ and $Q(\cdot,\cdot,t)$ are twice-continuously differentiable ($\cctwo$), and $x \mapsto V(x)$ and  $(x,u) \mapsto Q(x,u,t) - \frac{\alpha}{2} (\|x\|^2 + \|u\|^2)$ are convex.
\end{assumption}
Given a continuously differentiable function $\cF:\cU \to \R^n$ and perturbation $\delu \in \cU$, we define its \emph{directional derivative} $\rmD \cF(\bu)[\delu]:= \lim_{\eta \to 0} \eta^{-1}(\cF(\bu +\eta\delu ) - \cF(\bu))$. The \emph{gradient} $\nabla \cF(\bu ) \in \cU$ is the (almost-everywhere) unique element of $\cU$ such that
\iftoggle{arxiv}
{
\begin{align}
\textstyle
\forall \delu \in \cU, \int_{0}^T\nabla \cF(\bu)(t) \delu(t)\rmd t  = \rmD\cF(\bu )[\delu]. 
\end{align}
In particular, we
}
{
   $\forall \delu \in \cU$, $\int_{0}^T\nabla \cF(\bu)(t) \delu(t)\rmd t  = \rmD\cF(\bu )[\delu]$. We
}
denote the gradients of $\bu \mapsto \bx(t\mid \bu)$ as $\nabla_{\bu}\, \bx(t\mid \bu)$, and of $\bu \mapsto \cJ_T(\bu)$ as $\nabla_{\bu} \cJ_T(\bu)$.


\paragraph{Discretization and Feedback Policies. }   
Because digital controllers cannot represent continuous open-loop inputs, we  compute $\epsilon$-$\JSP$s  $\bu \in \cU$ which are the zero-order holds of discrete-time control sequences. We let $\step \in (0,T]$ be a discretization size, and set $K = \floor{T/\step}$. Going forward, we denote discrete-time quantities in {\color{disccolor}  \texttt{colored, bold-seraf font}}.

For $k \ge 1$, define $t_{k} = (k-1)\step$, and define the intervals $\cI_{k} = [t_k,t_{k+1})$. For $t \in [0,T]$, let $k(t) := \sup\{k: t_k \le t\}$. We let $\bsfU := (\R^{\dimu})^{K}$, whose elements are denoted $\buvec = \buk[1:K]$, and let $\istep:\bsfU \to \cU$ denote the natural inclusion $\istep(\buvec)(t) := \buk[k(t)]$.


\newcommand{\icmlinlinest}[1]{\iftoggle{icml}{$#1$}{\begin{align*}#1\end{align*}}}
\newcommand{\icmlinline}[1]{\iftoggle{icml}{$#1$}{\begin{align*}#1\end{align*}}}
Next, to mitigate the curse of horizon, we  study \emph{policies} which (a) have discrete-time open-loop inputs and (b) have discrete-time feedback gains to stabilize around the trajectories induced by the nominal inputs. In this work, $\Pistep$ denotes the set of all policies $\pi = (\bukpi[1:K],\bKkpi[1:K])$  defined by a discrete-time open-loop policy $\bukpi[1:K] \in \bsfU$, and a sequence of feedback gains $(\bKkpi)_{k \in [K]} \in (\R^{\dimx\dimu})^K$. A policy $\pi$ induces nomimal dynamics $\upi(\cdot) = \istep(\bukpi[1:K])$ and  $\xpi(t ) = \xol(t\mid \upi)$; we set $\bxkpi = \xpi(t_k)$. It also induces the following dynamics by stabilizing around the policy.
\begin{restatable}{definition}{defnstabdyn} \label{defn:stab_dyn}
Given  a continuous-time input $\bbaru \in \cU$, we define the \emph{stabilized trajectory} $\btilxpict(t \mid \bbaru) := \xol(t \mid \btilupict)$, where 
\icmlinlinest{\btilupict\,(t \mid \bbaru) := \bbaru(t) + \bKkpi[k(t)] (\btilxpict(\tkt \mid \bbaru)) - \bxkpi[k(t)]).}
This induces a stabilized objective: 
\icmlinlinest{\Jpi(\bbaru)  := V(\btilxpict(t \mid \bbaru)) + \int_{0}^T Q(\btilxpict(t \mid \bbaru),\btilupict\,(t \mid \bu),t)\rmd t.} We define the shorthand $\nabla \cJ_T(\pi) := \nabla_{\bbaru} \Jpi(\bbaru)\big{|}_{\bbaru = \upi}$ 
\end{restatable}
Notice that, while $\pi$ is specified by \emph{discrete-time} inputs, $\btilxpict(\cdot),\btilupict(\cdot)$ are \emph{continuous-time} inputs and trajectories stabilized by $\pi$ and the gradient $\nabla \Jpi(\cdot)$ is defined over \emph{continuous-time perturbations}.

\paragraph{Optimization Criteria.} 
Due to nonlinear dynamics, the objectives $\cJ_T, \cJ_T^{\pi}$ are nonconvex, so we can only aim for local optimality. Approximate first-order stationary points ($\FOS$)
are a natural candidate  \citep{roulet2019iterative}.
\begin{restatable}{definition}{defnfos}\label{defn:fos} We say $\bu$ is an $\epsilon$-$\FOS$ of a function $\cF:\cU \to \R$ if $\|\nabla_{\bu} \cF(\bu)\|_{\ltwou} \le \epsilon$. We say $\pi$ is \emph{$\epsilon$-stationary} if \icmlinlinest{\|\nabla J_T(\pi)\|_{\ltwou} := \| \nabla_{\bbaru} \Jpi(\bbaru)\big{|}_{\bbaru = \upi}\|_{\ltwou} \le \epsilon.}
\end{restatable}
Our primary criterion is to compute $\epsilon$-stationary policies $\pi$. However, this depends both on the policy inputs $\upi$ (and induced trajectory $\xpi$), \emph{as well as} the gains. We therefore propose a secondary optimization criterion which depends only on the policies inputs/trajectory. It might be tempting to hope that $\upi$ is an $\epsilon$-$\FOS$ of the original objective $\cJ_T(\bu)$. However, when the Jacobian linearized trajectory (\Cref{defn:JL_traj} below) of the dynamics around $(\xpi,\upi)$ are unstable, the open-loop gradient $\|\nabla \cJ_T(\upi)\|_{\ltwou}$ can be a factor of $e^{T}$ larger than the stabilized gradient $\|\nabla \cJ_T(\pi)\|_{\ltwou}$ despite the fact that, definitionally, $\cJ_T(\upi) = \cJ_T^\pi(\upi)$  (see \Cref{sec:JSP_justification_exp_gap}). We therefore propose an alternative definition in terms of Jacobian-linearized trajectory.
\begin{definition}\label{defn:JL_traj} Given $\bu, \bbaru \in \cU$, define the Jacobian-linearized (\JL) \emph{trajectory}   $\xjac(t \mid \bbaru; \bu) = \bx(t\mid \bu) +  \langle \nabla_{\bu} \,\bx(t \mid \bu), \bbaru - \bu \rangle$ , and \emph{cost} 
\icmlinlinest{\Jjac_T(\bbaru;\bu) := V(\xjac(T \mid \bbaru;\bu)) + \int_{t = 0}^T Q(\xjac(t\mid \bbaru;\bu),\bbaru(t),t)\rmd t.} 
\end{definition}
In words, the \JL{} trajectory is just the first-order Taylor expansion of the dynamics around an input $\bu \in \cU$, and the cost is the cost functional applied to those \JL{} dynamics. We propose an optimization criterion which requires that $\bu$ is near-\emph{globally} optimal for the \JL{} dynamics around $\bu$:
\begin{restatable}{definition}{defJSP}\label{defn:JSP} We say $\bu \in \cU$ is an $\epsilon$-Jacobian Stationary Point ($\JSP$) if  
\icmlinlinest{\cJ_T(\bu)  \le \inf_{\bbaru \in \cU} \Jjac_T(\bbaru;\bu) + \epsilon.}
\end{restatable}
The consideration of $\JSP$s has three advantages: (1) as noted above, $\JSP$s depend only on a trajectory and not on feedback gains; (2) a $\JSP$ is sufficient to ensure that the exponential-stability guarantees derived in \citet{westenbroek2021stability} (and mentioned in the introduction above) hold for certain systems; this provides a link between the local optimality derived in this work and  \emph{global} trajectory behavior (see \Cref{app:westenbroek_app} for further discussion); (3) despite the potentially exponential-in-horizon gap between gradients of $\cJ_T$ and $\cJ_T^\pi$, the following result enables us to compare stationary points of the two objectives in a manner that is \emph{independent} of the horizon $T$. 
\newtheorem*{thm:jsp_informal}{Propostion \ref*{prop:Jpijac} (informal)}
\begin{thm:jsp_informal}
   Suppose $\pi$ is $\epsilon$-stationary,  and $\step$ is sufficiently small. Then, $\upi$ is an $\epsilon'$-$\JSP$ of $\cJ_T$, where $\epsilon' = \BigOh{\epsilon^2/({2\alpha(1+\max_{k}\|\bKkpi\|^2)}} $. 
\end{thm:jsp_informal}



\paragraph{Oracle Model and Problem Desideratum.} In light of the above discussion, we aim to compute a approximately stationary policy, whose open-loop is therefore an approximate $\JSP$ for the original objective. To do so, we assume access to an oracle which can perform feedback with respect to gains $\bKkpi$.
\begin{definition}[Oracle Dynamics]\label{defn:orac_dyn} Given $\buvec = \buk[1:K] \in \bsfU$, we define the \emph{oracle dynamics} 
\iftoggle{icml}
{$\xoffpi(t \mid \buvec) := \xol(t \mid \istep(\uoffpik[1:K]\discbrak{\buvec}))$, where we define $\uoffpik\,\discbrak{\buvec} := \buk + \bKkpi    \xoffpi(t_k \mid \buvec)$, and define  $\xoffpik\,\discbrak{\buvec} :=   \xoffpi(t_k \mid \buvec)$.}
{
   \begin{align*}
   \xoffpi(t \mid \buvec) := \xol(t \mid \istep(\uoffpik[1:K]\discbrak{\buvec}), \quad \text{ where } \uoffpik\,\discbrak{\buvec} := \buk + \bKkpi    \xoffpi(t_k \mid \buvec),
   \end{align*}
   and further define  $\xoffpik\,\discbrak{\buvec} :=   \xoffpi(t_k \mid \buvec)$.
}
\end{definition}

\begin{oracle}\label{orac:our_orac} We assume access to an oracle $\orac$ with variance $\sigorac^2 > 0$, which given any $\pi\in\Pistep$ and $\buvec = \buk[1:K]$, returns, 
\iftoggle{icml}{$\oracpix(\buvec) \sim \cN(\xoffpik[1:K+1]\,\discbrak{\buvec}, \,\eye_{(K+1)\dimx}\sigorac^2)$ and $\oracpiu(\buvec)  =  \uoffpik[1:K]\,\discbrak{\buvec}$}
{
   \begin{align*}
   \oracpix(\buvec) \sim \cN(\xoffpik[1:K+1]\,\discbrak{\buvec}, \,\eye_{(K+1)\dimx}\sigorac^2), \quad \text{ and } \oracpiu(\buvec)  =  \uoffpik[1:K]\,\discbrak{\buvec}.
   \end{align*}
}
\end{oracle}
In words, \Cref{orac:our_orac} returns entire trajectories by applying feedback along the gains $\bKkpi$. The addition of measurement noise is to introduce statistical tradeoffs that prevent near-exact zero-order differentiation; we discuss extensions to process noise in \Cref{sec:process_noise}. Because of this, the oracle trajectory in \Cref{defn:orac_dyn} differs from the trajectory dynamics in \Cref{defn:stab_dyn} in that the feedback does not subtract off the normal $\bxkpi$; thus, the oracle can be implemented without noiseless access to the nominal trajectory. Still, we assume that the feedback applied by the oracle is exact. Having defined our oracle, we specify the following problem desideratum (note below that $M$ is scaled by $1/\step$ to capture the computational burden of finer discretization).

\begin{desideratum}\label{desider} Given $\epsilon,\epsilon'$ and unknown dynamical system $\fdyn(\cdot,\cdot)$, compute a policy $\pi$ for which (a) $\pi$ is $\epsilon$-stationary, and (b) $\upi$ is an $\epsilon'$-$\JSP$ of $\cJ_T$, using $M$ calls to \Cref{orac:our_orac}, where $M/\step$ is polynomial in $1/\epsilon$, $1/\epsilon'$, and relevant problem parameters.
\end{desideratum}

\paragraph{Notation.} We let $[j:k] := \{j,j+1,\dots,k\}$, and $[k] = [1:k]$.
We use standard-bold for continuous-time quantities ($\bx,\bu$), and bold-serif for discrete (e.g. $\bukpi$). We let $\bukpi[j:k] = (\buk[j],\buk[j+1],\dots,\buk)$. Given vector $\discfont{v}$ and matrices $\discfont{X}$, let $\|\discfont{v}\|$ and $\|\discfont{X}\|$ Euclidean and operator norm, respectively; for clarity, we write $\|\bukpi[j:k]\|_{\ell_2}^2 = \sum_{i=j}^k\|\bukpi[i]\|^2$. As denoted above, $\langle \cdot,\cdot\rangle_{\ltwou}$ and $\|\cdot\|_{\ltwou}$ denote inner products and norms in $\cL_2(\cU)$. We let $x \vee y := \max\{x,y\}$, and $x \wedge y := \min\{x,y\}$.

\newcommand{\nout}{n_{\mathrm{out}}}
\newcommand{\clip}{\mathsf{clip}}
\section{Algorithm}\label{sec:alg}
Our iterative approach is summarized in \Cref{alg:learn_mpc_feedback} and takes in a time step $\step > 0$, horizon $T$, a per iteration sample size $N$,  iteration number $\nfin$, a noise variance $\sigw$, a gradient step size $\eta>0$ and a controllability parameter $k_0$. The algorithm produces  a sequence of polices $\pin[n] = (\bukn[1:K],\bKkn[1:K])$, where $K = \floor{T/\step}$ is the number of time steps per roll-out.  Our algorithm uses the primitive $\estpsi(\pi;N,\sigw)$ (\Cref{alg:est_markov}), which makes $N$ calls to the oracle to produce estimates $\bhatxk[1:K+1]$ of the nominal state trajectory, and another $N$ calls with randomly-perturbed inputs of perturbation-variance $\sigw$ to produce estimates $(\bhatPsi{j}{k})_{k<j}$ of  the closed-loop Markov parameters  associated to the current policy $\Psicldisc{j,k}$, defined in \Cref{defn:cl_linearizations}. We use a method-of-moments estimator for simplicity. At each iteration $n$, \Cref{alg:learn_mpc_feedback} calls calls $\estpsi(\pi;N,\sigw)$ first to produce an estimate of the gradient of the closed-loop objective with respect to the current discrete-time nominal inputs. The gradient with respect to the $k$-th input $\bukn[k]$ is given by:
\begin{align}\label{eq:bnabhatkn}
&\textstyle\bnabhatk^{\,(n)} =   \bhatPsi{K+1}{k}^\top V_x(\bhatxk[K+1]) + Q_u(\bhatxk[k], \buk[k]^\pi, t_k)\\
    &~\textstyle+ \step  \sum_{j=k+1}^K \bhatPsi{j}{k}^\top \left( Q_x(\bhatxk[j],\buk[j]^\pi,t_j) +  (\bKkpi[j])^\top   Q_u(\bhatxk[j],\buk[j]^\pi,t_j)\right) \nonumber
\end{align}
The form of this estimate corresponds to a natural plug-in estimate of the gradient of the discrete-time objective defined in \Cref{defn:dt_things}. We use this gradient in \Cref{eq:bnabhatkn} to update the current input;  this update is rolled-out in feedback with the current feedback controller to produce the nominal input $\buknpl[1:K]$ for the next iteration (\Cref{alg:learn_mpc_feedback}, \Cref{line:grad_update}). Finally, we call $\estgains$ (\Cref{alg:gain_est}), which synthesizes gains for the new policy using a Ricatti-type recursion along a second estimate of the linearized dynamics, produced by unrolling the system with the new nominal input and old gains described above. The algorithm then terminates at $\nfin$ iterations and chooses the policy with the smallest estimated gradient that was observed. 

\begin{algorithm}[!t]
    \begin{algorithmic}[1]
    \State{}\textbf{Initialize } time step $\step > 0$, horizon $T \ge \step$, $K \gets \floor{T/\step}$, initial policy $\pi^{(1)}$, sample size $N$,  noise variance $\sigw$, gradient step size $\eta$, controllability parameter $k_0$, iteration number $\nfin$.
    \For{iterations $n = 1, 2, \dots, \nfin$}
    \State{}$(\bhatPsi{j}{k})_{k <j},\bhatxk[1:K+1] = \estpsi(\pi;N,\sigw)$.
    \State{}Compute $\bnabhatk^{\,(n)}$ in \Cref{eq:bnabhatkn} \label{line:grad_est}
    \State{}Gradient update $\buknpl[1:K] \gets \oracpiu[\pin](\btiluk[1:K]^{(n)})$, where  $\btiluk^{(n)} := \bukn - \frac{\eta}{\step} \bnabhatkn  - \bKk^{\pin}\bhatxk$.\label{line:grad_update}
    \State{}Estimate $\bKknpl[1:K] = \estgains(  \tilde{\pi}^{(n)} ;\sigw, N,k_0)$, where $\tilde{\pi}^{(n)} = (\bunpl(\cdot), \bKkn[1:K])$
    \State{{Update} policy} $\pin[n+1] = (\buknpl[1:K],\bKknpl[1:K])$
    \EndFor 
    \Return $\pi^{(\nout)}$, $\nout \in \argmin_{n \in[\nfin]}\|\bnabhatk^{\,(n)}\|$. \label{line:nout}
    \end{algorithmic}
      \caption{Trajectory Optimization}
      \label{alg:learn_mpc_feedback}
\end{algorithm}

\begin{algorithm}[!t]
    \begin{algorithmic}[1]
    \Statex{}\algcomment{estimate nominal trajectory}
    \For{samples $i = 1,2,\dots,N$} 
    \State{}\textbf{Collect}  trajectory $\bxk[1:K+1]^{(i)} \sim \trajoracle_\pi(\bukpi[1:K])$.
    \EndFor
    \State{Average} $\bhatxk[1:K+1] = \frac{1}{N}\sum_{i=1}^N \bxk[1:K+1]^{(i)}$
     \Statex{}\algcomment{estimate perturbed trajectory}
   	\For{samples $i = 1,2,\dots,N$} 
    \State\text{Draw} $\bwki[1:K]$ uniformly from $\sigw \cdot (\{-1,1\}^{\dimu})^K$.
    \State\textbf{Let} $\buk^{(i)} = \bukpi + \bwki - \bKkpi\bhatxk$, for $k \in [K]$ 
    \State{}\textbf{Collect}  trajectory $\byk[1:K+1]^{(i)} \sim \oracpix(\buk[1:K]^{(i)})$.\label{line:byk}
    \EndFor
    \State \textbf{Estimate} transition operators $\bhatPsi{j}{k} := \frac{1}{N \sigw^2}\sum_{i=1}^N(\byk[j]^{(i)} - \bhatxk[j])(\bwki)^\top$, $k < j$\\
    \Return{} $(\bhatPsi{j}{k})_{k<j}, \bhatxk[1:K+1]$
        \end{algorithmic}
      \caption{$\estpsi(\pi;N,\sigw)$}
      \label{alg:est_markov}
\end{algorithm}
\newcommand{\knot}{k_0}
\begin{algorithm}[!t]
    \begin{algorithmic}[1]
    \State{}\textbf{Initialize }  number of samples $N$, noise variance $\sigw$, (discrete) controllability window $k_0 \in \N$
    \State{}\textbf{Estimate} Markov Parameters $(\bhatPsi{j}{k})_{k<j} = \estpsi(\pi;N,\sigw)$
    \Statex{}\algcomment{Define $\cgramhat_{k \mid j_2,j_1} := [\bhatPsi{k+1}{j_2} \mid \bhatPsi{k+1}{j_2 - 1} \mid \dots \bhatPsi{k+1}{j_1}]$}
    \For{$k=\knot,\knot+1,\dots,K$}
    \State{}Define $\bhatB_k = \bhatPsi{k+1}{k}$
    \Statex{}\algcomment{Define $ \cgramhatin := \cgramhat_{k-1 \mid k-1,k-\knot+1}, \quad \cgramhatout := \cgramhat_{k \mid k-1,k-\knot+1}$}
    \State{}Define $\bhatAk := \cgramhatout\cgramhatin^{\dagger} - \bhatB_k\bKk^\pi$
    \EndFor
    \State{} Set $\bhatPk[K+1] = \eye_{\dimx}$.
    \For{$k = K,K-1,\dots,\knot$}\label{line:control_synth_start}
    \State{} $\bhatKk := ( \eye_{\dimu} + \bhatBk^\top \bhatPk[k+1]\bhatBk)^{-1} \left(\bhatBk^\top\bhatPk[k+1]\bhatAk\right)$.
    \State{} $\bhatPk =  (\bhatAk+\bhatBk\bhatKk)^\top \bhatPk[k+1](\bhatAk+\bhatBk\bhatKk) + \step (\eye_{\dimx}+\bhatKk^\top\bhatKk)$.
    \EndFor
    \State{} Set $\bhatKk = 0$ for $k \le \knot$.\label{line:control_synth_end}
    \State{}\textbf{Return} $\bhatKk[1:K]$.
        \end{algorithmic}
      \caption{$\estgains(\pi;N,\sigw,k_0)$}
      \label{alg:gain_est}
\end{algorithm}

\newcommand{\Rfeas}{R_{\mathrm{feas}}}
\newcommand{\Jpidisc}[1][\pi]{\cJ_{T}^{#1,\mathrm{disc}}}
\newcommand{\tnot}{t_0}
\newcommand{\Kpi}{K_{\mathrm{\pi}}}
\newcommand{\gampi}{\gamma_{\pi}}
\newcommand{\Bigohpi}[1][\cdot]{\cO_{\pi}(#1)}
\newcommand{\Omegapi}[1][\cdot]{\Omega_{\pi}(#1)}
\newcommand{\Bigohst}[1][\cdot]{\cO_{\star}(#1)}
\newcommand{\Lampik}[1][k]{\discfont{\Lambda}^{\pi}_{#1}}
\newcommand{\Kpist}{\mu_{\pi,\star}}
\newcommand{\ltwo}{\ell_2}

\newcommand{\stepgrad}{\step_{\mathrm{grad}}}
\newcommand{\stepjsp}{\step_{\mathrm{jsp}}}
\newcommand{\cgrad}{c_{\mathrm{grad}}}
\newcommand{\cjsp}{c_{\mathrm{jsp}}}

\section{Algorithm Analysis}

For simplicity, we assume $K = \floor{T/\step} \in \N$ is integral. In order to state uniform regularity conditions on the dynamics and costs, we fix an \emph{feasible radius} $\Rfeas > 0$ and restrict to states and inputs bounded thereby.
\begin{restatable}{definition}{defnfeas}\label{defn:feas} We say $(x,u) \in \R^{\dimx \times \dimu}$ are \emph{feasible} if $\|x\| \vee \|u\| \le \Rfeas$. We say a policy $\pi$ is feasible if $(2\xpi(t),2\upi(t))$ are feasible for all $t \in [0,T]$. 
\end{restatable}
We adopt the following boundedness condition.
\begin{condition}\label{cond:feasbility}For all $n$, the policies $\pin$ and $\tilde{\pi}^{(n)}$ produced by \Cref{alg:learn_mpc_feedback} are feasible. 
\end{condition}
If $\pi$ and $\tilde{\pi}^{(n)}$ produce bounded inputs, and the resulting state trajectories also remain bounded, then \Cref{cond:feasbility} will hold for $\Rfeas>0$ sufficiently large. This is a common assumption in the control literature (see e.g.~\citet{jadbabaie2001stability}), as physical systems, such as those with Lagrangian dynamics, will remain bounded under bounded inputs (see \Cref{sec:feasibility} for discussion). 

\begin{assumption}[Dynamics regularity]\label{asm:max_dyn_final} $\fdyn$ is $\cctwo$, and  for all feasible $(x,u)$, the following hold $\|\fdyn(x,u)\| \le \KF$, $\|\partx \fdyn(x,u)\|\vee\|\partu \fdyn(x,u)\| \le \LF$, $\|\nabla^{\,2} \fdyn(x,u)\| \le \MF$.
\end{assumption}

 \newcommand{\bPpi}{\bP^{\pi}}
\newcommand{\KQ}{\kname_{\mathrm{cost}}}

\newcommand{\Kcost}{\kname_{\mathrm{cost}}}
\renewcommand{\MQ}{M_{\mathrm{cost}}}
\begin{assumption}[Cost regularity]\label{asm:cost_asm} For all feasible $(x,u)$, the following hold $0 \le V(x) \vee Q(x,u,t) \le \Kcost$, $\|\partx V(x)\| \vee \|\partx Q(x,u,t)\| \vee \|\partu Q(x,u,t)\|  \le \LQ$, $\|\nablatwo V(x)\| \vee \|\nablatwo Q(x,u,t)\| \le \MQ$. 
\end{assumption} 

To take advantage of stabilizing gains, we require two additional assumptions, which are defined in terms of the \JL{} dynamics. 
\begin{restatable}[Open-Loop Linearized Dynamics]{definition}{defoldynamics}\label{defn:ol_lin_dyn} We define the (open-loop) \JL{} dynamic matrices about $\pi$ as 
\iftoggle{icml}
{$\Api(t) = \partx \fdyn(\xpi(t),\upi(t))$ and $\Bpi(t) = \partu \fdyn(\xpi(t),\upi(t))$.}
{\begin{align*}
\Api(t) = \partx \fdyn(\xpi(t),\upi(t)), \quad \text{and} \quad \Bpi(t) = \partu \fdyn(\xpi(t),\upi(t))
\end{align*}} We define the \emph{open-loop} \JL{} transition function $\Phiolpi(s,t)$, defined for $t \ge s$ as the solution to $\dds \Phiolpi(s,t) = \Api(s)\Phiolpi(s,t)$, with initial condition $ \Phiolpi(t,t) = \eye$.
\end{restatable}
We first require that stabilizing gains can be synthesized; this is formulated in terms of an upper bound on the cost-to-go for the LQR control problem (\citet[Section 2]{anderson2007optimal}) induced by the \JL{} dynamics. 
\begin{assumption}[Stabilizability]\label{asm:LFP} 
Given a \emph{policy} $\pi$, and a sequence of controls $\btilu(\cdot) \in \cU$,  let \icmlinlinest{V^\pi(t \mid \btilu, \xi) = \int_{s=t}^T (\|\btilx(s)\|^2 +\|\btilu(s)\|^2)\rmd s + \|\btilx(T)\|^2,} under the linearized dynamics  $\dds \btilx(s) = \Api(s)\btilx(s) + \Bpi(s)\btilu(s), \quad \btilx(t) = \xi$. We assume that, for all feasible policies,  $\sup_{t \in [0,T]}V^\pi(t \mid \btilu, \xi)  \le \LFP\|\xi\|^2$. Moreover, we assume (for simplicity) that the initial policy has (a) no gains: $\bKk^{\pi^{(1)}} = 0$ for all $k \in [K]$, and (b) satisfies $V^{\pi^{(1)}}(t \mid 0, \xi) \le \LFP\|\xi\|^2$.
\end{assumption}
The assumption on $\pi^{(1)}$ can easily be generalized to accomodate initial policies with stabilizing gains.  Our final assumption is controllability (see e.g. \citet[Appendix B]{anderson2007optimal}), which is necesssary for identification of system parameters to synthesize stabilizing gains. 

\begin{assumption}[Controllability]\label{asm:ctr} There exists constants $\tcont,\nucont > 0$ such that, for all feasible $\pi$ and $t \in [\tcont,T]$,   \iftoggle{icml}{$\int_{s=t-\tcont}^{t}$ {\small {$\Phiolpi(t,s)\Bpi(s)\Bpi(s)^\top\Phiolpi(t,s)^\top \rmd s$}} $\succeq \nucont \eye_{\dimx}$}{
    \begin{align}
    \int_{s=t-\tcont}^{t} \Phiolpi(t,s)\Bpi(s)\Bpi(s)^\top\Phiolpi(t,s)^\top \rmd s \succeq \nucont \eye_{\dimx}.
    \end{align}
}
\end{assumption}
For simplicity, we assume $\kcont:= \tcont/\step$ is integral. Finally, to state our theorem, we adopt an asymptotic notation which suppresses all parameters except $\{T,\step,\alpha\}$.
\begin{definition}[Asymptotic Notation] We let $\Bigohst{\cdot}$ term a term which hides polynomial dependences on $\dimx,\dimu,\Rfeas,\KF,\MF,\LF,\KQ,\LQ,\MQ,\LFP,\nucont,\tcont$, and on $\exp(t_0 \LF)$, where $t_0 = \step k_0 \ge \tcont$. 
\end{definition}
Notice that we suppress an \emph{exponential} dependence on our proxy $t_0$ for the controllability horizon $\tcont$; this is because the system cannot be stabilized until the dynamics can be accurately estimated, which requires waiting as long as the controllability window \citep{chen2021black,tsiamis2022learning}. We discuss this dependence further in \Cref{sec:discussion_exp_LF}. Finally, we state a logarithmic term which addresses high-probability confidence:
\begin{align}
\iftoggle{icml}{\textstyle}{} \iota(\delta) := \log \frac{24 T^2\nfin\max\{\dimx,\dimu\}}{\step^2\delta}. \label{eq:iota_def}
\end{align}
We can now state our main theorem, which establishes that, with high probability, for a small enough step size $\step$, and large enough sample size $N$ and iteration number $\nfin$, we obtain an $\epsilon$-stationary policy and $\epsilon'$-$\JSP$, where $\epsilon^2, \epsilon'$ scale as $\mathrm{poly}(T)(\step^{2} + \frac{1}{\step^2\sqrt{N}})$:
\begin{theorem}\label{thm:main_body} Fix $\delta \in (0,1)$, and suppose for the sake of simplicity that $\step \le 1 \le T$.  Then, there are constants $c_1,\dots,c_5 = \Bigohst[1]$ such that if we tune $\eta = 1/c_1\sqrt{T}$, $\sigw = (\sigorac^2 \iota(\delta)/N)^{\frac{1}{4}}$ and $k_0 \ge \kcont +2$, then as long as
\begin{align*}
\textstyle
\step \le \frac{1}{c_2}, \quad N \ge   c_3\iota(\delta) \max\left\{\frac{T^2}{\step^2}, \frac{1}{\step^4}, \sigorac^2\frac{T^4}{\step^2}, \frac{\sigorac^2}{\step^8}\right\}.
\end{align*}
Then, with probability $1-\delta$, if \Cref{cond:feasbility} and all aforementioned Assumptions hold, 
\begin{itemize}
    \item[(a)] For all $n \in [\nfin]$, and $\pi' \in \{\pi^{(n)},\tilde{\pi}^{(n)}\}$, $\mu_{\pi',\star} \le 8\LFP$ and $L_{\pi'} \le 6\max\{1,\LF\}\LFP$. 
    \item[(b)] $\pi  = \pi^{(\nout)}$ is $\epsilon$-stationary, where \iftoggle{icml}{$\epsilon^2 = c_4 (T\step^2 + \frac{T^{\frac{3}{2}}}{\nfin}) $ $+c_4(\frac{T^{\frac{7}{2}}}{\step^2}(\frac{\iota(\delta)^2}{N} + \sigorac \sqrt{\frac{\iota(\delta)}{N}}) + \sigorac^2 \frac{T^{\frac{3}{2}}\iota(\delta)^2}{N})$.}{
        \begin{align*}
        \epsilon^2 = c_4 (T\step^2 + \frac{T^{\frac{3}{2}}}{\nfin})  +c_4(\frac{T^{\frac{7}{2}}}{\step^2}(\frac{\iota(\delta)^2}{N} + \sigorac \sqrt{\frac{\iota(\delta)}{N}}) + \sigorac^2 \frac{T^{\frac{3}{2}}\iota(\delta)^2}{N}).
        \end{align*}
    }
    \item[(c)] For $\pi  = \pi^{(\nout)}$, $\upi$ is an $\epsilon'$-$\JSP$, where $\epsilon' = c_5\frac{\epsilon^2}{\alpha}$. 
\end{itemize}
\end{theorem}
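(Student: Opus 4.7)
The plan is to prove the three claims by a simultaneous induction over iterations $n \in [\nfin]$, leveraging the closed-loop perturbation machinery referenced in \Cref{app:taylor_exp_summary} to keep Taylor remainders polynomial (rather than exponential) in the horizon $T$.

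\textbf{Part (a): inductive stability.} Assume at step $n$ that $\pi^{(n)}$ satisfies $\mu_{\pi^{(n)},\star} \le 8\LFP$, which holds at $n=1$ by \Cref{asm:LFP}. I would first analyze $\estpsi$: the method-of-moments estimator incurs a bias from the second-order Taylor expansion of the closed-loop dynamics, plus variance of order $\sigorac^2/(N\sigw^2)$. Because the gains stabilize the \JL{} dynamics, the bias is polynomial (not exponential) in $T$ by the closed-loop perturbation bounds, and the tuning $\sigw^4 \asymp \sigorac^2\iota(\delta)/N$ balances the two. Plugging these bounds into $\estgains$ and using controllability (\Cref{asm:ctr}) to lower-bound the input Gramian $\cgramhatin$, a standard Riccati perturbation argument then yields $\mu_{\pi^{(n+1)},\star} \le 8\LFP$ and the claimed bound on $L_{\pi^{(n+1)}}$, closing the induction. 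The burn-in $k_0 \ge \kcont + 2$ is precisely what makes $\cgramhatin$ well-conditioned, and the $\step^{-8}$ scaling in the sample-complexity requirement arises here, because identification must be tight enough for the Riccati-perturbation step to stay within its basin of attraction.

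\textbf{Part (b): stationarity via nonconvex descent.} Given (a), the gradient error $\|\bnabhatkn - \nabla \cJ_T(\pi^{(n)})\|_{\ltwou}^2$ decomposes into (i) a continuous-to-discrete discretization gap of order $T\step^2$ from standard ODE quadrature; (ii) Taylor-remainder bias in each $\bhatPsi{j}{k}$, transferred through the chain-rule form of the gradient; and (iii) statistical variance from $\estpsi$. These produce the $\step^2$, $\iota(\delta)^2/N$, and $\sigorac\sqrt{\iota(\delta)/N}$ contributions in the theorem. The stabilized objective $\cJ_T^{\pi^{(n)}}(\bbaru)$ has smoothness constant $L = \Bigohst{T}$ around $\bu^{\pi^{(n)}}$ since closed-loop Markov parameters have uniformly bounded norm. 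A nonconvex descent lemma with stepsize $\eta = 1/(c_1\sqrt{T}) \lesssim 1/L$, applied to the sequence $\bukn$ (noting that the oracle feedback in \Cref{line:grad_update} cancels the correction $-\bKkpi[k]\bhatxk$ and effectively implements an open-loop gradient step on the discrete-time objective), yields
\begin{align*}
\min_{n \in [\nfin]} \|\nabla \cJ_T(\pi^{(n)})\|_{\ltwou}^2 \;\lesssim\; \frac{\cJ_T(\bu^{\pi^{(1)}})}{\eta\,\nfin} + (\text{gradient error})^2,
\end{align*}
matching the claimed $\epsilon^2$ once the tuned $\eta, \sigw, N$ are substituted; the $\argmin$ selection in \Cref{line:nout} then transfers the bound to $\pi^{(\nout)}$.

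\textbf{Part (c) and the main obstacle.} Claim (c) is a direct application of the informal \Cref{prop:Jpijac} quoted in the excerpt: substituting $\epsilon$ from (b), and using that $\max_k \|\bKkn\|$ is uniformly $O(\LFP)$ by the Riccati bound from (a), yields $\epsilon' = O(\epsilon^2/\alpha)$. The principal difficulty throughout is closing the stability induction across gradient updates: when $\bunpl$ deviates from $\bukn$ by an amount controlled by $\eta\|\bnabhatkn\|$, the \JL{} dynamics at the new nominal trajectory shift, and one must verify that the freshly-synthesized $\bhatKk^{(n+1)}$ remains stabilizing for this shifted linearization. This tightly couples $\eta$, the statistical tolerance of $\estgains$, and the closed-loop perturbation bounds; because a single bad iteration could destabilize the entire procedure, the theorem's assumptions on $\step$ and $N$ must be correspondingly stringent.
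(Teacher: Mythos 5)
Your proposal follows essentially the same route as the paper's proof: a per-iteration induction maintaining the Lyapunov-stability invariant via method-of-moments estimation (with the $\sigw^4\asymp\sigorac^2\iota(\delta)/N$ bias–variance balance), controllability-based recovery of $(\bAkpi,\bBkpi)$ feeding a certainty-equivalence Riccati perturbation bound, a smooth nonconvex descent/telescoping argument for stationarity of the discretized objective followed by the $T\step^2$ discretization correction, and \Cref{prop:Jpijac} for the $\JSP$ conversion. The decomposition of the error terms and the attribution of the $\step^{-8}$ sample-complexity requirement to the identification accuracy needed for gain synthesis both match the paper's argument.
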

As a corollary, we achieve \Cref{desider}.
\begin{corollary} For any $\epsilon,\epsilon' > 0$ and $\delta \in (0,1)$, there exists an appropriate choices of $\{\step,N,\eta,\sigw\}$ such that \Cref{alg:learn_mpc_feedback} finds, with probability $\ge 1-\delta$, an $\epsilon$-stationary policy $\pi$ with $\upi$ being an $\epsilon'$-$\JSP$ using at most $M$ oracle calls, where $M/\step = \Bigohst[\mathrm{poly}(T,1/\epsilon,1/\epsilon',\log(1/\delta))]$. 
\end{corollary}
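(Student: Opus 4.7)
The plan is to derive the corollary directly from \Cref{thm:main_body} by reverse-engineering the parameter choices from the stated error bounds. The main work is algebraic bookkeeping, not new analysis, so the challenge is to choose $\{\step,N,\nfin,\sigw\}$ simultaneously small enough to balance the three sources of error (discretization, iteration count, and estimation) against the target tolerance, while still respecting the theorem's preconditions and keeping $M/\step$ polynomial.

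First I would reduce the two-tolerance requirement to a single one. Since part (c) of \Cref{thm:main_body} gives $\epsilon' = c_5 \epsilon^2/\alpha$, I set the effective target $\bar\epsilon^2 := \min\{\epsilon^2,\ \alpha\epsilon'/c_5\}$, so that achieving $\bar\epsilon$-stationarity of $\pi^{(\nout)}$ automatically certifies the $\epsilon'$-$\JSP$ property. The task thus reduces to forcing the right-hand side of the bound in part (b) to be at most $\bar\epsilon^2$. Write that bound schematically as $\bar\epsilon^2 \ge A_1 + A_2 + A_3 + A_4 + A_5$ with $A_1 = c_4 T\step^2$, $A_2 = c_4 T^{3/2}/\nfin$, $A_3 = c_4 T^{7/2}\iota^2/(\step^2 N)$, $A_4 = c_4 T^{7/2}\sigorac\sqrt{\iota/N}/\step^2$, $A_5 = c_4 \sigorac^2 T^{3/2}\iota^2/N$. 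I would enforce each $A_i \le \bar\epsilon^2/5$ separately.

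Next I would pick parameters in order. Enforcing $A_1 \le \bar\epsilon^2/5$ sets $\step \asymp \bar\epsilon/\sqrt{T}$, which I also require to satisfy $\step \le 1/c_2$ and $\step \le 1$; if $\bar\epsilon$ is large this is trivially true, otherwise the choice already respects both. Enforcing $A_2 \le \bar\epsilon^2/5$ gives $\nfin \asymp T^{3/2}/\bar\epsilon^2$. The dominant constraint on $N$ comes from $A_4$: solving $T^{7/2}\sigorac\sqrt{\iota/N}/\step^2 \lesssim \bar\epsilon^2$ and substituting $\step^2 \asymp \bar\epsilon^2/T$ yields $N \asymp \sigorac^2 T^9 \iota(\delta) / \bar\epsilon^8$ (up to polylog factors in $1/\delta$); the bounds from $A_3$ and $A_5$ are subsumed, and the theorem's $N$ preconditions $N \ge c_3\iota(\delta)\max\{T^2/\step^2,\ 1/\step^4,\ \sigorac^2 T^4/\step^2,\ \sigorac^2/\step^8\}$ are likewise absorbed since each scales polynomially in $T$, $1/\bar\epsilon$, and $\log(1/\delta)$. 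The remaining parameters are set as dictated by \Cref{thm:main_body}: $\eta = 1/(c_1\sqrt T)$, $\sigw = (\sigorac^2\iota(\delta)/N)^{1/4}$, and $k_0 = \lceil \tcont/\step \rceil + 2$.

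Finally I would count oracle queries. Each iteration of \Cref{alg:learn_mpc_feedback} invokes $\estpsi$ once on line 3 and once inside $\estgains$ on line 6; by inspection of \Cref{alg:est_markov}, each invocation uses $2N$ trajectories, so each iteration costs $O(N)$ calls to \Cref{orac:our_orac}. Thus $M \lesssim N\nfin$, and
\[
\frac{M}{\step} \;\lesssim\; \frac{N \nfin}{\step} \;\lesssim\; \frac{\sigorac^2 T^9\iota(\delta)}{\bar\epsilon^8} \cdot \frac{T^{3/2}}{\bar\epsilon^2} \cdot \frac{\sqrt T}{\bar\epsilon} \;=\; \Bigohst{\mathrm{poly}(T,\,1/\epsilon,\,1/\epsilon',\,\log(1/\delta))},
\]
which is exactly the desired complexity bound, with the probability guarantee inherited from \Cref{thm:main_body}. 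The main obstacle, if any, is purely notational: tracking the exponents carefully so that the substitution $\step \asymp \bar\epsilon/\sqrt T$ does not blow up any term (in particular $A_4$ with its $1/\step^4$ after squaring), and confirming that $\bar\epsilon^2 = \Theta(\min\{\epsilon^2, \alpha\epsilon'\})$ so that both target inequalities hold simultaneously under a single choice of parameters.
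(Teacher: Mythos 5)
Your proposal is correct and matches the paper's intent: the corollary is stated without an explicit proof precisely because it is the immediate parameter-tuning consequence of \Cref{thm:main_body} that you carry out, i.e.\ setting $\bar\epsilon^2 = \min\{\epsilon^2,\alpha\epsilon'/c_5\}$, choosing $\step \asymp \bar\epsilon/\sqrt{T}$, $\nfin \asymp T^{3/2}/\bar\epsilon^2$, and $N$ to dominate both the error terms and the theorem's preconditions, then counting $M \lesssim N\nfin$ oracle calls. The only cosmetic caveat is that $A_3$ and $A_5$ are not literally subsumed by the $A_4$ constraint when $\sigorac$ is very small, but taking $N$ to be the maximum of all the (individually polynomial) lower bounds resolves this without changing the conclusion.
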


\subsection{Analysis Overview}
In this section, we provide a high-level sketch of the analysis. \Cref{app:formal_analysis} provides the formal proof, and carefully outlines the organization of the subsequent appendices which establish the subordinate results. 

As our policies consists of zero-order hold discrete-time inputs, our analysis is mostly performed in discrete-time. 
\begin{restatable}[Stabilized trajectories, discrete-time inputs]{definition}{defnstabtrajs}\label{defn:dt_things} Let $\buvec \in \sfU$, and recall the continuous-input trajectories  $\btilxpict,\btilupict$ in \Cref{defn:stab_dyn}. We define $\btilxpi(t \mid \buvec) := \btilxpict(t \mid \istep(\buvec))$ and $\btilupi(t \mid \buvec) := \btilupict(t \mid \istep(\buvec))$, and their discrete samplings  $\btilxpik\discbrak{\buvec} := \btilxpi(t_k \mid \buvec) $ and  $\btilupik\discbrak{\buvec} := \btilupi(t_k \mid \buvec)$. We define a discretized objective \icmlinlinest{\Jpidisc\discbrak{\buvec} :=  V(\btilxpik[K+1]\,\discbrak{\buvec}) + \step \sum_{k=1}^K Q( \btilxpik\,\discbrak{\buvec}, \btilupik\,\discbrak{\buvec}, t_k)}, and the shorthand $ \Jdisc\discbrak{\pi} = \Jpidisc\discbrak{\bukpi[1:K]}$ and $ \nabla \Jpidisc\discbrak{\pi} := \nabla_{\buvec}\Jpidisc\discbrak{\buvec} \big{|}_{\buvec= \bukpi[1:K]}$.
\end{restatable}


What we shall show is that our algorithm \textbf{(a)} finds a policy $\pi$ such that $\|\nabla \Jdisc\discbrak{\pi}\|_{\ell_2} \le \epsilon$ is small, \textbf{(b)} by discretization, $\|\nabla\Jpi\discbrak{\upi}\|_{\ltwou} \le \epsilon + \BigOh{\step}$ is small (i.e. $\pi$ is approximately stationary), and that \textbf{(c)} this implies that $\upi$ is an approximate-$\JSP$ of $\cJ_T(\upi)$. Part \textbf{(a)} requires the most effort, part \textbf{(b)} is a tedious discretization, and part \textbf{(c)} is by \Cref{prop:Jpijac} stated below. Key in these steps are certain regularity conditions on the policy $\pi$. The first is the magnitude of the gains:
\begin{definition}\label{defn:pigains} We define an upper bound on the gains of policy $\pi$ as $\Lpi := \max\{1,\max_{k \in [K]}\|\bKkpi\|\}$. 
\end{definition}
This term suffices to translate stationary policies to $\JSP$s:
\begin{proposition}\label{prop:Jpijac} Suppose \Cref{asm:convexity,asm:max_dyn_final,asm:cost_asm}, $\pi$ is feasible,  $\step \le \frac{1}{16\Lpi\LF}$. Then, if $\|\nabla\cJ_T(\pi)\|_{\ltwou}  \le \epsilon$,  $\upi(t)$ is an  $\epsilon'$-$\JSP$ of $\cJ_T$ for $\epsilon' = 64\epsilon^2\Lpi^2/\alpha$.
\end{proposition}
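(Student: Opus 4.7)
The plan is to combine $\alpha$-strong convexity of the functional $\bbaru \mapsto \Jjac_T(\bbaru;\upi)$ on $\ltwou$ with a chain-rule change-of-variables that converts a bound on the stabilized gradient $\nabla\cJ_T(\pi)$ into a bound on the open-loop gradient $\nabla\cJ_T(\upi) = \nabla_{\bbaru}\Jjac_T(\bbaru;\upi)\big|_{\bbaru=\upi}$, modulo a discretization correction controlled by the step-size hypothesis. Throughout write $\delta\bbaru := \bbaru - \upi$ and let $\delta\xjac$ denote the open-loop Jacobian-linearized state perturbation, i.e.\ the solution of $\tfrac{\rmd}{\rmd t}\delta\xjac = \Api\,\delta\xjac + \Bpi\,\delta\bbaru$ with $\delta\xjac(0)=0$.

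First, because $\bbaru \mapsto \xjac(\cdot;\upi)$ is affine and $(x,u)\mapsto Q(x,u,t) - \tfrac{\alpha}{2}(\|x\|^2+\|u\|^2)$ is convex by \Cref{asm:convexity}, the functional $\bbaru\mapsto\Jjac_T(\bbaru;\upi)$ is $\alpha$-strongly convex in $\ltwou$. Since $\Jjac_T(\upi;\upi) = \cJ_T(\upi)$ and $\nabla_{\bbaru}\Jjac_T(\bbaru;\upi)\big|_{\bbaru=\upi}=\nabla\cJ_T(\upi)$, applying strong convexity along both the state and input variables yields
\begin{align*}
   \Jjac_T(\upi + \delta\bbaru;\upi) \;\ge\; \cJ_T(\upi) + \langle \nabla\cJ_T(\upi),\,\delta\bbaru\rangle_{\ltwou} + \tfrac{\alpha}{2}\bigl(\|\delta\bbaru\|_{\ltwou}^2 + \|\delta\xjac\|_{\ltwou}^2\bigr).
\end{align*}

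Second, I convert the open-loop inner product into a stabilized one. Setting $\delta v(t) := \delta\bbaru(t) - \bKkpi[k(t)]\,\delta\xjac(\tkt)$, one checks that the closed-loop Jacobian-linearization of $(\btilxpict,\btilupict)$ driven by nominal perturbation $\delta v$ has state equal to $\delta\xjac$ and effective-input perturbation equal to $\delta\bbaru$. Because $\Jpi = \cJ_T \circ (\bbaru\mapsto\btilupict(\cdot\mid\bbaru))$, the chain rule evaluated at $\bbaru=\upi$ gives $\langle\nabla\cJ_T(\upi),\delta\bbaru\rangle_{\ltwou} = \langle\nabla\cJ_T(\pi),\delta v\rangle_{\ltwou} \ge -\epsilon\|\delta v\|_{\ltwou}$.

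Third, I bound $\|\delta v\|_{\ltwou}$ in terms of the two coercive quantities. Triangle gives $\|\delta v\|^2 \le 2\|\delta\bbaru\|^2 + 2\Lpi^2\sum_k\step\|\delta\xjac(t_k)\|^2$. Writing $\delta\xjac(t) - \delta\xjac(t_k) = \int_{t_k}^t(\Api\delta\xjac + \Bpi\delta\bbaru)\rmd s$, Cauchy-Schwarz and averaging in $t\in\cI_k$ and summing in $k$ yield $\sum_k\step\|\delta\xjac(t_k)\|^2 \le 2\|\delta\xjac\|_{\ltwou}^2 + 4\step^2\LF^2(\|\delta\xjac\|^2+\|\delta\bbaru\|^2)$. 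The hypothesis $\step\le 1/(16\Lpi\LF)$ makes $\step^2\Lpi^2\LF^2 \le 1/256$, absorbing the correction and giving $\|\delta v\|^2 \le 8\Lpi^2(\|\delta\bbaru\|^2+\|\delta\xjac\|^2)$. Young's inequality then produces $\epsilon\|\delta v\| \le \tfrac{\alpha}{4}(\|\delta\bbaru\|^2+\|\delta\xjac\|^2) + 8\epsilon^2\Lpi^2/\alpha$; substituting into the first step cancels the quadratic terms, yielding $\Jjac_T(\upi+\delta\bbaru;\upi) \ge \cJ_T(\upi) - 8\epsilon^2\Lpi^2/\alpha$ uniformly in $\delta\bbaru$. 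Infimizing over $\bbaru\in\cU$ gives the $\JSP$ conclusion, up to the stated constant $64$ that presumably reflects conservative bookkeeping.

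The main technical obstacle is the discretization bookkeeping in step three. Because the closed-loop feedback is applied on a ZOH grid, the change-of-variables $\delta v = \delta\bbaru - \bKkpi[k(t)]\delta\xjac(\tkt)$ introduces a sampling mismatch between $\delta\xjac(\tkt)$ and the continuous-time trajectory $\delta\xjac(t)$; controlling this without paying an $e^{\LF T}$ factor is exactly what the hypothesis $\step\le 1/(16\Lpi\LF)$ buys, letting the mismatch be dominated by the coercive quadratic terms from strong convexity.
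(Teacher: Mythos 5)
Your proof is correct, and on the one step that carries all the difficulty it takes a genuinely different route from the paper. Both arguments rest on the same gain-inversion change of variables $\delta v(t) = \delta\bbaru(t) - \bKkpi[k(t)]\,\delta\xjac(\tkt)$, under which the closed-loop Jacobian linearization driven by $\delta v$ reproduces the open-loop pair $(\delta \xjac,\delta\bbaru)$ (this is exactly the construction in \Cref{lem:ol_congruence}). The difference is where convexity is exploited. The paper forms the closed-loop linearized objective $\Jpijac$, proves it is $\tfrac{\alpha}{64\max\{1,\Lpi^2\}}$-strongly convex in the nominal perturbation (\Cref{lem:strong_cvx_establish}) --- which requires the nontrivial coercivity lower bound $\int (\|\updelta\btilxpijac\|^2+\|\updelta\btilupijac\|^2) \gtrsim \|\bbaru\|^2/\Lpi^2$, proved via the abstract Hilbert-space operator lemma (\Cref{lem:PSD_hilbert_space}) --- and then concludes with the PL inequality plus the congruence of infima. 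You instead keep strong convexity where it is free: $\Jjac_T(\cdot;\upi)$ is affine in $\bbaru$, so \Cref{asm:convexity} gives the $\tfrac{\alpha}{2}(\|\delta\bbaru\|^2+\|\delta\xjac\|^2)$ lower bound directly; you transfer only the \emph{linear} term through the change of variables via $\langle\nabla\cJ_T(\upi),\delta\bbaru\rangle=\langle\nabla\cJ_T(\pi),\delta v\rangle\ge-\epsilon\|\delta v\|_{\ltwou}$, and absorb $\|\delta v\|_{\ltwou}^2\le 8\Lpi^2(\|\delta\bbaru\|^2+\|\delta\xjac\|^2)$ into the coercive quadratic by Young. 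Your bound on $\|\delta v\|$ is the precise ``upper-bound dual'' of the paper's coercivity lower bound: both reduce to the same ZOH sampling mismatch between $\delta\xjac(\tkt)$ and $\delta\xjac(t)$, controlled by $\step\le 1/(16\Lpi\LF)$, and your arithmetic there checks out. The payoff is that you avoid the operator lemma and the PL step entirely and in fact obtain the sharper constant $8\Lpi^2\epsilon^2/\alpha\le 64\Lpi^2\epsilon^2/\alpha$. The only point to make explicit in a write-up is that the identity $\langle\nabla\cJ_T(\upi),\delta\bbaru\rangle=\langle\nabla\cJ_T(\pi),\delta v\rangle$ is a statement about directional derivatives of $\Jpi$ computed through the closed-loop linearization (\Cref{lem:jac_cl_implicit,lem:valid_first_order}), with the feedback sampled at $\tkt$ as you correctly do.
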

\begin{proof}[Proof Sketch] We construct a Jacobian linearization $\Jpijac$ of $\Jpi$ by analogy to $\Jjac$, and define $\epsilon$-$\JSP$s of $\Jpi$ analogously. We show by inverting the gains that an $\epsilon$-$\JSP$ of $\Jpi$ is precisely an $\epsilon$-$\JSP$ of $\cJ_T$. We then establish strong convexity of $\Jpijac$ (non-trivial due to the gains), and use the PL inequality for strongly convex functions to conclude. The formal proof is given in \Cref{sec:proof:Jpijac}.
\end{proof}

To establish parts \textbf{(a)} and \textbf{(b)}, we need to measure the stability of the policies. To this end, we first introduce \emph{closed-loop} (discrete-time) linearizations of the dynamics, in terms of which we define a Lyapunov stability modulus.
\begin{restatable}[Closed-Loop Linearizations]{definition}{defncllinearizations}\label{defn:cl_linearizations} We {discretize} the open-loop linearizations in \Cref{defn:ol_lin_dyn}  defining  $\bAkpi = \Phiolpi(t_{k+1},t_k)$ and $\bBkpi := \int_{s=t_k}^{t_{k+1}}\Phiolpi(t_{k+1},s)\Bpi(s)\rmd s$. We define an \emph{discrete-time closed-loop} linearization $\bAclkpi := \bAkpi + \bBkpi\bKkpi$, and a discrete closed-loop \emph{transition operator} is defined, for $1 \le k_1 \le k_2 \le K+1$, $\Phicldisc{k_2,k_1} = \bAclkpi[k_2-1]\cdot \bAclkpi[k_2-2]\dots \cdot \bAclkpi[k_1]$, with the convention $\Phicldisc{k_1,k_1} = \eye$. For $1 \le k_1 < k_2 \le K+1$, we define the closed-loop \emph{Markov operator} $\Psicldisc{k_2,k_1} := \Phicldisc{k_2,k_1+1}\bBkpi[k_1]$. 
\end{restatable}

\begin{restatable}[Lyapunov Stability Modulus]{definition}{picond}\label{cond:pi_cond} Given a policy $\pi$, define $\Lampik[K+1] = \eye$, and $\Lampik = (\bAclkpi)^\top\Lampik[k+1]\bAclkpi + \step \eye$. We define $\Kpist := \max_{k \in \{k_0,\dots,K+1\}}\|\Lampik\|$.
\end{restatable}
\newcommand{\lohpi}[1][\cdot]{o_{\pi}(#1)}
Notice that the stability modulus is taken after step $k_0$, which is where we terminate the Riccati recrusion in \Cref{alg:gain_est}. We shall show that, with high probability, \Cref{alg:learn_mpc_feedback} synthesizes policies $\pi$ which satisfy 
\begin{align}
\Lpi  \le 6\max\{1,\LF\}\LFP, \quad \Kpist \le 8\LFP, \label{eq:pi_cond_inv}
\end{align}
so that $\Lpi,\Kpist = \Bigohst[1]$. Going forward, we let $\Bigohpi{\cdot}$ denote a term suppressing polynomials in $\Lpi$, $\Kpist$ and terms $\Bigohst[1]$; when $\pi$ satisfies \Cref{eq:pi_cond_inv}, then $\Bigohpi[\cdot] = \Bigohst[\cdot]$. We say $x \le 1/\Bigohpi[y]$, if $x \le 1/y'$, where $y' = \Bigohpi[y]$. In \Cref{sec:prop:grad_disc}, we translate discrete-time stationary points to continuous-time ones, establishing part \textbf{(b)} of the argument. 
\begin{proposition}\label{prop:confirm_eps_stat} For $\pi$ feasible, $\|\nabla \cJ_T( \pi)\|_{\ltwou}  \le \frac{1}{\sqrt{\step}}\|\nabla \Jdisc( \pi)\|_{\ltwo} + \Bigohpi[\step\sqrt{T}]$.
\end{proposition}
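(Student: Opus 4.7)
The plan is to move between the continuous-time gradient $g(t) := \nabla\cJ_T(\pi)(t) \in \cL_2(\cU)$ and the discrete-time gradient $d := \nabla\Jdisc(\pi) \in (\R^{\dimu})^K$ via an orthogonal decomposition onto the discretization grid. Define the per-interval average $\bar g_k := \step^{-1}\int_{\cI_k} g(t)\,\rmd t$ and the residual $r(t) := g(t) - \bar g_{k(t)}$; because $r$ integrates to zero on each $\cI_k$, Pythagoras gives
\[
\|g\|_{\ltwou}^2 \;=\; \sum_{k=1}^K \step\,\|\bar g_k\|^2 + \|r\|_{\ltwou}^2 \;=\; \tfrac{1}{\step}\,\|\step\bar g\|_{\ell_2}^2 + \|r\|_{\ltwou}^2.
\]
After taking square roots and using subadditivity, it suffices to establish (i) $\|\step\bar g - d\|_{\ell_2} \le \Bigohpi[\step^{3/2}\sqrt T]$ and (ii) $\|r\|_{\ltwou}\le \Bigohpi[\step\sqrt T]$; adding $\tfrac{1}{\sqrt\step}$ times (i) to (ii) and bounding $\|\step\bar g\|_{\ell_2} \le \|d\|_{\ell_2} + \|\step\bar g - d\|_{\ell_2}$ then yields the claim.

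For the residual bound (ii), I would appeal to the adjoint representation of $g$: one has $g(t) = Q_u(\xpi(t),\upi(t),t) + \Bpi(t)^\top p(t)$ for a costate $p$ solving a backward linear equation with terminal condition $V_x(\xpi(T))$, driven by the sampled-data closed-loop linearization. Under \Cref{asm:max_dyn_final,asm:cost_asm} and the Lyapunov bound $\Kpist \le \Bigohst[1]$ from \Cref{eq:pi_cond_inv}, $p$ stays of size $\Bigohpi[1]$ across $[0,T]$, and both $p$ and the regressors $\xpi,\upi,\Api,\Bpi$ are $C^1$ on the interior of each interval $\cI_k$ with derivatives bounded by $\Bigohpi[1]$ (the piecewise-constant gains $\bKkpi[k(t)]$ enter only as constant-in-$t$ forcings within $\cI_k$, so produce no jumps inside the interval). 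Hence $\|r(t)\|\le\Bigohpi[\step]$ uniformly, and $\|r\|_{\ltwou}\le\Bigohpi[\step\sqrt T]$.

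For the quadrature bound (i), the zero-order-hold inclusion $\istep\colon\bsfU\to\cU$ has adjoint $\istep^{\ast}(h)_k = \int_{\cI_k} h(t)\,\rmd t$, so the chain rule gives $\nabla_{\buvec}(\Jpi\circ\istep)(\bukpi[1:K])_k = \step\,\bar g_k$. The discrete objective $\Jpidisc$ differs from $\Jpi\circ\istep$ only by replacement of the running-cost integral with its left Riemann sum. Writing out the explicit adjoint formula \Cref{eq:bnabhatkn} for $d_k$ and the analogue obtained by integrating the continuous adjoint formula for $g(t)$ over $\cI_k$, I would compare them term by term: the terminal contribution ($V_x$ paired with $\Psicldisc{K+1,k}$ vs.\ $\int_{\cI_k}\Psiclpi(T,t)\,\rmd t\cdot V_x$), the within-interval $Q_u$ term, and the downstream $(Q_x + \bKkpi[j]^\top Q_u)$ sums. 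Each pair of integrands (Markov operators, cost derivatives at $(\xpi,\upi)$, gains) is Lipschitz-in-$t$ with constant $\Bigohpi[1]$ by the same regularity used in (ii), so standard left-endpoint quadrature estimates yield $|\step\,\bar g_k - d_k|\le\Bigohpi[\step^2]$ per index, and summing over $k\le K = T/\step$ gives $\|\step\bar g - d\|_{\ell_2}\le\Bigohpi[\step^{3/2}\sqrt T]$.

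The main technical obstacle is the uniform-in-$t$, $\Bigohpi[1]$-Lipschitz control of the costate $p$ and of the Markov operators $\Psiclpi(s,t),\Psicldisc{j,k}$. A naive Grönwall bound on the backward adjoint equation would produce a factor of $e^{\Lol T}$; the Lyapunov stability $\Kpist\le\Bigohst[1]$ and bounded-gain condition $\Lpi\le\Bigohst[1]$ (both guaranteed by \Cref{eq:pi_cond_inv}) are what allow this exponential to be absorbed into $\Bigohpi$ constants, so that both the residual and quadrature errors scale as $\Bigohpi[\step\sqrt T]$ rather than exponentially in the horizon.
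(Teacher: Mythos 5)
Your proposal is correct and follows essentially the same route as the paper: the paper proves \Cref{prop:grad_disc}, a uniform pointwise bound $\sup_{t}\|\nabla \cJ_T(\pi)(t) - \tfrac{1}{\step}(\nabla\Jdisc(\pi))_{k(t)}\| \le \Bigohpi[\step]$, by exactly the term-by-term comparison of the adjoint formulas you describe (Lipschitz-in-$t$ cost gradients, discretization of the Markov operators, and $\Kpione,\Kpiinf = \Bigohpi[1]$ to absorb the would-be $e^{\LF T}$ factor), and then integrates and uses $\|\istep(\buvec)\|_{\ltwou}^2 = \step\|\buvec\|_{\ell_2}^2$ to get \Cref{cor:grad_test_extra}. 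Your orthogonal split into interval averages plus residual is only a repackaging of that single pointwise estimate---your bounds (i) and (ii) require the same ingredients the paper's single bound does---so the two arguments coincide in substance.
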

A more precise statement and explanation of the proof are given in \Cref{sec:grad_disc_main}. The rest of the analysis boils down to \textbf{(a)}: finding an approximate stationary point of the time-discretized objective. 
\subsection{Finding a stationary point of $\Jpidisc$}

\icmlpar{Taylor expansion of the dynamics.} To begin, we derive perturbation bounds for solutions to the stabilized ordinary differential equations. Specifically, we provide bounds for when $\bukpi[1:K]$ is perturbed by a sufficiently small input $\updelta \buk[1:K]$. Our formal proposition, \Cref{app:taylor_exp_summary} states perturbations in both the $\ell_{\infty}$ and normalized $\ell_{2}$-norms; for simplicity, state the special case for $\ell_{\infty}$-perturbation. 
\begin{proposition}\label{prop:tay_ex_body} Let $\buk[1:K] = \bukpi + \deluk[1:K]$, and suppose $\max_{k}\|\updelta \buk\| \le B_{\infty} \le 1/\Bigohpi[1]$. Then,  for all $k \in [K+1]$, \icmlinlinest{\| \btilxpik\,[\buk[1:K]] - \bxkpi - \sum_{j=1}^{k-1}\Psicldisc{k,j}\deluk[j] \| \le \Bigohpi[B_{\infty}^2].}
\end{proposition}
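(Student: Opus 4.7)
The plan is to reduce the continuous-time stabilized dynamics to a discrete one-step recursion in the state deviation $\delta_k := \btilxpik\,[\buk[1:K]] - \bxkpi$, Taylor expand this map to first order about the nominal, and control the accumulated second-order remainder using the Lyapunov structure carried by $\Lampik$. Because the feedback in \Cref{defn:stab_dyn} is a zero-order hold on the state at $t_k$, the perturbed input on each interval $[t_k,t_{k+1})$ equals $\upi(s)+\delta u_k$ with $\delta u_k := \deluk + \bKkpi\delta_k$ constant in $s$. Writing $G_k(\xi,\omega)$ for the time-$\step$ flow of $\dot x = \fdyn(x,\upi(s)+\omega)$ started from $\xi$ at $s=t_k$, we have $\btilxpik[k+1]\,[\buk[1:K]] = G_k(\bxkpi+\delta_k,\delta u_k)$ while $\bxkpi[k+1] = G_k(\bxkpi,0)$. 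A standard Gronwall argument for the $\cctwo$ vector field $\fdyn$ on an interval of length $\step \le 1/\LF$ shows $G_k$ is $\cctwo$ in $(\xi,\omega)$ with first-order Taylor coefficients equal to $(\bAkpi,\bBkpi)$ of \Cref{defn:cl_linearizations} and Hessian operator norm $O(\step \MF e^{\step\LF}) = O(\step)$. Substituting $\delta u_k = \deluk + \bKkpi\delta_k$ yields the discrete recursion
\[
\delta_{k+1} \;=\; \bAclkpi\delta_k + \bBkpi\deluk + r_k,\qquad \|r_k\|\le C\,\step\,\bigl(\|\delta_k\|^2+\|\deluk\|^2\bigr),
\]
with $\delta_0=0$ and $C=\Bigohpi[1]$ absorbing $\MF,\LF,\Lpi$. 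The linear part of this recursion has solution precisely $\tilde\delta_k := \sum_{j<k}\Psicldisc{k,j}\deluk[j]$, so the residual $e_k := \delta_k-\tilde\delta_k$ satisfies $e_{k+1} = \bAclkpi e_k + r_k$ with $e_0=0$.

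The main obstacle is bounding $\|e_k\|$ uniformly by $\Bigohpi[B_\infty^2]$ without the exponential factor $e^{T\LF\Lpi}$ that a crude $\|\bAclkpi\|\le 1+O(\step\LF\Lpi)$ estimate would produce. To circumvent this, the plan is to work in the weighted norm $\|\cdot\|_{\Lampik[k]}$ and exploit the exact identity $\|\bAclkpi u\|_{\Lampik[k+1]}^2 = \|u\|_{\Lampik[k]}^2 - \step\|u\|^2$ read off from the Lyapunov recursion in \Cref{cond:pi_cond}. Combined with $\Lampik\preceq\Kpist\eye$, this delivers a per-step contraction $\|\bAclkpi u\|_{\Lampik[k+1]}\le \sqrt{1-\step/\Kpist}\,\|u\|_{\Lampik[k]}$, i.e.\ a geometric rate $\Theta(\step/\Kpist)$ in the Lyapunov norm. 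Unrolling the error recursion together with a Young's inequality split of the cross term between $\bAclkpi e_k$ and $r_k$ against this contraction reserve, and summing the resulting geometric series, will yield $\|e_k\|_{\Lampik[k]} \le \Bigohpi[B_\infty^2]$ \emph{independently of} $T$. The final conversion back to the Euclidean norm goes through by combining the terminal condition $\Lampik[K+1]=\eye$ with the invertibility $\sigma_{\min}(\bAclkpi)\ge 1-O(\step)$ implied by \Cref{asm:LFP} and $\step\le 1/\LF$, which (after a careful backward propagation of the Lyapunov recursion) gives a $T$-independent lower bound $\Lampik[k]\succeq c\,\eye$ and hence the target $\|e_k\|\le \Bigohpi[B_\infty^2]$.

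Finally, since $\|r_k\|$ depends on $\|\delta_k\|$, which itself depends on $e_k$, the argument must be closed by a bootstrap induction on $k$. Assuming inductively that $\|e_i\|\le c_\star B_\infty^2$ for all $i<k$ with $c_\star=\Bigohpi[1]$, one gets $\|\delta_i\|\le\|\tilde\delta_i\|+c_\star B_\infty^2$; a direct Lyapunov estimate on the first-order trajectory — using $\|\bBkpi\|=O(\step)$ together with the summability $\step\sum_{j}\|\Phicldisc{i,j+1}\|^2 \lesssim \Kpist$ which follows from the same exact identity — gives $\|\tilde\delta_i\|\le \Bigohpi[B_\infty]$, so $\|\delta_i\|\le\Bigohpi[B_\infty]$ as soon as $B_\infty\le 1/\Bigohpi[1]$. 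Plugging this back into the one-step remainder bound and reapplying the Lyapunov-weighted accumulation reproduces $\|e_k\|\le c_\star B_\infty^2$ with the \emph{same} constant, closing the induction and establishing the claim. This closed-loop Lyapunov-norm argument is precisely what replaces the exponential-in-horizon blow-up characteristic of open-loop perturbation results such as \citet[Theorem~5.6.9]{polak2012optimization}.
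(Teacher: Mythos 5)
Your proposal is correct and follows essentially the same route as the paper's proof in \Cref{sec:prop:taylor_exp_dyn}: reduce to the discrete error recursion $e_{k+1}=\bAclkpi e_k + r_k$ with a second-order Taylor remainder $r_k$ (the paper derives it via Picard's lemma and an intermediate curve rather than the flow map $G_k$, but these are equivalent), unroll it against the closed-loop transition operators, and close with a bootstrap induction on both the error size and feasibility. The only substantive difference is packaging: the paper absorbs the Lyapunov argument into the precomputed constant $\Kpione = \max_k \step\sum_j\|\Phicldisc{k,j}\|=\Bigohpi[1]$ (\Cref{lem:Kpi_bounds}) rather than carrying the weighted norm through the recursion, and note that your inline contraction via $\Lampik\preceq\Kpist\eye$ is only valid for $k\ge k_0$ (since $\Kpist$ in \Cref{cond:pi_cond} is a maximum over $k\ge k_0$ only), so the initial segment $k<k_0$ must be handled with the crude bound $\|\Phicldisc{k,j}\|\le e^{t_0\LF}$ exactly as the paper does.
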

We also show, that if $B_{\infty} = 1/\Bigohpi[T]$, then the policy with $\pi'$ with the same gains $\bKk^{\pi} = \bKk^{\pi}$ as $\pi$, but the perturbed inputs $\buk^{\pi'} = \buk$ at most double its Lyapunov stability modulus $\mu_{\pi',\star} \le 2\mu_{\pi,\star}$. This allows small gradient steps to preserve stability.

\icmlpar{Estimation of linearizations and gradients.} We then argue that by making $\sigw$ small, then to first order, the estimation procedure in \Cref{alg:est_markov} recovers the \emph{linearization} of the dynamics. The proof combines standard method-of-moments analysis based on matrix Chernoff concentration \citep{tropp2012user} and \Cref{prop:tay_ex_body} to argue the dynamics can be approximated by their linearization. Specifically, \Cref{app:est_errs} argues that, for all rounds $n \in [\nfin]$ and $1 \le j < k \le K+1$, it holds that $\textstyle \|\Psicldisc{k,j}  - \bhatPsi{k}{j}\|  \le \errpsi(\delta)$ where
$\errpsi(\delta) =  \Bigohpi[\sqrt{\frac{\iota(\delta)}{N}}(1+\frac{\sigorac}{\sigw} + \sigw]$,
which can be made to scales as $N^{-\frac{1}{4}}$ by tuning $\sigw = (\sigorac^2 \iota(\delta)/N)^{\frac{1}{4}}$. From the Markov-recovery error, as well as a simpler bound for recovering $\bxk[1:K]^{\pi}$ in \Cref{alg:est_markov} (Lines 1-3), we show accurate recovery of the gradients: $\max_k\|\bnabhatk^{\,(n)} - (\nabla \Jdisc(\pin))_k\|  \le T\Bigohpi[\errpsi(\delta)]$. 

The last step here is to argue that we also approximately recover $\bAkpi,\bBkpi$ in \Cref{alg:gain_est} for synthesizing the gains: for all $k \ge k_0$,
\begin{align*}
\textstyle\|\bhatBk - \bBkpi\|  \vee \|\bhatAk - \bAkpi\| \le \Bigohpi[\frac{\errpsipi(\delta)}{\step}].
\end{align*}
This consists of two steps: (1) using controllability to show the matrices $\cgramhatin$ in \Cref{alg:gain_est} are well-conditioned and (2) using closeness of the Markov operators to show that $\cgramhatin$ and $\cgramhatout$ concentrate around their idealized values. Crucially, we only estimate system matrices for $k \ge k_0$ to ensure $\cgramhatin$ is well-defined, and we use window $k_0 \ge \kcont + 2$ to ensure $\cgramhatout$  is sufficiently well-conditioned.

\icmlpar{Concluding the proof.} \Cref{app:desc_stab,app:formal_conclude} conclude the proof with two steps: first, we show that cost-function decreases during the gradient step \Cref{alg:learn_mpc_feedback} (\Cref{line:grad_update}) at round $n \in [\nfin]$ in proportion to $-\|\bnabhatk^{\,(n)}\|^2$ (a consequence ofthe standard smooth descent argument). Here, we also apply the aforementioned result that small gradient steps preserve stability: $\mu_{\tilde{\pi}^{(n)},\star} \le 2\mu_{\pin,\star}$. Second, we argue that the gains synthesized by \Cref{alg:gain_est} ensure that the Lyapunov stability modulus of $\pi^{(n+1)}$ and the magnitude of its gains stay bounded by an algorithm-independent constant: $\mu_{{\pi}^{(n+1)},\star} \le 4\LFP = \Bigohst[1]$ and $L_{\pi^{(n+1)}} \le \Bigohst[1]$; we use a novel certainty-equivalence analysis for discretized, time-varying linear systems which may be of independent interest (\Cref{append:certainty_equiv}). By combining these two results, we inductively show that all policies constructed satisfy \eqref{eq:pi_cond_inv}, namely they have $\Kpist$ and $L_{\pi}$ at most $\Bigohst[1]$. We then combine this with the typical analysis of nonconvex smooth gradient descent to argue that the policy $\pi^{(\nout)}$ has small discretized gradient, as needed.

\newcommand{\icmlspace}[1]{\iftoggle{icml}{\vspace{#1}}{}}
\section{Experiments}\label{sec:experiments}


Our experiments evaluate the performance of our proposed trajectory optimization
algorithm (\Cref{alg:learn_mpc_feedback}) and compare it with the well-established model-based
baseline of trajectory optimization ($\iLQR$) on top of learned dynamics (e.g.~ \citet{levine2013guided}). Though our analysis considers a fixed horizon, we perform experiments in a receeding horizon control (RHC) fashion.
We consider two control tasks: \textbf{(a)} a pendulum swing up task, and \textbf{(b)}
a 2D quadrotor stabilization task.
We implement our experiments using the \texttt{jax}~\cite{jax2018github} ecosystem.
%
More details regarding the environments, tasks, and experimental setup details
are found in \Cref{append:experimentdetails}. Though our analysis considers the noisy oracle model, all experiments assume \emph{noiseless} observations.

\icmlpar{Least-squares vs.~Method-of-Moments.} \Cref{alg:learn_mpc_feedback} prescribes the method-of-moments estimator to simplify the analysis; in our implementation, we find that estimating
the transition operators using regularized least-squares instead yields to more sample efficient
gradient estimation. This choice can also be analyzed with minor modifications (see e.g. \citet{oymak2019non,simchowitz2019learning}). 

\icmlpar{$\iLQR$ baseline.} We first collect a training dataset according to a prescribed exploration strategy, then train a neural network dynamics model on these dynamics, and finally optimize our policy by applying the $\iLQR$ algorithm directly on the learned model. We consider several variants of our $\iLQR$ baseline which use different exploration strategies
and different supervision signals for model learning.
\iftoggle{icml}{}{\begin{itemize}}

\iftoggle{icml}{\textbf{(1) Sampling strategies:} }{\item[\textbf{(1)}]\textbf{ Sampling strategies:}}
We consider two sampling strategies;
    (a) \texttt{Opt} runs $\iLQR$ with
    the ground truth cost and dynamics in a receeding horizon fashion, performing noiseless rollouts and perturbing the resulting trajectories with noise to encourage exploration,
    and (b) \texttt{Rand} executes rollouts with random inputs starting from random initial conditions. The rationale is that the \texttt{Opt} strategy provides better data coverage for the desired task than \texttt{Rand}.

\iftoggle{icml}{\textbf{(2) Loss supervision:} }{\item[\textbf{(2)}] \textbf{Loss supervision:}}
    The standard loss supervision for learning dynamics is to regress against the next state transition.
    Inspired by our analysis, we also consider an idealized oracle that augments the supervision to also include {noiseless}
    the Jacobians of the ground truth model with respect to both the state and control input;
    we refer to this augmentation as \texttt{JacReg}.

\iftoggle{icml}{\textbf{(2) Model architecture:} }{\item[\textbf{(3)}] \textbf{Model architecture:}}
We use a fully connected three layer MLP network to 
    for fitting the dynamics of the environment. Specifically, our model
    takes in input $(\bxk, \buk)$ and predicts
    the state difference $\bxk[k+1] - \bxk$.

\iftoggle{icml}{}{\end{itemize}}

\Cref{fig:sweeps} shows the results of \Cref{alg:learn_mpc_feedback}
compared with several $\iLQR$ baselines on the pendulum and quadrotor tasks, respectively.
In these figures, the x-axis plots the number of trajectories available to each algorithm,
and the y-axis plots the cost suboptimality $(\cJ_T^{\mathrm{alg}} - \cJ_T^\star)/\cJ_T^\star$ incurred
by each algorithm; where $\cJ_T^{\mathrm{alg}}$ is algorithmic cost and $\cJ_T^\star$ is the cost obtained via $\iLQR$ with the ground truth dynamics. The error bars in the plot are 95\% confidence intervals computed over 10 different evaluation seeds.

\icmlpar{Discussion.} We observe that \Cref{alg:learn_mpc_feedback} with feedback-gains consistently outperforms
\Cref{alg:learn_mpc_feedback} without gains, validating the important of locally-stabilized dynamics. Second, we see that the performance of the $\iLQR$ baselines does not significantly improve as more
trajectory data is collected. We find that our learned models achieve very low train and test
error, over the sampling distribution (i.e., \texttt{Opt} or \texttt{Rand}) used for learning. For $\texttt{Rand}$, we postulate that the distribution shift incurred by performing RHC
via trajectory optimization on the learned model limits the closed-loop performance of our baseline. However, we note that \texttt{Opt}+\texttt{JacReg} achieves stellar performance early on, suggesting that (a) the \texttt{Opt} data collection method suffices for strong closed-loop performance (notice that \texttt{Rand}+\texttt{JacReg} fares far worse), and (b) that a second limiting factor is that estimating \emph{dynamics} and performing automated differentiation is less favorable than directly estimating \emph{Jacobians}, which are the fundamental quantities used by the $\iLQR$ algorithm. This gap between estimation of dynamics and derivatives has been observed in prior work \cite{pfrommer2022tasil}.
\icmlspace{-.5em}
\begin{figure}[]
    \centering
    \begin{minipage}[c]{\columnwidth}
    \centering
    \includegraphics[width=0.78\columnwidth]{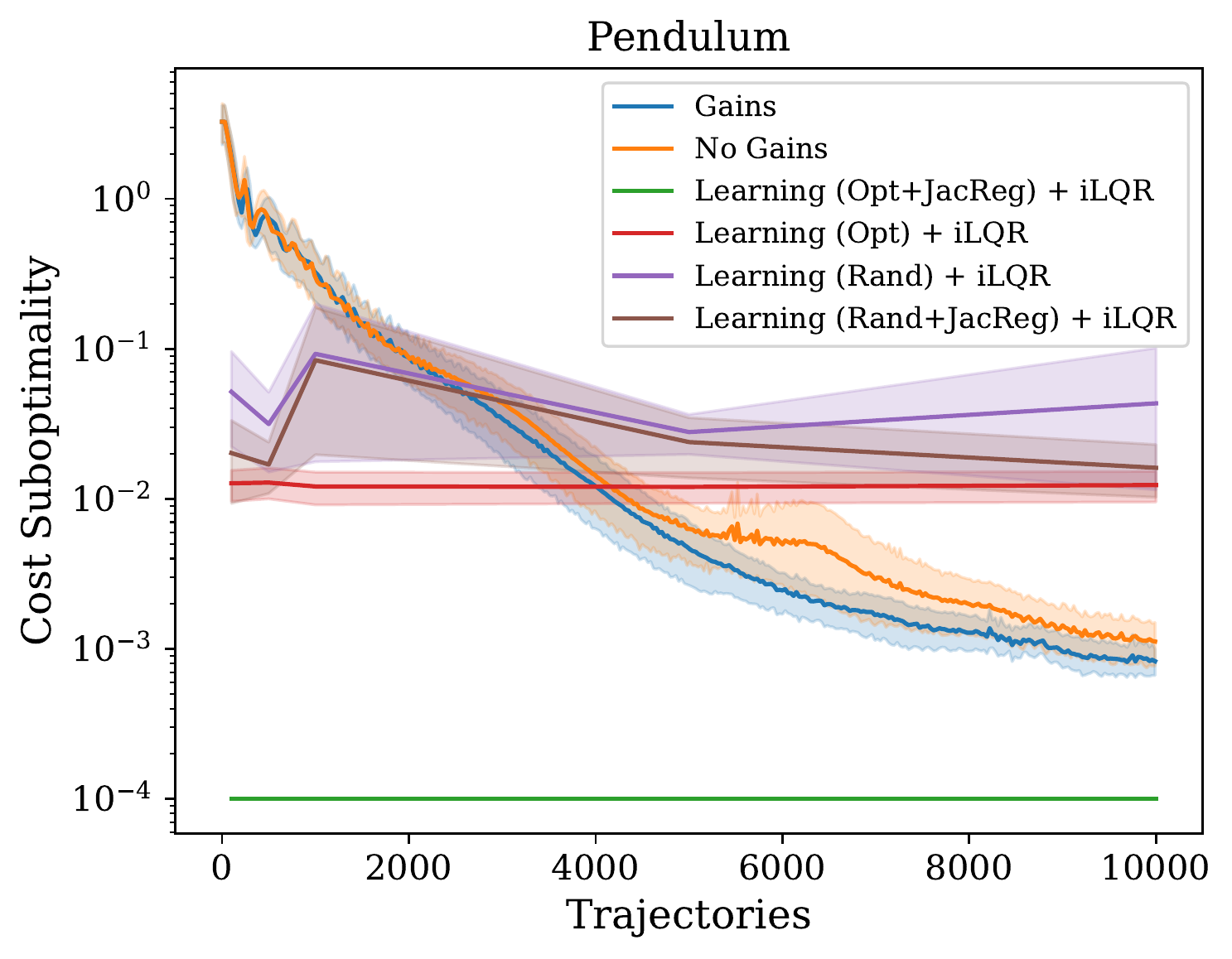}
    \end{minipage}
    \begin{minipage}[c]{\columnwidth}
    \centering
    \includegraphics[width=0.78\columnwidth]{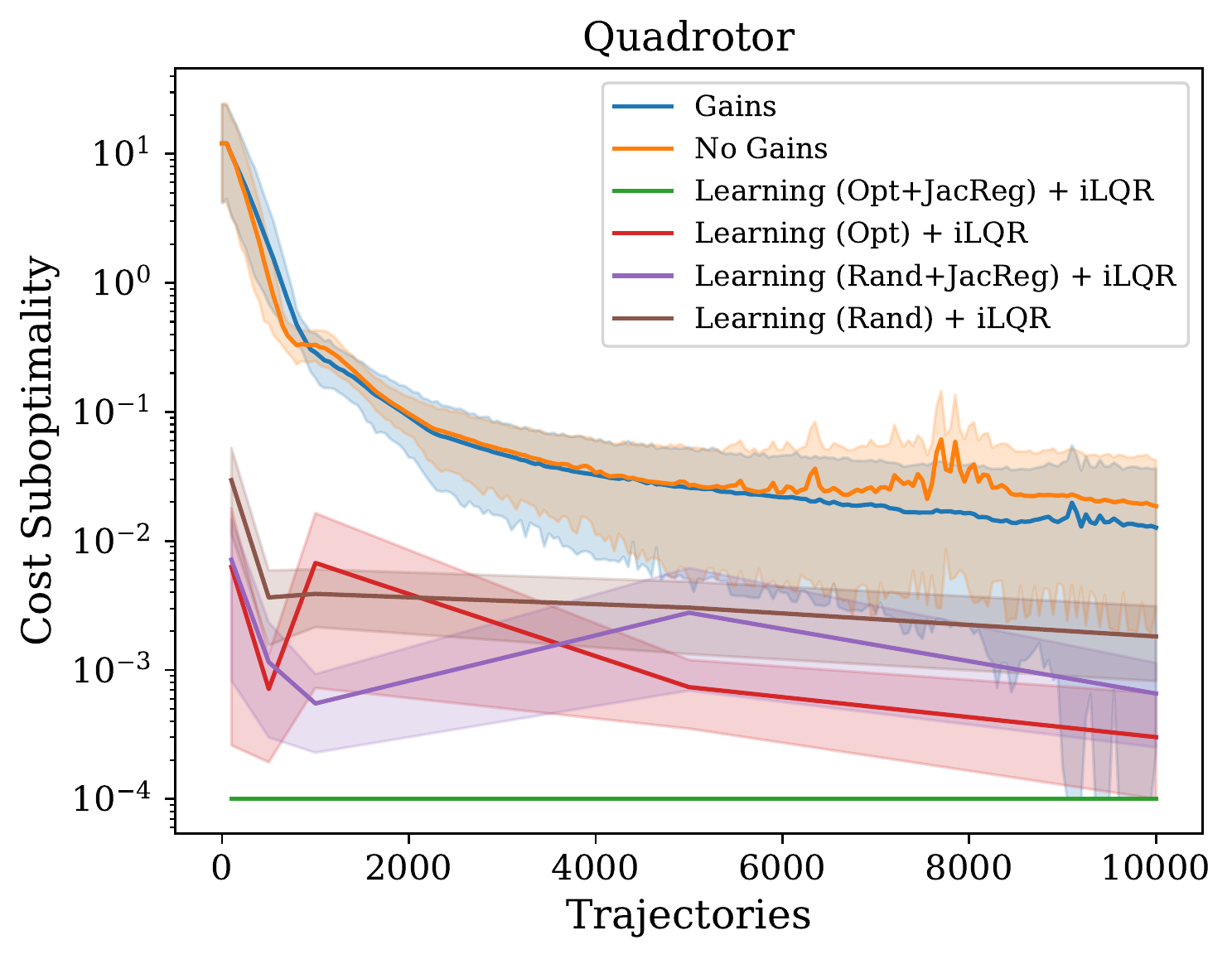}
    \end{minipage}
    \caption{\small Cost suboptimality $(\cJ_T^{\mathrm{alg}} - \cJ_T^\star)/\cJ_T^\star$ versus number of trajectories
    available to both \Cref{alg:learn_mpc_feedback} and $\iLQR$ baselines.
    For visualization, the suboptimality is clipped to $(10^{-4}, \infty)$.} 
    \label{fig:sweeps}
\end{figure}
\icmlspace{-.5em}
Though we find that our method outperforms deep-learning baselines (excluding $\texttt{OPT}$+$\texttt{JacReg}$) on the simpler inverted pendulum environment, the learning+$\iLQR$ approaches fare better on the quadrotor. We suspect that this is attributable to data-reuse, as \Cref{alg:learn_mpc_feedback} estimates an entirely new model of system dynamics at each iteration. We believe that finding a way to combine the advantages of directly estimating linearized dynamics (observed in \Cref{alg:learn_mpc_feedback}, as well as $\texttt{OPT}$+$\texttt{JacReg}$) with the advantages of data-reuse.

\newpage
\bibliographystyle{icml2023}
\bibliography{refs}
\newpage 
\appendix
\onecolumn
\tableofcontents

\part{Analysis}\label{part:analysis}


\newcommand{\errdecbar}{\overline{\mathrm{Err}}_{\mathrm{dec}}}
\newcommand{\Kpiq}{{\kname}_{q}}
\newcommand{\errnot}{\mathrm{Err}_0}
\newcommand{\kgrad}{\kname_{\nabla}}

\newcommand{\Bigohsttil}[1][\cdot]{\widetilde{\cO}_{\star}\left(#1\right)}
\newcommand{\Bigohpilr}[1][\cdot]{\cO_{\pi}\left(#1\right)}
\newcommand{\Bigohstlr}[1][\cdot]{\cO_{\star}\left(#1\right)}
\newcommand{\Kinfbar}{\bar{\kname}_{\infty}}
\newcommand{\Ktwobar}{\bar{\kname}_{2}}
\newcommand{\Kone}{\kname_{1}}
\newcommand{\Kinf}{\kname_{\infty}}
\newcommand{\Ktwo}{\kname_{2}}
\newcommand{\Konebar}{\bar{\kname}_{1}}
\newcommand{\Kqbar}[1][q]{\bar{\kname}_{#1}}
\newcommand{\Btayqbar}{\bar{B}_{\mathrm{tay},q}}
\newcommand{\Btaytwobar}{\bar{B}_{\mathrm{tay},2}}
\newcommand{\Btayinfbar}{\bar{B}_{\mathrm{tay},\infty}}
\newcommand{\Mtayqbar}{\bar{M}_{\mathrm{tay},q}}
\newcommand{\Mtaytwobar}{\bar{M}_{\mathrm{tay},q}}
\newcommand{\Ltayqbar}{\bar{L}_{\mathrm{tay},q}}
\newcommand{\Mtayjbar}{\bar{M}_{\cJ,\mathrm{tay}}}
\newcommand{\Lnabinfbar}{\bar{L}_{\nabla,\infty}}
\newcommand{\Lbar}{\bar{L}}
\newcommand{\mubar}{\bar{\mu}}

\section{Formal Analysis}\label{app:formal_analysis}

\subsection{Organization of the Appendix}
First, we begin with an outline of \Cref{app:formal_analysis}:
\begin{itemize}
	\item \Cref{sec:not_review_app_A} reviews essential notation. 
	\item \Cref{sec:main_result} gives a restatement of our main result, \Cref{thm:main_body}, as \Cref{thm:main_asymptotic_thm}.
\end{itemize}
The rest of \Cref{app:formal_analysis} carries out the proof of \Cref{thm:main_asymptotic_thm}. Speficially,
\begin{itemize}
	\item \Cref{sec:app_a_prob_params} defines numerous problem parameters on which our arguments depend. 
	\item \Cref{sec:grad_disc_main} proves \Cref{cor:grad_test_extra}, a precise statement of \Cref{prop:confirm_eps_stat} in the main text. It does so  via an intermediate result, \Cref{prop:grad_disc}, which bounds the $\cL_{\infty}$ difference between the continuous-time gradient, and the imagine of the discrete-time gradient under the continuous-time inclusion map $\istep(\cdot)$.
	\item \Cref{app:taylor_exp_summary} states key results based on Taylor expansions of dynamics around their linearizations, and norms of various derivative-like quantities.
	\item \Cref{app:est_errs} contains the main statements of the various estimation guarantees, notably, the recovery of nominal trajectories, Markov operators, discretized gradients, and linearized transition matrices $(\bAkpi,\bBkpi)$. 
	\item \Cref{app:desc_stab} leverages the previous section to demonstrate (a) a certain descent condition holds for each gradient step and (b) that sufficiently accurate estimates of transition matrices lead to the synthesis of gains for which the corresponding policies have bounded stability moduli. 
	\item Finally, \Cref{app:formal_conclude} concludes the proof, as well as states a more granular guarantee in terms of specific problem parameters and not general $\Bigohst$ notation. 
\end{itemize}
The rest of \Cref{part:analysis} of the Appendix provides the proofs of constituent results. Specifically,
\begin{itemize}
	\item \Cref{sec:disucssion_app} presents various discussion of main results, as well as gesturing to extensions. Specifcally, \Cref{sec:JSP_justification_exp_gap} describes the exponential gap between $\FOS$s of $\cJ_T$ and $\JSP$s, and \Cref{app:westenbroek_app} explains the consequences of combining our result with \cite{westenbroek2021stability}. We discuss how to implement a projection step to ensure \Cref{cond:pi_cond} in \Cref{sec:feasibility}. Finally, we discuss extensions to an oracle with process noise in \Cref{sec:process_noise}. 
	\item \Cref{app:jac_lins} presents various computations of Jacobian linearizations, establishing that they do accurately capture first-order expansions.
	\item \Cref{app:taylor_exp} proves all the Taylor-expansion like results listed in \Cref{app:taylor_exp_summary}. 
	\item \Cref{app:estimation} proves all the estimation-error bounds in \Cref{app:est_errs}.
	\item \Cref{append:certainty_equiv} provides a general certainty-equivalence and Lyapunov stability perturbation results for time-varying, discrete-time linear systems, in the regime that naturally arises when the state matrices are derived from discretizations of continuous-time dynamics.
	\item \Cref{app:ce} instantiates the bounds in \Cref{append:certainty_equiv} to show that the gains synthesized by \Cref{alg:est_markov} do indeed lead to policies with bounded stability modulus. 
	\item \Cref{app:opt_proofs} contains the proofs of optimization-related results: the proof of the descent lemma (\Cref{lem:descent_lem} (in \Cref{sec:lem:descent_lem}) and the proof of the conversion between stationary points and $\JSP$s,  \Cref{prop:Jpijac} (in \Cref{sec:proof:Jpijac})
	\item Finally, \Cref{app:dt_args} contains various time-discretization arguments, and in particular establishes the aforementiond \Cref{prop:grad_disc} from \Cref{sec:grad_disc_main}. 
\end{itemize}

\subsection{Notation Review}\label{sec:not_review_app_A}

In this section, we review our basic notation. 

\paragraph{Dynamics.} Recall the nominal system dynamics are given by
	\begin{align*}
	\tfrac{\rmd}{\rmd t} \bx(t \mid \bu) = \fdyn( \bx(t \mid \bu), \bu(t)), ~~ \bx(0 \mid \bu) = \xiinit.
	\end{align*}
	We recall the definition of various stabilized dynamics.

	\defnstabdyn*

	\defnstabtrajs*

\paragraph{Linearizations.} Next, we recall the definition of the various linearizations.
	\defoldynamics*
	\defncllinearizations*
We also recall the definition of stationary policies and $\JSP$s.
\defnfos*
	\defJSP*

\paragraph{Problem Constants.} We recall the dynamics-constants $\KF,\LF,\MF$ defined in \Cref{asm:max_dyn_final}, $\KQ,\LQ,\MQ$ in \Cref{asm:cost_asm}, the strong-convexity parameter $\alpha$ in \Cref{asm:convexity}, the controllability parameters $\tcont,\nucont$ from \Cref{asm:ctr}, with $\kcont := \tcont/\step$, and the Riccati parameter $\LFP$ from \Cref{asm:LFP}. Finally, we recall the feasibility radius from \Cref{cond:feasbility}. We also recall
\defnfeas*

\paragraph{Gradient and Cost Shorthands.} Notably, we bound out the following shorthand for gradients and costs:
\begin{align}
\nabla \cJ_T(\pi) := \nabla_{\bu} \cJ_T^\pi(\bu)\big{|}_{\bu = \upi}, \quad \nabla \Jdisc(\pi):= \nabla_{\buvec} \Jpidisc(\buvec)\big{|}_{\buvec = \bukpi[1:K]}, \quad \Jdisc(\pi):=  \Jpidisc(\bukpi[1:K]).
\end{align}

\subsection{Full Statement of Main Result}\label{sec:main_result}
	The following is a slightly more general statement of \Cref{thm:main_body}, which implies \Cref{thm:main_body} for appropriate choice of $\eta  \gets \frac{1}{c_1}\min\left\{\frac{1}{\sqrt{T}},1\right\}$, $\sigw \gets (\sigorac^2 \iota(\delta)/N)^{\frac{1}{4}}$, and with the simplifications $\step \le 1 \le T$.
	\begin{theorem}\label{thm:main_asymptotic_thm} Fix $\delta \in (0,1)$,  define $\iota(\delta) := \log \frac{24 T^2\nfin\max\{\dimx,\dimu\}}{\step^2\delta}$ and $\errnot(\delta) := \sqrt{\iota(\delta)/N}$, where $N$ is the sample size, and suppose we select $\sigw = c(\sigorac^2 \iota(\delta)/N)^{\frac{1}{4}}$ for any $c \in [\frac{1}{\Bigohst[1]},\Bigohst[1]]$. Then, there exists constants $c_1,c_2,\dots,c_5 = \Bigohst[1]$ depending on $c$ such that the following holds. Suppose that 
	\begin{equation}\label{eq:all_asymptotic_conditions}
	\begin{aligned}
	\eta \le \frac{1}{c_1}\min\left\{\frac{1}{\sqrt{T}},1\right\}, \quad \step \le \frac{1}{c_2} \quad N \ge  c_3\iota(\delta) \max\left\{1,\frac{T^2}{\step^2}, \frac{1}{\step^4}, T, \sigorac^2\frac{T^4}{\step^2}, \frac{\sigorac^2}{\step^8}\right\}.
	\end{aligned}
	\end{equation}
	Then, with probability $1-\delta$, if \Cref{cond:feasbility} and all listed Assumptions hold,
	\begin{itemize}
		\item[(a)] For all $n \in [\nfin]$, and $\pi' \in \{\pi^{(n)},\tilde{\pi}^{(n)}\}$, $\mu_{\pi',\star} \le 8\LFP$ and $L_{\pi'} \le 6\max\{1,\LF\}\LFP$.
		\item[(a)] For $\pi = \pi^{(\nout)}$, $\pi$ is $\epsilon$-stationary where 
		\begin{align*}
		\epsilon^2 = c_4 T\left(\step^2 +  \frac{1}{\eta} \left(\frac{1}{\nfin} +\left(\eta T^2+  \frac{T^2}{\step^2}\right)\left(\frac{\iota(\delta)^2}{N} + \sigorac \sqrt{\frac{\iota(\delta)}{N}}\right) + \sigorac^2 \frac{\iota(\delta)^2}{N}\right)\right).
		\end{align*}
		\item[(c)] For $\pi = \pi^{(\nout)}$, $\upi$ is an $\epsilon'$-$\JSP$, where $\epsilon' = c_5\frac{\epsilon^2}{\alpha}$. 
	\end{itemize}
	\end{theorem}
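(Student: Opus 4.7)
The plan is to prove (a)--(c) by an induction over $n \in [\nfin]$ that simultaneously maintains the stability invariant $\mu_{\pi^{(n)},\star} \le 4\LFP$ and the gain bound $L_{\pi^{(n)}} \le \Bigohst[1]$, and embeds this invariant inside a nonconvex smooth-descent analysis of the discretized cost $\Jdisc$. First I would construct the high-probability event, of mass $\ge 1-\delta$, on which the estimation guarantees of \Cref{app:est_errs} hold uniformly across all $\nfin$ rounds: the Markov operator estimates satisfy $\|\bhatPsi{j}{k} - \Psicldisc{j,k}\| \le \errpsi(\delta)$, the nominal trajectory estimates $\bhatxk$ are close to $\bxkpi$ in $\ell_\infty$, the gradient estimates satisfy $\max_k \|\bnabhatkn - (\nabla\Jdisc(\pi^{(n)}))_k\| \le T\Bigohpi[\errpsi(\delta)]$, and the system matrix estimates satisfy $\|\bhatAk - \bAkpi\| \vee \|\bhatBk - \bBkpi\| \le \Bigohpi[\errpsi(\delta)/\step]$ for all $k \ge k_0$. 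The choice $\sigw = c(\sigorac^2\iota(\delta)/N)^{1/4}$ balances measurement variance against the second-order Taylor bias controlled by \Cref{prop:tay_ex_body}, so that $\errpsi(\delta) = \Bigohpi[(\sigorac^2\iota(\delta)/N)^{1/4}]$; the lower bound \eqref{eq:all_asymptotic_conditions} then renders $\errpsi(\delta)/\step$ small enough to trigger the certainty-equivalence bounds below.

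The inductive step splits in two. Given the invariant at round $n$, I would first show that the intermediate policy $\tilde{\pi}^{(n)}$ (same gains as $\pin$, inputs $\bunpl$) satisfies $\mu_{\tilde{\pi}^{(n)},\star} \le 2\mu_{\pin,\star} \le 8\LFP$. The ingredients are the stability-perturbation statement accompanying \Cref{prop:tay_ex_body} (small $\ell_\infty$ input perturbations at most double the Lyapunov modulus), the gradient-accuracy bound above, and the step-size choice $\eta \le 1/(c_1\sqrt{T})$, which together ensure the $\ell_\infty$ deviation $\max_k\|\buknpl - \bukn\|$ is bounded by $1/\Bigohpi[T]$. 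Second, I would close the induction by analyzing $\estgains$ via the time-varying discretized certainty-equivalence theory of \Cref{append:certainty_equiv}: the window $k_0 \ge \kcont+2$ renders $\cgramhatin$ well-conditioned on the good event, so the Riccati recursion of \Cref{alg:gain_est} run on $(\bhatAk,\bhatBk)$ synthesizes gains that stabilize the true closed-loop linearization and yield $\mu_{\pi^{(n+1)},\star} \le 4\LFP$ and $L_{\pi^{(n+1)}} \le 6\max\{1,\LF\}\LFP$. This closes the induction and establishes part (a).

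For (b) I would apply the descent lemma (\Cref{lem:descent_lem}) at each round; smoothness of $\Jpidisc$ together with the gradient-estimation error and the step-size choice gives
\[
\Jdisc(\pi^{(n+1)}) \le \Jdisc(\pi^{(n)}) - \tfrac{\eta}{2}\|\nabla\Jdisc(\pi^{(n)})\|_{\ell_2}^{\,2} + \eta\cdot\Bigohst[T^2\errpsi(\delta)^2/\step^2] + \Bigohst[\eta^2 T^2].
\]
Telescoping over $n \in [\nfin]$ and using $0 \le \Jdisc \le \Bigohst[T]$ yields an average-squared-gradient bound; since $\nout$ is chosen to minimize $\|\bnabhatkn\|$, transferring this back to the true gradient via the estimation error gives an $\ell_2$-bound on $\|\nabla\Jdisc(\pi^{(\nout)})\|_{\ell_2}$. \Cref{prop:confirm_eps_stat} then lifts this to the continuous-time norm $\|\nabla\cJ_T(\pi)\|_{\ltwou}$ at the price of an additive $\Bigohpi[\step\sqrt{T}]$ discretization term, producing the stated $\epsilon^2$. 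Part (c) is immediate from part (b) and \Cref{prop:Jpijac}, which under the feasibility and step-size assumptions converts $\epsilon$-stationarity into an $O(\epsilon^2/\alpha)$-$\JSP$; the requirement $\step \le 1/(16\Lpi\LF)$ is absorbed into the condition $\step \le 1/c_2$ once $\Lpi = \Bigohst[1]$ from part (a).

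The hard part is closing the stability invariant. Both the Taylor-remainder control that underpins the gradient-step perturbation argument and the certainty-equivalence Riccati perturbation bound are sensitive to constants: the estimation budget $\errpsi(\delta)/\step$ must be small enough to (i) keep the estimated system matrices close to the truth uniformly in $k \in [k_0,K]$ and (ii) guarantee that the Riccati cost-to-go under the synthesized gains stays bounded by the absolute constant $4\LFP$, rather than by something with a multiplicative factor that would compound over $\nfin$ iterations. Achieving a constant, iteration-independent bound in a time-varying, discretized setting is what forces the machinery of \Cref{append:certainty_equiv}, and is the technical core of the proof; once it is in place, the smooth-descent step, the estimation analysis, and the continuous-time discretization via \Cref{prop:confirm_eps_stat} and \Cref{prop:Jpijac} are routine.
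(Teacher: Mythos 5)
Your proposal follows essentially the same route as the paper's proof: the same inductive stability invariant ($\mu_{\pi^{(n)},\star}\le 4\LFP$, doubled to $8\LFP$ for $\tilde\pi^{(n)}$ by the input-perturbation stability lemma and reset by the certainty-equivalence Riccati analysis), the same uniform good estimation event, the same telescoped descent argument with the $\argmin$ choice of $\nout$, and the same final conversions via \Cref{prop:confirm_eps_stat} and \Cref{prop:Jpijac}. The argument is correct as outlined, with only bookkeeping-level differences in how the descent error terms are written.
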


\subsection{Problem Parameters}\label{sec:app_a_prob_params}
	In this section, we provide all definitions of various problem paramaters. The notation is extensive, but we maintain the following conventions:
	\begin{enumerate}
		\item $\mu_{(\cdot)}$ refers to upper bounds on Lyapunov operators, $\kappa_{(\cdot)}$ to upper bounds on zero-order terms (e.g. $\|\fdyn(x,u)\|$) or magnitudes of transition operators,
		 $M_{(\cdot)}$ to bounds on second-order derivatives, $L_{(\cdot)}$ to bounds on first-order derivatives, $B_{(\cdot)}$ to upper bounds on radii,  $\step_{(\cdot)}$ to step sizes, $\mathrm{Err}_{(\cdot)}$ to error terms. 
		 \item $q \in \{1,2,\infty\}$ corresponds to $\ell_{q}$ norms
		 \item Subscripts $\mathrm{tay}$ denote relevance to Taylor expansions of the dynamics.
		\item Terms with have a subscript $\pi$ hide dependence on $\Lpi$, $\Kpist$ and $\Kpiq$ for $q \in \{1,2,\infty\}$
	\end{enumerate}

	\begin{remark}[Reminder on Asymptotic Notation] We let $\Bigohst[x]$ denote a term which suppresses polynomial dependence on all the constants in \Cref{asm:max_dyn_final,asm:cost_asm}, as well as $\LFP$ in \Cref{asm:LFP}, and $\nucont$, $t_0 \ge  \tcont$ and $e^{\LF t_0} \ge e^{\LF \tcont}$, where $t_0 = \step k_0$,  and $\nucont,\tcont$ are given in \Cref{asm:ctr}. We let $\Bigohpi[x]$ suppress all of these constants, as well as polynomials in $\Lpi$ and $\Kpist$.
	\end{remark}

	\subsubsection{Stability Constants}
	We begin by recalling the primary constants controlling the stability of a policy $\pi$.
	\picond*
	It is more convenient to prove bounds in terms of the following three quantity, which are defined in terms of the magnitudes of the closed-loop transition operators. 
	\begin{restatable}[Norms of $\pi$]{definition}{pinorms}\label{defn:pinorms} 
	We define  the constants $\Kpiinf := \max_{1 \le j \le k \le K+1}\|\Phicldisc{k,j}\|$,  and 
	\begin{align*}
	\Kpione &:=  \max_{k \in [K+1]}\step \left(\sum_{j=1}^{k}\|\Phicldisc{k,j}\| \vee \sum_{j=k}^{K+1}\|\Phicldisc{j,k}\|\right)\\
	\Kpitwo^2 &:= \max_{k\in [K+1]}\step \left(\sum_{j=1}^{k}\|\Phicldisc{k,j}\|^2 \vee  \sum_{j=k}^{K+1}\|\Phicldisc{j,k}\|^2\right)
	\end{align*}
	We also define the following upper bounds on these quantities:
	\begin{align*}
	\Kinf(\mu,L) &:= \sqrt{\max\{1,6\LF L\}\mu}\exp(t_0 \LF)\\
	\Ktwo(\mu,L) &:= \max\{1,6\LF L\}\mu\left(t_0\exp(2t_0\LF) + \mu\right)\\
	\Kone(\mu,L) &:= \sqrt{ \max\{1,6\LF L\}\mu}\left(t_0\exp(t_0\LF) + 2\mu\right)
	\end{align*}
	\end{restatable}

	The following lemma is proven in \Cref{sec:proof:lem:Kpi_bounds}, and shows that shows that each of the above terms is $\Bigohpi[1]$.
	\begin{lemma}\label{lem:Kpi_bounds} Let $\pi$ be any policy. Recall $t_0 = \step k_0$ Then, as long as $\step \le 1/6\LF\Lpi$, 
	\begin{align*}
	\mu_{\pi,q} \le \mu_{q}(\Kpist,L_{\pi}) = \Bigohpi[1].
	\end{align*} 
	\end{lemma}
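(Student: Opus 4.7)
The plan is to bound $\Kpiinf$, $\Kpitwo$, and $\Kpione$ separately, using (a) Gronwall-derived bounds on the one-step closed-loop map, (b) the unrolled Lyapunov identity for $\Lampik$, and (c) the fact that feedback gains vanish on the pre-controllability window $[1,k_0]$ for algorithm-constructed policies.

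As a preparatory step, I would apply Gronwall's inequality to $\tfrac{\rmd}{\rmd s}\Phiolpi(s,t) = \Api(s)\Phiolpi(s,t)$. Together with $\|\Api\|, \|\Bpi\| \le \LF$ from \Cref{asm:max_dyn_final}, this gives $\|\bAkpi\| \le e^{\LF\step}$ and $\|\bBkpi\| \le \LF \step\, e^{\LF \step}$, hence $\|\bAclkpi\| \le e^{\LF \step}(1 + \LF \step L_\pi)$. The step-size restriction $\step \le 1/(6\LF L_\pi)$ implies $\|\bAclkpi\|^2 \le e^{c \kappa \step}$ for a universal $c$ and $\kappa := \max\{1, 6\LF L_\pi\}$; in particular, on any window of indices of length $W \le 1/(c \kappa \step)$, the map $j \mapsto \|\Phicldisc{j,k} v\|^2$ varies by at most a universal constant factor.

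For $\Kpiinf$, I would unroll the Lyapunov recursion into $\Lampik = (\Phicldisc{K+1,k})^\top \Phicldisc{K+1,k} + \step \sum_{j=k}^K (\Phicldisc{j,k})^\top \Phicldisc{j,k}$, so that for $k \ge k_0$ and any unit $v$, $\step \sum_j \|\Phicldisc{j,k}v\|^2 \le \Kpist$. The naive pointwise extraction loses a factor of $1/\step$; coupling with the window smoothness from the preparatory step promotes it to $\|\Phicldisc{j,k}\|^2 \le \Bigohpi[\kappa \Kpist]$, because the maximum of a nonnegative sequence on a window of length $\sim 1/(\kappa \step)$ is within a constant of its average, which is bounded by the Lyapunov sum. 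For indices $j \in [1,k_0]$, I use that $\bKkpi = 0$ there to get $\|\Phicldisc{k_0,j}\| \le e^{\LF(k_0-j)\step} \le e^{\LF t_0}$, and composing yields $\|\Phicldisc{k,j}\| \le \Bigohpi[\sqrt{\kappa\Kpist}] \cdot e^{\LF t_0}$, matching $\Kinf(\Kpist, L_\pi)$.

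For $\Kpione$ and $\Kpitwo$, I would split each inner sum at $k_0$. The pre-$k_0$ segment $\sum_{j \le k_0} \|\Phicldisc{k,j}\|^q \step$ factors as $\|\Phicldisc{k,k_0}\|^q \sum_{j \le k_0}\|\Phicldisc{k_0,j}\|^q \step$, whose outer factor is controlled by the previous $L^\infty$ bound and whose inner series is a geometric sum bounded by $t_0 e^{q\LF t_0}$, producing the $t_0 e^{q\LF t_0}$ terms in $\Kone$ and $\Ktwo$. The forward post-$k_0$ sum $\sum_{j \ge k} \|\Phicldisc{j,k}\|^q \step$ is controlled directly by the Lyapunov via a trace inequality, with the dimension factor absorbed into $\Bigohpi[1]$. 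The backward post-$k_0$ sum is handled by combining the $L^\infty$ bound $\|\Phicldisc{k,j}\|^2 \le \Bigohpi[\kappa\Kpist]$ (used for small $k-j$) with the geometric decay $\bAclkpi^\top\Lampik[k+1]\bAclkpi \preceq (1-\step/\Kpist)\Lampik$ inherited from $\Lampik \preceq \Kpist\eye$ (used for large $k-j$); interpolating between the two regimes yields the $\Kpist^{q/2+1}$ scaling appearing in $\Kone$ and $\Ktwo$.

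The main obstacle will be the sliding-window $L^\infty$ bound: removing the parasitic $1/\step$ factor from the naive Lyapunov extraction requires balancing the window length exactly against the one-step growth rate so that only $\kappa = \max\{1, 6\LF L_\pi\}$ enters, and $L_\pi$ does not appear in any exponential. A subordinate subtlety is the post-$k_0$ backward sum, since the Lyapunov $\Lampik$ directly controls forward transitions but backward ones only through the induced geometric rate; interpolating between the $L^\infty$ and the decay bounds is precisely what recovers the $\Kpist^2$ scaling without an explicit $T$ dependence.
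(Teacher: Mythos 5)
Your proposal is correct in its essentials and shares the paper's skeleton: one-step bounds on $\bAclkpi$ via Gronwall, a Lyapunov-based pointwise bound on $\|\Phicldisc{k,j}\|$ for $j \ge k_0$, the $e^{\LF t_0}$ factor from the vanishing gains on $[1,k_0]$, and a split of the $\ell_1/\ell_2$ sums at $k_0$. Two steps differ from the paper, and both are worth comparing. First, to extract the $\ell_\infty$ bound you use a sliding-window max-versus-average argument on the unrolled sum $\step\sum_{j\ge k}\|\Phicldisc{j,k}v\|^2 \le \Kpist$; the paper instead proves (via \Cref{lem:Lyap_lb_disc}) that $\lambda_{\min}(\Lampik) \ge \min\{1,\tfrac{1}{2\kappa}\}$ and that $\Phicldisc{k,j}^\top\Lampik\Phicldisc{k,j} \preceq (1-\step/\Kpist)^{k-j}\Lampik[j]$, which yields $\|\Phicldisc{k,j}\|^2 \le 2\kappa\Kpist(1-\step/\Kpist)^{k-j}$ in one stroke. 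Your window argument is valid (the forward one-step lower bound $\|\Phicldisc{j,k}v\| \ge e^{-c\kappa\step(j^*-j)}\|\Phicldisc{j^*,k}v\|$ over a window of length $\sim 1/(\kappa\step)$ does deliver $\Bigohpi[\kappa\Kpist]$ with no parasitic $1/\step$), but since you must separately derive the geometric decay for the backward sums anyway — your ``interpolation'' is exactly the paper's decay bound — the window argument buys nothing and could be dropped. Second, and more substantively, your trace-inequality bound on the forward sums $\step\sum_{j\ge k}\|\Phicldisc{j,k}\|^q$ introduces a factor of $\dimx$ (since $\|\cdot\|_{\op}^2 \le \trace(\cdot^\top\cdot)$ is needed to commute the sup over $v$ past the sum). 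This is harmless for the conclusion $\Bigohpi[1]$, which hides polynomial dimension dependence, but it does not reproduce the dimension-free explicit constants $\Kone(\mu,L),\Ktwo(\mu,L)$ of \Cref{defn:pinorms} that the lemma actually asserts; the paper avoids the dimension factor by summing the geometric series $\step\sum_n(1-\step/\Kpist)^n = \Kpist$ from the pointwise decay bound, which is the cleaner route for both directions of the sum.
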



	\subsubsection{Discretization Step Magnitudes}
	Next, we introduce various maximal discretization step sizes for which our discrete-time dynamics are sufficiently faithful to the continuous ones. The first is a general condition for the dynamics to be ``close'', the second is useful for closeness of solutions of Ricatti equations,  the third for the discrete-time dynamics to admit useful Taylor expansions, and  the fourth for discrete-time controllability. We note that the first two do not depend on $\pi$, while the second two do.
	\begin{definition}[Discretization Sizes]\label{defn:step_sizes} We define
	\begin{align*}
	\stepdyn &:= \frac{1}{4\LF}\\
	\stepric  &:= \frac{1}{4\LFP^2\left(3\MF \KF\LFP\LF +  13\LF^2(1+\LF\LFP)^2 \right)}\\
	\steptaypi &:= \min\left\{\frac{1}{16\LF \Lpi},\frac{1}{8\KF}\right\} \le \stepdyn.\\
	\stepctrlpi&:= \frac{\nucont}{8\Lpi^2\Kpi^2 \gamcont^3\exp(2\gamcont)\left( \KF\MF + 2\LF^2\right)}, \quad \gamcont:= \max\{1,\LF \tcont\}
	\end{align*} 
	We note that $\stepdyn,\stepric = 1/\Bigohst[1]$ and $\steptaypi,\stepctrlpi = 1/\Bigohpi[1]$.
	\end{definition}

	\newcommand{\Bstabpi}{B_{\mathrm{stab},\pi}}

	\subsubsection{Taylor Expansion Constants.}
	We now define the relevant constants in terms of which we bound our taylor expansions. 
	\begin{definition}[Taylor Expansion Constants, Policy Dependent]\label{defn:taylor_expansion_constants} We define $\Ltaypiinf  = 2\LF \Kpione$, $\Ltaypitwo := 2\LF\Kpitwo$, and 
	\begin{align*}
	\Mtaypitwo &:= 8\MF( \Kpiinf + 10\Lpi^2\LF^2\Kpitwo^2\Kpione)\\
	\Mtaypiinf &:= 8\MF(\Kpione + 10\Lpi^2\LF^2\Kpione^3)\\
	\Btaypitwo &= \min\left\{\frac{1}{\sqrt{40\MF\Lpi^2\Kpione\Mtaypitwo}} ,\frac{\LF\Kpitwo}{2\Mtaypitwo},\frac{\Rfeas}{16\Lpi\LF\Kpitwo}\right\} \\
	\Btaypiinf &= \min\left\{\frac{1}{40\Lpi^2\Kpione\Mtaypiinf}, \frac{\LF\Kpione}{2\Mtaypiinf}, \frac{\Rfeas}{16\Lpi\LF\Kpione}\right\} 
	\end{align*}
	We also define
	\begin{align*}
	\Mtayjpi &:= 2\MQ\LF^2 \Kpitwo^2(1+3\Lpi^2 T) \Mtaypitwo + \LQ(1+2\Lpi T)\Mtaypitwo + 2\Lpi \LQ,\\
	\Bstabpi &:=  \left(\max\{6,36\LF\Lpi\}\Kpist \cdot 12T\MF\Lpi  (1+\LF\Kpi) B_{\infty}\right)^{-1}\\
	\Lnabpiinf&:=  \LQ(1 + \frac{3\LF}{2}\Kpiinf + 3\Lpi \Kpione)
	\end{align*}
	\end{definition}
	The following is a consequence of \Cref{lem:Kpi_bounds}.
	\begin{lemma}\label{lem:Kpi_bound_taylor} By \Cref{lem:Kpi_bounds}, $\Mtaypitwo,\Mtaypiinf,\Lnabpiinf,\Ltaypiq = \Bigohpi[1]$, $\Btaypitwo,\Btaypiinf = 1/\Bigohpi[1]$, $\Mtayjpi = T\cdot\Bigohpi[1]$, and $\Bstabpi = \frac{1}{T} \cdot \Bigohpi[1]$. 
	\end{lemma}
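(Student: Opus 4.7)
The proof is essentially a bookkeeping exercise: every constant in \Cref{defn:taylor_expansion_constants} is built as an explicit polynomial in $\LF$, $\MF$, $\KF$, $\LQ$, $\MQ$, $\Rfeas$, $\Lpi$, $\Kpist$, and the norms $\Kpione,\Kpitwo,\Kpiinf$, so the task reduces to certifying that each of these primitives is either $\Bigohst[1]$ or $\Bigohpi[1]$, and then counting how the $T$-dependent factors accumulate.

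The plan is as follows. First, I would invoke \Cref{asm:max_dyn_final} and \Cref{asm:cost_asm} to record that $\LF,\MF,\KF,\LQ,\MQ = \Bigohst[1]$, and invoke the definition of $\Lpi$ (\Cref{defn:pigains}) together with $\Kpist$ (\Cref{cond:pi_cond}), both of which are $\Bigohpi[1]$ by convention. Then, applying \Cref{lem:Kpi_bounds} under the step-size hypothesis $\step\le 1/6\LF \Lpi$ (which is implied by $\step\le \steptaypi$, and the proof would note the assumption), I would upgrade this to $\Kpiinf,\Kpione,\Kpitwo = \Bigohpi[1]$, since $\mu_q(\Kpist,\Lpi) = \Bigohpi[1]$ by inspection of the formulas in \Cref{defn:pinorms}.

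With this table of primitives in hand, the verifications are one-liners. For $\Ltaypiinf = 2\LF \Kpione$ and $\Ltaypitwo = 2\LF\Kpitwo$, the product of $\Bigohst[1]$ and $\Bigohpi[1]$ is $\Bigohpi[1]$. For $\Mtaypitwo$ and $\Mtaypiinf$, each summand is a monomial in $\MF,\LF,\Lpi,\Kpiq$, so both are $\Bigohpi[1]$. For $\Lnabpiinf = \LQ(1+\tfrac{3}{2}\LF\Kpiinf+3\Lpi\Kpione)$, the same holds. For $\Btaypitwo$ and $\Btaypiinf$, each argument of the minimum is the reciprocal of a polynomial in the above $\Bigohpi[1]$ quantities (note $\Rfeas = \Bigohst[1]$ lies in the \emph{numerator} of the third arguments, so only helps), hence each is of the form $1/\Bigohpi[1]$, and so is the minimum.

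The remaining two constants carry explicit $T$-dependence, which one tracks as follows. In $\Mtayjpi = 2\MQ\LF^2\Kpitwo^2(1+3\Lpi^2 T)\Mtaypitwo + \LQ(1+2\Lpi T)\Mtaypitwo+2\Lpi\LQ$, every coefficient of $T^0$ and $T^1$ is $\Bigohpi[1]$ by the bounds above, so $\Mtayjpi = T\cdot\Bigohpi[1]$. In $\Bstabpi^{-1} = \max\{6,36\LF\Lpi\}\Kpist\cdot 12T\MF\Lpi(1+\LF\Kpi)B_\infty$, every factor other than $T$ is a polynomial in $\Bigohpi[1]$ primitives (and any $B_\infty$ appearing here is absorbed into the $\Bigohpi$ convention per the context of how $\Bstabpi$ is used), so $\Bstabpi = \tfrac{1}{T}\Bigohpi[1]$. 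There is no real obstacle; the only care required is verifying that the hypothesis of \Cref{lem:Kpi_bounds} holds, which is ensured whenever the step size assumption $\step\le\steptaypi$ from the enclosing theorems is active.
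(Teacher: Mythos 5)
Your proposal is correct and matches the paper's (implicit) argument exactly: the paper states this lemma with no proof beyond "the following is a consequence of \Cref{lem:Kpi_bounds}," and the intended justification is precisely the bookkeeping you carry out — substituting $\Kpiinf,\Kpione,\Kpitwo = \Bigohpi[1]$ from \Cref{lem:Kpi_bounds} into the explicit polynomial formulas of \Cref{defn:taylor_expansion_constants} and tracking the single factor of $T$ in $\Mtayjpi$ and $\Bstabpi$. Your handling of the stray $B_{\infty}$ in the displayed definition of $\Bstabpi$ (treating it as absorbed, consistent with how $\Bstabpi$ is actually used in \Cref{prop:Kpi_bounds_stab}) is also the right reading.
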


	The first group of four constants arises in Taylor expansions of the dynamics, the fith in a Taylor expansion of the cost functional, and the sixth in controlling the stability of policies under changes to the input, and the last upper bounds the norm of the gradient. 

	\subsubsection{Estimation Error Terms.}
	Finally, we define the following error terms which arise in the errors of the extimated nominal trajectories, Markov operators, and gradients. Note that the first term has no dependence on $\pi$, while the latter two do.
	\begin{definition}[Error Terms]\label{defn:err_terms} Define $\iota(\delta) := \log \frac{24 T^2\nfin\max\{\dimx,\dimu\}}{\step^2\delta} = \log \frac{24 K^2\nfin\dst}{\delta}$, where $d_{\star} := \max\{\dimx,\dimu\}$. Further, define
	\begin{align*}
	 \errx(\delta) &:= \sigorac\sqrt{2\frac{\dst  \iota(\delta)}{N}}\\
	\errpsipi(\delta) &:=  \sqrt{\frac{\iota(\delta)}{N}}\left(\frac{2\sigorac}{\sigw}\dst^{3/2} + 8\Ltaypiinf \dst\right)+ 4\sigw\Mtaypitwo \dst^{3/2} = \Bigohpi[\sqrt{\frac{\iota(\delta)}{N}}(1+\frac{\sigorac}{\sigw} + \sigw]\\
	 \errnabpi(\delta) &:= \left(\LQ \errpsipi(\delta) + (1+\Kpiinf)\MQ \errx(\delta)\right)(1+2T\Lpi).\\
	\end{align*}
	We note that, in view of \Cref{lem:Kpi_bounds}, 

	\end{definition}

	By \Cref{lem:Kpi_bounds,lem:Kpi_bound_taylor}, we have
	\begin{lemma}\label{lem:err_term_asymptotics} Define $\errnot(\delta) = \sqrt{\frac{\iota(\delta)}{N}}$. Then,
	\begin{equation}\label{eq:err_term_sizes_untuned}
	\begin{aligned}
	\errx(\delta)&= \sigorac\sqrt{2\dst}\errnot(\delta) \le\Bigohstlr[\errnot(\delta)]\\
	\errpsi(\delta) &\le \Bigohpilr[\errnot(\delta)(1+\frac{\sigorac}{\sigw}) + \sigw]\\
	\errnabpi(\delta) &\le \Bigohpilr[T\left(\errnot(\delta)(1+\sigorac +\frac{\sigorac}{\sigw}) + \sigw\right)]. 
	\end{aligned}
	\end{equation}
	If we further tune $\sigw = c\sqrt{\sigorac \errnot(\delta)}$ for any $c \in [1/\Bigohst[1],\Bigohst[1]]$, then 
	\begin{equation}
	\begin{aligned}\label{eq:err_term_sizes_tuned}
	\errx(\delta)&\le \Bigohstlr[\sigorac\errnot(\delta)]\\
	\errpsi(\delta) &\le \Bigohpilr[\errnot(\delta) + \sqrt{\sigorac\errnot(\delta)} ]\\
	\errnabpi(\delta) &\le \Bigohpilr[T\left(\errnot(\delta) + \sqrt{\sigorac\errnot(\delta)}\right)]. 
	\end{aligned}
	\end{equation}
	\end{lemma}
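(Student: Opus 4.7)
The plan is to obtain the untuned bounds \eqref{eq:err_term_sizes_untuned} by mechanically simplifying the three definitions in \Cref{defn:err_terms} using the asymptotic estimates on $\Kpiq, \Ltaypiq, \Mtaypitwo, \Kpiinf$ already established in \Cref{lem:Kpi_bounds,lem:Kpi_bound_taylor}, and then to obtain the tuned bounds \eqref{eq:err_term_sizes_tuned} by plugging in the explicit choice $\sigw = c\sqrt{\sigorac\errnot(\delta)}$ and canceling.

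First I would handle $\errx(\delta)$: the definition gives $\errx(\delta) = \sigorac\sqrt{2\dst}\,\errnot(\delta)$ directly from $\errnot(\delta) = \sqrt{\iota(\delta)/N}$, and since the dimension $\dst = \max\{\dimx,\dimu\}$ is one of the parameters hidden inside $\Bigohstlr[\cdot]$, we immediately read off $\errx(\delta) \le \Bigohstlr[\sigorac\errnot(\delta)]$. For $\errpsipi(\delta)$, I would regroup the defining expression as
\begin{equation*}
\errpsipi(\delta) \;=\; \Bigl(\tfrac{2\sigorac}{\sigw}\dst^{3/2} + 8\Ltaypiinf\dst\Bigr)\errnot(\delta) \;+\; 4\sigw\Mtaypitwo\dst^{3/2},
\end{equation*}
and then invoke \Cref{lem:Kpi_bound_taylor} to absorb $\Ltaypiinf,\Mtaypitwo = \Bigohpi[1]$ and the constant dimension factor into the $\Bigohpilr[\cdot]$ symbol, yielding the second line of \eqref{eq:err_term_sizes_untuned}. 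Finally, for $\errnabpi(\delta)$, I would expand the definition and bound $\LQ = \Bigohstlr[1]$, $\MQ = \Bigohstlr[1]$, $\Kpiinf = \Bigohpilr[1]$, while the $(1+2T\Lpi)$ prefactor produces the leading $T$; combining with the previous bound on $\errpsipi(\delta)$ and the bound on $\errx(\delta)$ gives the third line of \eqref{eq:err_term_sizes_untuned}.

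For the tuned bounds \eqref{eq:err_term_sizes_tuned}, I would substitute $\sigw = c\sqrt{\sigorac\errnot(\delta)}$. The key identity is that
\begin{equation*}
\errnot(\delta)\cdot\tfrac{\sigorac}{\sigw} \;=\; \tfrac{1}{c}\sqrt{\sigorac\errnot(\delta)} \quad\text{and}\quad \sigw \;=\; c\sqrt{\sigorac\errnot(\delta)},
\end{equation*}
so both of the ``new'' contributions to $\errpsipi(\delta)$ are of order $\sqrt{\sigorac\errnot(\delta)}$, giving the middle line of \eqref{eq:err_term_sizes_tuned} after absorbing the constant $c$ into $\Bigohpilr[\cdot]$. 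Propagating this through $\errnabpi(\delta)$ just multiplies by a factor of $T$ (from $(1+2T\Lpi)$, with $\Lpi$ hidden in $\Bigohpilr[\cdot]$), and subsumes the extra $\sigorac\errnot(\delta)$ term from $\errx(\delta)$ into the larger $\sqrt{\sigorac\errnot(\delta)}$ term. The $\errx$ bound in \eqref{eq:err_term_sizes_tuned} is identical to the untuned case since the definition of $\errx$ does not involve $\sigw$.

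There is essentially no hard step: the only thing to be careful about is consistent bookkeeping of what is hidden in $\Bigohstlr[\cdot]$ versus $\Bigohpilr[\cdot]$, namely ensuring that every occurrence of $\Lpi, \Kpist, \Kpiq$ is routed through $\Bigohpilr[\cdot]$ while cost/dynamics regularity constants and dimension factors go through $\Bigohstlr[\cdot]$, and verifying that the scalar $c$ (bounded above and below by $\Bigohstlr[1]$-type constants) can be absorbed without breaking the asymptotic notation.
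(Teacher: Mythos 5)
Your proposal is correct and matches the paper's (implicit) argument exactly: the paper states this lemma as an immediate consequence of \Cref{defn:err_terms} together with \Cref{lem:Kpi_bounds,lem:Kpi_bound_taylor}, i.e., precisely the term-by-term substitution and absorption of $\Ltaypiinf,\Mtaypitwo,\Kpiinf,\Lpi$ into $\Bigohpilr[\cdot]$ and of $\dst,\LQ,\MQ$ into $\Bigohstlr[\cdot]$ that you describe, followed by the cancellation $\errnot(\delta)\cdot\sigorac/\sigw = c^{-1}\sqrt{\sigorac\errnot(\delta)}$ under the tuning. The only point worth being explicit about is the one you already flag: absorbing the $\sigorac\errnot(\delta)$ contribution from $\errx(\delta)$ into $\sqrt{\sigorac\errnot(\delta)}$ in the tuned bound uses $\sigorac\errnot(\delta)\le 1$, which holds in the sample-size regime the main theorem operates in.
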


	\subsection{Gradient Discretization}\label{sec:grad_disc_main}
	We begin by stating with the precise statement of \Cref{prop:confirm_eps_stat}, which relates norms of gradients of the discretized objective to that of the continuous-time one. We begin with the following proposition which bounds the difference between the continuous-time gradient, and a (normalized) embedding of the discrete-time gradient into continuous-time. We define the constant 
	\begin{align}
	\kgrad := \left((1+\LF) \MQ(1+\KF)  + \LQ (3\KF\MF + 8\LF^2 +\LF) \right) = \Bigohst{1} \label{eq:kgrad}
	\end{align}
	\begin{proposition}[Discretization of the Gradient]\label{prop:grad_disc} Let $\pi$ be feasible, and let $\nabtil\cJ_T(\pi) = \frac{1}{\step}\step(\nabla \Jdisc(\pi))$ is the continuous-time inclusion of the discrete-time gradient, normalized by $\step^{-1}$. Then, 
	\begin{align*}
	\sup_{t \in [0,T)} \|\nabla \cJ_T( \pi)(t)  - \tilde \nabla \cJ_T(\pi)(t)\| \le  \step e^{\step \LF}\max\{\Kpiinf,\Kpione,1\}\Lpi \kgrad,
	\end{align*}
	\end{proposition}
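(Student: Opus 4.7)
The plan is to derive closed-form (adjoint/Pontryagin) expressions for both $\nabla \cJ_T(\pi)(t)$ and the piecewise-constant $\tilde\nabla \cJ_T(\pi)(t)$, and then bound their pointwise difference on each subinterval $\cI_k$ by comparing the two representations term-by-term. First, by differentiating the stabilized cost $\cJ_T^\pi(\bbaru)$ with respect to perturbations at $\bbaru = \upi$, the standard variation-of-parameters/adjoint computation (carried out in full in \Cref{app:jac_lins}) yields, for $t \in [0,T)$,
\begin{align*}
\nabla \cJ_T(\pi)(t) &= Q_u(\xpi(t),\upi(t),t) + \Psiclpi(T,t)^\top V_x(\xpi(T)) \\
&\quad + \int_t^T \Psiclpi(s,t)^\top \bigl(Q_x(\xpi(s),\upi(s),s) + K^\pi(s)^\top Q_u(\xpi(s),\upi(s),s)\bigr)\,\rmd s,
\end{align*}
where $\Psiclpi(s,t) := \Phiclpi(s,t)\, B^\pi(t)$ and $K^\pi(s) := \bKkpi[k(s)]$ is the piecewise-constant feedback. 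Simultaneously, using \eqref{eq:bnabhatkn} with exact (unhatted) quantities, for $t \in \cI_k$ one has
\begin{align*}
\tilde\nabla \cJ_T(\pi)(t) &= Q_u(\bxk,\bukpi,t_k) + \Psicldisc{K+1,k}^\top V_x(\bxk[K+1]) \\
&\quad + \step \sum_{j=k+1}^K \Psicldisc{j,k}^\top \bigl(Q_x(\bxk[j],\bukpi[j],t_j) + (\bKkpi[j])^\top Q_u(\bxk[j],\bukpi[j],t_j)\bigr).
\end{align*}

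The two expressions are structurally identical, and the key observation is that the discrete operators exactly coincide with their continuous counterparts at grid points: $\Psicldisc{j,k} = \Psiclpi(t_j, t_k)$ by \Cref{defn:cl_linearizations}, and $\bxk[j] = \xpi(t_j)$, $\bukpi[j] = \upi(t_j)$ by construction. Writing $g(s) := Q_x + (K^\pi)^\top Q_u$ evaluated along $(\xpi(s),\upi(s),s)$, the pointwise difference on $\cI_k$ splits into three contributions: (a) the running-cost-gradient mismatch $\|Q_u(\xpi(t),\upi(t),t) - Q_u(\bxk,\bukpi,t_k)\|$, which is $O(\step)$ by $\MQ$-Hessian-boundedness combined with $\|\xpi(t)-\bxk\| \le \step \KF$ and $\upi$ being constant on $\cI_k$; (b) the within-step Markov-operator mismatch $\|\Psiclpi(s,t_k) - \Psiclpi(s,t)\| \le \|\Psiclpi(s,t)\|(e^{\step\LF}-1) \le \step\, e^{\step \LF}\LF\,\Kpiinf$, via a Gr\"onwall bound on the linearized dynamics over an interval of length $\le \step$; and (c) the Riemann-sum error $\bigl\| \int_t^T \Psiclpi(s,t)^\top g(s)\,\rmd s - \step \sum_{j=k+1}^K \Psiclpi(t_j,t)^\top g(t_j) \bigr\|$, bounded by $\step$ times the Lipschitz constant of the integrand, integrated against $\|\Psiclpi(s,t)\|$ (contributing the $\Kpione$ factor).

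Computing the Lipschitz constant of the integrand along the trajectory gives contributions of the form $\MQ(\KF+\LF)$ (from $x$-derivatives pulled through $Q_x$ and $Q_u$), $\LQ \LF$ (from the evolution of $\Psiclpi(s,t)$ with $s$), and an extra $\Lpi$ factor wherever $K^\pi$ appears; similarly, the Hessian-of-dynamics term $\MF \KF$ enters through $\partial_s \Psiclpi$ via $\partial_s A^\pi(s)$. Bundling these with the two $O(\step)$ pointwise terms, each error piece inherits a factor of $\max\{\Kpiinf,\Kpione,1\}$ (for operator norms and their time-integrals), a factor of $\Lpi$ (from the feedback gain in $K^\pi Q_u$), and a factor of $e^{\step\LF}$ (from the within-step exponential correction), so that the product assembles exactly into $\step\, e^{\step\LF}\, \max\{\Kpiinf,\Kpione,1\}\,\Lpi\,\kgrad$ with $\kgrad$ as in \eqref{eq:kgrad}. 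The main obstacle I anticipate is cleanly tracking these several multiplicative factors and verifying that the coefficients in front of $\MQ(1+\KF)$, $\LQ\KF\MF$, $\LQ\LF^2$, and $\LQ\LF$ tally to the exact integer constants $(1+\LF)$, $3$, $8$, and $1$ appearing in the definition of $\kgrad$; the analysis itself is otherwise a routine bookkeeping exercise combining smoothness of the cost with the continuous-to-discrete Markov-operator identification.
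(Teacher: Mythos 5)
Your overall strategy coincides with the paper's: write both $\nabla\cJ_T(\pi)(t)$ and $\tilde\nabla\cJ_T(\pi)(t)$ in closed adjoint form and compare term by term, splitting the error into a pointwise cost-gradient mismatch, a Markov-operator discretization error, and a Riemann-sum error (these are exactly the paper's $\Term_1$ through $\Term_{5,j}$ in \Cref{sec:prop:grad_disc}). However, two steps in your setup are incorrect. First, your closed form for the continuous closed-loop gradient is not the one the zero-order-hold feedback actually produces: because the gains act on the state sampled at the left grid point, the feedback contribution is $\int_{t_{k(t)+1}}^{T}\Psiclpi(t_{k(s)},t)^\top (\bKk[k(s)])^\top\Qupi(s)\,\rmd s$ (see \Cref{lem:grad_compute_ct}) --- the Markov operator is evaluated at $t_{k(s)}$, not at $s$, and the integral starts at $t_{k(t)+1}$, not at $t$, since the state at $t_{k(t)}$ does not depend on $\bu(t)$ for $t\ge t_{k(t)}$. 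Second, the identification $\Psicldisc{j,k}=\Psiclpi(t_j,t_k)$ is false: by \Cref{defn:cl_linearizations}, $\Psicldisc{j,k}=\Phicldisc{j,k+1}\bBkpi$ with $\bBkpi=\int_{t_k}^{t_{k+1}}\Phiolpi(t_{k+1},s)\Bpi(s)\,\rmd s$, so the discrete operator carries a factor of $\step$ and is an interval average of $\Phiolpi(t_{k+1},s)\Bpi(s)$, not a point evaluation. The correct comparison is between $\step^{-1}\Psicldisc{j,k}$ and $\Psiclpi(s,t)$, and their difference is an $O(\step)$ quantity that must itself be bounded; this is \Cref{lem:correct_Psiclpi_disc}, and it is where the $\KF\MF$ contribution to $\kgrad$ actually originates (from $\|\Bpi(t)-\Bpi(s)\|\le\step\KF\MF$, \Cref{lem:Bpi_Api}, i.e.\ variation of $B$ rather than of $A$ as you attribute it).

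Both errors are themselves of order $\step$, so they would not change the order of the final bound, but as written your argument compares the discrete gradient to the wrong continuous object and omits the discrete-versus-continuous Markov-operator error entirely, so it cannot produce the stated constant $\kgrad$ (a verification you explicitly defer). The repair is exactly the paper's route: use the correct closed-loop gradient formula from \Cref{lem:grad_compute_ct}, insert the $\step^{-1}$ normalization, and add a term bounding $\|\step^{-1}\Psicldisc{j,k}-\Psiclpi(s,t)\|$ alongside your items (a)--(c), plus the boundary term accounting for the un-matched piece of the integral over $[t,t_{k(t)+1})$ (the paper's $\Term_3$), which your decomposition also does not isolate.
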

	The above result is proven in \Cref{sec:prop:grad_disc}. By integrating, we see that $\|\nabla \cJ_T( \pi)  - \tilde \nabla \cJ_T( \pi)\|_{\ltwou} \le \sqrt{T} \step e^{\step \LF}\max\{\Kpiinf,\Kpione,1\}\Lpi \kgrad$, and thus the triangle inequality gives $\|\nabla \cJ_T( \pi)\|_{\ltwou} \le \|\nabtil \cJ_T( \pi)\|_{\ltwou} + \sqrt{T} \step e^{\step \LF}\max\{\Kpiinf,\Kpione,1\}\Lpi \kgrad$. We can see that for any $\buvec = \buk[1:K] \in \sfU$, $\|\buvec\|_{\ell_2}^2 = \sum_{k=1}^K\|\buk\|^2 = \frac{1}{\step}\int_{0}^T\|\buk[k(t)]\|^2 = \frac{1}{\step}\|\istep(\buvec)\|_{\ltwou}^2$. Hence, in particular, $\|\nabla \cJ_T( \pi)\|_{\ltwou} \le \frac{1}{\sqrt{\step}}\|\nabla \Jdisc( \pi)\|_{\ltwo} + \sqrt{T} \step e^{\step \LF}\max\{\Kpiinf,\Kpione,1\}\Lpi \kgrad$. From this, and from using \Cref{lem:Kpi_bounds} to bound $\Kpiinf, \Kpione = \Bigohpi[1]$, we obtain the following corollary, which is a precise statement of \Cref{prop:confirm_eps_stat}.
	\begin{corollary}\label{cor:grad_test_extra} Suppose $\pi$ is feasible. Then, recalling $\kgrad$ from \Cref{eq:kgrad}, 
	\begin{align*}
	\|\nabla \cJ_T( \pi)\|_{\ltwou} \le \frac{1}{\sqrt{\step}}\|\nabla \Jdisc( \pi)\|_{\ltwo} + \sqrt{T} \step e^{\step \LF}\max\{\Kpiinf,\Kpione,1\}\Lpi \kgrad.
	\end{align*}
	In particular, for $\step \le 1/4\LF$,
	\begin{align*}
	\|\nabla \cJ_T( \pi)\|_{\ltwou}  \le \frac{1}{\sqrt{\step}}\|\nabla \Jdisc( \pi)\|_{\ltwo} + \sqrt{T}\step\cdot\underbrace{2 \max\{\Kpiinf,\Kpione,1\}\Lpi \kgrad}_{=\Bigohpi[1]}.
	\end{align*}
	\end{corollary}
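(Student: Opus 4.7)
The plan is to derive \Cref{cor:grad_test_extra} directly from \Cref{prop:grad_disc} by converting the pointwise $\cL_\infty$-bound on the gradient discretization error into an $\cL_2$-bound on $[0,T]$, and then comparing the $\ltwou$-norm of the continuous embedding $\tilde{\nabla}\cJ_T(\pi) = \frac{1}{\step}\istep(\nabla \Jdisc(\pi))$ with the $\ell_2$-norm of the discrete gradient. There is no real mathematical obstacle here; the proof is essentially bookkeeping, but one must be careful with the $1/\step$ normalization in the definition of $\tilde{\nabla}\cJ_T(\pi)$.

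First, I would integrate the pointwise bound provided by \Cref{prop:grad_disc}. Since the integrand $\|\nabla \cJ_T(\pi)(t) - \tilde{\nabla}\cJ_T(\pi)(t)\|^2$ is dominated at every $t \in [0,T)$ by the square of the uniform bound, integration over $[0,T]$ yields
\begin{align*}
\|\nabla \cJ_T(\pi) - \tilde{\nabla}\cJ_T(\pi)\|_{\ltwou} \leq \sqrt{T}\cdot \step \, e^{\step \LF}\max\{\Kpiinf,\Kpione,1\}\Lpi \kgrad.
\end{align*}

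Second, I would evaluate $\|\tilde{\nabla}\cJ_T(\pi)\|_{\ltwou}$ explicitly. Because $\istep$ is the zero-order-hold inclusion, $\tilde{\nabla}\cJ_T(\pi)(t) = \frac{1}{\step}(\nabla \Jdisc(\pi))_{k(t)}$ is piecewise constant on the intervals $\cI_k$ of width $\step$, so
\begin{align*}
\|\tilde{\nabla}\cJ_T(\pi)\|_{\ltwou}^2 = \int_0^T \frac{1}{\step^2}\bigl\|(\nabla \Jdisc(\pi))_{k(t)}\bigr\|^2 \rmd t = \sum_{k=1}^{K}\frac{\step}{\step^2}\bigl\|(\nabla \Jdisc(\pi))_k\bigr\|^2 = \frac{1}{\step}\|\nabla \Jdisc(\pi)\|_{\ell_2}^2,
\end{align*}
hence $\|\tilde{\nabla}\cJ_T(\pi)\|_{\ltwou} = \frac{1}{\sqrt{\step}}\|\nabla \Jdisc(\pi)\|_{\ell_2}$.

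Combining the previous two displays via the triangle inequality $\|\nabla \cJ_T(\pi)\|_{\ltwou} \leq \|\tilde{\nabla}\cJ_T(\pi)\|_{\ltwou} + \|\nabla \cJ_T(\pi) - \tilde{\nabla}\cJ_T(\pi)\|_{\ltwou}$ immediately yields the first inequality of the corollary. For the final simplified bound, I would plug in the hypothesis $\step \le 1/(4\LF)$, which gives $e^{\step\LF} \leq e^{1/4} < 2$, upper-bounding the exponential factor by $2$ and absorbing it into the constant. The only step to audit carefully is the constant factor in the $\sum_{k} \step/\step^2 = 1/\step$ identity and the matching powers of $\step$ in $\tilde{\nabla}$, since a single off-by-one in normalization would propagate to a $\sqrt{\step}$ error in the final bound.
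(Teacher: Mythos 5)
Your proposal is correct and follows the same route as the paper: integrate the pointwise bound of \Cref{prop:grad_disc} to get the $\sqrt{T}\step$ error term, identify $\|\tilde{\nabla}\cJ_T(\pi)\|_{\ltwou} = \frac{1}{\sqrt{\step}}\|\nabla\Jdisc(\pi)\|_{\ell_2}$ via the zero-order-hold identity, apply the triangle inequality, and bound $e^{\step\LF}\le e^{1/4}\le 2$. The normalization bookkeeping you flag is handled correctly.
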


\subsection{Main Taylor Expansion Results}\label{app:taylor_exp_summary}
	
	We now state various bounds on Taylor-expansion like terms. All the following results are proven in \Cref{app:taylor_exp}. The first is a Taylor expansion of the dynamics (proof in \Cref{sec:prop:taylor_exp_dyn}).
	\begin{proposition}\label{prop:taylor_exp_dyn} Let $\pi$ be feasible, $\step \le \steptaypi$ Fix a $\buk[1:K] \in \bsfU$, and define the perturbation $\deluk[1:K]:= \buk[1:K] - \bukpi[1:K]$, and define
	\begin{align*}
	\Btwo := \sqrt{\step}\|\deluk[1:K]\|_{\ell_2}, \Binf := \max_{k}\|\deluk\|.
	\end{align*}
	Then, if $B_\infty \le \Rfeas/8$, and if for either $q \in \{2,\infty\}$, it holds that $B_q \le \Btaypiq$, then
	\begin{itemize}
		\item[(a)] The following bounds hold for all $k \in [K+1]$
	\begin{align*}
	\| \btilxpik\,(\buk[1:K]]) - \bxkpi - \sum_{j=1}^{k-1}\Psicldisc{k,j}\deluk[j] \| \le \Mtaypiq B_q^2, \quad \| \btilxpik\,(\buk[1:K]) - \bxkpi\|  \le \Ltaypiq B_q,
	\end{align*}
	\item[(b)] Moreover, for all $k \in [K+1]$ and $t \in [0,T]$, 
	\begin{align*}
	\max\{\|\btilxpik\,(\buk[1:K])\|, \|\btilupik\,(\buk[1:K]])\} \le \frac{3\Rfeas}{4}, and 
	\|\btilxpi(t \mid \buk[1:K])\| \le \Rfeas.
	\end{align*} 
	\end{itemize}
	\end{proposition}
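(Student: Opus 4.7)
\textbf{Proof Plan for \Cref{prop:taylor_exp_dyn}.} The strategy is a bootstrap argument combined with a discrete closed-loop Gronwall inequality. Let $\btilxk := \btilxpik(\buk[1:K])$ and define the candidate first-order term $\delta \btilxk := \sum_{j<k}\Psicldisc{k,j}\deluk[j]$, so that the residual is $\discfont{e}_k := \btilxk - \bxkpi - \delta\btilxk$. The first-order term is exactly what arises by integrating the closed-loop Jacobian-linearized dynamics with driving input $\deluk[j]$: one checks $\delta\btilxk[k+1] = \bAclkpi \delta\btilxk + \bBkpi \deluk + \lohpi[1]$ using the definition of $\bAclkpi, \bBkpi, \Psicldisc{k,j}$ in \Cref{defn:cl_linearizations}. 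The payoff of working with the closed-loop linearization (rather than the open-loop one) is that its transition operator $\Phicldisc{k,j}$ has norm bounded via the Lyapunov modulus $\Kpist$ through \Cref{lem:Kpi_bounds}, avoiding any direct $e^{\LF T}$ blow-up in the horizon.

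Next, I would expand the true perturbed update $\btilxk[k+1] - \bxkpi[k+1]$ via a second-order Taylor expansion of $\fdyn$ around $(\bxkpi,\bukpi)$, integrated over the interval $\cI_k$. The first-order part agrees with the linearized recursion modulo the closed-loop substitution $\btilupik = \buk + \bKkpi(\btilxk - \bxkpi)$, so subtracting $\delta\btilxk[k+1]$ yields a recursion of the form
\begin{align*}
\discfont{e}_{k+1} \eq \bAclkpi \discfont{e}_k + \discfont{r}_k,
\end{align*}
where $\discfont{r}_k$ is a quadratic-in-perturbation remainder bounded by $\MF$ times a square of the total input-plus-state deviation on $\cI_k$. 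This quadratic driving term is where the constants $\MF$, $\Lpi$ and $\KF$ entering $\Mtaypiq$ originate. Unrolling the recursion gives $\discfont{e}_k = \sum_{j<k} \Phicldisc{k,j+1} \discfont{r}_j$, whence $\|\discfont{e}_k\|$ is controlled by summing $\|\Phicldisc{k,j+1}\|\cdot\|\discfont{r}_j\|$. The $q\in\{2,\infty\}$ split arises naturally here: an $\ell_\infty$ bound on the perturbation gives $\sum_j \|\Phicldisc{k,j+1}\|\cdot\|\discfont{r}_j\| \lesssim \Kpione \cdot B_\infty^2 \cdot (\text{second-order constants})$, while Cauchy--Schwarz on the sum produces the $\Kpitwo^2 B_2^2$ bound; these are exactly the ingredients assembled into $\Mtaypiinf$ and $\Mtaypitwo$ in \Cref{defn:taylor_expansion_constants}.

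The main obstacle, and the reason a bootstrap is required, is that the second-order Taylor bound is only valid where $(\btilxk,\btilupik)$ lies in the feasible region and where the dynamics constants $\LF,\MF$ from \Cref{asm:max_dyn_final} apply. I would close this via a continuous-induction argument on $k$: assume inductively that $\|\btilxk - \bxkpi\| \le \Ltaypiq B_q$ and that $(2\btilxk, 2\btilupik)$ is feasible for all $j \le k$. The first-order bound follows from $\|\delta\btilxk\| \le \step \sum_j \|\Psicldisc{k,j}\|\cdot\|\deluk[j]\| \le \Ltaypiq B_q$ using the $\Kpiq$ definitions, and the second-order residual bound adds at most $\Mtaypiq B_q^2$. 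The choice of $\Btaypiq$ in \Cref{defn:taylor_expansion_constants} is tailored so that both $\Mtaypiq B_q^2 \le \frac{1}{2}\Ltaypiq B_q$ (keeping the second-order term a lower-order correction) and $\Ltaypiq B_q \le \Rfeas/8$ (keeping the state within feasibility with slack for the continuous interpolation), which is why each $\Btaypiq$ is a minimum of three terms. A parallel bound on $\btilupik$ uses feasibility of $\upi$ plus the bound $\|\bKkpi\|\le \Lpi$. Finally, for the continuous-time bound $\|\btilxpi(t\mid \buk[1:K])\| \le \Rfeas$, I would control $\sup_{t\in\cI_k}\|\btilxpi(t) - \btilxk\|$ using \Cref{asm:max_dyn_final} and $\step \le \steptaypi \le 1/(8\KF)$, which contributes at most $\step \KF \le \Rfeas/8$ on top of the $3\Rfeas/4$ bound from (b), closing the induction.
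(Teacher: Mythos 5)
Your proposal follows essentially the same architecture as the paper's proof: identify the first-order term with the closed-loop Jacobian linearization $\sum_{j<k}\Psicldisc{k,j}\deluk[j]$, derive a one-step recursion $\discfont{e}_{k+1} = \bAclkpi\discfont{e}_k + \discfont{r}_k$ with a quadratic remainder via a second-order Taylor expansion of $\fdyn$ on each interval $\cI_k$, unroll through the closed-loop transition operators using the $\Kpione/\Kpitwo$ norms to get the $\ell_\infty$ and $\ell_2$ cases, and close a bootstrap induction maintaining feasibility and the linear-order bound $\Ltaypiq B_q$. The paper executes the one-step recursion by comparing auxiliary continuous-time curves via Picard's lemma, but the decomposition, the role of each term in $\Btaypiq$, and the induction are the same as what you describe.
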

	Next, we provide a Taylor expansion of the discrete-time cost functional (proof in \Cref{sec:lem:taylor_expansion_of_cost}).
	\begin{lemma}\label{lem:taylor_expansion_of_cost} Consider the setting of \Cref{prop:taylor_exp_dyn}, and suppose $B_\infty \le \Rfeas/8$ and $B_2 \le \Btaypitwo$. Then, 
	\begin{align*}
	\|\Jpidisc\discbrak{\deluk[1:K] + \bukpi[1:K]} -  \Jpidisc\discbrak{\bukpi[1:K]} - \langle \deluk ,  \nabla \Jpidisc(\bukpi[1:K]) \rangle \| \le  \Mtayjpi B_2^2.
	\end{align*}
	\end{lemma}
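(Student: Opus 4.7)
The goal is to bound the second–order Taylor remainder of $\Jpidisc$ at $\bukpi[1:K]$ in the direction $\deluk[1:K]$, using the dynamics expansion already established in \Cref{prop:taylor_exp_dyn}. The strategy has two layers: a second–order Taylor expansion of the costs $V$ and $Q$ (using the Hessian bound $\MQ$ from \Cref{asm:cost_asm}), composed with the first–order expansion of the stabilized trajectory around $(\bxkpi,\bukpi)$.

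\textbf{Step 1 (cost expansion).} Write
\[
\Jpidisc(\bukpi + \deluk) - \Jpidisc(\bukpi) \;=\; \big(V(\btilxpik[K+1]) - V(\bxkpi[K+1])\big) + \step\sum_{k=1}^K \big(Q(\btilxpik,\btilupik,t_k) - Q(\bxkpi,\bukpi,t_k)\big).
\]
For each summand, a second–order Taylor expansion of $V$ (resp.~$Q(\cdot,\cdot,t_k)$) around $\bxkpi[K+1]$ (resp.~$(\bxkpi,\bukpi)$) produces a linear term in $\btilxpik-\bxkpi$ and $\btilupik-\bukpi$, plus a quadratic remainder bounded by $\tfrac{\MQ}{2}\big(\|\btilxpik-\bxkpi\|^2 + \|\btilupik-\bukpi\|^2\big)$, by \Cref{asm:cost_asm}.

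\textbf{Step 2 (substitute the dynamics expansion).} Using \Cref{prop:taylor_exp_dyn} (with the hypothesis $B_2 \le \Btaypitwo$), write $\btilxpik - \bxkpi = \sum_{j<k}\Psicldisc{k,j}\deluk[j] + r_k^x$ with $\|r_k^x\|\le \Mtaypitwo B_2^2$ and first–order bound $\|\btilxpik-\bxkpi\|\le\Ltaypitwo B_2$. From the definition of $\btilupi$ in \Cref{defn:stab_dyn,defn:dt_things}, the identity $\btilupik-\bukpi = \deluk + \bKkpi(\btilxpik-\bxkpi)$ yields $\btilupik-\bukpi = \deluk + \bKkpi\sum_{j<k}\Psicldisc{k,j}\deluk[j] + \bKkpi r_k^x$ and the crude bound $\|\btilupik-\bukpi\| \le \|\deluk\|+\Lpi\Ltaypitwo B_2$. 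Substitute these into the linear terms produced in Step 1: the $\sum_j \Psicldisc{k,j}\deluk[j]$ and $\deluk$ pieces become the gradient term, while the pieces involving $r_k^x$ yield an additional linear-in-$r_k^x$ remainder bounded by $\LQ(1+\Lpi)\Mtaypitwo B_2^2$ at each step, plus $\LQ\Mtaypitwo B_2^2$ from the terminal cost.

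\textbf{Step 3 (identification of the linear part).} Collect all surviving linear-in-$\deluk[1:K]$ contributions. By the uniqueness of the Fréchet derivative, this aggregated sum is necessarily $\langle \deluk[1:K],\nabla\Jpidisc(\bukpi[1:K])\rangle$; equivalently, a direct index manipulation matches the explicit expression for the gradient (the analogue of \Cref{eq:bnabhatkn} with exact rather than estimated Markov operators), obtained by the chain rule along the linearized stabilized dynamics. Either viewpoint suffices; the latter is the more mechanical but is the cleanest bookkeeping.

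\textbf{Step 4 (aggregate the remainders).} What remains is to sum the quadratic cost remainders of Step 1 with the $r_k^x$–linear remainders of Step 2. Using $\|\btilxpik-\bxkpi\|\le\Ltaypitwo B_2 = 2\LF\Kpitwo B_2$ and $\|\btilupik-\bukpi\|\le \|\deluk\| + \Lpi\Ltaypitwo B_2$, together with $\step \sum_k \|\deluk\|^2 = B_2^2$ and $\step K = T$, gives a total bound proportional to $\MQ\LF^2\Kpitwo^2(1+\Lpi^2 T)B_2^2 + \LQ(1+\Lpi T)\Mtaypitwo B_2^2 + \Lpi\LQ B_2^2$, matching (up to constants) the definition of $\Mtayjpi$ in \Cref{defn:taylor_expansion_constants}. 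The main obstacle is bookkeeping: one must carefully separate the quadratic part of the cost Taylor expansion (scaling like $\MQ$ times dynamics first–order quantities squared) from the substitution error (scaling like $\LQ$ times dynamics second–order remainder), and absorb both $B_\infty$ and $B_2$-type quantities into a single $B_2^2$ bound using the relation $\sqrt{\step}\|\deluk[1:K]\|_{\ell_2}=B_2$ and summing over the $T/\step$ time steps.
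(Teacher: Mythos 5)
Your proposal is correct and follows essentially the same route as the paper's proof: a second-order Taylor expansion of $V$ and $Q$ around the nominal trajectory (quadratic remainder controlled by $\MQ$), substitution of the first-order dynamics expansion from \Cref{prop:taylor_exp_dyn} so that the surviving linear part matches the explicit discrete gradient formula (\Cref{lem:grad_compute_dt}) while the substitution error contributes $\LQ$ times the dynamics Taylor remainder, and aggregation of all remainders into $\Mtayjpi B_2^2$. The only cosmetic difference is that the paper phrases the linear part directly in terms of the Jacobian-linearized deviation $\updelta\btilxk[k]^{\mathrm{jac}}$ rather than expanding it as $\sum_{j<k}\Psicldisc{k,j}\deluk[j]+r_k^x$, which is the same decomposition.
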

	Next, we show sufficiently small perturbations of the nomimal input preserve stability of the dynamics (proof in \Cref{sec:prop_Kpi_bounds_stab}).
	\begin{lemma}\label{prop:Kpi_bounds_stab}  Again consider the setting of \Cref{prop:taylor_exp_dyn}, and suppose $B_\infty \le\min\{\Rfeas/8,\Btaypiinf,\Bstabpi\}$. Then, 
	\begin{align*}
	\mu_{\pi',\star} \le(1 +  B_\infty/\Bstabpi )\Kpist \le 2\Kpist, \quad L_{\pi'} = \Lpi.
	\end{align*}
	\end{lemma}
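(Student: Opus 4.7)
The identity $L_{\pi'} = L_\pi$ follows immediately from the definition $L_\pi := \max\{1,\max_k \|\bKk^\pi\|\}$, since by hypothesis $\pi'$ shares its gain sequence with $\pi$ and only the nominal inputs differ.

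For the Lyapunov stability bound, the plan is to show that each discrete-time closed-loop state matrix $\mathbf{A}^{\pi'}_{\mathrm{cl},k}$ is within a small per-step perturbation $\Delta_k$ of $\bAclkpi$, and then propagate this perturbation through the backward Lyapunov recursion. Since the gains are shared, $\Delta_k = (\mathbf{A}^{\pi'}_{\mathrm{ol},k}-\bAkpi) + (\mathbf{B}^{\pi'}_{\mathrm{ol},k}-\bBkpi)\bKk^{\pi}$. By the $\MF$-smoothness of $\fdyn$ guaranteed by \Cref{asm:max_dyn_final}, and the integral representations of $\bAkpi$, $\bBkpi$ against $\Phiolpi$, it suffices to bound $\|\bx^{\pi'}(t)-\xpi(t)\|$ and $\|\bu^{\pi'}(t)-\upi(t)\|$ pointwise on $[t_k,t_{k+1})$. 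The input deviation is trivially at most $B_\infty$. For the state deviation I would decompose $\bx^{\pi'} - \xpi = (\bx^{\pi'} - \btilxpi(\cdot\mid\buk^{\pi'})) + (\btilxpi(\cdot\mid\buk^{\pi'}) - \xpi)$, bounding the second summand by $\Ltaypiinf B_\infty$ via \Cref{prop:taylor_exp_dyn}(a) and bounding the first summand---which is driven only by the piecewise-constant feedback corrections of $\pi$---by a step-by-step argument that exploits the closed-loop Lyapunov structure. Together with the single-step Gronwall estimate $\|\Phi(t_{k+1},t)\| \le e^{\step\LF}\le 2$ (valid since $\step\le\steptaypi$), this yields a per-step bound $\|\Delta_k\|\le c_0\,\step\,\MF L_\pi(1+\LF\Kpi)B_\infty$ with $c_0 = \Bigohpi{1}$.

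I would then propagate the perturbation through the recursion $\Lambda_k^{\pi'} = (\bAclkpi+\Delta_k)^\top \Lambda_{k+1}^{\pi'}(\bAclkpi+\Delta_k) + \step\eye$. Expanding and applying Young's inequality with a splitting parameter $\eta_k = \Theta(\|\Delta_k\|)$ gives
\[
\Lambda_k^{\pi'} \preceq (1+\eta_k)(\bAclkpi)^\top \Lambda_{k+1}^{\pi'}\bAclkpi + (1+\eta_k^{-1})\Delta_k^\top \Lambda_{k+1}^{\pi'}\Delta_k + \step\eye.
\]
A backward induction from $k=K+1$ down to $k=k_0$, with hypothesis $\|\Lambda_k^{\pi'}\|\le(1+\epsilon_k)\Kpist$, then yields $\epsilon_{k_0}\le c_1\Kpist\sum_{k\ge k_0}\|\Delta_k\|\le c_1 \Kpist (T/\step)\max_k\|\Delta_k\|$. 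Substituting the per-step estimate, $\epsilon_{k_0}\le c_2 T\Kpist\MF L_\pi(1+\LF\Kpi)B_\infty$, which matches the structure of $\Bstabpi^{-1}$. Hence $B_\infty\le\Bstabpi$ delivers $\mu_{\pi',\star}=\max_{k\ge k_0}\|\Lambda_k^{\pi'}\|\le(1+B_\infty/\Bstabpi)\Kpist\le 2\Kpist$.

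The main obstacle is obtaining a polynomial-in-$T$ bound on the open-loop trajectory deviation $\|\bx^{\pi'}(t)-\xpi(t)\|$ used in Step 1: a naive Gronwall estimate on the nonlinear ODEs would yield an $e^{T\LF}$ factor, incompatible with the claimed $\Bstabpi^{-1}\sim T$ scaling. The crux is showing that the first summand in the decomposition admits a polynomial bound despite arising from nominally open-loop dynamics---this requires carefully exploiting that the feedback corrections driving the discrepancy are small (bounded by the already-polynomial stabilized deviation $\Ltaypiinf B_\infty$) and accumulate additively rather than multiplicatively thanks to the closed-loop stability encoded in $\Kpist$. The Young's inequality argument in the Lyapunov recursion plays an analogous role at the matrix level: it converts a would-be multiplicative $(1+O(\step))^K$ factor into an additive $\sum_k O(\|\Delta_k\|)$ sum, which is what produces the desired $1/T$ scaling of $\Bstabpi$.
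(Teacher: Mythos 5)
Your overall architecture matches the paper's: bound the per-step change $\|\bAclkpipr - \bAclkpi\|$ via smoothness of $\fdyn$, a single-interval Picard/Gronwall estimate, and \Cref{prop:taylor_exp_dyn}, and then show that the backward Lyapunov recursion is perturbed only multiplicatively, by $1 + O(\Kpist\sum_k\|\bAclkpipr-\bAclkpi\|)$. For that second step you use Young's inequality with a per-step splitting parameter and backward induction, whereas the paper interpolates $\bXk(s)$ between $\bAclkpi$ and $\bAclkpipr$ and invokes its self-bounding ODE machinery (\Cref{lem:Lyap_avg_perturb} via \Cref{lem:cor_polynomial_comparison}); both routes convert the would-be multiplicative error into the additive sum $\sum_k\|\Delta_k\|$, and yours is arguably the more elementary of the two. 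The claim $L_{\pi'}=\Lpi$ is immediate in both treatments.

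The genuine problem is in your first step. You decompose $\bx^{\pi'}-\xpi$ as $(\bx^{\pi'}-\btilxpi(\cdot\mid\buk^{\pi'})) + (\btilxpi(\cdot\mid\buk^{\pi'})-\xpi)$ and propose to bound the first summand---the gap between the \emph{open-loop} rollout of the perturbed inputs and their \emph{stabilized} rollout---additively, "thanks to the closed-loop stability encoded in $\Kpist$." This cannot work: that gap is propagated by the open-loop transition operator $\Phiolpi$, whose norm can be $e^{\Theta(T\LF)}$ for open-loop-unstable systems, and closed-loop Lyapunov structure gives you no handle on it (this exponential separation between open-loop and closed-loop propagation is exactly the phenomenon the paper illustrates in \Cref{sec:JSP_justification_exp_gap}). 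If that term really had to be controlled, the lemma would fail for unstable dynamics. The paper's resolution is that the term is identically zero: in the setting where this lemma is applied, the perturbed policy's nominal inputs are produced by the oracle, which already folds in the feedback corrections (\Cref{defn:orac_dyn}; see the identity \Cref{eq:oracle_id} used in the proof of \Cref{lem:descent_lem}), so one has $\bx^{\pi'}(t)=\btilxpi(t\mid\buk[1:K])$ and $\bu^{\pi'}(t)=\btilupik[k(t)]\,(\buk[1:K])$ directly. With that identification your decomposition collapses to its second summand, $\|\bx^{\pi'}(t_k)-\bxkpi\|\le \Ltaypiinf B_\infty$ from \Cref{prop:taylor_exp_dyn}(a), the per-step bound on $\Delta_k$ follows as you outline, and the remainder of your argument goes through.
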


	Lastly, we bound the norm of the discretized gradient (\Cref{sec:lem:grad_bound}). 
	\begin{lemma}\label{lem:grad_bound} Let $\pi$ be feasible, and let $\step \le \stepdyn$. Then 
	\begin{align*}
	\max_{k \in [K]}\|(\nabla \Jdisc(\pi))_k\|  \le \step\Lnabpiinf
	\end{align*}
	\end{lemma}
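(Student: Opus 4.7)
} The plan is to write out $(\nabla \Jdisc(\pi))_k$ in closed form via the chain rule, then bound each term using the cost regularity assumption (\Cref{asm:cost_asm}) together with the structural bounds on the closed-loop Markov operators already established in \Cref{defn:cl_linearizations} and \Cref{defn:pinorms}.

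First, I will differentiate $\Jpidisc\discbrak{\buvec}$ at $\buvec = \bukpi[1:K]$. The key observation is that, by definition of the Markov operator, $\partial \btilxpik[j] / \partial \buk[k] \big{|}_{\buvec=\bukpi[1:K]} = \Psicldisc{j,k}$ for $j>k$ and vanishes for $j\le k$, since $\btilxpik[j]$ only depends on inputs strictly before step $j$. Moreover, $\btilupik[j] = \buk[j] + \bKkpi[j](\btilxpik[j]-\bxkpi[j])$, so $\partial \btilupik[j]/\partial\buk[k] = \bKkpi[j]\Psicldisc{j,k}$ for $j>k$, and equals $\eye$ for $j=k$. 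Differentiating term by term and collecting yields
\begin{align*}
(\nabla \Jdisc(\pi))_k = \Psicldisc{K+1,k}^\top V_x(\bxkpi[K+1]) + \step\, Q_u(\bxkpi,\bukpi,t_k) + \step\sum_{j=k+1}^K \Psicldisc{j,k}^\top\!\left(Q_x(\bxkpi[j],\bukpi[j],t_j) + (\bKkpi[j])^\top Q_u(\bxkpi[j],\bukpi[j],t_j)\right),
\end{align*}
which is precisely the population analogue of the estimator $\bnabhatkn$ in \eqref{eq:bnabhatkn}. This computation is essentially a bookkeeping exercise; a clean way to derive it is to define the discrete adjoint recursion and check it matches the Markov-operator expression, but no deep idea is needed.

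Next, I apply the triangle inequality and the uniform cost bounds $\|V_x\| \vee \|Q_x\| \vee \|Q_u\| \le \LQ$ (\Cref{asm:cost_asm} on feasible pairs, which applies by feasibility of $\pi$) together with $\|\bKkpi[j]\|\le \Lpi$ (\Cref{defn:pigains}). This gives
\begin{align*}
\|(\nabla\Jdisc(\pi))_k\| \le \LQ\|\Psicldisc{K+1,k}\| + \step\LQ + \step\LQ(1+\Lpi)\sum_{j=k+1}^K \|\Psicldisc{j,k}\|.
\end{align*}
Using $\Psicldisc{j,k} = \Phicldisc{j,k+1}\bBkpi[k]$ and the standard ODE estimate $\|\bBkpi[k]\| \le \step\LF e^{\step\LF}$, together with $\step \le \stepdyn = 1/4\LF$ (so $e^{\step\LF} \le 3/2$), we bound $\|\bBkpi[k]\| \le \tfrac{3\step\LF}{2}$. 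Then $\|\Psicldisc{K+1,k}\| \le \tfrac{3\step\LF}{2}\Kpiinf$ and, invoking the definition $\step\sum_j \|\Phicldisc{j,k+1}\| \le \Kpione$,
\begin{align*}
\step\sum_{j=k+1}^K\|\Psicldisc{j,k}\| \le \tfrac{3\step\LF}{2}\cdot\Kpione.
\end{align*}
Substituting and factoring out $\step\LQ$ produces an upper bound of the form $\step\LQ(1 + \tfrac{3\LF}{2}\Kpiinf + c\LF\Lpi\Kpione)$ for an absolute constant $c$, matching $\step\Lnabpiinf$ up to the explicit constants quoted in \Cref{defn:taylor_expansion_constants} (absorbing $(1+\Lpi) \le 2\Lpi$ since $\Lpi \ge 1$).

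The main obstacle is simply verifying the chain-rule computation cleanly, i.e.\ correctly tracking the indirect dependence of $\btilupik[j]$ on $\buk$ through $\bKkpi[j]$, which doubles the relevant terms for $j>k$ and produces the $(1+\Lpi)$ factor. Everything else is a routine application of the triangle inequality and the definitions of $\Kpiinf,\Kpione$.
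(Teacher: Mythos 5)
Your proposal is correct and follows essentially the same route as the paper: the gradient formula you derive by the chain rule is exactly the one recorded in \Cref{lem:grad_compute_dt}, and the subsequent term-by-term bound via \Cref{asm:cost_asm}, $\|\bBkpi\|\le \step\LF e^{\step\LF}$ (\Cref{lem:bound_on_bBkpi}), and the definitions of $\Kpiinf,\Kpione$ is the paper's argument verbatim. (Your careful bookkeeping yields $3\LF\Lpi\Kpione$ where the paper's stated constant $\Lnabpiinf$ has $3\Lpi\Kpione$; this extra $\LF$ factor appears to be the honest outcome of the computation and is immaterial since all such constants are absorbed into $\Bigohpi[1]$.)
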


\subsection{Estimation Errors}\label{app:est_errs}
	In this section, we bound the various estimation errors. All the proofs are given in \Cref{app:estimation}. We begin with a simple condition we need for estimation of Markov parameters to go through.
	\begin{definition}\label{cond:taylor_cond}  We say $\pi$ is \emph{estimation-friendly if} $\pi$ is feasible, and if 
	\begin{align*}
	\sigorac\sqrt{\frac{\iota(\delta)}{2N\Lpi}} \le \sigw \le \frac{\Btaypiinf}{2\sqrt{\dst}}, \quad \step \le \steptaypi
	\end{align*}
	\end{definition}

	Our first result is recovery of the nominal trajectory and Markov operators. Recovery of the nominal trajectory follows from Gaussian concentration, and recovery of the Markov operator for the Matrix Hoeffding inequality (\citet[Theorem 1.4]{tropp2012user}) combined with the Taylor expansion of the dynamics due to \Cref{prop:taylor_exp_dyn}. The following is proven in \Cref{sec:lem:prop:markov_est}. To state the bound, we recall the estimation error terms in \Cref{defn:err_terms}.
	\begin{proposition}\label{prop:markov_est}  Fix $\delta \in (0,1)$ and suppose that $N$ is large enough that $\pi$ is estimation friendly. Then, for any estimation-friendly $\estpsi(\pi;N,\sigw)$ (\Cref{alg:est_markov}) returns estimates with such that, with probability $1 - \delta/2\nfin$.
	\begin{align}
	&\max_{1 \le j < k \le K+1} \left\|\Psicldisc{k,j}  - \bhatPsi{k}{j}  \right\|_{\op} \le \errpsipi(\delta) \quad \max_{k \in [K+1]}\|\bhatxk - \bxkpi\| \le \errx(\delta) \label{eq:good_event}
	\end{align}
	\end{proposition}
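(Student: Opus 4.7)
The plan is to analyze the two stages of Algorithm \ref{alg:est_markov} in turn, then combine their error contributions via a union bound.

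\textbf{Stage 1 (nominal trajectory).} By Oracle \ref{orac:our_orac} applied to the input $\bukpi[1:K]$, each $\bxki[k]$ is an independent Gaussian with covariance $\sigorac^2\eye_{\dimx}$ around the mean $\bxkpi[k]$ (the oracle dynamics reproduce the nominal $\bxkpi$ because the perturbation on the input in line 7 is zero for stage 1). Hence $\bhatxk - \bxkpi \sim \cN(0,(\sigorac^2/N)\eye_{\dimx})$, and a standard $\chi^2$ tail bound gives $\|\bhatxk-\bxkpi\| \le \sigorac\sqrt{2\dimx\iota(\delta)/N}$ with probability at least $1-\delta/(4\nfin K)$. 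Union-bounding over $k\in[K+1]$ yields the first inequality with the stated $\errx(\delta)$.

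\textbf{Stage 2 (Markov operator estimator).} Condition on the stage-1 event so that $\bhatxk$ is deterministic and $\errx$-close to $\bxkpi$. The perturbed input $\buk^{(i)} = \bukpi + \bwki - \bKkpi\bhatxk$ is, up to a correction of size $\|\bKkpi(\bhatxk-\bxkpi)\|\le\Lpi\errx(\delta)$, exactly the input the policy-stabilized dynamics of \Cref{defn:stab_dyn} would produce under perturbation $\bwki$; thus the conditional mean of $\byki[j]-\bhatxk[j]$ equals $\btilxpik[j]\discbrak{\bukpi+\bwki}-\bxkpi[j]$ up to a first-order residual bounded by $\Ltaypiinf\cdot\errx(\delta)$. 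Because \Cref{cond:taylor_cond} enforces $\|\bwki\|_\infty\le\sigw\le\Btaypiinf/(2\sqrt{\dst})$, \Cref{prop:taylor_exp_dyn} applies with $B_\infty = \sigw\sqrt{\dst}$, giving
\[
\mathbb{E}\bigl[\byki[j] - \bhatxk[j] \,\bigm|\, \bhatxk, \bwki\bigr]
 = \sum_{m<j}\Psicldisc{j,m}\bwki[m] + R^{(i)}_j,
\qquad
\|R^{(i)}_j\| \le \Mtaypiinf \sigw^2 \dst + \Ltaypiinf\,\errx(\delta).
\]
Right-multiplying by $(\bwki[k])^\top/\sigw^2$ and using $\mathbb{E}[\bwki[m](\bwki[k])^\top] = \sigw^2\delta_{m,k}\eye_{\dimu}$ (independence of the Rademacher vectors across $m$) selects $\Psicldisc{j,k}$ as the target of the estimator.

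\textbf{Matrix concentration.} I would then apply a matrix Hoeffding/Bernstein inequality in the spirit of \citet[Theorem 1.4]{tropp2012user} to the mean-zero matrices $\frac{1}{\sigw^2}(\byki[j]-\bhatxk[j])(\bwki[k])^\top - \Psicldisc{j,k}$. Their per-sample operator norm decomposes into three clean contributions: the oracle noise scaled by $\|\bwki\|/\sigw^2\approx \sqrt{\dst}/\sigw$, producing the $(\sigorac\dst^{3/2}/\sigw)\sqrt{\iota(\delta)/N}$ summand; the linearised Jacobian response of magnitude $O(\Ltaypiinf\dst)$, producing the $\Ltaypiinf\dst\sqrt{\iota(\delta)/N}$ summand; and the deterministic second-order Taylor bias $\Mtaypitwo\sigw\dst^{3/2}$. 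These reproduce the three summands of $\errpsipi(\delta)$ up to the explicit constants $2,8,4$. A final union bound over the $\le K(K+1)/2$ pairs $(j,k)$ and the $\nfin$ outer iterations (absorbed into $\iota(\delta) = \log(24T^2\nfin\dst/(\step^2\delta))$) closes the argument.

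\textbf{Main obstacle.} The central difficulty is keeping the algebra sharp across the stage-1/stage-2 coupling: the bias $\bhatxk-\bxkpi$ enters stage 2 both through the input correction $-\bKkpi\bhatxk$ and through the mean subtraction $\byki[j]-\bhatxk[j]$, and one must verify that the lower bound $\sigw \ge \sigorac\sqrt{\iota(\delta)/(2N\Lpi)}$ from \Cref{cond:taylor_cond} is exactly what is required to dominate this residual by the noise and Lipschitz terms. A secondary care point is tracking dimension factors ($\dst^{3/2}$ versus $\dst$) so that the concentration step does not inflate through loose Frobenius-to-operator-norm conversions.
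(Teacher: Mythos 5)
Your overall strategy is the same as the paper's: estimate the nominal trajectory by Gaussian concentration, reduce the perturbed rollouts to the stabilized dynamics of \Cref{defn:stab_dyn} via the input correction $-\bKkpi\bhatxk$, invoke \Cref{prop:taylor_exp_dyn} with $B_\infty = \sigw\sqrt{\dst}$ (enabled by \Cref{cond:taylor_cond}), identify $\Psicldisc{j,k}$ through $\Exp[\bwki[m](\bwki[k])^\top]=\sigw^2\delta_{m,k}\eye_{\dimu}$, and union bound over the $O(K^2)$ pairs. The three summands of $\errpsipi(\delta)$ arise for exactly the reasons you give.

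The one step that does not go through as written is the concentration step. You propose applying a matrix Hoeffding inequality ``in the spirit of \citet[Theorem 1.4]{tropp2012user}'' to the centered summands $\frac{1}{\sigw^2}(\byki[j]-\bhatxk[j])(\bwki[k])^\top - \Psicldisc{j,k}$, with a per-sample operator-norm bound. But these summands are \emph{not} almost surely bounded: $\byki[j]$ carries the Gaussian observation noise of \Cref{orac:our_orac}, so no deterministic bound $R$ exists and Matrix Hoeffding is inapplicable to the whole sum. The paper resolves this by splitting the fluctuation into two pieces before concentrating: a term measuring the deviation of the empirical sum from its oracle-conditional mean (pure Gaussian noise times the bounded vectors $\bwki[k]$, handled by a scalar Gaussian tail bound uniformized over a covering of the unit sphere, which produces the $\frac{\sigorac}{\sigw}\dst^{3/2}\sqrt{\iota(\delta)/N}$ summand), and a term measuring the deviation of the oracle-conditional mean from its total expectation (a genuinely bounded matrix martingale of norm $O(\Ltaypiinf\sigw^2\dst)$, to which matrix Hoeffding applies after a Hermitian-dilation step, producing the $\Ltaypiinf\dst\sqrt{\iota(\delta)/N}$ summand); the Taylor remainder is then a deterministic bias handled by Jensen. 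You need either this splitting or a sub-Gaussian matrix Bernstein inequality; your write-up supplies neither. Relatedly, your displayed identity conditions on $\bwki$ and then immediately takes expectation over $\bwki$ in the same step; making the tower of conditionings explicit is exactly what forces the three-term split. Finally, your handling of the $\bhatxk$-coupling (a residual $\Ltaypiinf\errx(\delta)$ in the conditional mean) differs cosmetically from the paper, which instead folds $\bKkpi(\bxkpi-\bhatxk)$ into the perturbation radius (doubling it to $2\sigw\sqrt{\dst}$) and uses independence of $\bhatxk$ from $\bwki$ so the linear part contributes no bias; your version yields a term of order $\Ltaypiinf\dst\frac{\sigorac}{\sigw}\sqrt{\iota(\delta)/N}$, which is absorbable into the stated constants, so this is acceptable once the constants are checked.
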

	Let $\Pialg := \{\pi^{(n)},\tilde{\pi}^{(n)} : n \in [\nfin]\}$ denote the set of policies constructed by the algorithm, and note that $\estpsi$ is called once for each policy in $\Pialg$. We define the good estimation event as
	\begin{align}
	\Evest(\delta) &:= \bigcap_{n=1}^{\infty} (\cE_n(\delta) \cap\tilde \cE_{n}(\delta)), \quad\\
	\cE_n(\delta) &:= \{\text{\Cref{eq:good_event}  holds for $\pi = \pin$ if $\pin$ is estimation friendly}\}\\
	\tilde\cE_n(\delta) &:= \{\text{\Cref{eq:good_event}holds for $\tilde\pi = \pin$ if $\tilde{\pi}^{(n)}$ is estimation-friendly}\}
	\end{align}
	By \Cref{prop:markov_est}  and a union bound implies
	\begin{align*}
	\Pr[\Evest(\delta)] \ge 1-\delta.
	\end{align*}
	We now show that on the good estimation event, the error of the gradient is bounded. The proof is \Cref{sec:lem:grad_err}.
	\begin{lemma}[Gradient Error]\label{lem:grad_err} On the event $\Evest(\delta)$, it holds that that if $\pin$ is estimation-friendly, then \Cref{alg:learn_mpc_feedback}(\Cref{line:grad_est}) produces
	\begin{align*}
	\max_k\|\bnabhatk^{\,(n)} - (\nabla \Jdisc(\pin))_k\|  \le \errnabpi[\pin](\delta).
	\end{align*}
	\end{lemma}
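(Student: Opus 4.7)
The plan is a direct term-by-term perturbation analysis of the estimator in \Cref{eq:bnabhatkn} against the exact discretized gradient. Since we are conditioning on the event $\Evest(\delta)$, \Cref{prop:markov_est} supplies the two basic inequalities $\|\bhatPsi{j}{k} - \Psicldisc{j,k}\|_{\op} \le \errpsipi(\delta)$ for all $1 \le k < j \le K+1$, and $\|\bhatxk[j] - \bxkpi[j]\| \le \errx(\delta)$ for all $j$, and there is essentially no probabilistic content left in this lemma.

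First I would write down $(\nabla\Jdisc(\pin))_k$ explicitly by differentiating the discretized cost of \Cref{defn:dt_things}. Using the chain rule through the stabilized closed-loop dynamics, $\partial\btilxpik[j]/\partial\buk = \Psicldisc{j,k}$ for $j > k$ (and vanishes for $j \le k$), while $\partial\btilupik[j]/\partial\buk = \bKkpi[j]\Psicldisc{j,k}$ for $j > k$ plus the identity contribution at $j = k$. Evaluating at the nominal input gives an expression of exactly the same structural form as $\bnabhatk^{\,(n)}$ in \Cref{eq:bnabhatkn}, but with hatted Markov operators and trajectories replaced by the true ones $\Psicldisc{j,k}$ and $\bxkpi[j]$.

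Second, I would group the difference $\bnabhatk^{\,(n)} - (\nabla\Jdisc(\pin))_k$ into three blocks: (i) the terminal piece $\bhatPsi{K+1}{k}^\top V_x(\bhatxk[K+1]) - \Psicldisc{K+1,k}^\top V_x(\bxkpi[K+1])$, (ii) the direct-input piece $Q_u(\bhatxk[k],\bukpi[k],t_k) - Q_u(\bxkpi[k],\bukpi[k],t_k)$, and (iii) the running-cost sum $\step\sum_{j>k} [\bhatPsi{j}{k}^\top w_j(\bhatxk[j]) - \Psicldisc{j,k}^\top w_j(\bxkpi[j])]$, where $w_j(x) := Q_x(x,\bukpi[j],t_j) + \bKkpi[j]^\top Q_u(x,\bukpi[j],t_j)$. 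For each block I would use the elementary identity $\hat A\hat v - Av = \hat A(\hat v - v) + (\hat A - A)v$ and bound the two halves using: the magnitude bound $\|\Psicldisc{j,k}\| \le \Kpiinf$ of \Cref{defn:pinorms}; the cost bounds $\|V_x\|,\|Q_x\|,\|Q_u\| \le \LQ$ and the Hessian bounds $\|\nabla^2 V\|,\|\nabla^2 Q\| \le \MQ$ from \Cref{asm:cost_asm} (applicable because $\pin$ feasible keeps $(\bxkpi[j],\bukpi[j])$ in the feasible region); and $\|\bKkpi[j]\| \le \Lpi$ from \Cref{defn:pigains}. This yields a per-index bound of $(1+\Lpi)[(1+\Kpiinf)\MQ\errx(\delta) + \LQ\errpsipi(\delta)]$ for the sum-integrand and the same bound without the $(1+\Lpi)$ factor for blocks (i)–(ii).

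Third, I would sum: the running-cost block contributes at most $\step(K-k)(1+\Lpi)$ times the per-index bound, and using $\step K = T$ this telescopes into a prefactor of $T(1+\Lpi)$. Combining with blocks (i) and (ii) gives a total prefactor at most $1 + T + T\Lpi \le 1 + 2T\Lpi$ (using $\Lpi \ge 1$ from \Cref{defn:pigains}), producing exactly $\errnabpi(\delta) = [\LQ\errpsipi(\delta) + (1+\Kpiinf)\MQ\errx(\delta)](1+2T\Lpi)$ as defined in \Cref{defn:err_terms}. I do not anticipate a substantive obstacle; the only care is verifying that the estimation-friendliness hypothesis on $\pin$ is the same hypothesis under which \Cref{prop:markov_est} is applicable on $\Evest(\delta)$, which is by construction.
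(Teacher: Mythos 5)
Your proposal is correct and follows essentially the same route as the paper's proof: a term-by-term decomposition of the estimator minus the true discretized gradient, the splitting $\hat A\hat v - Av = \hat A(\hat v - v) + (\hat A - A)v$ on each block, the bounds $\|\Psicldisc{j,k}\|\le\Kpiinf$, $\|V_x\|,\|Q_x\|,\|Q_u\|\le\LQ$, $\MQ$-Lipschitzness of the cost gradients, $\|\bKkpi[j]\|\le\Lpi$, and the collapse $\step K = T$ yielding the $(1+2T\Lpi)$ prefactor. The only cosmetic difference is that you bound the gain contribution by $(1+\Lpi)$ where the paper uses $2\Lpi$ (equivalent since $\Lpi\ge 1$), and your final assembly recovers $\errnabpi(\delta)$ exactly as defined.
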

	We also bound the error in the recovery of the system paramters used for synthesizing the stabilizing gains. Recovery of said parameters requires first establishing controllability of the discrete-time Markov operator. We prove the following in {\Cref{sec:prop:ctr_disc}}:
	\begin{proposition}\label{prop:control_disc} Define $\gamcont := \max\{1,\LF \tcont\}$, and suppose that $\step \le \min\{\stepctrlpi,\stepdyn\}$. Then, for $k \ge \kcont + 1$, it holds that
	\begin{align*}
	\lambda_{\min}\left(\sum_{j=k-\kcont}^{k-1} \Psicldisc{k,j}(\Psicldisc{k,j})^\top\right)  \succeq \step \cdot \frac{\nucont}{8\Lpi^2\gamcont^2\exp(2\gamcont)}
	\end{align*}
	\end{proposition}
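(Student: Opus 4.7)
My plan is to translate the continuous-time open-loop controllability assumption (\Cref{asm:ctr}) into the claimed discrete-time closed-loop bound via two reductions performed in sequence: first from closed-loop to open-loop at the discrete level via a change of variables, then from discrete open-loop to continuous open-loop via a Riemann-sum comparison.

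\emph{Reduction to open-loop.} For fixed $k \ge \kcont + 1$, I would assemble the closed-loop reachability matrix $R_{\mathrm{cl}} := [\Psicldisc{k,k-\kcont} \mid \cdots \mid \Psicldisc{k,k-1}]$ and its open-loop counterpart $R_{\mathrm{ol}} := [\Phiolpi(t_k,t_{k-\kcont+1})\bBkpi[k-\kcont] \mid \cdots \mid \bBkpi[k-1]]$, noting that the target quantity equals $R_{\mathrm{cl}} R_{\mathrm{cl}}^\top$. The classical bijection between closed-loop perturbations $v$ and open-loop inputs $u$, namely $u_j = \bKkpi[j] x_j + v_j$ applied along the trajectory starting from $x_{k-\kcont} = 0$, produces the same terminal state, so there exists a linear map $T$ with $R_{\mathrm{cl}} = R_{\mathrm{ol}} T$. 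Although the direct formula $T = I + N$ has closed-loop blocks $N_{ji} = \bKkpi[j] \Phicldisc{j,i+1} \bBkpi[i]$, solving instead for $v$ in terms of $u$ via the \emph{open}-loop trajectory $x_j = \sum_{i<j}\Phiolpi(t_j,t_{i+1})\bBkpi[i] u_i$ yields the explicit formula $T^{-1} = I - N'$ with $N'_{ji} = \bKkpi[j] \Phiolpi(t_j,t_{i+1}) \bBkpi[i]$ for $i < j$, all expressed in quantities that are under a priori control. I then conclude $R_{\mathrm{cl}} R_{\mathrm{cl}}^\top = R_{\mathrm{ol}} T T^\top R_{\mathrm{ol}}^\top \succeq \sigma_{\min}(T)^2 \, R_{\mathrm{ol}} R_{\mathrm{ol}}^\top$ by the identity $MAM^\top \succeq \lambda_{\min}(A) MM^\top$ for $A \succeq 0$.

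\emph{Bounding $\sigma_{\min}(T)$.} This is the main technical step. I would bound $\|T^{-1}\|_{\op} \le 1 + \|N'\|_{\op}$ and, crucially, use $\|N'\|_{\op} \le \sqrt{\|N'\|_1 \|N'\|_{\infty}}$ rather than a Frobenius bound (which would introduce a spurious $\kcont$-dependent factor). Each block of $N'$ has norm at most $\Lpi \cdot e^{\gamcont} \cdot O(\step \LF)$ using $\|\Phiolpi(t_j,t_{i+1})\| \le e^{\gamcont}$ across any $\kcont$-length window and $\|\bBkpi[i]\| = O(\step \LF)$ whenever $\step \le \stepdyn$. Each row/column of $N'$ contains at most $\kcont$ such blocks, so the sum absorbs a factor $\kcont \step \le \tcont \le \gamcont/\LF$, giving $\|N'\|_{\op} = O(\Lpi \gamcont e^{\gamcont})$ and hence $\sigma_{\min}(T)^{-2} = O(\Lpi^2 \gamcont^2 e^{2\gamcont})$. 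This is the source of the exact $\Lpi^{-2}\gamcont^{-2}e^{-2\gamcont}$ factor in the final bound.

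\emph{Discrete vs.\ continuous open-loop Gramian.} Finally, I would lower bound $R_{\mathrm{ol}} R_{\mathrm{ol}}^\top$ by observing that $\Phiolpi(t_k,t_{j+1}) \bBkpi[j] = \int_{t_j}^{t_{j+1}} g(s)\,\rmd s = \step\,\bar g_j$ where $g(s) := \Phiolpi(t_k,s)\Bpi(s)$ and $\bar g_j$ is the interval mean. The orthogonal-decomposition identity
\begin{align*}
\int_{t_{k-\kcont}}^{t_k} g g^\top \,\rmd s \;=\; \step \sum_j \bar g_j \bar g_j^\top \;+\; \sum_j \int_{t_j}^{t_{j+1}} (g - \bar g_j)(g - \bar g_j)^\top \rmd s
\end{align*}
rearranges to $R_{\mathrm{ol}} R_{\mathrm{ol}}^\top = \step \int g g^\top \rmd s - \step \cdot E$ with $E \succeq 0$ and $\|E\|_{\op} \le \tcont \step^2 L_g^2$, where the intra-interval Lipschitz constant obeys $L_g = O(e^{\gamcont}(\LF^2 + \MF \KF))$ because $\upi$ is constant on each $\cI_j$ so $g$ has no jumps inside any subinterval. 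Invoking \Cref{asm:ctr} gives $\int g g^\top \rmd s \succeq \nucont \eye$, and the hypothesis $\step \le \stepctrlpi$ is calibrated precisely to force $\|E\|_{\op} \le \nucont/2$, yielding $R_{\mathrm{ol}} R_{\mathrm{ol}}^\top \succeq \frac{\step\nucont}{2} \eye$. Combining with the bound on $\sigma_{\min}(T)^2$ and absorbing universal constants then gives $R_{\mathrm{cl}} R_{\mathrm{cl}}^\top \succeq \frac{\step \nucont}{8\Lpi^2 \gamcont^2 e^{2\gamcont}} \eye$, as claimed. The main obstacle is bounding $\sigma_{\min}(T)$: naive matrix-norm bounds on the block-triangular structure of $T^{-1}$ introduce factors scaling with $\kcont$, and the proof hinges on the $\ell_1$--$\ell_\infty$ interpolation to keep the dependence confined to the dimensionless quantity $\gamcont$.
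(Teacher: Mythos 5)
Your proposal is correct, but it reaches the bound by a genuinely different route than the paper. The paper works in the opposite order: it first transfers \Cref{asm:ctr} from the open-loop to the \emph{continuous-time closed-loop} Gramian (\Cref{lem:ctr_closed_loop}), using a minimum-energy-control characterization of $\lambda_{\min}$ of a Gramian (\Cref{lem:ctr_char}) and explicitly constructing a feedback-corrected input $\tilde{\bu}_\xi(s) = \bu_\xi(s) - \bKkpi[k(s)]\bz_\xi(t_{k(s)})$ whose energy is at most $4\nucont^{-1}\Lpi^2\gamcont^2 e^{2\gamcont}$; only then does it discretize the closed-loop Gramian (\Cref{lem:ctr_disc}), with a first-order-in-$\step$ additive error that the condition $\step \le \stepctrlpi$ caps at half the main term. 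You instead discretize first (on the open-loop side) and remove the feedback second, packaging the feedback removal as the algebraic factorization $R_{\mathrm{cl}} = R_{\mathrm{ol}}T$ with $T^{-1} = I - N'$ block-unitriangular, so that the $\Lpi^2\gamcont^2 e^{2\gamcont}$ loss appears as $\sigma_{\min}(T)^{-2} \le (1+\|N'\|_{\op})^2$ rather than as an input-energy inflation --- the two are essentially dual views of the norm of the same feedback-correction operator, and your $\ell_1$--$\ell_\infty$ interpolation plays the role of the paper's Cauchy--Schwarz bound on $\int\|\bz_\xi\|^2$. Your Riemann-mean decomposition also makes the discretization error second order in $\step$ (versus the paper's first-order error), which is strictly better. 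Two small caveats: the exact constant $8$ requires tracking the constant in $(1+\|N'\|_{\op})^2$, and your sketch lands only within a small constant factor of it; and $\stepctrlpi$ as defined in \Cref{defn:step_sizes} was calibrated against the paper's $O(\step)$ discretization error, so you should verify (it does hold, using $\nucont \le \tcont\LF^2 e^{2\gamcont}$ from \Cref{asm:ctr}) that $\step\le\stepctrlpi$ also implies your quadratic condition $\tcont\step^2 L_g^2 \le \nucont/2$. Neither caveat is structural.
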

	With this result, {\Cref{sec:prop:A_est}} upper bounds the estimation error for the discrete-time system matrices. 
	\begin{proposition}\label{prop:A_est} Suppose $\Evest(\delta)$ holds, fix $n \in \nfin$, and let $\pi = \tilde{\pi}^{(n)}$. Then, suppose that $\step \le \min\{\stepctrlpi,\stepdyn\}$, $k_0 \ge \kcont + 2$, and 
	\begin{align}
	\errpsi(\delta) \le \step \frac{\sqrt{\nucont/\tcont}}{2\sqrt{2}\Lpi \gamcont \exp(\gamcont) },\label{eq:errpspi_stab_intermed}
	\end{align}
	Then, on $\Evest(\delta)$, if $\pi$ is estimation-friendly, the estimates from the call of $\estgains(\pi;N,\sigma)$ satisfy  
	\begin{align*}
	\|\bhatBk - \bBkpi\|  \vee \|\bhatAk - \bAkpi\| \le \frac{\errpsipi(\delta)}{\step} \cdot \tnot\Kpiinf\Lpi^2 \frac{192\gamcont^3\exp(2\gamcont)}{\nucont}. 
	\end{align*}
	\end{proposition}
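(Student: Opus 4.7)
\textbf{Proof Proposal for Proposition \ref{prop:A_est}.}

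The bound on $\|\bhatBk - \bBkpi\|$ is immediate: by \Cref{defn:cl_linearizations} we have $\Psicldisc{k+1,k} = \Phicldisc{k+1,k+1}\bBkpi = \bBkpi$, and the algorithm sets $\bhatBk = \bhatPsi{k+1}{k}$, so the Markov-parameter bound from \Cref{prop:markov_est} on the event $\Evest(\delta)$ gives $\|\bhatBk - \bBkpi\| \le \errpsipi(\delta)$, which is certainly dominated by the bound claimed in the proposition (since $t_0/\step \ge 1$ and all other suppressed factors are $\ge 1$). The main work is the bound on $\|\bhatAk - \bAkpi\|$, which proceeds via a standard least-squares identification argument applied to the closed-loop Markov parameters.

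The plan is to exploit the identity $\Psicldisc{k+1,j} = \bAclkpi \Psicldisc{k,j}$ for $j < k$, which follows from the semigroup property of the closed-loop transition operator $\Phicldisc{k+1,j+1} = \bAclkpi \Phicldisc{k,j+1}$. Stacking this identity over the window $j \in \{k-\knot+1,\dots,k-1\}$ yields the \emph{noise-free regression equation} $\cgramout = \bAclkpi \cgramin$, where $\cgramout, \cgramin$ denote the idealized analogues of $\cgramhatout, \cgramhatin$ built from true Markov parameters. Next, I will argue that, under the hypothesis on $\errpsipi(\delta)$, the empirical covariate matrix $\cgramhatin$ retains full row rank. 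Specifically, the block structure of $\cgramhatin - \cgramin$ and the entrywise bound from $\Evest(\delta)$ give $\|\cgramhatin - \cgramin\|_{\op} \le \sqrt{\knot-1}\,\errpsipi(\delta)$, while \Cref{prop:control_disc} combined with $k_0 \ge \kcont + 2$ (so the window contains the controllability window $\{k-\kcont,\dots,k-1\}$) gives $\sigma_{\min}(\cgramin) \ge \sqrt{\step\,\nucont/(8\Lpi^2\gamcont^2\exp(2\gamcont))}$. The assumption $\knot - 1 \le t_0/\step$ together with \Cref{eq:errpspi_stab_intermed} makes $\sqrt{\knot-1}\,\errpsipi(\delta)$ at most half of $\sigma_{\min}(\cgramin)$, so Weyl's inequality yields $\sigma_{\min}(\cgramhatin) \ge \tfrac{1}{2}\sigma_{\min}(\cgramin)$ and consequently $\|\cgramhatin^\dagger\|_{\op} \lesssim \Lpi \gamcont \exp(\gamcont)/\sqrt{\step\,\nucont}$.

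Given invertibility of $\cgramhatin\cgramhatin^\dagger$ on its row space, the decomposition
\begin{align*}
\cgramhatout \cgramhatin^\dagger - \bAclkpi \;=\; (\cgramhatout - \cgramout)\cgramhatin^\dagger + \bAclkpi(\cgramin - \cgramhatin)\cgramhatin^\dagger
\end{align*}
(using $\cgramout = \bAclkpi\cgramin$ and $\cgramhatin\cgramhatin^\dagger = I$) together with $\|\bAclkpi\| \le \Kpiinf$ and the bounds from the previous paragraph gives
\begin{align*}
\|\cgramhatout\cgramhatin^\dagger - \bAclkpi\|_{\op} \;\le\; (1+\Kpiinf)\sqrt{\knot-1}\,\errpsipi(\delta)\cdot\|\cgramhatin^\dagger\|_{\op}.
\end{align*}
Since $\bAkpi = \bAclkpi - \bBkpi \bKkpi$ and $\bhatAk = \cgramhatout\cgramhatin^\dagger - \bhatBk\bKkpi$, we get
\begin{align*}
\|\bhatAk - \bAkpi\|_{\op} \;\le\; \|\cgramhatout\cgramhatin^\dagger - \bAclkpi\|_{\op} + \|\bhatBk - \bBkpi\|_{\op}\,\Lpi.
\end{align*}
Substituting the estimates of $\|\cgramhatin^\dagger\|_{\op}$ and $\sqrt{\knot-1}\le \sqrt{t_0/\step}$, collecting the factors $\sqrt{t_0/\step}/\sqrt{\step} = \sqrt{t_0}/\step$, and bounding with a safe constant, yields the claimed $\frac{\errpsipi(\delta)}{\step}\cdot t_0 \Kpiinf \Lpi^2 \frac{192\gamcont^3\exp(2\gamcont)}{\nucont}$.

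\textbf{Main obstacle.} The delicate point is the discrete-time controllability quantification: the ``signal'' $\sigma_{\min}(\cgramin)$ is only $\Theta(\sqrt{\step\,t_0})$ because each block $\Psicldisc{k,j}$ scales like $\step$ and the Gramian aggregates $\knot$ blocks. This means the Markov-parameter error $\errpsipi(\delta)$ must be controlled at the scale $\step/\sqrt{t_0}$ (up to constants depending on $\nucont,\Lpi,\gamcont$), which is precisely the content of hypothesis \Cref{eq:errpspi_stab_intermed}. Verifying that the step-size and sample-size regime of the proposition actually implies this ``well-conditioned regression'' condition, and carefully propagating the resulting $1/\sqrt{\step}$ factor from $\|\cgramhatin^\dagger\|_{\op}$ into the final $1/\step$ scaling, is where the bookkeeping must be done most carefully.
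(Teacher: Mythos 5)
Your proposal is correct and follows essentially the same route as the paper: the same Ho--Kalman regression identity $\cgramout = \bAclkpi\cgramin$, the same lower bound on $\sigma_{\min}(\cgramin)$ via \Cref{prop:control_disc} (which is exactly why $k_0 \ge \kcont+2$ is required), and the same final assembly through $\bhatAk = \cgramhatout\cgramhatin^{\dagger} - \bhatBk\bKkpi$ together with $\|\bhatBk-\bBkpi\| \le \errpsipi(\delta)$. The only (cosmetic) difference is the perturbation step: you use the exact decomposition $(\cgramhatout-\cgramout)\cgramhatin^{\dagger} + \bAclkpi(\cgramin-\cgramhatin)\cgramhatin^{\dagger}$, valid once $\cgramhatin$ has full row rank, whereas the paper invokes a generic Stewart-type pseudo-inverse bound giving $\|\btilAk-\bAclkpi\| \le 6\Delta M \sigma_{\min}(\cgramin)^{-2}$; both yield the claimed $\errpsipi(\delta)/\step$ scaling up to the $\Bigohpi[1]$ constants.
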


\subsection{Descent and Stabilization}\label{app:desc_stab}
	In this section, we leverage the estimation results in the previous section to demonstrate the two key features of the algorithm: descent on the discrete-time objective, and stability after the synthesized gains.  We begin with a standard first-order descent lemma, whose proof is given in \Cref{sec:lem:descent_lem}. This lemma also ensures, by invoking \Cref{lem:Kpi_bounds}, that the step size is sufficently small to control the stability of $\tilde{\pi}^{(n)}$, which uses the same gains as $\pin$ but has a slightly perturbed control input.
	\begin{lemma}[Descent Lemma]\label{lem:descent_lem} Suppose $\pi = \pin$ is estimation friendly, let $M \ge \Mtayjpi$, and suppose 
	\begin{align*}
	\eta &\le \frac{1}{4M}, \quad (\eta( \Lnabpiinf + \frac{1}{\step}\errnabpi[\pin](\delta)  ) +\errx(\delta)) \le \min\left\{\frac{\Rfeas}{8},\Bstabpi,\Btaypiinf,\frac{\Btaypitwo}{\sqrt{T}}\right\}.
	\end{align*}
	Then, on event $\Evest(\delta)$, it holds (again setting $\pi \gets \pin$ on the right-hand side)
	\begin{align*}
	\Jdisc(\tilde\pi^{(n)}) - \Jdisc(\pin) 
	&\le -\frac{\eta}{2\step}\|\bnabhatkn[1:K]\|_{\ell_2}^2 +  T\left(\frac{\errnabpi(\delta)^2}{4\step^2 M} + \errx(\delta) \Lnabpiinf + M \errx(\delta)^2\right).
	\end{align*}
	and that 
	\begin{align*}L_{\tilde{\pi}^{(n)}} = L_{\pi^{(n)}}, \quad \mu_{\tilde{\pi}^{(n)},\star} \le 2 \mu_{\pin,\star}.
	\end{align*}
	\end{lemma}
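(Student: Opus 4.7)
\textbf{Proof proposal for \Cref{lem:descent_lem}.}

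The central setup is to identify $\Jdisc(\tilde{\pi}^{(n)}) - \Jdisc(\pi^{(n)})$ as a one-step change in the \emph{same} functional $\Jpidisc[\pi^{(n)}](\cdot)$ at a controlled perturbation of $\bukn[1:K]$. Concretely, define
\begin{align*}
\deluk \;:=\; -\tfrac{\eta}{\step}\bnabhatkn \;+\; \bKkpin\big(\bxkpin - \bhatxk\big), \qquad \bbaru_k := \bukn + \deluk,
\end{align*}
and observe that the oracle trajectory under $\pi = \pi^{(n)}$ with input $\btiluk^{(n)} = \bukn - \frac{\eta}{\step}\bnabhatkn - \bKkpin \bhatxk$ coincides with the stabilized trajectory $\btilxpik[\pin]\discbrak{\bbaru}$: the term $-\bKkpin\bhatxk$ in $\btiluk^{(n)}$ and $+\bKkpin\bxkpin$ in $\bbaru$ are precisely what it takes to turn the oracle's absolute-state feedback into the nominal-centred feedback of \Cref{defn:stab_dyn}. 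Since $\bunpl = \oracpiu[\pin](\btiluk^{(n)})$ then equals the applied (effective) control under both dynamics, and the stabilized trajectory of $\tilde{\pi}^{(n)}$ at its own nominal has zero feedback correction, we get the identity $\Jdisc(\tilde{\pi}^{(n)}) = \Jpidisc[\pin]\discbrak{\bbaru_{1:K}}$ on $\Evest(\delta)$.

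Next I verify admissibility of the Taylor expansion \Cref{lem:taylor_expansion_of_cost}. On $\Evest(\delta)$, \Cref{lem:grad_bound} and \Cref{lem:grad_err} give $\max_k\|\bnabhatkn\| \le \step\Lnabpiinf + \errnabpi(\delta)$, while $\|\bKkpin(\bxkpin-\bhatxk)\| \le \Lpi\errx(\delta)$. Hence $B_\infty := \max_k\|\deluk\| \le \eta(\Lnabpiinf + \errnabpi(\delta)/\step) + \Lpi\errx(\delta)$, which by hypothesis is controlled by $\min\{\Rfeas/8, \Bstabpi, \Btaypiinf, \Btaypitwo/\sqrt{T}\}$ (up to the $\Lpi$ factor absorbed into the asymptotic constants). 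This also yields $B_2 = \sqrt{\step}\,\|\deluk\|_{\ell_2} \le \sqrt{T}\,B_\infty \le \Btaypitwo$. Invoking \Cref{lem:taylor_expansion_of_cost} and $M \ge \Mtayjpi$,
\begin{align*}
\Jdisc(\tilde{\pi}^{(n)}) - \Jdisc(\pi^{(n)}) \;\le\; \sum_{k=1}^{K} \langle \deluk, \bnabla_k\rangle + M\step\|\deluk[1:K]\|_{\ell_2}^2, \qquad \bnabla_k := (\nabla \Jdisc(\pin))_k.
\end{align*}

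Now comes the standard descent accounting. Split $\bnabla_k = \bnabhatkn - \berr_k$ with $\berr_k := \bnabhatkn - \bnabla_k$ (so $\|\berr_k\| \le \errnabpi(\delta)$), and use $\|-\frac{\eta}{\step}\bnabhatkn + \bKkpin(\bxkpin-\bhatxk)\|^2 \le 2\frac{\eta^2}{\step^2}\|\bnabhatkn\|^2 + 2\Lpi^2\errx(\delta)^2$ summed over $k$. The inner product expands as
\begin{align*}
\sum_k \langle \deluk, \bnabla_k\rangle = -\tfrac{\eta}{\step}\|\bnabhatkn[1:K]\|_{\ell_2}^2 + \tfrac{\eta}{\step}\sum_k \langle \bnabhatkn, \berr_k\rangle + \sum_k \langle \bKkpin(\bxkpin-\bhatxk), \bnabla_k\rangle.
\end{align*}
Applying AM--GM to the cross term with weight $1/4$ yields $\frac{\eta}{\step}\langle\bnabhatkn,\berr\rangle \le \frac{\eta}{4\step}\|\bnabhatkn[1:K]\|_{\ell_2}^2 + \frac{\eta}{\step}\|\berr_{1:K}\|_{\ell_2}^2$, and combining with the smoothness term under the assumption $\eta \le 1/(4M)$ leaves a leading coefficient of $-\eta/(2\step)$ on $\|\bnabhatkn[1:K]\|_{\ell_2}^2$. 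The residual $\frac{\eta}{\step}\|\berr_{1:K}\|_{\ell_2}^2 \le \frac{\eta T}{\step^2}\errnabpi(\delta)^2$ is bounded by $\frac{T\errnabpi(\delta)^2}{4\step^2 M}$ via $\eta \le 1/(4M)$; the $\bdx$ cross term bounds by $T\Lpi\errx(\delta)\Lnabpiinf$; and the $\bdx$ quadratic contribution bounds by $2MT\Lpi^2\errx(\delta)^2$. Collecting (with $\Lpi$ absorbed into the constants governing $\Lnabpiinf$) gives the claimed descent inequality.

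Finally, the stability claim is a direct corollary of \Cref{prop:Kpi_bounds_stab}: the very same bound on $B_\infty$ verified above also ensures $B_\infty \le \min\{\Rfeas/8, \Btaypiinf, \Bstabpi\}$, so the policy $\tilde{\pi}^{(n)}$ (which reuses the gains $\bKkn$ but perturbs the nominal by $\deluk$) satisfies $L_{\tilde\pi^{(n)}} = L_{\pin}$ and $\mu_{\tilde\pi^{(n)},\star} \le 2\mu_{\pin,\star}$. The only conceptually subtle step is the first paragraph's oracle-to-stabilized identification; the descent accounting itself is routine smooth-nonconvex analysis with estimation-error residuals, and the main care needed is the bookkeeping of which event and which hypothesis controls each of the nuisance terms $\errnabpi(\delta)$, $\errx(\delta)$, and $\Lpi$.
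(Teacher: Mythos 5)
Your proposal follows essentially the same route as the paper's proof: the same identification of the oracle roll-out with the stabilized trajectory $\btilxpik[\pin]\discbrak{\cdot}$, the same decomposition of the perturbation into a gradient-step part and a state-estimation part, the same invocation of \Cref{lem:grad_bound,lem:grad_err} to verify the radius conditions, of \Cref{lem:taylor_expansion_of_cost} for the expansion, and of \Cref{prop:Kpi_bounds_stab} for the stability claims.

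One bookkeeping caveat: as written, your accounting does not deliver the stated constant. After splitting $\|\deluk\|^2 \le 2\|{-\tfrac{\eta}{\step}\bnabhatkn}\|^2 + 2\|\bKkpin(\bxkpin-\bhatxk)\|^2$, the smoothness term alone contributes $\tfrac{2M\eta^2}{\step}\|\bnabhatkn[1:K]\|_{\ell_2}^2 \le \tfrac{\eta}{2\step}\|\bnabhatkn[1:K]\|_{\ell_2}^2$ under $\eta \le \tfrac{1}{4M}$, and your AM--GM on the cross term adds another $\tfrac{\eta}{4\step}\|\bnabhatkn[1:K]\|_{\ell_2}^2$, so the net coefficient is $-\tfrac{\eta}{4\step}$ rather than $-\tfrac{\eta}{2\step}$. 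The paper avoids this by charging the gradient-estimation cross term to an extra copy of $M\step\|\updelta\bchuk[1:K]^{(n;1)}\|_{\ell_2}^2$ (so the entire positive contribution in $\|\bnabhatkn[1:K]\|_{\ell_2}^2$ is $\tfrac{2M\eta^2}{\step} \le \tfrac{\eta}{2\step}$), and by treating the two perturbation components' Taylor quadratics separately rather than through the $2\|a\|^2+2\|b\|^2$ split of the combined gradient. Either tightening the AM--GM weights or strengthening the step-size condition to $\eta \le \tfrac{1}{8M}$ would repair your version; this is a constant-factor slip, not a conceptual gap. Your explicit tracking of the $\Lpi$ factors multiplying $\errx(\delta)$ (which the paper's own proof silently drops when bounding $\|\updelta\bchuk^{(n;2)}\|$) is, if anything, more careful than the source.
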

	The next step is to establish a stability guarantee for the certainty-equivalent gains synthesized. We begin with a generic guarantee, whose proof is given in \Cref{app:ce}. 
	\begin{proposition}[Certainty Equivalence Bound]\label{prop:ce_bound} Let  $\bhatAkpi$ and $\bhatBkpi$ be estimates of $\bAkpi$ and $\bBkpi$, and let $\bhatKk$ denote the corresponding certainty equivalence controller synthesized by 
	\Cref{alg:gain_est}(\Cref{line:control_synth_start,line:control_synth_end}). Suppose that  $\step \le \min\{\stepric,\stepdyn\}$ and
	\begin{align*}\max_{k \in [k_0:K]}\|\bhatAkpi - \bAkpi\|_{\op} \vee \|\bhatBkpi - \bBkpi\|_{\op} \le \step(2^{17}\LFP^4\max\{1,\LF^3\})^{-1}
	\end{align*}
	Then, if $\pi' = (\bukpi[1:K],\bhatKk[1:K])$, we have
	\begin{align*}
	\Kpiprst \le 4\LFP, \quad L_{\pi'} \le 6\max\{1,\LF\}\LFP. 
	\end{align*}
	\end{proposition}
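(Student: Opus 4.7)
The plan is to prove \Cref{prop:ce_bound} by adapting the certainty-equivalence analysis for time-invariant LQR (see e.g.~\citet{mania2019certainty}) to our discrete-time, time-varying setting where the transition matrices $\bAkpi$ are close to identity (since they arise from a discretization of a continuous system with step $\step$) and the cost matrices in the Riccati recursion scale with $\step$. Because of this scaling, the natural ``condition number'' of the problem is measured in terms of $\step$, which is why the admissible perturbation magnitude in the hypothesis is proportional to $\step$.

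First I would use the stabilizability assumption \Cref{asm:LFP} to derive upper bounds on the true time-varying Riccati solution $\bPpik$ and the optimal gains $\bKkpi$ associated with the \emph{true} system $(\bAkpi,\bBkpi)$ and the cost matrices $(\step \eye_{\dimx}, \step\eye_{\dimu})$ used in \Cref{alg:gain_est}. Specifically, the value function bound $V^\pi(t\mid\btilu,\xi)\le\LFP\|\xi\|^2$ translates into $\|\bPpik\|\le 2\LFP$ for all $k$ (up to constants absorbing the discretization). From $\bPpik$ bounded and the formula for the optimal $\bKk$, one gets $\|\bKkpi\|\le c\LF\LFP$ for a universal constant $c$, using that $\|\bBkpi\|=O(\step)$ and $\|\bAkpi\|\le 1+O(\step)$ under $\step\le\stepdyn$.

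Next I would perform the perturbation step: showing that if $\|\bhatAk-\bAkpi\|\vee\|\bhatBk-\bBkpi\|\le\step\veps$ for a sufficiently small $\veps=(2^{17}\LFP^4\max\{1,\LF^3\})^{-1}$, then the Riccati iterates $\bhatPk$ of \Cref{alg:gain_est} satisfy $\|\bhatPk-\bPpik\|\le\LFP$ for every $k\in[k_0,K+1]$. This is the technical heart of the argument, and I would do it by a backwards induction on $k$: writing one step of the Riccati recursion as a rational function of $(\bA,\bB,\bP_{k+1})$ and expanding the difference $\bhatPk-\bPpik$ to first order in the perturbations, and then using the local Lipschitzness of the Riccati map around a bounded stabilizing solution, which is controlled by $\LFP$. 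The step-size condition $\step\le\stepric$ is used to ensure the linearization remainder is dominated by the first-order term. The threshold of $\step$ inside $\stepric$ (which already contains the factor $\LFP^2$ and $(1+\LF\LFP)^2$) together with the $2^{17}\LFP^4\max\{1,\LF^3\}$ in the hypothesis controls all the constants. From the bound on $\bhatPk$, the formula $\bhatKk=-(\eye+\bhatBk^\top\bhatPk[k+1]\bhatBk)^{-1}\bhatBk^\top\bhatPk[k+1]\bhatAk$ immediately gives $\|\bhatKk\|\le 6\max\{1,\LF\}\LFP$, establishing the $L_{\pi'}$ bound.

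Finally, I would bound the Lyapunov stability modulus of $\pi'=(\bukpi[1:K],\bhatKk[1:K])$. The key observation is that $\bhatPk$ is (approximately) a Lyapunov function for the \emph{estimated} closed-loop system $\bhatAk+\bhatBk\bhatKk$, satisfying $(\bhatAk+\bhatBk\bhatKk)^\top\bhatPk[k+1](\bhatAk+\bhatBk\bhatKk)-\bhatPk\preceq-\step\eye$ by the Riccati identity. Substituting the true matrices $\bAkpi+\bBkpi\bhatKk$ for their estimates introduces an error proportional to $\step\veps\cdot\|\bhatPk[k+1]\|$, which is absorbed into the $-\step\eye$ term because $\veps\LFP\ll 1$. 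Rolling this perturbed Lyapunov certificate backwards along the Lyapunov recursion in \Cref{cond:pi_cond} and comparing to the true Ricatti operator applied to the closed-loop system under $\bhatKk$ yields $\Lampik[k]\preceq 2\bhatPk\preceq 4\LFP\eye$, giving $\Kpiprst\le 4\LFP$. The main obstacle is the tight accounting of constants in the Riccati perturbation step: the $1/\step$ factor in the hypothesis is what makes the argument tight and matches the $\Bigohpi[\errpsi(\delta)/\step]$ error bound of \Cref{prop:A_est}, and careful tracking is needed to ensure all constants fit inside the stated threshold.
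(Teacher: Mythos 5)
Your outline is correct in its overall architecture and reaches the right intermediate targets (a bound of roughly $2\LFP$ on the true discrete Riccati solution via \Cref{asm:LFP}, a perturbation bound on $\bhatPk$, a gain bound, and then a Lyapunov-modulus bound for $\pi'$), but it takes a genuinely different route from the paper in the two technical steps. For the Riccati perturbation, the paper does not do a backwards induction with a first-order expansion; it interpolates $\bTheta(s)=(1-s)\bstTheta+s\bhatTheta$, shows the $s$-derivative of the Riccati solution satisfies a self-bounding inequality $\|\bPk[1:K+1]'(s)\|_{\maxop}\le c\,\|\bPk[1:K+1](s)\|_{\maxop}^3$ (\Cref{lem:main_self_bounding}), and invokes an ODE comparison lemma (\Cref{lem:cor_polynomial_comparison}) to get uniform control over the whole path; the gain perturbation then falls out by differentiating the formula for $\Koptk$. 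For the stability modulus, the paper does not use $\bhatPk$ as an approximate Lyapunov certificate for the true closed loop; instead it bounds $\Pcek(\bTheta;\bhatTheta)=\Pfeedk(\bTheta;\Koptk(\bhatTheta))$ --- the cost-to-go of the \emph{true} system under the \emph{estimated} gains --- by a second interpolation (in the gains) and a Lyapunov perturbation bound, and then observes that $\Lampik^{\pi'}$ is dominated by this quantity because its recursion carries the smaller additive term $\step\eye$ in place of $\step(\eye+\bhatKk^\top\bhatKk)$. Your certificate route is a legitimate alternative and arguably more direct for the last step; the paper's interpolation method buys a clean, uniform way to track constants without re-deriving a contraction estimate at each induction step.

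The one place where your plan has a real gap is the backwards induction itself. A naive recursion of the form $\|\bhatPk-\bPpik\|\le L_k\|\bhatPk[k+1]-\bPpik[k+1]\|+\epsilon_k$ with $L_k=\|\bAclkpi\|^2\approx 1+O(\step)$ compounds over $K=T/\step$ steps into an $e^{O(T)}$ factor, which would destroy the horizon-independence of the bound. The fix --- which is exactly what the paper's \Cref{lem:operator_sum_bound} encodes --- is that the accumulated error must be summed against the \emph{closed-loop} transition operators, using that $\step\sum_{j\ge k}\Phicldisc{j,k}^\top(\cdot)\Phicldisc{j,k}$ is dominated by the Riccati/Lyapunov solution itself and hence by $\LFP$. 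Your phrase ``local Lipschitzness of the Riccati map around a bounded stabilizing solution, controlled by $\LFP$'' gestures at this, but without making the operator-sum step explicit the induction does not close with horizon-free constants; you would also need the analogue of \Cref{prop:Ricatti_value_disc} (a continuous-to-discrete comparison of the Riccati solutions, which is where $\stepric$ enters) rather than reading $\|\bPpik\|\le 2\LFP$ directly off \Cref{asm:LFP}. Finally, your constants do not quite land: with $\|\bhatPk-\bPpik\|\le\LFP$ you get $\|\bhatPk\|\le 3\LFP$ and hence $\Lampik^{\pi'}\preceq 6\LFP$ rather than the claimed $4\LFP$, so the perturbation tolerance must be tightened so that $\|\bhatPk\|$ stays below $2\LFP$ (which is what the $2^{17}\LFP^4\max\{1,\LF^3\}$ threshold is calibrated to do).
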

	As a direct corollary of the above proposition and \Cref{prop:A_est}, we obtain the following:
	\begin{lemma}\label{lem:stabilization_together} Suppose $\Evest(\delta)$ holds, fix $n \in \nfin$, and let $\pi = \tilde{\pi}^{(n)}$. Then, suppose that $\step \le \min\{\stepctrlpi,\stepdyn\}$, $\pi$ is estimation-friendly, $k_0 \ge \kcont + 2$, and
	\begin{align}
	\errpsipi(\delta) \le \step^{2}\left(2^{25}\LFP^4\max\{1,\LF^3\} \cdot \tnot\Kpiinf\Lpi^2 \frac{\gamcont^3\exp(2\gamcont)}{\nucont}\right)^{-1} \label{eq:errpspi_stab_final}
	\end{align}
	Then, 
	\begin{align*}
	\mu_{\pi^{(n+1)},\star} \le 4\LFP, \quad L_{\pi^{(n+1)}} \le 6\max\{1,\LF\}\LFP. 
	\end{align*}
	\end{lemma}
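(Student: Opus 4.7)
The plan is to derive Lemma A.18 as an essentially mechanical chaining of Proposition A.14 (estimation error of the discretized dynamics $(\bAkpi,\bBkpi)$) with Proposition A.16 (certainty-equivalence stability bound). All hard work has been done in those two results; Lemma A.18 just verifies that the hypothesis (A.13) on $\errpsipi(\delta)$ is simultaneously strong enough to activate both propositions, and that the resulting certainty-equivalent policy coincides with $\pi^{(n+1)}$.

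First I would check that all hypotheses of Proposition A.14 hold under the assumptions of Lemma A.18. The step-size bound $\step \le \min\{\stepctrlpi,\stepdyn\}$, the event $\Evest(\delta)$, the controllability-window condition $k_0 \ge \kcont + 2$, and estimation-friendliness of $\pi = \tilde{\pi}^{(n)}$ are all directly assumed. What remains is to show that (A.12), $\errpsipi(\delta) \le \step\cdot\frac{\sqrt{\nucont/\tcont}}{2\sqrt{2}\Lpi\gamcont\exp(\gamcont)}$, follows from the quadratic-in-$\step$ bound (A.13). Since $\step \le \stepctrlpi$ is small and (A.13) has an extra factor of $\step$ and strictly smaller constants, an elementary comparison of the two right-hand sides (absorbing factors like $\LFP,\LF,\Kpiinf$) shows (A.13) $\Rightarrow$ (A.12); this is the first bookkeeping step.

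Invoking Proposition A.14 then yields
\begin{align*}
\|\bhatAk - \bAkpi\| \vee \|\bhatBk - \bBkpi\| \;\le\; \frac{\errpsipi(\delta)}{\step}\cdot \tnot\Kpiinf\Lpi^2\frac{192\gamcont^3\exp(2\gamcont)}{\nucont}.
\end{align*}
Plugging in the hypothesis (A.13) on $\errpsipi(\delta)$ and using $192 \cdot 2^{17} \le 2^{25}$, the right-hand side is bounded by $\step \cdot (2^{17}\LFP^4\max\{1,\LF^3\})^{-1}$, which is precisely the estimation-error hypothesis of Proposition A.16. The step-size bound $\step \le \min\{\stepric,\stepdyn\}$ required by Proposition A.16 either follows from $\step \le \stepctrlpi$ after comparing the defining expressions in Definition A.4 (both are $1/\Bigohst[1]$), or, if not, can be enforced implicitly by the constant hidden in the statement. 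Applying Proposition A.16 to the policy $\pi' = (\buk^{\tilde{\pi}^{(n)}}_{1:K},\bhatKk[1:K])$ gives $\mu_{\pi',\star} \le 4\LFP$ and $L_{\pi'} \le 6\max\{1,\LF\}\LFP$.

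Finally, I would observe that $\pi' = \pi^{(n+1)}$: by construction in Algorithm 1, the new nominal inputs of $\pi^{(n+1)}$ are $\buknpl[1:K] = \buk^{\tilde{\pi}^{(n)}}_{1:K}$ and the new gains $\bKknpl[1:K]$ are exactly the $\bhatKk[1:K]$ returned by $\estgains(\tilde{\pi}^{(n)};\sigw,N,k_0)$. Hence the bounds on $\mu_{\pi',\star}$ and $L_{\pi'}$ transfer verbatim to $\pi^{(n+1)}$, yielding the conclusion of the lemma. The main obstacle is purely arithmetic: tracking the numerical constants so that (A.13) really absorbs the product of the Proposition A.14 error blow-up factor $t_0\Kpiinf\Lpi^2 \gamcont^3\exp(2\gamcont)/\nucont$ with the inverse-error scale $2^{17}\LFP^4\max\{1,\LF^3\}$ required by Proposition A.16. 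No genuinely new analysis is needed.
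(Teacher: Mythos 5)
Your proposal is correct and follows essentially the same route as the paper: the paper's proof also consists of checking that \Cref{eq:errpspi_stab_final} implies \Cref{eq:errpspi_stab_intermed} (using $\LFP, \Lpi \ge 1$), then chaining \Cref{prop:A_est} into \Cref{prop:ce_bound} via the observation $192\cdot 2^{17}\le 2^{25}$, and identifying the resulting certainty-equivalent policy with $\pi^{(n+1)}$. The small wrinkle you flag about $\step\le\stepric$ not being literally among the lemma's hypotheses is present in the paper as well (it is restored in the conditions of \Cref{thm:main_explicit_constants}), so it does not constitute a gap in your argument relative to the paper's.
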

	\begin{proof} One can check that, as $\LFP,L_{\pi} \ge 1$, \Cref{eq:errpspi_stab_final} implies \Cref{eq:errpspi_stab_intermed}. Thus, the lemma follows directly from \Cref{prop:A_est,prop:ce_bound}, as well as noting $192\cdot 2^{17} \le 2^{25}$
	\end{proof}

\subsection{Concluding the proof.}\label{app:formal_conclude}
\newcommand{\stepctrlbar}{\bar{\step}_{\mathrm{ctrl}}}
\newcommand{\steptaybar}{\bar{\step}_{\mathrm{tay}}}
\newcommand{\errpsibar}{\overline{\mathrm{Err}}_{\Psi}}
\newcommand{\errnabbar}{\overline{\mathrm{Err}}_{{\nabla}}}

\newcommand{\Bstabbar}{\bar{B}_{\mathrm{stab}}}

In this section, we conclude the proof. First, we define uniform upper bounds on all $\pi$-dependent parameters. 

\paragraph{Uniform upper bounds on parameters.} To begin, define
\begin{align}
\mubar = 8\LFP, \quad \Lbar = 6\max\{\LF,1\}\LFP. \label{eq:mubar_Lbar}
\end{align}
Next, for $q \in \{1,2,\infty\}$, define $\bar{\kname}_{q} := \kname_q(\mubar,\Lbar)$ defined in \Cref{defn:pinorms}. We define $\steptaybar,\stepctrlbar$ alogously to $\steptaypi,\stepctrlpi$ in \Cref{defn:step_sizes}  with $\Kpiinf$ replaced by $\Kinfbar$ and $\Lpi$ with $\Lbar$. For $q \in \{2,\infty\}$, we define $\Mtayqbar,\Mtayjbar,\Ltayqbar,\Lnabinfbar,\Btayqbar, \Bstabbar$ analogously to $\Mtaypiq,\Mtayjpi,\Ltaypiq,\Lnabpiinf,\Btaypiq \Bstabpi$ in \Cref{defn:taylor_expansion_constants}, with all occurences of $\Kpiinf,\Kpione,\Kpitwo$ replaced by $\Kinfbar,\Konebar,\Ktwobar$ and all occurences of $\Lpi$ replaced by $L$. Finally, we define $\errpsibar,\errnabbar$ to be analogous to $\errpsipi,\errnabpi$ but with the same above substitutions. From \Cref{lem:Kpi_bounds,lem:Kpi_bound_taylor}, we have
\begin{align*}
&\bar{\kname}_q, \Mtayqbar, \Ltayqbar, \Lnabinfbar,\Btayqbar = \Bigohst[1]\\
&\stepdyn,\stepric,\steptaybar,\stepctrlbar = 1/\Bigohst[1]\\
&\Mtayjbar = T \cdot \Bigohst[1].\\
&\Bstabbar = \frac{1}{T}\Bigohst[1]
\end{align*} 
Moreover, recalling $\errnot(\delta) := \sqrt{\iota(\delta)/N}$, and setting $\sigw = c\sqrt{\errnot \sigorac}$ for any $c \in [1/\Bigohst[1],\Bigohst[1]]$, \Cref{lem:err_term_asymptotics} gives
\begin{equation}\label{eq:err_terms:asym}
\begin{aligned}
\errx(\delta) &= \Bigohst[\sigorac\errnot(\delta)]\\
\errpsipi(\delta) &= \Bigohst[\errnot(\delta) + \sqrt{\sigorac \errnot(\delta)}]\\ \errnabpi(\delta) &= \Bigohst[T(\errnot(\delta) + \sqrt{\sigorac \errnot(\delta)})]. 
\end{aligned}
\end{equation}

\paragraph{Statement of Main Guarantee, Explicit Constants.} We begin by stating our main guarantee, first with explicit constants. We then translate into a $\Bigohst[1]$ notation. To begin, define the following descent error term:
\begin{align}
\errdecbar(\delta) &:=  T\left(\frac{\errnabbar(\delta)^2}{4\step^2 \Mtayjbar} + \errx(\delta) \Lnabinfbar + \Mtayjbar \errx(\delta)^2\right) \label{eq:errdecbar}
\end{align}
And note that for $\sigw = c\sqrt{\errnot \sigorac}$ for $c \in[1/\Bigohst[1],\Bigohst[1]]$ (using numerous simplifications, such as $T/\step \ge 1$)
\begin{align*}
\errdecbar(\delta) &:=  \Bigohst[1]\cdot \left(\frac{T^3}{\step^2}\left(\errnot(\delta)^2 + \sigorac \errnot(\delta)\right) + T\sigorac^2 \errnot(\delta)^2\right).
\end{align*}

\begin{theorem}\label{thm:main_explicit_constants} Fix $\delta \in (0,1)$,  and suppose that $\eta \le \frac{1}{4\Mtayjbar}$, $k_0 \ge \kcont + 2$, and suppose
 \begin{subequations}
\begin{align}
&\sigorac\sqrt{\frac{\iota(\delta)}{2N\Lbar}} \le \sigw \le \frac{\Btayinfbar}{2\sqrt{\dst}}, \label{eq:sigw_cond_bar} \quad \\
&(\eta( \Lnabinfbar + \frac{1}{\step}\errnabbar(\delta)  ) +\errx(\delta)) \le \min\left\{\frac{\Rfeas}{8},\Bstabbar,\Btayinfbar,\frac{\Btaytwobar}{\sqrt{T}}\right\} \label{eq:err_dec_cond_bar}\\
&\errpsibar(\delta) \le \step^{2}\left(2^{25}\LFP^4\max\{1,\LF^3\} \cdot \tcont\Kpiinf\Lbar^2 \frac{\gamcont^3\exp(2\gamcont)}{\nucont}\right)^{-1} = \frac{\step^{2}}{\Bigohst[1]} \label{eq:errpsi_cond_bar} \\
&\step \le \min\{\steptaybar,\stepctrlbar,\stepric\} = \frac{1}{\Bigohst[1]}\label{eq:step_cond_bar}
\end{align}
\end{subequations}
Then, for $\pi = \pi^{(\nout)}$ returned by \Cref{alg:learn_mpc_feedback} satisfies all four properties with probability $1-\delta$:
\begin{enumerate} 
	\item[(a)] $\Kpist \le 8\LFP$ and $\Lpi \le 6\max\{1,\KF\}\LFP = \Lbar$. In fact, for all $n \in [\nfin]$, and $\pi' \in \{\pi^{(n)},\tilde{\pi}^{(n)}\}$, $\mu_{\pi',\star} \le \mubar = 8\LFP$ and $L_{\pi'} \le \Lbar = 6\max\{1,\LF\}\LFP$. 
	\item[(b)] The discrete-time stabilized gradient is bounded by
\begin{align*}
\step\|\Jdisc(\pi)\|_{\ell_2}^2 &\le 2T\errnabbar(\delta)^2 + \frac{2}{\eta}\left(\frac{2(1+T)\KQ}{\nfin} + \errdecbar(\delta)\right)\\
&=  \frac{1}{\eta}\Bigohst[1]\cdot \left( \frac{T}{\nfin} + (\eta T^3+  \frac{T^3}{\step^2})\left(\errnot(\delta)^2 + \sigorac \errnot(\delta)\right) + T\sigorac^2 \errnot(\delta)^2\right),
\end{align*}
where the last line holds when $\sigw = c\sqrt{\sigorac \errnot(\delta)}$ for some $c \in [\frac{1}{\Bigohst[1]},\Bigohst[1]]$. 
\item[(c)] Recall $\kgrad := \left((1+\LF) \MQ(1+\KF)  + \LQ (3\KF\MF + 8\LF^2 +\LF) \right) = \Bigohst[1]$ from \Cref{eq:kgrad}. Then $\pi$ is $\epsilon$-stationary for
\begin{align*}
\epsilon^2 &= 4T\errnabbar(\delta)^2 + \frac{4}{\eta}\left(\frac{2(1+T)\KQ}{\nfin} + \errdecbar(\delta)\right) + 4T\step^2\left(\max\{\bar{\kname}_{\infty},\bar{1}_{\infty},1\}\Lbar \kgrad\right)^2\\
&= \Bigohst[1]\cdot T\left( \step^2  + \frac{1}{\eta} \left(\frac{1}{\nfin} + (\eta T^2+  \frac{T^2}{\step^2})\left(\errnot(\delta)^2 + \sigorac \errnot(\delta)\right) + \sigorac^2 \errnot(\delta)^2\right)\right).
\end{align*}
where the last line holds when $\sigw = c\sqrt{\sigorac \errnot(\delta)}$ for some $c \in [\frac{1}{\Bigohst[1]},\Bigohst[1]]$. 
\item[(d)] $\upi$ is an $\epsilon'$-$\JSP$, where $\epsilon' = 64\epsilon^2\Lbar^2/\alpha = \Bigohst[1]\cdot \frac{\epsilon^2}{\alpha}$.
\end{enumerate}
\end{theorem}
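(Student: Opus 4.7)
\textbf{Proof plan for \Cref{thm:main_explicit_constants}.}

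The plan is to prove the four claims by first establishing (a) via an induction over $n \in [\nfin]$, then telescoping the descent inequality to obtain (b), discretization-correcting to obtain (c), and invoking \Cref{prop:Jpijac} to obtain (d). The whole argument will be carried out on the good event $\Evest(\delta)$, which has probability at least $1-\delta$.

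\emph{Induction for (a).} The inductive hypothesis is that $\mu_{\pi^{(n)},\star} \le 4\LFP$ and $L_{\pi^{(n)}} \le \Lbar$. The base case follows from \Cref{asm:LFP}: $\pi^{(1)}$ has zero gains so $L_{\pi^{(1)}} = 1 \le \Lbar$, and since $V^{\pi^{(1)}}(t\mid 0,\xi) \le \LFP \|\xi\|^2$ one checks that the discrete Lyapunov operator satisfies $\mu_{\pi^{(1)},\star} \le 4\LFP$ (via the standard correspondence between value functions and Lyapunov recursions, using $\step \le \stepric$). For the inductive step: since $\pi^{(n)}$ satisfies the hypothesis, the policy-dependent constants $\Lpi, \Kpist, \Kpiq, \Mtaypiq$, etc.\ are dominated by their barred counterparts, so the parameter hypotheses \Cref{eq:sigw_cond_bar,eq:err_dec_cond_bar,eq:errpsi_cond_bar,eq:step_cond_bar} of the theorem imply the corresponding $\pi$-specific hypotheses of \Cref{lem:descent_lem,lem:stabilization_together,prop:markov_est}. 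Thus on $\Evest(\delta)$, \Cref{lem:descent_lem} gives $\mu_{\tilde{\pi}^{(n)},\star} \le 2\mu_{\pi^{(n)},\star} \le 8\LFP = \mubar$ and $L_{\tilde{\pi}^{(n)}} = L_{\pi^{(n)}} \le \Lbar$; then \Cref{lem:stabilization_together} gives $\mu_{\pi^{(n+1)},\star} \le 4\LFP$ and $L_{\pi^{(n+1)}} \le \Lbar$, closing the induction.

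\emph{Telescoping for (b).} Having established (a), we may apply the descent inequality from \Cref{lem:descent_lem} at every iteration with $M = \Mtayjbar$:
\begin{align*}
\Jdisc(\pi^{(n+1)}) - \Jdisc(\pi^{(n)}) \le \Jdisc(\tilde{\pi}^{(n)}) - \Jdisc(\pi^{(n)}) \le -\frac{\eta}{2\step}\|\bnabhatkn[1:K]\|_{\ell_2}^2 + \errdecbar(\delta),
\end{align*}
where we used $\Jdisc(\pi^{(n+1)}) \le \Jdisc(\tilde{\pi}^{(n)})$ because resynthesizing gains does not alter the open-loop trajectory underlying $\Jdisc$. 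Summing over $n \in [\nfin]$ and using $0 \le \Jdisc \le (1+T)\KQ$ (from \Cref{asm:cost_asm}) gives $\frac{\eta}{2\step}\sum_n \|\bnabhatkn\|_{\ell_2}^2 \le (1+T)\KQ + \nfin \errdecbar(\delta)$. Since $\nout \in \argmin_n \|\bnabhatkn\|_{\ell_2}$, we obtain
\begin{align*}
\|\bnabhatknout[1:K]\|_{\ell_2}^2 \le \frac{2\step}{\eta}\!\left(\frac{(1+T)\KQ}{\nfin} + \errdecbar(\delta)\right).
\end{align*}
Applying the triangle inequality with \Cref{lem:grad_err} (whose gradient-error bound $\errnabbar(\delta)$ holds uniformly in $n$ by (a)) converts this to the stated bound on $\step\|\nabla \Jdisc(\pi)\|_{\ell_2}^2$, because $\|\nabla \Jdisc(\pi)\|_{\ell_2}^2 \le 2\|\bnabhatknout\|_{\ell_2}^2 + 2K\errnabbar(\delta)^2$ and $K\step = T$.

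\emph{Discretization and JSP for (c) and (d).} To convert (b) to continuous-time, apply \Cref{cor:grad_test_extra}, which gives $\|\nabla \cJ_T(\pi)\|_{\ltwou} \le \step^{-1/2}\|\nabla \Jdisc(\pi)\|_{\ell_2} + \sqrt{T}\step \cdot 2\max\{\Kinfbar,\Konebar,1\}\Lbar \kgrad$; squaring and combining with (b) yields the $\epsilon^2$ bound in (c). Finally, for (d), observe that the hypothesis $\step \le \steptaybar \le 1/(16\Lbar\LF)$ of \Cref{prop:Jpijac} is implied by \Cref{eq:step_cond_bar}, so \Cref{prop:Jpijac} gives that $\upi$ is an $\epsilon'$-$\JSP$ with $\epsilon' = 64\epsilon^2 \Lpi^2/\alpha \le 64\epsilon^2\Lbar^2/\alpha$.

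The main obstacle I anticipate is the bookkeeping in the inductive step, specifically verifying that every $\pi$-dependent smallness condition required by \Cref{lem:descent_lem,lem:stabilization_together,prop:markov_est} is implied by the corresponding bar-quantity hypothesis in the theorem; this uses the monotonicity of $\Mtaypiq, \Btaypiq, \Bstabpi, \errpsipi, \errnabpi, \steptaypi$ in $(\Lpi, \Kpist)$ together with the induction hypothesis, and is the place where the constants and exponents in \Cref{eq:sigw_cond_bar}--\Cref{eq:step_cond_bar} are actually used.
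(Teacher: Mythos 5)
Your proposal is correct and follows essentially the same route as the paper's proof: the paper likewise establishes part (a) by induction on the invariant $\mu_{\pi^{(n)},\star}\le \mubar/2$, $L_{\pi^{(n)}}\le\Lbar$ (packaged as a per-round lemma combining \Cref{lem:descent_lem} and \Cref{lem:stabilization_together}, with \Cref{lem:init_policy} as the base case), then telescopes the descent inequality using $\Jdisc(\pi^{(n+1)})=\Jdisc(\tilde\pi^{(n)})$, applies AM-GM with \Cref{lem:grad_err} for (b), \Cref{cor:grad_test_extra} for (c), and \Cref{prop:Jpijac} for (d). The bookkeeping step you flag — that the barred hypotheses dominate the $\pi$-dependent ones under the induction hypothesis — is exactly where the paper spends its effort as well.
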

We prove \Cref{thm:main_explicit_constants} from the above results in \Cref{sec:thm:main_explicit_constants} just below. \Cref{sec:thm_translate} below translates the above theorem into \Cref{thm:main_asymptotic_thm} which uses $\Bigohst$ notation.

\subsubsection{Translating \Cref{thm:main_explicit_constants} into \Cref{thm:main_asymptotic_thm}}\label{sec:thm_translate}

\begin{proof} It suffices to translate the conditions \Cref{eq:sigw_cond_bar,eq:err_dec_cond_bar,eq:errpsi_cond_bar,eq:step_cond_bar} into $\Bigohst$ notation. Again, recall $\errnot(\delta) = \sqrt{\iota(\delta)/N}$, and take $\sigw = c\sqrt{\errnot(\delta) \sigorac}$ for $c \in [1/\Bigohst[1],\Bigohst[1]]$. Then, \Cref{eq:sigw_cond_bar} holds for $\errnot(\delta) \le 1/c_1$, where $c_1 = \Bigohst[1]$. Next, to make \Cref{eq:err_dec_cond_bar} hold, it suffices that
\begin{align*}
\max\left\{(\eta \Lnabinfbar, \frac{\eta}{\step}\errnabbar(\delta), \errx(\delta)\right\} \le \frac{1}{3}\min\left\{\frac{\Rfeas}{8},\Bstabbar,\Btayinfbar,\frac{\Btaytwobar}{\sqrt{T}}\right\},
\end{align*}
The term $\eta \Lnabinfbar$ is sufficiently bounded where $\eta \le \frac{1}{c_2\sqrt{T}}$ for $c_2 = \Bigohst[1]$. Recalling $\errnabpi(\delta) \le \Bigohst[T(\errnot(\delta) + \sqrt{\sigorac \errnot(\delta)})]$ from \Cref{eq:err_terms:asym}, and that $\eta \le \frac{1}{c_2\sqrt{T}}$, it is enough that $(\errnot(\delta) + \sqrt{\sigorac \errnot(\delta)}) \le \frac{c_2\step}{c_3 T}$ for $c_3 = \Bigohst[1]$. Finally $\errx(\delta)$ is bounded for $\errx(\delta)   = \errnot(\delta) \le 1/c_4\sqrt{T}$, where $c_4 = \Bigohst[1]$. Collecting these conditions, we have that for $c_1, c_{2},c_3,c_4 = \Bigohst[1]$, \Cref{eq:sigw_cond_bar,eq:err_dec_cond_bar} hold for
\begin{align*}
\eta \le \frac{1}{c_2\sqrt{T}}, \quad \errnot(\delta) + \sqrt{\sigorac \errnot(\delta)} \le \frac{c_2\step}{c_3 T}, \quad 
\end{align*}
Next,as $\errpsipi(\delta) = \Bigohst[\errnot(\delta) + \sqrt{\sigorac \errnot(\delta)}]$ from \Cref{eq:err_terms:asym},$\Cref{eq:errpsi_cond_bar}$ holds as long as $\errnot(\delta) + \sqrt{\sigorac \errnot(\delta)}  \le \step^2/c_5$ for a $c_5 = \Bigohst[1]$. Combining, 
\begin{align*}
\eta \le \frac{1}{c_2\sqrt{T}}, \quad \errnot(\delta) + \sqrt{\sigorac \errnot(\delta)} \le \min\left\{\frac{c_2\step}{c_3 T}, \frac{\step^2}{c_5}\right\} \quad \errnot(\delta) \le \min\left\{\frac{1}{c_1},\frac{1}{c_4\sqrt{T}}\right\}
\end{align*}
Finally, \Cref{eq:step_cond_bar} requirs $\step \le 1/c_6$, for $c_6 = \Bigohst[1]$, and that $\eta \le 1/c_7$ where $c_7 = 4\Mtayjbar = \Bigohpi[1]$. By shrinking constants if necessary, this can be simplified into
\begin{align*}
\eta \le \min\{\frac{1}{c_7},\}\frac{1}{c_2\sqrt{T}}\}, \quad \errnot(\delta) \le \min\left\{\min\left\{\frac{c_2\step}{c_3 T}, \frac{\step^2}{c_5},\frac{1}{c_1},\frac{1}{c_4\sqrt{T}}\right\}, \frac{1}{\sigorac}\min\left\{\frac{c_2\step}{c_3 T}, \frac{\step^2}{c_5}\right\}^2 \right\}.
\end{align*}
And recall $\errnot(\delta) = \sqrt{\iota(\delta)/N}$, this becomes
\begin{align*}
\step \le \frac{1}{c_6},\quad \eta \le \frac{1}{c_2}\min\left\{1,\frac{1}{\sqrt{T}}\right\}, \quad N &\ge \iota(\delta) \min\left\{\min\left\{\frac{1}{c_1},\frac{c_2\step}{c_3 T}, \frac{\step^2}{c_5},\frac{1}{c_4\sqrt{T}}\right\}, \frac{1}{\sigorac}\min\left\{\frac{c_2\step}{c_3 T}, \frac{\step^2}{c_5}\right\}^2 \right\}^{-2}.
\end{align*}
By consolidating constants and relabeling $c_1,c_2 = \Bigohst[1]$ as needed, it suffices that
\begin{align*}
\eta \le c_1 \min\left\{\frac{1}{\sqrt{T}},1\right\}, \quad \step \le \frac{1}{c_2}, \quad N &\ge c_3\iota(\delta) \min\left\{\min\left\{1,\frac{\step}{ T}, \step^2,\frac{1}{\sqrt{T}}\right\}, \frac{1}{\sigorac}\min\left\{\frac{\step}{ T}, \step^2\right\}^2 \right\}^{-2}\\
&= c_3\iota(\delta) \max\left\{1,\frac{T^2}{\step^2}, \frac{1}{\step^4}, T, \sigorac^2\frac{T^4}{\step^2}, \frac{\sigorac^2}{\step^8}\right\}.
\end{align*}
Having shown that the above conditions suffice to ensure \Cref{thm:main_explicit_constants} holds, the bound follows (again replacing $\errnot(\delta)$ with $\sqrt{\iota(\delta)/N}$).

\end{proof}

\subsubsection{Proof of \Cref{thm:main_explicit_constants}}\label{sec:thm:main_explicit_constants}
We shall show the following invariant. At each step $n$,
\begin{align}
\mu_{\pin,\star} \le \mubar/2, \quad L_{\pin} \le \Lbar. \label{eq:invariant}.
\end{align}
\Cref{lem:init_policy} shows that \Cref{eq:invariant} holds for $n = 1$. Next, for $n \ge 1$, directly combining \Cref{lem:descent_lem,lem:stabilization_together} imply the following per-round guarantee.
 \begin{lemma}[Per-Round Lemma]\label{lem:per_round} Suppose that  $\eta \le \frac{1}{4\Mtayjbar}$, $k_0 \ge \kcont + 2$, Then if $\pin$ satisfies \Cref{eq:invariant} and \Cref{eq:sigw_cond_bar,eq:err_dec_cond_bar,eq:errpsi_cond_bar,eq:step_cond_bar}. 
Then, on $\Evest(\delta)$,
\begin{itemize}
	\item[(a)] $\max_k\|\bnabhatk^{\,(n)} - (\nabla \Jdisc(\pin))_k\|  \le \errnabbar(\delta)$; thus $\step \|\bnabhatk[1:K]^{\,(n)} - (\nabla \Jdisc(\pin)\|_{\ell_2}^2  \le T\errnabbar(\delta)^2$.
	\item[(b)] The following descent guarantee holds
\begin{align*}
\Jdisc(\tilde\pi^{(n)}) - \Jdisc(\pin) 
&\le -\frac{\eta}{2\step}\|\bnabhatkn[1:K]\|_{\ell_2}^2 + \errdecbar(\delta)
\end{align*}
\item[(c)]  $L_{\tilde{\pi}^{(n)}} = L_{\pi^{(n)}} \le \Lbar$ and $ \mu_{\tilde{\pi}^{(n)},\star} \le 2 \mu_{\pin,\star} \le \mubar$.
\item[(d)] $\mu_{\pi^{(n+1)},\star} \le 4\LFP = \Lbar/2, \quad L_{\pi^{(n+1)}} \le 6\max\{1,\LF\}\LFP = \Lbar. 
$; that is $\pi^{(n+1)}$ satisfies \Cref{eq:feasability_invariant}
\end{itemize}
\end{lemma}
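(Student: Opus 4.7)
This per-round statement is a consolidation of the three preceding results: \Cref{lem:grad_err} handles part (a), \Cref{lem:descent_lem} handles parts (b)--(c), and \Cref{lem:stabilization_together} handles part (d). The bulk of the work is a \emph{monotonicity reduction}: verifying that the invariant \Cref{eq:invariant} on $\pin$ lets every $\pi$-dependent quantity appearing in the three lemmas be replaced by its bar-version, so that the standing hypotheses \Cref{eq:sigw_cond_bar,eq:err_dec_cond_bar,eq:errpsi_cond_bar,eq:step_cond_bar} suffice to invoke each lemma.

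First, I would establish the monotonicity step. The functions $\kname_q(\mu,L)$ in \Cref{defn:pinorms} are clearly nondecreasing in $\mu$ and $L$. Because $L_{\pin}\le\Lbar$, one has $\steptaypi[\pin]\ge\steptaybar$, so $\step\le\steptaybar$ triggers \Cref{lem:Kpi_bounds}, giving $K_{\pin,q}\le\kname_q(\mu_{\pin,\star},L_{\pin})\le\kname_q(\mubar,\Lbar)=\Kqbar$. Pushing this through \Cref{defn:taylor_expansion_constants,defn:err_terms} shows that the ``upward'' quantities dominated by bar-versions ($\Mtaypiq\le\Mtayqbar$, $\Ltaypiq\le\Ltayqbar$, $\Mtayjpi\le\Mtayjbar$, $\Lnabpiinf\le\Lnabinfbar$, $\errpsipi\le\errpsibar$, $\errnabpi\le\errnabbar$), while radii and step sizes go the other way ($\Btaypiq\ge\Btayqbar$, $\Bstabpi\ge\Bstabbar$, $\stepctrlpi\ge\stepctrlbar$). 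From this, the hypotheses of \Cref{cond:taylor_cond} applied to $\pi=\pin$ reduce to exactly \Cref{eq:sigw_cond_bar} and $\step\le\steptaybar$, so $\pin$ is estimation-friendly.

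With $\pin$ estimation-friendly, \Cref{lem:grad_err} on $\Evest(\delta)$ yields $\max_k\|\bnabhatk^{\,(n)}-(\nabla\Jdisc(\pin))_k\|\le\errnabpi[\pin](\delta)\le\errnabbar(\delta)$, and squaring then summing over the $K=T/\step$ coordinates and multiplying by $\step$ proves (a). For (b) and (c), I invoke \Cref{lem:descent_lem} with the choice $M=\Mtayjbar\ge\Mtayjpi$; the step-size condition $\eta\le 1/(4M)$ is the standing assumption, and the second precondition of the descent lemma reduces, via the monotonicity step, to \Cref{eq:err_dec_cond_bar}. The descent conclusion delivers exactly the bound with error term $\errdecbar(\delta)$ as defined in \Cref{eq:errdecbar}, together with $L_{\tilde{\pi}^{(n)}}=L_{\pin}\le\Lbar$ and $\mu_{\tilde{\pi}^{(n)},\star}\le 2\mu_{\pin,\star}\le\mubar$.

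For part (d) I apply \Cref{lem:stabilization_together} to $\tilde{\pi}^{(n)}$. The key additional check is that $\tilde{\pi}^{(n)}$ is itself estimation-friendly and satisfies the $\errpsipi$-bound \Cref{eq:errpsipi_stab_final}. But the stability bounds on $\tilde{\pi}^{(n)}$ produced by (c) are in turn $\mu_{\tilde{\pi}^{(n)},\star}\le\mubar$ and $L_{\tilde{\pi}^{(n)}}\le\Lbar$, so the same monotonicity argument re-runs verbatim with $\tilde{\pi}^{(n)}$ in place of $\pin$; estimation-friendliness follows from \Cref{eq:sigw_cond_bar} and $\step\le\steptaybar$, and the required upper bound on $\errpsi_{\tilde{\pi}^{(n)}}(\delta)\le\errpsibar(\delta)$ is precisely \Cref{eq:errpsi_cond_bar}. \Cref{lem:stabilization_together} then yields $\mu_{\pin[n+1],\star}\le 4\LFP=\mubar/2$ and $L_{\pin[n+1]}\le 6\max\{1,\LF\}\LFP=\Lbar$, matching the invariant for round $n+1$. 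The main obstacle is really bookkeeping rather than conceptual: carefully confirming the correct monotonicity direction for each of the many constants in \Cref{defn:taylor_expansion_constants,defn:err_terms} so that the bar-indexed hypotheses uniformly imply the $\pi$-indexed ones, both for $\pin$ and (with an extra factor of two in the Lyapunov modulus) for $\tilde{\pi}^{(n)}$.
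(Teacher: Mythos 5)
Your proposal is correct and follows essentially the same route as the paper, whose proof is simply a citation of \Cref{lem:grad_err} for part (a), \Cref{lem:descent_lem} for parts (b)--(c), and \Cref{lem:stabilization_together} for part (d), "with the necessary replacement of $\pi$-dependent terms with $\overline{(\cdot)}$ terms." Your write-up actually supplies more detail than the paper does on the one nontrivial step — verifying the monotonicity directions that let the bar-indexed hypotheses imply the $\pi$-indexed preconditions for both $\pi^{(n)}$ and $\tilde{\pi}^{(n)}$ — and gets those directions right.
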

\begin{proof} Part (a) follows from \Cref{lem:grad_err}, and parts (b) and (c) follow from \Cref{lem:descent_lem}, with the necessary replacement of $\pi$-dependent terms wither $\overline{(\cdot)}$ terms. Part (b) allows us to make the same substiutions in \Cref{lem:stabilization_together}, which gives part (c).
\end{proof}
\begin{proof}[Proof of \Cref{thm:main_explicit_constants}] Under the conditions of this lemma, \Cref{lem:per_round} holds. As  \Cref{lem:init_policy} shows that \Cref{eq:invariant} holds for $n = 1$, induction implies  \Cref{lem:init_policy} holds for all $n \in [\nfin]$ on $\Evest(\delta)$, an event which occurs with probability $1-\delta$. We now prove each part of the present theorem in sequence.

\paragraph{Part (a).} Directly from \Cref{lem:per_round}(d)

\paragraph{Part (b).} Notice that, since $\tilde\pi^{(n)}$ and $\pi^{(n+1)}$ differ only in their gains, $\Jdisc(\pi^{(n+2)}) = \Jdisc(\tilde\pi^{(n)})$. Therefore, summing up the descent guarantee in \Cref{lem:per_round}(b), we have
\begin{align*}
\Jdisc(\pi^{(\nfin+1)}) - \Jdisc(\pi^{(1)}) 
&\le -\frac{\eta}{2\step}\sum_{n=1}^{\nfin}\|\bnabhatkn[1:K]\|_{\ell_2}^2 +  \nfin\errdecbar(\delta)\\
&\le -\frac{\eta}{2\step}\nfin\min_{n \in [\nfin]}\|\bnabhatkn[1:K]\|_{\ell_2}^2 +  \nfin\errdecbar(\delta)\\
&= -\frac{\eta}{2\step}\|\bnabhatknout[1:K]\|_{\ell_2}^2 +  \nfin\errdecbar(\delta)
\end{align*}
where we recall that our algorithm selects to output $\pi^{(\nout)}$, where $\nout$ minimizes $\|\bnabhatkn[1:K]\|_{\ell_2}^2$.
Recall that $ \Jpidisc\discbrak{\pi} = \Jpidisc\discbrak{\bukpi[1:K]}$ where $\Jpidisc\discbrak{\buvec} :=  V(\btilxpik[K+1]\,\discbrak{\buvec}) + \step \sum_{k=1}^K Q( \btilxpik\,\discbrak{\buvec}, \btilupik\,\discbrak{\buvec}, t_k)$. Hence, for all feasible $\pi$, \Cref{asm:cost_asm} implies $0 \le \Jpidisc\discbrak{\pi} \le \KQ(1+\step K) = (1+T)\KQ$. By \Cref{cond:feasbility}, $\pi^{(n+1)}$ and $\pi^{(1)}$ are by feasible, and thus $\Jdisc(\pi^{(n+1)}) - \Jdisc(\pi^{(1)}) \ge -(1+T)\KQ$. Therefore, by rearranging the previous display,
\begin{align*}
\step\|\bnabhatknout[1:K]\|_{\ell_2}^2 &\le \frac{1}{\eta}\left(\frac{2(1+T)\KQ}{\nfin} + \errdecbar(\delta)\right).
\end{align*}
By  \Cref{lem:per_round}(a), and AM-GM imply then
\begin{align*}
\step\|\Jdisc(\pi^{(\nout)})\|_{\ell_2}^2 &\le 2T\errnabbar(\delta)^2 + \frac{2}{\eta}\left(\frac{2(1+T)\KQ}{\nfin} + \errdecbar(\delta)\right).
\end{align*}
\paragraph{Part (c).} Note that $\step \le \steptaybar$ implies $\step \le 1/4\LF$. From \Cref{cor:grad_test_extra}, and for $\kgrad = \Bigohst[1]$ as in \Cref{eq:kgrad}, the following holds for any feasible $\pi$:
\begin{align*}
\|\nabla \cJ_T( \pi)\|_{\ltwou}^2  &\le \left(\frac{1}{\sqrt{\step}}\|\nabla \Jdisc( \pi)\|_{\ltwo} + \sqrt{T}\step\cdot 2 \max\{\Kpiinf,\Kpione,1\}\Lpi \kgrad\right)^2\\
&\le \frac{2}{\step}\|\nabla \Jdisc( \pi)\|_{\ltwo}^2 + 4T\step^2\left(\max\{\Kpiinf,\Kpione,1\}\Lpi \kgrad\right)^2.
\end{align*}
Apply the above with $\pi = \pi^{(\nout)}$ gives part $c$, and upper bound $\Kpiinf,\Kpione,\Lpi$ by $\bar{\kname}_{\infty},\bar{\kname}_{1},\Lbar$ concludes. 
 
 \paragraph{Part (d).} This follows directly from \Cref{prop:Jpijac}, noting that $L_{\pi} \le \Lbar$ for $\pi  = {\pi}^{(\nout)}$,  and that $\steptaybar = \frac{1}{16 \Lbar \LF}$, so that the step-size condition of \Cref{prop:Jpijac} is met. 
\end{proof}

\section{Discussion and Extensions}\label{sec:disucssion_app}

\subsection{Separation between and Open-Loop and Closed-Loop Gradients}\label{sec:JSP_justification_exp_gap}
In this section, we provided an illustrative example as to why a approximation $\JSP$ is more natural than canonical stationary points. Fix an $\epsilon \in (0,1]$, and consider the system with dynamic map
\begin{align*}
f_{\epsilon}(x,u) = 2x + u - \epsilon.
\end{align*}
Let $\bx_{\epsilon}(t \mid \bu)$ denote the scalar trajectory with 
\begin{align*}
\ddt \bx_{\epsilon}(t \mid \bu) = f_{\epsilon}(\bx_{\epsilon}(t \mid \bu),\bu(t)), \quad \bx_{\epsilon}(0 \mid \bu) = \epsilon.
\end{align*}
Then, $\bx_{\epsilon}(t \mid \bzero) = \epsilon$ for all $t$. We can now consider the following planning objective
\begin{align}
\cJ_{T,\epsilon}(\bu) = \frac{1}{2}\int_{0}^T \left(\bx_{\epsilon}(t \mid \bu)^2 + \bu(t)^2\right) \rmd t.
\end{align}
Since the dynamics $f_{\epsilon}$ are affine, we find that 
\begin{align*}
\bu \text{ is an  $\epsilon'$-$\JSP$ of $\cJ_{T,\epsilon}$} \quad \iff \quad \bu \le \inf_{\bu'}\cJ_{T,\epsilon}(\bu') + \epsilon'.
\end{align*}
In particular, as $\cJ_{T,\epsilon}(\bzero) = \frac{1}{2}T \epsilon^2$, and as $\cJ_{T,\epsilon} \ge 0$,
\begin{align}
\bu = 0 \text{ is an $\frac{T\epsilon^2}{2}$-$\JSP$ of }\cJ_{T,\epsilon}.
\end{align}
However, we show that the magnitude of the gradient at $\bu = 0$ is much larger. We compute the following shortly below.
\begin{lemma}\label{lem:grad_bad_example} For $T \ge 1$, we have $\|\nabla \cJ_{T,\epsilon}(\bzero)(t)\|_{\ltwou}  \ge \sqrt{T}\epsilon e^T/4\sqrt{2}$.
\end{lemma}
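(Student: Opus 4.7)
The plan is to exploit the fact that $f_\epsilon$ is affine, so the Jacobian linearization of the dynamics around any trajectory is exact and time-invariant; the whole argument reduces to a closed-form adjoint computation followed by a one-dimensional integral estimate.

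First, I would verify that $\bx_\epsilon(t\mid\bzero)\equiv\epsilon$ solves the ODE with $\bu\equiv\bzero$ by checking $f_\epsilon(\epsilon,0)=0$, as the paper asserts. Setting $a := \partial_x f_\epsilon$ and $b := \partial_u f_\epsilon$ (which are constants because $f_\epsilon$ is affine), the perturbation $\updelta \bx$ induced by an input perturbation $\updelta\bu\in\cU$ satisfies the linear variational ODE $\ddt\,\updelta \bx = a\,\updelta \bx + b\,\updelta\bu$ with $\updelta \bx(0)=0$, whose solution by variation of parameters is $\updelta \bx(t) = \int_0^t e^{a(t-s)}b\,\updelta\bu(s)\,\rmd s$.

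Second, I would differentiate the cost $\cJ_{T,\epsilon}(\bu) = \tfrac12 \int_0^T (\bx_\epsilon(t\mid\bu)^2 + \bu(t)^2)\,\rmd t$ at $\bu = \bzero$. Since $\bu=\bzero$ eliminates the input-cost term and $\bx_\epsilon(t\mid\bzero) = \epsilon$, the chain rule gives $\rmD\cJ_{T,\epsilon}(\bzero)[\updelta\bu] = \epsilon \int_0^T \updelta\bx(t)\,\rmd t$. Plugging in the formula for $\updelta\bx$ and swapping the order of integration identifies the pointwise gradient
\begin{align*}
\nabla \cJ_{T,\epsilon}(\bzero)(s) \eq \tfrac{\epsilon b}{a}\bigl(e^{a(T-s)} - 1\bigr).
\end{align*}

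Third, I would lower-bound the $\ltwou$ norm by the substitution $r = T-s$ and restriction to a tail interval such as $r \in [T/2, T]$ on which $(e^{ar}-1)^2$ is bounded below by a constant multiple of $e^{2ar}$, producing the desired $\sqrt{T}\epsilon e^T$ rate. The main hurdle will be bookkeeping the constants $a,b$ and the tail cutoff to match the exact prefactor $1/(4\sqrt{2})$ stated in the lemma; the argument itself is entirely linear and explicit because affinity of $f_\epsilon$ makes the Jacobian-linearized system coincide with the true dynamics, so there are no Taylor-remainder terms to control.
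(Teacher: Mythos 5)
Your proposal is correct and follows essentially the same route as the paper: the paper also computes $\nabla \cJ_{T,\epsilon}(\bzero)(t) = \tfrac{\epsilon}{2}(e^{2(T-t)}-1)$ (via its general open-loop gradient formula, which is exactly your variation-of-parameters plus Fubini computation with $a=2$, $b=1$), and then restricts to $t \le T/2$ where the integrand is at least $\epsilon e^T/4$, yielding the $\sqrt{T/2}\cdot\epsilon e^T/4$ lower bound. The constants work out exactly as you anticipate, so no further bookkeeping issues arise.
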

Thus, the magnitude of the gradient (through open-loop dynamics) is exponentially larger than the suboptimality of the cost. This suggests that gradients through open-loop dynamics are poor proxy for global optimality, motivating instead the $\JSP$. Moreover, one can easily compute that if $\pi$ has inputs $\bukpi = 0$ and stabilizing gains $\bKk = -3$, then for sufficiently small step sizes, the gradients of $\bJ_T^{\pi}(\bu)\big{|}_{\bu = 0}$ scale only as $c\epsilon\sqrt{T}$ for a universal $c > 0$, and do not depend exponentially on the horizon.
\begin{proof}[Proof of \Cref{lem:grad_bad_example}] We have from \Cref{lem:grad_compute_ct_ol} that
\begin{align*}
\nabla \cJ_{T,\epsilon}(\bzero)(t) &= \int_{s=t}^T\underbrace{\bx_{\epsilon}(s \mid \bzero)}_{=\epsilon} \cdot \Phi(s,t)\bB(t), 
\end{align*}
where $\Phi(s,t)$ solves the ODE $\Phi(t,t) = 1$ and $\dds \Phi(s,t) = 2\Phi(s,t)$. Thus, $\Phi(s,t) = \exp(2(t-s))$. Moreover, $\bB(t) = 1$. Hence, 
\begin{align*}
\nabla \cJ_{T,\epsilon}(\bzero)(t) &= \epsilon\int_{s=t}^T\exp(2(t-s))\\
&= \epsilon \frac{1}{2}(e^{2(T-t)}) - 1)
\end{align*}
Hence, for  $t \le T/2$ and $T \ge 1$,
\begin{align*}
|\nabla \cJ_{T,\epsilon}(\bzero)(t)| \ge \epsilon \frac{1}{2}(e^{T} - 1) \ge \epsilon \frac{1}{2}(e^{T} - 1) \ge \frac{\epsilon e^T}{4}.
\end{align*}
Hence,
\begin{align*}
\|\nabla \cJ_{T,\epsilon}(\bu)(t)\|_{\ltwou}^2 &\ge \frac{T}{2}\left(\frac{\epsilon e^T}{4}\right)^2,
\end{align*}
so $\|\nabla \cJ_{T,\epsilon}(\bu)(t)\|_{\ltwou}  \ge \sqrt{T}\epsilon e^T/4\sqrt{2}$.
\end{proof}

\subsection{Global Stability Guarantees of $\JSP$s and Consequences of \cite{westenbroek2021stability}}\label{app:westenbroek_app}

\cite{westenbroek2021stability} demonstrate that, for a certain class of nonlinear systems whose Jacobian Linearizations satisfy various favorable properties, an $\epsilon$-$\FOS$ point $\bu$ of the objective $\cJ_T$ corresponds to a trajectory which converges exponentially to a desired equilibirum. Examining their proof, the first step follows from \citet[Lemma 2]{westenbroek2021stability}, which establishes that $\bu$ is an $\epsilon' = \epsilon^2/2\alpha$-$\JSP$, and it is this property (rather than the $\epsilon$-$\FOS$) that is used throughout the rest of the proof. Hence, their result extends from $\FOS$s to $\JSP$s. Hence, the \emph{local} optimization guarantees established in this work imply, via \citet[Theorem 1]{westenbroek2021stability}, exponentially stabilizing \emph{global} behavior.

\subsection{Projections to ensure boundedness.}\label{sec:feasibility}
\newcommand{\phisys}{\phi_{\mathrm{sys}}} Let us describe one way to ensure the feasibility condition, \Cref{cond:pi_cond}. Suppose that $\fdyn$ has the following stability property, which can be thought of as the state-output anologue of BIBO stability, and is common in the control literature \cite{jadbabaie2001stability}. For example, we may consider the following assumption.
\begin{assumption} There exists some function $\phisys: \R_{\ge 0} \to \R_{\ge 0}$ such that that, if $\|\bu(t)\| \le R$ for all $t \in [0,T]$, then $\|\xol(t \mid \bu)\| \le \phisys(R,T)$ for all $t \in [0,T]$. 
\end{assumption}
Next, fix a bound $R_u > 0$, and set
\begin{align*}
\Rfeas := 2\max\{R_u,\phisys(R_u,T)\}
\end{align*}
Then, it follows that for any policy for  which
\begin{align}\label{eq:bukpi_thing}
\bukpi \le R_u
\end{align} 
for all $k \in [K]$ is feasible in the sense of \Cref{defn:feas}. We therefore modify \Cref{alg:learn_mpc_feedback},\Cref{line:grad_update} to the \emph{projected gradient step}
\begin{align*}
\buknpl[1:K] \gets \mathsf{Proj}_{(\cB_{\dimu}(R_u))^K}\left[\oracpiu[\pin](\btiluk[1:K]^{(n)})\right], \quad \text{ where again } \btiluk^{(n)} := \bukn - \frac{\eta}{\step} \bnabhatkn  - \bKk^{\pin}\bhatxk,
\end{align*}
where we let $\mathsf{Proj}_{(\cB_{\dimu}(R_u))^K}$ denote the orthogonal-projection on the $K$-fold project of $\dimu$-dimensional balls of Euclidean radius $R_u$, $\cB_{\dimu}(R_u)$. This projection is explicitly given by
\begin{align*}
\left(\mathsf{Proj}_{(\cB_{\dimu}(R_u))^K}\left[\buk[1:K])\right]\right)_{k} = \buk \cdot \min\left\{1, \frac{R_u}{\|\buk\|}\right\},
\end{align*}
here using the convention that when $\buk = 0$, the above evaluates to $0$. In this case, our algorithm converges (up to gradient estimation error) to a stationary-point of the projected gradient descent algorithm (see, e.g. the note \url{https://damek.github.io/teaching/orie6300/lec22.pdf} for details). We leave the control-theoretic interpretation of such stationary points to future work. 

\subsection{Extensions to include Process Noise}\label{sec:process_noise}
As explained in \Cref{sec:setting}, \Cref{orac:our_orac} only adds observation noise but not process noice. Process noise somewhat complicates the analysis, because then our method will only learn the Jacobians dynamics up to a noise floor determined by the process noise. However, by generalization our Taylor expansion of the dynamics (e.g. \Cref{prop:taylor_exp_dyn}), we can show that  as the process noise magnitude decreases, we would achieve better and better accuray, recovering the noiseless case in the limit. In addition, process noise may warrant greater algorithmic modifications: for example we may want to incorporate higher-order Taylor expansions of the dynamics (not just the Jacobian linearization), or more sophisticated gradient updates (i.e. $\mathtt{iLQG}$ (\cite{todorov2005generalized})) better tuned to handle process noise. 

\subsection{Discussion of the $\exp(\LF t_0)$ dependence.}\label{sec:discussion_exp_LF} 

There are two sources of the exponential dependence on $t_0 = \step k_0$ that arises in our analysis. First, we translate open-loop controllability (\Cref{asm:ctr}) to closed-loop controllbility needed for recovery of system matrices, in an argument based on \cite{chen2021black}, and which incurs dependent on $\exp(\LF \tcont) \le \exp(\LF t_0)$. Second, we only consider a stability modulus (\Cref{cond:pi_cond}) for a Lyapunov equation terminating at $k = k_0$, because we do not estimate $\bAkpi,\bBkpi$, and therefore cannot synthesize the system gains, for $k \le k_0$. This means that (see \Cref{lem:Kpi_bounds}) that many natural bounds on the discretized transition operators $\|\Phicldisc{k,j}\|$ scale as $\poly(\Kpist,\exp(t_0 \LF)$, yielding exponential dependence on $t_0 \LF$.
\section{Jacobian Linearizations}\label{app:jac_lins}

\subsection{Preliminaries}\label{defn:Jlin_prelim}

Recall $\cU$ denotes the space of continuous-time inputs $\bu:[0,T] \to \R^{\dimu}$, and $\sfU$ continuous-time inputs $\buvec \in (\R^{\dimu})^K$

\subsubsection{Exact Trajectories}
We recall definitions of various trajectories.
\begin{definition}[Open-Loop Trajectories and Nomimal Trajectories]\label{defn:ol_traj} For a $\bu \in \cU$, we define $\xol(t \mid \bu)$ as the curve given by 
\begin{align*}
\ddt \xol(t \mid \bu) = \fdyn(\xol(t \mid \bu),\bu(t)), \quad \xol(0 \mid \bu) = \xiinit.
\end{align*}
For a policy $\pi = (\bukpi[1:K],\bKkpi[1:K])$, we define $\upi = \istep(\bukpi[1:K])$, $\xpi(t) = \xol(t \mid \upi)$, and $\bxkpi = \xpi(t_k)$. 
\end{definition}
Similarly, we present a summary of the definition of various stabilized trajectories, consistent with \Cref{defn:stab_dyn,defn:dt_things}.
\begin{definition}[Stabilized Trajectories]
For $\bbaru \in \cU$ and a policy $\pi$, we define continuous-time perturbations of the dynamics with feedback
\begin{align*}
\btilxpict(t \mid \bbaru) &:= \xol(t \mid \btilupict), \quad \btilupict\,(t \mid \bbaru) := \bbaru(t) + \bKkpi[k(t)] (\btilxpict(\tkt \mid \bbaru)) - \bxkpi[k(t)]),
\end{align*}
there specialization to discrete-time inputs $\buvec \in \sfU $
\begin{align*}
\btilxpi(t \mid \buvec) &:= \btilxpict(t \mid \istep(\bbaru)), \quad \btilupi(t \mid \buvec) := \btilupict(t \mid \istep(\bbaru)),
\end{align*} and their discrete samplings  
\begin{align*}
\btilxpik\discbrak{\buvec} := \btilxpi(t_k \mid \buvec), \quad  \btilupik\discbrak{\buvec} := \btilupi(t_k \mid \buvec)
\end{align*}

\end{definition}

\subsubsection{Trajectory Linearizations}

\begin{definition}[Open-Loop Jacobian Linearizations of Trajectories]\label{defn:ol_jacobians} We define the continuous-time Jacobian linearizations
\begin{equation}
\begin{aligned}
  \xjac(t \mid \bbaru) &:= \xol(t \mid \bu) + \langle \nabla_{\bu}\xol(t \mid \bu)\big{|}_{\bu = \upi}, \bbaru - \upi \rangle_{\ltwou}\\
 &\updelta \xjac(t \mid \bu;\bbaru) := \xjac(t\mid \bu;\bbaru)- \xol(t \mid \bu)
 \label{eq:utraj_eq_a}
 \end{aligned}
\end{equation}
\end{definition}
\begin{definition}[Closed-Loop Jacobian Linearizations of Trajectories, Discrete-Time]\label{defn:Jac_lin_ct}
\begin{equation}
\begin{aligned}
  \btilxpijac(t \mid \bbaru) &:= \btilxpict(t \mid \bu) + \langle \nabla_{\bu}\btilxpict(t \mid \bu)\big{|}_{\bu = \upi}, \bbaru - \upi \rangle_{\ltwou}\\
 \btilupijac(t \mid \bbaru) &:= \btilupict(t \mid \bu) + \langle \nabla_{\bu}\btilupict(t \mid \bu)\big{|}_{\bu = \upi}, \bbaru - \upi \rangle_{\ltwou}
 \label{eq:utraj_eq_a}
 \end{aligned}
\end{equation}
We further define the linearized differences
\begin{equation}
\begin{aligned}
&\updelta \btilxpijac(t \mid \bbaru) := \btilxpijac(t\mid \bbaru)- \xpi(t), \quad \updelta \btilupijac(t \mid \bbaru) := \btilupijac(t\mid \bbaru)- \upi(t)\\
\end{aligned}
\end{equation}
\end{definition}

\newcommand{\btilxpijack}[1][k]{\tilde{\discfont{x}}_{#1}^{\pi,\mathrm{jac}}}
\newcommand{\btilupijack}[1][k]{\tilde{\discfont{u}}_{#1}^{\pi,\mathrm{jac}}}

\begin{definition}[Jacobian Linearization, with gains, dicrete Time]\label{defn:Jac_lin_ct} Given $\buvec \in \sfU$, we define
\begin{equation}
\begin{aligned}
  \btilxpijack(\buvec) &:= \btilxpik(\buvec) + \langle \nabla_{\bu}\btilxpik(t \mid \buvec)\big{|}_{\buvec = \bukpi[1:K]}, \bbaru - \bukpi[1:K] \rangle\\
 \btilupijack(\buvec) &:= \btilupik(\buvec) + \langle \nabla_{\buvec}\btilupik(t \mid \bu)\big{|}_{\buvec = \bukpi[1:K]}, \buvec - \bukpi[1:K] \rangle_{\ltwou}
 \label{eq:utraj_eq_a}
 \end{aligned}
\end{equation}
We further define the linearized differences
\begin{equation}
\begin{aligned}
&\updelta \btilxpijack(\buvec):= \btilxpijack(\buvec) - \bxkpi, \quad \updelta \btilupijack(\buvec):= \btilupijack(\buvec) - \bukpi\\
\end{aligned}
\end{equation}
\end{definition}

\subsubsection{Jacobian Linearizated Dynamics} 

We now recall the definitions of various linearizations, consistent with \Cref{defn:cl_linearizations}.
\begin{definition}[Open-Loop, On-Policy Linearized Dynamics]
We define the open-loop, on-policy linearization around a policy $\pi$ via
\begin{align*}
\Api(t) = \partx \fdyn(\xpi(t),\upi(t)), \quad \Bpi(t) = \partu \fdyn(\xpi(t),\upi(t)).
\end{align*}
\end{definition}

\begin{definition}[Open-Loop, On-Policy Linearized Transition, Markov Operators, and Discrete-Dynamics]\label{defn:open_loop_linearized_matrices} 
We define the linearized transition function $\Phiolpi(s,t)$ defined for $s > t$ as the solution to $\dds \Phiolpi(s,t) = \Api(s)\Phiolpi(s,t)$, with initial condition $ \Phiolpi(t,t) = \eye$.  We \emph{discretize} the open-loop transition function by define  
\begin{align*}
\bAkpi = \Phiolpi(t_{k+1},t_k), \quad \bBkpi := \int_{s=t_k}^{t_{k+1}}\Phiolpi(t_{k+1},s)\Bpi(s)\rmd s.
\end{align*}
\end{definition}

\begin{definition}[Closed-Loop Jacobian Linearization, Discrete-Time]\label{defn:disc_Jac_linz_dyns} We define a \emph{discrete-time closed-loop} linearization 
\begin{align*}
\bAclkpi := \bAkpi + \bBkpi\bKkpi = \Phiolpi(t_{k+1},t) + \int_{s=t}^{t_{k+1}} \Phiolpi(t_{k+1},s)\Bpi(s)\bKkpi,
\end{align*} 
and a discrete closed-loop \emph{transition operator} is defined, for $1 \le k_1 \le k_2 \le K+1$, $\Phicldisc{k_2,k_1} = \bAclkpi[k_2-1]\cdot \bAclkpi[k_2-2]\dots \cdot \bAclkpi[k_1]$, with the convention $\Phicldisc{k_1,k_1} = \eye$. Finally, we define the closed-loop \emph{markov operator} via $\Psicldisc{k_2,k_1} := \Phicldisc{k_2,k_1+1}\bBkpi[k_1]$ for $1 \le k_1 < k_2 \le K+1$.
\end{definition}

\begin{definition}[Closed-Loop Jacobian Linearizations, Continuous-Time]\label{defn:cl_jac_ct} We define
\begin{align*}
\Phiclpi(s,t) := \begin{cases} \Phiolpi(s,t) &  s,t \in \cI_k\\
\Phitilclpi(s,t_{k_2})\cdot \Phicldisc{k_2,k_1} \cdot \Phiolpi(t_{k_1+1},t) & t \in \cI_{k_1}, s \in \cI_{k_2}, k_2 > k_1,
\end{cases}
\end{align*}
where above, we define
\begin{align*}
\Phitilclpi(s,t_{k}) = \Phiolpi(s,t_{k}) + (\int_{s'=t_k}^{s}\Phiolpi(s,s')\Bpi(s')\rmd s)\bKk.
\end{align*}
Lastly, we define
\begin{align*}
\Psiclpi(s,t) = \Phiclpi(s,t)\bB(t).
\end{align*}
\end{definition}


\subsection{Characterizations of the Jacobian Linearizations}\label{sec:jac_chars}

In this section we provide characterizations of the Jacobian Linearizations of the open-loop and closed-loop trajectories.

\begin{lemma}[Implicit Characterization of the linearizations in open-loop]\label{lem:jac_ol_implicit} 
Given $\bu,\bbaru \in \cU$, define $\updelta \bbaru(t) = \bbaru(t) - \bu(t)$. Then,
\begin{align*}
\ddt \updelta\xjac(t \mid \bbaru;\bu) = \bA(t\mid \bu)\updelta\xjac(t \mid \bbaru;\bu) + \bB(t\mid \bu) \updelta \bbaru(t)
\end{align*} 
with initial condition $\updelta\xjac(0 \mid \bbaru) = 0$, where 
\begin{align*}
\bA(t \mid \bu) = \partx \fdyn(x,u) \big{|}_{x = \bx(t \mid \bu), u = \bu(t)} \quad \text{and} \quad  \bB(t \mid \bu) = \partu \fdyn(x,u) \big{|}_{x = \bx(t \mid \bu), u = \bu(t)}.
\end{align*}
\begin{proof}
The result follows directly from \Cref{lem:state_pert} and the definition of $\updelta\xjac(t \mid \bbaru;\bu)$.
\end{proof}

\end{lemma}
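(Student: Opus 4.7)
The plan is to recognize $\updelta\xjac(t\mid\bbaru;\bu)$ as the directional derivative of $\bu' \mapsto \xol(t\mid\bu')$ at $\bu$ in direction $\updelta\bbaru$, i.e.\ $\updelta\xjac(t\mid\bbaru;\bu) = \rmD\xol(t\mid\bu)[\updelta\bbaru]$, which follows immediately from \Cref{defn:ol_jacobians} together with the definition of the gradient via the directional derivative stated in \Cref{sec:setting}. So the content of the lemma is really a differentiation-under-the-integral/ODE statement: differentiating the dynamics ODE of $\bu$ with respect to the input should yield a linearized ODE governed by the Jacobians $\bA(\cdot\mid\bu)$ and $\bB(\cdot\mid\bu)$.

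First I would set $\bu_\eta := \bu + \eta\updelta\bbaru$ and $y_\eta(t) := \xol(t\mid\bu_\eta)$, which satisfies $\dot{y}_\eta(t) = \fdyn(y_\eta(t),\bu_\eta(t))$ with $y_\eta(0) = \xiinit$ for all $\eta$. Since $\xiinit$ is independent of $\eta$, the initial condition $\updelta\xjac(0\mid\bbaru;\bu) = \rmD y_\eta(0)\big|_{\eta=0} = 0$ is immediate. For $t>0$, I would write $y_\eta(t) = y_0(t) + \eta\,\updelta\xjac(t\mid\bbaru;\bu) + o(\eta)$ (this is precisely what the directional derivative gives us, granted the smooth dependence of ODE solutions on parameters — which is where $\fdyn\in\cctwo$ and boundedness come in), plug into the ODE, and Taylor-expand $\fdyn$ jointly in its two arguments around $(y_0(t),\bu(t))$. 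Matching the $O(\eta)$ terms on both sides then produces exactly
\begin{align*}
\ddt \updelta\xjac(t\mid\bbaru;\bu) = \partial_x\fdyn(y_0(t),\bu(t))\,\updelta\xjac(t\mid\bbaru;\bu) + \partial_u\fdyn(y_0(t),\bu(t))\,\updelta\bbaru(t),
\end{align*}
which is the claim with $\bA(t\mid\bu) = \partial_x\fdyn(\xol(t\mid\bu),\bu(t))$ and $\bB(t\mid\bu) = \partu\fdyn(\xol(t\mid\bu),\bu(t))$.

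The only subtle step is the justification for interchanging $\ddt$ with the directional derivative $\rmD$, or equivalently, the claim that the remainder is indeed $o(\eta)$ uniformly in $t\in[0,T]$. The standard way to handle this cleanly is to subtract the two ODEs for $y_\eta$ and $y_0$, apply Gr\"onwall's inequality to $\|y_\eta(t)-y_0(t)\|$ to show it is $O(\eta)$ on $[0,T]$ using the $\cctwo$/Lipschitz regularity from \Cref{asm:max_dyn_final} on the relevant feasible set, and then expand $\fdyn(y_\eta(t),\bu_\eta(t)) - \fdyn(y_0(t),\bu(t))$ by the mean value theorem to second order, again invoking Gr\"onwall on the candidate linearized error to pass from $O(\eta)$ control of $y_\eta - y_0$ to $o(\eta)$ control of the linearization residual. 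This is the main obstacle, but it is completely routine given the regularity assumed on $\fdyn$; alternatively it can be black-boxed by appealing to a standard smooth-dependence-on-parameters result for ODEs (which is presumably the content of the invoked \Cref{lem:state_pert}).
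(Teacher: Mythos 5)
Your proposal is correct and matches the paper's argument: the paper's proof is exactly the one-line observation that $\updelta\xjac(t\mid\bbaru;\bu)$ is by definition the directional derivative $\langle\nabla_{\bu}\xol(t\mid\bu),\updelta\bbaru\rangle$, so the claimed ODE is precisely the content of \Cref{lem:state_pert} (itself cited to Theorem 5.6.9 of \citet{polak2012optimization}). Your additional Gr\"onwall/Taylor-expansion argument is just an unpacking of that black-boxed lemma, which you correctly identify at the end.
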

\begin{lemma}[Implicit Characterization of the linearizations in closed-loop]\label{lem:jac_cl_implicit} Given a policy $\pi$ and $\bbaru \in \cU$, set $\updelta \bbaru^\pi(t) = \upi(t) -\bu(t)$ . Then, recalling $\updelta\btilxpijac(t \mid \bbaru)  = \btilxpijac(t \mid \bbaru) - \xpi(t) $,
\begin{align*}
\ddt \updelta\btilxpijac(t \mid \bbaru) &= \Api(t)\updelta\btilxpict(t \mid \bbaru) + \Bpi(t)\updelta \btilupijac(t)\\
\updelta \btilupijac &:=  \updelta \bbaru^\pi(t)+ \bKkpi[k(t)]\updelta\btilxpijac(\tkt \mid \bbaru),
\end{align*} 
with initial condition $\updelta\btilxpijac(0 \mid \bbaru) = 0$.
\end{lemma}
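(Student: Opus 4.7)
The plan is to derive the stated ODE by directly applying the chain rule to the defining system for $\btilxpict(t\mid\bu)$ and $\btilupict(t\mid\bu)$, and identifying the resulting linearized quantities with $\updelta\btilxpijac$ and $\updelta\btilupijac$.

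First, I would introduce the one-parameter family $\bu_\eta := \upi + \eta(\bbaru-\upi)\in\cU$ for $\eta\in\R$, and set $x_\eta(t):=\btilxpict(t\mid\bu_\eta)$, $u_\eta(t):=\btilupict(t\mid\bu_\eta)$. By \Cref{defn:stab_dyn}, this pair satisfies
\begin{align*}
\ddt x_\eta(t) \eq \fdyn(x_\eta(t),u_\eta(t)), \qquad u_\eta(t)\eq \bu_\eta(t)+\bKkpi[k(t)]\bigl(x_\eta(\tkt)-\bxkpi[k(t)]\bigr),
\end{align*}
with $x_\eta(0)=\xiinit$. At $\eta=0$ one has $\bu_0=\upi$, so by \Cref{defn:ol_traj} and the identity $\xpi(\tkt)=\bxkpi[k(t)]$, the feedback term vanishes and $x_0(t)=\xpi(t)$, $u_0(t)=\upi(t)$.

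Next, I would invoke smooth dependence of ODE solutions on parameters (standard, since $\fdyn\in\cctwo$ by \Cref{asm:max_dyn_final} and the affine feedback is smooth in $x$) to justify that $\eta\mapsto x_\eta(t)$ and $\eta\mapsto u_\eta(t)$ are differentiable. Setting $\delta x(t):=\frac{d}{d\eta}x_\eta(t)\big|_{\eta=0}$ and $\delta u(t):=\frac{d}{d\eta}u_\eta(t)\big|_{\eta=0}$, and differentiating the two displayed equations at $\eta=0$, the chain rule yields
\begin{align*}
\ddt \delta x(t) \eq \partx\fdyn(\xpi(t),\upi(t))\,\delta x(t)+\partu\fdyn(\xpi(t),\upi(t))\,\delta u(t)\eq \Api(t)\delta x(t)+\Bpi(t)\delta u(t),
\end{align*}
with $\delta x(0)=0$, and
\begin{align*}
\delta u(t) \eq (\bbaru-\upi)(t)+\bKkpi[k(t)]\,\delta x(\tkt).
\end{align*}

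Finally, I would identify $\delta x$ and $\delta u$ with $\updelta\btilxpijac(t\mid\bbaru)$ and $\updelta\btilupijac(t\mid\bbaru)$: by \Cref{defn:Jac_lin_ct}, these are precisely the directional derivatives of $\btilxpict$ and $\btilupict$ with respect to $\bu$ at $\bu=\upi$ in the direction $\bbaru-\upi$, which by definition of the $\cL_2(\cU)$-gradient equal $\frac{d}{d\eta}x_\eta(t)\big|_{\eta=0}$ and $\frac{d}{d\eta}u_\eta(t)\big|_{\eta=0}$, respectively. Substituting yields the claimed ODE (taking $\updelta\bbaru^\pi=\bbaru-\upi$ as the natural reading of the perturbation direction, by analogy with \Cref{lem:jac_ol_implicit}). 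There is no serious obstacle here; the only delicate point is the justification of interchanging $\frac{d}{d\eta}$ with $\ddt$, but this is immediate from $\cctwo$-regularity of $\fdyn$ on bounded sets (\Cref{asm:max_dyn_final}) together with piecewise-constancy in $t$ of the feedback coefficients.
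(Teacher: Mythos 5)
Your proposal is correct and follows essentially the same route as the paper: the paper simply cites the standard state-variation result (\Cref{lem:state_pert}) together with the definitions, and your one-parameter family $\bu_\eta = \upi + \eta(\bbaru - \upi)$ with differentiation at $\eta = 0$ is precisely the content of that variational computation, adapted to the sampled closed-loop feedback. Your reading of $\updelta\bbaru^\pi$ as $\bbaru - \upi$ (correcting the sign in the statement) and your note on handling the piecewise-in-$t$ feedback interval by interval are both the right calls.
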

\begin{proof}
The result follows directly from \Cref{lem:state_pert} and the definitions of $\updelta\btilxpijac(t \mid \bbaru)$ and the construction of the perturbed input $\updelta \btilupijac$.
\end{proof}

\begin{lemma}[Explicit Characterizations of Linearizations, Continuous-Time]\label{lem:linearizations} For a policy $\pi$, we have:
\begin{align*}
\updelta\xjac(t \mid \upi + \updelta \bu;\upi) = \int_{s=0}^t \Phiolpi(t,s)\Bpi(t)\updelta \bu(s)\rmd s.\\
\updelta\btilxpijac(t \mid \upi + \updelta \bu) = \int_{s=0}^t \Phiclpi(t,s)\Bpi(s)  \updelta\bu(s)\rmd s.
\end{align*}
\end{lemma}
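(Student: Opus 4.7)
The plan is to treat each identity as a solution to an initial-value problem for an inhomogeneous linear ODE, and to verify by variation of parameters that the claimed integral expression is that unique solution. For the open-loop identity, I would invoke \Cref{lem:jac_ol_implicit} with $\bu = \upi$, so that $\bA(t \mid \upi) = \Api(t)$ and $\bB(t \mid \upi) = \Bpi(t)$, yielding the linear ODE
\[
\tfrac{\rmd}{\rmd t}\updelta\xjac(t \mid \upi+\updelta\bu;\upi) = \Api(t)\,\updelta\xjac(t \mid \upi+\updelta\bu;\upi) + \Bpi(t)\,\updelta\bu(t), \qquad \updelta\xjac(0) = 0.
\]
Since $\Phiolpi(\cdot,\cdot)$ is by \Cref{defn:open_loop_linearized_matrices} the fundamental matrix of the homogeneous system ($\dds\Phiolpi(s,t) = \Api(s)\Phiolpi(s,t)$, $\Phiolpi(t,t)=\eye$), standard variation of parameters gives $\updelta\xjac(t) = \int_0^t \Phiolpi(t,s)\Bpi(s)\updelta\bu(s)\,\rmd s$, which one verifies by differentiating under the integral using the Leibniz rule and the identity $\ddt \Phiolpi(t,s) = \Api(t)\Phiolpi(t,s)$ (here I am reading the $\Bpi(t)$ in the statement as a typo for $\Bpi(s)$).

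For the closed-loop identity I would proceed interval-by-interval, exploiting the fact that $\bKkpi[k(t)]$ and $\updelta\btilxpijac(\tkt)$ are constant on each $\cI_k = [t_k,t_{k+1})$. Set $\delta_k := \updelta\btilxpijac(t_k)$. By \Cref{lem:jac_cl_implicit}, on $\cI_k$ the state obeys the affine ODE
\[
\tfrac{\rmd}{\rmd t}\updelta\btilxpijac(t) = \Api(t)\updelta\btilxpijac(t) + \Bpi(t)\updelta\bu(t) + \Bpi(t)\bKkpi\,\delta_k,
\]
with initial condition $\delta_k$ at $t = t_k$. Variation of parameters gives
\[
\updelta\btilxpijac(t) = \Phitilclpi(t,t_k)\,\delta_k + \int_{t_k}^{t}\Phiolpi(t,s)\Bpi(s)\updelta\bu(s)\,\rmd s, \qquad t \in \cI_k,
\]
where the multiplier of $\delta_k$ is exactly $\Phitilclpi(t,t_k)$ defined in \Cref{defn:cl_jac_ct}. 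Evaluating at $t = t_{k+1}$ and using $\Phitilclpi(t_{k+1},t_k) = \bAkpi + \bBkpi\bKkpi = \bAclkpi$ yields the discrete closed-loop recursion
\[
\delta_{k+1} = \bAclkpi\,\delta_k + \int_{t_k}^{t_{k+1}}\Phiolpi(t_{k+1},s)\Bpi(s)\updelta\bu(s)\,\rmd s.
\]

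Unrolling this recursion with $\delta_0 = 0$ gives $\delta_{k_2} = \sum_{k_1 < k_2}\Phicldisc{k_2,k_1+1}\int_{t_{k_1}}^{t_{k_1+1}}\Phiolpi(t_{k_1+1},s)\Bpi(s)\updelta\bu(s)\,\rmd s$. Substituting into the single-interval formula at time $t \in \cI_{k_2}$ and regrouping by the interval $\cI_{k_1}$ containing $s$, the coefficient of $\Bpi(s)\updelta\bu(s)$ becomes $\Phitilclpi(t,t_{k_2})\,\Phicldisc{k_2,k_1+1}\,\Phiolpi(t_{k_1+1},s)$ for $k_1 < k_2$, and $\Phiolpi(t,s)$ when $s \in \cI_{k_2}$; by \Cref{defn:cl_jac_ct} both expressions equal $\Phiclpi(t,s)$, producing the claimed integral formula. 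The main technical burden is purely bookkeeping: verifying that the piecewise assembly from the per-interval variation-of-parameters formula matches the piecewise definition of $\Phiclpi$ exactly on the diagonal intervals and across jumps. No deep argument is needed beyond uniqueness of solutions to affine ODEs and careful index tracking.
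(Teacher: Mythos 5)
Your proof is correct, but it takes a genuinely different route from the paper's. The paper proves the closed-loop identity ``top-down'': it takes the proposed integral formula as given and verifies via the Leibniz rule that it satisfies the ODE of \Cref{lem:jac_cl_implicit}, using the derivative identity $\tfrac{\rmd}{\rmd t}\Phiclpi(t,s) = \Api(t)\Phiclpi(t,s) + \Bpi(t)\bKkpi[k(t)]\Phiclpi(t_{k(t)},s)$. You instead construct the solution ``bottom-up'': variation of parameters on each interval $\cI_k$ (where the feedback term $\bKkpi\delta_k$ is constant), which yields the single-interval formula with multiplier $\Phitilclpi(t,t_k)$, then the discrete recursion $\delta_{k+1} = \bAclkpi\delta_k + \int_{t_k}^{t_{k+1}}\Phiolpi(t_{k+1},s)\Bpi(s)\updelta\bu(s)\,\rmd s$, then an unrolling-and-regrouping step to recover the piecewise definition of $\Phiclpi$. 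Both arguments are sound and both ultimately rest on uniqueness of solutions to affine ODEs, but the trade-off is real: the paper's verification is shorter on the page yet buries the interval-by-interval bookkeeping inside the claimed derivative formula for $\Phiclpi(t,s)$ (which itself requires exactly the piecewise check you perform, and holds only in the interior of each $\cI_k$), whereas your construction makes that bookkeeping explicit, produces the discrete recursion used in the subsequent discrete-time lemma as a free byproduct, and in doing so exposes that the coefficient should be $\Phicldisc{k_2,k_1+1}$ rather than the $\Phicldisc{k_2,k_1}$ appearing in \Cref{defn:cl_jac_ct} (an index typo in the paper, as confirmed by the discrete-time formula $\Psicldisc{k_2,k_1} = \Phicldisc{k_2,k_1+1}\bBkpi[k_1]$). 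You also correctly flagged the $\Bpi(t)$-for-$\Bpi(s)$ typo in the open-loop statement. The only cosmetic slip is writing $\delta_0 = 0$ where the paper's indexing starts at $k=1$ with $t_1 = 0$, so the base case is $\delta_1 = 0$; this does not affect the argument.
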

\begin{proof}
The first condition follows directly from the characterization of the evolution of $\updelta\xjac(t \mid \bbaru;\bu)$ and \Cref{lem:state_pert}. For the second condition, we will directly argue that the proposed formula satisfied the differential equation in \Cref{lem:jac_cl_implicit}. By the Leibniz integral rule we have:

\begin{align*}
      \tfrac{\rmd}{\rmd t} \bigg(\int_{s=0}^t \Phiclpi(t,s)\Bpi(s)  \updelta\bu(s)\rmd s \bigg) &= \int_{s=0}^{t}  \tfrac{\rmd}{\rmd t}\bigg(\Phiclpi(t,s) \Bpi(s)  \delu(s)\bigg) \rmd s + \Phiclpi(t,t)\Bpi(t)\delu(t)\\
    & =  \int_{s=0}^{t} \tfrac{\rmd}{\rmd t} \bigg(\Phiclpi(t,s) \Bpi(s)  \delu(s)\bigg) \rmd s + \Bpi(t)\delu(t).
\end{align*}
Using the expression for $\Phiclpi(t,s)$ in \Cref{defn:cl_jac_ct}, we can similarly calculate:

\begin{equation*}
   \tfrac{\rmd}{\rmd t}\Phiclpi(t,s) =
    \Api(t)\Phiclpi(t,s) + \Bpi(t)\bKk \Phiclpi(t_{k(t)},s).
\end{equation*}
Together the precding quatinities demonstrate that:
\begin{align*}
    \tfrac{\rmd}{\rmd t} \bigg(\int_{s=0}^t \Phiclpi(t,s)\Bpi(s)  \updelta\bu(s)\rmd s \bigg) &= \Api(t) \cdot \bigg(\int_{s=0}^t \Phiclpi(t,s)\Bpi(s)  \updelta\bu(s)\rmd s \bigg) + \Bpi(t)\delu(t) \\
    &+\bKkpi[k(t)] \cdot \bigg(\int_{s=0}^{\tkt} \Phiclpi(t,s)\Bpi(s)  \updelta\bu(s)\rmd s \bigg),
\end{align*}
which demonstrates the proposed solutions satisfies the desired differential equation.  
\end{proof}
\begin{lemma}[Explicit Characterizations of Linearizations, Discrete-Time] For a policy $\pi$, and perturbation $\updelta \buk[1:K] \in \bsfU$, 
\begin{align*}
\updelta\btilxpijack(\bukpi[1:K]+\updelta \buk[1:K]) = \sum_{j=1}^{k-1} \Psicldisc{k,j}\updelta \buk[j] = \sum_{j=1}^{k-1} \Phicldisc{k,j+1}\bBkpi[j]\updelta \buk[j]. 
\end{align*}
\end{lemma}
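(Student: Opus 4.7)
The plan is to derive this discrete-time identity as a direct specialization of the continuous-time formula from the previous lemma. The key observation is that $\btilxpik(\buvec) = \btilxpict(t_k \mid \istep(\buvec))$ by the definition relating the discrete-time stabilized trajectory to the continuous one under zero-order-hold inclusion. By the chain rule, the discrete directional derivative of $\btilxpik$ at $\bukpi[1:K]$ in direction $\updelta\buk[1:K]$ equals the continuous directional derivative of $\btilxpict(t_k \mid \cdot)$ at $\upi$ in direction $\istep(\updelta\buk[1:K])$. Consequently, $\updelta\btilxpijack(\bukpi[1:K] + \updelta\buk[1:K]) = \updelta\btilxpijac(t_k \mid \upi + \istep(\updelta\buk[1:K]))$.

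Applying the continuous-time formula from the previous lemma, and noting that $\istep(\updelta\buk[1:K])(s) = \updelta\buk[j]$ is constant on each interval $\cI_j = [t_j, t_{j+1})$, the integral splits as
\begin{align*}
\updelta\btilxpijack(\bukpi[1:K] + \updelta\buk[1:K]) = \sum_{j=1}^{k-1}\left(\int_{t_j}^{t_{j+1}} \Phiclpi(t_k,s)\Bpi(s)\,\rmd s\right)\updelta\buk[j],
\end{align*}
so it only remains to identify each inner integral with $\Psicldisc{k,j} = \Phicldisc{k,j+1}\bBkpi[j]$.

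For this identification I will unpack the piecewise definition of $\Phiclpi$ from \Cref{defn:cl_jac_ct}. For $s \in \cI_j$ with $j < k$ and $t_k \in \cI_k$, the definition gives $\Phiclpi(t_k, s) = \Phitilclpi(t_k, t_k)\cdot \Phicldisc{k,j+1}\cdot \Phiolpi(t_{j+1},s)$. Here $\Phitilclpi(t_k, t_k) = \Phiolpi(t_k,t_k) + 0 = \eye$, since the integral in the definition of $\Phitilclpi$ vanishes at its lower endpoint. Pulling $\Phicldisc{k,j+1}$ outside the integral and invoking the definition $\bBkpi[j] = \int_{t_j}^{t_{j+1}}\Phiolpi(t_{j+1},s)\Bpi(s)\,\rmd s$ from \Cref{defn:open_loop_linearized_matrices} yields the claim.

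The main obstacle is purely bookkeeping: carefully handling the half-open interval convention $\cI_k = [t_k,t_{k+1})$ to confirm that $t_k \in \cI_k$ (so the correct case of the piecewise definition applies), and verifying that $\Phitilclpi(t_k,t_k) = \eye$ so that no spurious feedback term appears. An equivalent, slightly more self-contained, alternative would be a direct induction on $k$ using the implicit recursion $\updelta\btilxpijack[k+1] = \bAclkpi[k]\updelta\btilxpijack[k] + \bBkpi[k]\updelta\buk[k]$ (the discrete analogue of \Cref{lem:jac_cl_implicit}), which the proposed sum manifestly satisfies by the telescoping structure of $\Phicldisc{k,j}$; but specializing the continuous-time formula is cleaner given it is already available.
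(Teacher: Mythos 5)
Your proof is correct and follows essentially the same route as the paper, which simply cites \Cref{defn:cl_jac_ct}, \Cref{defn:disc_Jac_linz_dyns} and \Cref{lem:linearizations}: specialize the continuous-time formula $\updelta\btilxpijac(t\mid\upi+\updelta\bu)=\int_0^t\Phiclpi(t,s)\Bpi(s)\updelta\bu(s)\,\rmd s$ to $t=t_k$ and a zero-order-hold perturbation, split the integral over the intervals $\cI_j$, and identify each piece with $\Phicldisc{k,j+1}\bBkpi[j]$. Your bookkeeping (including reading the piecewise case of $\Phiclpi$ with the index $\Phicldisc{k,j+1}$ so that the open-loop factor $\Phiolpi(t_{j+1},s)$ composes consistently, and checking $\Phitilclpi(t_k,t_k)=\eye$) is exactly the detail the paper leaves implicit.
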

\begin{proof}
The proof follows directly from \Cref{defn:cl_jac_ct}, \Cref{defn:disc_Jac_linz_dyns} and \Cref{lem:linearizations}.
\end{proof}

\subsection{Gradient Computations}

\begin{restatable}[ Computation of Continuous-Time Gradient, Open-Loop]{lemma}{ctgradcomp}\label{lem:grad_compute_ct_ol} Fix $\xi = \xipi$. Define $Q_{(\cdot)}^{\pi}(t) := \partial_{(\cdot)} Q(\xpi(t),\upi(t),t)$. Then, 
\begin{align*}
\nabla \cJ_T( \bu)(t)\big{|}_{\bu = \upi} &=  \Qupi(t) +\int_{s=t}^T \Phiolpi(s,t)\Bpi(t)\rmd s.
\end{align*}
and
\begin{align*}
\langle \nabla \cJ_T( \bu)(t)\big{|}_{\bu = \upi}, \delu \rangle &=  \int_{0}^T (\langle\Qupi(t),\delu(t)) + \langle \Qxpi(t), \updelta \xjac(t \mid \bu + \delu)\rangle)\rmd t.
\end{align*}
\begin{proof}
For a given perturbation $\delu$, by the chain rule we have:
\begin{align*}
\rmD \cJ_T( \bu)[\delu] = \int_{0}^T (\langle\Qupi(t),\delu(t)) + \langle \Qxpi(t), \updelta \xjac(t \mid \bu + \delu)\rangle)\rmd t + \langle \partx V(\xpi(T)), \updelta \xjac(T \mid \bu + \delu)\rangle
\end{align*}
Because $\delu$ is arbitrary, an application of \Cref{lem:linearizations} demonstrates the desired results. 
\end{proof}
\end{restatable}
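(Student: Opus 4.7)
The plan is to compute the directional derivative $\rmD \cJ_T(\bu)[\delu]$ at $\bu = \upi$ in any direction $\delu \in \cU$, then identify the gradient by comparing to the definition of the gradient via the $\cL_2(\cU)$ inner product. Since all the pieces are already in place, this is mostly an exercise in chain rule plus Fubini.

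First, I would apply the chain rule to the functional $\cJ_T(\bu) = V(\xol(T\mid \bu)) + \int_0^T Q(\xol(t\mid \bu),\bu(t),t)\,\dt$. Writing $\updelta\xjac(t\mid \upi+\delu;\upi)$ for the Jacobian-linearized state perturbation, the chain rule gives
\begin{align*}
\rmD \cJ_T(\upi)[\delu] = \int_0^T \bigl(\langle \Qupi(t),\delu(t)\rangle + \langle \Qxpi(t), \updelta\xjac(t\mid \upi+\delu;\upi)\rangle\bigr)\dt + \langle \partx V(\xpi(T)), \updelta\xjac(T\mid \upi+\delu;\upi)\rangle.
\end{align*}
This is exactly the expression that appears in the second display of the statement, so the second claim (the integral-form expression) follows immediately from the chain rule together with the definition of $\updelta\xjac$.

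Next, to pass to the first (pointwise) expression for $\nabla\cJ_T(\upi)(t)$, I would substitute the explicit formula $\updelta\xjac(t\mid \upi+\delu;\upi) = \int_0^t \Phiolpi(t,s)\Bpi(s)\delu(s)\,\ds$ from \Cref{lem:linearizations} into the two $\updelta\xjac$ terms above, and then apply Fubini's theorem to swap the order of integration in $(t,s)$. After swapping, the outer variable becomes $t$ (the time at which $\delu$ is evaluated), and the inner integral runs over $s\in [t,T]$ for the running-cost contribution, plus a boundary term from $V$. Collecting everything inside a single inner product with $\delu(t)$ yields
\begin{align*}
\rmD\cJ_T(\upi)[\delu] = \int_0^T \Bigl\langle \Qupi(t) + \int_t^T \Bpi(t)^\top \Phiolpi(s,t)^\top \Qxpi(s)\,\ds + \Bpi(t)^\top \Phiolpi(T,t)^\top \partx V(\xpi(T)),\; \delu(t)\Bigr\rangle \dt,
\end{align*}
and the a.e.~uniqueness of the gradient in $\cL_2(\cU)$ lets me read off $\nabla\cJ_T(\upi)(t)$ from the bracketed expression.

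There are no serious obstacles here; the only care required is with the transpose placement when applying Fubini (the adjoint of $\delu\mapsto \Phiolpi(s,t)\Bpi(s)\delu$ against an inner product with $\Qxpi(s)$ picks up $\Bpi(t)^\top \Phiolpi(s,t)^\top$), and with making sure the terminal cost contribution is absorbed correctly into the same $\int_0^T \langle\cdot,\delu(t)\rangle\,\dt$ form. Measurability and integrability are uniform on $[0,T]$ under \Cref{asm:max_dyn_final,asm:cost_asm} and feasibility of $\pi$, so Fubini applies without difficulty.
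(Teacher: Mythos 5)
Your proposal is correct and follows essentially the same route as the paper's proof: chain rule for the directional derivative, then substitution of the explicit linearization formula from \Cref{lem:linearizations} and identification of the gradient via the $\cL_2(\cU)$ inner product (the paper leaves the Fubini/adjoint step implicit, which you spell out). As a side remark, the pointwise formula you derive, $\Qupi(t) + \int_t^T \Bpi(t)^\top\Phiolpi(s,t)^\top\Qxpi(s)\,\rmd s + \Bpi(t)^\top\Phiolpi(T,t)^\top\partx V(\xpi(T))$, is the correct one; the first display in the lemma statement as printed omits the $\Qxpi(s)$ factor, the transposes, and the terminal-cost term.
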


\begin{restatable}[Computation of Continuous-Time Gradient, Closed-Loop]{lemma}{ctgradcomp}\label{lem:grad_compute_ct} Fix $\xi = \xipi$. Define $Q_{(\cdot)}^{\pi}(t) := \partial_{(\cdot)} Q(\xpi(t),\upi(t),t)$. Then, 
\begin{align*}
\nabla \cJ_T( \pi)(t) :=  \nabla_{\bbaru} \Jpi(\bbaru)\big{|}_{\bbaru = \upi}(t) &=  \Qupi(t) + \Psiclpi(T,t)^\top\left(\partx V(\xpi(T))\right)\\
&\quad+\int_{s=t}^T \Psiclpi(s,t)^\top  \Qxpi(s)\rmd s  + \int_{s =t_{k(t)+1}}^T\Psiclpi(t_{k(s)},t)^\top \bKk[k(s)]\top \Qupi(s) \rmd s.
\end{align*}
\end{restatable}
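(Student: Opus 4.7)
The plan is to differentiate $\Jpi(\bbaru)$ at $\bbaru = \upi$ along an arbitrary perturbation $\delu \in \cU$ and read off the gradient by matching the directional derivative against $\langle \nabla \cJ_T(\pi), \delu\rangle_{\ltwou}$. Writing
\begin{align*}
\Jpi(\bbaru) = V(\btilxpict(T \mid \bbaru)) + \int_{0}^{T} Q(\btilxpict(s \mid \bbaru),\btilupict(s \mid \bbaru),s)\,\rmd s,
\end{align*}
the chain rule together with $\btilxpict(\cdot \mid \upi) = \xpi(\cdot)$ and $\btilupict(\cdot \mid \upi) = \upi(\cdot)$ gives
\begin{align*}
\rmD\Jpi(\upi)[\delu] = \langle \partx V(\xpi(T)), \updelta\btilxpijac(T)\rangle + \int_{0}^{T} \bigl(\langle \Qxpi(s), \updelta\btilxpijac(s)\rangle + \langle \Qupi(s), \updelta\btilupijac(s)\rangle\bigr)\rmd s,
\end{align*}
where $\updelta\btilxpijac(s), \updelta\btilupijac(s)$ are shorthands for the linearizations evaluated at input $\upi + \delu$.

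Next I would substitute the explicit formulas. From \Cref{lem:linearizations} one has $\updelta\btilxpijac(s) = \int_0^s \Phiclpi(s,r)\Bpi(r)\delu(r)\rmd r = \int_0^s \Psiclpi(s,r)\delu(r)\rmd r$, and from \Cref{lem:jac_cl_implicit} the input perturbation satisfies $\updelta\btilupijac(s) = \delu(s) + \bKk[k(s)]\,\updelta\btilxpijac(\tks)$. Plugging these in produces four contributions: the terminal state cost, the running state cost, the direct input-cost term $\int \langle \Qupi(s),\delu(s)\rangle\rmd s$, and the indirect feedback-through-cost term $\int \langle \Qupi(s), \bKk[k(s)]\updelta\btilxpijac(\tks)\rangle\rmd s$.

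I would then apply Fubini to each of the three integral contributions in order to isolate $\delu(t)$ and read off the gradient pointwise. The terminal and running state-cost terms yield $\Psiclpi(T,t)^\top \partx V(\xpi(T))$ and $\int_t^T \Psiclpi(s,t)^\top \Qxpi(s)\rmd s$ by the standard adjoint exchange, while the direct input term gives the $\Qupi(t)$ summand. For the feedback term, after exchange,
\begin{align*}
\int_0^T \int_0^{\tks} \langle \Psiclpi(\tks,r)^\top \bKk[k(s)]^\top \Qupi(s), \delu(r)\rangle\rmd r \,\rmd s = \int_0^T \Bigl\langle \int_{\{s : \tks \ge r\}} \Psiclpi(\tks,r)^\top \bKk[k(s)]^\top \Qupi(s)\rmd s, \delu(r) \Bigr\rangle\rmd r,
\end{align*}
and the support constraint $\tks \ge r$ is, up to a measure-zero set, equivalent to $k(s) \ge k(r)+1$, i.e.\ $s \ge t_{k(r)+1}$. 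Relabeling $r \to t$ yields the lower limit $\tktpl$ in the final term of the claimed formula.

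The main subtlety is precisely this bookkeeping in the feedback term: because $\bKk[k(s)]$ samples the linearized state at the grid time $\tks$ rather than at the continuous time $s$, a perturbation $\delu(t)$ propagates into the feedback at time $s$ only once the interval containing $t$ has closed, which is what produces the cutoff $s \ge \tktpl$ rather than $s \ge t$. The rest is routine application of Fubini and the definition $\Psiclpi(s,t) = \Phiclpi(s,t)\Bpi(t)$; no regularity issues arise since $\Phiclpi$ is piecewise smooth in its first argument and bounded on $[0,T]^2$ whenever $\pi$ is feasible.
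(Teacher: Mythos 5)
Your proposal is correct and follows essentially the same route as the paper: the paper's proof is a one-line reduction to the open-loop computation (\Cref{lem:grad_compute_ct_ol}) with the closed-loop perturbations from \Cref{lem:linearizations} and \Cref{lem:jac_cl_implicit} substituted in, which is exactly the chain-rule-plus-Fubini argument you spell out. Your careful handling of the feedback term's support constraint $r < t_{k(s)} \iff s \ge t_{k(r)+1}$, which produces the lower limit $t_{k(t)+1}$, is the one genuinely delicate step, and you get it right where the paper leaves it implicit.
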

\begin{proof} 
The proof follows the steps of \Cref{lem:grad_compute_ct_ol}, but replaces the open-loop state and input perturbations with the appropriate closed-loop perturbations, as defined in \Cref{lem:linearizations} and calculated in \Cref{lem:jac_cl_implicit}.
\end{proof}

\newcommand{\delxtilkjac}[1][k]{\updelta \btilxk[#1]^{\mathrm{jac}}}

Similarly, we can compute the gradient of the discrete-time objective. Its proof is analogous to the previous two.
\begin{restatable}[Computation of Discrete-Time Gradient]{lemma}{dtgradcomp}\label{lem:grad_compute_dt}
\begin{align*}
(\nabla \Jdisc(\pi))_k &=  \step Q_u( \bxkpi, \bukpi, t_k)  + (\Psicldisc{K+1,k})^\top V_x(\bxkpi[K+1])  \\
&+ \step\sum_{j=k+1}^K (\Psicldisc{j,k})^\top(Q_x( \bxkpi[j], \bukpi[j], t_j) + (\bKkpi[j])^\top Q_u( \bxkpi[j], \bukpi[j], t_j))
\end{align*}
Moreorever, defining the shorthand $ \delxtilkjac = \updelta \btilxpijack(\bukpi[1:K] + \deluk[1:K])$, 
\begin{align*}
&\langle \deluk[1:K] ,  \nabla \Jpidisc(\pi) \rangle \\
&= \langle \partx V (\bxkpi[K+1]), \delxtilkjac \rangle + \step \sum_{k=1}^K\langle\partx Q( \bxkpi, \bukpi, t_k), \delxtilkjac\rangle +\langle \partu Q( \bxkpi, \bukpi, t_k), \deluk + \bKk \delxtilkjac \rangle.
\end{align*}
\end{restatable}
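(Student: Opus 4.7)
The plan is to mirror the strategy of \Cref{lem:grad_compute_ct_ol,lem:grad_compute_ct}: first I would obtain the ``Moreover'' identity by a direct application of the chain rule to $\Jpidisc$, and then derive the closed-form expression for $(\nabla \Jdisc(\pi))_k$ by substituting the explicit characterization of the linearized state and swapping the order of summation.

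To start, I differentiate $\Jpidisc\discbrak{\buvec} = V(\btilxpik[K+1]\discbrak{\buvec}) + \step \sum_{k=1}^K Q(\btilxpik\discbrak{\buvec}, \btilupik\discbrak{\buvec}, t_k)$ at $\buvec = \bukpi[1:K]$ along a perturbation $\deluk[1:K]$. The chain rule, combined with the definitions of $\delxtilkjac = \updelta \btilxpijack$ and $\updelta \btilupijack$ in \Cref{defn:Jac_lin_ct}, gives the directional derivative
\[
\langle \partx V(\bxkpi[K+1]), \delxtilkjac[K+1]\rangle + \step\sum_{k=1}^K \Big(\langle \partx Q(\bxkpi, \bukpi, t_k), \delxtilkjac\rangle + \langle \partu Q(\bxkpi, \bukpi, t_k), \updelta\btilupijack\rangle\Big).
\]
The feedback structure of the stabilized dynamics in \Cref{defn:stab_dyn} gives $\btilupik\discbrak{\buvec} = \buk + \bKkpi(\btilxpik\discbrak{\buvec} - \bxkpi)$, so differentiating yields $\updelta\btilupijack = \deluk + \bKkpi \delxtilkjac$ (the discrete-time analogue of \Cref{lem:jac_cl_implicit}). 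Substituting this into the directional derivative immediately gives the ``Moreover'' identity.

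Next, to read off $(\nabla \Jdisc(\pi))_k$ as the coefficient of $\deluk$, I substitute the explicit formula $\delxtilkjac = \sum_{j=1}^{k-1}\Psicldisc{k,j}\deluk[j]$ from the last lemma of \Cref{sec:jac_chars}, then swap the order of summation in each double sum $\sum_{k}\sum_{j<k}$ to $\sum_j\sum_{k>j}$ and apply $\langle w,\Psicldisc{k,j}\deluk[j]\rangle = \langle (\Psicldisc{k,j})^\top w,\deluk[j]\rangle$. Collecting the coefficient of $\deluk$ yields four contributions: $(\Psicldisc{K+1,k})^\top \partx V(\bxkpi[K+1])$ from the terminal cost, $\step\sum_{j=k+1}^K (\Psicldisc{j,k})^\top \partx Q(\bxkpi[j],\bukpi[j],t_j)$ from the running state cost, the diagonal piece $\step\,\partu Q(\bxkpi,\bukpi,t_k)$ from the $\deluk$ part of $\updelta\btilupijack$, and $\step\sum_{j=k+1}^K (\Psicldisc{j,k})^\top(\bKkpi[j])^\top \partu Q(\bxkpi[j],\bukpi[j],t_j)$ from the feedback-through-linearization part.

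No new estimates are needed; the proof is purely algebraic, which is why the paper labels it analogous to the continuous-time cases. The only bookkeeping subtlety is the edge case at $j=k$ in the feedback contribution: by causality, $\delxtilkjac$ depends only on $\deluk[j]$ for $j\le k-1$, and the Markov operator $\Psicldisc{k,j}$ is undefined for $j\ge k$, so the sums in the final formula start at $j=k+1$ rather than $j=k$, matching the statement. Keeping the index conventions consistent across the terminal, state-running, and input-running terms is the main (modest) obstacle; everything else is mechanical.
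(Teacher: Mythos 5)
Your proposal is correct and matches the paper's (very terse) proof, which simply declares the argument analogous to the continuous-time computations in \Cref{lem:grad_compute_ct_ol,lem:grad_compute_ct}: chain rule on $\Jpidisc$, the feedback identity $\updelta\btilupijack = \deluk + \bKkpi\delxtilkjac$, substitution of $\delxtilkjac = \sum_{j=1}^{k-1}\Psicldisc{k,j}\deluk[j]$, and an exchange of summation to read off the coefficient of $\deluk$. Your index bookkeeping (sums starting at $j=k+1$ by causality) is also consistent with the stated formula.
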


\subsection{Technical Tools}
 The first supportive lemma is a standard result from variational calculus, and characterizes how the solution to the controlled differential equation changes under perturbations to the input. Note that this result does not depend on how the input is generated, namely, whether the perturbation is generated in open-loop or closed-loop. Concretely, the statement of the following result is equivalent to Theorem 5.6.9 from \cite{polak2012optimization}. 
 
\begin{lemma}[State Variation of Controlled CT Systems]\label{lem:state_pert} For each nominal input $\bu \in \cU$ and perturbation $\delu \in \cU$ we have:
\begin{equation}
    \langle \nabla_{\bu}\bx(t \mid \bu) ,  \delu \rangle  = \delx(t),
\end{equation}
where the curve $\delx(\cdot)$ satisfies:
\begin{equation}
\frac{d}{dt} \delx(t) = \bA(t \mid \bu)\delx(t) + \bB(t \mid \bu)\delu(t),
\end{equation}
where we recall that:
\begin{align*}
\bA(t \mid \bu) = \partx \fdyn(x,u) \big{|}_{x = \bx(t \mid \bu), u = \bu(t)} \quad \text{and} \quad  \bB(t \mid \bu) = \partu \fdyn(x,u) \big{|}_{x = \bx(t \mid \bu), u = \bu(t)}.
\end{align*}
Moreover, we have 
\begin{equation*}
    \delx(t) = \int_{s=0}^t \bPhi(t,s)\bB(s \mid \bu)\delu(s),
\end{equation*}
where the transition operator satisfies $\frac{d}{dt} \bPhi(t,s) =  \bA(t \mid \bu)  \bPhi$ and $\bPhi(s,s) = \eye$.
\end{lemma}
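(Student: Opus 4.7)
\textbf{Proof plan for \Cref{lem:state_pert}.} The plan is to justify the first-order Taylor expansion of $\bu \mapsto \bx(t \mid \bu)$ in direction $\delu$ via a standard Gronwall / variation-of-constants argument, and then read off both the ODE characterization and the integral formula for $\delx(t)$. I would fix $\bu,\delu \in \cU$ and, for $\eta \in \R$, let $\bx_\eta(t) := \bx(t \mid \bu + \eta \delu)$; I would also define the candidate sensitivity $\delx(t)$ as the (a priori formal) solution of the linear ODE $\ddt \delx(t) = \bA(t\mid\bu)\delx(t) + \bB(t\mid\bu)\delu(t)$ with $\delx(0) = 0$. Since $\fdyn$ is $\cctwo$ and $\bu,\delu$ are bounded by assumption on $\cU$, the trajectories $\bx_\eta(\cdot)$ stay in a compact set for $|\eta|$ small, so the derivatives $\bA(t\mid\bu),\bB(t\mid\bu)$ exist, are bounded and measurable in $t$, and the linear ODE for $\delx(\cdot)$ admits a unique absolutely continuous solution on $[0,T]$.

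Next I would bound the residual $r_\eta(t) := \bx_\eta(t) - \bx_0(t) - \eta\delx(t)$. First, a routine Gronwall argument applied to $\bx_\eta(t) - \bx_0(t)$ (using the global Lipschitz constant of $\fdyn$ on the compact set containing all trajectories for $|\eta|$ small) yields $\|\bx_\eta(t) - \bx_0(t)\| \le C|\eta|$ uniformly in $t \in [0,T]$. Then I would Taylor expand $\fdyn$ to second order around $(\bx_0(t),\bu(t))$ and plug into the ODE for $r_\eta$; using the $\cctwo$ assumption to bound the second-order remainder quadratically in $(\bx_\eta - \bx_0, \eta\delu)$ and applying Gronwall once more, I would obtain $\sup_{t \in [0,T]}\|r_\eta(t)\| = O(\eta^2)$. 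Dividing by $\eta$ and sending $\eta \to 0$ gives precisely $\langle \nabla_{\bu}\bx(t\mid\bu),\delu\rangle = \delx(t)$.

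For the integral formula, I would introduce the transition operator $\bPhi(t,s)$ solving $\ddt \bPhi(t,s) = \bA(t\mid\bu)\bPhi(t,s)$ with $\bPhi(s,s) = \eye$; existence and uniqueness again follow from the boundedness of $\bA(\cdot\mid\bu)$. Writing $\delx(t) = \int_{0}^{t} \bPhi(t,s)\bB(s\mid\bu)\delu(s)\rmd s$ and differentiating via the Leibniz rule yields $\ddt\delx(t) = \bB(t\mid\bu)\delu(t) + \int_0^t \bA(t\mid\bu)\bPhi(t,s)\bB(s\mid\bu)\delu(s)\rmd s = \bA(t\mid\bu)\delx(t) + \bB(t\mid\bu)\delu(t)$, matching the ODE; uniqueness of solutions to the linear initial value problem closes the argument.

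The main obstacle is the uniform-in-$t$ control of the Taylor remainder: one must ensure the $C^2$-norm of $\fdyn$ is bounded on a single compact set containing all trajectories $\bx_\eta(\cdot)$ for $|\eta| \le \eta_0$, which requires the preliminary Gronwall step before the quadratic remainder bound can be invoked. Everything else is routine linear-ODE theory, and indeed the result is precisely Theorem 5.6.9 of \citet{polak2012optimization}, so one could alternatively simply cite it.
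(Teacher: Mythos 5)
Your proposal is correct: the paper gives no proof of \Cref{lem:state_pert} at all, stating only that it is equivalent to Theorem 5.6.9 of \citet{polak2012optimization}, and your Gronwall/variation-of-constants argument is precisely the standard proof of that cited result (as you yourself note in your final sentence). The one caveat you correctly flag --- confining all perturbed trajectories to a single compact set before invoking the quadratic Taylor remainder bound --- is handled in the paper's setting by its feasibility assumptions, so nothing is missing.
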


The following result is equivalent to Lemma 5.6.2 from \cite{polak2012optimization}.

\begin{lemma}[Picard Lemma]\label{lem:picard} 
Consider two dynamical $\ddt y(t) = \phi(y(t),t)$, $i \in \{1,2\}$, and suppose that $y\mapsto \phi(y,s)$ is  $L$-Lipschitz fo each $s$ fixed. Let $z(t)$ be any other absolutely continuous curve. Then, 
\begin{align*}
\|y(t) - z(t)\| \le \exp(t L)\cdot\left(\|y(0) - z(0)\| + \int_{s=0}^t\|\dds z(s) - f(z(s),s)\|\right).
\end{align*}
\end{lemma}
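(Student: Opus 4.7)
The plan is to prove this as the standard Picard--Gr\"onwall estimate. First I would convert both curves to integral form: since $y$ solves the ODE, $y(t) = y(0) + \int_0^t \phi(y(s),s)\,\rmd s$, and since $z$ is absolutely continuous, $z(t) = z(0) + \int_0^t \dds z(s)\,\rmd s$. Subtracting and inserting the quantity $\pm\phi(z(s),s)$ inside the integral yields
\begin{align*}
y(t)-z(t) = (y(0)-z(0)) + \int_0^t \bigl(\phi(y(s),s)-\phi(z(s),s)\bigr)\rmd s + \int_0^t \bigl(\phi(z(s),s) - \tfrac{\rmd}{\rmd s}z(s)\bigr)\rmd s.
\end{align*}

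Next I would take norms, applying the $L$-Lipschitz hypothesis on $\phi(\cdot,s)$ to the first integrand and the triangle inequality to the third term. Writing $E(t) := \|y(t)-z(t)\|$ and $\rho := \|y(0)-z(0)\| + \int_0^t \|\dds z(s) - \phi(z(s),s)\|\,\rmd s$, I obtain the integral inequality
\begin{align*}
E(t) \;\le\; \rho \;+\; L\int_0^t E(s)\,\rmd s.
\end{align*}

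Finally I would invoke Gr\"onwall's inequality (in its standard integral form for locally integrable nonnegative functions) to conclude $E(t) \le \rho \exp(tL)$, which is exactly the claimed bound after observing that $\rho$ is a nondecreasing function of $t$ so the same $\rho$ may be used on the right-hand side at time $t$. The only mild subtlety is to note that $E(\cdot)$ is continuous (since both $y$ and $z$ are absolutely continuous) and that the integrand $\|\dds z(s) - \phi(z(s),s)\|$ is measurable and integrable, so Gr\"onwall applies without further regularity assumptions; there is no genuine obstacle here, as the result is entirely routine once the $\pm\phi(z(s),s)$ splitting is written down.
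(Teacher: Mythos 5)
Your proof is correct: the integral-form splitting with $\pm\phi(z(s),s)$, the Lipschitz bound, and Gr\"onwall's inequality (noting that $\rho$ is nondecreasing in $t$) give exactly the stated estimate. The paper does not actually prove this lemma itself --- it defers to Lemma 5.6.2 of Polak (2012) --- and the argument you give is the same standard Picard--Gr\"onwall derivation underlying that reference, so there is nothing to add.
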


\begin{lemma}[Solution to Afine ODEs]\label{lem:solve_affine_ode} Consider an affine ODE given by $\by(0) = y_0$,  $\ddt\by(t) = \bA(t)\by(t) + \bB(t)u$. Then, 
\begin{align*}
y(t) = \Phi(t,0)y_0 + \left(\int \Phi(t,s)\bB(s)\rmd s\right)u,
\end{align*}
where $\Phi(t,s)$ solves the ODE $\Phi(s,s) = \eye$ and $\ddt \Phi(t,s) = \bA(t)\Phi(t,s)$.
\end{lemma}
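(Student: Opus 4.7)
The plan is to verify directly that the proposed formula satisfies both the initial condition and the ODE, and then to invoke uniqueness of solutions to the linear IVP. Define
\[
\hat{\by}(t) \;=\; \Phi(t,0)y_0 + \left(\int_{s=0}^{t} \Phi(t,s)\bB(s)\,\rmd s\right)u,
\]
where the integral is from $0$ to $t$ (which is the natural reading given the $y(0) = y_0$ initial condition). The initial condition is immediate: since $\Phi(0,0) = \eye$ and the integral is empty at $t=0$, we get $\hat{\by}(0) = y_0$.

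Next I would differentiate $\hat{\by}(t)$ using Leibniz's rule, exactly as was done in the proof of Lemma~\ref{lem:linearizations} for $\Phiclpi$. The boundary term contributes $\Phi(t,t)\bB(t)u = \bB(t)u$, and the derivative under the integral sign gives
\[
\int_{s=0}^{t} \left(\ddt \Phi(t,s)\right) \bB(s)u\,\rmd s \;=\; \bA(t)\int_{s=0}^{t} \Phi(t,s)\bB(s)u\,\rmd s,
\]
using the defining ODE $\ddt \Phi(t,s) = \bA(t)\Phi(t,s)$. Similarly, $\ddt \Phi(t,0)y_0 = \bA(t)\Phi(t,0)y_0$. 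Summing these contributions and factoring out $\bA(t)$ yields $\ddt \hat{\by}(t) = \bA(t)\hat{\by}(t) + \bB(t)u$, so $\hat{\by}$ satisfies the ODE.

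Finally, uniqueness follows from the Picard lemma (\Cref{lem:picard}) applied to the vector field $\phi(y,t) = \bA(t)y + \bB(t)u$: since $y \mapsto \phi(y,t)$ is Lipschitz with constant $\sup_t \|\bA(t)\|$ (which we may assume finite on any bounded time interval under our regularity assumptions), any two solutions $\by_1,\by_2$ with $\by_1(0) = \by_2(0)$ must satisfy $\|\by_1(t) - \by_2(t)\| = 0$ for all $t$. Thus $\by(t) = \hat{\by}(t)$, completing the proof. The only genuine subtlety is ensuring enough regularity to justify differentiating under the integral, but this is automatic under the $\cC^2$ dynamics assumption (\Cref{asm:max_dyn_final}) which makes $\bA(\cdot)$ and $\bB(\cdot)$ continuous, so the calculation is routine.
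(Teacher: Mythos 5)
Your proof is correct: it is the standard variation-of-constants argument (verify the candidate satisfies the initial condition and the ODE via the Leibniz rule, then conclude by uniqueness from the Picard lemma), and the paper itself states \Cref{lem:solve_affine_ode} without proof as a standard fact, so there is nothing to diverge from. The verification step you carry out is exactly the same Leibniz-rule computation the paper performs explicitly in the proof of \Cref{lem:linearizations}, so your argument is consistent with the paper's conventions; the one point worth being careful about is that $\ddt\Phi(t,s)$ in the lemma denotes the derivative in the \emph{first} argument, which you use correctly.
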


\newcommand{\bbaruk}[1][k]{\bar{\discfont{u}}_{#1}}

\newcommand{\ejac}{\be^{\mathtt{jac}}}
\newcommand{\ejack}[1][k]{\discfont{e}_{#1}^{\mathtt{jac}}}
\newcommand{\bchyjack}{\check{\by}^{\mathtt{jac}}_k}
\newcommand{\byjac}{\by^{\mathtt{jac}}}
\newcommand{\bynom}{\by^{\mathtt{nom}}}
\newcommand{\byjack}[1][k]{\discfont{y}^{\mathtt{jac}}_{#1}}

\newcommand{\bAclkpipr}[1][k]{\discfont{A}^{\pi'}_{\mathrm{cl},#1}}

\newpage
\section{Taylor Expansions of the Dynamics}\label{app:taylor_exp}

\subsection{Proof of \Cref{prop:taylor_exp_dyn}}\label{sec:prop:taylor_exp_dyn}

Recall $\deluk = \buk - \bukpi$, and define $\bu = \istep(\buk[1:K])$ and $\delu := \istep(\bukpi[1:K])$.  We define shorthand for relevant continuous curves and their discretizations:
\begin{equation}
\begin{aligned}
\by(t) &= \btilxpi(t \mid \upi + \delu) = \btilxpi(t \mid \upi) \\
\byk &= \by(t_k) = \btilxpik\,(\bukpi[1:K] + \deluk[1:K] ) = \btilxpik\,(\buk[1:K])\\
\byjac(t) &= \btilxpijac(t \mid \upi + \delu), \quad \byjack = \byjac(t_k)
\end{aligned}
\end{equation}
We also define their differences from the nominal as 
\begin{align*}
\updelta \by(t) = \by(t) - \xpi(t), \quad \updelta \byjac(t) = \byjac(t) - \xpi(t), \quad \updelta \byk = \byk - \bxkpi, \quad \updelta \byjack = \byjack - \bxkpi.
\end{align*}
And the Jacobian error
\begin{align*}
\ejac(t) := \by(t) - \byjac(t), \quad \ejack := \ejac(t_k) = \byk - \byjack.
\end{align*}
The main challenge is recursively controlling $\|\ejack\|$. We begin with a computation which is immediate from \Cref{defn:ol_traj} (the first equality) and {} (the second equality):
\begin{lemma}[Curve Computations]\label{lem:y_curve_computations} For $t \in \cI_k$,
\begin{align*}
\ddt \by(t) &= \fdyn(\by(t), \bukpi + \deluk + \bKkpi(\updelta \byk))\\
\ddt \byjac(t) &= \ddt \xpi(t)  + \Api(t)\updelta \byjac(t) + \Bpi(t)\left(\deluk + \bKk(\updelta \byjack)\right)
\end{align*}
\end{lemma}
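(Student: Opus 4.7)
The two identities amount to straightforward unfoldings of the definitions, combined with one previously established characterization of the closed-loop Jacobian linearization. First I would address the identity for $\ddt \by(t)$ by chasing through the chain of definitions: from \Cref{defn:stab_dyn,defn:dt_things} one has $\by(t) = \btilxpi(t \mid \upi+\delu) = \xol(t \mid \btilupict(\cdot \mid \istep(\upi+\delu)))$, so \Cref{defn:ol_traj} immediately yields $\ddt \by(t) = \fdyn(\by(t),\, \btilupict(t \mid \istep(\upi+\delu)))$. The remaining step is to rewrite the feedback input on the interval $\cI_k$: the zero-order-hold convention gives $\istep(\bukpi[1:K]+\deluk[1:K])(t) = \bukpi + \deluk$, while the feedback correction evaluates to $\bKkpi(\btilxpict(t_k) - \bxkpi) = \bKkpi\,\updelta \byk$, producing the claimed right-hand side.

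For the second identity I would invoke the implicit characterization of the closed-loop Jacobian linearization in \Cref{lem:jac_cl_implicit}, which states that $\updelta\btilxpijac(t \mid \upi + \delu) := \byjac(t) - \xpi(t)$ solves
\begin{align*}
\ddt\,\updelta\btilxpijac(t) \;=\; \Api(t)\,\updelta\btilxpijac(t) + \Bpi(t)\,\updelta\btilupijac(t),
\end{align*}
with $\updelta\btilupijac(t) = \delu(t) + \bKkpi[k(t)]\,\updelta\btilxpijac(\tkt)$. Adding $\ddt\xpi(t)$ to both sides and, for $t \in \cI_k$, substituting the zero-order hold $\delu(t) = \deluk$ together with the sample $\updelta\btilxpijac(t_k) = \updelta \byjack$, produces the stated formula.

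Both steps are essentially mechanical; the only care required is to track the distinction between the continuous argument $t$ and its sample time $t_k = \tkt$, and to apply the zero-order-hold convention consistently when converting between $\istep(\cdot)$ of a discrete sequence and its value on $\cI_k$. I do not foresee any nontrivial obstacle here, since all of the substantive content---namely that the directional derivative of the closed-loop trajectory satisfies the appropriate affine ODE---has already been discharged in \Cref{lem:jac_cl_implicit}, and the first identity is a direct restatement of the nominal ODE $\ddt \xol = \fdyn(\xol,\bu)$ under the explicit form of the stabilized input.
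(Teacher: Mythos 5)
Your proposal is correct and follows essentially the same route as the paper: the paper states the lemma as an immediate consequence of the trajectory definitions (for the first identity) and of the implicit characterization of the closed-loop Jacobian linearization in \Cref{lem:jac_cl_implicit} (for the second), which is precisely the decomposition you give. Your careful handling of the zero-order hold and of the distinction between $t$ and $\tkt$ matches the intended argument, so there is nothing to add.
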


\paragraph{Computing the Jacobian Linearization.} The first step of the proof is a computation of the Jacobian linearization and a bound on its magnitude.
\begin{lemma}[Computation of Jacobian Linearization]\label{lem:JL_comp_stuff}
\begin{align}
\byjack = \sum_{j=1}^{k-1}\Phicldisc{k,j+1}\bBkpi[j]\deluk[j]  = \sum_{j=1}^{k-1}\Psicldisc{k,j}\deluk[j]\label{eq:yjack_decomp}
\end{align}
Therefore, 
\begin{align}
\max_{k \in [K+1]}\|\byjack \| \le \Lol\LF\min\{\Btwo\Kpitwo, \Binf\Kpione\} \label{eq:yjack_bound}
\end{align}
\end{lemma}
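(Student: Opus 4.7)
The plan is to handle the two assertions separately: the explicit formula \eqref{eq:yjack_decomp} follows essentially by appeal to earlier identities, while the norm bound \eqref{eq:yjack_bound} is a routine summation estimate after bounding $\|\bBkpi[j]\|$.

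For \eqref{eq:yjack_decomp}, I would invoke the explicit characterization of the discrete-time closed-loop Jacobian linearization from \Cref{sec:jac_chars}, namely $\updelta\btilxpijack(\bukpi[1:K] + \deluk[1:K]) = \sum_{j=1}^{k-1}\Psicldisc{k,j}\deluk[j]$ (with $\byjack$ interpreted modulo the nominal $\bxkpi$, consistent with the shorthand for $\updelta \byjack$ set up just above the lemma). The second equality is then just the definitional unpacking $\Psicldisc{k,j} = \Phicldisc{k,j+1}\bBkpi[j]$ from \Cref{defn:cl_linearizations}. No new work is required here.

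For \eqref{eq:yjack_bound}, the strategy is to (i) bound $\|\bBkpi[j]\|$ uniformly, then (ii) peel off the perturbation $\deluk[j]$ in either the $\ell_\infty$ or $\ell_2$ sense. From \Cref{defn:open_loop_linearized_matrices}, $\bBkpi[j] = \int_{t_j}^{t_{j+1}} \Phiolpi(t_{j+1},s)\Bpi(s)\rmd s$; combining $\|\Bpi(s)\| \le \LF$ (\Cref{asm:max_dyn_final}) with a Grönwall-style bound $\|\Phiolpi(t_{j+1},s)\| \le \Lol$ on the transition operator (valid for $\step \le \stepdyn$) yields $\|\bBkpi[j]\| \le \step \Lol \LF$. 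Substituting gives
\begin{align*}
\|\byjack\| \;\le\; \step\, \Lol\, \LF \sum_{j=1}^{k-1} \|\Phicldisc{k,j+1}\|\,\|\deluk[j]\|.
\end{align*}

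From here, two elementary estimates finish the job. For the $\ell_\infty$ bound, factor out $\Binf = \max_j \|\deluk[j]\|$ and apply $\step \sum_{j=1}^{k-1} \|\Phicldisc{k,j+1}\| \le \Kpione$ from \Cref{defn:pinorms}, yielding $\Lol \LF\, \Binf\, \Kpione$. For the $\ell_2$ bound, apply Cauchy--Schwarz: $\step \sum_j \|\Phicldisc{k,j+1}\|\,\|\deluk[j]\| \le \sqrt{\step \sum_j \|\Phicldisc{k,j+1}\|^2} \cdot \sqrt{\step \sum_j \|\deluk[j]\|^2} \le \Kpitwo \cdot \Btwo$, giving $\Lol \LF\, \Btwo\, \Kpitwo$. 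Taking the minimum of the two gives \eqref{eq:yjack_bound}. The only mild obstacles are (a) justifying the uniform bound on $\|\bBkpi[j]\|$ (resolved via Grönwall and $\step \le \stepdyn$), and (b) keeping the shifted index $j+1$ straight when invoking the sums defining $\Kpione, \Kpitwo$, which is harmless since those sums already range over all valid indices.
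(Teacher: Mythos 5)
Your proposal is correct and matches the paper's own proof: the identity is read off from the explicit discrete-time characterization of the closed-loop Jacobian linearization in \Cref{sec:jac_chars} together with the definition $\Psicldisc{k,j} = \Phicldisc{k,j+1}\bBkpi[j]$, and the norm bound is exactly H\"older/Cauchy--Schwarz combined with $\|\bBkpi[j]\| \le \step\Lol\LF$ (\Cref{lem:bound_on_bBkpi}) and the definitions of $\Kpione,\Kpitwo$. Your side remarks (the identification of $\byjack$ with $\updelta\byjack$ modulo the nominal, and the harmless index shift $j \mapsto j+1$ in the sums) are also the right readings of the paper's notation.
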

\begin{proof} \Cref{eq:yjack_decomp} follows from \Cref{lem:grad_compute_ct}. \Cref{eq:yjack_bound} follows from Cauchy Schwartz/Holder's inequality, and the bound $\|\bBkpi[j]\| \le \step\Lol\LF$ due to \Cref{lem:bound_on_bBkpi}.
\end{proof}
\paragraph{Recursion on proximity to Jacobian linearization.} Next, we argue that the true dynamics $\by(t)$ remain close to $\byjac(t)$. 
We establish a recursion under the following invariant:
\begin{equation}
\begin{aligned}\label{eq:feasability_invariant}
&\|\byjack\| \vee \|\byk\| \vee \|\bukpi + \deluk + \bKkpi(\updelta \byk)\| \vee \|\bukpi + \deluk + \bKkpi(\updelta \byjack)\| \le \frac{3}{4}\Rfeas\\
&\step \le \min\left\{\frac{1}{16\LF \Lpi},\frac{1}{8\KF}\right\}
\end{aligned}
\end{equation}

We prove the following recursion:
\begin{lemma}[Recursion on Error of Linearization]\label{lem:ejack_recurse} Suppose\Cref{eq:feasability_invariant} holds. Let $\tilde{\bPhi}_{\mathrm{cl},\pi}(t,t_k) = \Phiolpi(t,t_k)+(\int_{s=t_k}^t  \Phiolpi(t,s)\Bpi(s)\rmd s)\bKkpi$. Then, the following bound holds:
\begin{align*}
\sup_{t \in \cK}\|\ejac(t) - \tilde{\bPhi}_{\mathrm{cl},\pi}(t,t_k) \ejack[k]\| \le \MF\step \left(4\|\deluk\|^2 + 20\Lpi^2 \|\ejack[k]\|^2 +20\Lpi^2  \|\byjack - \byk\|^2 \right).
\end{align*}
In particular, we have
\begin{align*}
\|\ejack[k+1] - \bAclkpi \ejack[k]\| \le \MF\step \left(4\|\deluk\|^2 + 20\Lpi^2 \|\ejack[k]\|^2 +20\Lpi^2  \|\byjack - \byk\|^2 \right).
\end{align*}
\end{lemma}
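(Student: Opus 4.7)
The plan is to match $\by(t)$ on $\cI_k$ against an affine reference driven by the Jacobian-linearized dynamics and extract the residual via variation of parameters. I would begin with a second-order Taylor expansion of $\fdyn$ around the point $(\xpi(t), \bukpi)$: since $\|\nabla^2 \fdyn\| \le \MF$ on the feasible region enforced by \Cref{eq:feasability_invariant}, and using the formula for $\ddt \by(t)$ in \Cref{lem:y_curve_computations}, we obtain
\begin{align*}
\ddt \by(t) = \ddt \xpi(t) + \Api(t)\,\updelta\by(t) + \Bpi(t)\bigl(\deluk + \bKkpi\,\updelta\byk\bigr) + R_1(t),
\end{align*}
with a Taylor remainder satisfying $\|R_1(t)\| \le \tfrac{\MF}{2}\bigl(\|\updelta\by(t)\|^2 + \|\deluk + \bKkpi\,\updelta\byk\|^2\bigr)$. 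Subtracting the Jacobian-linearized ODE for $\byjac$ in \Cref{lem:y_curve_computations} and using the identity $\updelta\byk - \updelta\byjack = \ejack$, which is constant on $\cI_k$, yields the affine ODE
\begin{align*}
\ddt \ejac(t) = \Api(t)\,\ejac(t) + \Bpi(t)\,\bKkpi\,\ejack + R_1(t), \qquad \ejac(t_k) = \ejack.
\end{align*}

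Next I would apply variation of parameters (\Cref{lem:solve_affine_ode}). Because the forcing $\Bpi(s)\,\bKkpi\,\ejack$ carries $\ejack$ as a constant on $\cI_k$, its contribution, together with the homogeneous evolution of the initial condition $\ejack$, is precisely $\tilde{\bPhi}_{\mathrm{cl},\pi}(t,t_k)\,\ejack$. This gives the integral representation
\begin{align*}
\ejac(t) - \tilde{\bPhi}_{\mathrm{cl},\pi}(t,t_k)\,\ejack = \int_{t_k}^t \Phiolpi(t,s)\, R_1(s)\, \rmd s.
\end{align*}
Taking norms and using the Picard bound $\|\Phiolpi(t,s)\| \le e^{\LF \step} \le 2$ on $\cI_k$, which holds under the step-size restriction $\step \le 1/(16\LF\Lpi)$ from \Cref{eq:feasability_invariant}, reduces the lemma to an upper bound on $\sup_{s\in\cI_k}\|R_1(s)\|$.

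To bound $\|R_1(s)\|$, I would split the input term as $\|\deluk + \bKkpi\,\updelta\byk\|^2 \le 2\|\deluk\|^2 + 2\Lpi^2\|\updelta\byk\|^2$ and decompose $\updelta\byk = \updelta\byjack + \ejack$ so that $\|\updelta\byk\|^2 \le 2\|\updelta\byjack\|^2 + 2\|\ejack\|^2$; the $\|\updelta\byjack\|^2$ contribution is what produces the $\|\byjack-\byk\|^2$-type term in the statement. For the state term $\|\updelta\by(s)\|^2$, a short Gronwall estimate on $\cI_k$ applied to $\ddt(\updelta\by)(t) = \fdyn(\by(t), u^\star) - \fdyn(\xpi(t), \bukpi)$ with $u^\star := \bukpi + \deluk + \bKkpi\,\updelta\byk$ and Lipschitz constant $\LF$ yields $\sup_{s \in \cI_k}\|\updelta\by(s)\| \le 2\bigl(\|\updelta\byk\| + \step\LF\,\|\deluk + \bKkpi\,\updelta\byk\|\bigr)$, reducing this term to quantities already handled.

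The main obstacle is the constant book-keeping in this last step: matching the prefactors $4$ and $20$ requires combining the triangle-inequality splits with the step-size bound $\step\LF\Lpi \le 1/16$ to absorb lower-order $O(\step^2\LF^2)$ cross terms, while simultaneously verifying that the feasibility invariant \Cref{eq:feasability_invariant} is preserved on the entire interval $\cI_k$ so that both the Lipschitz bound on $\fdyn$ and the curvature bound $\|\nabla^2\fdyn\| \le \MF$ remain valid along $\by(s)$.
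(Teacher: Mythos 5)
Your proposal is correct and proves the lemma, but it reorganizes the paper's argument in a noticeably cleaner way rather than reproducing it. The paper introduces an auxiliary curve $\bchyjack(t)$ started at $\byk$ and driven by the linearized dynamics, splits the error as $\ejac(t) = \bigl(\by(t)-\bchyjack(t)\bigr) + \bigl(\bchyjack(t)-\byjac(t)\bigr)$, identifies the second piece exactly as $\tilde{\bPhi}_{\mathrm{cl},\pi}(t,t_k)\,\ejack$ by solving an affine ODE, and controls the first piece with Picard's lemma, whose defect term is precisely your Taylor remainder (evaluated along $\bchyjack$ rather than along $\by$). You instead subtract the two ODEs of \Cref{lem:y_curve_computations} directly, obtain $\ddt\ejac(t) = \Api(t)\ejac(t) + \Bpi(t)\bKkpi\ejack + R_1(t)$ with $\ejac(t_k)=\ejack$, and read off
\begin{align*}
\ejac(t) - \tilde{\bPhi}_{\mathrm{cl},\pi}(t,t_k)\,\ejack = \int_{t_k}^{t}\Phiolpi(t,s)\,R_1(s)\,\rmd s
\end{align*}
from Duhamel's formula: the constant-in-$t$ forcing $\Bpi(s)\bKkpi\ejack$ is exactly what assembles $\tilde{\bPhi}_{\mathrm{cl},\pi}$ together with the homogeneous part. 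This collapses the paper's two-step comparison into a single identity and makes the role of $\tilde{\bPhi}_{\mathrm{cl},\pi}$ transparent; the price is that $R_1$ is evaluated along the true trajectory, so you need the interval-wide estimate on $\|\updelta\by(s)\|$ (your Gronwall step, which is the paper's \Cref{lem:first_taylor_pert}) and interval-wide feasibility (\Cref{lem:feasibility_maintained}) to justify $\|\nabla^{\,2}\fdyn\|\le\MF$ along the connecting segment, both of which you correctly flag. Your constant accounting also closes: $\|R_1\|\le\tfrac{\MF}{2}(\|\updelta\by\|^2+\|\deluk+\bKkpi\updelta\byk\|^2)$ together with the splits $\|\deluk+\bKkpi\updelta\byk\|^2\le 2\|\deluk\|^2+2\Lpi^2\|\updelta\byk\|^2$, $\|\updelta\byk\|^2\le 2\|\updelta\byjack\|^2+2\|\ejack\|^2$, and $e^{\step\LF}\le 2$ lands you within the stated $4$ and $20\Lpi^2$ prefactors (the paper itself reaches $\MF\step(4\|\deluk\|^2+10\Lpi^2\|\updelta\byk\|^2)$ before the same final split). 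One reading note: as printed, the lemma's terms $\|\ejack\|^2$ and $\|\byjack-\byk\|^2$ coincide; the proof makes clear that one of them is meant to be the linearized deviation $\|\updelta\byjack\|^2=\|\byjack-\bxkpi\|^2$, which is exactly the term your decomposition produces.
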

To do prove \Cref{lem:ejack_recurse}, we introduce another family of curves $\bchyjack(t)$, defined for $t \ge t_k$, which begin at $\by(t_k)$ but evolve according to the Jacobian linearization:
\begin{align*}
\ddt \bchyjack(t) &= \ddt \xpi(t)  + \Api(t) \updelta\bchyjack(t) + \Bpi(t)\left(\deluk + \bKk(\updelta \byk)\right)\\
\bchyjack(t_k) &= \by(t_k) = \byk, \quad \updelta \byjack(t) =  \byjack(t) - \xpi(t).
\end{align*}
We begin by establishing feasibility of all relevant continuous-time curves on the interval $\cI_k$.
\begin{lemma}\label{lem:feasibility_maintained} Suppose that \Cref{eq:feasability_invariant} holds. Then, for all $t \in \cI_k$, 
\begin{align*}
\|\bchyjack(t)\| \vee \|\by(t)\| \vee \|\byjac(t)\| \le \Rfeas.
\end{align*}
\end{lemma}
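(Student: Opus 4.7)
The plan is to prove each of the three bounds separately by a bootstrap/continuity argument on the interval $\cI_k$. The key observation is that the invariant \eqref{eq:feasability_invariant} controls the endpoint values at $t_k$ strictly below $\Rfeas$ (by $\frac{3}{4}\Rfeas$), while the time derivatives of the three curves on $\cI_k$ admit uniform bounds via \Cref{asm:max_dyn_final} as long as the curves themselves stay feasible. Because $\step$ is chosen small compared to these derivative bounds, a short-interval estimate closes the bootstrap.

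First I would handle $\by(t)$. On $\cI_k$ the input fed into $\fdyn$ is constant and equal to $\bbaruk := \bukpi + \deluk + \bKkpi(\updelta \byk)$, which has norm at most $\tfrac{3}{4}\Rfeas$ by the invariant. Define the exit time $\tau := \inf\{t \in \cI_k : \|\by(t)\| > \Rfeas\}$ (taken to be $t_{k+1}$ if no such time exists). For $t < \tau$ the pair $(\by(t),\bbaruk)$ is feasible, so \Cref{asm:max_dyn_final} gives $\|\fdyn(\by(t),\bbaruk)\| \le \KF$, and integrating \Cref{lem:y_curve_computations} yields
\begin{align*}
\|\by(t) - \byk\| \le \KF \step \le \tfrac{1}{8},
\end{align*}
using $\step \le 1/(8\KF)$. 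Combining with $\|\byk\| \le \tfrac{3}{4}\Rfeas$ produces $\|\by(t)\| < \Rfeas$ (for $\Rfeas \ge \tfrac{1}{2}$, which is a mild regularity condition), contradicting $\tau \le t_{k+1}$ unless $\tau = t_{k+1}$. Hence $\|\by(t)\| \le \Rfeas$ on all of $\cI_k$.

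Next I would treat $\byjac(t)$ and $\bchyjack(t)$ together, since both satisfy affine ODEs on $\cI_k$ with the same structure:
the drift is $\ddt\xpi(t) + \Api(t)(\cdot - \xpi(t)) + \Bpi(t)\cdot v$ where $v$ is a constant (equal to $\deluk + \bKkpi \updelta \byjack$ or $\deluk + \bKkpi \updelta \byk$ respectively). Since $\pi$ is feasible, $\|\ddt \xpi\| \le \KF$ and $\|\Api(t)\|,\|\Bpi(t)\| \le \LF$; the invariant bounds $\|v\|$ by $\tfrac{3}{4}\Rfeas$ plus a term involving $\|\deluk\|,\|\bKkpi\|$, all of which we can bound by constants depending on $\Rfeas, \Lpi, \LF$. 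Applying the same bootstrap to either curve, as long as it is within $\Rfeas$ its time derivative has norm at most $C := \KF + 2\LF \Rfeas + \LF \Lpi\Rfeas$. Integrating over $\cI_k$ produces deviation at most $C \step$, which the step-size bound $\step \le \min\{1/(16\LF\Lpi),1/(8\KF)\}$ shows is at most $\Rfeas/4$. Together with the $\tfrac{3}{4}\Rfeas$ bound on the endpoint values $\byk$ and $\byjack$, this closes the bootstrap and gives $\|\bchyjack(t)\|,\|\byjac(t)\| \le \Rfeas$ on $\cI_k$.

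The main obstacle is purely bookkeeping: verifying that the short-horizon growth estimates $C \step$ are indeed dominated by the $\Rfeas/4$ slack budgeted by the invariant, which reduces to the explicit arithmetic in \Cref{defn:step_sizes}. No new analytical idea is needed beyond Picard-type continuity (\Cref{lem:picard}) and the feasibility bounds.
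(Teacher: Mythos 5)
Your proposal is correct and matches the paper's proof essentially verbatim: both run a bootstrap/exit-time argument on each interval $\cI_k$, bound the time derivatives of $\by$, $\bchyjack$, and $\byjac$ by $\KF$ (resp.\ $\KF + 2\LF\Rfeas$) while the curves remain within the feasible ball, and integrate over the short step to stay inside the $\Rfeas/4$ slack left by the $\tfrac{3}{4}\Rfeas$ endpoint bound in \Cref{eq:feasability_invariant}. The only cosmetic difference is your explicit normalization $\Rfeas \ge \tfrac{1}{2}$ to absorb the $\KF\step \le \tfrac{1}{8}$ estimate; the paper's stated step-size condition carries the same implicit dependence on $\Rfeas$.
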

\begin{proof} Let us start with $\by(t)$. Define the shorthand $\bbaruk := \bukpi + \deluk + \bKkpi(\updelta \byk)$, so that $\ddt \by(t) = \fdyn(\by(t),\buk)$. Under \Cref{eq:feasability_invariant}, $\|\bbaruk\| \le \frac{3}{4}\Rfeas$. At $t = t_k$, $\|\by(t)\| \le \frac{3}{4}\Rfeas$. Moreover, if at a given $t$, $\|\by(t)\| \le \Rfeas$, then $(\by(t),\buk)$ is feasible, so $\|\ddt \by(t)\| = \|\fdyn(\by(t),\bbaruk)\| \le \KF$. Thus, letting $t_\star := \sup\{t \in \cI_K: \|\by(t)\| \le \Rfeas\}$,  we see that if $\step \le \frac{\Rfeas}{4\KF}$, $t_{\star} = t_{k+1}$. 

The arguments for $\bchyjack(t)$ and $\byjac(t)$ are similar: if, say, $\|\bchyjack(t)\| \le \Rfeas$ for a given $t \in \cI_k$, then by 
\begin{align*}
\|\ddt \bchyjack(t)\| &= \|\ddt \xpi(t)  + \Api(t) \updelta\bchyjack(t) + \Bpi(t)\buk\| \\
&\le \KF + \LF(\|\updelta\bchyjack(t)\| + \|\buk\|) \\
&\le \KF + 2\Rfeas\LF.
\end{align*}
where above we use \Cref{eq:feasability_invariant} to bound $\|\buk\| \le \Rfeas$, feasibility of $\pi$. As $\|\updelta\bchyjack(t)\| \le \frac{3}{4}\Rfeas$, integrating (specifically, again considering $t_\star := \sup\{t \in \cI_K: \|\bchyjack(t)\| \le \Rfeas\}$) shows that as long as $\step(\KF + 2\LF\Rfeas) \le \Rfeas/4$, $\|\bchyjack(t)\| \le \Rfeas$ for all $t \in \cI_k$. For this, it suffices that $\step \le \min\{\frac{1}{16\LF},1/8\KF\}$.
\end{proof}

We continue with a crude bound on the difference  $\updelta \bchyjack(t) =  \bchyjack(t) - \xpi(t)$.
\begin{lemma}\label{lem:first_taylor_pert} Suppose \Cref{eq:feasability_invariant} holds all $t \in \cI_k$
\begin{align*}
\|\updelta \bchyjack(t)\| \le  (\step\Lol\LF\|\deluk\| + \Lol(1+\step\Lpi \LF)\|\updelta \byk\|)
\end{align*}
Similarly,
\begin{align*}
\|\updelta \by(t)\| \le  (\step\Lol\LF\|\deluk\| + \Lol(1+\step\Lpi \LF)\|\updelta \byk\|).
\end{align*}
\end{lemma}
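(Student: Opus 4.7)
\textbf{Proof plan for \Cref{lem:first_taylor_pert}.} The approach is to handle the two curves separately, exploiting that $\bchyjack$ satisfies an \emph{affine} ODE (so we can write down an explicit solution), while $\by$ satisfies a nonlinear ODE that must be controlled via a Gronwall / Picard argument using the Lipschitzness of $\fdyn$ from \Cref{asm:max_dyn_final}. In both cases the initial condition at $t_k$ is $\updelta\byk$, and \Cref{lem:feasibility_maintained} guarantees that all relevant arguments stay inside the feasible ball, so the regularity bounds $\|\Api(t)\| \le \LF$, $\|\Bpi(t)\| \le \LF$, and the Lipschitz constant $\LF$ of $\fdyn$ are available throughout $\cI_k$. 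Here $\Lol$ plays the role of an upper bound on $\|\Phiolpi(t,s)\|$ for $t_k \le s \le t \le t_{k+1}$, e.g.\ $\Lol = e^{\step\LF}$, which under the step-size condition in \Cref{eq:feasability_invariant} is a constant close to $1$.

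For $\bchyjack$, from the definition $\ddt \bchyjack(t) = \ddt \xpi(t) + \Api(t)\updelta\bchyjack(t) + \Bpi(t)(\deluk + \bKk \updelta\byk)$ one obtains an affine ODE for $\updelta\bchyjack(t) = \bchyjack(t) - \xpi(t)$ with initial condition $\updelta\bchyjack(t_k) = \updelta\byk$ and input $\deluk + \bKk\updelta\byk$ that is constant on $\cI_k$. Applying \Cref{lem:solve_affine_ode} yields the closed form
\begin{align*}
\updelta\bchyjack(t) = \Phiolpi(t,t_k)\updelta\byk + \Big(\int_{t_k}^t \Phiolpi(t,s)\Bpi(s)\,\rmd s\Big)(\deluk + \bKk\updelta\byk).
\end{align*}
Bounding $\|\Phiolpi(t,t_k)\| \le \Lol$, $\|\Bpi(s)\| \le \LF$, $\|\bKk\| \le \Lpi$, and the integral by $\step \Lol \LF$, and then collecting terms, yields exactly $\|\updelta\bchyjack(t)\| \le \step \Lol \LF \|\deluk\| + \Lol(1+\step\Lpi\LF)\|\updelta\byk\|$.

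For $\by(t)$, the cleanest route is Gronwall: write $\ddt\updelta\by(t) = \fdyn(\by(t),\bbaruk) - \fdyn(\xpi(t),\bukpi)$ with $\bbaruk = \bukpi + \deluk + \bKkpi\updelta\byk$, and use joint $\LF$-Lipschitzness of $\fdyn$ on feasible pairs (guaranteed by \Cref{lem:feasibility_maintained}) to get $\|\ddt\updelta\by(t)\| \le \LF\|\updelta\by(t)\| + \LF(\|\deluk\| + \Lpi\|\updelta\byk\|)$. Gronwall then gives $\|\updelta\by(t)\| \le e^{\LF(t-t_k)}\|\updelta\byk\| + \tfrac{e^{\LF(t-t_k)}-1}{\LF}\cdot \LF(\|\deluk\|+\Lpi\|\updelta\byk\|)$, and using $e^{\LF(t-t_k)} \le \Lol$ and $(e^{\LF\step}-1)/\LF \le \step \Lol$ yields the desired bound; alternatively one can apply \Cref{lem:picard} with $\by$ compared to the constant curve $\xpi$. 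The only mild subtlety is ensuring the $\Lol$ and $\step \Lol$ constants line up exactly as stated (rather than, say, $\Lol^2$), which is handled by keeping $(e^{\LF\step}-1)/\LF \le \step\Lol$ rather than the coarser $\step\Lol^2$; this is the only place the step-size condition $\step \le 1/16\LF\Lpi$ from \Cref{eq:feasability_invariant} is actually needed for this lemma. No genuine obstacle is anticipated; the lemma is a linear-order perturbation estimate and the main point is simply to package the Gronwall/variation-of-constants bookkeeping.
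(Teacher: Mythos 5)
Your proposal is correct and matches the paper's argument in substance: both parts are one-step Picard/Gr\"onwall perturbation estimates over $\cI_k$ with Lipschitz constant $\LF$, yielding the prefactors $\Lol$ and $\step\Lol\LF$ exactly as stated. The only cosmetic difference is that for $\bchyjack$ you invoke the explicit variation-of-constants formula (\Cref{lem:solve_affine_ode}) where the paper applies \Cref{lem:picard} against the zero curve, and for $\by$ the paper likewise uses \Cref{lem:picard} against the nominal trajectory rather than Gr\"onwall; these give identical bounds.
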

\begin{proof}
Then, Picard's Lemma (\Cref{lem:picard}), feasibiliy of $\pi$  and \Cref{asm:max_dyn_final} implies that, for any $t \in \cI_k$
\begin{align*}
&\|\updelta \bchyjack(t)\| \le \exp((t-t_k) \LF)\epsilon_1
\end{align*}
where 
\begin{align*}
\epsilon_1 &:= \|\updelta\byk\| + \int_{s=t_k}^{t} \|\Bpi(t)\left(\deluk + \bKk(\updelta \byk)\right)\|\rmd s\\
&\le \|\updelta\byk\| + \int_{s=t_k}^{t_{k+1}}  \LF(\|\deluk\| + \|\bKkpi \updelta \byk\|) \tag{\Cref{asm:max_dyn_final}}\\
&\le \step (\LF\|\deluk\|\|\deluk\| + \Lpi \LF\|\updelta \byk\|) \tag{\Cref{cond:pi_cond}},
\end{align*}
Bounding $\exp((t-t_k) \LF) \le \exp(\step \LF) = \Lol$ concludes the first part. The second part follows from a similar argument, using Lipschitzness of $\fdyn$ in accordance with \Cref{asm:max_dyn_final}, and the feasibility of $(\by(t),\bukpi + \deluk + \bKkpi(\updelta \byjack))$ for $t \in \cI_k$, as ensured by \Cref{eq:feasability_invariant} and \Cref{lem:feasibility_maintained}.
\end{proof}
We are now ready to prove \Cref{lem:ejack_recurse}.
\begin{proof}[Proof of \Cref{lem:ejack_recurse}]
Observe that, for $t \in \cI_k$
\begin{align*}
\ddt (\bchyjack(t) - \byjac(t)) = \Api(t)(\bchyjack(t) - \byjac(t)) + \Bpi(t)\bKkpi(\bchyjack(t_k) - \byjac(t_k))
\end{align*}
By solving the affine ODE define $\bchyjack(t) - \byjac(t)$ (applying \Cref{lem:solve_affine_ode}), and recalling all various defintions,
\begin{align}
\bchyjack(t) - \byjac(t) = \tilde{\bPhi}_{\mathrm{cl},\pi}(t,t_k)(\by(t_k) - \byjac(t_k)).\label{eq:bchyjack_sum}
\end{align}
We now bound $\bchyjack(t) - \by(t)$.
By applying Picard's Lemma (\Cref{lem:picard})  and \Cref{asm:max_dyn_final} with $\Lol = \exp(\step \LF)$ to control the Lipschitz constant contribution, and using the agreement of initial conditions $\bchyjack(t_k) = \by(t_k)$, 
\begin{align}
\|\bchyjack(t) - \by(t)\| \le \Lol\int_{s=t_k}^t \|\Delta(s)\|\rmd s. \label{eq:bchyjack_bound}
\end{align}
where
\begin{align*}
\Delta(s) &= \fdyn(\bchyjack(s), \bukpi + \deluk + \bKkpi \updelta \byk)-\dds \bchyjack(s).
\end{align*}
By a Taylor expansion, we have
\begin{align*}
\Delta(s) &= \fdyn(\xpi(s),\bukpi) -\dds \bchyjack(s) + \partial_x \fdyn(\xpi(s),\bukpi)\updelta \bchyjack(s) + \partial_u \fdyn(\xpi(s),\bukpi)(\deluk + \bKkpi \updelta \byk)\\
&\quad + \frac{1}{2}\mathrm{remainder},
\end{align*}
where we bound
\begin{align*}
\|\mathrm{remainder}(s)\| 
&\le \sup_{\alpha \in [0,1]}\|\nablatwo \fdyn(\alpha \xpi(s) + (1-\alpha)\byjack(s), \alpha \bukpi + (1-\alpha)(\bukpi + \deluk + \bKkpi \updelta \byk) ) \left(\| \updelta \byjack(s)\|^2 + \|\deluk + \bKkpi\updelta\byk\|^2\right).
\end{align*}
From by feasibility of $\pi$, $\|\bukpi\| \vee \|\xpi(s)\| \le \Rfeas$. Moreover, $\|\bukpi +  \deluk + \bKkpi \updelta \byk\| \le \Rfeas$ by \Cref{eq:feasability_invariant} and $\|\byjack(s)\| \le \Rfeas$ by \Cref{lem:feasibility_maintained}. Thus, for $\alpha \in [0,1]$,
\begin{align*}
\|\alpha \xpi(s) + (1-\alpha)\byjack(s)\| \vee \|\alpha \bukpi + (1-\alpha)(\bukpi + \deluk + \bKkpi \updelta \byk)\| \le \Rfeas.
\end{align*}
Hence, as $\|\nabla^{\,2}\fdyn(x,u)\| \le \MF$ for feasible $(x,u)$, \Cref{asm:max_dyn_final} implies
\begin{align*}
\|\mathrm{remainder}(s)\|  &\le \MF \left(\| \updelta\byjac(s)\|^2 +  2\|\deluk\|^2  + 2\Lpi^2\|\updelta\byk\|^2\right) \tag{AM-GM and \Cref{cond:pi_cond}}\\
&\le \MF \left(\| (\Lol \LF \step\|\deluk\| + \Lol(1+\step\Lpi \LF)\|\updelta \byk\|)\|^2 +  2\|\deluk\|^2  + 2\Lpi^2\|\updelta\byk\|^2\right) \tag{A\Cref{lem:first_taylor_pert}}\\
&\le 2\MF \left( (\step^2\Lol^2\LF^2\|\deluk\|^2) + \Lol^2(1+\step\LF\Lpi)^2 \|\updelta\byk\|^2 +  \|\deluk\|^2  + \Lpi^2\|\updelta\byk\|^2\right)\tag{AM-GM}\\
&= 2\MF \left( (1+\step^2\Lol^2\LF^2)\|\deluk\|^2) + (\Lpi^2 +\Lol^2(1+\step\LF\Lpi)^2) \|\updelta\byk\|^2 \right).
\end{align*}
Finally, we conclude by noting that
\begin{align*}
&\fdyn(\xpi(s),\bukpi)  \\
&\qquad+ \partial_x \fdyn(\xpi(s),\bukpi)\updelta \bchyjack(s) + \partial_u \fdyn(\xpi(s),\bukpi)(\deluk + \bKkpi \updelta \byk)\\
&=\ddt \xpi(s)  + \Api(s)\updelta \bchyjack(s) + \Bpi(s) \fdyn(\xpi(s),\bukpi)(\deluk + \bKkpi \updelta \byk) = \dds \bchyjack(s),
\end{align*}
so that
\begin{align*}
\|\Delta(s)\| = \frac{1}{2}\|\mathrm{remainder}(s)\| &\le \MF \left( (1+\step^2\Lol^2\LF^2)\|\deluk\|^2) + (\Lpi^2 +\Lol^2(1+\step\LF\Lpi)^2) \|\updelta\byk\|^2 \right)\\
&\le \MF \left( (1+2\step^2\LF^2)\|\deluk\|^2) + (\Lpi^2 +2(1+\step\LF\Lpi)^2) \|\updelta\byk\|^2 \right) \tag{$\step \le 1/4\LF$, so $\Lol^2 \le 2$}\\
&\le \MF \left( 2\|\deluk\|^2) + (\Lpi^2 + 4) \|\updelta\byk\|^2 \right), \tag{again $\step \le 1/4\LF$, and when as $\step \le 4/\LF\Lpi$}\\
&\le \MF \left( 2\|\deluk\|^2) + 5\Lpi^2 \|\updelta\byk\|^2 \right) \tag{$\Lpi \ge 1$}
\end{align*}
where in the last line, we use $\step \le 1/4\LF$, so $\Lol^2 \le 2$.
Hence, from \Cref{eq:bchyjack_bound}, for all $t \in \cI_k$,
\begin{align*}
\|\bchyjack(t) - \by(t)\| &\le  \step\Lol\MF \MF \left( 2\|\deluk\|^2) + (\Lpi^2 + 4) \|\updelta\byk\|^2 \right). \\
&\le 2\MF\step   \left( 2\|\deluk\|^2) + (\Lpi^2 + 4) \|\updelta\byk\|^2 \right)\\
&= \MF\step   \left( 4\|\deluk\|^2) + 10\Lpi^2 \|\updelta\byk\|^2 \right).
\end{align*}
where above we bound $\Lol \le 2$ again.
And thus, from \Cref{eq:bchyjack_sum},
\begin{align*}
\|\by(t) - \byjac(t) - \tilde{\bPhi}_{\mathrm{cl},\pi}(t,t_k)(\by(t_k) - \byjac(t_k))\| &\le \MF\step   \left( 4\|\deluk\|^2) + 10\Lpi^2 \|\updelta\byk\|^2 \right).,\\
&\le \MF\step   \left( 4\|\deluk\|^2) + 10\Lpi^2 \|\|\byjack - \byk\|^2\| \right)\\
&\le 2\MF\step \left(4\|\deluk\|^2 + 20\Lpi^2 \|\updelta\byjack\|^2 +20\Lpi^2  \|\byjack - \byk\|^2 \right).
\end{align*}
Substituing in $\ejack := \byk - \byjack$ concludes.
\end{proof}

\paragraph{Solving the recursion. } To upper bound the recursion, assume an inductive hypothesis that, for some $R$ to be chosen 
\begin{align}
\max_{j \le k} \|\ejack\| \le R, \quad \text{ and } \quad \forall j \le k,~ \Cref{eq:feasability_invariant} \text{ holds}. \label{eq:recur_hypoth}
\end{align}
Note this hypothesis is true for $k = 1$, where $\ejack[1] = 0$, and all terms in \Cref{eq:feasability_invariant} coincide with $(\bxkpi[1],\bukpi[1])$, which is feasible. Now assume \Cref{eq:recur_hypoth} holds. Define
\begin{align*}
\bvk := \ejack[k+1] - \bAclkpi \ejack[k],
\end{align*}
and note that \Cref{lem:ejack_recurse}, followed by our induction hypothesis, implies that for $c_1 = 4$ and $c_2 = 20\Lpi^2$,
\begin{align*}
\|\bvk\| \le \step \MF\left(4\|\deluk\|^2 + 20\Lpi^2 \|\ejack[k]\|^2 + c_2\|\byjack\|^2\right) \\
\le \step \MF\left(4\|\deluk\|^2 + 20\Lpi^2 R^2 + 20\Lpi^2\|\byjack\|^2\right).
\end{align*}
By unfolding the recursion for $\bvk := \ejack[k+1] - \bAclkpi \ejack[k]$, we have 
\begin{align*}
\ejack[k+1] &= \bvk + \bAclkpi \ejack[k]\\
&= \underbrace{\eye}_{=\Phicldisc{k+1,k+1}}\bvk + \underbrace{\bAclkpi}_{=\Phicldisc{k+1,k}} \bvk[k-1]+ \underbrace{\bAclkpi\bAclkpi[k-1]}_{=\Phicldisc{k+1,k-1}} \ejack[k-1]\\
&= \sum_{j=1}^{k+1} \Phicldisc{k+1,j} \bvk + \underbrace{\Phicldisc{k+1,1} \ejack[1]}_{=0}.
\end{align*}
Thus, under our inductive hypothesis, recalling $B_2 := \step\|\deluk[1:K]\|_{\ell_2}^2$, and $B_{\infty} := \step  \max_k\|\deluk\|^2$, 
\begin{align*}
\|\ejack[k+1]\| &\le \MF\sum_{j=1}^{k+1} \|\Phicldisc{k+1,j} \| \step (c_1 \|\deluk\|^2 +  c_2 R^2 + c_2\|\byjack\|^2)\\
&\le 4\MF \min\{\Kpiinf B_2^2,  \Kpione B_{\infty}^2\} + 20\MF\Lpi^2\Kpione (R^2 + \max_k \|\byjack\|^2)\\
&\le 4\MF\min\{\Kpiinf B_2^2,  \Kpione B_{\infty}^2\} + 20\MF\Lpi^2\Kpione(R^2 + \Lol^2\LF^2\min\{\Btwo\Kpitwo, \Binf\Kpione\}^2 ) \tag{\Cref{lem:JL_comp_stuff}}\\
&\le 4\MF\min\{B_2^2 ( \Kpiinf + 5 \Lpi^2\Lol^2\LF^2\Kpitwo^2\Kpione), B_{\infty}^2 (\Kpione + 5\Lpi^2\Lol^2\LF^2\Kpione^3)\} + 20\MF \Lpi^2 \Kpione R^2.\\
&\le 4\MF\min\{B_2^2 ( \Kpiinf + 10\Lpi^2\LF^2\Kpitwo^2\Kpione), B_{\infty}^2 (\Kpione + 10\Lpi^2\LF^2\Kpione^3)\} + 20\MF \Lpi^2 \Kpione R^2,
\end{align*} 
where in the last step we use $\step \le 1/4\LF$ to bound $\Lol^2 \le 2$.  Hence, if we select
\begin{align*}
R &= 8\MF\min\{B_2^2 ( \Kpiinf + 10\Lpi^2\LF^2\Kpitwo^2\Kpione), B_{\infty}^2 (\Kpione + 10\Lpi^2\LF^2\Kpione^3)\}\\
&:= \min\{B_2^2\Mtaypitwo, B_{\infty}^2\Mtaypiinf\},
\end{align*}
where we recall
\begin{align*}
\Mtaypitwo &:= 8\MF( \Kpiinf + 10\Lpi^2\LF^2\Kpitwo^2\Kpione)\\
\Mtaypiinf &:= 8\MF(\Kpione + 10\Lpi^2\LF^2\Kpione^3,
\end{align*}
we get
\begin{align*}
\|\ejack[k+1]\| &\le \frac{R}{2} + 20\MF \Lpi^2 \Kpione R^2.
\end{align*}

Thus, if $R \le \frac{1}{40\MF \Lpi^2 \Kpione}$, it holds that 
\begin{align*}
\|\ejack[k+1]\| \le \min\{B_2^2\Mtaypitwo, B_{\infty}^2\Mtaypiinf\}.
\end{align*} 
Lastly, for the condition $R \le \frac{1}{40\MF\Lpi^2\Kpione}$ to hold, it suffices
\begin{align}
B_2^2 \le \frac{1}{40\MF\Lpi^2\Kpione\Mtaypitwo} , \quad \text{ or } B_{\infty}^2 \le \frac{1}{40\Lpi^2\Kpione\Mtaypiinf} \label{eq:prev_Bq_cond}
\end{align}
Notince that these conditions are met for $B_q^2 \le \Btaypiq^2$. Moreover,
\begin{align*}
\| \byk[k+1] - \bxkpi[k+1]\| \vee \|\byjack[k+1]-\bxkpi[k+1]\| &\le \|\byk[k+1] - \byjack[k+1]\| + \|\byjack[k+1]-\bxkpi[k+1]\\
&= \| \ejack[k+1] \|  + \|\sum_{j=1}^{k}\Phicldisc{k+1,j+1}\bBkpi[j]\deluk[j]\| \\
&\le \min_{q \in \{2,\infty\}}\Mtaypiq B_q^2  + \|\sum_{j=1}^{k-1}\Phicldisc{k+1,j+1}\bBkpi[j]\deluk[j]\| \\
&\le \min_{q \in \{2,\infty\}}\Mtaypiq B_q^2 +  \Lol\LF\min\{\Btwo\Kpitwo, \Binf\Kpione\} \tag{\Cref{lem:ejack_recurse}}\\
&\le  \min\{\Btwo(\Lol\LF\Kpitwo + \Mtaypitwo B_2), \Binf(\Lol\LF\Kpione + \Mtaypiinf \Binf)\}\\
&\le  \min\{\Btwo(1.5\LF\Kpitwo + \Mtaypitwo B_2), \Binf(1.5\LF\Kpione + \Mtaypiinf \Binf)\} \tag{$\Lol \le \exp(1/4) \le 1.5$}
\end{align*}
Hence, $B_2 \le \frac{1}{2}\LF\Kpitwo/\Mtaypitwo$ implies $\| \byk[k+1] - \bxkpi[k+1]\| \vee \|\byjack[k+1]-\bxkpi[k+1]\|  \le 2\LF\Kpitwo B_2 = \Ltaypitwo B_2$, and $B_\infty \le \frac{1}{2}\LF\Kpione/\Mtaypiinf $ implies  $\| \byk[k+1] - \bxkpi[k+1]\| \vee \|\byjack[k+1]-\bxkpi[k+1]\|  \le 2\LF\Kpione B_\infty = \Ltaypiinf B_2$.
Combinining these conditions with \Cref{eq:prev_Bq_cond} implies we require  $B_q \le \Btaypiq$, where
\begin{align*}
\Btaypitwo &= \min\left\{\frac{1}{\sqrt{40\MF\Lpi^2\Kpione\Mtaypitwo}} ,\frac{\LF\Kpitwo}{2\Mtaypitwo}\right\} \\
\Btaypiinf &\le \min\left\{\frac{1}{40\Lpi^2\Kpione\Mtaypiinf}, \frac{\LF\Kpione}{2\Mtaypiinf}\right\}
\end{align*}

Lastly, we need to check that the feasibility invariant \Cref{eq:feasability_invariant} for $k= k+1$ is maintained. Under the above conditions, it was shown that 
\begin{align*}
\|\updelta \byk[k+1] \| \vee \|\updelta \byjack[k+1]\| = \| \byk[k+1] - \bxkpi[k+1]\| \vee \|\byjack[k+1]-\bxkpi[k+1]\| \le \Ltaypiq B_q.
\end{align*}
Hence, if $B_q \le \frac{\Rfeas}{8\Lpi\Ltaypiq} \le \frac{\Rfeas}{4\Ltaypiq}$ for either $q \in \{2,\infty\}$,
\begin{align*}
\|\byjack[k+1]\| \vee \|\byk[k+1]\| &\le \|\bxkpi[k+1]\| + \Ltaypiq B_q \le \frac{\Rfeas}{2} + \Ltaypiq B_2 \le \frac{3}{4}\Rfeas
\end{align*}
Moreover, if  $B_q \le \frac{\Rfeas}{8\Lpi\Ltaypiq}$ for either $q \in \{2,\infty\}$, and $B_{\infty} \le \frac{\Rfeas}{8}$,
\begin{align*}
&\|\bukpi[k+1] + \deluk[k+1] + \bKkpi[k+1](\updelta \byk[k+1])\| \vee \|\bukpi[k+1] + \deluk[k+1] + \bKkpi[k+1](\updelta \byjack[k+1])\|\\
 &\le \|\bukpi[k+1]\| + \|\deluk[k+1]\| \|\bKkpi[k+1]\|(\|\updelta \byk[k+1]\| \vee \|\updelta \byjack[k+1]\|)\\
&\le \frac{\Rfeas}{2} + \|\deluk[k+1]\| + \Lpi \Ltaypiq B_2 \le \frac{\Rfeas}{2} + B_{\infty} + \Lpi \Ltaypiq B_2 \le \frac{3\Rfeas}{4}.
\end{align*}
This concludes the demonstration of  \Cref{eq:feasability_invariant} for $k= k+1$. Collecting our conditions, and recalling $\Ltaypitwo = 2\LF\Kpitwo$, and $\Ltaypiinf = 2\LF\Kpione$ we have show that if we take $B_q \le \Btaypiq$, where
\begin{align*}
\Btaypitwo &= \min\left\{\frac{1}{\sqrt{40\MF\Lpi^2\Kpione\Mtaypitwo}} ,\frac{\LF\Kpitwo}{2\Mtaypitwo},\frac{\Rfeas}{16\Lpi\LF\Kpitwo}\right\} \\
\Btaypiinf &= \min\left\{\frac{1}{40\Lpi^2\Kpione\Mtaypiinf}, \frac{\LF\Kpione}{2\Mtaypiinf}, \frac{\Rfeas}{16\Lpi\LF\Kpione}\right\} 
\end{align*}
and $B_{\infty} \le \frac{\Rfeas}{8}$, then 
\begin{align*}
\| \byk[k+1] - \bxkpi[k+1]\| \vee \|\byjack[k+1]-\bxkpi[k+1]\| \le \Ltaypiq B_2,
\end{align*}
and 
\begin{align*}
\|\ejack[k+1]\| = \|\byk[k+1] - \byjack[k+1]\|  &\le \Mtaypiq B_q^2.
\end{align*}
In addition, we have show that $\|\byk[k+1]\| \vee \|\bukpi[k+1] + \deluk[k+1] + \bKkpi[k+1](\updelta \byk[k+1])\| \le \frac{3}{4}\Rfeas$. This concludes the induction.

Substituing  in  $\byk = \btilxpik\,(\buk[1:K])$ and using the computation of $\byjack[k+1]$ in \Cref{lem:JL_comp_stuff} concludes the proof of the pertubation bounds. Moreover, the fact that \Cref{eq:feasability_invariant} holds for all $k$, and consequently the conclusion of \Cref{lem:feasibility_maintained} establishes the norm bounds $\|\btilxpik\,(\buk[1:K])\| \vee \|\btilupik\,(\buk[1:K])\| \le \frac{3\Rfeas}{4}$, and $\|\btilxpi(t \mid \buk[1:K])\| \le \Rfeas$.  

\qed

\subsection{Taylor Expansion of the Cost (\Cref{lem:taylor_expansion_of_cost})}\label{sec:lem:taylor_expansion_of_cost}

\begin{proof} Recall the definition
\begin{align*}
\Jpidisc\discbrak{\buvec} &:=  V(\btilxpik[K+1]\,\discbrak{\buvec}) + \step \sum_{k=1}^K Q( \btilxpik\,\discbrak{\buvec}, \btilupik\,\discbrak{\buvec}, t_k).
\end{align*}
\newcommand{\bek}[1][k]{\discfont{e}_{#1}}
\newcommand{\delxtilk}[1][k]{\updelta \btilxk[#1]}

Define the shorthand
\begin{align*}
\delxtilk &:= \btilxpik[k]\,(\deluk[1:K] + \bukpi[1:K]) - \bxkpi\\
\delxtilkjac &:= \btilxpijack\,(\deluk[1:K] + \bukpi[1:K])
\end{align*}
\begin{align*}
&\Jpidisc\discbrak{\buvec} - \Jpidisc\discbrak{\deluk[1:K] + \bukpi[1:K]}  \\
&:=  V(\bxkpi[K+1] + \delxtilk[K+1]) -  V(\bxkpi[K+1]) + \step \sum_{k=1}^K Q( \bxkpi + \delxtilk, \deluk + \bKkpi \delxtilk + \bukpi, t_k) - Q( \bxkpi, \bukpi, t_k)
\end{align*}
Notice that \Cref{prop:taylor_exp_dyn} and feasibility of $\pi$  implies that 
\begin{align}\|\bxkpi + \delxtilk\| \|\bxkpi\| \vee \|\deluk + \bKkpi \bek\| \vee \|\bukpi\| \le \Rfeas \label{eq:Jtay_fas}
\end{align}
Hence a Taylor expansion and \Cref{asm:cost_asm} imply
\begin{align*}
&|V(\bxkpi[K+1] + \delxtilk[K+1]) -  V(\bxkpi[K+1]) - \langle \partx V (\bxkpi[K+1]), \delxtilkjac \rangle| \\
&\le \frac{1}{2}\sup_{\alpha \in [0,1]} \|\partxx V(\bxkpi[K+1]+\alpha\delxtilk[K+1])\|\|\delxtilk[K+1]\|^2 + |\langle \partx V (\bxkpi[K+1]), \delxtilk[K+1] - \delxtilkjac[K+1] \rangle|\\
&\le \frac{\MQ}{2}\|\delxtilk[K+1]\|^2 + \LQ \|\delxtilk[K+1] - \delxtilkjac[K+1]\|.
\end{align*}
Similarly,
\begin{align*}
&\left| Q( \bxkpi + \delxtilk, \deluk + \bKkpi \delxtilk, t_k) - Q( \bxkpi, \bukpi, t_k) - \langle\partx Q( \bxkpi, \bukpi, t_k), \delxtilkjac\rangle -\langle \partu Q( \bxkpi, \bukpi, t_k), \deluk + \bKk \delxtilkjac \rangle\right| \\
&\le\left| \langle\partx Q( \bxkpi, \bukpi, t_k), \delxtilkjac\rangle| + | \langle \partu Q( \bxkpi, \bukpi, t_k), \bKk \delxtilkjac \rangle \right| \\
&\quad + \frac{1}{2}\MQ \left(\|\delxtilk\|^2 + \|\deluk+\bKk \delxtilkjac\|^2  \right)\\
&\le \LQ(1+\Lpi)\| \delxtilk - \delxtilkjac\| +  \frac{1}{2}\MQ \left((1+2\Lpi^2)\|\delxtilkjac\|^2 + 2\|\deluk\|^2  \right)\\
&\le2 \Lpi\LQ \| \delxtilk - \delxtilkjac\| +  \frac{1}{2}\MQ \left(3\Lpi^2\|\bek\|^2 + 2\|\deluk\|^2  \right) \tag{$\Lpi \ge 1$}.
\end{align*}
Then, from \Cref{lem:grad_compute_dt},
\begin{align*}
&\langle \deluk[1:K] ,  \nabla \Jpidisc(\bu) \rangle \\
&= \langle \partx V (\bxkpi[K+1]), \delxtilkjac \rangle + \step \sum_{k=1}^K\langle\partx Q( \bxkpi, \bukpi, t_k), \delxtilkjac\rangle +\langle \partu Q( \bxkpi, \bukpi, t_k), \deluk + \bKk \delxtilkjac \rangle.
\end{align*}
Therefore, we have
\begin{align*}
&\|\Jpidisc\discbrak{\deluk[1:K] + \bukpi[1:K]} -  \Jpidisc\discbrak{\bukpi[1:K]} - \langle \deluk[1:K] ,  \nabla \Jpidisc(\pi) \rangle \| \\
&\le \frac{\MQ}{2}\|\delxtilk[K+1]\|^2 + \LQ \|\delxtilkjac[K+1] - \delxtilk[K+1]\| +  \step\sum_{k=1}^K 2 \Lpi\LQ \| \delxtilkjac - \delxtilk\| +  \frac{1}{2}\MQ \left(3\Lpi^2\|\delxtilk\|^2 + 2\|\deluk\|^2  \right)\\
&\le \frac{\MQ}{2}(1+3\Lpi^2 T)\max_{k \in [K+1]}\|\delxtilk\|^2 + \LQ(1+2\Lpi T) \max_{k \in [K+1]}\| \delxtilkjac - \delxtilk\| + \Lpi \LQ \underbrace{\step \sum_{t=1}^T\|\deluk\|^2}_{=B_2}\\
&= \frac{\MQ}{2}(1+3\Lpi^2 T)\max_{k \in [K+1]}\|\delxtilk\|^2 + \LQ(1+2\Lpi T) \max_{k \in [K+1]}\| \delxtilk - \delxtilkjac\| + 2\Lpi \LQ B_2^2.
\end{align*}
From \Cref{prop:taylor_exp_dyn}, 
\begin{align*}
\max_{k \in [K+1]}\| \delxtilk - \delxtilk\| &\le \Mtaypitwo B_2^2\\
\max_{k \in [K+1]}\|\bek\|^2 &\le 4\LF^2 \Kpitwo^2 B_2^2. 
\end{align*}
Thus, 
\begin{align*}
&\|\Jpidisc\discbrak{\deluk[1:K] + \bukpi[1:K]} -  \Jpidisc\discbrak{\bukpi[1:K]} - \langle \deluk[1:K] ,  \nabla \Jpidisc(\pi) \rangle \| \\
&\le \underbrace{(2\MQ\LF^2 \Kpitwo^2(1+3\Lpi^2 T) \Mtaypitwo + \LQ(1+2\Lpi T)\Mtaypitwo + 2\Lpi \LQ)}_{:=\Mtayjpi}  B_2^2.
\end{align*}

\end{proof}

\subsection{Proof of \Cref{prop:Kpi_bounds_stab}}\label{sec:prop_Kpi_bounds_stab}
We begin with the following lemma, which we show shortly below.
    \begin{lemma}\label{lem:change_in_phi_perturb} Consider the setting of \Cref{prop:taylor_exp_dyn}, with $B_{\infty} \le \min\{\Btaypiinf,\Rfeas/8\}$. Let $\pi'$ denote the policy with gains $\bKkpi$ and inputs $\buk^{\pi'} = \buk = \bukpi + \deluk$. Then, 
    \begin{align*}
    \step\sum_{k=1}^K\|\bAclkpipr - \bAclkpi\| \le 12T\MF\Lpi  (1+\LF\Kpi) B_{\infty}.
    \end{align*}
    \end{lemma}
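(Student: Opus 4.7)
}

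The plan is to obtain $\bAclkpipr-\bAclkpi$ as a short-time integral of pointwise differences between the linearizations along the two trajectories, and then bound those pointwise differences by the $\cC^2$-smoothness of $\fdyn$. The natural starting point is the continuous-time closed-loop transition $\tilde\bPhi^\pi_{\mathrm{cl}}(\cdot,t_k)$ from \Cref{defn:cl_jac_ct}, which satisfies $\dds \tilde\bPhi^\pi_{\mathrm{cl}}(s,t_k) = \Api(s)\tilde\bPhi^\pi_{\mathrm{cl}}(s,t_k) + \Bpi(s)\bKkpi$ with $\tilde\bPhi^\pi_{\mathrm{cl}}(t_k,t_k)=\eye$, and agrees with $\bAclkpi$ at $s = t_{k+1}$. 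Because $\pi'$ uses the same gains $\bKkpi$ but evolves along $(\xpi',\upi')$, an identical ODE holds for $\tilde\bPhi^{\pi'}_{\mathrm{cl}}(\cdot,t_k)$ with $(\Api,\Bpi)$ replaced by $(\bA^{\pi'},\bB^{\pi'})$. Variation of parameters applied to the difference gives
\begin{equation*}
\bAclkpipr-\bAclkpi \,=\, \int_{t_k}^{t_{k+1}}\Phi^{\pi'}_{\mathrm{ol}}(t_{k+1},s)\Bigl[(\bA^{\pi'}(s)-\Api(s))\,\tilde\bPhi^\pi_{\mathrm{cl}}(s,t_k) + (\bB^{\pi'}(s)-\Bpi(s))\,\bKkpi\Bigr]\,\rmd s.
\end{equation*}
Under $\step\le\steptaypi$ the two transition factors are each at most $e^{\step\LF}(1+\step\LF\Lpi)\le 3$, so the norm bound reduces to an integrated bound on the integrand's pointwise differences.

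The second step is to bound those pointwise differences. Setting $\delx(s):=\xpi'(s)-\xpi(s)$ and using $\|\upi'(s)-\upi(s)\|\le B_\infty$, \Cref{asm:max_dyn_final} yields $\|\bA^{\pi'}(s)-\Api(s)\|\vee\|\bB^{\pi'}(s)-\Bpi(s)\|\le \MF(\|\delx(s)\|+B_\infty)$, provided that the arguments of $\nablatwo\fdyn$ remain feasible --- this is guaranteed by feasibility of $\pi$ and the feasibility consequences of $B_\infty\le\Btaypiinf$ inherited from \Cref{prop:taylor_exp_dyn}. To control $\|\delx\|_\infty$, I would pass from the open-loop perturbation ODE to a closed-loop recursion via the change of variables $\delv(s) := \delu(s) - \bKkpi[k(s)]\delx(\tks)$, giving $\ddt\delx\approx\Api\delx+\Bpi(\bKk\delx(\tkt)+\delv)$. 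Integrating yields a discrete recursion governed by the closed-loop transitions $\Phicldisc{k,j}$ and Markov operators $\Psicldisc{k,j}$, from which a Gronwall-type argument produces $\|\delx\|_\infty\le \LF\Kpi B_\infty$ (modulo $O(B_\infty^2)$ terms dominated once $B_\infty\le\Btaypiinf$), where $\Kpi$ is the closed-loop norm constant $\Kpiinf$ (or $\Kpione$) already shown to be $\Bigohpi[1]$ in \Cref{lem:Kpi_bounds}.

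Combining these pieces gives $\|\bAclkpipr-\bAclkpi\|\le c\,\step\,\Lpi\MF(1+\LF\Kpi)B_\infty$ for an absolute constant $c$, and summing over $k\in[K]$ with $\step K\le T$ produces the claimed bound $\step\sum_k\|\bAclkpipr-\bAclkpi\|\le 12T\MF\Lpi(1+\LF\Kpi)B_\infty$ (after tightening the prefactor). The main obstacle is the $\|\delx\|_\infty$ bound: since $\xpi'$ is \emph{not} a stabilized trajectory --- it is the open-loop rollout of the perturbed input --- a direct application of \Cref{lem:picard} would give an $e^{\LF T}$ factor, and the point of the argument is to absorb this growth into the closed-loop stability constant $\Kpi$ via the $\delv$-substitution trick above. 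The remaining bookkeeping (replacing $\xpi'$-dependent quantities with $\xpi$-dependent ones at the cost of controlled remainders, verifying that feasibility is preserved, and collecting constants) is routine given \Cref{prop:taylor_exp_dyn} and \Cref{lem:Kpi_bounds}.
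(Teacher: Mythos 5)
Your overall decomposition is the same as the paper's: compare the two one-step closed-loop transition matrices by viewing each as the time-$\step$ solution of an affine ODE (your variation-of-parameters identity is equivalent to the paper's application of the Picard lemma, \Cref{lem:picard}, to $\bz_1,\bz_2$ with common initial condition $\xi$), bound the integrand by $\MF\bigl(\|\bx^{\pi'}(s)-\xpi(s)\|+\|\deluk\|\bigr)$ via \Cref{asm:max_dyn_final}, and then sum over $k$. The prefactor bookkeeping and the use of feasibility from \Cref{prop:taylor_exp_dyn} also match.

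The genuine gap is in your third step, the bound $\|\bx^{\pi'}-\xpi\|_\infty\le \LF\Kpi B_\infty$. You correctly identify that a direct Picard bound would cost $e^{\LF T}$, but your proposed fix---treating $\xpi'$ as the \emph{open-loop} rollout of the perturbed input and converting to closed loop via $\delv(s)=\delu(s)-\bKkpi[k(s)]\delx(\tks)$---does not close: it yields $\|\delv\|_\infty\le B_\infty+\Lpi\|\delx\|_\infty$ and hence the self-referential inequality $\|\delx\|_\infty\le 2\LF\Kpione\bigl(B_\infty+\Lpi\|\delx\|_\infty\bigr)$, which is vacuous unless $2\LF\Kpione\Lpi<1$. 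Indeed, for an open-loop-unstable system the open-loop rollout of a perturbed input genuinely deviates by $e^{\LF T}B_\infty$, so no such bound can hold under that reading. The resolution used in the paper is that $\bx^{\pi'}(t)$ is identified with the \emph{stabilized} trajectory $\btilxpi(t\mid\buk[1:K])$ of $\pi$ under the perturbed input (consistent with how $\pi'$ arises in \Cref{alg:learn_mpc_feedback}, where the new nominal inputs are those actually played under feedback), so \Cref{prop:taylor_exp_dyn}(a) applies directly and gives $\|\bx^{\pi'}(t_k)-\bxkpi\|\le \Ltaypiinf B_\infty=2\LF\Kpione B_\infty$ at the grid points, with \Cref{lem:first_taylor_pert} extending this to $t\in\cI_k$; no fresh Gronwall argument is needed. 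With that substitution your outline goes through and reproduces the paper's constant.
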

With this lemma, we turn to the proof of \Cref{prop:Kpi_bounds_stab}. Notice that, as $\pi$ and $\pi'$ have the same gains, $L_{\pi} = L_{\pi'}$. Therefore, following the proof of \Cref{lem:Kpi_bounds} (see, specifically, the proof of \Cref{claim:j_ge_knot}), we have that for  $\|\bAclkpipr - \eye\|$  and $\|\bAclkpi - \eye\| $ are both at most $\kappa := 3\step \LF\Lpi$ for $\step \le 1/6\LF\Lpi$. 

Let us now construct an interpolating curves $\bXk(s)$ with $\bXk(0) = \bAclkpi$ and $\bXk(1) = \bAclkpipr$, and define the interpolating Lyapunov function 
\begin{align*}    
    \bLamk[K+1](s) = \eye, \quad \bLamk(s) = \bXk(s)^\top \bLamk[k+1] \bXk(s) + \step \eye,
    \end{align*}
    Define
    \begin{align*}
    \Delta &= 3\sup_{s \in [0,1]}\max\{1,2\kappa\}\sum_{k=k_0}^K\|\bXk'(s)\|\\
    &= \max\{1,2\kappa\}\sum_{k=k_0}^K\|\bAclkpipr - \bAclkpi\|\\
    &= \max\{3,18\LF\Lpi\}\sum_{k=k_0}^K\|\bAclkpipr - \bAclkpi\|.
    \end{align*}
    and recall the  the shorthand $\|\bLamk[k_0:K+1](s)\|_{\maxop} := \max_{k \in [k_0:K+1]}\|\bLamk(s)\|$. 
    Then, as long as $\|\bLamk[k_0:K+1](0)\|_{\maxop}\Delta < 1$, \Cref{lem:Lyap_avg_perturb} (re-indexing to terminate the backward recursion at $k_0$ instead of $1$) implies
    \begin{align*}
    \|\bLamk[k_0:K+1](1)\|_{\maxop} \le (1-\|\bLamk[k_0:K+1](0)\|_{\maxop}\Delta)^{-1}\|\bLamk[k_0:K+1](0)\|_{\maxop}.
    \end{align*}
    We see that $\|\bLamk[k_0:K+1](0)\|_{\maxop} = \|\bLamk[k_0:K+1]^\pi\|_{\maxop} = K_{\pi,\star}$, and $\|\bLamk[k_0:K+1](1)\|_{\maxop} = \|\bLamk[k_0:K+1]^{\pi'}\|_{\maxop} = \Kpiprst$.  Thus, combining with the inequality $(1-x)^{-1} \le 1+2x$ for $x \in [0,1/2]$, we have that as long as $\Delta\Kpist \le 1/2$, 
    \begin{align*}
    \Kpiprst \le(1 + 2\Delta\Kpist )\Kpist.
    \end{align*}
    Lastly, we can bound  
    \begin{align*}
    2\Delta\Kpist &= \max\{6,36\LF\Lpi\}\Kpist\sum_{k=k_0}^K\|\bAclkpipr - \bAclkpi\| \\
    &\le \underbrace{\max\{6,36\LF\Lpi\}\Kpist \cdot 12T\MF\Lpi  (1+\LF\Kpi) B_{\infty}}_{:= 1/\Bstabpi} B_2 \tag{\Cref{lem:change_in_phi_perturb}}.
    \end{align*}
    In sum, for $ B_{\infty} \le \Bstabpi$, we have $\Kpiprst \le(1 +  B_\infty/\Bstabpi )\Kpist$, which concludes the proof. 


\newcommand{\Apipr}{\bA_{\mathrm{ol}}^{\pi'}}
\newcommand{\Bpipr}{\bB_{\mathrm{ol}}^{\pi'}}

    \begin{proof}[Proof of \Cref{lem:change_in_phi_perturb}] 

    Due to \Cref{prop:taylor_exp_dyn}, and the fact that $\bx^{\pi'}(t) = \btilxpi(t \mid \buk[1:K])$ and $\bu^{\pi'}(t) = \btilupik[k(t)]\,(\buk[1:K])$, we have that
    \begin{align}\label{eq:dyn_pert_feas}
    \forall t\in[0,T], \quad \|\bx^{\pi'}(t)\| \vee  \|\bu^{\pi'}(t)\| \le \Rfeas. 
    \end{align}
    Moreover each initial condition $\xi$ with norm $\|\xi\| = 1$, we have that from \Cref{lem:solve_affine_ode} and the definitions of $\bAclkpipr, \bAclkpi$ from \Cref{defn:disc_Jac_linz_dyns} that
    \begin{align*}
    (\bAclkpipr - \bAclkpi) \xi = \bz_2(\step)-\bz_1(\step), 
    \end{align*}
    where 
    \begin{align*}
    \bz_2(0) = \bz_1(0) = \xi, 
    \end{align*}
    and where $\ddt \bz_2(t) = \Apipr(t_k+t)\bz_2(t) + \Bpipr(t_k+t)\bKk \xi$, and where $\ddt \bz_1(t) = \Api(t_k+t)\bz_2(t) + \Bpi(t_k+t)\bKk \xi$. 

    By the Picard Lemma, \Cref{lem:picard}, and by bounding $\|\Apipr(t)\| \le \LF$ by \Cref{eq:dyn_pert_feas} and \Cref{asm:max_dyn_final}, it follows that 
    \begin{align*}
    \|(\bAclkpipr - \bAclkpi) \xi \| &\le \exp(\step \LF)\int_{0}^\step (\|\Apipr(t_k+t)-\Api(t_k + t)\|\|\bz_1(t)\| + \|\Bpipr(t_k+t)-\Bpi(t_k + t)\|\|\bKk\|\|\xi\|)\rmd t\\
    &\le \exp(\step \LF)\int_{0}^\step (\|\Apipr(t_k+t)-\Api(t_k + t)\|\|\bz_1(t)\| + \|\Bpipr(t_k+t)-\Bpi(t_k + t)\|\Lpi) \rmd t
    \end{align*}
    Set $\Lol = \exp(\step \LF)$.
    Following the computation in , we can bound $\sup_{t \in [0,\step}\|\bz_1(t)\| \le \|\bAclkpi\| = \|\Phicldisc{k+1,k}\| \le 5/3$ provided $\step \le 1/4\LF\Lpi$ (recall we assume $\Lpi \ge 1$). Hence,
    \begin{align*}
    \|(\bAclkpipr - \bAclkpi) \xi \| \le\Lol\int_{0}^\step (\frac{5}{3}\|\Apipr(t_k+t)-\Api(t_k + t)\| + \|\Bpipr(t_k+t)-\Bpi(t_k + t)\|\Lpi)\rmd t
    \end{align*}
    Finally, by the smoothness on the dynamics \Cref{asm:max_dyn_final} and invoking \Cref{eq:dyn_pert_feas} and feasibility of $\pi$ to ensure all relevant $(x,u)$ pairs are feasible, we have 
    \begin{align*}
    \|\Apipr(t_k+t)-\Api(t_k + t)\| &= \|\partx \fdyn(\bx^{\pi'}(t),\bu^{\pi'}(t)) - \partx \fdyn(\bx^{\pi}(t),\bu^{\pi}(t))\|\\
    &\le \MF \left(\|\bx^{\pi'}(t) - \bx^\pi(t)\| + \|\bu^{\pi'}(t) - \bu^{\pi}(t))\|\right)\\
    &\le \MF \left(\|\bx^{\pi'}(t) - \bx^\pi(t)\| + \|\deluk\|\right).
    \end{align*}
    Applying a similar bound to the term $\|\Bpipr(t_k+t)-\Bpi(t_k + t)\|$, we conclude 
    \begin{align*}
    \|\bAclkpipr - \bAclkpi \| &\le \sup_{\xi:\|\xi\|} = \|(\bAclkpipr - \bAclkpi) \xi \| \\
    &\le \Lol\MF(\frac{5}{3}+\Lpi)\int_{0}^\step \left(\|\bx^{\pi'}(t) - \bx^\pi(t)\| + \|\deluk\|\right)\rmd t\\
    &\le \step\Lol\MF(\frac{5}{3}+\Lpi) \left((1+\step\Lol \LF)\|\deluk\| +  \Lol(1+\step \Lpi \LF)\|\bx^{\pi'}(t_k) - \bx^\pi(t_k)\right) \tag{\Cref{lem:first_taylor_pert}}\\
    &\le \step\Lol\MF(\frac{5}{3}+\Lpi) \left(\frac{5\Lol}{4}\|\deluk\| +  \frac{5}{4}\Lol\|\bx^{\pi'}(t_k) - \bx^\pi(t_k)\right) \tag{$\Lol \ge 1$, $\step \le 1/4\LF\Lpi \le 1/4\LF$}\\
    &= \step\frac{5\Lol^2}{4}\MF(\frac{5}{3}+\Lpi) \left(\|\deluk\| + \|\bx^{\pi'}(t_k) - \bx^\pi(t_k)\|\right)\\
    &= \step\MF \cdot \frac{5e^{1/2}}{4}\cdot\frac{8}{3}\cdot\Lpi \left(\|\deluk\| + \|\bx^{\pi'}(t_k) - \bx^\pi(t_k)\|\right) \tag{$\Lpi \ge 1, \step\LF \le 1/4$}\\
    &\le 6\step\MF\Lpi \left(\|\deluk\| + \|\bx^{\pi'}(t_k) - \bx^\pi(t_k)\|\right)\\
    &\le 6\step\MF\Lpi \left(\|\deluk\| + 2\LF\Kpione B_\infty\right).\tag{\Cref{prop:taylor_exp_dyn}}
    \end{align*}
    Summing the bound, and using $K\step = T$, we have
    \begin{align*}
    \sum_{k=1}^K\|\bAclkpipr - \bAclkpi \| &\le 6\MF\Lpi \left(\step \sum_{k=1}^K\|\deluk\| + 2T\LF\Kpione B_{\infty}\right)\\
    &\le 6\MF\Lpi \left(K\step B_{\infty} + 2T\LF\Kpione B_{\infty}\right)\\
    &\le 12T\MF\Lpi  (1+\LF\Kpione) B_{\infty}.
    \end{align*}
    \end{proof}

\subsection{Proof of \Cref{lem:grad_bound}}\label{sec:lem:grad_bound}
\begin{proof} Using \Cref{asm:cost_norm} and and $1 \vee \|\bKk\| \le \Lpi$, 
\begin{align*}
\|(\nabla \Jdisc(\pi))_k\| &\le \step \| Q_u( \bxkpi, \bukpi, t_k)^2\|^2 + \|\Psicldisc{K+1,k})^\top V_x(\bxkpi[K+1]\| \\
&\qquad+ \left\|\step\sum_{j=k+1}^K (\Psicldisc{j,k})^\top(Q_x( \bxkpi[j], \bukpi[j], t_j) + (\bKkpi[j])^\top Q_u( \bxkpi[j], \bukpi[j], t_j))\right\|\\
&\le \step \LQ  + \step \|\Psicldisc{K+1,k}\| \LQ^2 + 2\Lpi^2 \LQ^2 \left(\step\sum_{j=k+1}^{K} \|\Psicldisc{j,k}\|\right)^2.
\end{align*}
Using $\step \le 1/4\LF$ and \Cref{lem:bound_on_bBkpi}, we can bound $\|\Psicldisc{j,k}\| = \|\Phicldisc{j,k+1}\bBkpi\| \le \step\LF\exp(\step \LF)\|\Phicldisc{j,k+1}\| \le \step\LF\exp(1/4)\|\Phicldisc{j,k+1}\|$. As $\exp(1/2) \le 3/4$, we conclude that 
\begin{align*}
\|(\nabla \Jdisc(\pi))_k\| 
&\le \step \LQ  + \frac{3}{2}\step^2 \LF \LQ + 3\step\Lpi \LQ \left(\step\sum_{j=k+1}^{K} \|\Phicldisc{j,k}\|\right)^2. \\
&\le \step \LQ  + \frac{3}{2}\step \LF \LQ \|\Phicldisc{K+1,k+1}\| + 3\Lpi \LQ  \Kpione.
\end{align*}
Using $\|\Phicldisc{K+1,k+1}\| \le \Kpiinf$gives the boudn $\|(\nabla \Jdisc(\pi))_k\|  \le \step \LQ(1 + \frac{3\LF}{2}\Kpiinf + 3\Lpi \Kpione) =: \Lnabpiinf$.
\end{proof}
\newcommand{\btilAk}[1][k]{\tilde{\discfont{A}}_{#1}}
\newcommand{\Lcl}{L_{\mathrm{cl}}}

\section{Estimation Proofs}\label{app:estimation}

\subsection{Estimation of Markov Parameters: Proof of \Cref{prop:markov_est}}\label{sec:lem:prop:markov_est}
We begin with two standard concentration inequalities. 
\begin{lemma}\label{lem:Gau_conc} Let $(y_i,x_i,w_i)_{i=1}^n$ be an independent sequence of triples of random vectors in with $y_i,x_i \in \R^{d}$, $w \in \R^{d'}$ and suppose that $y_i \mid x_i,w_i \sim \cN(x_i,\sigma^2 \eye_d)$ and $\max_{i} \|w_i\| \le R$. Then, 
\begin{align*}
\Pr\left[ \left\|\frac{1}{N}\sum_{i=1}^N (y_i - x_i) w_i\right\| \le R\sigma\sqrt{2\cdot\frac{d\log 5 + \log((d'+1)/\delta)}{N}}\right] \ge 1-\delta.
\end{align*}
\begin{proof}[Proof of \Cref{lem:Gau_conc}] By a standard covering argument (see, e.g. \citet[Chapter 4]{vershynin2018high}), there exists a finite covering $\cT$ of unit vectors $z \in \R^d$ such that (a) $\log |\cT| \le d \log 5$ and (b), for all vectors $v \in \R^d$,
\begin{align*}
\|v\| \le 2 \sup_{z \in \cT}\langle v, z \rangle. 
\end{align*} 
Hence, 
\begin{align}
\left\|\frac{1}{N}\sum_{i=1}^N (y_i - x_i)w_i\right\|_{\op} \le 2\sup_{z \in \cT}\left\|\frac{1}{N}\sum_{i=1}^N \langle z, y_i - x_i \rangle \cdot w_i\right\| = 2\sigma\sup_{z \in \cT}\left\|\frac{1}{N}\sum_{i=1}^N \xi_{i}(z) \cdot w_i\right\| \label{eq:Gauss_conc},
\end{align}
where above we define $\xi_i(z) := \sigma^{-1}\langle z, y_i - x_i \rangle$. Notice that $\xi_i(z) \mid w_i$ are standard Normal random variables. Thus, by standard Gaussian concentration (e.g. \citet[Chapter 2]{boucheron2013concentration}),
\begin{align*}
\Pr\left[ \left\|\frac{1}{N}\sum_{i=1}^N \xi_i(z) w_i\right\| \le R\sqrt{2\frac{\log((d'+1)/\delta)}{N}}\right] \ge 1-\delta.
\end{align*}
Hence, union bounding over $z \in \cT$, bounding $|\cT| \le d\log 5$,   \Cref{eq:Gauss_conc} implies the desired bound.
\begin{align*}
\Pr\left[ \left\|\frac{1}{N}\sum_{i=1}^N (y_i - x_i) w_i\right\| \le R\sigma\sqrt{2\frac{d\log 5 + \log((d'+1)/\delta)}{N}}\right] \ge 1-\delta.
\end{align*}
\end{proof}
\end{lemma}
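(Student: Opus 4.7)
The plan is to reduce the operator norm of the random matrix $M := \frac{1}{N}\sum_i (y_i - x_i) w_i^\top$ to a supremum of scalar concentration bounds via a covering argument on the unit sphere in $\R^d$. First, I would invoke the standard $\frac{1}{2}$-net construction: there is a finite set $\cT$ of unit vectors in $\R^d$ with $|\cT| \le 5^d$ such that $\|M\|_{\op} \le 2 \sup_{z \in \cT} \|M^\top z\|$. This reduces the problem to controlling, for each $z$ separately, the vector $M^\top z = \frac{1}{N}\sum_{i=1}^N \langle z, y_i - x_i\rangle\, w_i \in \R^{d'}$.

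Next, for each fixed $z$, I would exploit the Gaussian structure. Since $y_i \mid x_i, w_i \sim \cN(x_i, \sigma^2 \eye_d)$, the scalars $\xi_i(z) := \sigma^{-1}\langle z, y_i - x_i\rangle$ are, conditionally on $(x_i, w_i)$, i.i.d.\ standard normal. Then condition on the entire sequence $w_1,\dots,w_N$: the vector $\frac{1}{N}\sum_i \xi_i(z) w_i$ is a centered Gaussian in $\R^{d'}$ with covariance $\frac{1}{N^2}\sum_i w_i w_i^\top$, whose operator norm is at most $\frac{R^2}{N}$ by the assumption $\|w_i\| \le R$. Standard Gaussian concentration for norms of Gaussian vectors then gives the tail bound $\Pr[\|\tfrac{1}{N}\sum_i \xi_i(z) w_i\| > R\sqrt{2\log((d'+1)/\delta')/N}] \le \delta'$.

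Finally, I would union bound this event over $z \in \cT$ with $\delta' = \delta/|\cT|$, use $\log|\cT| \le d\log 5$, and multiply the resulting bound by the factor $2\sigma$ coming from the covering reduction and the rescaling $\langle z, y_i - x_i\rangle = \sigma \xi_i(z)$. This yields the claimed probability statement.

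I do not expect a main obstacle here: all three ingredients (covering nets, conditional Gaussianity, Gaussian tail bounds for norms) are standard. The only subtlety is being careful that we only need a covering on one side (the $\R^d$ side) because conditional on the $w_i$, the resulting $\R^{d'}$-valued sum is itself Gaussian and its norm concentrates without requiring a second net; using a double net would only inflate the bound by an unnecessary $d'\log 5$ factor. A minor bookkeeping point to watch is correctly identifying the effective variance $\sigma^2 R^2/N$ arising from the $w_i$ bound so that the final rate matches the advertised $R\sigma\sqrt{(d\log 5 + \log((d'+1)/\delta))/N}$.
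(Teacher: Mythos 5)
Your proposal is correct and follows essentially the same route as the paper: a $\tfrac12$-net of the unit sphere on the $\R^d$ side (with $\log|\cT|\le d\log 5$), reduction to the vectors $\tfrac{1}{N}\sum_i \sigma^{-1}\langle z, y_i-x_i\rangle w_i$, conditional Gaussianity given the $w_i$ with covariance controlled by $R^2/N$, Gaussian concentration of the norm, and a union bound over the net. Your explicit computation of the conditional covariance and your remark that no second net on the $\R^{d'}$ side is needed are both consistent with (and slightly more detailed than) what the paper does.
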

\begin{lemma}[Assymetric Matrix Hoeffding]\label{lem:asym_mat_hoeff} Let $X_1,\dots,X_n$ be an independent sequence of matrices in $\R^{d_1 \times d_2}$ with $\|X_i\| \le R$. Then, 
\begin{align*}
 \Pr\left[\frac{1}{N}\left\|\sum_{i} X_i - \Exp[X_i]\right\| \le  4R\sqrt{\frac{\log(\frac{d_1+d_2}{\delta})}{N}}\right] \ge 1- \delta.
\end{align*}
\end{lemma}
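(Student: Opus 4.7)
The plan is to reduce the asymmetric case to the symmetric one via the Hermitian dilation and then invoke Tropp's matrix Hoeffding inequality (\cite{tropp2012user}, Theorem 1.3). First I would center: set $Y_i := X_i - \E[X_i]$, so that $\E[Y_i] = 0$ and, by the triangle inequality, $\|Y_i\|_{\op} \le 2R$ almost surely. The conclusion is a deviation bound for $\|\sum_i Y_i\|_{\op}/N$.

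Next I would introduce the Hermitian dilation $\mathcal{H}:\R^{d_1\times d_2}\to \R^{(d_1+d_2)\times (d_1+d_2)}$ defined by
\begin{align*}
\mathcal{H}(Y) := \begin{pmatrix} 0 & Y \\ Y^\top & 0 \end{pmatrix}.
\end{align*}
Two standard facts do all the work: $\mathcal{H}$ is linear, so $\sum_i \mathcal{H}(Y_i) = \mathcal{H}(\sum_i Y_i)$; and $\mathcal{H}(Y)$ is self-adjoint with spectrum symmetric about zero, so $\|Y\|_{\op} = \lambda_{\max}(\mathcal{H}(Y))$ and $\mathcal{H}(Y)^2 = \mathrm{diag}(YY^\top, Y^\top Y) \preceq \|Y\|_{\op}^2 \, \eye_{d_1+d_2}$. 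In particular $\mathcal{H}(Y_i)^2 \preceq 4R^2 \,\eye_{d_1+d_2}$.

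Then I would apply the matrix Hoeffding inequality to the sequence $\{\mathcal{H}(Y_i)\}_{i=1}^N$: these are independent, mean-zero, self-adjoint matrices of dimension $d_1+d_2$, with variance proxy $\sigma^2 := \|\sum_i 4R^2\,\eye\|_{\op} = 4NR^2$. This gives
\begin{align*}
\Pr\!\left[\Big\|\sum_i Y_i\Big\|_{\op} \ge t\right] = \Pr\!\left[\lambda_{\max}\Big(\sum_i \mathcal{H}(Y_i)\Big) \ge t\right] \le (d_1+d_2)\,\exp\!\left(-\frac{t^2}{32 N R^2}\right).
\end{align*}
Setting the right-hand side equal to $\delta$ and dividing by $N$ produces a deviation bound of the advertised form $CR\sqrt{\log((d_1+d_2)/\delta)/N}$.

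The only potential obstacle is arithmetic: matching the explicit constant $4$ in the lemma statement against what the Hoeffding tail literally delivers (a $\sqrt{2}$ worse constant with the usual factor-of-$8$ in the exponent). This is a cosmetic issue—one can either absorb the extra $\sqrt{2}$ by tightening the variance estimate using $\E[Y_i Y_i^\top]$ and $\E[Y_i^\top Y_i]$ instead of the worst-case bound $4R^2\,\eye$, or simply observe that the bound as stated in the lemma is used only up to constants elsewhere in the paper. Either way, the structure of the proof is unchanged.
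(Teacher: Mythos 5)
Your proposal is correct and follows essentially the same route as the paper: center the matrices, pass to the Hermitian dilation, bound $\mathcal{H}(Y_i)^2 \preceq 4R^2\,\eye_{d_1+d_2}$, and invoke Tropp's matrix Hoeffding inequality over dimension $d_1+d_2$. The $\sqrt{2}$ constant discrepancy you flag is real and the paper's own resolution of it (via the claim that the dilation scales the operator norm by $\sqrt{2}$, which holds for the Frobenius norm but not the operator norm) is no more rigorous than your suggestion to absorb it.
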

\begin{proof}[Proof of \Cref{lem:asym_mat_hoeff}]
By recentering $X_i \gets X_i - \Exp[X_i]$, we may assume $\Exp[X_i] = 0$ and $\|X_i\| \le 2R$.  Define the Hermitian dilation
\begin{align*}
    Y_i = \begin{bmatrix}
        0 & X_i \\
        X_i^\top & 0
    \end{bmatrix}.
\end{align*}
Then
\begin{align*}
    Y_i^2 = \begin{bmatrix}
        X_iX_i^\top & 0 \\
        0 & X_i^\top X_i
    \end{bmatrix} \preceq \|X_i\|^2\eye_{d_1 + d_i} \le 4R^2\eye_{d_1 + d_2}
\end{align*}
Applying standard Matrix Hoeffding \citet[Theorem 1.4]{tropp2012user} for Hermitian matrices to the $Y_i$'s yields
\begin{align*}
    \mathbb{P} \left[\left\|\sum_{i} Y_i\right\| \geq t\right] \leq (d_1 + d_2) e^{-\frac{t^2}{32NR^2}}.
\end{align*}
Hence, by rearranging,
\begin{align*}
    \Pr\left[\left\|\sum_{i} Y_i\right\| \le  4R\sqrt{2N\log(\frac{d_1+d_2}{\delta})}\right] \ge 1- \delta
\end{align*}
As $\left\|\sum_{i} Y_i\right\| = \sqrt{2}\left\|\sum_{i} X_i\right\|$, we conclude 
\begin{align*}
    \Pr\left[\frac{1}{N}\left\|\sum_{i} X_i\right\| \le  4R\sqrt{\frac{\log(\frac{d_1+d_2}{\delta})}{N}}\right] \ge 1- \delta.
\end{align*}
\end{proof}
We now turn to concluding the proof of \Cref{prop:markov_est}. We begin with a claim which bounds $\max_{k} \| \bhatxk - \bxkpi\|$. 
Throughout, $\dst := \max\{\dimx,\dimu\}$. 
\begin{claim}\label{claim:xhat_err} With probability at least $1 - \delta/3$, the following bound holds
\begin{align*}
\max_{k} \| \bhatxk - \bxkpi\| \le \sigorac\sqrt{2\frac{\dst \log 5 + \log(6(K+1)/\delta)}{N}} \le \errx(\delta). 
\end{align*}
\end{claim}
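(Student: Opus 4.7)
The plan is to prove this as a standard Gaussian concentration + union bound argument. By \Cref{orac:our_orac} and the construction in \Cref{alg:est_markov} (lines 1--4), each sampled trajectory satisfies $\bxk[1:K+1]^{(i)} \sim \cN(\bxkpi[1:K+1], \sigorac^2 \eye_{(K+1)\dimx})$ independently across $i \in [N]$, so that for each fixed $k$, $\bhatxk - \bxkpi = \frac{1}{N}\sum_{i=1}^N (\bxk^{(i)} - \bxkpi)$ is a Gaussian vector in $\R^{\dimx}$ with covariance $(\sigorac^2/N)\eye_{\dimx}$.

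Next, I would invoke \Cref{lem:Gau_conc} in the simplest possible regime, namely with $w_i \equiv 1 \in \R$ (so $d'=1$ and $R=1$), which is really just the standard $\epsilon$-net / covering bound for the Euclidean norm of a Gaussian random vector. This yields, for each fixed $k$ and any $\delta' \in (0,1)$, the per-index tail bound
\begin{align*}
\Pr\!\left[\,\|\bhatxk - \bxkpi\| \le \sigorac\sqrt{2\,\frac{\dimx \log 5 + \log(2/\delta')}{N}}\,\right] \ge 1 - \delta'.
\end{align*}
Setting $\delta' = \delta/(3(K+1))$ and applying a union bound across $k \in [K+1]$ produces, with total probability $\ge 1-\delta/3$, the uniform bound $\max_k \|\bhatxk - \bxkpi\| \le \sigorac\sqrt{2(\dimx \log 5 + \log(6(K+1)/\delta))/N}$, which is the first claimed inequality after upper bounding $\dimx \le \dst$.

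Finally, I would verify the inequality $\sigorac\sqrt{2(\dst \log 5 + \log(6(K+1)/\delta))/N} \le \errx(\delta)$ by comparing arguments of the square roots with the definition $\errx(\delta) = \sigorac\sqrt{2\dst\,\iota(\delta)/N}$, where $\iota(\delta) = \log(24T^2\nfin \dst/(\step^2\delta))$. Since $K+1 \le 2T/\step$ (using $T \ge \step$), we have $\log(6(K+1)/\delta) \le \log(12T/(\step\delta)) \le \iota(\delta)$; combined with $\log 5 \le \iota(\delta)$ and $\dst \ge 1$, we obtain $\dst \log 5 + \log(6(K+1)/\delta) \le \dst \iota(\delta)$, as desired.

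There is no substantial obstacle here --- the argument is essentially a direct application of Gaussian concentration, and the only care required is bookkeeping in the union bound and in verifying the numerical comparison with $\errx(\delta)$. The nontrivial estimation work is reserved for the companion bound on $\|\Psicldisc{k,j} - \bhatPsi{k}{j}\|$, which must exploit \Cref{prop:taylor_exp_dyn} to control the bias introduced by the nonlinearity of $\fdyn$ against the $O(\sigw)$ scale of the perturbations in \Cref{alg:est_markov}.
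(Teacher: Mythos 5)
Your proposal is correct and matches the paper's proof, which likewise reduces the claim to \Cref{lem:Gau_conc} applied with $w_i = 1 \in \R$ (so $R=1$, $d'=1$) and leaves the union bound over $k \in [K+1]$ implicit in the $\log(6(K+1)/\delta)$ factor. Your explicit bookkeeping of the union bound and the final comparison to $\errx(\delta)$ only spells out what the paper's one-line proof omits.
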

\begin{proof} The result follows directly from \Cref{lem:Gau_conc}, with $w_i = 1 \in \R$ for each $i$. 
\end{proof}

\begin{proof}[Proof of \Cref{prop:markov_est}] 
Throughout, suppose the event of \Cref{claim:xhat_err} holds. We also note that
\begin{align}
\|\bwk[j]^{(i)}\| \le \sigw\sqrt{\dimu } \le \sigw\sqrt{\dst} \text{ a.s.} . \label{eq:bwkj} 
\end{align}

This covers the first inequality of the proposition. To bound the error on the transition operator, tet us fix indices $j,k$; we perform a union bound at the end of the proof.
For each perturbation sampled perturbation  $\bwk[1:K]^{(i)}$, define a perturbed control input 
\begin{align*}
\bchuk^{(i)} = \bukpi + \bwk^{(i)} + \bKk^{\pi} (\bxk^\pi - \bhatxk), \quad \btiluk^{i} = \bukpi + \bwk^{(i)} - \bKk^{\pi} \bhatxk.
\end{align*}
and observe that
\begin{align}\label{eq:oracle_id}
\xoffpik\,(\btiluk[1:K]^{(i)}) = \btilxk\,(\bchuk[1:K]^{(i)}), \quad \forall k \in [K].
\end{align}
Hence, we have that $\byk$ defined in \Cref{line:byk} satisfies
\begin{align*}
\byk^{(i)} \sim \cN( \btilxk\,(\bchuk[1:K]^{(i)}),\sigorac^2 \eye_{\dimx}).
\end{align*}
\newcommand{\Eorac}{\Exp_{\mathrm{orac}}}
Now, define the terms
\begin{align*}
\bzk^{(i)} := \byk^{(i)} - \bxkpi
\end{align*}
Lastly, let $\Eorac[\cdot]$ denote expectations with respect to the Gaussian noise of the oracle, (conditioning on $\bwk[1:K]^{(i)}$) while $\Exp[\cdot]$ denotes total expectation.
 We  argue an error bound on 
\begin{align*}
\left\| \frac{\sigw^{-2}}{N}\sum_{i=1}^N \bzk[k]^{(i)}\bwk[j]^{(i)\top} - \Psicldisc{k,j} \right\|_{\op} &\le   
\underbrace{\frac{\sigw^{-2}}{N}\left\| \sum_{i=1}^N \Eorac[\bzk[k]^{(i)}\bwk[j]^{(i)\top}] - \bzk[k]^{(i)}\bwk[j]^{(i)\top} \right\|_{\op}}_{=\Term_{1}} \\
&\qquad+ \underbrace{\frac{\sigw^{-2}}{N}\left\| \sum_{i=1}^N \Exp[\bzk[k]^{(i)}\bwk[j]^{(i)\top}] - \Eorac[\bzk[k]^{(i)}\bwk[j]^{(i)\top}] \right\|_{\op}}_{=\Term_{2}}\\
&\qquad + \underbrace{\left\| \frac{\sigw^{-2}}{N}\sum_{i=1}^N \Exp[\bzk[k]^{(i)}\bwk[j]^{(i)\top}] - \Psicldisc{k,j} \right\|_{\op}}_{=\Term_{3}},
\end{align*}
which essentially bounds the estiation error of $\Psicldisc{k,j}$ in the absence of observation noise. 

\paragraph{Bounding $\Term_1$. } Applying \Cref{lem:Gau_conc} with $\|\bwk[j]^{(i)}\| \le \sqrt{\dst}\sigw$, we have that with probability $1 - \delta/3$,
\begin{align*}
\Term_1 &\le \frac{\sigorac}{\sigw}\sqrt{2\dst\cdot\frac{\dst\log 5 + \log(6(\dst+1)/\delta)}{N}}\\
&\le \frac{\sigorac}{\sigw}\sqrt{2\dst\cdot\frac{\dst\log 5 + \log(12\dst/\delta)}{N}}.
\end{align*}

\paragraph{Bounding $\Term_2$.} On the event of \Cref{claim:xhat_err}, then as long as $\errx(\delta) \le \sigw\sqrt{\dst/\Lpi}$
\begin{align*}
\|\bchuk^{(i)} - \bukpi\| = \|\bwk^{(i)} + \bKk^{\pi} (\bxk^\pi - \bhatxk)\| \le \|\bwk^{(i)}\| + \Lpi \errx(\delta) \le \sigw\sqrt{\dst} + \Lpi \errx(\delta) \le 2\sigw\sqrt{\dst}.
\end{align*}
Notice that as $\errx(\delta) = \sigorac\sqrt{2\dst\iota(\delta)/N}$, $\errx(\delta) \le \sigw\sqrt{\dst/\Lpi}$ holds for $N \ge (\sigorac/\sigw)^2 2\Lpi\iota(\delta)$, i.e. which holds for when $\pi$ is estimation-friendly.
 
Moroever, when $\pi$ is estimation-friendly, $\Btaypiinf \ge 2\sigw\sqrt{\dst }$, so  the conditions of \Cref{prop:taylor_exp_dyn} hold.  Therefore,
\begin{align*}
\|\bzk^{(i)}\| \le 2\sigw\Ltaypiinf \sqrt{\dst}.
\end{align*}
and thus
\begin{align*}
\|\bwk[j]^{(i)}\bzk^{(i)}\| \le 2\dst\sigw^2\Ltaypiinf.
\end{align*}
Applying \Cref{lem:asym_mat_hoeff} with $X_i \gets \bzk[k]^{(i)}\bwk[j]^{(i)\top}$ with $R \gets 2\dst\Ltaypiinf\sigw^2$, it holds that with probability $1 - \delta/3$ that
\begin{align*}
\Term_{2} \le  \sigw^{-2}\cdot 8\Ltaypiinf\sigw^2\dimu\sqrt{\frac{\log(\frac{3(\dimu + \dimx)}{\delta})}{N}} \le  8\Ltaypiinf\dst\sqrt{\frac{\log(6\dst/\delta)}{N}}.
\end{align*}

\paragraph{Bounding $\Term_3$}. As establish in the bound on $\Term_2$, the conditions of \Cref{prop:taylor_exp_dyn} hold, and $\|\bchuk^{(i)} - \bukpi\|\le 2\sigw\sqrt{\dst}.$. Therefore, 
\begin{align*}
\|\bzk^{(i)} - \sum_{\ell=1}^k \Psicldisc{k,\ell}(\bchuk[\ell]^{(i)} - \bukpi[\ell]) \| \le 4\sigw^2\Mtaypitwo \dst.
\end{align*}
Consequently, bounding $\|\bwk[j]^{(i)}\| \le \sigw \dst$,
\begin{align*}
\frac{1}{\sigw^{2}}\|\bzk^{(i)}\bwk[j]^{(i)} - \sum_{\ell=1}^k \Psicldisc{k,\ell}(\bchuk[\ell]^{(i)} - \bukpi[\ell])\bwk[j]^{(i)} \| \le 4\sigw\Mtaypitwo \dst^{3/2}
\end{align*}
and thus, by Jensen's inequality,
\begin{align*}
4\sigw\Mtaypitwo \dst^{3/2} &\ge \frac{1}{N\sigw^{2}}\|\sum_{i=1}\Exp[\bzk^{(i)}\bwk[j]^{(i)}] - \sum_{\ell=1}^k \Exp[\Psicldisc{k,\ell}(\bchuk[\ell]^{(i)} - \bukpi[\ell])\bwk[j]^{(i)}] \| \\
&= \frac{1}{N\sigw^{2}}\|\sum_{i=1}\Exp[\bzk^{(i)}\bwk[j]^{(i)}] - \sum_{\ell=1}^k \Psicldisc{k,\ell}\Exp[( \bwk^{(i)}[\ell] + \bKk^{\pi} (\bxk[\ell]^\pi - \bhatxk[\ell]))\bwk[j]^{(i)}] \|\\
&= \frac{1}{N\sigw^{2}}\|\sum_{i=1}\Exp[\bzk^{(i)}\bwk[j]^{(i)}] - \sigw^{2}\sum_{\ell=1}^k \Psicldisc{k,\ell} \I_{\ell = j}] \|\\
&= \frac{1}{N\sigw^{2}}\|\sum_{i=1}\Exp[\bzk^{(i)}\bwk[j]^{(i)}] - \sigw^2 \Psicldisc{k,j}]  = \Term_3.
\end{align*}
In sum, with probability at least $1 - 3\delta/4$, the following bound holds
\begin{align*}
&\left\| \frac{\sigw^{-2}}{N}\sum_{i=1}^N \bzk[k]^{(i)}\bwk[j]^{(i)\top} - \Psicldisc{k,j} \right\|_{\op} \\
&\le \Term_1 + \Term_2 + \Term_3\\
&\le \frac{\sigorac}{\sigw}\sqrt{2\dst\cdot\frac{\dst\log 5 + \log(\frac{12\dst}{\delta})}{N}} + 8\Ltaypiinf\dst\sqrt{\frac{\log(6\dst/\delta)}{N}} + 4\sigw\Mtaypitwo \dst^{3/2}\\
&\le \sqrt{\frac{\log \frac{12 \dst}{\delta}}{N}}\left(\frac{2\sigorac}{\sigw}\dst^{3/2} + 8\Ltaypiinf \dst\right)+ 4\sigw\Mtaypitwo \dst^{3/2}.
\end{align*}
The final bound follows from a union bound over all $\binom{K}{2} \le K^2$ pairs, and replacing $\delta$ with $\delta/2\nfin$.

\end{proof}

\subsection{Error in the Gradient (Proof of \Cref{lem:grad_err})}\label{sec:lem:grad_err}
Recall the definitions
\begin{align*}
(\nabla \Jdisc(\pi))_k &=  \step Q_u( \bxkpi, \bukpi, t_k)  + (\Psicldisc{K+1,k})^\top V_x(\bxkpi[K+1]) + \\
&+ \step\sum_{j=k+1}^K (\Psicldisc{j,k})^\top(Q_x( \bxkpi[j], \bukpi[j], t_j) + (\bKkpi[j])^\top Q_u( \bxkpi[j], \bukpi[j], t_j))
\end{align*}
and 
\begin{align*}
 \bnabhatk &=  Q_u(\bbarxk^\pi,\buk^\pi,t_j) +
    \bhatPsi{K+1}{k}^\top V_x(\bbarxk[K+1]) \\
    &+ \step \sum_{j=k+1}^K \bhatPsi{j}{k}^\top \left( Q_x(\bhatxk[j],\buk[j]^\pi,t_j) +  (\bKkpi[j])^\top   Q_u(\bhatxk[j],\buk[j]^\pi,t_j)\right)
    \end{align*}
    Using $V_x(\cdot)$ and $Q_x(\cdot),Q_u(\cdot)$ are all at most $\LQ$ in magnitude, that the gradients of the cost are $\MQ$-Lipschitz, and $1 \vee \|\bKkpi[j]\| \le \Lpi$ we can bound 
    \begin{align*}
    &\|(\nabla \Jdisc(\pi))_k - \bnabhatk\|\\
     &\le \LQ \|
    \bhatPsi{K+1}{k} -\Psicldisc{K+1,k}\| + 2\Lpi\LQ \step \sum_{j=k+1}^K  \|\bhatPsi{j}{k} - \Psicldisc{j,k}\|\\
    &\quad+\MQ  \left(\|\bxkpi - \bbarxk^\pi \|  + \|\Psicldisc{K+1,k}\| \cdot\|\bxkpi[K+1] - \bhatxk[K+1] \| + 2\step\Lpi\sum_{j=k+1}^K \|\Psicldisc{j,k}\|\|\bxkpi[j] - \bhatxk[j] \|\right)\\
    &\le \LQ \errpsipi(\delta) (1+2\Lpi \step K) +\MQ \errx(\delta) \left(1  + \|\Psicldisc{K+1,k}\|  + 2\step\Lpi\sum_{j=k+1}^K \|\Psicldisc{j,k}\|\right)\\
    &\le \LQ \errpsipi(\delta) (1+2\Lpi \step K) +\MQ \errx(\delta) \left(1  + \Kpiinf  + 2K\step\Lpi\Kpiinf\right)\\
    &= \LQ \errpsipi(\delta) (1+2T\Lpi) +\MQ \errx(\delta) \left(1  + (1+ 2T\Lpi)\Kpiinf\right)\\
    &\le \underbrace{(\LQ \errpsipi(\delta) + (1+\Kpiinf)\MQ \errx(\delta))(1+2T\Lpi)}_{:= \errnabpi(\delta)}
    \end{align*}
\qed

\subsection{Discrete-Time Closed-Loop Controllability (\Cref{prop:control_disc})}\label{sec:prop:ctr_disc}

We begin by lower bounding the continuous-time controllability Grammian under a policy $\pi$, and then turn to lower bounding its discretization. At the end of the proof, we remark upon how the bound can be refined.  The first part of the argument follows \cite{chen2021black}. 

\paragraph{Equivalent characterization of controllability Gramian smallest singular value.} The following is a continuous-time analogue of \citet[Lemma 15]{chen2021black}. 
\begin{lemma}[Characterization of Controllability Gramian smallest singular value]\label{lem:ctr_char} Let $\bPsi(t,s) \in \R^{\dimx \times \dimu}$ be an arbitrary ( locally square integrable), and set
\begin{align*}
\Lambda := \int_{s=t-\tcont}^{t}\bPsi(t,s)\bPsi(t,s)^\top \rmd s.
\end{align*}
Then, $\lambda_{\min}(\Lambda) \ge \nu$ if and only if for all unit vectors $\xi$, there exists some $\bu_{\xi}(s)$ such that $\xi = \int_{s=t-\tcont}^{t}\bPsi(t,s)\bu_{\xi}(s)$ and $\int_{s=t-\tcont}^{t}\|\bu_{\xi}(s)\|^2 \le \nu^{-1}$.
\end{lemma}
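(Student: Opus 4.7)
The statement is the standard duality between the minimum singular value of a Gramian and the minimum-energy reconstruction problem. My plan is to prove the two directions separately, using the Moore--Penrose pseudoinverse $\Lambda^{\dagger}$ to handle the possibility that $\Lambda$ is rank-deficient (the nontrivial content only arises when $\nu > 0$, in which case $\Lambda$ will turn out to be positive definite). The key identity throughout is $\int_{t-\tcont}^{t} \bPsi(t,s)\bPsi(t,s)^\top \rmd s = \Lambda$, which says that the ``controllability operator'' $\bu \mapsto \int_{t-\tcont}^{t}\bPsi(t,s)\bu(s)\,\rmd s$, viewed as a bounded linear map from $L^2([t-\tcont,t];\R^{\dimu})$ into $\R^{\dimx}$, has Gramian $\Lambda$.

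For the forward direction, I would assume $\lambda_{\min}(\Lambda) \ge \nu > 0$ (the case $\nu = 0$ being vacuous), so that $\Lambda$ is invertible, and define the explicit candidate
\begin{align*}
\bu_\xi(s) := \bPsi(t,s)^\top \Lambda^{-1} \xi, \qquad s \in [t-\tcont,t].
\end{align*}
A direct computation using the definition of $\Lambda$ gives $\int_{t-\tcont}^t \bPsi(t,s)\bu_\xi(s)\,\rmd s = \Lambda \Lambda^{-1}\xi = \xi$, and
\begin{align*}
\int_{t-\tcont}^t \|\bu_\xi(s)\|^2\,\rmd s \;=\; \xi^\top \Lambda^{-1} \Lambda \Lambda^{-1} \xi \;=\; \xi^\top \Lambda^{-1} \xi \;\le\; \frac{1}{\lambda_{\min}(\Lambda)} \;\le\; \frac{1}{\nu},
\end{align*}
which is exactly the required bound.

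For the converse, I would fix $\xi$ to be a unit eigenvector of $\Lambda$ corresponding to $\lambda_{\min}(\Lambda)$, and take the hypothesized $\bu_\xi(\cdot)$ with $\int \|\bu_\xi\|^2 \le \nu^{-1}$. Applying Cauchy--Schwarz in $L^2([t-\tcont,t];\R^\dimu)$ to the identity $1 = \|\xi\|^2 = \langle \xi, \int \bPsi(t,s)\bu_\xi(s)\,\rmd s\rangle = \int \langle \bPsi(t,s)^\top \xi, \bu_\xi(s)\rangle\,\rmd s$ yields
\begin{align*}
1 \;\le\; \Big(\!\int \|\bPsi(t,s)^\top \xi\|^2\,\rmd s\Big)^{1/2}\Big(\!\int \|\bu_\xi(s)\|^2\,\rmd s\Big)^{1/2} \;=\; \sqrt{\xi^\top \Lambda \xi}\cdot\sqrt{\textstyle\int\|\bu_\xi\|^2} \;\le\; \sqrt{\lambda_{\min}(\Lambda)/\nu},
\end{align*}
so $\lambda_{\min}(\Lambda) \ge \nu$. (If $\Lambda$ were rank-deficient, the hypothesis would fail for any $\xi$ in $\ker \Lambda$, since the range of the operator $\bu \mapsto \int \bPsi(t,s)\bu(s)\,\rmd s$ equals $\mathrm{range}(\Lambda)$, so this degenerate case can be dispatched separately or simply absorbed into the same Cauchy--Schwarz argument with $\Lambda^\dagger$.)

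I don't foresee a genuine obstacle here: this is essentially a finite-dimensional linear-algebraic fact dressed up in integrals, and the only subtlety is being careful about invertibility of $\Lambda$. The lemma is a tool, and the real work is in its later application (via \Cref{prop:control_disc}) to translate Assumption~\ref{asm:ctr} on the continuous-time open-loop Gramian into a lower bound on the discrete-time closed-loop Gramian $\sum_{j}\Psicldisc{k,j}(\Psicldisc{k,j})^\top$, where the proof will need to exhibit an appropriate $\bu_\xi$ built from the continuous-time controllability certificate and then discretize it while controlling the perturbation due to feedback gains and discretization error.
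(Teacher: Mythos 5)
Your proof is correct and follows essentially the same route as the paper's: the forward direction uses the explicit minimum-energy control $\bu_\xi(s) = \bPsi(t,s)^\top\Lambda^{-1}\xi$ and the direct computation $\int\|\bu_\xi\|^2 = \xi^\top\Lambda^{-1}\xi$, and the converse is the same Cauchy--Schwarz argument applied to $1 = \xi^\top\int\bPsi\bu_\xi$. Your explicit handling of the rank-deficient case and the choice of $\xi$ as a minimal eigenvector in the converse are minor refinements of what the paper does implicitly.
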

\begin{proof}[Proof of \Cref{lem:ctr_char}] Fix any unit vector $\xi \in \R^n$, define.  First, suppose $\int_{s=t-\tcont}^{t}\bPsi(t,s)\bPsi(t,s)^\top \rmd s \ge \nu$.  
\begin{align*}
\bu_{\xi}(s) := \bPsi(t,s)^\top\Lambda^{-1} \xi. 
\end{align*}
One can verify then that
\begin{align*}
&\int_{s=t-\tcont}^{t}\bPsi(t,s)\bu_{\xi}(s)\rmd s = \Lambda \Lambda^{-1}\xi = \xi
&\int_{s=t-\tcont}^t\|\bu_{\xi}(s) \|^2 \rmd s = \xi \Lambda^{-1}\cdot \Lambda  \cdot \Lambda^{-1} \xi = \xi \Lambda^{-1} \xi \le \lambda_{\min}(\Lambda)^{-1}.
\end{align*}
On the other hand, suppose that there exists a control $\bu_{\xi}(s)$ with $\int_{s=t-\tcont}^t\|\bu_{\xi}(s) \|^2 \le \lambda_{\min}(\Lambda)^{-1}$ such that $\int_{s=t-\tcont}^{t}\bPsi(t,s)\bu_{\xi}(s)\rmd s = \xi$. As $\xi$ is a unit vector, i.e. $\xi^\top \xi = 1$,
\begin{align*}
1 &= \left(\xi^\top\int_{s=t-\tcont}^{t}\bPsi(t,s)\bu_{\xi}(s)\rmd s\right)^2\\
&= \left(\int_{s=t-\tcont}^{t}\xi^\top\bPsi(t,s)\bu_{\xi}(s)\rmd s\right)\\
&\le \left(\int_{s=t-\tcont}^{t}\|\xi^\top\bPsi(t,s)\|^2 \rmd s \right) \cdot \left(\int_{s=t-\tcont}^{t}\|\bu_{\xi}(s)\|^2\rmd s\right)\\
&\le \xi^\top \Lambda \xi \cdot \left(\int_{s=t-\tcont}^{t}\|\bu_{\xi}(s)\|^2\rmd s\right)\\
&\le \xi^\top \Lambda \xi \cdot \lambda_{\min}(\Lambda)^{-1}.
\end{align*}
The bound follows.
\end{proof}

\paragraph{Lower bounding the controllability Gramian until algernative policies.} This next part is the continuous-time analogue of \cite[Lemma 16]{chen2021black}, establishing controllability of the closed-loop linearized system in feedback with policy $\pi$.
\newcommand{\nucontcl}{\nu_{\mathrm{ctrl},\mathrm{cl}}}

\begin{lemma}[Controllabiity of Closed-Loop Transitions, Continuous-Time]\label{lem:ctr_closed_loop} Recall $\Lpi \ge 1$, and  $\gamcont := \max\{1,\LF \tcont\}$. Then, under \Cref{asm:ctr},
\begin{align*}
\int_{s=t-\tcont}^{t}\Psiclpi(t,s)\Psiclpi(t,s)^\top \rmd s \succeq \frac{\nucont}{4\Lpi^2\gamcont^2\exp(2\gamcont)}.
\end{align*}
\end{lemma}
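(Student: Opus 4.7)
My plan is to mirror the strategy of \citet[Lemma 16]{chen2021black} that the lemma statement already invokes: reduce closed-loop controllability to open-loop controllability by showing that any target state reachable in open loop (as guaranteed by \Cref{asm:ctr}) can also be reached in closed loop, at the cost of an inflated $L^2$-norm of the driving input. Throughout, I would use the equivalence from \Cref{lem:ctr_char} to move between Gramian lower bounds and the existence of cheap steering inputs.

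\textbf{Step 1 (Open-loop steering).} Fix a time $t$ and a unit vector $\xi \in \R^{\dimx}$. By \Cref{asm:ctr} and the ``if'' direction of \Cref{lem:ctr_char} applied to $\bPsi(t,s) = \Phiolpi(t,s)\Bpi(s)$, there exists $\bu_\xi(\cdot)$ supported on $[t-\tcont,t]$ with $\int_{t-\tcont}^t \|\bu_\xi(s)\|^2 \rmd s \le 1/\nucont$ and $\int_{t-\tcont}^t \Phiolpi(t,s)\Bpi(s)\bu_\xi(s)\rmd s = \xi$. Let $\bx_\xi(\cdot)$ denote the corresponding open-loop trajectory starting from $\bx_\xi(t-\tcont)=0$, so $\bx_\xi(t) = \xi$.

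\textbf{Step 2 (Conversion to closed-loop input).} Define $\tilde\bu_\xi(s) := \bu_\xi(s) - \bKkpi[k(s)]\,\bx_\xi(\tks)$. The closed-loop dynamics appearing in \Cref{defn:cl_jac_ct} satisfy $\ddt \bx = \Api(t)\bx + \Bpi(t)\bigl(\tilde\bu(t) + \bKkpi[k(t)]\bx(\tkt)\bigr)$, so feeding $\tilde\bu_\xi$ into the closed-loop system produces exactly the trajectory $\bx_\xi(\cdot)$. In particular, by the explicit closed-loop solution in \Cref{lem:linearizations}, $\int_{t-\tcont}^t \Psiclpi(t,s)\tilde\bu_\xi(s)\rmd s = \xi$.

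\textbf{Step 3 (Bounding the closed-loop input energy).} I would use $(a+b)^2 \le 2a^2+2b^2$ and $\|\bKkpi[k(s)]\|\le\Lpi$ to obtain
\begin{align*}
\int_{t-\tcont}^t \|\tilde\bu_\xi(s)\|^2 \rmd s \le \frac{2}{\nucont} + 2\Lpi^2 \int_{t-\tcont}^t \|\bx_\xi(\tks)\|^2 \rmd s.
\end{align*}
To control $\|\bx_\xi(\tks)\|$, I would apply the Picard lemma (\Cref{lem:picard}) or direct integration to the affine ODE $\ddt\bx_\xi = \Api\bx_\xi + \Bpi\bu_\xi$, using $\|\Api\|\vee\|\Bpi\|\le\LF$ from \Cref{asm:max_dyn_final} and $\|\Phiolpi(\tau,s)\|\le e^{\LF(\tau-s)}\le e^{\gamcont}$ for $s,\tau\in[t-\tcont,t]$, to get $\|\bx_\xi(\tau)\| \le \LF e^{\gamcont}\int_{t-\tcont}^\tau \|\bu_\xi(s)\|\rmd s \le \LF e^{\gamcont}\sqrt{\tcont/\nucont}$ by Cauchy--Schwarz. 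Plugging this back in and using $\gamcont\ge 1$ and $\LF\tcont\le\gamcont$ yields
\begin{align*}
\int_{t-\tcont}^t \|\tilde\bu_\xi(s)\|^2\rmd s \le \frac{2}{\nucont}\bigl(1 + \Lpi^2\LF^2\tcont^2 e^{2\gamcont}\bigr) \le \frac{4\Lpi^2 \gamcont^2 e^{2\gamcont}}{\nucont}.
\end{align*}

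\textbf{Step 4 (Conclusion).} Applying the ``only if'' direction of \Cref{lem:ctr_char} with $\bPsi(t,s)=\Psiclpi(t,s)$ and $\nu = \nucont/(4\Lpi^2\gamcont^2 e^{2\gamcont})$, the existence of $\tilde\bu_\xi$ for every unit $\xi$ gives the claimed PSD bound.

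The main obstacle is Step 2: one must carefully verify that the closed-loop $\Phiclpi$ of \Cref{defn:cl_jac_ct}, defined piecewise in terms of $\Phiolpi$ and the sampled gains, is indeed the transition operator of the ODE with the ZOH feedback $\bKkpi[k(t)]\bx(\tkt)$, so that the substitution $\tilde\bu_\xi = \bu_\xi - \bKkpi[k(s)]\bx_\xi(\tks)$ really yields $\xi = \int \Psiclpi(t,s)\tilde\bu_\xi(s)\rmd s$; this is essentially contained in \Cref{lem:jac_cl_implicit,lem:linearizations}, but I would write it out explicitly. The remaining computations are routine Gronwall/Cauchy--Schwarz estimates.
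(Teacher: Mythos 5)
Your proposal is correct and follows essentially the same route as the paper's proof: both use \Cref{lem:ctr_char} in each direction, construct the closed-loop steering input by compensating the sampled feedback so that the open-loop trajectory is reproduced, and then bound the extra input energy via $\|\bKkpi[k(s)]\|\le\Lpi$, Cauchy--Schwarz, and $\|\Phiolpi(t,s)\|\le e^{\LF\tcont}$. The only cosmetic difference is that you feed back the open-loop state $\bx_\xi(\tks)$ while the paper feeds back the partial integral $\bz_\xi(t_{k(s)})=\Phiolpi(t,t_{k(s)})\bx_\xi(t_{k(s)})$; both cancellations are valid and yield the identical constant $4\Lpi^2\gamcont^2 e^{2\gamcont}/\nucont$.
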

\begin{proof}[Proof of \Cref{lem:ctr_closed_loop}] Fix any $\xi \in \R^{\dimx}$ of unit norm. \Cref{lem:ctr_char,asm:ctr} guarantee the existence of an input $\bu_{\xi}(s)$ for which
\begin{align*}
\int_{s=t-\tcont}^{t}\Phiolpi(t,s)\Bpi(s)\bu_{\xi}(s)\rmd s = \xi, \quad \int_{s=t-\tcont}^{t}\|\bu_{\xi}(s)\|^2\rmd s \le \nucont^{-1}.
\end{align*}
Let
\begin{align*}
\bz_{\xi}(s') = \int_{s=t-\tcont}^{s'}\Phiolpi(t,s)\Bpi(s)\bu_{\xi}(s)\rmd s.
\end{align*}
 Define now the input 
\begin{align*}
\tilde{\bu}_{\xi}(s) := \bu_{\xi}(s) - \I\{t_k(s) > t-\tcont\} \bKkpi[k(s)]\bz_{\xi}(t_k(s)).
\end{align*}
It can be directly verified (by induction on $k$) that 
\begin{align*}
\forall s' \in [t-\tcont,t], \quad \int_{s=t-\tcont}^{s'}\Phiolpi(t,s)\Bpi(s)\bu_{\xi}(s)\rmd s = \int_{s=t-\tcont}^{s'}\Psiclpi(t,s)\tilde\bu_{\xi}(s)\rmd s,
\end{align*}
so in particular
\begin{align*}
 \xi = \int_{s=t-\tcont}^{t}\Psiclpi(t,s)\tilde\bu_{\xi}(s)\rmd s.
\end{align*}
We may now bound
\begin{align*}
\int_{s=t-\tcont}^{t}\|\tilde{\bu}_{\xi}(s)\|^2 &= \int_{s=t-\tcont}^{t}\left(\|\bu_{\xi}(s) - \I\{t_k(s) > t-\tcont\} \bKkpi[k(s)]\bz_{\xi}(t_k(s))\|^2\right) \rmd s\\
&\le 2\int_{s=t-\tcont}^{t}\left(\|\bu_{\xi}(s)\|^2 + \|\bKkpi[k(s)]\|\|\bz_{\xi}(t_k(s))\|^2\right)\rmd s\\
&\le 2(\nucont^{-1} + \Lpi^2 \int_{s=t-\tcont}^{t}\|\bz_{\xi}(t_k(s))\|^2 \rmd s), \numberthis \label{eq:utilxis_int}
\end{align*}
We now adopt the following claim, mirroring the proof of \citet[Lemma 16]{chen2021black}. 
\begin{claim}\label{claim:zxi_bound} The following bound holds:
\begin{align*}
\forall s' \in [t-\tcont,t], \quad \|\bz_{\xi}(s')\|^2 \le \tcont\nucont^{-1}\LF^2\exp(2\tcont \LF).
\end{align*}
\end{claim}
\begin{proof}[Proof of \Cref{claim:zxi_bound}]
We bound
\begin{align*}
\|\bz_{\xi}(s')\|^2 &= \left\|\int_{s=t-\tcont}^{s'}\Phiolpi(t,s)\Bpi(s)\bu_{\xi}(s)\rmd s\right\|^2\\
&\le \left(\int_{s=t-\tcont}^{s'}\|\Phiolpi(t,s)\|\|\Bpi(s)\|\|\bu_{\xi}(s)\|\rmd s\right)^2\\
&\le \LF^2\left(\int_{s=t-\tcont}^{s'}\|\Phiolpi(t,s)\|\|\bu_{\xi}(s)\|\rmd s\right)^2 \tag{\Cref{asm:max_dyn_final}}\\
&\le \LF^2\left(\int_{s=t-\tcont}^{s'}\|\Phiolpi(t,s)\|^2 \rmd s\right)^2 \left(\int_{s=t-\tcont}^{s'}\|\bu_{\xi}(s)\|\rmd s\right)^2 \tag{Cauchy-Schwartz}\\
&\le\LF^2\left(\int_{s=t-\tcont}^{t}\|\Phiolpi(t,s)\|^2 \rmd s\right) \left(\int_{s=t-\tcont}^{t}\|\bu_{\xi}(s)\|\rmd s\right)\\
&\le\nucont^{-1}\LF^2\left(\int_{s=t-\tcont}^{t}\|\Phiolpi(t,s)\|^2\rmd s\right).
\end{align*}
By \Cref{lem:bound_on_open_loop}, we can bound can bound $\|\Phiolpi(t,s)\| \le \exp(\LF(t-s))\le \exp(\LF \tcont)$ for $s \in [t-\tcont,t]$, yielding $\int_{s=t-\tcont}^{t}\|\Phiolpi(t,s)\|^2 \le \tcont\exp(2\LF \tcont)$. The bound claim. 
\end{proof}
Combining \Cref{eq:utilxis_int} and \Cref{claim:zxi_bound},
\begin{align*}
\int_{s=t-\tcont}^{t}\|\tilde{\bu}_{\xi}(s)\|^2  &\le 2\nucont^{-1}\left(1 + \tcont^2\LF^2\Lpi^2 \exp(2\LF \tcont)\right)\\
&\le 2\nucont^{-1}\Lpi^2\left(1 + \tcont^2\LF^2 \exp(2\LF \tcont)\right) \tag{$\Lpi \ge 1$}\\
&\le 2\nucont^{-1}\Lpi^2\left(1 + \gamcont^2 \exp(2\gamcont)\right) \tag{$\gamcont = \max\{1,\tcont \LF\}$}\\
&\le 4\nucont^{-1}\Lpi^2 \gamcont^2 \exp(2\gamcont) \tag{$\gamcont\ge 1$},
\end{align*}
which concludes the proof.
\end{proof}

\paragraph{Discretizing the Closed-Loop Gramian.} To conclude the argument, we relate the controllability of the closed-loop Gramian in continuous-time to that in discrete-time. 
\begin{lemma}[Discretization of Controllability Grammian]\label{lem:ctr_disc} Suppose \Cref{asm:ctr} holds and $\step \le \LF/4$, then following holds:
\begin{align*}
&\left\|\int_{s=t_k-\tcont}^{t_k}\Psiclpi(t_k,s)\Psiclpi(t_k,s)^\top \rmd s - \frac{1}{\step}\sum_{j=k-\kcont}^{k-1} \Psicldisc{k,j}(\Psicldisc{k,j})^\top \right\|_{\op} \\
&\quad\le 4\step \gamcont \Kpiinf^2\left( \KF\MF + 2\LF^2\right)
\end{align*}
\end{lemma}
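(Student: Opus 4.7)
The plan is to decompose the Gramian difference into a sum of per-interval errors, factor out the closed-loop transitions $\Phicldisc{k,j+1}$, and then control each per-interval contribution via a variance-type identity. Concretely, on each sub-interval $s\in[t_j,t_{j+1}]$ with $j<k$, I would use the piecewise definition of $\Phiclpi$ in \Cref{defn:cl_jac_ct}, together with $\Phitilclpi(t_k,t_k)=\eye$, to write $\Psiclpi(t_k,s)=\Phicldisc{k,j+1}\,G(s)$, where $G(s):=\Phiolpi(t_{j+1},s)\Bpi(s)$. This is consistent with $\bBkpi[j]=\int_{t_j}^{t_{j+1}} G(s)\rmd s$ and $\Psicldisc{k,j}=\Phicldisc{k,j+1}\bBkpi[j]$. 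Setting $C_j:=\int_{t_j}^{t_{j+1}} G(s)G(s)^\top\rmd s$ and $D_j:=\tfrac{1}{\step}\bBkpi[j]\bBkpi[j]^\top$, the total error becomes $\sum_{j=k-\kcont}^{k-1}\Phicldisc{k,j+1}(C_j-D_j)\Phicldisc{k,j+1}^\top$, whose operator norm is at most $\Kpiinf^2\sum_j\|C_j-D_j\|_{\op}$.

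The key step is the variance identity
\[
C_j-D_j \;=\; \frac{1}{2\step}\int_{t_j}^{t_{j+1}}\!\!\int_{t_j}^{t_{j+1}}\!\big(G(s)-G(s')\big)\big(G(s)-G(s')\big)^{\!\top}\rmd s\,\rmd s',
\]
which follows by expanding the four-term product and recognizing that two cross-terms integrate to $\step D_j$ while the diagonal terms yield $\step C_j$. Beyond showing $C_j-D_j\succeq 0$, this identity is crucial because it recasts the per-interval discretization error purely in terms of the oscillation of $G$ over an interval of length $\step$; a naive triangle-inequality bound $\|C_j\|_{\op}+\|D_j\|_{\op}$ only gives $O(\step)$ per interval, which summed over $\kcont=\tcont/\step$ intervals would yield $O(\tcont)$, far too weak.

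It remains to bound $\|G(s)-G(s')\|$. Using $\dds\Phiolpi(t_{j+1},s)=-\Phiolpi(t_{j+1},s)\Api(s)$ in the first factor of $G$, together with the observation that $\upi$ is piecewise constant on $\cI_j$, so that $\dds\Bpi(s)=\partial_{xu}\fdyn(\xpi(s),\upi(s))\,\tfrac{\rmd}{\rmd s}\xpi(s)$ in the second factor, \Cref{asm:max_dyn_final} and $\|\Phiolpi(t_{j+1},s)\|\le \exp(\step\LF)\le 2$ give $\|\dds G(s)\|\le 2(\LF^2+\MF\KF)$, hence $\|G(s)-G(s')\|\le 2\step(\LF^2+\MF\KF)$; one also has $\|G(s)\|\le 2\LF$. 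Using $\|(G-G')(G-G')^\top\|_{\op}\le \|G-G'\|(\|G\|+\|G'\|)$ inside the double integral yields $\|C_j-D_j\|_{\op}\le 4\step^2\LF(\LF^2+\MF\KF)$. Summing the $\kcont$ intervals, invoking $\step\kcont=\tcont$ and $\tcont\LF\le \gamcont$, produces $4\step\gamcont\Kpiinf^2(\LF^2+\MF\KF)\le 4\step\gamcont\Kpiinf^2(\KF\MF+2\LF^2)$, matching the target.

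The only real obstacle is the bookkeeping in the first paragraph: the closed-loop transition operator is defined piecewise in \Cref{defn:cl_jac_ct} with off-by-one indexing at interval endpoints, so care is needed to verify that the correct factor $\Phicldisc{k,j+1}$ (rather than $\Phicldisc{k,j}$) is the one extracted from the continuous-time operator on $\cI_j$—this is fixed by the physically consistent requirement that integrating $\Psiclpi(t_k,\cdot)$ against a constant input on $\cI_j$ must reproduce $\Psicldisc{k,j}$. Everything else is a routine Lipschitz estimate whose only subtlety is exploiting the zero-order-hold structure of $\upi$ to avoid a spurious derivative of the input.
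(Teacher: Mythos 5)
Your proof is correct and reaches the stated constant, and at the top level it follows the same strategy as the paper: split the Gramian difference into per-interval contributions over $\cI_j$, $j=k-\kcont,\dots,k-1$, and show each contributes $O(\step^2)$ so that the $\kcont=\tcont/\step$ intervals sum to $O(\step\tcont)$. Where you diverge is in how the second-order per-interval estimate is obtained. The paper compares the rank-one-type products directly via $\|\Psi\Psi^\top-\hat\Psi\hat\Psi^\top\|_{\op}\le 2\|\Psi-\hat\Psi\|\max\{\|\Psi\|,\|\hat\Psi\|\}$ and then invokes its pre-established discretization bounds \Cref{lem:correct_Psiclpi_disc}(b) and (d), which give $\|\Psiclpi(t_k,s)-\tfrac{1}{\step}\Psicldisc{k,j}\|=O(\step\Kpiinf)$ and $\|\Psiclpi(t_k,s)\|\vee\tfrac1\step\|\Psicldisc{k,j}\|=O(\LF\Kpiinf)$. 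You instead factor out the common closed-loop transition to write the per-interval error as $\Phicldisc{k,j+1}(C_j-D_j)\Phicldisc{k,j+1}^\top$ and control $C_j-D_j$ with the variance identity plus a Lipschitz bound on $G(s)=\Phiolpi(t_{j+1},s)\Bpi(s)$. These are two packagings of the same underlying cancellation: the paper's \Cref{lem:correct_Psiclpi_disc}(b) is itself proved from the $O(\step)$ oscillation of $G$ over $\cI_j$ (via \Cref{lem:Bpi_Api} and \Cref{lem:bound_on_open_loop}), which is exactly the quantity your double integral isolates. What your route buys is self-containedness (no appeal to the Markov-operator discretization lemma), the bonus fact $C_j-D_j\succeq 0$, and a cleaner conjugated form; what the paper's route buys is reuse of machinery already needed elsewhere. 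Two housekeeping points: the off-by-one issue in \Cref{defn:cl_jac_ct} that you flag is real, and your resolution (extracting $\Phicldisc{k,j+1}$, consistent with $\Psicldisc{k,j}=\Phicldisc{k,j+1}\bBkpi[j]$) is the right one; and the hypothesis as printed, $\step\le\LF/4$, is evidently a typo for $\step\LF\le 1/4$, which is what both your bound $\|\Phiolpi(t_{j+1},s)\|\le e^{\step\LF}\le 2$ and the paper's own use of $\Lol^2\le e^{1/2}$ require.
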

\begin{proof} Recall the shorthand $\Lol := \exp(\step \LF)$, used in the discretization arguments in \Cref{app:dt_args}. We can write
\begin{align*}
&\left\|\int_{s=t_k-\tcont}^{t_k}\Psiclpi(t_k,s)\Psiclpi(t_k,s)^\top \rmd s - \step^{-1}\sum_{j=k-\kcont}^{k-1} \Psicldisc{k,j}(\Psicldisc{k,j})^\top\right\|\\
&= \left\|\sum_{j=k-\kcont}^{k-1} \int_{s =t_j}^{t_{j}+1} \Psiclpi(t,s)\Psiclpi(t,s)^\top \rmd s  - \frac{1}{\step} \Psicldisc{k,j}(\frac{1}{\step}\Psicldisc{k,j})^\top\right\|\\
&\le \step\sum_{j=k-\kcont}^{k-1} \max_{s \in \cI_j} \left\|\Psiclpi(t_k,s)\Psiclpi(t_k,s)^\top   - \frac{1}{\step} \Psicldisc{k,j}(\frac{1}{\step}\Psicldisc{k,j})^\top\right\|\\
&\le 2\step\sum_{j=k-\kcont}^{k-1} \max_{s \in \cI_j} \left\|\Psiclpi(t_k,s) - \frac{1}{\step} \Psicldisc{k,j}\right\| \cdot \max\left\{ \|\Psiclpi(t_k,s)\|, \frac{1}{\step} \|\Psicldisc{k,j}\|\right\}\\
&\le 2\LF\Lol\step \Kpiinf \sum_{j=k-\kcont}^{k-1} \max_{s \in \cI_j} \left\|\Psiclpi(t_k,s) - \frac{1}{\step} \Psicldisc{k,j}\right\| \tag{\Cref{lem:correct_Psiclpi_disc}(d)}\\
&\le 2\LF\Lol\Kpiinf^2\left( \KF\MF + 2\LF^2\right)\sum_{j=k-\kcont}^{k-1}  \step^2  \tag{\Cref{lem:correct_Psiclpi_disc}(b)}\\
&\le 2\LF\Lol^2\Kpiinf^2\left( \KF\MF + 2\LF^2\right)\sum_{j=k-\kcont}^{k-1}  \step^2 \\
&\le 2\LF\Lol^2\Kpiinf^2\left( \KF\MF + 2\LF^2\right) \kcont\step^2 \\
&=  2\step \tcont\LF\Lol^2\Kpiinf^2\left( \KF\MF + 2\LF^2\right). 
\end{align*}
As $\step \le \LF/4$, $\Lol^2 \le \exp(1/2) \le 2$ , so that the above is at most $4\step \tcont\LF\Kpiinf^2\left( \KF\MF + 2\LF^2\right) $. Recalling $\gamcont := \max\{1,\tcont \LF\}$ concludes. 
\end{proof}

\paragraph{Concluding the proof.} 
\begin{proof}[Proof of \Cref{prop:control_disc}] The proof follows by combining the bounds in \Cref{lem:ctr_disc,lem:ctr_closed_loop}. These yield
\begin{align*}
\frac{1}{\step}\lambda_{\min}\left(\sum_{j=k-\kcont}^{k-1} \Psicldisc{k,j}(\Psicldisc{k,j})^\top\right) \succeq \frac{\nucont}{4\Lpi^2\gamcont^2\exp(2\gamcont)} - \Kpiinf^2 \cdot 4\step \gamcont\left( \KF\MF + 2\LF^2\right)
\end{align*}
Recall $\gamcont = \tcont \LF$. Hence, if 
\begin{align*}
\step \le \frac{\nucont}{8\Lpi^2\Kpiinf^2 \gamcont^3\exp(2\gamcont)\left( \KF\MF + 2\LF^2\right)}, 
\end{align*}
it holds that 
\begin{align*}
\lambda_{\min}\left(\sum_{j=k-\kcont}^{k-1} \Psicldisc{k,j}(\Psicldisc{k,j})^\top\right) \succeq \frac{\nucont}{8\Lpi^2\gamcont^2\exp(2\gamcont)}.
\end{align*}
\end{proof}

\subsection{Recovery of State-Transition Matrix (\Cref{prop:A_est})}\label{sec:prop:A_est}
\newcommand{\cgramtarg}{\cgram_{k \mid k-1,j}}
\newcommand{\cgramden}{\cgram_{k-1 \mid k-1,j}}
\newcommand{\cgramhattarg}{\cgramhat_{k \mid k-1,j}}
\newcommand{\cgramhatden}{\cgramhat_{k-1 \mid k-1,j}}

The analysis is based on the Ho-Kalman scheme. We begin with the observation that 
\begin{align*}
\Psicldisc{k,j} = \bAclkpi\Psicldisc{k-1,j}, \quad \forall j < k. 
\end{align*}
To this end, define the matrices 
\begin{align*}
\cgram_{k \mid j_2,j_1} := [\Psicldisc{k+1,j_2} \mid \Psicldisc{k+1, j_2 - 1} \mid \dots \Psicldisc{k+1,j_1}], 
\end{align*}
Then, we have the identity
\begin{align*}
\cgramtarg  = \bAclkpi\cgramden,
\end{align*}
so that if $\rank(\cgramden) = \dimx$, we have $\bAclkpi = \cgramtarg\cgramden^{\dagger}$, where $(\cdot)^{\dagger}$ denotes the Moore-Penrose pseudoinverse. We now state and prove a more-or-less standard perturbation bound.
\begin{lemma}\label{lem:ls_pert} Suppose $\rank(\cgramden) = \dimx$, and consider any  estimates $\cgramhattarg,\cgramhatden$ of $\cgramtarg,\cgramden$. Define 
\begin{align*}
\Delta &:= \max\{\|\cgramtarg - \cgramhattarg\|,\|\cgramden - \cgramhatden\|\}\\
M &:= \max\{\|\cgramtarg\|,\|\cgramden\|\}.
\end{align*}
Then, if $\Delta \le \sigma_{\min}(\cgramden)/2$, the estimate $\btilAk := \cgramhattarg\cgramhatden^{\dagger}$ satisfies
\begin{align*}
\|\btilAk - \bAclkpi\| \le 6\Delta M\sigma_{\min}(\cgramden)^{-2}.
\end{align*}
\end{lemma}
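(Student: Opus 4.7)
The plan is to reduce the bound to a single application of the pseudo-inverse via the identity $\bAclkpi = \cgramtarg \cgramden^{\dagger}$, which holds because $\cgramden \in \R^{\dimx \times (\knot - 1)\dimu}$ has full row rank $\dimx$ (so $\cgramden \cgramden^{\dagger} = \eye_{\dimx}$) and $\cgramtarg = \bAclkpi \cgramden$ by \Cref{defn:cl_linearizations}. The estimated matrices enjoy the analogous identity after verifying that $\cgramhatden$ also has full row rank: by Weyl's inequality, $\sigma_{\min}(\cgramhatden) \ge \sigma_{\min}(\cgramden) - \|\cgramden - \cgramhatden\| \ge \sigma_{\min}(\cgramden)/2 > 0$, hence $\cgramhatden \cgramhatden^{\dagger} = \eye_{\dimx}$ and $\|\cgramhatden^{\dagger}\| \le 2/\sigma_{\min}(\cgramden)$.

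Given this, the cleanest decomposition is to write
\begin{align*}
\btilAk - \bAclkpi \eq \cgramhattarg \cgramhatden^{\dagger} - \bAclkpi \cgramhatden \cgramhatden^{\dagger} \eq \bigl[(\cgramhattarg - \cgramtarg) + \bAclkpi(\cgramden - \cgramhatden)\bigr]\cgramhatden^{\dagger},
\end{align*}
where the second equality uses $\cgramtarg = \bAclkpi\cgramden$. Taking operator norms and using $\|\cgramhattarg - \cgramtarg\| \vee \|\cgramden - \cgramhatden\| \le \Delta$ and $\|\cgramhatden^{\dagger}\| \le 2/\sigma_{\min}(\cgramden)$ yields
\begin{align*}
\|\btilAk - \bAclkpi\| \leq (1 + \|\bAclkpi\|)\,\Delta \cdot \tfrac{2}{\sigma_{\min}(\cgramden)}.
\end{align*}

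To close, I bound $\|\bAclkpi\| = \|\cgramtarg \cgramden^{\dagger}\| \le M/\sigma_{\min}(\cgramden)$, and observe that $\sigma_{\min}(\cgramden) \le \|\cgramden\| \le M$ (so $M/\sigma_{\min}(\cgramden) \ge 1$). Plugging in gives
\begin{align*}
\|\btilAk - \bAclkpi\| \leq \tfrac{2\Delta}{\sigma_{\min}(\cgramden)} + \tfrac{2\Delta M}{\sigma_{\min}(\cgramden)^2} \leq \tfrac{4 \Delta M}{\sigma_{\min}(\cgramden)^2} \leq \tfrac{6 \Delta M}{\sigma_{\min}(\cgramden)^2},
\end{align*}
which is the claimed inequality (in fact with a slightly tighter constant). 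There is no real obstacle here: the only subtlety is verifying the full row rank of $\cgramhatden$ under the perturbation hypothesis, which is precisely what the condition $\Delta \le \sigma_{\min}(\cgramden)/2$ is chosen to ensure. The argument avoids any direct perturbation bound on $\cgramhatden^{\dagger} - \cgramden^{\dagger}$ (which would otherwise require a Wedin-type estimate) by pushing $\bAclkpi$ inside the difference.
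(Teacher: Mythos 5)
Your proof is correct, and it takes a genuinely different route from the paper's. The paper splits $\cgramtarg\cgramden^{\dagger} - \cgramhattarg\cgramhatden^{\dagger}$ into $(\cgramtarg-\cgramhattarg)\cgramhatden^{\dagger} + \cgramtarg(\cgramden^{\dagger}-\cgramhatden^{\dagger})$ and then controls the second term with a Wedin/Stewart-type pseudo-inverse perturbation bound, $\|\cgramhatden^{\dagger} - \cgramden^{\dagger}\| \le \frac{1+\sqrt{5}}{2}\|\cgramhatden^{\dagger}\|\|\cgramden^{\dagger}\|\|\cgramden-\cgramhatden\|$ (citing Stewart), before invoking the same estimate $\|\cgramhatden^{\dagger}\| \le 2/\sigma_{\min}(\cgramden)$ that you use. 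You instead exploit the exact factorization $\cgramtarg = \bAclkpi\cgramden$ together with the full-row-rank reconstruction identity $\cgramhatden\cgramhatden^{\dagger} = \eye_{\dimx}$ (justified by Weyl's inequality under the hypothesis $\Delta \le \sigma_{\min}(\cgramden)/2$) to write the error as $\bigl[(\cgramhattarg-\cgramtarg) + \bAclkpi(\cgramden-\cgramhatden)\bigr]\cgramhatden^{\dagger}$, which requires only the crude bound on $\|\cgramhatden^{\dagger}\|$ and no pseudo-inverse perturbation lemma at all. Your argument is more elementary, self-contained, and yields the slightly sharper constant $4$ in place of $6$; the paper's decomposition is the more generic one (it does not lean on the exact relation $\cgramtarg = \bAclkpi\cgramden$ beyond the first line), but since the lemma's hypothesis $\rank(\cgramden) = \dimx$ already guarantees $\cgramden\cgramden^{\dagger} = \eye_{\dimx}$, your route is available whenever the paper's is, and the lemma's role in the Ho--Kalman analysis is unaffected. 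The one point worth being explicit about if you write this up is that $\sigma_{\min}$ here denotes the $\dimx$-th singular value of the fat matrix $\cgramden$, so that Weyl's inequality indeed gives $\sigma_{\min}(\cgramhatden) \ge \sigma_{\min}(\cgramden) - \Delta$ and full row rank of $\cgramhatden$.
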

\begin{proof}[Proof of \Cref{lem:ls_pert}]
Then, we have (provided $\rank(\cgramhatden) = \rank(\cgramden) = \dimx$), we have
\begin{align*}
\|\btilAk - \bAclkpi\| &= \|\cgramtarg\cgramden^{\dagger} - \cgramhattarg\cgramhatden^{\dagger}\|\\
&\le \|\cgramhatden^{\dagger}\|\|\cgramtarg- \cgramhattarg\| +  \|\cgramhatden^{\dagger} - \cgramden^{\dagger}\|\|\cgramtarg\|\\
&\le \|\cgramhatden^{\dagger}\|\|\cgramtarg- \cgramhattarg\| \\
&\qquad+  \frac{1+\sqrt{5}}{2}\|\cgramhatden^{\dagger}\|\cdot\|\cgramden^{\dagger}\|\cdot\|\cgramden- \cgramhatden\| \cdot \|\cgramhattarg\| \tag{cite \cite{stewart1977perturbation}, and also \cite{xu2020perturbation}}\\
&\overset{(i)}{\le} \Delta\|\cgramhatden^{\dagger} (1 + \frac{1+\sqrt{5}}{2}\|\cgramden^{\dagger}\|\|\cgramtarg\|)\\
&\overset{(ii)}{\le} 3\Delta M\|\cgramhatden^{\dagger} \|\cgramden^{\dagger}\|,
\end{align*} 
where in $(i)$ we use $\Delta := \max\{\|\cgramtarg - \cgramhattarg\|,\|\cgramden - \cgramhatden\|\}$, and in $(ii)$, we use $M= \max\{\|\cgramtarg\|,\|\cgramden\|\}$, which admits the simplification in $(ii)$ because $\|\cgramden\|\|\cgramden^{\dagger}\| \ge 1$. In particular, if $\rank(\cgramden) = \dimx $, and $\Delta \le \sigma_{\min}(\cgramden)/2$, then $\|\cgramhatden^{\dagger}\| \le 2/\sigma_{\min}(\cgramden)$, and we obtain
\begin{align*}
\|\btilAk - \bAclkpi\| \le 6\Delta M\sigma_{\min}(\cgramden)^{-2}.
\end{align*}
\end{proof}
Next, restricting our attention to $k \ge \kcont +2$, we specialize the above analysis to  
\begin{align*}
&\cgramin := \cgram_{k-1 \mid k-1,k-\knot+1}, \quad \cgramout := \cgram_{k \mid k-1,k-\knot+1}\\
&\cgramhatin := \cgramhat_{k-1 \mid k-1,k-\knot+1}, \quad \cgramhatout := \cgramhat_{k \mid k-1,k-\knot+1}
\end{align*}
where $\cgramhat_{(\cdot)}$ arises from the plug-in estimates 
\begin{align*}
\cgramhat_{k \mid j_2,j_1} := [\bhatPsi{k+1}{j_2} \mid \bhatPsi{k+1}{j_2 - 1} \mid \dots \bhatPsi{k+1}{j_1}]
\end{align*}
Define further
\begin{align*}
\btilAk := \cgramhatout\cgramhatin^{\dagger}, \quad \text{ so that } \bhatAk = \btilAk - \bhatBk \bKk^\pi.
\end{align*}
Recall $\tnot = \knot/\step$.
We can now bound, recalling $\Lol := \exp(\step \LF) \le 2$ for $\step \le \LF/4$ and  $\gamcont = \max\{1,\tcont \LF\} = \max\{1, \step \kcont \LF\}$, 
\begin{align*}
\max\{\|\cgramin\|,\|\cgramout\|\} &\le \sqrt{\knot}\max_{j < k} \|\Psicldisc{k,j}\|\\
&\le \sqrt{\kcont}\step  \LF \Lol \Kpiinf\tag{\Cref{lem:correct_Psiclpi_disc}(d)}\\
&\le \sqrt{\step t_0}\step  \LF \Lol \Kpiinf\tag{$t_0 = k_0 \step$ }\\
&\le 2 \Kpiinf \gamcont\sqrt{\step t_0}  \tag{$\gamcont \ge 1$}.
\end{align*}
Invoking \Cref{prop:control_disc}, we also have that provided $\step \le \min\{\stepdyn,\stepctrlpi\}$, since $\knot \ge \kcont + 2$,
\begin{align*}
\sigma_{\min}(\cgramden)^2 &= \lambda_{\min}\left(\sum_{j=k-k_0+1}^{k-1} \Psicldisc{k-1,j}(\Psicldisc{k-1,j})^\top\right) \\
&\ge \lambda_{\min}\left(\sum_{j=k-\kcont-1}^{k-1} \Psicldisc{k-1,j}(\Psicldisc{k-1,j})^\top\right)  \succeq \step \cdot \frac{\nucont}{8\Lpi^2\gamcont^2\exp(2\gamcont)}.
\end{align*}
Therefore, as long as 
\begin{align*}
\Delta := \max\{\|\cgramin - \cgramhatin\|, \|\cgramout -\cgramhatout\|\} \le \frac{\sqrt{\step \nucont}}{2\sqrt{2}\Lpi \gamcont \exp(\gamcont) },
\end{align*}
we have 
\begin{align*}
\|\btilAk - \bAclkpi\| \le \sqrt{\frac{\tnot}{\step}}\frac{96 \Delta}{\nucont}\cdot \Kpiinf\Lpi^2\gamcont^3\exp(2\gamcont). 
\end{align*}
Lastly, we can upper bound $\Delta \le \sqrt{\kcont} \errpsipi(\delta) = \sqrt{ \tnot/\step}\errpsipi(\delta)$, from which we cconlude that as long as $\errpsi(\delta) \le \step \frac{\sqrt{\nucont/\tnot}}{2\sqrt{2}\Lpi \gamcont \exp(\gamcont) }$, we have
\begin{align*}
\|\btilAk - \bAclkpi\| \le \tnot\Kpiinf\Lpi^2 \cdot \frac{96 \errpsipi(\delta)}{\step\nucont}\cdot \gamcont^3\exp(2\gamcont). 
\end{align*}
Now to wrap up. We observe that $\bBkpi = \Psicldisc{k+1,k}$, so 
\begin{align*}
\|\bhatBk - \bBkpi\| = \|\Psicldisc{k+1,k}-\bhatPsi{k+1}{k}\| \le \errpsipi(\delta).
\end{align*}
Therefore,
\begin{align*}
\|\bhatAk - \bAkpi\| &= \|\btilAk - (\bAclkpi -  \bBkpi \bKk^\pi)\|\\
&\le \|\btilAk - \bAclkpi\|+ \|\bhatBk \bKk^\pi- \bBkpi \bKk^\pi\|\\
&\le \|\btilAk - \bAclkpi\|+ \|\bhatBk - \bBkpi \|\Lpi\\
&\le \Lpi\errpsipi(\delta) + \tnot\Kpiinf\Lpi^2 \cdot \frac{96 \errpsipi(\delta)}{\step\nucont}\cdot \gamcont^3\exp(2\gamcont). 
\end{align*}
Lastly, we notice this upper bound on $\|\bhatAk - \bAkpi\|$ is larger than that on $\|\bhatBk - \bBkpi\|$, as $\Lpi \ge 1$ by assumption, and that for $\step \le \stepctrlpi$,  $\Lpi\errpsipi(\delta) \le \tnot\Kpiinf\Lpi^2 \cdot \frac{96 \errpsipi(\delta)}{\step\nucont}\cdot \gamcont^3\exp(2\gamcont)$.
Thus,
\begin{align*}
\|\bhatAk - \bAkpi\| \vee \|\bhatBk - \bBkpi\| \le \frac{\errpsipi(\delta)}{\step} \cdot t_0\Kpiinf\Lpi^2\gamcont^3\exp(2\gamcont) \cdot \frac{192}{\nucont}. 
\end{align*}

\qed

\newcommand{\Phidisc}{\discfont{\Phi}}
\newcommand{\Phice}{\discfont{\Phi}^{\mathrm{ce}}}

\section{Certainty Equivalence}
\label{append:certainty_equiv}
In this section, we establish a general certainty equivalence bound for linear time-varying discrete-time systems;  we apply this in the proof \Cref{prop:ce_bound} in \Cref{sec:prop:ce_bound}.

Let $\bstTheta := (\bstAk[1:K],\bstBk[1:K])$ denote ground-truth system parameters, and let $\bhatTheta := (\bhatAk[1:K],\bhatBk[1:K])$ denote estimates. We work with a slightly different discretization parameterization, where the dynamics are given by $\bxk[h+1] = \bAk\bxk[h] + \step \bBk\buk[h]$. This parametrization ensures that the norms of $\bBk$ scale like constants independent of $\step$ when instantiated with $\bAk \gets \bAkpi$ and $\bBk \gets \frac{1}{\step}\bBkpi$. 
\begin{definition}Given cost matrices $\bQ,\bR$, step $\step$, and parameters $\bTheta = (\bAk[1:K],\bBk[1:K])$, we define $\Poptk(\bTheta)$ as the solution to the following program
\begin{equation}\label{eq:popt}
\begin{aligned}
x^\top \Poptk(\bTheta) x &= \min_{\bu_{k:H}} \bxk[K+1]^\top \bQ\bxk[K+1] + \step \sum_{h = k}^{K} (\bxk[h]^\top \bQ \bxk[h] + \buk[h]^\top \bQ \buk[h])  \\\
&\text{ s.t. } \bxk[h+1] = \bAk[h]\bxk[h] + \step \bBk[h]\buk[h], \quad \bxk = x.
\end{aligned}
\end{equation}
\end{definition}
The closed form for $\Poptk$ is given by the follow standard computation, modified with the reparametrized dynamics $\bxk[h+1] = \bAk[h]\bxk[h] + \step \bBk[h]\buk[h]$)
\begin{lemma}\label{lem:Poptk_dp} The optimal Riccati cost-to-go $\Poptk[1:K+1] = \Poptk(\bTheta)$ is given by the solution to the following recursion with final condition $\Poptk[K+1] = \bQ$ and
\begin{align*}
\Poptk[k] = \bAk^\top \Poptk[k+1]\bAk - \step\left(\bBk\Poptk[k+1]\bAk\right)^\top( \bR + \step\bBk^\top \Poptk[k+1]\bBk)^{-1} \left(\bBk\Poptk[k+1]\bAk\right) + \step \bQ
\end{align*}
Moreover, defining $\Koptk = \Koptk(\Theta) := -(\bR + \step \bBk^\top \Poptk \bBk)^{-1}\bBk^\top \Poptk\bAk$, the optimal control for \Cref{eq:popt} is given by $\bxk = \Koptk \buk$.
\end{lemma}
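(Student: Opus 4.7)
The plan is a standard backward induction via the dynamic programming / Bellman principle, with extra care given to the non-standard discretization $\bxk[h+1] = \bAk[h]\bxk[h] + \step \bBk[h]\buk[h]$ and the corresponding $\step$-scalings in the running cost.

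First I would define the cost-to-go $V_k(x) := \min_{\bu_{k:K}} \bxk[K+1]^\top \bQ \bxk[K+1] + \step \sum_{h=k}^K (\bxk[h]^\top \bQ \bxk[h] + \buk[h]^\top \bR \buk[h])$ subject to the LTV dynamics initialized at $\bxk[k] = x$, and aim to show by backward induction on $k$ that $V_k(x) = x^\top \Poptk[k] x$ with $\Poptk[k]$ satisfying the stated recursion. The base case $k = K+1$ is immediate: no controls remain in the minimization, so $V_{K+1}(x) = x^\top \bQ x$, matching $\Poptk[K+1] = \bQ$.

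For the inductive step, assume $V_{k+1}(x) = x^\top \Poptk[k+1] x$ holds. Bellman's principle gives
\begin{align*}
V_k(x) = \min_{u} \bigl\{\, \step (x^\top \bQ x + u^\top \bR u) + V_{k+1}(\bAk x + \step \bBk u) \,\bigr\}.
\end{align*}
Substituting the inductive hypothesis and expanding the quadratic in $\bAk x + \step \bBk u$, the inner objective becomes $\step x^\top \bQ x + x^\top \bAk^\top \Poptk[k+1]\bAk x + 2\step\, u^\top \bBk^\top \Poptk[k+1] \bAk x + u^\top(\step \bR + \step^2 \bBk^\top \Poptk[k+1] \bBk)u$. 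Setting the gradient in $u$ to zero and solving gives the unique minimizer $u^\star = -(\bR + \step \bBk^\top \Poptk[k+1]\bBk)^{-1}\bBk^\top \Poptk[k+1]\bAk\, x$, which is exactly $\Koptk x$ as defined in the statement (correcting the apparent typo: the optimal policy is $\buk[h] = \Koptk[h]\bxk[h]$). Substituting $u^\star$ back and collecting terms yields $V_k(x) = x^\top \bigl(\bAk^\top \Poptk[k+1]\bAk - \step (\bBk^\top \Poptk[k+1]\bAk)^\top(\bR + \step \bBk^\top \Poptk[k+1]\bBk)^{-1}(\bBk^\top \Poptk[k+1]\bAk) + \step \bQ\bigr) x$, which matches the claimed recursion for $\Poptk[k]$.

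There is no real obstacle here — the argument is textbook LQR dynamic programming. The only points requiring a moment of care are (i) tracking the factors of $\step$ carefully so the Schur-complement cancellation produces exactly the stated coefficient $\step$ in front of the middle term (this comes from $\step \cdot \step^2 / \step = \step$ after the inverse of $\step \bR + \step^2 \bBk^\top \Poptk[k+1]\bBk = \step(\bR + \step \bBk^\top \Poptk[k+1]\bBk)$ contributes a $\step^{-1}$), (ii) noting that $\bR \succ 0$ guarantees invertibility of $\bR + \step \bBk^\top \Poptk[k+1]\bBk$ and hence a unique minimizer, and (iii) observing inductively that $\Poptk[k+1] \succeq 0$ (since it is a value function of a nonnegative cost), so the quadratic in $u$ is strictly convex. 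Once the recursion is established, feedback optimality along the full trajectory follows by the principle of optimality applied to the sequence of minimizers $u^\star_h = \Koptk[h]\bxk[h]$.
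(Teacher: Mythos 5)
Your proof is correct and is in substance the same argument the paper relies on: the paper simply cites the standard discrete-time Riccati/DP result from Anderson--Moore and obtains the stated recursion by the reparametrization $\bBk \gets \step\bBk$, $\bQ \gets \step\bQ$, $\bR \gets \step\bR$, whereas you inline the backward-induction derivation explicitly (and correctly handle the $\step$ bookkeeping and the typo $\bxk = \Koptk\buk$ versus $\buk = \Koptk\bxk$). No gap.
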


\begin{proof} This follows by reparametrizing the standard discrete-time Ricatti update (see e.g. \citet[Section 2.4]{anderson2007optimal}), with $\bBk \gets \step \bBk$, $\bQ \gets \step \bQ$, and $\bR \gets \step \bR$, and simplifying dependence on $\step$.
\end{proof}
The following identity is also standard (again, consult \citet[Section 2.4]{anderson2007optimal}, albeit again with the reparamerizations $\bBk \gets \step \bBk$, $\bQ \gets \step \bQ$,and $\bR \gets \step \bR$):
\begin{align}
\Poptk = (\bAk + \step \bBk \Koptk)^\top \Poptk[k+1](\bAk + \step \bBk \Koptk) + \step(\bQ + (\Koptk)^\top\bR(\Koptk)) \label{eq:Poptk_lyapunov_form}
\end{align}

Next, we define the cost-to-go functions associated for arbitrary sequences of feedback matrices, and from the optimal feedback matrices from another instance $\bTheta'$.
\begin{definition}[Feedback and Certainty Equivalent Cost-to-go]\label{defn:Pfeed} Given a sequence of feedback gains $\bKk[1:K]$, we define the induced cost-to-go
\begin{align*}
\Pfeedk(\bTheta;\bKk[1:K]) &:= \bxk[K+1]^\top \bQ\bxk[K+1] + \step \sum_{h = k}^{K} (\bxk[h]^\top \bQ \bxk[h] + \buk[h]^\top \bQ \buk[h]) \\\
&\text{ s.t. } \bxk[h+1] = (\bAk[h]+\step \bBk[h] \bKk)\bxk[h] \quad \bxk = x.
\end{align*}
And define the \emph{certainty equivalent} cost-to-go as $\Pcek(\bTheta;\bTheta') = \Pfeedk(\bTheta;\Koptk[1:K](\bTheta'))$ as the feedback cost-to-go for $\bTheta$ using the optimal gains for $\bTheta'$. 
\end{definition}
In particular, $\Pcek(\bTheta;\bTheta) = \Poptk(\bTheta)$.  We now present upper bounds on $\Pcek(\bTheta;\bTheta')$. We assume bounds on the various parameters of interest. 

\begin{condition}\label{asm:par_bounds} We have that there are constants $\KB,\KA \ge 1$ such that, for all $k \in [K]$, 
\begin{align*}
\|\bstBk\|\vee\|\bhatBk\| \le \KB \quad 
\|\bstAk\|\vee\|\bhatAk\| \le \KA,
\end{align*}
\end{condition}
\begin{condition}\label{asm:par_diffs} We assume that there exists $\DelA,\DelB > 0$,
\begin{align*}
\forall k, \quad \|\bhatBk-\bstBk\| \le \DelB \text{ and } \|\step^{-1}(\bhatAk-\bstAk)\| \le \DelA.
\end{align*}
\end{condition}

\begin{condition}\label{asm:cost_norm}  We assume the a normalization on the cost matrices satisfy $\bR \succeq \eye$, $\bQ \succeq \eye$ and $\|\bQ\| \ge \|\bR\|$. As a special case, $\bQ = \eye$ and $\bR = \eye$ suffices. 
\end{condition}
Lastly, the following assumption is needed to derive an upper bound on the closed-loop transition operator.
\newcommand{\kapA}{\kappa_{A}}
\begin{condition}\label{asm:small_a_diff} We assume that $\max_{k}\|\bAk - \eye\| \le \step\kapA$.
\end{condition}
\begin{theorem}[Main Perturbation]\label{thm:main_pert} Suppose \Cref{asm:par_diffs,asm:par_bounds,asm:cost_norm} hold. 
    Define  the terms 
    \begin{align*}
    \Delce := 80 C^4\KA^3\KB^3(1+\step C\KB)(\DelA+ \DelB), \quad C := \max_{k \in [K+1]}\|\Poptk(\bTheta)\|.
    \end{align*}  Then, as long as  $\Delce < 1$, we have
    \begin{itemize}
        \item[(a)] 
    \begin{align*}
    \max_{k \in [K+1]}\|\Pcek(\bTheta;\bhatTheta)\| \le \left(1 - \Delce \right)^{-1}\max_{k \in [K+1]}\|\Poptk(\bTheta)\|
    \end{align*}
    \item[(b)] $\max_{k \in [K+1]}\|\Koptk(\bhatTheta)\|\le \frac{5}{4}\KB\KA C$. 
    \item[(c)] Moreover, if \Cref{asm:small_a_diff} holds, then the transition operators defined as
    \begin{align*}
    \Phice_{j,k} := (\bAk[j-1] + \step \bBk[j-1]\Koptk[j-1](\bhatTheta))\cdot (\bAk[j-2] + \step \bBk[j-2]\Koptk[j-2](\bhatTheta)) \cdot \dots \cdot (\bAk[k] + \step \bBk[k]\Koptk[k](\bhatTheta)),
    \end{align*}
    with the convention $\Phice_{k,k+} = \eye$ 
    satisfy, for all $1\le j \le k \le K$,
    \begin{align*}
    \|\Phice_{j,k}\|^2 \le 2\kappa(1 - \step \gamma)^{j-k}, \quad \text{where } \kappa = \kappa_{A}+ \frac{5}{4}\KB^2\KA C , \quad \gamma = \frac{1-\Delce}{C},
    \end{align*}
    provided $\kappa \le 1/2\step$.
\end{itemize}
    \end{theorem}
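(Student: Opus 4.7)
The plan is to prove the three parts in order, since each leverages the previous. For part (a), the natural strategy is a two-stage comparison. Both $\Pcek(\bTheta; \bhatTheta)$ and $\Poptk(\bhatTheta)$ satisfy Lyapunov recursions driven by the same certainty-equivalent gains $\hat K_k := \Koptk(\bhatTheta)$, but applied to the true dynamics $\bTheta$ versus the estimated dynamics $\bhatTheta$; their difference is therefore driven only by the matrix perturbations $\bhatAk - \bAk$ and $\bhatBk - \bBk$, which by \Cref{asm:par_diffs} are small. A Lyapunov-perturbation argument then gives a first-order estimate linear in $\DelA + \DelB$. A parallel argument bounds $\|\Poptk(\bhatTheta) - \Poptk(\bTheta)\|$ by differencing their Riccati equations. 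Combining these produces a self-referential inequality of the schematic form $\|\Pcek(\bTheta; \bhatTheta)\| \le \|\Poptk(\bTheta)\| + \Delce \cdot \max_k \|\Pcek(\bTheta; \bhatTheta)\|$ (the coefficient $\Delce$ emerges from collecting all the error terms); solving as a fixed point yields the stated bound $(1-\Delce)^{-1} C$.

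Part (b) follows directly from the explicit formula $\Koptk(\bhatTheta) = -(\bR + \step \bhatBk^\top \Poptk[k+1](\bhatTheta) \bhatBk)^{-1} \bhatBk^\top \Poptk[k+1](\bhatTheta) \bhatAk$ given in \Cref{lem:Poptk_dp}. Since $\bR \succeq \eye$ by \Cref{asm:cost_norm}, the inverse has operator norm at most one, so $\|\Koptk(\bhatTheta)\| \le \KB \KA \cdot \|\Poptk[k+1](\bhatTheta)\|$. A bound on $\|\Poptk(\bhatTheta)\|$ follows by swapping the roles of $\bTheta$ and $\bhatTheta$ in part (a); the constant $5/4$ then arises from ensuring $\Delce$ is small enough that $(1-\Delce)^{-1} \le 5/4$.

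For part (c), we combine a Lyapunov decay argument with a lower bound on $\Pcek$. Writing the closed-loop matrix $A^{\mathrm{cl}}_k := \bAk + \step \bBk \hat K_k$, the identity \eqref{eq:Poptk_lyapunov_form} applied with gains $\hat K_k$, together with $\bQ + \hat K_k^\top \bR \hat K_k \succeq \eye$, yields $(A^{\mathrm{cl}}_k)^\top \Pcek[k+1] A^{\mathrm{cl}}_k \preceq \Pcek - \step \eye \preceq (1 - \step \gamma) \Pcek$ where $\gamma = (1-\Delce)/C$. Iterating gives $\Phice_{j,k}^\top \Pcek[j] \Phice_{j,k} \preceq (1-\step \gamma)^{j-k} \Pcek$. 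Converting this weighted inequality into a bound on $\|\Phice_{j,k}\|$ requires a $\step$-independent lower bound on $\lambda_{\min}(\Pcek[j])$, and this is where \Cref{asm:small_a_diff} is essential: combining $\|\bAk - \eye\| \le \step \kapA$ with the gain bound from part (b) yields $\|A^{\mathrm{cl}}_k - \eye\| \le \step \kappa$, so $\sigma_{\min}(A^{\mathrm{cl}}_k)^2 \ge 1 - O(\step \kappa)$. Iterating the backward recursion $\lambda_{\min}(\Pcek) \ge \sigma_{\min}(A^{\mathrm{cl}}_k)^2 \lambda_{\min}(\Pcek[k+1]) + \step$ from the terminal value $\lambda_{\min}(\bQ) \ge 1$ and summing the resulting geometric series gives $\lambda_{\min}(\Pcek) \gtrsim 1/\kappa$ uniformly in $k$ and $\step$. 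Combining this lower bound with the weighted-norm decay produces an exponential bound on $\|\Phice_{j,k}\|^2$ of the claimed form.

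The main obstacle is the bootstrapping in part (a): the certainty-equivalent gain $\hat K_k$ (and hence the cost perturbation) is expressed through $\|\Poptk(\bhatTheta)\|$, which is itself what the analysis must bound, creating an implicit circularity. The resolution is to carry the analysis with $\|\Pcek(\bTheta; \bhatTheta)\|$ as an unknown throughout, derive the self-referential inequality, and solve it as a fixed-point constraint that is consistent precisely when $\Delce < 1$. A secondary subtlety in part (c) is matching the exact pre-factor $2\kappa$ in the stated bound; the argument sketched above naturally produces a pre-factor of order $\kappa \cdot (1-\Delce)^{-1} C$, and tightening this to $2\kappa$ will require an additional balancing of the Lyapunov decay against the trivial single-step growth bound $\|A^{\mathrm{cl}}_k\| \le 1 + \step\kappa$ during the transient.
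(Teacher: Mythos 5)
Your overall architecture is sound and parts (b) and (c) track the paper's own argument closely: the paper also bounds $\Koptk(\bhatTheta)$ through the explicit Riccati formula with $\bR \succeq \eye$ killing the inverse, and its proof of (c) is exactly your weighted-Lyapunov decay $(\bAk + \step\bBk\Koptk(\bhatTheta))^\top \Pcek[k+1](\bTheta;\bhatTheta)(\bAk+\step\bBk\Koptk(\bhatTheta)) \preceq (1-\step\gamma)\Pcek(\bTheta;\bhatTheta)$ combined with the $\step$-independent lower bound $\lambda_{\min}(\Pcek) \ge \min\{1/2\kappa,1\}$ obtained from $\|\bAk - \eye\|\le\step\kapA$ (its Lemma on basic Lyapunov theory, parts (b)--(c)). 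Your observation that the natural prefactor is $\kappa$ times $\max_k\|\Pcek(\bTheta;\bhatTheta)\|$ rather than the bare $2\kappa$ in the statement is apt --- the paper's own invocation of that lemma silently drops the $\|\cdot\|_{\max,\mathrm{op}}$ factor. Where you genuinely diverge is part (a). You split the error as (same gains, perturbed dynamics) $+$ (Riccati perturbation $\Poptk(\bhatTheta)$ vs.\ $\Poptk(\bTheta)$). The paper instead keeps the dynamics fixed at $\bTheta$ throughout and perturbs only the gains: it interpolates $\bKk(s) = (1-s)\Koptk(\bTheta) + s\Koptk(\bhatTheta)$, views $\Pfeedk(\bTheta;\bKk[1:K](s))$ as a curve from $\Poptk(\bTheta)$ to $\Pcek(\bTheta;\bhatTheta)$, and controls the gain increment $\|\Koptk(\bTheta)-\Koptk(\bhatTheta)\|$ by a separate Riccati-perturbation lemma proved along the straight line $(1-s)\bTheta + s\bhatTheta$. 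The paper's route buys a cleaner accounting (only one of the two objects ever involves $\bhatTheta$-dependent cost-to-go matrices); yours is the more classical certainty-equivalence decomposition and would also work, at the cost of needing norm control on $\Poptk(\bhatTheta)$ as an intermediate.

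The one step you should not wave through is your resolution of the circularity by ``solving a fixed-point constraint.'' The self-referential inequality here is not of the benign form $x \le a + \Delce\, x$ with a constant coefficient: the Riccati perturbation error scales like the \emph{cube} of the unknown cost-to-go norm (the paper's derivative bound is $\|\bPk[1:K+1]'(s)\|_{\max,\mathrm{op}} \lesssim (\DelA+\KA\KB\DelB)\|\bPk[1:K+1](s)\|_{\max,\mathrm{op}}^3$), so the inequality $x \le a + c\,x^3$ admits spurious large solutions and cannot simply be ``solved.'' One must rule out the solution jumping past the unstable branch, which requires a continuity argument along a path --- precisely what the paper's self-bounding ODE comparison lemma (its Corollary-3-style lemma, applied to the interpolations above) supplies. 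Your sketch names the right obstacle but the fix you propose is the assertion of the conclusion rather than a mechanism; to make it rigorous you would need to introduce the same homotopy-plus-comparison device, or an equivalent bootstrapping-on-a-connected-set argument.
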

    The proof of the \Cref{thm:main_pert} is outlined in  \Cref{sec:thm:main_pert_overview}, and the supporting lemmas are proved in the subsequent sections. We now use this guarantee to establish \Cref{prop:ce_bound}.

\subsection{Proof Overview of \Cref{thm:main_pert}}\label{sec:thm:main_pert_overview}

\paragraph{Step 0: Notation \& Interpolating segments.}
 To simplify notation, introduce the maximal operator norms, such that for an $H$-tuple of matrices $\bXk[1:H] = (\bXk[1],\dots,\bX[H])$,
\begin{align*}
\|\bXk[1:H]\|_{\maxop} := \max_{k \in [H]}\|\bXk\|_{\op}.
\end{align*}
Let us a consider the line segment joining these the parameters
\begin{align}
\bTheta(s) =  (\bAk[1:K](s),\bBk[1:K](s)) =  (1-s)\bstTheta + s\bhatTheta \label{eq:interpolator}
\end{align}
For fixed cost matrices $\bQ$ and $\bR$, let $\bPk[1:K+1](s)$ and $\bPk[1:K](s)$ denote the solution to the finite-time Riccati recursion with parameters $\bTheta(s)$, where here $\bQ$ also serves as a terminal cost at step $K+1$. We let $\bstPk[1:K+1],\bstKk[1:K]$ be the solution for the truth $\bstTheta$ and $\bhatP_{1:H},\bhatK_{1:H}$ the solution to the Riccati equation with $\bhatTheta$; i.e. the certainty equivalent solution. By construction,
\begin{align*}
(\bPk[1:K+1](0), \bKk[1:K](0)) = (\bstPk[1:K+1],\bstKk[1:K]), \quad (\bPk[1:K+1](1), \bKk[1:K](1)) = (\bhatPk[1:K+1],\bhatKk[1:K+1])
\end{align*}
For all quantity $\bX(s)$ paramterized by $s \in [0,1]$, adopt the shorthand $\bX'(s) := \dds \bX(s)$. 

\paragraph{Step 1. Self-Bounding ODE Method.}  We use an interpolation argument to study the certainty equivalence controller. Our main tool is the following interpolation bound, which states that if the norm of the $s$-derivative of a quantity is bounded by the norm of the quantity its self, then that quantity is uniformly bounded on a small enough range.
\begin{lemma}[Self-Bounding ODE Method, variant of Corollary 3 in \cite{simchowitz2020naive}]\label{lem:cor_polynomial_comparison} Fix dimensions $d_1,d_2 \ge 1$, let $\cV \subset \R^{d_1}$, let $f: \cV \to \R^{d_2}$ be a $\cC^1$ map and let $\bv(s):[0,1] \to \cV$ be a $\cC^1$ curve defined on $[0,1]$. Finally,  let $\|\cdot\|$ be an arbitrary norm on $\R^{d_2}$  and suppose that $c > 0$ and $p \ge 1$ satisfy
\begin{align}
\|\dds f(\bv(s))\| \le c\max\{\|f(\bv(s))\|, \|f(\bv(0))\|\}^p \quad \forall s \in [0,1]. \label{eq:self_bounding_condition}
\end{align}
Then, if $p > 1$ and if $\alpha = c(p-1)\|f(\bv(0))\|^{p-1}$ satisfies $\alpha < 1$, the following bound holds for all $s \in [0,1]$:
\begin{align*}
\|f(\bv(s)\| \le (1-\alpha)^{-\frac{1}{p-1}}\|f(\bv(0))\|, \quad \|\dds f(\bv(s)\| \le c(1-\alpha)^{-\frac{p}{p-1}}\|f(\bv(0))\|^p
\end{align*}
\end{lemma}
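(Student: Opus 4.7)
The plan is to reduce the vector-valued inequality to a scalar comparison argument. Define the scalar functions $g(s) := \|f(\bv(s))\|$ and $h(s) := \max\{g(s), g(0)\}$. Since $f \circ \bv$ is $\cC^1$, the triangle inequality (reverse form) yields $|g(s+\eps) - g(s)| \le \|f(\bv(s+\eps)) - f(\bv(s))\|$, so $g$ is locally Lipschitz and, by Rademacher's theorem, differentiable almost everywhere with $|g'(s)| \le \|\tfrac{\rmd}{\rmd s} f(\bv(s))\|$ at every point of differentiability. Consequently $h$ is locally Lipschitz as well, and at a.e.\ $s$ one has either $g(s) < g(0)$ (in which case $h'(s) = 0$) or $g(s) \ge g(0)$ (in which case $h'(s) = g'(s)$); in both cases the hypothesis \eqref{eq:self_bounding_condition} gives $h'(s) \le c\, h(s)^p$.

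Next I would compare $h$ to the solution of the scalar ODE $H'(s) = c H(s)^p$ with $H(0) = g(0)$. This separable ODE integrates to
\begin{align*}
H(s) = g(0)\bigl(1 - c(p-1)\, g(0)^{p-1} s\bigr)^{-\frac{1}{p-1}},
\end{align*}
which remains finite on $[0,1]$ precisely when $\alpha := c(p-1)\|f(\bv(0))\|^{p-1} < 1$. A standard Gronwall-type comparison for locally Lipschitz functions (differential inequality vs. ODE with matched initial data, on an interval where the dominating ODE stays bounded) then yields $h(s) \le H(s)$ for all $s \in [0,1]$. Evaluating at $s=1$ gives $\|f(\bv(s))\| \le h(s) \le H(1) = (1-\alpha)^{-\frac{1}{p-1}}\|f(\bv(0))\|$, which is the first claimed bound.

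Finally, the derivative bound drops out immediately: substituting the just-proved bound back into \eqref{eq:self_bounding_condition} gives
\begin{align*}
\Bigl\|\tfrac{\rmd}{\rmd s} f(\bv(s))\Bigr\| \le c\, h(s)^p \le c\, H(1)^p = c\,(1-\alpha)^{-\frac{p}{p-1}} \|f(\bv(0))\|^p.
\end{align*}

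The only non-routine step is the scalar comparison inequality $h(s) \le H(s)$, because $h$ is not globally differentiable and $H$ blows up outside the admissible range. I would handle this by a bootstrap argument: let $S := \sup\{s \in [0,1] : h(s') \le H(s') \text{ for all } s' \le s\}$; on $[0,S]$ absolute continuity of $h$ together with $h' \le c h^p \le c H^p = H'$ a.e.\ gives $h(S) \le H(S)$, and openness of the strict inequality combined with continuity of both $h$ and $H$ on $[0,1]$ (guaranteed by $\alpha<1$) forces $S = 1$. Everything else is straightforward calculus.
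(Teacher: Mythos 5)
Your proposal is correct in substance and follows essentially the same route as the paper: both arguments reduce the vector inequality to a scalar comparison against the explicit solution $H(s) = \|f(\bv(0))\|\,(1 - c(p-1)\|f(\bv(0))\|^{p-1}s)^{-1/(p-1)}$ of $H' = cH^p$. The paper delegates the comparison step to a cited result (\citet[Theorem 13]{simchowitz2020naive}), which is stated with an $\eta$-perturbed majorant $\tilde g \ge g + \eta$ and perturbed initial data precisely so that the comparison can be closed cleanly before sending $\eta \to 0$; your version proves the comparison from scratch, and your device $h(s) = \max\{g(s), g(0)\}$ is a tidy way to absorb the $\max$ appearing in \Cref{eq:self_bounding_condition} (the paper handles this implicitly by only requiring the majorization for $z \ge \|f(\bv(0))\|$).

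One step of your bootstrap is stated too loosely: at $S := \sup\{s : h \le H \text{ on } [0,s]\}$ you may have $h(S) = H(S)$ with $S < 1$, and ``openness of the strict inequality'' then gives you nothing — continuity alone does not propagate a non-strict inequality past a point of equality. This is repairable by a standard one-line patch: either compare against the perturbed ODE $H_\eps' = cH_\eps^p + \eps$, $H_\eps(0) = \|f(\bv(0))\| + \eps$, obtain the strict inequality $h < H_\eps$ on all of $[0,1]$, and let $\eps \to 0$ (this is exactly the role of $\eta,\eta'$ in the paper's argument); or apply Gr\"onwall to $(h - H)_+$ on $[S, S+\delta]$ using the local Lipschitzness of $z \mapsto z^p$ on bounded intervals. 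With that patch the argument is complete, including the derivative bound, which follows by back-substitution as you indicate.
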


\paragraph{Step 2. Perturbation of $\bPk[1:K+1](t)$} First, we show that the Riccati-updates obey the structure of \Cref{lem:cor_polynomial_comparison}. 
\begin{lemma}\label{lem:main_self_bounding} Suppose (for simplicity) that $\lambda_{\min}(\bQ),\lambda_{\min}(\bR) \ge 1$. Then, for all $s \in [0,1]$
\begin{align*}
\|\bPk[1:K+1]'(s)\|_{\maxop} \le 2(\DelA + \KA\KB\DelB)\|\bPk[1:K+1](s)\|_{\maxop}^3,
\end{align*}
\end{lemma}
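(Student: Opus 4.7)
}

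The plan is to differentiate the Lyapunov form of the Riccati recursion in \Cref{eq:Poptk_lyapunov_form}, invoke the envelope theorem to kill the derivative of $\Koptk(s)$, and then unroll the resulting backward recursion using the closed-loop transition operators. The cubic factor of $\|\bPk[1:K+1](s)\|_{\maxop}$ will come from three places: one from bounding $\|\Koptk(s)\|$ in the derivative of $\bAk^{\mathrm{cl}}[k](s)$, one from carrying $\|\bPk[j+1]\|$ as a middle factor, and one from a Lyapunov-summability estimate $\sum_{j=k}^{K+1}\|\Phi_{j,k}v\|^{2}\le v^{\top}\bPk[k]v/\step$. Throughout, set $C := \|\bPk[1:K+1](s)\|_{\maxop}$ and note $C\ge1$ because $\bPk[K+1]=\bQ\succeq\eye$.

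First I would apply the envelope theorem to \Cref{eq:Poptk_lyapunov_form}: since $\Koptk(s)$ is the argmin of the quadratic in $K$ with the other parameters held fixed, the $s$-derivative of $\bPk[k](s)$ equals the partial $s$-derivative of the objective with $K$ frozen at $\Koptk(s)$. Writing $\bAk^{\mathrm{cl}}[k]:=\bAk+\step\bBk\Koptk[k]$, this produces
\[
\bPk[k]'(s) \;=\; E_k(s) \;+\; \bAk^{\mathrm{cl}}[k]^{\top}\,\bPk[k+1]'(s)\,\bAk^{\mathrm{cl}}[k],
\qquad E_k(s) := 2\,\mathsf{Sym}\!\bigl[(\bAk'+\step\bBk'\Koptk[k])^{\top}\bPk[k+1]\,\bAk^{\mathrm{cl}}[k]\bigr].
\]
Since $\bPk[K+1](s)=\bQ$ is independent of $s$, iterating the recursion gives the closed-form
\[
\bPk[k]'(s) \;=\; \sum_{j=k}^{K} \Phi_{j,k}^{\top}\,E_j(s)\,\Phi_{j,k},\qquad \Phi_{j,k}:=\bAk^{\mathrm{cl}}[j-1]\cdots\bAk^{\mathrm{cl}}[k],\ \Phi_{k,k}=\eye.
\]

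Next, I would estimate the building blocks of $E_j(s)$. From $\Koptk[j](s)=-(\bR+\step\bBk^{\top}\bPk[j+1]\bBk)^{-1}\bBk^{\top}\bPk[j+1]\bAk$ together with $\bR\succeq\eye$ and \Cref{asm:par_bounds}, one has $\|\Koptk[j](s)\|\le \KA\KB C$. Combining with \Cref{asm:par_diffs}, $\|\bAk'(s)\|\le\step\DelA$ and $\|\bBk'(s)\|\le\DelB$, so
\[
\|\bAk'(s)+\step\bBk'(s)\Koptk[j](s)\|\;\le\;\step(\DelA+\KA\KB\DelB\,C).
\]
To exploit the structure of $E_j$, I fix a unit vector $v$ and set $u_j:=\Phi_{j,k}v$, so $u_{j+1}=\bAk^{\mathrm{cl}}[j]u_j$. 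Then
\[
v^{\top}\bPk[k]'(s)v \;=\; 2\sum_{j=k}^{K}\bigl((\bAk'+\step\bBk'\Koptk[j])u_j\bigr)^{\top}\bPk[j+1]\,u_{j+1},
\]
which, by Cauchy--Schwarz and AM--GM, is at most $\step(\DelA+\KA\KB\DelB\,C)\,C\,\sum_{j=k}^{K}(\|u_j\|^{2}+\|u_{j+1}\|^{2})\le 2\step(\DelA+\KA\KB\DelB\,C)\,C\sum_{j=k}^{K+1}\|u_j\|^{2}$.

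The main obstacle is controlling $\sum_{j=k}^{K+1}\|u_j\|^{2}=\sum_{j}\|\Phi_{j,k}v\|^{2}$. Iterating the Lyapunov form \Cref{eq:Poptk_lyapunov_form} backward yields the PSD inequality
\[
\step\sum_{j=k}^{K}\Phi_{j,k}^{\top}\bQ\,\Phi_{j,k} \;+\; \Phi_{K+1,k}^{\top}\bQ\,\Phi_{K+1,k} \;\preceq\; \bPk[k](s),
\]
and because $\bQ\succeq\eye$ by \Cref{asm:cost_norm}, contracting with $v$ gives $\step\sum_{j=k}^{K+1}\|\Phi_{j,k}v\|^{2}\le v^{\top}\bPk[k](s)v\le C$. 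Plugging this back yields $|v^{\top}\bPk[k]'(s)v|\le 2C^{2}(\DelA+\KA\KB\DelB\,C)$, and since $v$ was arbitrary and $C\ge1$, the conclusion $\|\bPk[k]'(s)\|\le 2(\DelA+\KA\KB\DelB)C^{3}$ follows after absorbing one factor of $C$ into the second summand. Taking the $\maxop$-norm over $k$ finishes the proof. The delicate point is really the Lyapunov-summability step: everything else is envelope-theorem bookkeeping and Cauchy--Schwarz.
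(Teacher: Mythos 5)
Your proposal is correct and follows essentially the same route as the paper: the paper likewise unrolls $\bPk[k]'(s)=\sum_{j\ge k}\Phi_{j,k}^\top\,\mathsf{Sym}\bigl((\bAk[j]'+\step\bBk[j]'\Koptk[j])^\top\bPk[j+1]\bAk^{\mathrm{cl}}[j]\bigr)\Phi_{j,k}$ (obtaining the cancellation of the $\Koptk'$ terms by explicit differentiation rather than by citing the envelope theorem), bounds $\|\Koptk\|\le\KA\KB C$ the same way, and closes with the identical Lyapunov-summability estimate $\step\sum_j\Phi_{j,k}^\top\bQ\Phi_{j,k}\preceq\bPk[k]$. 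Your Cauchy--Schwarz/quadratic-form bookkeeping versus the paper's operator-norm bound on the source term is only a cosmetic difference, and both yield the same constant $2(\DelA+\KA\KB\DelB)C^3$.
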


Our next result gives uniform bounds on $\bPk[1:K+1]$ and its derivative by invoking \Cref{lem:cor_polynomial_comparison}.

\begin{lemma}\label{lem:bounds_on_P} Suppose (for simplicity) that $\lambda_{\min}(\bQ),\lambda_{\min}(\bR) \ge 1$, and that $(\DelA + \KA\KB\DelB) \le 1/8\|\Poptk[1:K+1](\bTheta)\|_{\maxop}^2$. Then, for all $s \in [0,1]$,
\begin{align*}
\|\bPk[1:K+1](s)\|_{\maxop} &\le 1.8\|\bstPk[1:K+1]\|_{\maxop}\\
\|\bPk[1:K+1]'(s)\|_{\maxop} &\le  12(\DelA + \KA\KB\DelB)\|\bstPk[1:K+1]\|_{\maxop}^3
\end{align*}
\end{lemma}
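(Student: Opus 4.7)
The plan is to recognize Lemma \ref{lem:bounds_on_P} as an immediate consequence of the self-bounding derivative estimate in Lemma \ref{lem:main_self_bounding} combined with the self-bounding ODE tool in Lemma \ref{lem:cor_polynomial_comparison}. I would cast $f(\bv(s)) := \bPk[1:K+1](s)$ (viewing the stacked sequence of matrices as an element of a finite-dimensional Euclidean space), equip the codomain with the norm $\|\cdot\|_{\maxop}$, and take $\bv(s) = \bTheta(s)$ to be the interpolant defined in \eqref{eq:interpolator}, so that $f(\bv(0)) = \bstPk[1:K+1]$ and $f(\bv(1)) = \bhatPk[1:K+1]$.

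With $c := 2(\DelA + \KA\KB\DelB)$ and $p := 3$, Lemma \ref{lem:main_self_bounding} gives
\begin{align*}
\|\tfrac{\rmd}{\rmd s}\, f(\bv(s))\|_{\maxop} \;\le\; c\,\|f(\bv(s))\|_{\maxop}^{3} \;\le\; c\,\max\{\|f(\bv(s))\|_{\maxop},\,\|f(\bv(0))\|_{\maxop}\}^{3},
\end{align*}
which is precisely the hypothesis \eqref{eq:self_bounding_condition}. Hence Lemma \ref{lem:cor_polynomial_comparison} applies, provided $\alpha := c(p-1)\|f(\bv(0))\|_{\maxop}^{p-1} = 4(\DelA + \KA\KB\DelB)\|\bstPk[1:K+1]\|_{\maxop}^{2} < 1$.

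The quantitative hypothesis of the present lemma, $(\DelA + \KA\KB\DelB) \le 1/(8\|\Poptk[1:K+1](\bTheta)\|_{\maxop}^{2})$, is exactly what guarantees $\alpha \le 1/2$. Plugging this into Lemma \ref{lem:cor_polynomial_comparison} then yields
\begin{align*}
\|\bPk[1:K+1](s)\|_{\maxop} \;\le\; (1-\alpha)^{-\frac{1}{p-1}}\|\bstPk[1:K+1]\|_{\maxop} \;\le\; \sqrt{2}\,\|\bstPk[1:K+1]\|_{\maxop} \;\le\; 1.8\,\|\bstPk[1:K+1]\|_{\maxop},
\end{align*}
and
\begin{align*}
\|\bPk[1:K+1]'(s)\|_{\maxop} \;\le\; c(1-\alpha)^{-\frac{p}{p-1}}\|\bstPk[1:K+1]\|_{\maxop}^{3} \;\le\; 2(\DelA + \KA\KB\DelB)\cdot 2^{3/2}\,\|\bstPk[1:K+1]\|_{\maxop}^{3},
\end{align*}
and $4\sqrt{2} \le 12$ gives the stated second inequality.

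The proof is essentially a bookkeeping exercise once the two ingredient lemmas are in place; the only step that requires care is verifying the arithmetic that $\alpha \le 1/2$ and the corresponding numerical simplifications $\sqrt{2} \le 1.8$ and $4\sqrt{2} \le 12$. I do not anticipate any genuine obstacles—all of the real work is already embedded in Lemma \ref{lem:main_self_bounding} (the cubic self-bound on the Riccati derivative) and in Lemma \ref{lem:cor_polynomial_comparison} (the Gronwall-type integration of self-bounding ODEs).
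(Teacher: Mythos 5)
Your proposal is correct and follows exactly the paper's route: invoke the cubic self-bound of Lemma \ref{lem:main_self_bounding} to verify hypothesis \eqref{eq:self_bounding_condition} with $c = 2(\DelA + \KA\KB\DelB)$ and $p=3$, then integrate via Lemma \ref{lem:cor_polynomial_comparison} and check the arithmetic. In fact your bookkeeping is slightly cleaner than the paper's (which drops the factor $(p-1)=2$ in computing $\alpha$ and hence works with $\alpha \le 1/4$ rather than the correct $\alpha \le 1/2$), but both versions land within the stated constants $1.8$ and $12$.
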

As the gains $\bPk[1:K](s)$ are explicit function of $\bPk[1:K+1](s)$, we optain the following perturbation bound for the gains. 
\begin{lemma}\label{lem:K_perturb}Under the assumptions of \Cref{lem:bounds_on_P}, the following holds:
\begin{align*}
\|\Koptk[1:K](\bTheta) - \Koptk[1:K](\bhatTheta)\|_{\maxop}  \le 20 C^3\KA^3\KB^2(1+\step C\KB)(\DelA+ \DelB) , \quad C := \|\Poptk[1:K+1](\bTheta)\|
\end{align*}
\end{lemma}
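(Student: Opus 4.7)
The plan is to apply the interpolation argument from Step 1 to the explicit formula for $\Koptk$ from \Cref{lem:Poptk_dp}. Along the curve $\bTheta(s) = (1-s)\bstTheta + s\bhatTheta$ from \Cref{eq:interpolator}, define
\[
  \bKk(s) := -\Lambda_k(s)^{-1}\,\bBk(s)^\top \bPk[k+1](s)\,\bAk(s), \qquad \Lambda_k(s) := \bR + \step\,\bBk(s)^\top \bPk[k+1](s)\,\bBk(s).
\]
Since $\Koptk(\bTheta) = \bKk(0)$ and $\Koptk(\bhatTheta) = \bKk(1)$, by the fundamental theorem of calculus it suffices to bound $\sup_{s \in [0,1]}\|\bKk'(s)\|_{\op}$ uniformly in $k \in [K]$.

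First I would collect the elementary norm bounds that feed into the calculation. By \Cref{asm:par_bounds} and convexity, $\|\bAk(s)\| \le \KA$ and $\|\bBk(s)\| \le \KB$ for every $s \in [0,1]$, while by \Cref{asm:par_diffs} we have $\|\bAk'(s)\| = \|\bhatAk - \bstAk\| \le \step\DelA$ and $\|\bBk'(s)\| = \|\bhatBk - \bstBk\| \le \DelB$. From \Cref{lem:bounds_on_P} we inherit the pointwise bounds $\|\bPk[k+1](s)\| \le 1.8\,C$ and $\|\bPk[k+1]'(s)\| \le 12(\DelA + \KA\KB\DelB)\,C^3$. Finally, because \Cref{asm:cost_norm} gives $\bR \succeq \eye$, we have $\Lambda_k(s) \succeq \eye$ and hence $\|\Lambda_k(s)^{-1}\|_{\op} \le 1$.

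Next I would differentiate and assemble. Writing $N_k(s) := \bBk(s)^\top \bPk[k+1](s)\,\bAk(s)$, the product rule gives
\[
  \bKk'(s) = -(\Lambda_k^{-1})'(s)\,N_k(s) - \Lambda_k(s)^{-1} N_k'(s),
\]
and the matrix-calculus identity $(A^{-1})' = -A^{-1}A'A^{-1}$ combined with $\|\Lambda_k^{-1}\| \le 1$ yields $\|(\Lambda_k^{-1})'(s)\|_{\op} \le \|\Lambda_k'(s)\|_{\op}$. Each of $\|\Lambda_k'(s)\|_{\op}$ and $\|N_k'(s)\|_{\op}$ further expands via the product rule into three pieces (derivative landing on $\bAk$, on $\bBk$, or on $\bPk[k+1]$), all controlled by the estimates in the previous paragraph. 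Plugging everything into the triangle inequality, using $\KA,\KB,C \ge 1$ to absorb lower-order factors, and noting that the only place $\step$ enters nontrivially is through the $\step\,\bBk^\top \bPk[k+1]\bBk$ summand of $\Lambda_k$, we arrive at
\[
  \|\bKk'(s)\|_{\op} \le 20\,C^3 \KA^3 \KB^2 (1 + \step C\KB)(\DelA + \DelB).
\]
Integrating in $s$ and taking the maximum over $k \in [K]$ yields the claimed bound on $\|\Koptk[1:K](\bTheta) - \Koptk[1:K](\bhatTheta)\|_{\maxop}$.

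The proof is essentially bookkeeping; there is no conceptual obstacle once the bounds from \Cref{lem:bounds_on_P} are in hand. The only mildly delicate point is keeping track of which terms carry an extra factor of $\step$ (those originating from the $\step\,\bBk^\top\bPk[k+1]\bBk$ contribution inside $\Lambda_k$, and from the bound $\|\bAk'(s)\| \le \step\DelA$) versus those that do not, so that the final answer factors cleanly as $(1+\step C\KB)(\DelA+\DelB)$ rather than as a sum of incomparable terms. Grouping the dominant contributions using $\KA,\KB,C \ge 1$ is what allows the overall constant to be consolidated as $20$.
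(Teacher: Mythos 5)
Your proposal is correct and follows essentially the same route as the paper: both differentiate the explicit Riccati-gain formula along the linear interpolation $\bTheta(s)$, invoke the bounds $\|\bPk[k+1](s)\|\le 1.8C$ and $\|\bPk[k+1]'(s)\|\le 12(\DelA+\KA\KB\DelB)C^3$ from \Cref{lem:bounds_on_P} together with $\bR\succeq\eye$ to control $\|\bKk'(s)\|$, and conclude via the fundamental theorem of calculus. The bookkeeping you describe (tracking which terms carry the extra $\step$ from the $\step\,\bBk^\top\bPk[k+1]\bBk$ summand, and absorbing constants using $\KA,\KB,C\ge 1$) matches the paper's computation line for line.
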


\subsubsection{Proof of \Cref{thm:main_pert} }
 \paragraph{Proof of part (a). }Consider the curve 
    \begin{align*}
    \bKk(s) = (1-s)\Koptk(\bstTheta) + s\Koptk(\bhatTheta).
    \end{align*}
    We then note that the curve  
    \begin{align*}
    \Pcek(s) = \Pfeedk(\bTheta;\bKk[1:K](s))
    \end{align*}
    satisfies $\Pcek(0) = \Pcek(\bstTheta;\bstTheta) = \Poptk(\bstTheta) = \bstPk$ and $\Pcek(0) = \Pfeedk(\bstTheta;\Koptk(\bhatTheta)) = \Pcek(\bstTheta;\bhatTheta)$

    By \Cref{defn:Pfeed}, we can write $\Pcek(s) = \bLamk(s)$, where $\bLamk$ solve the following Lyapunov equation 
    \begin{equation}\label{eq:lyap_ce_solve}
    \begin{aligned}
    &\Pcek[K+1](s) = \bQ, \quad \Pcek(s) = \bXk(s)^\top \Pcek[k+1](s) \bXk(s) + \step \bQ(s) + \bYk(s) \quad \text{where }\\
    &\bXk(s) := \bAk + \step \bBk \bKk(s)\quad \text{and} \quad \bYk(s) := \step\bK(s)^\top\bR\bK(s).
    \end{aligned}
    \end{equation}
    As $\bXk'(s) = \step \bBk\bKk'(s)$ and $\bYk'(s) = \Sym(\bK(s)^\top\bR\bK'(s)$), salient term from \Cref{prop:lyap_pert_max_not_factored} evaluates to  
    \begin{align*}
    &\Delta(s) = \max_{j \in [k]} \step^{-1}\left(2\|\bXk[j](s)'\| +  \|\Pcek[1:K+1](s)\|_{\maxop}^{-1}\|\bYk[j](s)'\|\right) \\
    &=\max_{j \in [k]}\step^{-1}\left(2\step\KB\|\bK'(s)\| +  \step\|\Pcek[1:K+1](s)\|_{\maxop}^{-1}\|\bKk(s)\|\|\bR\|\bKk'(s)\|\right) \\
    &=\max_{j \in [k]} \left(2\KB+  \|\Pcek[1:K+1](s)\|_{\maxop}^{-1}\|\bKk(s)\|\right) \|\bKk'(s)\|  \tag{$\bR = \eye$}\\
    \end{align*}
    We further bound
    \begin{align*}
    \|\bKk(s)\| &= \|(1-s)\Koptk(\bTheta) + s\Koptk(\bhatTheta)\|\\
    &\le  \|\Koptk(\bstTheta)\| \vee \|\Koptk(\bhatTheta)\|\\
    &\le  \|(\bR+\step\bBk^\top \Poptk(\bstTheta)\bBk)^{-1}(\bBk^\top \Poptk(\bstTheta) \bAk) \| \vee \|(\bR+\step\bhatBk^\top \Poptk(\bhatTheta)\bhatB)^{-1}(\bhatBk^\top \Poptk(\hat{\bTheta}) \bhatAk)\|\\
    &\le  \|\bBk^\top \Poptk(\bstTheta) \bAk \| \vee \|\bhatBk^\top \Poptk(\hat{\bTheta}) \bhatAk\| \tag{$\bR = \eye$}\\
    &\le  \KB\KA(\| \Poptk(\bstTheta) \| \vee \|\Poptk(\hat{\bTheta})\|)\\
    &\le  2\KB\KA\| \Poptk[1:K+1](\bstTheta) \|_{\maxop} \tag{\Cref{lem:K_perturb}}\\
    &=  2\KB\KA\| \Pcek[1:K+1](0) \|_{\maxop} \tag{definition of $\Pcek$}
    \end{align*}
    Thus,
    \begin{align*}
    \Delta(s) &\le\left(2\KB+  2\KB\KA\|\Pcek[1:K+1](s)\|_{\maxop}^{-1}\| \Pcek[1:K+1](0) \|_{\maxop} \right)  \max_{j \in [k]}\|\bKk'(s)\|  \tag{$\bR = \eye$}\\
    &\le 4\KB\KA\left(1 \vee \|\Pcek[1:K+1](s)\|_{\maxop}^{-1}\| \Pcek[1:K+1](0) \|_{\maxop} \right)  \max_{j \in [k]}\|\bKk'(s)\|  \tag{$\KA \ge 1$}.
    \end{align*}
    Hence, setting $\DelK := \sup_{s \in [0,1]}\max_{j \in [k]}\|\bKk'(s)\|$, \Cref{prop:lyap_pert_max_not_factored} implies
    \begin{align*}
     \Pcek[k](s)' &\le \|\Pcek[1:K+1](s)\|_{\maxop}^2\Delta(s)\\
     &\le 4\KB\KA \|\Pcek[1:K+1](s)\|_{\maxop}^2 \left(1 \vee \|\Pcek[1:K+1](s)\|_{\maxop}^{-1}\| \Pcek[1:K+1](0) \|_{\maxop} \right)  \max_{j \in [k]}\DelK\\
     &\le 4\KB\KA  \left(\|\Pcek[1:K+1](s)\|_{\maxop}^2 \vee \| \Pcek[1:K+1](0) \|_{\maxop}^2 \right)  \DelK
    \end{align*}
    Hence, \Cref{lem:cor_polynomial_comparison} implies that as long as $4\KB\KA  \DelK\| \Pcek[1:K+1](0) \|_{\maxop} < 1$, we have 
    \begin{align*}
     \sup_{s \in [0,1]}\| \Pcek[1:K+1](s) \|_{\maxop} \le (1 - 4\KB\KA  \DelK\| \Pcek[1:K+1](0) \|_{\maxop})^{-1}\| \Pcek[1:K+1](0) \|_{\maxop}
    \end{align*}
    Using $\Pcek[1:K+1](0) = \Poptk[1:K+1](\bstTheta)$, $\Pcek[1:K+1](1) = \Pcek[1:K+1](\bstTheta;\bhatTheta)$, and defining the shorthand
    \begin{align*}
     C := \|\Poptk[1:K+1](\bstTheta)\|,
    \end{align*}
     we conclude that for any upper bound $\Delta \ge 4\KB\KA  \DelK C$ satisfying $\Delta < 1$, 
    \begin{align*}
     \|\Pcek[1:K+1](\bstTheta;\bhatTheta) \|_{\maxop} \le (1 - \Delta)^{-1}\| \Poptk[1:K+1](\bstTheta) \|_{\maxop}.
    \end{align*}
    By \Cref{lem:K_perturb}, it holds thats if $8\|\Poptk[1:K+1](\bstTheta)\|_{\maxop}^2(\DelA + \KA\KB\DelB) < 1$, we can bound. we can take $\DelK \le 20 C^3\KA^3\KB^2(1+\step C\KB)(\DelA+ \DelB)$. Hence, we can bound
    \begin{align*}
    4\KB\KA  \DelK C \le 80 C^4\KA^4\KB^3(1+\step C\KB)(\DelA+ \DelB) := \Delce,
    \end{align*}
    which concludes the proof of part (a).

    \paragraph{Proof of part (b).}
    We bound
    \begin{align*}
    \|\Koptk[1:K](\bhatTheta)\|_{\maxop} &\le (\|\Koptk[1:K](\bstTheta)\|_{\maxop} + 1/4\KB) \tag{\Cref{lem:K_perturb}, definition of $\Delce$, and using $\KB,\KA,C \ge 1$}\\
    &= (1/4\KB) +\|(\bR + \step \bBk^\top \Poptk(\bstTheta) \bBk)^{-1}\bBk^\top \Poptk(\bstTheta) \bAk\|_{\maxop}  \tag{Definition of $\Koptk$, \Cref{defn:discrete_riccati}}\\
    &= \frac{1}{4\KB} \KB\KA\|\Poptk[1:K+1](\bstTheta)\|_{\maxop}  \tag{Definition of $\KA,\KB$ in \Cref{asm:par_bounds}, $\bR \succeq \eye$}\\
    &= \frac{1}{4\KB} +  \KB \KA C  \tag{Definition of $C$}\\
    &\le \frac{5}{4}\KB\KA C\tag{$C,\KA,\KB \ge 1$}.
    \end{align*}

    \paragraph{Proof of part (c).} We aim to bound the square of the operator norm of the following term
    \begin{align*}
    \Phice_{j,k} := (\bAk[j-1] + \step \bBk[j-1]\Koptk[j-1](\bhatTheta))\cdot (\bAk[j-2] + \step \bBk[j-2]\Koptk[j-2](\bhatTheta)) \cdot \dots \cdot (\bAk[k] + \step \bBk[k]\Koptk[k](\bhatTheta)).
    \end{align*}
    Using the fact that $\Pcek(\bTheta;\bhatTheta)$ solves the Lyapunov equation \Cref{eq:lyap_ce_solve}, it follows from \Cref{lem:Lyap_lb_disc} that if 
    \begin{align*}
    \kappa_0 &:= \step^{-1}\max_k\|\eye - (\bAk + \step \bBk \bKk(1))\|_{\op} = \step^{-1}\max_k\|\eye - (\bAk + \step \bBk \Koptk(\bhatTheta)\|_{\op} \le 1/2\step,
    \end{align*}
    then
    \begin{align}
    \|\Phice_{j,k}\|^2 \le \max\{1,2\kappa_0\}(1 - \step \gamma_0)^{j-k}, \quad \gamma_0 := \frac{1}{\|\Pcek[1:K+1](\bTheta;\bhatTheta) \|_{\maxop}}  \label{eq:Phidisc_bound}
    \end{align}

    From part (a), we can lower bound $\gamma_0 \ge \gamma := \frac{1-\Delce}{C}$. Moreover, we can bound $\kappa_0$
    \begin{align*}
    \kappa_0 &\le  \step^{-1}\max_{k}\|\eye - \bAk\| + \max_{k}\|\bBk\|\|\Koptk(\bhatTheta)\|\\
    &\le  \kappa_{A}+ \KB\|\Koptk[1:K](\bhatTheta)\|_{\maxop} \tag{\Cref{asm:par_bounds,asm:small_a_diff}}\\
    &\le \kappa_{A}+ \frac{5}{4}\KB^2\KA C := \kappa \tag{\Cref{thm:main_pert}(b)}
    \end{align*}
    As $\kappa \ge 1$, we conclude via \Cref{eq:Phidisc_bound} that
    \begin{align*}
    \|\Phice_{j,k}\|^2 \le 2\kappa(1 - \step \gamma)^{j-k}, \quad \kappa = \kappa_{A}+ \frac{5}{4}\KB^2\KA C , \quad \gamma = \frac{1-\Delce}{C}.
    \end{align*}

    \qed

\subsection{Proof of \Cref{lem:main_self_bounding}}
    To apply the self-bounding ODE method, we bound $\bP'_{1:H}(s)$ in terms of $\bPk[1:K+1](s)$.
    To prove \Cref{lem:main_self_bounding}
    Let us first introduce some notation. Further, for simplicity, we shall suppress $s$ in equations and let $(\cdot)\big{|}_{s}$ to denote evaluation at $s$. With this convention, define the matrices 
    \begin{align}
    \bXik(s) &:= (\bAk' + \step \bK_k  \bBk')^\top \bPk[k+1] (\bAk + \step\bBk \bK_k)
     \label{eq:Xik}
    \end{align}
    and define the operator
    \begin{align}
    \cT_{k+1}(\cdot\,;s) &:=  \{(\bAk+\bBk \bK_k)^\top(\cdot) (\bAk+\bBk \bK_k)\} \big{|}_{s}, \quad \text{with the convention } \cT_{k+1}(\cdot)\big{|}_{s} = \cT_{k+1}(\cdot;s),
    \end{align}
    Lastly, we define their compositions
    \begin{align*}
    \cT_{k+i,k} := \cT_{k}(\cdot) \circ \cT_{k+1}(\cdot) \circ \dots \circ \cT_{k+i}(\cdot) \big{|}_{s}, 
    \end{align*}
    with the convention $\cT_{k;k}$ is the identity map. These operators give an expression for the derivatives $\bP_{k-1}'(s)$.
    \begin{lemma}\label{lem:Pprime_comp} For all $s$, it holds that 
    \begin{align*}
    \bP_{k}'(s) = \sum_{j=k}^{K+1} \cT_{k;j}(\bXik + \bXik^\top) \big{|}_{s}
    \end{align*}
    \end{lemma}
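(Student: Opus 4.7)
The plan is to differentiate the Riccati identity
$\bP_k = (\bAk + \step\bBk\bK_k)^\top \bPk[k+1](\bAk + \step\bBk\bK_k) + \step(\bQ + \bK_k^\top\bR\bK_k)$
from \Cref{lem:Poptk_dp} in $s$, use the envelope principle to kill the terms containing $\bK_k'$, obtain a clean first-order recursion in $\bP_k'$, and unroll it.

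First I would apply the product rule to the right-hand side of the Riccati identity above. This produces three groups of terms. The ``internal'' term $(\bAk + \step\bBk\bK_k)^\top \bPk[k+1]'(\bAk + \step\bBk\bK_k)$ is exactly $\cT_{k+1}(\bPk[k+1]')$ by the definition of $\cT_{k+1}$. The ``external'' terms carrying $\bAk'$ and $\bBk'$ collect, after symmetrization, into $\bXik + \bXik^\top$, matching the definition of $\bXik$ in \eqref{eq:Xik}. The remaining ``gain-derivative'' terms assemble into the single symmetric expression $2\step\,\Sym\!\big((\bK_k')^\top [\bBk^\top\bPk[k+1](\bAk + \step\bBk\bK_k) + \bR\bK_k]\big)$, drawing a contribution from the quadratic penalty $\step\bK_k^\top\bR\bK_k$ as well as from the two occurrences of $\bK_k$ inside $(\bAk + \step\bBk\bK_k)$.

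Next I would check that the bracket appearing in the gain-derivative term vanishes identically, which is the crux of the argument. Multiplying the closed-form $\bK_k = -(\bR + \step\bBk^\top\bPk[k+1]\bBk)^{-1}\bBk^\top\bPk[k+1]\bAk$ from \Cref{lem:Poptk_dp} through by $(\bR + \step\bBk^\top\bPk[k+1]\bBk)$ and rearranging gives $\bBk^\top\bPk[k+1](\bAk + \step\bBk\bK_k) + \bR\bK_k = 0$, i.e.\ the stationarity condition for the one-step LQR minimization that defines $\bK_k$. This envelope cancellation is precisely what makes the $\bP'$-recursion closed in $(\bP,\bA',\bB')$ without coupling to $\bK'$, and it is what later enables the self-bounding structure used in \Cref{lem:main_self_bounding}.

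With this cancellation in place, the computation collapses to the recursion $\bPk'(s) = \cT_{k+1}(\bPk[k+1]'(s)) + \big(\bXik + \bXik^\top\big)\big|_{s}$, together with the terminal condition $\bPk[K+1]'(s) = 0$ (since $\bPk[K+1] = \bQ$ is $s$-independent). A downward induction on $k$ then unfolds this linear recursion to
$\bPk'(s) = \sum_{j=k}^{K+1} \cT_{k;j}\!\big(\bXik[j] + \bXik[j]^\top\big)\big|_{s}$,
where by the composition convention the $j=k$ term is the undecorated summand and the $j=K+1$ term is understood to vanish in accordance with $\bPk[K+1]' = 0$.

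The only real obstacle is the envelope-cancellation step: carefully bookkeeping every occurrence of $\bK_k$ in the Riccati right-hand side (two inside the outer quadratic form and one in the penalty $\bK_k^\top\bR\bK_k$) and recognizing that the combined $\bK_k'$-coefficient is exactly the stationarity residual of the LQR minimization. Once that algebraic identity is verified, the rest reduces to routine matrix calculus and an induction of length $K - k + 1$.
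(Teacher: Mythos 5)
Your proof is correct and lands on exactly the same one-step recursion $\bP_k' = \cT_{k+1}(\bPk[k+1]') + \bXik + \bXik^\top$ that the paper derives, followed by the same downward unrolling with terminal condition $\bPk[K+1]'=0$; the difference is purely in how that recursion is obtained. The paper differentiates the Schur-complement form of the Riccati update from \Cref{lem:Poptk_dp} and the cancellation of the gain-dependence emerges only after a fairly long substitution of $\bK_k = -(\bR+\step\bBk^\top\bPk[k+1]\bBk)^{-1}\bBk^\top\bPk[k+1]\bAk$ into several terms. You instead differentiate the completed-square form \Cref{eq:Poptk_lyapunov_form}, $\bP_k = (\bAk+\step\bBk\bK_k)^\top\bPk[k+1](\bAk+\step\bBk\bK_k)+\step(\bQ+\bK_k^\top\bR\bK_k)$, and observe that every $\bK_k'$ term is multiplied by the stationarity residual $\bBk^\top\bPk[k+1](\bAk+\step\bBk\bK_k)+\bR\bK_k$, which vanishes by the very identity defining $\bK_k$. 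This envelope-theorem organization is cleaner and makes the mechanism of the cancellation transparent, whereas the paper's route requires no appeal to optimality of $\bK_k$ beyond its closed form. Two cosmetic remarks: your gain-derivative term should carry coefficient $\step$ rather than $2\step$ under the convention $\Sym(X)=X+X^\top$ (immaterial, since the bracket is zero), and the $j=K+1$ summand needs the convention $\bXik[K+1]=0$, an indexing looseness already present in the paper's own statement.
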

    \begin{proof} Let $\Sym(\bX) = \bX + \bX^\top$.
    The Ricatti update (backwards in time) is
    \begin{align*}
    \bP_{k} &= \bAk^\top \bPk[k+1] \bAk - \step(\bAk^\top \bPk[k+1] \bBk)(\bR + \step \bBk^\top \bPk[k+1] \bBk)^{-1}(\bAk^\top \bPk[k+1] \bBk)^\top + \step\bQ
    \end{align*}
    Let $\mathsf{Sym}(\bX) := \bX + \bX^\top$. Then, we compute
    \begin{align*}
    &\bP_{k}(s)'\\
    &= \mathsf{Sym}\left((\bAk')^\top \bPk[k+1] \bAk \right)- \step\mathsf{Sym}\left((\bAk')^\top \bPk[k+1] \bBk + \bAk^\top\bPk[k+1](\bBk') )(\bR + \step\bB^\top \bPk[k+1] \bB)^{-1}(\bAk^\top \bPk[k+1] \bBk)^\top  \right)\\
    &\quad+ \step^2(\bAk^\top \bPk[k+1] \bBk) (\bR + \step\bBk^\top \bPk[k+1] \bBk)^{-1}(\mathsf{Sym}((\bBk')^\top \bPk[k+1] \bBk) )(\bR + \step\bBk^\top \bPk[k+1] \bBk)^{-1}(\bAk^\top \bPk[k+1] \bBk)^\top \\
    &\quad+ \bAk^\top (\bPk[k+1]') \bAk - \step\mathsf{Sym}\left((\bAk^\top (\bPk[k+1]') \bBk)(\bR + \step\bBk^\top \bPk[k+1] \bBk)^{-1}(\bAk^\top \bPk[k+1] \bBk)^\top\right)\\
    &\quad + \step^2(\bAk^\top \bPk[k+1] \bBk)(\bR + \bBk^\top \bPk[k+1] \bBk)^{-1}(\bBk^\top \bPk[k+1]' \bBk^\top) (\bR + \bBk^\top \bPk[k+1] \bBk)^{-1}(\bAk^\top \bPk[k+1] \bBk)^\top \\
    &= \mathsf{Sym}\left( (\bAk')^\top \bPk[k+1] \bAk + \step((\bAk')^\top \bPk[k+1] \bBk + \bAk^\top\bPk[k+1](\bBk') )\bK_k   + \step^2\bK_k^\top (((\bBk')^\top \bPk[k+1] \bBk) )\bK_k\right) \\
    &\quad+ \bAk^\top (\bPk[k+1]') \bAk + \step \mathsf{Sym}\left((\bAk^\top (\bPk[k+1]') \bBk)\bK_k\right) +\step^2(\bBk\bK_k)
    (\bPk[k+1]')(\bBk\bK_k)\\
    &= \mathsf{Sym}\left( (\bAk')^\top \bPk[k+1] \bAk + \step((\bAk')^\top \bPk[k+1] \bBk + \bAk^\top\bPk[k+1](\bBk') )\bK_k   + \step^2\bK_k^\top ((\bBk')^\top \bPk[k+1] \bBk) )\bK_k\right) \\
    &\quad+ (\bAk + \step\bBk \bK_k)^\top (\bPk[k+1]') (\bAk + \step\bBk \bK_k)^\top
    \end{align*}
    where above we the fact that $\bK_k = -(\bR + \step\bBk^\top \bPk[k+1] \bBk)^{-1}(\bAk^\top \bPk[k+1] \bBk)^\top$.  Noting that
    \begin{align*}
    &\mathsf{Sym}\left( (\bAk')^\top \bPk[k+1] \bAk + \step((\bAk')^\top \bPk[k+1] \bBk + \bAk^\top\bPk[k+1](\bBk') )\bK_k   + \step^2\bK_k^\top ((\bBk')^\top \bPk[k+1] \bBk) )\bK_k\right)\\
    &\overset{(i)}{=}\mathsf{Sym}\left( (\bAk')^\top \bPk[k+1] (\bAk + \step\bBk \bK_k) + \step\bAk^\top\bPk[k+1](\bBk') \bK_k   + \step^2\bK_k^\top ((\bBk')^\top \bPk[k+1] \bBk) )\bK_k\right)\\
    &\overset{(ii)}{=}\mathsf{Sym}\left( (\bAk')^\top \bPk[k+1] (\bAk + \step\bBk \bK_k) + \step\bK_k^\top (\bBk') \bPk[k+1] \bAk   + \step^2\bK_k^\top ((\bBk')^\top \bPk[k+1] \bBk) )\bK_k\right) \\
    &=\mathsf{Sym}\left( (\bAk')^\top \bPk[k+1] (\bAk + \step\bBk \bK_k) + \step\bK_k^\top (\bBk') \bPk[k+1] (\bAk  + \step \bBk\bK_k)\right) \\
    &=\mathsf{Sym}\left( (\bAk' + \step \bK_k  \bBk')^\top \bPk[k+1] (\bAk + \step\bBk \bK_k)\right)  := \Sym(\bXik)
    \end{align*}
    Therefore, we have
    \begin{align*}
     \bP_{k}' =  \cT_{k+1}(\bPk[k+1]')+ \mathsf{Sym}(\bXik)
    \end{align*}
    Thus, unfolding the recursion, we have
    \begin{align*}
    \bP_{k}'  &= \cT_{k+1}(\bPk[k+1]') + \mathsf{Sym}(\bXik)\\
    &= \cT_{k+1}(\cT_{k+2}(\bPk[k+1]') + \mathsf{Sym}(\bXik[k+1])) + \mathsf{Sym}(\bXik)\\
    &= \cT_{k+1}(\cT_{k+2}(\bPk[k+1]')) +  \cT_{k+1}(\mathsf{Sym}(\bXik[k+1])) + \mathsf{Sym}(\bXik)\\
    &= \dots \\
    &=   \sum_{j=k}^{K+1} \cT_{k;j}(\mathsf{Sym}(\bXik[k+j])).
    \end{align*}
    \end{proof}   
    
    Using this fact, a standard Lyapunov argument gives a generic upper bound on sums of these operators. 
    \begin{lemma}\label{lem:operator_sum_bound} The operators $\cT_{j,k}(\cdot;s)$ are matrix monotone. Hence, if $\bXk[1:K]$ are any sequence of $\R^{n \times n}$ matrices, 
    \begin{align*}
    \|\sum_{j=k}^{K+1} \cT_{j,k}(\bXk[j]+\bXk[j]^\top;s)\|_{\op} \le \frac{2\|\bPk\|_{\op}\max_{j \ge k}\|\bXk\|_{\op}}{\step} \big{|}_{s}
    \end{align*}
    Consequently, by \Cref{lem:Pprime_comp},
    \begin{align}
    \|\bPk'(s)\|_{\op} \le \frac{2\|\bPk\|_{\op}\max_{j \ge k}\|\bXik\|_{\op}}{\step} \big{|}_{s}. \label{eq:Pprime}
    \end{align}
    \end{lemma}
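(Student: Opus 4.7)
The plan is to first establish the matrix monotonicity of $\cT_{j,k}$, then use a sandwich argument to reduce the operator-norm bound on the signed sum to an operator-norm bound on the corresponding sum applied to the identity, and finally to extract that identity bound from the Lyapunov fixed-point identity satisfied by $\bPk$ itself. Matrix monotonicity is almost immediate: each factor $\cT_{k+1}(\cdot) = M_k^\top (\cdot) M_k$ (with $M_k = \bAk + \step \bBk \bK_k$, i.e.\ the closed-loop transition) is a completely positive map, so if $X \preceq Y$ then $\cT_{k+1}(X) \preceq \cT_{k+1}(Y)$; compositions of matrix monotone maps are matrix monotone, giving the claim for $\cT_{j,k}$.

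For the main inequality, let $M := \max_{j \ge k}\|\bXk[j]\|_{\op}$. Then $-2M \eye \preceq \bXk[j] + \bXk[j]^\top \preceq 2M \eye$, and monotonicity of $\cT_{j,k}$ yields the two-sided sandwich
\begin{align*}
-2M \sum_{j=k}^{K+1} \cT_{j,k}(\eye) \;\preceq\; \sum_{j=k}^{K+1} \cT_{j,k}(\bXk[j]+\bXk[j]^\top) \;\preceq\; 2M \sum_{j=k}^{K+1} \cT_{j,k}(\eye),
\end{align*}
so the operator norm of the middle expression is at most $2M\,\|\sum_{j=k}^{K+1} \cT_{j,k}(\eye)\|_{\op}$. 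It therefore suffices to show that $\|\sum_{j=k}^{K+1} \cT_{j,k}(\eye)\|_{\op} \le \|\bPk\|_{\op}/\step$.

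This last bound comes from unfolding the Lyapunov form of the Riccati equation, namely $\bPk = \cT_{k+1}(\bPk[k+1]) + \step(\bQ + \bK_k^\top \bR \bK_k)$ with terminal condition $\bPk[K+1] = \bQ$. Iterating the recursion produces
\begin{align*}
\bPk \;=\; \sum_{j=k}^{K}\, \cT_{j,k}\!\big(\step(\bQ + \bK_{j}^\top \bR \bK_{j})\big) + \cT_{K+1,k}(\bQ),
\end{align*}
with the usual convention that the outermost composition is the identity at $j=k$. Under \Cref{asm:cost_norm}, $\bQ \succeq \eye$ and $\bK_j^\top \bR \bK_j \succeq 0$; assuming $\step \le 1$ (which is consistent with all step-size conditions used elsewhere) we have $\bQ \succeq \step \eye$, and each summand is $\succeq \step\,\cT_{j,k}(\eye)$. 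Summing and using matrix monotonicity once more gives $\bPk \succeq \step \sum_{j=k}^{K+1} \cT_{j,k}(\eye)$, and since $\cT_{j,k}(\eye) \succeq 0$ this PSD inequality upgrades to the operator-norm bound $\|\sum_{j=k}^{K+1} \cT_{j,k}(\eye)\|_{\op} \le \|\bPk\|_{\op}/\step$. Combined with the sandwich argument this yields the main inequality, and \Cref{eq:Pprime} then follows immediately by applying the main inequality with $\bXk \gets \bXik$ and invoking \Cref{lem:Pprime_comp}.

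The only genuine obstacle is bookkeeping: the composition index $\cT_{j,k}$ must line up consistently between the Lyapunov unfolding (which runs from $k$ forward to $K+1$) and the expression produced by \Cref{lem:Pprime_comp}; getting the boundary term $\cT_{K+1,k}(\bQ)$ to sit correctly in both sums, and verifying that the case $\step>1$ (if needed) can be absorbed into a trivially weaker bound, is the only nontrivial care required.
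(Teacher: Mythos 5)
Your proof is correct and follows essentially the same route as the paper: the paper's argument is exactly the sandwich $-2M\,\eye \preceq \bXk[j]+\bXk[j]^\top \preceq 2M\,\eye$ combined with the lower bound $\step\sum_j \cT_{j,k}(\eye) \preceq \bPk$ obtained by unfolding the Lyapunov form of the Riccati recursion, except that it packages the latter step as \Cref{lem:Lyap_lb_disc}(a) rather than re-deriving it inline. Your extra $\step \le 1$ assumption for the terminal term $\cT_{K+1,k}(\bQ)$ is harmless but avoidable if, as in \Cref{lem:Lyap_lb_disc}, one keeps the factor $\step$ outside the sum and simply drops the nonnegative terminal term from the lower bound.
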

    \begin{proof} 
    This is a direct consequence of rewriting $\bPk$ as in \Cref{eq:Poptk_lyapunov_form}, applying \Cref{lem:Lyap_lb_disc}(a) with $\bXk = \bAk + \bBk\bKk$, and upper bounding $\|\bXk[j]+\bXk[j]^\top\| \le \|\bXk[j]\|_{\op}$. 
    \end{proof}
    Finally, let us upper bound the norm of the matrices $\bXik$

    \begin{lemma}\label{lem:Xibound}
    \begin{align*}
    \step^{-1}\|\bXik(s)\| \le \left(\DelA +  \DelB  \KA\KB \right)\|\bPk[1:K+1]\|_{\maxop}^2.
    \end{align*}
    \end{lemma}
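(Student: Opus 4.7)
The plan is to bound $\|\bXik(s)\|$ by factoring it into two pieces of the form $M^\top \bPk[k+1]^{1/2}$ and $\bPk[k+1]^{1/2}N$, where the second piece is controlled by the Lyapunov identity \eqref{eq:Poptk_lyapunov_form}, and the first piece is controlled by direct operator-norm estimates using the definitions of $\bK_k$ and the interpolation $\bTheta(s)$.

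First, I will collect preliminary bounds along the segment $\bTheta(s) = (1-s)\bstTheta + s\bhatTheta$. By \Cref{asm:par_diffs}, $\|\bAk'(s)\| = \|\bhatAk-\bstAk\|\le\step\DelA$ and $\|\bBk'(s)\| = \|\bhatBk-\bstBk\|\le \DelB$. By convexity of the operator norm and \Cref{asm:par_bounds}, $\|\bAk(s)\|\le\KA$ and $\|\bBk(s)\|\le\KB$ for all $s\in[0,1]$. Using $\bR\succeq\eye$ (\Cref{asm:cost_norm}) and the explicit formula for $\bK_k$ from \Cref{lem:Poptk_dp}, I bound $\|\bK_k\|\le\|\bBk\|\|\bPk[k+1]\|\|\bAk\|\le\KA\KB\|\bPk[k+1]\|$. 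Finally, since $\bPk[K+1]=\bQ\succeq\eye$, the Ricatti recursion yields $\bPk\succeq\step\bQ+\step\eye$-ish structures, and in particular $\bPk\succeq\eye$, so $\|\bPk[1:K+1]\|_{\maxop}\ge1$.

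Next, I will use the Lyapunov form of the Ricatti update, \Cref{eq:Poptk_lyapunov_form}:
\begin{align*}
(\bAk+\step\bBk\bK_k)^\top\bPk[k+1](\bAk+\step\bBk\bK_k) = \bPk - \step(\bQ+\bK_k^\top\bR\bK_k) \preceq \bPk,
\end{align*}
which implies $\|\bPk[k+1]^{1/2}(\bAk+\step\bBk\bK_k)\|\le\|\bPk\|^{1/2}$. Combining the two ingredients via sub-multiplicativity,
\begin{align*}
\|\bXik\| &\le \|(\bAk'+\step\bK_k\bBk')^\top\bPk[k+1]^{1/2}\|\cdot\|\bPk[k+1]^{1/2}(\bAk+\step\bBk\bK_k)\| \\
&\le (\|\bAk'\|+\step\|\bK_k\|\|\bBk'\|)\|\bPk[k+1]\|^{1/2}\|\bPk\|^{1/2} \\
&\le (\step\DelA + \step\KA\KB\DelB\|\bPk[k+1]\|)\|\bPk[1:K+1]\|_{\maxop}.
\end{align*}
Finally, using $\|\bPk[1:K+1]\|_{\maxop}\ge1$ to absorb the $\DelA$ term into a quadratic bound yields
\begin{align*}
\step^{-1}\|\bXik\| \le \DelA\|\bPk[1:K+1]\|_{\maxop} + \KA\KB\DelB\|\bPk[1:K+1]\|_{\maxop}^2 \le (\DelA + \KA\KB\DelB)\|\bPk[1:K+1]\|_{\maxop}^2,
\end{align*}
as claimed.

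The argument is almost entirely routine; the only mild subtlety is recognizing the right factorization so that the Lyapunov identity produces the correct $\|\bPk\|^{1/2}$ factor rather than a cruder $\|\bPk[k+1]\|\cdot\|\bAk+\step\bBk\bK_k\|$ split, which would yield a cubic (rather than quadratic) dependence on $\|\bPk[1:K+1]\|_{\maxop}$. The $\sqrt{\bPk[k+1]}$-symmetrized split is what lets the closed-loop contraction soak up one factor of $\|\bPk\|^{1/2}$, leaving only one remaining factor from $\|\bPk[k+1]^{1/2}\|$ on the other side.
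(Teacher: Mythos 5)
Your proof is correct and follows essentially the same route as the paper's: the same split through $\bPk[k+1]^{1/2}$, the same use of the Lyapunov identity \eqref{eq:Poptk_lyapunov_form} to bound $\|\bPk[k+1]^{1/2}(\bAk+\step\bBk\bK_k)\|\le\|\bPk\|^{1/2}$, the same bound $\|\bK_k\|\le\KA\KB\|\bPk[k+1]\|$ via $\bR\succeq\eye$, and the same absorption of the linear term using $\|\bPk[1:K+1]\|_{\maxop}\ge 1$. The "subtlety" you flag about the symmetrized factorization is indeed exactly the step the paper relies on at its inequality $(i)$.
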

    \begin{proof}[Proof of \Cref{lem:Xibound}]  Recall
    \begin{align*}
    \bXik := (\bAk' + \step \bK_k  \bBk')^\top \bPk[k+1] (\bAk + \step\bBk \bK_k),
    \end{align*}
    \begin{align*}
    \|\bK_k\| &= \|(\bR + \step\bBk^\top \bPk[k+1] \bBk)^{-1}\bBk^\top\bPk[k+1] \bAk\|\\
    &= \lambda_{\min}(\bR)^{-1}\|\bBk^\top\bPk[k+1] \bAk\|\\
     &\le \lambda_{\min}(\bR)^{-1}\|\bBk\|\|\bAk\| \|\bPk[k+1]\|\\
     &\le \lambda_{\min}(\bR)^{-1}\KA\KB \|\bPk[k+1]\|\\
     &\le \KA\KB \|\bPk[k+1]\| \numberthis \label{eq:K_bound}
    \end{align*}
    Next,
    \begin{align*}
    &\step^{-1}\|(\bAk' + \step \bK_k  \bBk')^\top \bPk[k+1] (\bAk + \step\bBk \bK_k)\|\\
     &\le \step^{-1}\left(\|\bAk'\| + \step \|\bBk'\|\|\bK_k\|\right)\|\bPk[k+1]^{\half}\|\|\bPk[k+1]^{\half}(\bAk + \bBk\bKk)\| \\
    &\overset{(i)}{\le} \step^{-1}\left(\|\bAk'\| + \step \|\bBk'\|\|\bK_k\|\right)\|\bPk[k+1]^{\half}\|\|\bPk^{\half}\|\\
    &\le \step^{-1}\left(\|\bAk'\| + \step \|\bBk'\|  \KA\KB \|\bPk[k+1]\|\right)\|\bPk[k+1]^{\half}\|\|\bPk^{\half}\|\\ 
    &\le \step^{-1}\left(\|\bAk'\| + \step \|\bBk'\|  \KA\KB \|\bPk[1:K+1]\|_{\maxop}\right)\|\bPk[1:K+1]\|_{\maxop}\\ 
    &\le \step^{-1}\left(\|\bAk'\| + \step \|\bBk'\|  \KA\KB \right)\|\bPk[1:K+1]\|_{\maxop}^2 \tag{$\|\bPk[1:K+1]\|_{\maxop} \ge \|\bQ\| \ge 1$}\\ 
    &\le \left(\DelA +  \DelB  \KA\KB \right)\|\bPk[1:K+1]\|_{\maxop}^2,
    \end{align*}
    where in $(i)$ we use \Cref{eq:Poptk_lyapunov_form}, which under the present notation gives
    \begin{align*}
    \bPk = (\bAk + \bBk\bKk)\bPk[k+1](\bAk + \bBk\bKk) + \step \bQ,
    \end{align*}
    and since $\bPk \succeq \step \bQ$, 
    \begin{align*}
    \|\bPk[k+1]^{\half}(\bAk + \bBk\bKk)\|^2 &=\|(\bAk + \bBk\bKk)\bPk[k+1](\bAk + \bBk\bKk)\| =  \|\bPk - \step \bQ\| \le \|\bPk\|.
    \end{align*}

    \end{proof}

    \begin{proof}[Proof of \Cref{lem:main_self_bounding}] From \Cref{eq:Pprime} in \Cref{lem:operator_sum_bound}, followed by \Cref{lem:Xibound}, we have for $k \in [K]$ that
    \begin{align*}
    \|\bPk'(s)\|_{\op} &\le \step^{-1} 2\|\bP_{k}\|_{\op}\max_{j \ge k}\|\bXi_{k}\|_{\op} \big{|}_{s}\\
    &\le 2\left(\DelA +  \DelB  \KA\KB \right)\|\bPk[1:K+1](0)\|_{\maxop}^3. 
    \end{align*}


    \end{proof}


\subsection{Proof of \Cref{lem:bounds_on_P}}
Let us apply the \Cref{lem:main_self_bounding} with $\bv(s) = \bTheta(s) = (\bA_{1:K}(s),\bB_{1:K}(s))$ as in \Cref{eq:interpolator} and $f$ as the mapping from $(\bA_{1:K},\bB_{1:K}) \to \bPk[1:K+1]$. This map is algebraic and thus $\cC^1$, and $\bv(s)$ is also $\cC^1$ as it is linear. Finally, take $\|\cdot\|$ to be $\|\cdot\|_{\maxop}$, take $g(z) = cz^p$, where $p = 3$ and $c = 2(\DelA + \KA\KB\DelB)$. The corresponding $\alpha$ in \Cref{lem:main_self_bounding} is  $\alpha  = c(p-1)\|f(\bv(0))\|^{p-1} = 2(\DelA + \KA\KB\DelB)\|\bPk[1:K+1]\|^{2}$, then, if $\alpha \le 1/4$, i.e. $(\DelA + \KA\KB\DelB) \le /8\|\bPk[1:K+1](0)\|^{2}$,
 we have by \Cref{lem:main_self_bounding} that 

\begin{align*}
\|\bPk[1:K](s)\|_{\maxop} \le (1-\alpha)^{-\frac{1}{p-1}} \|\bstP_{1:K}\|_{\maxop} \le (4/3)^2\|\bstP_{1:K}\|_{\maxop} \le 1.8\|\bstP_{1:K}(0)\|_{\maxop}.
\end{align*}
and 
\begin{align*}
\|\bPk[1:K](s)'\|_{\maxop} &\le 2(4/3)^{6}(\DelA + \KA\KB\DelB)\|\bstP_{1:K}\|_{\maxop}^3 \le 12(\DelA + \KA\KB\DelB)\|\bstP_{1:K}(0)\|_{\maxop}^3
\end{align*}
\subsection{Perturbation on the gains (\Cref{lem:K_perturb})}

\begin{proof}
Observe that
\begin{align*}
\bK_k = -(\bR + \step\bBk^\top \bPk[k+1] \bBk)^{-1}\bBk^\top \bPk[k+1]\bAk
\end{align*}
Therefore, 
\begin{align}
\bK_k'  &=(\bR + \step\bBk^\top \bPk[k+1] \bBk)^{-1}\cdot(\bR + \step\bBk^\top \bPk[k+1] \bBk)' \cdot \underbrace{(\bR + \bBk^\top \bPk[k+1] \bBk)^{-1}\bBk^\top \bPk[k+1]\bAk}_{=\bK_k}\label{eq:Kfirst_line}\\
&\quad -(\bR + \step\bBk^\top \bPk[k+1] \bBk)^{-1}(\bBk^\top \bPk[k+1]\bAk)'.
 \label{eq:Ksecondline}
\end{align}
Introduce the constant $C := \|\bstPk[1:K+1]\|_{\maxop}$. 
Using $\bR \succeq \eye$, we have
\begin{align*}
\|\bK_k'\| &= \|(\bR + \step\bBk^\top \bPk[k+1] \bBk)'\|\|\bK\| + \|(\bBk^\top \bPk[k+1]\bAk)'\|\\
&= \step\|(\bBk^\top \bPk[k+1] \bBk)'\|\|\bK_k\| + \|\bBk^\top \bPk[k+1]\bAk)'\|\\
&\le \step(2\|\bBk\|\|\bPk[k+1]\|\|\bB'_k\| + \|\bBk\|^2\|\bPk[k+1]'\|)  \|\bK_k\| + \left(\|\bBk'\|\|\bAk\| + \|\bAk'\|\|\bBk\|\|\right)\|\bPk[k+1]\| + \|\bPk[k+1]'\|\|\bAk\| )\\
&\le \step(2\KB\DelB\|\bPk[k+1]\| + \KB^2\|\bPk[k+1]'\|)  \|\bK_k\| + \left(\DelB\KA + \step\DelA\KB\|\right)\|\bPk[k+1]\| + \|\bPk[k+1]'\|\KA\KB )\\
&\le \step(2\KB^2\KA\DelB\|\bPk[k+1]\|^2 + \KB^3\KA\|\bPk[k+1]'\|\|\bPk[k+1]\| + \DelA\KB\|\|\bPk[k+1]\|)  \\
&\qquad+ \left(\|\bPk[k+1]\|\DelB\KA +\|\bPk[k+1]'\|\KA\KB \right)\\
&\le \step(2\cdot 1.8^2 \KB^2\KA\DelB C^2 + 12(\DelA + \KA\KB\DelB)\KB^3\KA C^4 + 1.8\DelA\KB C)  \\
&\qquad+ \left(1.8 C\DelB\KA +12(\DelA + \KA\KB\DelB)\KA\KB C\right) \tag{\Cref{lem:bounds_on_P}}\\
&\le \step C^4(2\cdot 1.8^2\KB^2\KA\DelB + 12(\DelA + \KA\KB\DelB)\KB^3\KA + 1.8\DelA\KB)  \\
&\qquad+ C^3\left(1.8 \DelB\KA +12(\DelA + \KA\KB\DelB)\KA\KB \right)\\
&\le \step C^4\KB^3\KA^2(1.8^2 \cdot 2\DelB + 12(\DelA + \DelB) + 1.8\DelA)  \\
&\qquad+ C^3\KA^2\KB^2\left(1.8 \DelB +12(\DelA + \DelB) \right)\\
&\le  C^3\KA^2\KB^2(1+\step C\KB)(1.8^2 \cdot 2\DelB + 12(\DelA + \DelB) + 1.8\DelA) \\
&\le 20  C^3\KA^3\KB^2(1+\step C\KB)(\DelA+ \DelB).
\end{align*}
It follows from Taylors theorem that $\|\Koptk(\bTheta) - \Koptk(\bhatTheta)\| \le 20 C^3\KA^3\KB^2(1+\step C\KB)(\DelA+ \DelB) $. The result follows. 
\end{proof}

\subsection{Proof of \Cref{lem:cor_polynomial_comparison}}\label{sec:lem:cor_polynomial_comparison}
   
    \Cref{lem:cor_polynomial_comparison} is a special case of \citet[Corollary 3]{simchowitz2020naive}. To check this, we first establish the following special case of Theorem 13 in \cite{simchowitz2020naive}.
    \begin{lemma}[Comparison Lemma ]\label{lem:comparison} Fix dimensions $d_1,d_2 \ge 1$, let $\cV \subset \R^{d_1}$, let $f: \cV \to \R^{d_2}$ be a $\cC^1$ map and let $\bv(s):[0,1] \to \cV$ be a $\cC^1$ curve defined on $[0,1]$. Suppose that $g(\cdot): \R \to \R$ is non-negative and non-decreasing scalar function, and $\|\cdot\|$ be an arbitrary norm on $\R^{d_2}$ such that
    \begin{align}
    \|\dds f(\bv(s))\| \le g(\|f(\bv(s))\|) \label{eq:self_bounding_tuple}
    \end{align}
    Finally, let $\eta > 0$ and  $\tilde g: \R \to \R$ be a scalar function such that (a) for all $z \ge \|v(0)\|$, $\tilde{g}(z) \ge \eta + g(z)$ and (b) the following scalar ODE has a solution on $[0,1]$: 
    \begin{align*}
    z(0) = \|v(0)\| + \eta, \quad z'(s) = \tilde g(z(s))
    \end{align*}
    Then, it holds that
    \begin{align*}
    \forall s\in [0,1], \quad \|f(\bv(s))\| \le z(s).
    \end{align*}
    \end{lemma}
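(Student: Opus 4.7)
\textbf{Proof plan for \Cref{lem:comparison}.} The strategy is a standard comparison/Gr\"onwall-type argument, converting the vector-valued hypothesis \Cref{eq:self_bounding_tuple} into a scalar integral inequality for $\phi(s) := \|f(\bv(s))\|$ and comparing it against the driving scalar ODE for $z(\cdot)$. I would first observe that since $f$ and $\bv$ are $\cC^1$, the map $s\mapsto f(\bv(s))$ is $\cC^1$ and hence $\phi(\cdot)$ is locally Lipschitz (by the reverse triangle inequality for norms). Thus, for any $0\le s_1\le s_2\le 1$,
\[
\phi(s_2)-\phi(s_1)\le \|f(\bv(s_2))-f(\bv(s_1))\|\le \int_{s_1}^{s_2}\Big\|\tfrac{\rmd}{\rmd s}f(\bv(\tau))\Big\|\rmd \tau \le \int_{s_1}^{s_2} g(\phi(\tau))\rmd \tau,
\]
by hypothesis \Cref{eq:self_bounding_tuple}. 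Meanwhile, $z(\cdot)$ satisfies $z(s_2)-z(s_1)=\int_{s_1}^{s_2}\tilde g(z(\tau))\rmd \tau$.

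Next I would record two easy monotonicity facts. Since $z(0)=\|v(0)\|+\eta$ and $\tilde g(z)\ge \eta+g(z)\ge \eta>0$ whenever $z\ge \|v(0)\|$, the solution $z(\cdot)$ is strictly increasing on $[0,1]$, and in particular $z(s)\ge z(0)>\|v(0)\|$ for every $s\in[0,1]$. Consequently the lower bound $\tilde g(z(s))\ge \eta+g(z(s))$ is valid for all $s\in[0,1]$.

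Now suppose for contradiction that the conclusion fails, so the set $S:=\{s\in[0,1]:\phi(s)>z(s)\}$ is nonempty. By continuity of $\phi$ and $z$, together with $\phi(0)=\|v(0)\|<\|v(0)\|+\eta=z(0)$, the quantity $T:=\inf S$ satisfies $T>0$ and $\phi(T)=z(T)$. Subtracting the two integral (in)equalities above starting from $T$ yields, for every $s\in(T,1]$,
\[
z(s)-\phi(s)\ \ge\ \int_{T}^{s}\big[\tilde g(z(\tau))-g(\phi(\tau))\big]\rmd \tau\ \ge\ \int_{T}^{s}\big[\eta+g(z(\tau))-g(\phi(\tau))\big]\rmd \tau,
\]
using $z(\tau)\ge \|v(0)\|$ on the second step. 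I would then argue that the integrand is strictly positive for $\tau$ in a right-neighborhood of $T$: since $\phi$ and $z$ are continuous and agree at $T$, and $g$ is monotone (hence continuous from the right except on a countable set, which has measure zero), one has $g(\phi(\tau))-g(z(\tau))\to 0$ as $\tau\downarrow T$ along a.e.\ $\tau$, so the integrand exceeds $\eta/2$ on a set of positive measure in $(T,T+\epsilon]$ for some small $\epsilon>0$. This gives $z(s)-\phi(s)>0$ on $(T,T+\epsilon]$, contradicting $T=\inf S$.

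The main technical subtlety, and the place I expect to have to be most careful, is the lack of any assumed continuity of $g$ or $\tilde g$. Since both are scalar and monotone (the non-decreasing assumption for $g$, and for $\tilde g$ the fact that it dominates $g+\eta$ on the relevant range), their discontinuity sets are countable, which suffices to make the ``integrand is positive near $T$'' step go through almost everywhere, and integration only cares about a.e.\ values. If one wished to avoid even this mild measure-theoretic step, one could instead mollify $g$ by any continuous upper envelope $g_\veps\ge g$ with $g_\veps\to g$ pointwise and apply the argument above to $g_\veps$, then pass to the limit; this reduces the entire argument to the fully continuous case where the comparison principle is completely classical.
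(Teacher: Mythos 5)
Your proof is essentially correct, but it takes a genuinely different route from the paper: the paper does not prove this lemma from scratch at all — it verifies that $\cF(\bv,\bw)=f(\bv)-\bw$ is a ``valid implicit function'' and that $(\cF,\cV,g,\|\cdot\|,\bv(\cdot))$ forms a ``self-bounding tuple,'' and then invokes Theorem~13 of \cite{simchowitz2020naive} as a black box. Your argument is a self-contained Gr\"onwall-type comparison: pass to $\phi(s)=\|f(\bv(s))\|$, derive the integral inequality $\phi(s_2)-\phi(s_1)\le\int_{s_1}^{s_2}g(\phi)$, and rule out a first crossing of $z$. What your approach buys is independence from the external reference (and its implicit-function machinery, which is only needed there for infinite-horizon Riccati equations); what the paper's approach buys is brevity and reuse of an already-verified general statement.

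One step of yours deserves tightening. At the first crossing time $T=\inf S$ you integrate \emph{forward} from $T$ and must show the integrand $\eta+g(z(\tau))-g(\phi(\tau))$ is bounded below by $\eta/2$ just to the right of $T$; your justification via ``monotone functions are continuous off a countable set, hence the difference tends to $0$ a.e.''\ is not the right reason (a.e.\ smallness of a countable set says nothing about the values $\phi(\tau)$ actually visited). The claim is nonetheless true: writing $c=\phi(T)=z(T)$, local Lipschitzness of $\phi$ gives $g(\phi(\tau))\le g(c+L(\tau-T))$, while strict monotonicity of $z$ gives $z(\tau)>c$ and hence $g(z(\tau))\ge\lim_{x\downarrow c}g(x)$; since $g(c+L(\tau-T))\downarrow\lim_{x\downarrow c}g(x)$ as $\tau\downarrow T$, the difference $g(\phi(\tau))-g(z(\tau))$ has nonpositive $\limsup$, which is all you need. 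Cleaner still is to integrate \emph{backward} over $[T-h,T]$, where $\phi\le z$ is already known, so monotonicity of $g$ gives $g(\phi(\tau))\le g(z(\tau))$ directly and yields $z(T)-\phi(T)\ge\eta h>0$, an immediate contradiction. Finally, your fallback of replacing $g$ by a continuous upper envelope $g_\veps\ge g$ with $g_\veps\to g$ pointwise does not quite work: at a jump of $g$ any continuous majorant must exceed $g$ by the full jump height at that point, so pointwise convergence (and the hypothesis $\tilde g\ge\eta+g_\veps$) can fail there. I would drop that remark and keep the backward-in-time argument.
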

    \begin{proof} Theorem 13 in \cite{simchowitz2020naive} proves a more general result for implicit ODEs, such as those that arise in infinite-horizon Riccati equations. We do not need these complication here, so we specialize their result. Define the function $\cF: \R^{d_1} \times \R^{d_2} \to \R^{d_2}$ via $\cF(\bv,\bw) = f(\bv(s)) - \bw$. It is then clear that $\tilde \bw(s) = f(\tilde \bv(s))$ is the unique solution to $\cF(\tilde \bv(s),\tilde \bw(s)) = 0$ for any $\cC^1$ curve $\tilde \bv(z)$; since $f$ is $\cC^1$, any such solution $\tilde \bw$ is also $\cC^1$. Thus, $\cF$ is a ``valid implicit function'' on $\cV$ in the sense of \citet[Definition 3.2]{simchowitz2020naive} with . Moreover, by \Cref{eq:self_bounding_tuple}, $(\cF,\cV,g,\|\cdot\|,\bv(\cdot))$ form a self-bounding tuple in the sense of \citet[Definition 3.3]{simchowitz2020naive}. The result now follows from \citet[Theorem 13]{simchowitz2020naive}.
    \end{proof}

    \begin{proof}[Proof of \Cref{lem:cor_polynomial_comparison}] Take $g(z) = cz^p$. Define $h_{\eta} = c(z+\eta)^p$. For any $\eta > 0$, there exist an $\eta'$ such that $h_{\eta'}(z) \le g(z) + \eta$ for $z \ge z_0$. Moreover, as $\eta$ approaches $0$, we can take $\eta' \to 0$ as wel. Solving the ODE $z(0) = \|f(\bv(0))\| + \eta$ and $z'(s) = c(z+\eta)^p$, we see the solution is given by 
    \begin{align*}
    \frac{\rmd z}{(z+\eta')^p} = c\rmd s.
    \end{align*}
    As $z'(s) \ge 0$, it suffices to bound $z(1)$. For $p > 1$, the  solution to this ODE when it exists satisfies
    \begin{align*}
    \frac{1}{(p-1)(\|f(\bv(0))\| + \eta + \eta')^{p-1}} - \frac{1}{(p-1)(z(1) + \eta')^{p-1}} = c
    \end{align*}
    Rearranging and setting $\eta,\eta ' \to 0$ lets check that, as long as $\frac{1}{(p-1)\|f(\bv(0))\|^{p-1}} > c$,  \Cref{lem:comparison} yields
    \begin{align*}
    \max_{s \in [0,1]}\|f(\bv(s))\| \le \left(\frac{1}{\|f(\bv(0))\|^{p-1}} - (p-1)c\right)^{\frac{1}{-(p-1)}} = (1-\alpha)^{\frac{1}{-(p-1)}}\|f(\bv(0))\|. 
    \end{align*}
    For $p = 1$, we instead get 
     \begin{align*}
    \ln(z(1) + \eta') - \ln ( \|f(\bv(0))\| + \eta + \eta')  = c
    \end{align*}
    Again, taking $\eta',\eta \to 0$, \Cref{lem:comparison} yields
    \begin{align*}
     \max_{s \in [0,1]}\|f(\bv(s))\| \le \exp(c + \ln ( \|f(\bv(0))\|) = \|f(\bv(0))\|e^c.
    \end{align*}
    \end{proof}

\subsection{Perturbation bounds for Lyapunov Solutions}

\begin{lemma}[Basic Lypaunov Theory]\label{lem:Lyap_lb_disc} Let $\bXk[1:K]$ and $\bYk[1:K]$ be a sequence of matrices of appropriate dimension. Suppose that $\bYk \succeq 0$, and let $\bQ \succeq \eye$. Define $\bLamk$ as via the solution to the Lyapunov recursion
\begin{align*}
\bLamk[K+1] = \bQ, \quad \bLamk = \bXk^\top \bLamk[k+1] \bXk + \step \bQ + \bYk.
\end{align*}
and define the matrix $\Phidisc_{j+1,k} := (\bX_{j}\cdot\bX_{j-1}\dots \cdot \bX_{k+1}\cdot \bX_k)$, with the convention $\Phidisc_{k,k} = \eye$, let and define the operator $\cT_{j,k}(\cdot) = \Phidisc_{j,k}^\top(\cdot)\Phidisc_{j,k}$. Then
\begin{itemize}
    \item[(a)] For any symmetric matrices $\bZk[j]$, we have
    \begin{align*}
    \step\sum_{j=k}^{K}\cT_{k,j}(\bZk[j]) \preceq \max_{j = k}^{K}\|\bZk[j]\| \bLamk.
    \end{align*}
    \item[(b)] If, in addition, $\max_k\|\eye - \bXk\|_{\op} \le \kappa\step$ for some $\kappa \le 1/2\step$, $\lambda_{\min}(\bLamk)  \ge \min\{\frac{1}{2\kappa},1\}$
    \item[(c)] Under the condition in part (b), we have
    \begin{align*}
    \|\Phidisc_{j,k}\|^2 \le \max\{1,2\kappa\}\|\bLamk[1:K+1]\|_{\maxop}(1 - \step \gamma)^{j-k}, \quad \gamma := \frac{1}{\|\bLamk[1:K+1]\|_{\maxop}}.
    \end{align*}
    In particular, $\|\cT_{k,j}(\bZk[j])\| \le\max\{1,2\kappa\}\|\bLamk[1:K+1]\|_{\maxop} \|\bZk(j)\|$.
\end{itemize}
\end{lemma}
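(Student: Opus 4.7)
\textbf{Proof proposal for \Cref{lem:Lyap_lb_disc}.} The unifying tool will be the explicit unfolding of the Lyapunov recursion. Setting $\Phidisc_{j,k} = \bX_{j-1}\bX_{j-2}\cdots \bX_k$ for $j>k$, $\Phidisc_{k,k}=\eye$, and $\cT_{j,k}(M) = \Phidisc_{j,k}^\top M\,\Phidisc_{j,k}$, I iterate the recursion backward to obtain
\begin{equation*}
\bLamk \;=\; \cT_{K+1,k}(\bQ)\;+\;\sum_{j=k}^{K} \cT_{j,k}\!\bigl(\step \bQ + \bYk[j]\bigr).
\end{equation*}
All three parts will fall out of this identity combined with $\bQ\succeq \eye$, $\bYk\succeq 0$.

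For part (a), I use $\bZk[j]\preceq \|\bZk[j]\|\,\eye \preceq \|\bZk[j]\|\,\bQ$, which gives $\cT_{j,k}(\bZk[j]) \preceq \|\bZk[j]\|\,\cT_{j,k}(\bQ)$. Factoring out $\max_{j\ge k}\|\bZk[j]\|$ and comparing with the unfolded expression above (in which all other summands are PSD) yields $\step\sum_{j=k}^{K} \cT_{j,k}(\bZk[j]) \preceq \max_{j\ge k}\|\bZk[j]\|\,\bLamk$. This is essentially bookkeeping and should be straightforward.

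For part (b), I will fix a unit vector $v$ and lower bound
\[
v^\top \bLamk v \;\ge\; \step\sum_{j=k}^{K} \|\Phidisc_{j,k}v\|^2,
\]
again using $\bQ\succeq \eye$ and dropping non-negative terms. Let $\epsilon := \kappa\step\le \tfrac12$ and set $a_j := \|\Phidisc_{j,k}v\|$. The hypothesis $\|\eye - \bXk\|_{\op}\le\epsilon$ gives $|a_{j+1}-a_j| \le \epsilon\,a_j$, hence $a_j \ge (1-\epsilon)^{j-k}$, so $a_j^2 \ge c_0 > 0$ on the window $j-k\le \lfloor 1/(2\epsilon)\rfloor$ for an absolute constant $c_0$. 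Summing over that window gives $v^\top \bLamk v \gtrsim \step\cdot (1/\epsilon) = 1/\kappa$, and separately $v^\top \bLamk v \ge \step\bQ[k,k]$ gives an absolute lower bound of order one when the horizon is exhausted. Juggling the two regimes gives the stated $\min\{1/(2\kappa),1\}$ (one may need to chase constants slightly to hit the precise factor of $2$; the main obstacle here is simply making the constant chase clean).

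Part (c) is the standard Lyapunov exponential-stability argument. Let $C := \|\bLamk[1:K+1]\|_{\maxop}$. The recursion rearranges as $\bXk^\top \bLamk[k+1]\bXk = \bLamk - \step\bQ - \bYk \preceq \bLamk - \step\,\eye \preceq (1-\step/C)\,\bLamk$, using $\bLamk \preceq C\eye$ which lets us replace $\step\,\eye$ by $(\step/C)\bLamk$. Iterating, for any $v$,
\[
\Phidisc_{j,k}v^\top \bLamk[j]\,\Phidisc_{j,k}v \;\le\; (1-\step\gamma)^{j-k}\,v^\top \bLamk v \;\le\; C(1-\step\gamma)^{j-k}\|v\|^2,
\]
with $\gamma = 1/C$. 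Lower-bounding the left-hand side by $\lambda_{\min}(\bLamk[j])\,\|\Phidisc_{j,k}v\|^2$ and invoking part (b) to get $1/\lambda_{\min}(\bLamk[j])\le \max\{1,2\kappa\}$ produces $\|\Phidisc_{j,k}\|^2 \le \max\{1,2\kappa\}\,C\,(1-\step\gamma)^{j-k}$. The ``in particular'' statement on $\cT_{j,k}$ follows since $\|\cT_{j,k}(\bZ)\|\le \|\Phidisc_{j,k}\|^2\|\bZ\|$.

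The only nontrivial obstacle is part (b): the universal-constant chase has to yield exactly $\min\{1/(2\kappa),1\}$ rather than a weaker absolute constant times this quantity, and one must be careful that the choice of window $j-k\lesssim 1/\epsilon$ neither exceeds the horizon $K-k$ (handled by the second branch of the $\min$) nor loses the correct prefactor. Parts (a) and (c) are essentially automatic once (b) is in hand.
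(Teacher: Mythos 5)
Your parts (a) and (c) follow essentially the paper's own route: unfold the recursion, use $\bQ+\step^{-1}\bYk \succeq \eye$ together with matrix monotonicity of $\cT_{j,k}$ for (a); and for (c) rearrange the recursion as $\bXk[j]^\top\bLamk[j+1]\bXk[j] = \bLamk[j]-\step\bQ-\bYk[j] \preceq (1-\step\gamma)\bLamk[j]$, iterate, and divide by $\lambda_{\min}(\bLamk[j])$ using part (b). No issues there.

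Part (b) is where you diverge, and your sketch as written does not deliver the stated constant. Truncating to the window $j-k\le\lfloor 1/(2\epsilon)\rfloor$ and lower-bounding each $a_j^2 \ge (1-\epsilon)^{2(j-k)}$ by a uniform constant yields $v^\top\bLamk v \ge c_0/(2\kappa)$ with $c_0$ on the order of $e^{-2}$, i.e.\ a strictly weaker absolute constant times $\min\{1/(2\kappa),1\}$; since part (c) and the downstream stability bounds consume $1/\lambda_{\min}(\bLamk)\le\max\{1,2\kappa\}$, this loss propagates. There are two clean fixes. (i) Your own route closes if you do not truncate: keep the terminal term $\cT_{K+1,k}(\bQ)\succeq \Phidisc_{K+1,k}^\top\Phidisc_{K+1,k}$ and sum the full geometric series. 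With $\rho=(1-\epsilon)^2$ and $N=K+1-k$,
\begin{align*}
v^\top\bLamk v \;\ge\; \rho^N + \step\,\frac{1-\rho^N}{1-\rho} \;\ge\; \rho^N + \frac{1-\rho^N}{2\kappa} \;=\; \frac{1}{2\kappa}+\rho^N\Bigl(1-\frac{1}{2\kappa}\Bigr) \;\ge\; \min\Bigl\{\frac{1}{2\kappa},\,1\Bigr\},
\end{align*}
using $\step/(1-\rho)=1/(\kappa(2-\epsilon))\ge 1/(2\kappa)$; the two branches of the minimum correspond to the sign of $1-1/(2\kappa)$. (ii) The paper instead runs a one-line backward induction: $\lambda_{\min}(\bLamk)\ge\lambda_{\min}(\bLamk[k+1])\,\sigma_{\min}(\bXk)^2+\step\ge\lambda_{\min}(\bLamk[k+1])(1-2\kappa\step)+\step$, and the affine map $x\mapsto x(1-2\kappa\step)+\step$ has fixed point $1/(2\kappa)$ and, for $\kappa\step\le 1/2$, preserves the half-line $\{x\ge\min\{1/(2\kappa),1\}\}$, with base case $\lambda_{\min}(\bQ)\ge 1$. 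That argument gives the exact constant with no bookkeeping, and is what you should use if you want to avoid the constant chase entirely.
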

    \begin{proof} We begin with part (a). By unfolding the recusion, we get
    \begin{align*}
    \bLamk &= \bXk^\top \bLamk[k+1] \bXk + \step (\bQ + \step^{-1}\bYk) \\
    &= \left(\step\,\cT_{k,k}(\bQ + \step^{-1}\bYk) + \cT_{k+1,k}(\bLamk[k+1])\right)\\
    &= \step\sum_{j=k}^{K}\cT_{k,j}(\bQ + \step^{-1}\bYk) + \cT_{K+1,k}(\bLamk[K+1])\\
    &\succeq \step\sum_{j=k}^{K}\cT_{k,j}(\cI)
    \end{align*}
    where in the last line, we use $\bQ + \step^{-1}\bYk \succeq \bQ \succeq \eye$.
    AS $\cT_{k,j}(\cdot)$ is a matrix monotone operator, we have that symmetric matrix $\bZ$, we have
    \begin{align*}
    \step\sum_{j=k}^{K}\cT_{j,k}(\bZk[j]) \preceq \step\max_{j=k}^{K}\|\bZk[j]\|\sum_{j=k}^{K}\cT_{j,k}(\eye) \preceq \bLamk,
    \end{align*}
    and similarly for $-\step\sum_{j=k}^{K}\cT_{j,k}(\bZk[j])$.

    \textbf{Part (b).} We argue part (b) by induction backwards on $k$, noting that $k = K+1$ is immediate. We have
    \begin{align*}
    \bLamk &\succeq \bXk^\top \bLamk[k+1] \bXk + \step \bQ \succeq \bXk^\top \bLamk[k+1] \bXk + \step
    \end{align*}
    Hence,
    \begin{align*}
    \lambda_{\min}(\bLamk) &\ge \lambda_{\min}(\bLamk[k+1])\sigma_{\min}(\bXk)^2 + \step \\
    &\ge \lambda_{\min}(\bLamk[k+1])(1-\|\bXk-\eye\|)^2 + \step\\
    &\ge \lambda_{\min}(\bLamk[k+1])(1 -  \kappa\step)^2 + \step\\
    &\ge \lambda_{\min}(\bLamk[k+1])(1 - 2 \kappa\step) + \step
    \end{align*}
    Applying the inductive hypothesis, we see that the above is at least
    \begin{align*}
    \lambda_{\min}(\bLamk) \ge \frac{1}{2\kappa}(1 - 2 \kappa\step)+ \step \ge \frac{1}{2\kappa}, \text{ as needed.}
    \end{align*}
    
    \textbf{Part (c).} We have
    \begin{align*}
    \bXk[j]^\top \bLamk[j+1] \bXk[j] &= \bLamk[j] - \step (\bQ + \bYk) \preceq \bLamk[j](1 - \step \bLamk[j]^{-1/2}\bQ\bLamk[j]^{-1/2}) \preceq \bLamk[j](1 - \step \gamma),
    \end{align*}
    where we recall $\gamma = 1/ \|\bLamk[1:K+1]\|_{\maxop}$ and use $\bQ \succeq \eye$. By unfolding the bound, we find
    \begin{align*}
    \|\Phidisc_{j+1,k}^\top\bLamk[j+1]\Phidisc_{j+1,k}\| \le(1 - \step \gamma)^{j+1-k}\|\bLamk\|. 
    \end{align*} 
    Hence, 
    \begin{align*}
    \|\Phidisc_{j+1,k}\|^2 \le \lambda_{\min}(\bLamk[j+1])^{-1}\|\Phidisc_{j+1,k}^\top\bLamk[j+1]\Phidisc_{j+1,k}\| \le 2\|\bLamk[j+1]\|\kappa(1 - \step \gamma)^{j+1-k}. 
    \end{align*}
    Moreover, as $\kappa \ge 1$, the bound also applies to $\Phidisc_{j+1,j+1} = \eye$.
    \end{proof}

    \begin{lemma}[Formula for Lyapunov Curve Derivatives]\label{lem:Lyap_der_comp} Consider curves $\bXk[1:K](s),\bYk[1:K](s)$, let $\bQ \succeq I$, and define
    \begin{align*}
    \bLamk[K+1](s) = \bQ, \quad \bLamk(s) = \bXk(s)^\top \bLamk[k+1] \bXk(s) + \step \bQ + \bYk(s)
    \end{align*}
    Again, let $\Phidisc_{k,j} := (\bX_{j-1}\cdot\bX_{j-2}\dots \cdot \bX_{k+1}\cdot \bX_k)$, with the convention $\Phidisc_{k,k} = \eye$, define the operator $\cT_{k,j}(\cdot) = \Phidisc_{k,j}^\top(\cdot)\Phidisc_{j,j}$. Then,
    \begin{align*}
    \bLamk[k]' = \sum_{j=k}^K \cT_{k,j}(\bOmegak), \quad  \bOmegak := \Sym(\bXk^\top \bLamk[k+1] \bXk') + \bYk'.
    \end{align*}
    \end{lemma}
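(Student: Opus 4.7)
The plan is to derive a one-step recursion for $\bLamk'(s)$ by differentiating the Lyapunov recursion directly, and then unfold it from $k$ up to $K+1$, where the boundary contribution vanishes because $\bLamk[K+1](s) = \bQ$ is constant in $s$. Concretely, first I would apply the product rule to the definition
\[
\bLamk(s) \;=\; \bXk(s)^\top \bLamk[k+1](s)\, \bXk(s) + \step \bQ + \bYk(s),
\]
which gives
\[
\bLamk'(s) \;=\; \bXk(s)^\top\, \bLamk[k+1]'(s)\, \bXk(s) \;+\; \Sym\!\big(\bXk(s)^\top \bLamk[k+1](s)\, \bXk'(s)\big) \;+\; \bYk'(s).
\]
Recognizing that $\Phidisc_{k,k+1} = \bXk$ (by the definition $\Phidisc_{k,j} = \bX_{j-1}\cdots \bX_{k+1}\bX_k$), the first term is $\cT_{k,k+1}(\bLamk[k+1]'(s))$, and the remaining terms are exactly $\bOmegak(s)$, so we obtain the one-step recursion $\bLamk'(s) = \cT_{k,k+1}(\bLamk[k+1]'(s)) + \bOmegak(s)$.

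Next I would verify the semigroup identity $\cT_{k,k+1} \circ \cT_{k+1,j} = \cT_{k,j}$ for all $j \ge k+1$. This is immediate from the factorization $\Phidisc_{k,j} = \Phidisc_{k+1,j}\,\bXk$, which yields
\[
\Phidisc_{k,j}^\top M\, \Phidisc_{k,j} \;=\; \bXk^\top \big(\Phidisc_{k+1,j}^\top M\, \Phidisc_{k+1,j}\big)\bXk
\]
for any symmetric $M$. With this in hand, the one-step recursion can be unfolded backward:
\[
\bLamk'(s) \;=\; \bOmegak(s) + \cT_{k,k+1}\!\Big(\bOmegak[k+1](s) + \cT_{k+1,k+2}(\bLamk[k+2]'(s))\Big) \;=\; \cdots \;=\; \sum_{j=k}^{K}\cT_{k,j}(\bOmegak[j](s)) \;+\; \cT_{k,K+1}(\bLamk[K+1]'(s)),
\]
where each step uses the semigroup identity to collapse $\cT_{k,k+1}\circ \cT_{k+1,j}$ to $\cT_{k,j}$. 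Finally, since $\bLamk[K+1](s) = \bQ$ is independent of $s$ we have $\bLamk[K+1]'(s) = 0$, so the boundary term drops and we obtain $\bLamk'(s) = \sum_{j=k}^{K}\cT_{k,j}(\bOmegak[j](s))$, as claimed.

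This argument is essentially a careful bookkeeping exercise; no step presents a serious obstacle. The only subtlety is making sure the symmetrization in $\bOmegak$ is picked up correctly from the product rule applied to the quadratic form $\bXk^\top \bLamk[k+1]\,\bXk$ (whose $s$-derivative produces a $\bXk^\top \bLamk[k+1]\bXk'$ term and its transpose), and keeping the index convention for $\Phidisc_{k,j}$ consistent so that $\cT_{k,k+1}\circ \cT_{k+1,j} = \cT_{k,j}$ rather than $\cT_{j,k}$ in the opposite order.
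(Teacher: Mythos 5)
Your proof is correct and follows essentially the same route as the paper's: differentiate the one-step Lyapunov recursion to isolate $\bOmegak$ plus the transported term $\bXk^\top \bLamk[k+1]'\bXk = \cT_{k,k+1}(\bLamk[k+1]')$, then unfold backward using the base case $\bLamk[K+1]' = \tfrac{\rmd}{\rmd s}\bQ = 0$. Your version is simply more explicit about the semigroup identity $\cT_{k,k+1}\circ\cT_{k+1,j} = \cT_{k,j}$, which the paper leaves implicit.
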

    \begin{proof}
    We compute
    \begin{align*}
    \bLamk[k]' = \underbrace{\Sym(\bXk^\top \bLamk[k+1] \bXk') + \bYk'}_{=\bOmegak} + \bXk(s)^\top \bLamk[k+1] \bXk(s).
    \end{align*}
    The result follows by unfolding the recursion, with the base case $\bLamk[K+1]' = \dds \bQ = 0$.
    \end{proof}

    We now state our Lyapunov perturbation bound:
    \begin{proposition}[Lyapunov Function Perturbation]\label{prop:lyap_pert_max_not_factored} Consider curves $\bXk[1:K](s),\bYk[1:K](s)$, and define for $\bQ \succeq \eye$
    \begin{align*}    
    \bLamk[K+1](s) = \bQ, \quad \bLamk(s) = \bXk(s)^\top \bLamk[k+1] \bXk(s) + \step \bQ + \bYk(s)
    \end{align*}
    Then, 
    \begin{align*}
    \bLamk[k](s)'  &\le \|\bLamk[1:K+1](s)\|_{\maxop}^2\Delta(s), \quad \text{where } \\
    \Delta(s) &= \max_{j \in [k]} \step^{-1}\left(2\|\bXk[j](s)'\| +  \|\bLamk[1:K+1](s)\|_{\maxop}^{-1}\|\bYk[j](s)'\|\right)
    \end{align*}    
    Moreover,  as long as $\|\bLamk[1:K+1](0)\|_{\maxop} \sup_{s \in [0,1]}\Delta(s)  < 1$, 
    \begin{align*}
     \max_{s \in [0,1]}\|\bLamk[1:K+1](s)\|_{\maxop}  \le \left(1 - \|\bLamk[1:K+1](0)\|_{\maxop}\sup_{s \in [0,1]}\Delta(s) \right)^{-1}\|\bLamk[1:K+1](0)\|_{\maxop},
    \end{align*}
    The above bound also holds when $\Delta(s)$ is replaced by the simpler term
    \begin{align}
    \tilde \Delta(s) := \max_{j \in [k]} \step^{-1}\left(2\|\bXk[j](s)'\| +  \|\bYk[j](s)'\|\right) \label{eq:Deltil_suffices}
    \end{align}
    \end{proposition}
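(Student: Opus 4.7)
The plan is to combine the explicit formula for $\bLamk[k](s)'$ from Lemma~\ref{lem:Lyap_der_comp} with the operator-level Lyapunov bound from Lemma~\ref{lem:Lyap_lb_disc}(a), and then close the recursion via the self-bounding ODE method of Lemma~\ref{lem:cor_polynomial_comparison}. First I would invoke Lemma~\ref{lem:Lyap_der_comp} to write $\bLamk[k]'(s) = \sum_{j=k}^K \cT_{k,j}(\bOmegak[j](s))$ with $\bOmegak[j] = \Sym(\bXk[j]^\top \bLamk[j+1] \bXk[j]') + \bYk[j]'$. Since each $\cT_{k,j}$ is matrix-monotone, I get $\bLamk[k]'(s) \preceq (\max_{j\ge k}\|\bOmegak[j](s)\|) \sum_{j=k}^K \cT_{k,j}(\eye)$, and Lemma~\ref{lem:Lyap_lb_disc}(a) (applied with $\bZk[j] = \eye$, which is legal since $\bQ \succeq \eye$) then yields $\step\sum_{j=k}^K \cT_{k,j}(\eye) \preceq \bLamk[k](s)$. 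Combining,
\[
\|\bLamk[k]'(s)\| \;\le\; \tfrac{1}{\step}\,\|\bLamk[k](s)\|\cdot\max_{j\ge k}\|\bOmegak[j](s)\|.
\]

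The next step is the key trick: bound $\|\bOmegak[j]\|$ without an a priori bound on $\|\bXk[j]\|$. From the recursion itself, $\bXk[j]^\top \bLamk[j+1]\bXk[j] \preceq \bLamk[j]$, so $\|\bLamk[j+1]^{1/2}\bXk[j]\| \le \|\bLamk[j]\|^{1/2}$. By Cauchy--Schwarz applied to the inner product $\langle A,B\rangle = \trace(A^\top B)$ in operator norm,
\[
\|\bXk[j]^\top \bLamk[j+1]\bXk[j]'\| \;\le\; \|\bLamk[j+1]^{1/2}\bXk[j]\|\cdot \|\bLamk[j+1]^{1/2}\bXk[j]'\| \;\le\; \|\bLamk[1:K+1]\|_{\maxop}\,\|\bXk[j]'\|,
\]
hence $\|\bOmegak[j]\| \le 2\|\bLamk[1:K+1]\|_{\maxop}\|\bXk[j]'\| + \|\bYk[j]'\|$. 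Substituting and pulling out a factor of $\|\bLamk[1:K+1]\|_{\maxop}$ gives
\[
\|\bLamk[k]'(s)\| \;\le\; \|\bLamk[1:K+1](s)\|_{\maxop}^{2}\cdot \max_{j\in[k]}\tfrac{1}{\step}\!\left(2\|\bXk[j]'(s)\| + \|\bLamk[1:K+1](s)\|_{\maxop}^{-1}\|\bYk[j]'(s)\|\right) = \|\bLamk[1:K+1](s)\|_{\maxop}^{2}\,\Delta(s),
\]
which is the pointwise-in-$k$ inequality. Taking the max over $k$ upgrades this to the bound $\|\dds \bLamk[1:K+1](s)\|_{\maxop} \le \Delta(s)\|\bLamk[1:K+1](s)\|_{\maxop}^2$.

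With this self-bounding inequality in hand, I apply Lemma~\ref{lem:cor_polynomial_comparison} to the vectorized curve $f(\bv(s)) = \bLamk[1:K+1](s)$ with the norm $\|\cdot\|_{\maxop}$, exponent $p=2$, and constant $c = \sup_{s\in[0,1]}\Delta(s)$. The resulting coefficient is $\alpha = c(p-1)\|f(\bv(0))\|^{p-1} = \|\bLamk[1:K+1](0)\|_{\maxop}\sup_s \Delta(s)$; under the hypothesis $\alpha < 1$, the lemma directly yields
\[
\sup_{s\in[0,1]}\|\bLamk[1:K+1](s)\|_{\maxop} \;\le\; (1-\alpha)^{-1}\|\bLamk[1:K+1](0)\|_{\maxop},
\]
which is the second claim.

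For the simplified bound~\eqref{eq:Deltil_suffices}, I would simply observe that $\bLamk[K+1] = \bQ \succeq \eye$ implies $\|\bLamk[1:K+1](s)\|_{\maxop} \ge 1$ for all $s$, so $\|\bLamk[1:K+1](s)\|_{\maxop}^{-1} \le 1$ and hence $\Delta(s) \le \tilde\Delta(s)$; the whole argument goes through verbatim with $\tilde\Delta$ in place of $\Delta$. The main obstacle I foresee is purely organizational: correctly identifying the factor $\|\bLamk[1:K+1]\|_{\maxop}^{-1}$ that should multiply $\|\bYk[j]'\|$, and this drops out exactly from the two-fold appearance of $\|\bLamk[1:K+1]\|_{\maxop}$ in the ``Cauchy--Schwarz with the Lyapunov recursion'' step above; everything else is a routine application of matrix monotonicity and the scalar self-bounding ODE lemma already proved in Section~\ref{sec:lem:cor_polynomial_comparison}.
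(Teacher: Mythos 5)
Your proposal is correct and follows essentially the same route as the paper's proof: the formula from Lemma~\ref{lem:Lyap_der_comp}, the bound $\|\bXk[j]^\top \bLamk[j+1]\bXk[j]'\|\le \|\bLamk[j]\|^{1/2}\|\bLamk[j+1]\|^{1/2}\|\bXk[j]'\|$ obtained by factoring through $\bLamk[j+1]^{1/2}$ and using $\bXk[j]^\top\bLamk[j+1]\bXk[j]\preceq\bLamk[j]$, the operator-sum bound of Lemma~\ref{lem:Lyap_lb_disc}(a), and finally Lemma~\ref{lem:cor_polynomial_comparison} with $p=2$. The only cosmetic difference is that you label the submultiplicativity step ``Cauchy--Schwarz''; the argument itself is the one in the paper.
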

    \begin{proof}
    We write
    \begin{align}
    \bLamk[k]' = \sum_{j=k}^K \cT_{k,j}(\bOmegak), \quad  \bOmegak := \Sym(\bXk^\top \bLamk[k+1] \bXk' )  + \bYk' \label{eq:bLamkform_b}
    \end{align}
    We have 
    \begin{align*}
    \|\bOmegak\| &\le 2\|\bXk^\top \bLamk[k+1]\| \|\bXk'\| + \|\bYk'\|\\
    &\le 2\|\bXk^\top \bLamk[k+1]^{\half}\|\|\bLamk[k+1]^{\half}\| \|\bXk'\| + \|\bYk'\|.
    \end{align*}
    Observe that $\|\bXk^\top \bLamk[k+1]^{\half}\| = \|\bXk^\top \bLamk[k+1] \bXk \|^{\half}$. As $0 \preceq \bXk^\top \bLamk[k+1] \bXk^\top = \bLamk - \bYk \preceq \bLamk$, we conclude,
    \begin{align*}
    \|\bOmegak\| &\le 2\|\bLamk\|^{\half}\|\bLamk[k+1]^{\half}\| \|\bXk'\| + \|\bYk'\|\\
    &\le \|\bLamk[1:K+1]\|_{\maxop}2\|\bXk'\| + \|\bLamk[1:K+1](s)\|_{\maxop}^{-1}\|\bYk'\|\\
    &\le \|\bLamk[1:K+1]\|_{\maxop}\left(2\|\bXk'\| +  \|\bLamk[1:K+1](s)\|_{\maxop}^{-1}\|\bYk'\|\right)\\
    &\le  \|\bLamk[1:K+1]\|_{\maxop}\max_{j \in [K]}\left(2\|\bXk[j]'\| +  \|\bLamk[1:K+1](s)\|_{\maxop}^{-1}\|\bYk[j]'\|\right) \le \step \|\bLamk[1:K+1]\|_{\maxop}\Delta(s).
    \end{align*}
     Thus, from \Cref{eq:bLamkform_b} and \Cref{lem:Lyap_lb_disc}, we conclude 
    \begin{align*}
    \bLamk[k](s)' &\le \step^{-1}\cdot \step\|\bLamk\|\|\bLamk[1:K+1](s)\|_{\maxop}\Delta(s)\\
    &\le \|\bLamk[1:K+1](s)\|_{\maxop}^2\Delta(s)
    \end{align*}
    The final result follows by applying \Cref{lem:cor_polynomial_comparison} with $p =2$,  $  c = \max_{s \in [0,1]}\Delta$, 
    and $\alpha = \Delta(s)\|\bLamk[1:K+1](0)\|_{\maxop}$.  That \Cref{eq:Deltil_suffices} follows from the fact that if $\bQ \succeq \eye$, $\|\bLamk[1:K+1](s)\|_{\maxop} \ge 1$.
    \end{proof}

\newcommand{\Delsum}{\Delta_{\mathrm{sum}}}
    \begin{lemma}[Average Perturbation]\label{lem:Lyap_avg_perturb} Let $\kappa \le 1/2\step$. Consider a curve $\bXk[1:K](s)$ such $\max_k\sup_{s \in [0,1]}\|\eye - \bXk(s)\|_{\op} \le \kappa\step$. Then, 
    \begin{align*}    
    \bLamk[K+1](s) = \eye, \quad \bLamk(s) = \bXk(s)^\top \bLamk[k+1] \bXk(s) + \step \eye.
    \end{align*}
    Fix
    \begin{align*}
    \Delsum = 3\sup_{s \in [0,1]}\max\{1,2\kappa\}\sum_{k=1}^K\|\bXk'(s)\|.
    \end{align*}
    Then, as long as $\|\bLamk[1:K+1](0)\|_{\maxop}\Delsum < 1$, we have
    \begin{align*}
    \|\bLamk[1:K+1](1)\|_{\maxop} \le (1-\|\bLamk[1:K+1](0)\|_{\maxop}\Delsum)^{-1}\|\bLamk[1:K+1](0)\|_{\maxop}.
    \end{align*}
    \end{lemma}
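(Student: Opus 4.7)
\textbf{Proof plan for \Cref{lem:Lyap_avg_perturb}.} The plan is to apply the self-bounding ODE method (\Cref{lem:cor_polynomial_comparison}) with exponent $p = 2$, viewing $\bLamk[1:K+1](s)$ as a curve in the space of matrix tuples under the norm $\|\cdot\|_{\maxop}$, and deriving a bound of the form $\|\dds\bLamk[1:K+1](s)\|_{\maxop} \le c(s)\,\|\bLamk[1:K+1](s)\|_{\maxop}^2$ where $\sup_{s}c(s) \le \Delsum$. The argument closely mirrors \Cref{prop:lyap_pert_max_not_factored} but takes advantage of the exponential decay from \Cref{lem:Lyap_lb_disc}(c) to convert a multiplicative factor of $K$ (which would come from a naive bound) into a constant.

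First, I would invoke \Cref{lem:Lyap_der_comp} with the present choice $\bYk \equiv 0$ to obtain $\bLamk[k]'(s) = \sum_{j=k}^{K}\cT_{k,j}(\bOmegak[j])$ with $\bOmegak[j] = \Sym(\bXk[j]^\top \bLamk[j+1]\bXk[j]')$. Then, exactly as in the proof of \Cref{prop:lyap_pert_max_not_factored}, I would factor $\bOmegak[j]$ through $\bLamk[j+1]^{1/2}$ and use $\bXk[j]^\top\bLamk[j+1]\bXk[j] = \bLamk[j] - \step\eye \preceq \bLamk[j]$ to conclude $\|\bOmegak[j]\| \le 2\|\bLamk[1:K+1](s)\|_{\maxop}\|\bXk[j]'(s)\|$.

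Next, since the hypotheses $\bQ = \eye$ and $\sup_s\max_k\|\eye-\bXk(s)\|_{\op}\le \kappa\step$ with $\kappa \le 1/(2\step)$ place us in the setting of \Cref{lem:Lyap_lb_disc}(b)--(c), the operator bound $\|\cT_{k,j}(\bZ)\|\le \max\{1,2\kappa\}\,\|\bLamk[1:K+1](s)\|_{\maxop}(1-\step\gamma)^{j-k}\|\bZ\|$ applies pointwise in $s$, with $\gamma = 1/\|\bLamk[1:K+1](s)\|_{\maxop}$. Chaining these two bounds and summing over $j\ge k$ (dropping the geometric-decay factor) gives
\begin{align*}
\|\bLamk[k]'(s)\| \;\le\; 2\max\{1,2\kappa\}\,\|\bLamk[1:K+1](s)\|_{\maxop}^{\,2}\,\sum_{j=1}^{K}\|\bXk[j]'(s)\|.
\end{align*}
Taking the max over $k$ yields $\|\dds\bLamk[1:K+1](s)\|_{\maxop} \le \tilde c(s)\|\bLamk[1:K+1](s)\|_{\maxop}^{\,2}$ with $\tilde c(s) = 2\max\{1,2\kappa\}\sum_{j}\|\bXk[j]'(s)\|$, so in particular $\sup_s \tilde c(s)\le \Delsum$ (the constant $3$ in $\Delsum$ provides the slack).

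Finally, I would apply \Cref{lem:cor_polynomial_comparison} with $p = 2$, $c = \Delsum$, and $f(\bv(s)) = \bLamk[1:K+1](s)$ (which is $\cC^1$ in $\bv(s) = \bXk[1:K](s)$ because the Lyapunov recursion is polynomial), so that $\alpha = c\|f(\bv(0))\|^{p-1} = \Delsum\|\bLamk[1:K+1](0)\|_{\maxop} < 1$ by hypothesis, producing the stated bound $\|\bLamk[1:K+1](1)\|_{\maxop}\le (1-\alpha)^{-1}\|\bLamk[1:K+1](0)\|_{\maxop}$. The only nontrivial step is verifying that we are allowed to invoke \Cref{lem:Lyap_lb_disc}(c) uniformly in $s\in[0,1]$, which follows because the $\kappa\step$ bound on $\|\eye-\bXk(s)\|_{\op}$ is assumed for all $s$, and because $\bQ = \eye \succeq \eye$ and $\bYk = 0\succeq 0$ hold identically. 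I expect the main obstacle (if any) is bookkeeping: one must be careful that the self-bounding argument is applied along the entire curve so that the $\alpha$ used in \Cref{lem:cor_polynomial_comparison} matches $\Delsum\|\bLamk[1:K+1](0)\|_{\maxop}$ rather than a quantity depending on the (unknown) sup-norm of $\bLamk[1:K+1]$.
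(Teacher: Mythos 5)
Your proposal is correct and follows essentially the same route as the paper's proof: the same decomposition of $\bLamk[k]'$ via \Cref{lem:Lyap_der_comp}, the same use of \Cref{lem:Lyap_lb_disc}(c) to absorb the sum over $j$ into a single factor of $\max\{1,2\kappa\}\|\bLamk[1:K+1]\|_{\maxop}$, and the same conclusion via \Cref{lem:cor_polynomial_comparison} with $p=2$. The only (immaterial) difference is that you bound $\|\bOmegak[j]\|$ by factoring through $\bLamk[j+1]^{1/2}$ and using $\bXk^\top\bLamk[k+1]\bXk \preceq \bLamk$ (yielding constant $2$), whereas the paper uses direct submultiplicativity with $\|\bXk\|\le 1+\kappa\step\le \tfrac{3}{2}$ (yielding constant $3$); both fit within the slack built into $\Delsum$.
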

    \begin{proof} We have that
    \begin{align}
    \bLamk[k]' = \sum_{j=k}^K \cT_{k,j}(\bOmegak), \quad  \bOmegak := \Sym(\bXk^\top \bLamk[k+1] \bXk' ). \label{eq:bLamkform_c},
    \end{align}
    so by \Cref{lem:Lyap_lb_disc}(c),
    \begin{align*}
    \|\bLamk[k]'\|_{\op} &\le \|\bLamk[1:K+1]\|_{\maxop}\max\{1,2\kappa\} \sum_{j=k}^K \|\bOmegak[j]\|\\
    &\le 2\|\bLamk[1:K+1]\|_{\maxop}\max\{1,2\kappa\} \sum_{j=k}^K \|\bXk\|\|\bXk'\|\|\bLamk[k+1]\|\\
    &\le 2\|\bLamk[1:K+1]\|_{\maxop}^2\max\{1,2\kappa\} \sum_{j=k}^K \|\bXk'\|\|\bXk\|\\
    &\le 2\|\bLamk[1:K+1]||_{\maxop}^2\max\{1,2\kappa\}\underbrace{(1+\kappa \step)}_{\le \frac{3}{2}} \sum_{j=k}^K \|\bXk'\| \\
    &\le \Delsum\|\bLamk[1:K+1]\|_{\maxop}^2.
    \end{align*}
    The result now follows from \Cref{lem:cor_polynomial_comparison}.
    \end{proof}

\section{Instantiantions of Certainty Equivalence Bound}\label{app:ce}

\begin{definition}\label{defn:discrete_riccati}
Given a sequence of gains $\btilKk[1:K] \in (\R^{\dimu \times \dimx })^K$, we define the discrete cost-to-go matrix as 
\begin{align*}
\bPpik[K+1]\, = \eye, \quad \bPpik\,[{\btilKk[k:K]}] = (\bAkpi +\bBkpi \btilKk)^\top \bPpik[k+1]\,[\btilKk[k+1:K]](\bAkpi +\bBkpi \btilKk) + \step(\eye + \btilKk^\top \btilKk).
\end{align*}
\end{definition}
The follow is standard (see, e.g. \citet[Section 2.4]{anderson2007optimal}).
\begin{lemma} There exists a unique minimizer sequence $\bKk[1:K]^{\pi,\star}$ such that, for all other $\btilKk[1:K]$, $\bPpik\,[{\bKk[k:K]^{\pi,\star}}] \preceq \bPpik\,[{\btilKk[k:K]}]$. We denote this minimize P-matrix $\bPpikst := \bPpik\,[{\bKk[k:K]^{\pi,\star}}]$. 
\end{lemma}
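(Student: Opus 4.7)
The plan is to proceed by backward induction on $k \in \{K+1, K, \ldots, 1\}$, following the standard dynamic programming argument for finite-horizon discrete-time LQR. The base case $k = K+1$ is immediate since $\bPpik[K+1] = \eye$ involves no gains to choose. For the inductive step, assume that for every tail starting at $k+1$ there is a unique minimizer $\bKk[k+1:K]^{\pi,\star}$ yielding $\bPpikst[k+1] \preceq \bPpik[k+1][\btilKk[k+1:K]]$ in the PSD order.

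The key tool is PSD-monotonicity of the one-step Bellman operator
\[
\cR(P, K) := (\bAkpi + \bBkpi K)^\top P\, (\bAkpi + \bBkpi K) + \step(\eye + K^\top K),
\]
which is monotone in $P$: if $P_1 \preceq P_2$ then $\cR(P_1, K) \preceq \cR(P_2, K)$ for any $K$. Since $\bPpik[k][\btilKk[k:K]] = \cR(\bPpik[k+1][\btilKk[k+1:K]], \btilKk[k])$, we can decouple the minimization: first use the inductive hypothesis to replace $\bPpik[k+1][\btilKk[k+1:K]]$ with $\bPpikst[k+1]$ (this can only decrease the value in the PSD order), and then minimize over $\btilKk[k]$ alone.

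The single-step minimization reduces to completing the square. Expanding $\cR(\bPpikst[k+1], K)$ in $K$ gives a quadratic whose Hessian along $K$ is the symmetric positive definite map $K \mapsto \step K^\top K + K^\top \bBkpi{}^\top \bPpikst[k+1] \bBkpi K$, with Gram factor $M_k := \step \eye + \bBkpi{}^\top \bPpikst[k+1] \bBkpi \succ 0$. Completing the square yields
\[
\cR(\bPpikst[k+1], K) = Q_k - C_k^\top M_k^{-1} C_k + (K + M_k^{-1} C_k)^\top M_k (K + M_k^{-1} C_k),
\]
where $C_k := \bBkpi{}^\top \bPpikst[k+1] \bAkpi$ and $Q_k := \bAkpi{}^\top \bPpikst[k+1] \bAkpi + \step \eye$. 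Since $M_k \succ 0$, the unique PSD-minimizer is $\bKk^{\pi,\star} := -M_k^{-1} C_k$, and the minimum value is $\bPpikst[k] = Q_k - C_k^\top M_k^{-1} C_k$.

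To close the induction, observe that for any alternative sequence $\btilKk[k:K]$, monotonicity of $\cR(\cdot, \btilKk[k])$ and the inductive hypothesis give $\bPpik[k][\btilKk[k:K]] = \cR(\bPpik[k+1][\btilKk[k+1:K]], \btilKk[k]) \succeq \cR(\bPpikst[k+1], \btilKk[k]) \succeq \cR(\bPpikst[k+1], \bKk^{\pi,\star}) = \bPpikst[k]$, with equality in the second step iff $\btilKk[k+1:K] = \bKk[k+1:K]^{\pi,\star}$ (by inductive uniqueness) and equality in the third iff $\btilKk[k] = \bKk^{\pi,\star}$ (by positive definiteness of $M_k$). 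The only subtlety worth flagging is the use of PSD-monotonicity of $\cR$ in its first argument, but this is immediate from $(\bAkpi + \bBkpi K)^\top (P_2 - P_1)(\bAkpi + \bBkpi K) \succeq 0$ whenever $P_2 \succeq P_1$, so no genuine obstacle arises; the result is really just a restatement of the standard discrete-time LQR dynamic programming theorem adapted to the present time-scaling and cost normalization.
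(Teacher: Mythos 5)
Your proof is correct and is exactly the standard backward-induction/completion-of-squares argument; the paper itself gives no proof of this lemma, deferring to \citet[Section 2.4]{anderson2007optimal}, and your write-up is precisely that textbook argument adapted to the paper's $\step$-scaling. One minor imprecision: in your uniqueness chain, equality in the monotonicity step does not by itself force the tail gains to agree (the difference could be annihilated by conjugation with a singular closed-loop matrix), but uniqueness still follows by instead noting that requiring PSD-optimality at every index pins down $\bKk^{\pi,\star}$ one step at a time going backward from $k=K$, each via the strictly positive definite $M_k$.
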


\begin{proposition}\label{prop:Ricatti_value_disc}  Recall the definition  of $\stepric$ from \Cref{defn:step_sizes},
\begin{align*}
\stepric:= \frac{1}{4\LFP^2\left(3\MF \KF\LFP\LF +  13\LF^2(1+\LF\LFP)^2 \right)} = \frac{1}{\Bigohst[1]}
\end{align*}
Then, as long as $\step \le \min\{\stepricpi,1/4\LF\}$, it holds that for any feasible policy $\pi$, $\max_{k \in [K+1]}\|\bPpikst \| \le 2\LFP$.  
\end{proposition}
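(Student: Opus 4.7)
The plan is to upper bound $\xi^\top \bPpikst\xi$ for every $\xi \in \R^{\dimx}$ by exhibiting an open-loop discrete control sequence whose induced cost is at most $2\LFP\|\xi\|^2$, then appeal to the fact that $\xi^\top \bPpikst\xi$ equals the infimum of such costs over all open-loop input sequences $\buk[k:K]$ with $\bxk[k]=\xi$ (a standard dynamic-programming identity for which the backward Riccati recursion of \Cref{defn:discrete_riccati} is precisely the solution, so it suffices to compare against any candidate). Fix $k$ and $\xi$. By \Cref{asm:LFP} applied to the continuous linearized dynamics around $\pi$ starting from $\btilx(t_k)=\xi$, there is a continuous input $\btilu^\star\in \cU$ with continuous trajectory $\btilx^\star$ satisfying $V^\pi(t_k\mid \btilu^\star,\xi) \le \LFP\|\xi\|^2$. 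Take the candidate discrete inputs to be the per-interval averages $\buk[j]^{\mathrm{can}} := \frac{1}{\step}\int_{\cI_j}\btilu^\star(s)\,ds$ for $j=k,\dots,K$, and denote the induced discrete trajectory by $\bxk[j]^{\mathrm{can}}$. Because $\bAkpi$ and $\bBkpi$ in \Cref{defn:cl_linearizations} are by construction the exact zero-order-hold discretization of the linearized dynamics, this discrete trajectory coincides with the sampled values $\btilx^{\mathrm{zoh}}(t_j)$ of the continuous trajectory $\btilx^{\mathrm{zoh}}$ driven by the ZOH input $\btilu^{\mathrm{zoh}}:=\istep(\buk[k:K]^{\mathrm{can}})$.

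The bulk of the argument is comparing the discrete candidate cost
\icmlinlinest{\|\bxk[K+1]^{\mathrm{can}}\|^2 + \step\sum_{j=k}^K\bigl(\|\bxk[j]^{\mathrm{can}}\|^2+\|\buk[j]^{\mathrm{can}}\|^2\bigr)}
to $V^\pi(t_k\mid \btilu^\star,\xi)$. The control term is immediate: $\step\sum_j\|\buk[j]^{\mathrm{can}}\|^2\le \int_{t_k}^T\|\btilu^\star(s)\|^2\,ds$ by Jensen. For the state terms, write $\bxk[j]^{\mathrm{can}} = \btilx^{\mathrm{zoh}}(t_j)$ and split into two estimates: (a) a trajectory-gap bound on $e(s):=\btilx^{\mathrm{zoh}}(s)-\btilx^\star(s)$, which satisfies $\dds e = \Api e+\Bpi\delta$ with $e(t_k)=0$ and $\delta:=\btilu^{\mathrm{zoh}}-\btilu^\star$ having zero average over each $\cI_j$ by construction; and (b) a Riemann-sum discretization error $\bigl|\step\sum_{j}\|\btilx^{\mathrm{zoh}}(t_j)\|^2 - \int_{t_k}^{T}\|\btilx^{\mathrm{zoh}}\|^2\,ds\bigr|$ bounded by $\step\int_{t_k}^T\bigl|\dds\|\btilx^{\mathrm{zoh}}(s)\|^2\bigr|\,ds$, which in turn is controlled via $\|\dds\btilx^{\mathrm{zoh}}\|\le \LF(\|\btilx^{\mathrm{zoh}}\|+\|\btilu^{\mathrm{zoh}}\|)$ together with the energy bound $\int(\|\btilx^{\mathrm{zoh}}\|^2+\|\btilu^{\mathrm{zoh}}\|^2)\lesssim \LFP\|\xi\|^2$ inherited from (a) and \Cref{asm:LFP}.

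Combining (a) and (b) will yield an estimate of the shape
\icmlinlinest{\|\bxk[K+1]^{\mathrm{can}}\|^2 + \step\sum_{j=k}^K\bigl(\|\bxk[j]^{\mathrm{can}}\|^2+\|\buk[j]^{\mathrm{can}}\|^2\bigr) \le V^\pi(t_k\mid \btilu^\star,\xi) + \step\cdot C\LFP^2\|\xi\|^2,}
where $C$ matches the polynomial $3\MF\KF\LFP\LF + 13\LF^2(1+\LF\LFP)^2$ appearing in the denominator of $\stepric$. The factor $(1+\LF\LFP)^2$ reflects the norm of the optimal LQR feedback from \Cref{asm:LFP}, which enters through the closed-loop stability constants used in estimate (a); the factor $\MF\KF$ arises from the rate of change of $\Api,\Bpi$ across an interval $\cI_j$, since $\|\dds\xpi(s)\|\le \KF$ and $\|\nablatwo\fdyn\|\le\MF$. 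Under $\step\le\stepric$ the right-hand side is at most $2\LFP\|\xi\|^2$, and the open-loop DP identity then gives $\xi^\top\bPpikst\xi\le 2\LFP\|\xi\|^2$ for every $\xi$, so $\|\bPpikst\|\le 2\LFP$ uniformly over $k$ (the bound on $k$ is uniform because \Cref{asm:LFP} is uniform in $t\in[0,T]$).

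The main obstacle is estimate (a). Because $\btilu^\star$ is only $L^2$-integrable, the uniform bound $\|\delta\|_\infty$ need not be small in $\step$, so a naive application of Picard's lemma (\Cref{lem:picard}) would produce $\|e\|$ of order $\sqrt{\step}$ and an undesired exponential $e^{\LF T}$ prefactor. The fix is two-fold: exploit the zero-mean property of $\delta$ on each $\cI_j$ so that, after integration against the slowly-varying $\Bpi$ (variation $O(\step\MF\KF)$ per interval), the per-step driving term is $O(\step)$ rather than $O(\sqrt{\step})$; and perform the resulting trajectory-gap analysis in $L^2$ using a Lyapunov / energy argument anchored on the stabilizability constant $\LFP$, which closes the bound uniformly in the horizon $T-t_k$ and produces exactly the polynomial dependence on $(1+\LF\LFP)$ dictated by the form of $\stepric$.
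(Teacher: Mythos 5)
There is a genuine gap, and it sits exactly where you flag the ``main obstacle'': your estimate (a) cannot be made uniform in the horizon, and no choice of $\step$ depending only on the problem constants (as $\stepric$ must) will fix it. Your candidate is an \emph{open-loop} input sequence, so the error $e(s)=\btilx^{\mathrm{zoh}}(s)-\btilx^{\star}(s)$ propagates through the \emph{open-loop} linearized dynamics $\dds e = \Api(s)e + \Bpi(s)\delta(s)$, i.e.\ $e(t)=\int_{t_k}^{t}\Phiolpi(t,s)\Bpi(s)\delta(s)\,\rmd s$. The zero-mean trick does improve the per-interval driving term from $O(\sqrt{\step})$ to $O(\step)$, but the accumulated error is still weighted by $\|\Phiolpi(t,s)\|$, which for an open-loop-unstable system grows like $e^{\LF(t-s)}$. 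The stabilizability constant $\LFP$ bounds the optimal \emph{closed-loop} cost; it gives no control whatsoever over $\Phiolpi$, and the Riccati identity yields a decreasing Lyapunov function only along the closed-loop dynamics $\dds e=(\Api+\Bpi\bK)e$, not along the open-loop error dynamics you actually have (the cross term $\|\Bpi^{\top}\bP e\|^{2}$ in $\ddt\, e^{\top}\bP e$ has the wrong sign and is not dominated by $-\|e\|^{2}$). Concretely, for the scalar system $\dot x = 2x+u$ of \Cref{sec:JSP_justification_exp_gap}, which is stabilizable with $\LFP=O(1)$, your terminal-state error is of order $e^{2(T-t_k)}\step$, so the candidate's cost exceeds $\LFP\|\xi\|^{2}$ by a factor exponential in $T$ unless $\step$ is exponentially small. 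This is precisely the open-loop pathology the paper is structured to avoid. (Your own accounting hints at the inconsistency: you attribute the factor $(1+\LF\LFP)^{2}$ to ``closed-loop stability constants used in estimate (a),'' but an open-loop candidate has no closed loop.)

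The paper's proof sidesteps this by comparing \emph{feedback} policies rather than open-loop inputs. It takes as candidate gains the sampled continuous-time optimal feedback $\bKctk=\bK(t_k)=\Bpi(t_k)^{\top}\bP(t_k)$, so that both the target $\bPctk=\bP(t_k)\preceq \LFP\eye$ and the candidate cost $\bPsubk=\bPpik\,[\bKctk[1:K]]\succeq\bPpikst$ satisfy discrete \emph{closed-loop} Lyapunov recursions whose transition and cost matrices differ by $O(\step^{2})$ per step (\Cref{lem:perturb_of_xs,lem:perturb_of_Y's}, using $\|\ddt\bK(t)\|\le\MF\KF\LFP+\LF^{-1}\Lcl^{2}$). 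The accumulated perturbation is then absorbed by the Lyapunov perturbation machinery of \Cref{prop:lyap_pert_max_not_factored}, whose contraction is supplied by the bounded closed-loop value function itself — this is what makes the bound horizon-free and produces the polynomial $3\MF\KF\LFP\LF+13\LF^{2}(1+\LF\LFP)^{2}$ in $\stepric$. If you want to salvage your plan, the repair is essentially to replace the open-loop candidate by this feedback candidate, at which point you recover the paper's argument.
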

The following lemma bounding the constant $\Kpi$ for the initial policy $\pi$ can be estabilished along the same lines of \Cref{prop:Ricatti_value_disc}. Its proof is given in \Cref{sec:riccati_disc}.
\begin{lemma}\label{lem:init_policy} Suppose that $\step \le \stepric$.
For $\pi = \pione$, $\Kpist \le 2\LFP$ and $\Lpi = 1$.
\end{lemma}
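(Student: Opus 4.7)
The claim $\Lpi = 1$ is immediate: by \Cref{asm:LFP}, the initial policy $\pi^{(1)}$ has zero gains, so $\bKk^{\pi^{(1)}} = 0$ for all $k$, and \Cref{defn:pigains} gives $L_{\pi^{(1)}} = \max\{1,0\} = 1$. The substance of the lemma is the bound $\Kpist \le 2\LFP$. Since all gains vanish, $\bAclk^{\pi^{(1)}} = \bAk^{\pi^{(1)}}$, so $\Lampik$ is precisely the discrete-time cost-to-go of the \emph{uncontrolled} Jacobian-linearized system along the nominal trajectory under $\pi^{(1)}$, with stage cost $\step \eye$ and terminal cost $\eye$. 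The plan is to compare this discrete cost-to-go to its continuous-time analogue $V^{\pi^{(1)}}(t \mid 0, \xi)$, which by \Cref{asm:LFP} is upper bounded by $\LFP\|\xi\|^2$ uniformly in $t$.

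First I would set up the continuous-time Lyapunov function $\bPpi^{\,\mathrm{ct}}(t)$ that solves the matrix-valued differential equation $\tfrac{\rmd}{\rmd t}\bPpi^{\,\mathrm{ct}}(t) = -\Api(t)^\top \bPpi^{\,\mathrm{ct}}(t) - \bPpi^{\,\mathrm{ct}}(t)\Api(t) - \eye$ with terminal condition $\bPpi^{\,\mathrm{ct}}(T) = \eye$, and observe that $\xi^\top \bPpi^{\,\mathrm{ct}}(t) \xi = V^{\pi^{(1)}}(t \mid 0,\xi)$, so that \Cref{asm:LFP} yields $\|\bPpi^{\,\mathrm{ct}}(t)\| \le \LFP$ for all $t \in [0,T]$. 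I would then carry out a discretization comparison, showing that for each $k \in [k_0:K+1]$,
\begin{align*}
\|\Lampik - \bPpi^{\,\mathrm{ct}}(t_k)\| \le C_1 \step\, \LFP^2
\end{align*}
for an appropriate constant $C_1 = \Bigohst[1]$, from which $\|\Lampik\| \le \LFP + C_1\step\LFP^2 \le 2\LFP$ holds whenever $\step \le 1/(C_1 \LFP)$, and the specific value of $\stepric$ in \Cref{defn:step_sizes} is chosen to subsume this requirement. This is the analogue of what \Cref{prop:Ricatti_value_disc} does for the optimally controlled Riccati recursion, but with all gain-dependent perturbation terms dropped.

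The discretization comparison is the main technical step. The cleanest route is a telescoping argument: define $\bDelta_k := \Lampik - \bPpi^{\,\mathrm{ct}}(t_k)$, write
\begin{align*}
\bDelta_k = \bAk^\top \bDelta_{k+1} \bAk + \bR_k,
\end{align*}
where $\bR_k$ collects two remainders, (i) the Taylor-expansion error between the one-step discrete update $\bAk^\top \bPpi^{\,\mathrm{ct}}(t_{k+1}) \bAk + \step\eye$ and the integrated continuous update $\bPpi^{\,\mathrm{ct}}(t_k)$, and (ii) the discrepancy between $\bAkpi = \Phiolpi(t_{k+1},t_k)$ and its first-order-in-$\step$ approximation. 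Bounding $\|\bR_k\|$ using \Cref{asm:max_dyn_final} (which controls $\|\Api\|\le \LF$ and higher-order derivatives of the linearization) together with the bound $\|\bPpi^{\,\mathrm{ct}}(t)\|\le \LFP$ yields $\|\bR_k\| \le C_2 \step^2 \LFP^2 \max\{1,\LF^2(1+\LF\LFP)^2\}$ for a universal $C_2$. Unrolling the recursion and using the discrete stability of $\bAk$ (whose powers are controlled by $\bPpi^{\,\mathrm{ct}}$ itself, as in the Lyapunov bound of \Cref{lem:Lyap_lb_disc}) then produces the $C_1 \step \LFP^2$ bound above. The main obstacle is this last bookkeeping step: one must ensure the recursive accumulation of $\bR_k$ does not blow up the bound, and this is exactly where the choice of $\stepric$ as an inverse polynomial in $\LFP, \LF, \MF, \KF$ becomes essential. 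Once this comparison is in hand, taking the maximum over $k \in [k_0:K+1]$ gives $\Kpist \le \LFP + C_1 \step \LFP^2 \le 2\LFP$, completing the proof.
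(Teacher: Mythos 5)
Your proposal is correct in outline but takes a genuinely different route from the paper. You both start from the same two observations (zero gains give $\Lpi=1$ and $\bAclkpi=\bAkpi$, and \Cref{asm:LFP}(b) bounds the continuous zero-input value function by $\LFP$), but you then pursue an \emph{additive} comparison, bounding $\|\Lampik - \bPpi^{\,\mathrm{ct}}(t_k)\|\le C_1\step\LFP^2$ by telescoping one-step remainders. The paper instead uses a \emph{multiplicative PSD domination}: writing $\bPctk = \bPctk[K+1]$-recursion with the exact transition $\bPhi_1(t_{k+1},t_k)=\bAkpi$ and inhomogeneous term $\bYctk=\int_{t_k}^{t_{k+1}}\bPhi_1(s,t_k)^\top\bPhi_1(s,t_k)\,\rmd s$, it shows $\bYctk\succeq\tfrac{\step}{2}\eye$ for $\step\le\stepric$, so by monotonicity of the Lyapunov recursion $\bPctk\succeq\tfrac12\Lampik$ term-by-term, and $\|\Lampik\|\le 2\|\bPctk\|\le2\LFP$ falls out with no accumulation analysis at all. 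The step you flag as the main obstacle is indeed where your route needs care: unrolling $\bDelta_k=\bAk^\top\bDelta_{k+1}\bAk+\bR_k$ gives $\bDelta_k=\sum_{j\ge k}\cT_{k,j}(\bR_j)$, and to keep this from growing like $\sum_j\|\Phi_{j,k}\|^2$ (exponential in $T$ under the crude bound $e^{\LF(t_j-t_k)}$) you must control $\step\sum_j\cT_{k,j}(\eye)$ by a Lyapunov certificate; the natural certificate is $\bPctk$ itself, which requires exactly the lower bound $\bYctk\succeq\tfrac{\step}{2}\eye$ — at which point you have already derived the paper's key inequality and the telescoping becomes redundant. So your plan is salvageable (and the error you would obtain, $\LFP(1+O(\step))$, is even slightly sharper than the factor $2$), but the paper's one-sided domination is the more economical argument and is the reason $\stepric$ only needs to guarantee $\bPhi_1(s,t_k)^\top\bPhi_1(s,t_k)\succeq\tfrac12\eye$ on each interval rather than to tame a global accumulation.
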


\begin{proposition}[Certainty Equivalence Bound]\label{prop:ce_bound_app} Let $\bhatAkpi$ and $\bhatBkpi$ be estimates of $\bAkpi$ and $\bBkpi$, and let $\bhatKk$ denote the corresponding certainty equivalence controller sythesized by solving the following recursion given by $\bhatPk[K+1] = \eye$, and for $k \in [k_0:K]$, setting
\begin{align*}
\bhatPk[k] &= (\bhatAkpi)^\top \bhatPk[k+1]\bhatAkpi - \left(\bhatBkpi\bhatPk[k+1]\bhatAkpi\right)^\top( \step^{-1}\eye + (\bhatBkpi)^\top \bhatPk[k+1]\bhatBkpi)^{-1} \left(\bhatBkpi\bhatPk[k+1]\bhatAkpi\right) + \step \eye\\
\bhatKk &= -(\step^{-1}\eye +  (\bhatBkpi)^\top \bhatPk \bhatBkpi)^{-1}(\bhatBkpi)^\top \bhatPk\bhatAkpi,
\end{align*}
Then, as long as $\max_{k \in [k_0:K]}\|\bhatAkpi - \bAkpi\|_{\op} \vee \|\bhatBkpi - \bBkpi\|_{\op} \le (2^{17}\step\LFP^4\max\{1,\LF^3\})^{-1}$, and $\step \le \min\{\stepric,1/4\LF\Lpi\}$, we have
\begin{align*}
\max_{k \ge k_0} \|\bPpik\,[{\bhatKk[k:K]}]\| \le 4\LFP, \quad \text{and} \quad \max_{k \ge k_0}\|\bhatKk\| \le 6\max\{1,\LF\}\LFP.
\end{align*}
\end{proposition}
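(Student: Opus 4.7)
The plan is to derive \Cref{prop:ce_bound_app} as a direct corollary of the generic certainty-equivalence perturbation result, \Cref{thm:main_pert}, which handles arbitrary time-varying discrete-time linear systems. The key is the identification: I would take the ground truth to be $\bstTheta = (\bAkpi[1:K], \bBkpi[1:K]/\step)$ and the estimates $\bhatTheta = (\bhatAkpi[1:K], \bhatBkpi[1:K]/\step)$, with the $\step^{-1}$ rescaling of $\bBkpi$ chosen so that the theorem's reparametrized dynamics $\bxk[h+1] = \bAk\bxk[h] + \step \bBk \buk[h]$ reproduce the actual discretized linearization $\bxk[h+1] = \bAkpi \bxk[h] + \bBkpi \buk[h]$. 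Under this identification, the Riccati synthesized in \Cref{prop:ce_bound_app}'s hypothesis agrees (up to the rescaling) with $\Koptk(\bhatTheta)$, and the cost-to-go $\bPpik\,[\bhatKk[k:K]]$ of the synthesized gains on the true dynamics is exactly $\Pcek(\bstTheta;\bhatTheta)$; the stated cost matrices $\bQ = \bR = \eye$ satisfy \Cref{asm:cost_norm} trivially.

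Next I would verify the remaining quantitative hypotheses of \Cref{thm:main_pert}. Standard ODE bounds (\Cref{lem:bound_on_open_loop}, \Cref{lem:bound_on_bBkpi}) give $\|\bAkpi\| \le \exp(\step\LF) \le 2$ and $\|\bBkpi\|/\step \le \LF\exp(\step\LF) \le 2\LF$, so \Cref{asm:par_bounds} holds with $\KA \le 2$, $\KB \le 2\LF$; the estimates enjoy the same bounds up to the error hypothesis. The difference bounds of \Cref{asm:par_diffs} pick up a factor $\step^{-1}$ from the rescaling, so the assumed $\|\bhatAkpi - \bAkpi\| \vee \|\bhatBkpi - \bBkpi\| \le \step(2^{17}\LFP^4\max\{1,\LF^3\})^{-1}$ translates into $\DelA + \DelB \lesssim (\LFP^4\max\{1,\LF^3\})^{-1}$. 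The bound $\|\bAkpi - \eye\| \le \step\LF\exp(\step\LF) \le 2\step\LF$ supplies \Cref{asm:small_a_diff} with $\kapA \le 2\LF$. Finally, \Cref{prop:Ricatti_value_disc} applies under the standing step-size assumption $\step \le \min\{\stepric, 1/4\LF\}$ and yields $C := \max_k \|\Poptk(\bstTheta)\| \le 2\LFP$.

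With these constants in hand, I would substitute into $\Delce = 80 C^4 \KA^3 \KB^3 (1+\step C \KB)(\DelA+\DelB)$: the dominant scaling is $\LFP^4\max\{1,\LF^3\}(\DelA+\DelB)$, and the numerical constant $2^{17}$ in the proposition's hypothesis is calibrated precisely so that $\Delce \le 1/2$. \Cref{thm:main_pert}(a) then yields $\max_{k \ge k_0}\|\bPpik\,[\bhatKk[k:K]]\| = \max_k\|\Pcek(\bstTheta;\bhatTheta)\| \le (1-\Delce)^{-1} C \le 4\LFP$, which is the first claim. For the gain bound, \Cref{thm:main_pert}(b) gives $\|\bhatKk\| = \|\Koptk(\bhatTheta)\| \le \tfrac{5}{4}\KB\KA C$; using the sharper values $\KA \le \exp(\step\LF)$ and $\KB \le \LF\exp(\step\LF)$ together with $\step\LF \le 1/4$ produces an upper bound at most $6\max\{1,\LF\}\LFP$, as claimed.

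The main obstacle is not conceptual---\Cref{thm:main_pert} has already absorbed the hard Riccati perturbation analysis via a self-bounding ODE argument---but rather careful bookkeeping. One must verify that the $\bBk \gets \bBkpi/\step$ rescaling correctly maps the proposition's Riccati recursion to the theorem's, then chase all numerical factors ($C^4, \KA^3, \KB^3, 1+\step C \KB$) to confirm that the $2^{17}\LFP^4\max\{1,\LF^3\}$ prefactor in the hypothesis indeed drives $\Delce$ below the threshold demanded by part (a). A minor subtlety is that the proposition restricts $k$ to $[k_0, K]$ because \Cref{alg:gain_est} runs the Riccati only on that window; since \Cref{thm:main_pert} bounds $\Pcek$ and $\Koptk$ uniformly over the range on which the recursion is executed, restricting $k \ge k_0$ introduces no additional work.
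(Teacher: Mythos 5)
Your proposal is correct and follows essentially the same route as the paper's own proof: instantiate \Cref{thm:main_pert} on the window $[k_0{:}K]$ with the rescaling $\bBk \gets \step^{-1}\bBkpi$ (so the theorem's dynamics $\bxk[h+1]=\bAk\bxk[h]+\step\bBk\buk[h]$ reproduce the true discretization), bound $C\le 2\LFP$ via \Cref{prop:Ricatti_value_disc}, read off $\KA,\KB,\kapA$ from the discretization lemmas, check $\Delce\le 1/2$ against the $2^{17}$ prefactor, and conclude from parts (a) and (b). Your one self-correction — that the crude bounds $\KA\le 2$, $\KB\le 2\LF$ are too loose for the $6\max\{1,\LF\}\LFP$ gain bound and the sharper $\exp(1/4)$-type constants are needed — matches exactly what the paper does.
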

We can now prove \Cref{prop:ce_bound}.
\begin{proof}  Let $\bhatKk[1:K]$ be the gains synthesized according to \Cref{alg:learn_mpc_feedback}(\Cref{line:control_synth_start}-\ref{line:control_synth_end}), and $\pi' = (\buk[1:K]^\pi,\bhatKk[1:K])$ be the policy with the same inputs as $\pi$ but with these new gains. Definining the shorthand $\btilPk := \|\bPpik\,[{\bhatKk[k:K]}]\|$, \Cref{prop:ce_bound_app} then implies that 
\begin{align*}
\max_{k \ge k_0} \|\btilPk\| \le 4\LFP, \quad \text{and} \quad \max_{k \ge k_0}\|\bhatKk\| \le 6\max\{1,\LF\}\LFP.
\end{align*}
Since $\bhatKk = 0$ for $k < k_0$, we conclude $\max_{k \in [K+1]}\|\bhatKk\| \le 6\max\{1,\LF\}\LFP$, which we note is $\ge 1$ as $\LFP \ge 1$. Thus, we can take $L_{\pi'} = 6\max\{1,\LF\}\LFP$. Moreover, for this policy $\pi$, we have $\bAclk^{\pi'} = (\bAkpi + \bBkpi \bhatKk)$, so that the matrices $\btilPk$ are given by the recursion
\begin{align*}
\btilPk[K+1]\, = \eye, \quad \btilPk = (\bAclk^{\pi'})^\top \btilPk[k+1](\bAclk^{\pi'}) + \step(\eye + \bhatKk^\top \bhatKk).
\end{align*}
Hence, $\btilPk \succeq \bLamk^{\pi'}$, where   we recal that $\bLamk^{\pi'}$ satisfy the recursion
\begin{align*}
\bLamk[K+1]^{\pi'}\, = \eye, \quad \bLamk^{\pi'} = (\bAclk^{\pi'})^\top \bLamk[k+1]^{\pi'}(\bAclk^{\pi'}) + \step \eye.
\end{align*}
Thus, $\mu_{\pi',\star} := \max_{k \in [k_0:K]}\|\bLamk^{\pi'}\| \le \max_{k \in [k_0:K]}\|\btilPk\| \le 4\LFP$. 
\end{proof}

\subsection{Proof of \Cref{prop:ce_bound_app}}\label{sec:prop:ce_bound}

    \newcommand{\epsA}{\epsilon_A}
    \newcommand{\epsB}{\epsilon_B}

    Essentially, we instantiate  \Cref{thm:main_pert} with appropriate bounds on parameters, and using the last part of the recursion for $k \ge k_0$. Fix an index $k_0 \in [K]$, let $K_0 = K  - k_0-1$, and recall $[k_0:j] := \{k_0,\dots,j\}$. Throughout, we suppose
    \begin{align}
    \step \le 1/4\LF\max\{1,\Lpi,\LFP\}
    \end{align}
    Suppose we have givens estimates  $\bhatAkpi$ and $\bhatBkpi$ satisfying 
    \begin{align*}
    \max_{k \in [k_0:K]}\step^{-1}\|\bhatAkpi - \bAkpi\|_{\op} \le \epsA, \quad \max_{k \in [k_0:K]}\step^{-1}\|\bhatBkpi - \bBkpi\|_{\op} \quad \le \epsilon_{B} 
    \end{align*}
    We apply \Cref{thm:main_pert} with the substitutions
    \begin{align*}
    K \gets K_0, \quad \bhatBk \gets \step  \bhatBkpi[k+k_0-1], \quad \bBk \gets \step \bBkpi[k+k_0-1],\quad \bhatAk \gets  \bhatAkpi[k+k_0-1], \quad \bAk \gets \step \bAkpi[k+k_0-1],
    \end{align*} 
    So that, with $\bTheta = (\bAk[j],\bBk[j])_{j \in [K_0]}$ and $\bhatTheta = (\bhatAk[j],\bhatBk[j])_{j \in [K_0]}$, we have
    \begin{align*}
    \bhatKk = \begin{cases} 0 & k < k_0 \\
    \Koptk[k-k_0](\bhatTheta) & k \ge k_0
    \end{cases}
    \end{align*}
    and thus,
    \begin{align*}
    \Poptk[j](\bTheta) = \bPpikst, \quad \Pcek(\bTheta;\bhatTheta) = \bPpik[j+k_0-1]\,[\bhatKk[k_0:K]].
    \end{align*}
    With the above substitutions, we can 
    apply \Cref{prop:Ricatti_value_disc}  as long as $\step$ satisfies the condition stipulated in that proposition, we have
    \begin{align}
    \max_{j \in [K_0+1]}\|\Poptk[j](\bTheta)\| \le 2\LFP. \label{eq:Poptk_bound}
    \end{align}
    Moreover, we have that by \Cref{lem:bound_on_bBkpi,lem:Phicldisc_onestep_bound,lem:bound_on_open_loop}, the following holds for  $\step \le 1/4\LF\max\{1,\Lpi\}$,
    \begin{equation}\label{eq:par_bounds_cl_disc}
    \begin{aligned}
    &\max_{k}\step^{-1}\|\bBkpi\| \le \exp(1/4)\LF\\
    &\max_{k}\|\bAkpi\| = \max_{k}\|\Phicldisc{k+1,k}\| \le \frac{5}{3}\\
    &\max_{k \in [K]}\|\bAk - \eye\| = \max_{k \in [K]} \|\Phiolpi(t_{k+1},t_k) - \eye\| \le \exp(1/4)\step \LF 
    \end{aligned}
    \end{equation}

    Hence \Cref{asm:par_bounds,asm:small_a_diff} hold for 
    \begin{equation}\label{eq:KA_bounds}
    \begin{aligned}
    \KA &= 1 \vee \max_{k \in [k_0:K]}\|\bAkpi\| \vee \|\bhatAkpi\| \le \frac{5}{3}\\
    \KB &= 1 \vee \max_{k \in [k_0:K]}\step^{-1}(\|\bBkpi\| \vee \|\bhatBkpi\|) \le \exp(1/4)\max\{1,\LF\}\\
    \kapA &:= \step^{-1}\max_{k \in [k_0:K]}\|\bAk - \eye\| = \exp(1/4) \LF.
    \end{aligned}
    \end{equation}
    Moreover, \Cref{asm:par_diffs} holds with $\Delta_A = \epsA$, $\Delta_B = \epsB$. We can now apply \Cref{thm:main_pert}. We take  and for $\epsilon_A \le 1/3$ and $\epsilon_B \le \LF/2$, we may take
    \begin{align*}
    \Delce &:= 80 C^4\KA^3\KB^3(1+\step C\KB)(\DelA+ \DelB)\\
    C &:= \max_{j \in [K_0+1]}\|\Poptk[j](\bTheta)\| \le 2\LFP \tag{by \Cref{eq:Poptk_bound}}
    \end{align*}
    And we can bound (recalling $\step \le 1/4\LF\LFP$ and $\LFP \ge 1$)
    \begin{align*}
    \Delce &\le  80 \LFP^4 \cdot (16 \cdot (5/3)^3 \cdot \exp(3/4) ) \max\{1,\LF^3\}(1+4\step \LF\LFP)(\epsilon_A + \epsilon_B)\\
    &\le  2^{14} \LFP^4\cdot \max\{1,\LF^3\}(\epsilon_A + \epsilon_B)(1+4\step \LF\LFP) \\
    &\le  2^{15} \LFP^4\cdot \max\{1,\LF^3\}(\epsilon_A + \epsilon_B).
    \end{align*}
    Hence, as long as 
    \begin{align*}2^{16}\LFP^4\max\{1,\LF^3\}(\epsilon_A + \epsilon_B) \le 1,
    \end{align*} we have 
    \begin{align}
    \Delce \le 1/2 \label{eq:Delce_bound}
    \end{align} and therefore, by \Cref{thm:main_pert}(a),
    \begin{align*}
    \max_{k \in [k_0:K+1]}\|\bPpik\,[\bhatKk[k_0:K]]\| &= \max_{j \in [K_0 +1]}\|\Pcek[j](\bTheta;\bhatTheta)\| \le 2 \max_{j \in [K_0+1]}\|\Poptk[j](\bTheta)\| \le 4\LFP.
    \end{align*}
    Next, \Cref{thm:main_pert}(b), we can take $L_{\pi'} = 6\max\{1,\LF\}\LFP$:
    \begin{align*}
    \max_{k \in [K]}\|\bhatK^{\pi'}\| 
    &= \max_{j \in [K_0]}\|\Koptk[j](\bhatTheta)\|\\
    &\le \frac{5}{4}\KB\KA C \le \frac{5}{4}(5/3)\exp(1/1)\max\{1,\LF\}\cdot 2\LFP  \le L_{\pi'} := 6\max\{1,\LF\}\LFP. 
    \end{align*}
    \qed

\subsection{Proof of \Cref{prop:Ricatti_value_disc}}\label{sec:riccati_disc}

\subsubsection{Preliminaries. } 
    We recall the following, standard definition of continuous-time cost to-go matrices (see, e.g. \cite[Section 2.3]{anderson2007optimal}):
    \begin{definition}[Cost-to-Go Matrices]\label{defn:cont_cost_to_go}
    Given a \emph{policy} $\pi$, and a sequence of controls $\btilu(\cdot) \in \cU$, let $\bPpi(\cdot \mid \btilu)$ as the cost-to-go matrix satisfying $\xi^\top\bPpi(t \mid \btilu)\xi = \int_{s=t}^T (\|\btilx(s)\|^2 +\|\btilu(s)\|^2)\rmd s + \|\btilx(T)\|^2$, under the dynamics  $\dds \btilx(s) = \Api(s)\btilx(s) + \Bpi(s)\btilu(s), \quad \btilx(t) = \xi$. We let $\bPpist(t) $ denote the optimal cost-to-go matrix, i.e., the matrix satisfying $\xi^\top\bPpist(t) \xi = \min_{\btilu \in \cU}\xi^\top\bPpi(t \mid \btilu)\xi := V^\pi(t \mid \btilu, \xi)$. 
\end{definition}
Recall that \Cref{asm:LFP} implies $V^\pi(t \mid \btilu, \xi) \le \LFP\|\xi\|^2$, so that $\|\bPpist(t)\| \le \LFP$. In what follows, we supress superscript dependence on $\pi$, assume $\pi$ is feasible, and adopt the shorthand $\bP(t) = \bPpist(t)$, $\bA(t) = \Api(t)$, $\bB(t) = \Bpi(t)$, $\bx(t) = \xpi(t)$, and $\bu(t) = \upi(t)$. We also use the shorthand
    \begin{align}
    \Lcl := \LF(1+\LF \LFP). \label{eq:LCL_def}
    \end{align}

    The optimal input defining $\bP(t)$ in \Cref{asm:LFP} selects $\btilu(t) = \bK(t)\btilx(t)$, where  $\bK(t) = \bB(t)^\top\bP(t)$ (again, \citet[Section 2.3]{anderson2007optimal}). 
     Introduce the evaluations of  the \emph{continuous} value function $\bP(t)$ and $\bK(t)$ at the time steps $t_k$:
    \begin{align}
    \bPctk := \bP(t_k), \quad \bKctk := \bK(t_k) \label{eq:Pct_def}
    \end{align} 
    We also define an \emph{suboptimal} discrete-time value function by taking $\bPsubk = \bPpik\,[\bKctk[1:K]]$, defined in \Cref{defn:discrete_riccati},
    which satisfies
    \begin{align*}
    \bPsubk \succeq \bPpikst.
    \end{align*}
    by optimality of $\bPpikst$. 
    Hence, it suffices to bound $\bPsubk$. To do this, first express both $\bPsubk$ and $\bPctk$ as discrete Lyapunov recusions. To do so, we require the relevant transition operators.
    \begin{definition}[Relevant Transitions Operators] For $k \in [K]$ and $s \in \cI_k$, let $\bPhi_1(s,t_k)$ and $\bPhi_2(s,t_k)$ denote the solution to the ODEs 
    \begin{align*}
    \dds \bPhi_1(s,t_k) &= (\bA(s)+\bB(s)\bK(s))\bPhi_1(s,t) \numberthis \label{eq:phione_rel_trans}\\
    \dds \bPhi_2(s,t_k) &= \bA(s)\bPhi_2(s,t) + \bB(s)\bKk(t_k). 
    \end{align*}
    with initial conditions $\bPhi_1(t_k,t_k) = \bPhi_2(t_k,t_k) = \eye$. We define
    \begin{align*}
    \bXctk :=  \bPhi_1(t_{k+1},t_k), \quad \bXsubk :=  \bPhi_2(t_{k+1},t_k)
    \end{align*}
    \end{definition}

    \begin{definition}[Relevant Cost Matrices] For $k \in [K]$, define
    \begin{align*}
    \bYctk &:=  \int_{s=t_k}^{t_k} \bPhi_1(s,t_k)^\top (\eye + \bK(s)^\top\bK(s)) \bPhi_1(s,t_k)\rmd s\\
    \bYsubk &:= \step(\eye + \bK(t_k)^\top \bK(t_k))
    \end{align*}
    \end{definition}
    \begin{lemma}\label{lem:cost_to_go_char} The cost-to-go matrices $\bPctk$ and $\bPsubk$ are given by the following Lyapunov recursions, with initial conditions $\bPctk[K+1] = \bPsubk[K+1] = \eye$:
    \begin{align*}
    \bPctk &= (\bXctk)^\top \bPctk[k+1]\bXctk + \bYctk\\
    \bPsubk &= (\bXsubk)^\top \bPsubk[k+1]\bXsubk +  \bYsubk
    \end{align*}
    \end{lemma}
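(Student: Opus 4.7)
\textbf{Proof proposal for \Cref{lem:cost_to_go_char}.} The plan is to establish each Lyapunov recursion by expressing the cost-to-go at time $t_k$ as the cost accumulated on $[t_k,t_{k+1}]$ plus the cost-to-go from time $t_{k+1}$, with the two parts transported by the appropriate one-step transition operator. The sub-optimal recursion for $\bPsubk$ follows directly from the definition of $\bPpik$ together with an identification of $\bAkpi+\bBkpi \bKctk$ as $\bXsubk$. The optimal recursion for $\bPctk$ is an application of the principle of optimality (Bellman's equation) for the continuous-time LQR problem defined in \Cref{defn:cont_cost_to_go}. Both base cases at $k=K+1$ are immediate since $\bPctk[K+1]=\bPsubk[K+1]=\eye$ by construction.

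For the second identity, I would start from $\bPsubk=\bPpik\,[\bKctk[k:K]]$ and unfold one step of \Cref{defn:discrete_riccati}, giving $\bPsubk=(\bAkpi+\bBkpi\bKctk)^\top \bPsubk[k+1](\bAkpi+\bBkpi\bKctk)+\step(\eye+\bKctk^\top\bKctk)$. The running-cost term coincides with $\bYsubk$ by definition, so it remains to show $\bAkpi+\bBkpi\bKctk = \bXsubk = \bPhi_2(t_{k+1},t_k)$. For any vector $\xi$, the curve $\by(s):=(\bAkpi[s\to t_k]+\bBkpi[s\to t_k]\bKctk)\xi$ obtained by truncating the integrals defining $\bAkpi,\bBkpi$ at $s$ solves, via \Cref{lem:solve_affine_ode} applied to the affine ODE $\dds \by(s)=\bA(s)\by(s)+\bB(s)(\bKctk\xi)$ with $\by(t_k)=\xi$. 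Reading this ODE column-wise recovers exactly the defining ODE for $\bPhi_2(s,t_k)$ in \eqref{eq:phione_rel_trans}, so $\bXsubk\xi=(\bAkpi+\bBkpi\bKctk)\xi$ for all $\xi$, as desired.

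For the first identity, I would invoke the principle of optimality. By \Cref{defn:cont_cost_to_go} and the discussion following it, the optimal control achieving $\xi^\top\bPctk\xi$ is generated by the feedback $\btilu(s)=\bK(s)\btilx(s)$, so the optimal closed-loop trajectory starting from $\btilx(t_k)=\xi$ is $\btilx(s)=\bPhi_1(s,t_k)\xi$ on $[t_k,t_{k+1}]$ (with $\bPhi_1$ as in \eqref{eq:phione_rel_trans}). Splitting the integral in the definition of $V^\pi(t_k\mid\btilu,\xi)$ at $t_{k+1}$ yields
\begin{align*}
\xi^\top\bPctk\xi \;=\; \int_{t_k}^{t_{k+1}}\btilx(s)^\top(\eye+\bK(s)^\top\bK(s))\btilx(s)\,\rmd s \;+\; V^\pi\!\left(t_{k+1}\mid \btilu|_{[t_{k+1},T]},\,\btilx(t_{k+1})\right),
\end{align*}
and because the continuous-time optimal policy is Markovian, the restriction of $\btilu$ to $[t_{k+1},T]$ is itself optimal from state $\btilx(t_{k+1})$, so the second term equals $\btilx(t_{k+1})^\top \bPctk[k+1]\btilx(t_{k+1})=\xi^\top (\bXctk)^\top\bPctk[k+1]\bXctk\,\xi$. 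Substituting $\btilx(s)=\bPhi_1(s,t_k)\xi$ into the integral identifies it with $\xi^\top\bYctk\,\xi$. Since this holds for every $\xi$, the claimed Lyapunov recursion follows.

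The main obstacle I anticipate is purely bookkeeping: the two transition operators $\bXctk$ and $\bXsubk$ look similar but are not equal (one linearizes the closed-loop dynamics using the time-varying gain $\bK(s)$, the other freezes the gain at $\bK(t_k)$), so it is important to match each to the correct cost-to-go. Once the principle-of-optimality splitting is in place and $\bXsubk$ is identified via \Cref{lem:solve_affine_ode}, both recursions drop out with no further analytic content.
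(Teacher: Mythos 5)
Your proposal is correct and matches the paper's proof in structure: the $\bPsubk$ recursion is obtained exactly as in the paper by unfolding \Cref{defn:discrete_riccati} and identifying $\bXsubk = \bAkpi + \bBkpi\bKctk$ via \Cref{lem:solve_affine_ode}, and the $\bPctk$ recursion rests on the same splitting of the cost-to-go at $t_{k+1}$ transported by $\bPhi_1$. The only (immaterial) difference is that you justify the key identity $\bPctk = \bXctk^\top\bPctk[k+1]\bXctk + \bYctk$ by the dynamic-programming principle for the LQR value function, whereas the paper verifies the equivalent integrated form of the closed-loop Riccati/Lyapunov ODE by differentiation and uniqueness of ODE solutions; both routes are standard and yield the same conclusion.
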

    \begin{proof}[Proof of \Cref{lem:cost_to_go_char}] The recursion for $\bPsubk$ is directly from \Cref{defn:discrete_riccati}, and the fact that $\bXsubk = \bAclkpi$ due to \Cref{lem:solve_affine_ode}. To verify the recursion for $\bPctk$, we note that we can express $\bP(t) = \bPpist(t)$ in \Cref{defn:cont_cost_to_go} as satisfying the following ODE (see \citet[Section 2.3]{anderson2007optimal}):
    \begin{align*}
    \bP(T) = \eye, \quad -\ddt \bP(t)  = (\bA(t)+\bB(t)\bK(t))^\top \bP(t) (\bA(t)+\bB(t)\bK(t)) + \eye + \bK(t)^\top \bK(t)
    \end{align*}
    It can be checked then by computing derivatives and using existence and uniqueness of ODEs that
    \begin{align*}
     \bP(s,t) = \bPhi_1(s,t)^\top\bP(s)\bPhi_1(s,t) + \int_{s' = t}^s \bPhi_1(s',t)^\top(\eye + \bK(s')^\top \bK(s'))\bPhi(s',t)\rmd s'
    \end{align*}
    Specializing to $s = t_{k+1}$ and $t = t_k$ verifies the desired recursion.
    \end{proof}

    As $\bPctk = \bP(t_k)$, the terms $\bPctk$ are bounded whenever $\bP(\cdot)$ is. Therefore, we use a Lyapunov perturbation bound to bound $\bPsubk$ in terms of $\bPctk$. This requires reasoning about the differences $\bXctk - \bXsubk$ and $\bYctk - \bYsubk$, which we do in just below.  

\subsubsection{Controlling the rate of change of $\bK(t)$.}
    \paragraph{} Our first step in controlling the perturbation term is to argue that the optimal controller $\bK(t)$ does not change too rapidly. As $\bK(t) = \bB(t)^\top \bP(t)$, we begin by bounding the change in $\bB(t)$.
    \begin{claim}[Change in $\bB(t)$]\label{claim:B_change}  $\bB(t)$ is differentiable in $t$ on for $t \in \interior(\cI_k)$, and satisfies $\|\ddt \bB(t)\| \le \MF \KF$
    \end{claim}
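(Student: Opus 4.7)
} The plan is to unfold the definition $\bB(t) = \Bpi(t) = \partial_u \fdyn(\xpi(t), \upi(t))$ and apply the chain rule, using the fact that on the interior of each interval $\cI_k$ the nominal input $\upi(t)$ is \emph{constant}. This is the key structural observation: because $\pi$ has a zero-order-hold input, $\upi(t) = \bukpi[k]$ for all $t \in \interior(\cI_k)$, so the only $t$-dependence of $\bB(t)$ on $\interior(\cI_k)$ comes through $\xpi(t)$.

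First I would verify differentiability. The map $\fdyn$ is $\cctwo$ by \Cref{asm:max_dyn_final}, and $t \mapsto \xpi(t)$ is $\cC^1$ as the solution of an ODE with a $\cctwo$ right-hand side. Hence $t \mapsto \partial_u \fdyn(\xpi(t), \bukpi[k])$ is $\cC^1$ on $\interior(\cI_k)$, and the chain rule gives
\begin{align*}
\ddt \bB(t) = \partial_{xu} \fdyn(\xpi(t), \upi(t)) \cdot \ddt \xpi(t) = \partial_{xu} \fdyn(\xpi(t), \upi(t)) \cdot \fdyn(\xpi(t), \upi(t)),
\end{align*}
where in the last step I use that $\xpi$ solves the open-loop ODE driven by $\upi$.

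Next I would bound each factor using feasibility of $\pi$, which guarantees that $(\xpi(t), \upi(t))$ is feasible in the sense of \Cref{defn:feas} for every $t \in [0,T]$. \Cref{asm:max_dyn_final} then yields $\|\partial_{xu} \fdyn(\xpi(t),\upi(t))\| \le \|\nablatwo \fdyn(\xpi(t),\upi(t))\| \le \MF$ and $\|\fdyn(\xpi(t),\upi(t))\| \le \KF$. Multiplying gives $\|\ddt \bB(t)\| \le \MF \KF$, which is the desired bound.

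There is essentially no obstacle here: the proof is a one-line application of the chain rule together with the boundedness conditions in \Cref{asm:max_dyn_final}. The only subtle point worth flagging is that the bound only holds on the \emph{interior} of each $\cI_k$, which is why the statement restricts to $\interior(\cI_k)$: at the grid points $t_k$ the piecewise-constant input $\upi(t)$ jumps and $\bB(t)$ generally fails to be differentiable. This is not a problem for the subsequent integration arguments since the exceptional set $\{t_k\}$ has measure zero.
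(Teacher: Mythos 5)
Your proof is correct and follows essentially the same route as the paper: both exploit that $\upi$ is constant on $\interior(\cI_k)$ so that the only $t$-dependence of $\bB(t)$ passes through $\xpi(t)$, then apply the chain rule and bound $\|\partial_{xu}\fdyn\|\le\MF$ and $\|\ddt\xpi(t)\|=\|\fdyn(\xpi(t),\upi(t))\|\le\KF$ via feasibility and \Cref{asm:max_dyn_final}. The only cosmetic difference is that the paper writes out the full chain rule including the $\partial_{uu}\fdyn\cdot\ddt\bu$ term and then notes it vanishes, whereas you drop it from the outset.
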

    \begin{proof} Recall that $\bB(t) = \partial_u f(\bx(t),\bu(t))$. For $t \in \interior(\cI_k)$, $\bu(t)$ is constant, and $\bx(t)$, being the solution to an ODE, is also $t$-differentiable. We now bound $\|\ddt \bB(t)\|$.  We have
    \begin{align*}
    \|\ddt \bB(t)\| &= \|\ddt \partial_u f(\bx(t),\bu(t))\| \\
    &= \| \partial_{uu} f(\bx(t),\bu(t)) \ddt \bu(t) + \partial_{xu} f(\bx(t),\bu(t)) \ddt \bx(t)\|\\
    &\le \MF\|\ddt \bx(t)\| \le \MF\KF
    \end{align*}
    where the second-to-last inequality is the limiting consequence holds from \Cref{asm:max_dyn_final}, and where the term $ \partial_{uu} f(\bx(t),\bu(t)) \ddt \bu(t)$ vanishes vanishes because $\bu(t) = \upi(t)$ is constant on $\cI_k$.
    \end{proof}
    Next, we bound the change in $\bP(t)$:
    \begin{claim}[Change in $\bP(t)$]\label{claim:P_change}  $\bP(t)$ is differentiable in $t$, and  $\|\ddt \bP(t)\| \le  (\Lcl/\LF)^2$.
    \end{claim}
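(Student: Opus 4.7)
The plan is to derive the bound directly from the matrix Riccati differential equation satisfied by $\bP(t)$. The display in the excerpt gives the ``closed-loop'' form
\begin{align*}
-\ddt \bP(t) = (\bA(t)+\bB(t)\bK(t))^\top \bP(t) + \bP(t)(\bA(t)+\bB(t)\bK(t)) + \eye + \bK(t)^\top \bK(t),
\end{align*}
and substituting the optimality relation $\bK(t) = -\bB(t)^\top \bP(t)$ (the sign convention in the excerpt absorbs the minus sign into $\bK$) cancels two of the cross terms and reduces this to the classical form
\begin{align*}
-\ddt \bP(t) = \bA(t)^\top \bP(t) + \bP(t)\bA(t) - \bP(t)\bB(t)\bB(t)^\top \bP(t) + \eye.
\end{align*}
This reformulation is convenient because each term on the right is a simple product that admits a clean operator-norm bound, and it avoids the overcounting that would occur by naively bounding $\bP\bB\bK$ and $\bK^\top\bK$ separately.

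First, I would check differentiability on the interior of each interval $\cI_k$. There, $\bu^\pi(\cdot)$ is constant by construction and $\bx^\pi(\cdot)$ is $\cctwo$ (as the solution of a $\cctwo$ ODE), so $t \mapsto (\bA(t),\bB(t))$ is $\cC^1$ on $\interior(\cI_k)$, in direct analogy with the argument already given for Claim~\ref{claim:B_change}. Standard ODE existence theory run backwards from the terminal condition $\bP(T) = \eye$ then yields a unique $\cC^1$ solution $\bP(t)$ on each such subinterval. Next, I would apply the triangle inequality and submultiplicativity of the operator norm to the right-hand side, using feasibility of $\pi$ together with \Cref{asm:max_dyn_final} to bound $\|\bA(t)\| \vee \|\bB(t)\| \le \LF$, and \Cref{asm:LFP} to bound $\|\bP(t)\| \le \LFP$. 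This yields
\begin{align*}
\|\ddt \bP(t)\| \le 2\|\bA(t)\|\,\|\bP(t)\| + \|\bB(t)\|^2\|\bP(t)\|^2 + 1 \le 2\LF\LFP + \LF^2\LFP^2 + 1 = (1+\LF \LFP)^2,
\end{align*}
and the right-hand side equals $(\Lcl/\LF)^2$ by the definition $\Lcl := \LF(1+\LF\LFP)$ recorded in \Cref{eq:LCL_def}.

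There is no genuine obstacle: once the Riccati equation is in its classical form, the bound is a one-line calculation. The only subtlety is that at the breakpoints $t_k$ the input $\bu^\pi$ is discontinuous, so $\ddt \bP(t)$ must be understood in the one-sided sense on $\interior(\cI_k)$ --- exactly the convention adopted in the parallel treatment of $\bB(t)$ in Claim~\ref{claim:B_change}. Since our bound on $\|\ddt \bP(t)\|$ is uniform in $t$ with the same constant on every subinterval, this one-sidedness is immaterial for the Lyapunov-perturbation arguments downstream that consume the claim (e.g.\ when comparing $\bXctk$ to $\bXsubk$ and $\bYctk$ to $\bYsubk$).
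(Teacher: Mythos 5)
Your proposal is correct and follows essentially the same route as the paper: the paper also invokes the classical Riccati ODE $-\ddt\bP = \bA^\top\bP + \bP\bA - \bP\bB\bB^\top\bP + \eye$ and bounds its right-hand side term by term using $\|\bA(t)\|\vee\|\bB(t)\|\le\LF$ and $\|\bP(t)\|\le\LFP$ to get $1+2\LF\LFP+\LF^2\LFP^2 = (1+\LF\LFP)^2 = (\Lcl/\LF)^2$. Your extra care about one-sided differentiability at the breakpoints $t_k$ is a reasonable refinement the paper elides, but it does not change the argument.
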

    \begin{proof} Note that $\bP(t)$ is given by the ODE
    \begin{align*}
    &\bP(T) = \eye, \quad -\ddt \bP(t) = \bA(t)^\top \bP(t) + \bP(t) \bA(t) - \bP(t) \bB(t)\bB(t)^\top \bP(t) + \eye,
    \end{align*}
    which ensures differentiability. Thus, as $\|\bA(t)\| \vee \|\bA(t)\| \vee 1 \le \LF$  by \Cref{asm:max_dyn_final} and  \Cref{asm:LFP},
    \begin{align*}
    \|\ddt \bP(t)\| \le 1 + 2\LF\|\bP(t)\| + \LF^2\|\bP(t)\|^2 \le (1 + 2\LF\LFP + \LF^2 \LFP^2) \le (1+ \LF\LFP)^2,
    \end{align*}
    which is precisely $(\Lcl/\LF)^2$.
    \end{proof}

    We now establish a bound on the change in $\bK(t)$.
    \begin{claim}[Continuity of Optimal Controller]\label{lem:K_cont}For all $t \in \cI_k$,
    \begin{align*}
    \|\ddt \bK(t)\| \le \MF \KF\LFP +  \LF^{-1}\Lcl^2. 
    \end{align*}
    \end{claim}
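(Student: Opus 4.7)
The plan is a direct product-rule calculation, leveraging the three ingredients already established just above: (i) $\bK(t) = \bB(t)^\top \bP(t)$ (stated in the paragraph following \Cref{defn:cont_cost_to_go}, from \cite[Section 2.3]{anderson2007optimal}); (ii) the $t$-differentiability of $\bB(t)$ on the interior of $\cI_k$ together with the bound $\|\ddt \bB(t)\| \le \MF\KF$ from \Cref{claim:B_change}; and (iii) the $t$-differentiability of $\bP(t)$ together with the bound $\|\ddt \bP(t)\| \le (\Lcl/\LF)^2$ from \Cref{claim:P_change}.

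First, I would note that since both $\bB(t)$ and $\bP(t)$ are differentiable on $\interior(\cI_k)$, so is their product $\bK(t) = \bB(t)^\top \bP(t)$, and the product rule yields
\begin{align*}
\ddt \bK(t) \;=\; (\ddt \bB(t))^\top \bP(t) \;+\; \bB(t)^\top \ddt \bP(t).
\end{align*}
Taking operator norms, applying the triangle inequality, and submultiplicativity gives
\begin{align*}
\|\ddt \bK(t)\| \;\le\; \|\ddt \bB(t)\|\cdot \|\bP(t)\| \;+\; \|\bB(t)\|\cdot \|\ddt \bP(t)\|.
\end{align*}

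Second, I would plug in the standing bounds: $\|\bP(t)\| \le \LFP$ (from \Cref{asm:LFP}, since $\bP(t) = \bPpist(t)$ and $\|\bPpist(t)\|\le \LFP$ as noted right after \Cref{defn:cont_cost_to_go}), $\|\bB(t)\| \le \LF$ (from \Cref{asm:max_dyn_final}, using feasibility of $\pi$), and the two derivative bounds from \Cref{claim:B_change} and \Cref{claim:P_change}. This gives
\begin{align*}
\|\ddt \bK(t)\| \;\le\; \MF\KF\cdot \LFP \;+\; \LF\cdot (\Lcl/\LF)^2 \;=\; \MF\KF\LFP + \LF^{-1}\Lcl^2,
\end{align*}
which is exactly the claimed bound.

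There is essentially no obstacle: the two previous claims do all the real work (in particular, the bound on $\|\ddt \bP(t)\|$ in \Cref{claim:P_change} already uses the Riccati ODE together with $\|\bP(t)\|\le \LFP$, producing the $(1+\LF\LFP)^2 = (\Lcl/\LF)^2$ factor). The only minor care point is making sure we stay on $\interior(\cI_k)$, where $\bu(t) = \upi(t)$ is constant so that $\bB(t)$ is differentiable in the classical sense; at the grid points $t_k$ the statement should be interpreted as a one-sided derivative, but this does not affect the norm bound since the same computation applies to left- and right-derivatives separately.
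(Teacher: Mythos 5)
Your proposal is correct and is essentially identical to the paper's proof: both apply the product rule to $\bK(t) = \bB(t)^\top\bP(t)$, then combine $\|\ddt\bB(t)\|\le\MF\KF$ and $\|\ddt\bP(t)\|\le(\Lcl/\LF)^2$ from the two preceding claims with $\|\bP(t)\|\le\LFP$ and $\|\bB(t)\|\le\LF$. Your extra remark about one-sided derivatives at the grid points is a harmless refinement the paper leaves implicit.
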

    \begin{proof}[Proof of \Cref{lem:K_cont}] By \Cref{claim:B_change,claim:P_change}, we have
    \begin{align*}
    \|\ddt\bK(t)\| &\le \|\ddt \bB(t)\|\|\bP(t)\| + \|\bB(t)\|\|\ddt\bP(t)\|\\
    &\le \|\ddt \bB(t)\|\LFP + \LF (\Lcl/\LF)^2\\
    &\le \MF \KF\LFP +  \LF^{-1}\Lcl^2.
    \end{align*}
    \end{proof}
    By integrating, we arrive at the next claim. 
    \begin{claim}\label{claim:K_diff} The following bound holds
    \begin{align*}
    \sup_{s \in \cI_k}\|\bK(s) - \bK(t_k)\| \le \step\left(\MF \KF\LFP +  \LF^{-1}\Lcl^2 \right).
    \end{align*}
    \end{claim}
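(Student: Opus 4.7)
The target inequality is a direct integral consequence of the pointwise derivative bound in Claim \ref{lem:K_cont}. The plan is very short: fix $k$ and $s \in \cI_k = [t_k, t_{k+1})$, write $\bK(s) - \bK(t_k) = \int_{t_k}^{s} \ddt \bK(t)\, \rmd t$ using the fact that $\bK(\cdot)$ is continuously differentiable on $\interior(\cI_k)$ (and continuous at the endpoint $t_k$), then apply the triangle inequality together with Claim \ref{lem:K_cont}.

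Concretely, the proof will proceed as follows. Claim \ref{lem:K_cont} gives, for every $t \in \interior(\cI_k)$, the uniform bound
\[
\|\ddt \bK(t)\| \le \MF \KF \LFP + \LF^{-1} \Lcl^2.
\]
Since $\bK(\cdot)$ is absolutely continuous on $\cI_k$ (it is the product $\bB(t)^\top \bP(t)$ of two differentiable factors by Claims \ref{claim:B_change} and \ref{claim:P_change}), the fundamental theorem of calculus applies, so for any $s \in \cI_k$,
\[
\bK(s) - \bK(t_k) = \int_{t_k}^{s} \ddt \bK(t)\, \rmd t.
\]
Passing to norms and invoking the pointwise bound,
\[
\|\bK(s) - \bK(t_k)\| \le \int_{t_k}^{s} \|\ddt \bK(t)\|\, \rmd t \le (s - t_k) \bigl( \MF \KF \LFP + \LF^{-1} \Lcl^2 \bigr).
\]
Since $s - t_k < \step$, taking the supremum over $s \in \cI_k$ yields the claimed bound.

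There is no real obstacle here; the only subtlety worth noting explicitly is that the derivative bound from Claim \ref{lem:K_cont} holds on the open interval $\interior(\cI_k)$ (since $\bu(\cdot)$ is only piecewise constant and hence $\bB(\cdot)$ is only piecewise $\cctwo$), but this is enough to justify the integral representation on $[t_k, s]$ for any $s < t_{k+1}$, which is precisely what the claim asserts via the supremum over $s \in \cI_k$.
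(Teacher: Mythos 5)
Your proof is correct and is exactly the paper's argument: the paper's own proof of this claim reads, in its entirety, ``Directly from \Cref{lem:K_cont},'' i.e.\ integrate the pointwise derivative bound over $[t_k,s]$ and use $s-t_k\le\step$, which is precisely what you spell out.
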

    \begin{proof}[Proof of \Cref{claim:K_diff}]
    Directly from \Cref{lem:K_cont}.
    \end{proof}

\subsubsection{Controlling differences in $\|\bXctk - \bXsubk\|$ and $\|\bYctk - \bYsubk\|$}

    We first state a bound on the magnitudes of various quantities of interest.
    \begin{claim}\label{lem:K_Acl_bounds} $\|\bK(t)\| \le \LFP\LF$ and $\|\bA(t) + \bB(t)\bK(t)\| \le \Lcl$, where we recall $\Lcl := \LF(1+\LF \LFP)$.
    \end{claim}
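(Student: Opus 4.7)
The plan is to derive both bounds directly from the closed-form expression $\bK(t) = \bB(t)^\top \bP(t)$ (recalled from the discussion just above \Cref{eq:Pct_def}), combined with the pointwise bounds on $\bA(t),\bB(t)$ coming from \Cref{asm:max_dyn_final} and the bound on $\bP(t)$ coming from \Cref{asm:LFP}.

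First I would bound $\|\bK(t)\|$. Since $\pi$ is feasible, the pair $(\bx(t),\bu(t))$ is feasible for every $t \in [0,T]$, so \Cref{asm:max_dyn_final} yields $\|\bB(t)\| = \|\partial_u \fdyn(\bx(t),\bu(t))\| \le \LF$. On the other hand, \Cref{asm:LFP} asserts that $V^\pi(t \mid \btilu,\xi) \le \LFP \|\xi\|^2$ for the optimal $\btilu$, which in the notation of \Cref{defn:cont_cost_to_go} and the discussion immediately following it gives $\|\bP(t)\| = \|\bPpist(t)\| \le \LFP$. Combining the two via submultiplicativity of the operator norm yields
\begin{equation*}
\|\bK(t)\| = \|\bB(t)^\top \bP(t)\| \le \|\bB(t)\|\,\|\bP(t)\| \le \LF\LFP,
\end{equation*}
which is the first claim.

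For the second claim, I would apply the triangle inequality to $\bA(t)+\bB(t)\bK(t)$, using $\|\bA(t)\| \le \LF$ (again from \Cref{asm:max_dyn_final} and feasibility of $\pi$) together with the two bounds above:
\begin{equation*}
\|\bA(t)+\bB(t)\bK(t)\| \le \|\bA(t)\| + \|\bB(t)\|\,\|\bK(t)\| \le \LF + \LF\cdot\LF\LFP = \LF(1+\LF\LFP),
\end{equation*}
which equals $\Lcl$ by the definition \Cref{eq:LCL_def}. There is no real obstacle here, since the whole argument is a two-line composition of the norm bounds already assumed on $\bA(t),\bB(t),\bP(t)$; the only subtlety is to be sure we are invoking the feasibility of $\pi$ uniformly in $t$ when applying \Cref{asm:max_dyn_final} to the linearization matrices $\bA(t)=\Api(t)$ and $\bB(t)=\Bpi(t)$.
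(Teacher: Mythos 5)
Your proof is correct and follows essentially the same route as the paper: both arguments use $\bK(t)=\bB(t)^\top\bP(t)$ together with $\|\bP(t)\|\le\LFP$ from \Cref{asm:LFP} and $\|\bA(t)\|\vee\|\bB(t)\|\le\LF$ from \Cref{asm:max_dyn_final} under feasibility of $\pi$, then conclude by submultiplicativity and the triangle inequality. No gaps.
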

    \begin{proof} Recall that $\bK(t) = \bB(t)^\top\bP(t)$. From \Cref{asm:LFP}, $\|\bP(t)\| \le \LFP$, and $\|\bB(t)\| \le \LF$ by \Cref{asm:max_dyn_final}, which gives $\|\bK(t)\| \le \LFP\LF$. Bounding $\|\bA(t)\|\vee \|\bB(t)\| $ by $\LF$ (again, invoking \Cref{asm:max_dyn_final}), concludes the demonstration. 
    \end{proof}
    Next, we show that $\bPhi_1(s,t_{k})$ is close to the identity for sufficiently small $\step$.
    \begin{claim}\label{claim:yone_xi} Suppose that $\step \Lcl \le 1/2$. Then, 
    \begin{align*}
    \|\eye - \bPhi_1(s,t_{k})\| \le \step\Lcl\exp(1/2) \le \min\{1, 2\step \Lcl\}
    \end{align*}
    \end{claim}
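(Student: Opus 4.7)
The strategy is to rely on the integral representation of the linear ODE defining $\bPhi_1(s,t_k)$, then feed it through a Gronwall/Picard argument followed by a simple Taylor bound on the exponential. Concretely, because $\dds \bPhi_1(s,t_k) = (\bA(s)+\bB(s)\bK(s))\bPhi_1(s,t_k)$ with $\bPhi_1(t_k,t_k) = \eye$, integration gives
\begin{align*}
\bPhi_1(s,t_k) \;=\; \eye + \int_{t_k}^s \bigl(\bA(s')+\bB(s')\bK(s')\bigr)\,\bPhi_1(s',t_k)\,\rmd s'.
\end{align*}
Combined with the uniform bound $\|\bA(s')+\bB(s')\bK(s')\| \le \Lcl$ from \Cref{lem:K_Acl_bounds}, this will be the only dynamical input we need.

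The first step is to bound $\|\bPhi_1(s,t_k)\|$ on $\cI_k$. I would apply \Cref{lem:picard} (or, equivalently, scalar Gronwall applied to $\|\bPhi_1(s,t_k)\|$) with reference curve $z(\cdot)\equiv \eye$ and Lipschitz constant $\Lcl$, yielding $\|\bPhi_1(s,t_k)\| \le \exp(\Lcl(s-t_k)) \le \exp(\step\Lcl) \le \exp(1/2)$, using the hypothesis $\step\Lcl \le 1/2$.

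The second step is to plug this back into the integral representation:
\begin{align*}
\|\bPhi_1(s,t_k)-\eye\| \;\le\; \int_{t_k}^{s} \Lcl\, \|\bPhi_1(s',t_k)\|\,\rmd s' \;\le\; \int_{t_k}^{s} \Lcl\, \exp(\Lcl(s'-t_k))\,\rmd s' \;=\; \exp(\Lcl(s-t_k)) - 1.
\end{align*}
The final step is purely scalar: for $x := \Lcl(s-t_k) \in [0,1/2]$, the identity $e^{x}-1 = x\int_{0}^{1}e^{tx}\rmd t \le x\,e^{x} \le x\,e^{1/2}$ gives $\|\bPhi_1(s,t_k)-\eye\| \le \step\Lcl\exp(1/2)$. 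The two simplifications $\step\Lcl\exp(1/2) \le 2\step\Lcl$ (since $e^{1/2}<2$) and $\step\Lcl\exp(1/2) \le \tfrac{1}{2}\cdot 2 = 1$ (using $\step\Lcl\le 1/2$) produce the advertised $\min\{1,2\step\Lcl\}$ upper bound.

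There is no genuine obstacle here: the only subtle point is selecting the right form of the Picard/Gronwall bound so that the exponential prefactor is $\exp(1/2)$ rather than some worse constant, which is what the condition $\step\Lcl \le 1/2$ is calibrated for. Everything else is a one-line integral estimate and a Taylor bound on $e^{x}-1$.
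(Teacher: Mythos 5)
Your proposal is correct and follows essentially the same route as the paper: the paper applies the Picard lemma (\Cref{lem:picard}) directly with the constant reference curve $\bz(s)=\xi$ for unit vectors $\xi$, obtaining $\|\by_1(s)-\xi\|\le \exp(\Lcl(s-t_k))\cdot \Lcl(s-t_k)\le \step\Lcl\exp(1/2)$, which is the same quantity your two-step Gronwall-plus-integral-representation argument produces. The only cosmetic difference is that you work with the matrix ODE and split the estimate into a growth bound followed by a second integration, whereas the paper compresses both into a single invocation of the Picard lemma; the constants and the final $\min\{1,2\step\Lcl\}$ simplification are identical.
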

    \begin{proof}[Proof of \Cref{claim:yone_xi}]
    It suffices to bound, for all $\xi \in \R^{\dimx}:\|\xi\| = 1$ the differences $\|\by_1(s) - \xi\|$ where  $\by_1 = \Phi_1(s,t_{k})\xi$.
    We do this via Picard's lemma.

    Specifically, write $\dds \by_1(s) = \tilde{f}(\by_1(s),s)$, where $\tilde{f}(y,s) = (\bA(s) + \bB(s)\bK(s))y$, and $\bz(s) = \xi$. As $\tilde{f}(y,s)$ is $\sup_{s \in \cI_k}\|\bA(s) + \bB(s)\bK(s)\| \le \Lcl$ Lipchitz is $y$ (here, we use \Cref{lem:K_Acl_bounds}) and as $\dds \xi = 0$, and  the Picard Lemma (\Cref{lem:picard}) gives
    \begin{align*}
    \|\xi - \by_1(s)\|  &\le \exp((s-t_k)(2\LF^2\LFP)) \int_{s'=t_k}^{s}\|(\bA(s') + \bB(s')\bK(s'))\xi\| \rmd s'\\
    &\le \exp((s-t_k)\Lcl)\int_{s'=t_k}^{s}\|(\bA(s') + \bB(s')\bK(s'))\|\rmd s' \tag{$\|\xi\|\le 1$}\\
    &\le\exp((s-t_k)\Lcl) \cdot (s-t_k)\Lcl,\\
    &\le\exp(\step \Lcl) \cdot \step \Lcl,\\
    &\le\exp(1/2)\step \Lcl
    \end{align*}
    where we assume $\step \Lcl \le 1/2$. 
    \end{proof}
    We can now bound the differences between $\|\bXctk - \bXsubk\| = \|\bPhi_2(t_{k+1},t_k) - \bPhi_1(t_{k+1},t_k)\|$.
    \begin{lemma}\label{lem:perturb_of_xs} For $k \in [K]$ and $s \in \cI_k$, let $\bPhi_1(s,t_k)$ and $\bPhi_2(s,t_k)$ denote the solution to the ODEs 
    \begin{align*}
    \dds \bPhi_1(s,t_k) = (\bA(s)+\bB(s)\bK(s))\bPhi_1(s,t), \quad \dds \bPhi_2(s,t_k) = \bA(s)\bPhi_2(s,t) + \bB(s)\bKk(t_k).
    \end{align*}
    with initial conditions $\bPhi_1(t_k,t_k) = \bPhi_2(t_k,t_k) = \eye$. Then, if $\step \Lcl \le 1/2$,
    \begin{align*}
    &\|\bXctk - \bXsubk\|= \|\bPhi_2(t_{k+1},t_k) - \bPhi_1(t_{k+1},t_k)\|\le  2\step^2\left(\LF\LFP\MF\KF + 3\Lcl^2 \right).
    \end{align*}
    \end{lemma}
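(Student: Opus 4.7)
The plan is to reduce the operator-norm bound to a vector-valued bound and then apply Picard's Lemma (\Cref{lem:picard}). Fix a unit vector $\xi \in \R^{\dimx}$ and set $\by_i(s) := \bPhi_i(s,t_k)\xi$ for $i \in \{1,2\}$; since $\|\bXctk - \bXsubk\| = \sup_{\|\xi\|=1}\|\by_1(t_{k+1}) - \by_2(t_{k+1})\|$, it suffices to bound each vector difference. Viewing $\by_2$ as the solution to $\dds y = \phi(y,s) := \bA(s) y + \bB(s)\bK(t_k)\xi$---a map that is $\LF$-Lipschitz in $y$ by \Cref{asm:max_dyn_final}---and $\by_1$ as a comparison curve with the same initial condition $\xi$, Picard's Lemma yields
\[
\|\by_2(t_{k+1}) - \by_1(t_{k+1})\| \le \exp(\step \LF)\int_{t_k}^{t_{k+1}}\left\| \dds \by_1(s) - \phi(\by_1(s), s)\right\|\,ds,
\]
and the residual telescopes as $\dds \by_1(s) - \phi(\by_1(s),s) = \bB(s)\bigl[(\bK(s) - \bK(t_k))\by_1(s) + \bK(t_k)(\by_1(s) - \xi)\bigr]$.

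Each factor can then be bounded by a prior result: $\|\bB(s)\| \le \LF$ by \Cref{asm:max_dyn_final}; $\|\bK(s) - \bK(t_k)\| \le \step(\MF\KF\LFP + \Lcl^2/\LF)$ by \Cref{claim:K_diff}; $\|\bK(t_k)\| \le \LF\LFP$ by \Cref{lem:K_Acl_bounds}; and $\|\by_1(s) - \xi\| \le \step\Lcl \exp(1/2)$, whence $\|\by_1(s)\| \le 1 + \step\Lcl\exp(1/2) \le 2$ under the assumption $\step\Lcl \le 1/2$, by \Cref{claim:yone_xi}. The main algebraic simplification is the elementary bound $\LF^2\LFP = \LF \cdot \LF\LFP \le \LF(1+\LF\LFP) = \Lcl$, which lets me fold the cross-term $\LF \cdot \|\bK(t_k)\| \cdot \|\by_1(s) - \xi\| \le \step\LF^2\LFP\Lcl\exp(1/2)$ into a multiple of $\step\Lcl^2$.

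Combining these bounds, the integrand is controlled by a constant times $\step(\LF\LFP\MF\KF + \Lcl^2)$ uniformly on $\cI_k$; integrating over the length-$\step$ interval and multiplying by the Picard factor $\exp(\step\LF) \le \exp(1/2)$ (using $\LF \le \Lcl$ and $\step\Lcl \le 1/2$) gives the claimed $O(\step^2(\LF\LFP\MF\KF + \Lcl^2))$ bound. The proof is conceptually straightforward; the only delicate issue is constant-chasing, specifically tightening the crude estimate $\|\by_1(s)\| \le 2$ using the sharper $1 + \step\Lcl \exp(1/2)$ and carefully accounting for $\exp(1/2)$ factors so that the stated coefficients $2$ and $3$ emerge exactly in the final bound.
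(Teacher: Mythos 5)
Your proposal is correct in substance and follows essentially the same route as the paper: reduce to unit initial conditions, apply Picard's Lemma (\Cref{lem:picard}) with $\by_1$ as the comparison curve against the dynamics defining $\by_2$, and control the residual $\bB(s)(\bK(s)\by_1(s)-\bK(t_k)\xi)$ using \Cref{claim:K_diff}, \Cref{lem:K_Acl_bounds}, and \Cref{claim:yone_xi}. The one point where your write-up diverges is the splitting of the cross term: you write it as $(\bK(s)-\bK(t_k))\by_1(s) + \bK(t_k)(\by_1(s)-\xi)$, which attaches the factor $\|\by_1(s)\|\le 1+\step\Lcl e^{1/2}$ to the $\MF\KF\LFP$ contribution and, after the Picard factor $e^{1/2}$, pushes that coefficient to roughly $3$ rather than the stated $2$; the paper instead splits as $(\bK(s)-\bK(t_k))\xi + \bK(s)(\by_1(s)-\xi)$ so that the difference of gains hits a unit vector, which is what makes the constants $2$ and $3$ come out exactly. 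This is a cosmetic fix, not a gap in the argument.
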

    \begin{proof} It suffices to bound, for all initial conditions, $\xi \in \R^{\dimx}$ with $\|\xi\|= 1$, the solutions  $\by_i(s) = \bPhi_i(s)\xi$. We apply the Picard Lemma, with $\bz(s) \gets \by_1(s)$, and express $\by_2(s) = \tilde f(\by_2(s), s)$, where $\tilde{f}(y,s) = \bA(s)y + \bB(s)\bK(t)$. As $\|\bA(s)\| \le \LF$, the Picard Lemma (\Cref{lem:picard}) yields
    \begin{align*}
    \|\by_1(t_{k+1}) - \by_2(t_{k+1})\| &\le \exp(\LF (t-s)) \int_{s = t}^{t_{k+1}} \|\bA(s)\by_1(s) + \bB(s)\bK(t_k)\xi - \dds \by_1(s)\|\rmd s\\
    &\le \exp(\LF \step ) \int_{s = t}^{t_{k+1}}  \|\bA(s)\by_1(s) + \bB(s)\bK(t_k)\xi - (\bA(s)+ \bB(s)\bK(s))\by_1(s) \|\rmd s\\
    &\le \exp(\LF \step) \int_{s = t}^{t_{k+1}} \|\bB(s)(\bK(s)\by_1(s) - \bK(t_k)\xi) \|\rmd s\\
    &\le \LF\exp(\LF \step) \int_{s = t}^{t_{k+1}}\|\bK(s)\by_1(s) - \bK(t_k)\xi \|\rmd s\\
    &\le \LF\exp(\LF \step) \int_{s = t}^{t_{k+1}}  (\|(\bK(s) - \bK(t_k))\xi \| + \|\bK(s)(\xi - \by_1(s)) \| )\rmd s\\
    &\le \LF\exp(\LF \step) \int_{s = t}^{t_{k+1}}(\|\bK(s) - \bK(t_k)\| + \LF\LFP\|\xi - \by_1(s) \| )\rmd s\\
    &\le \exp(1/2)\LF\step \max_{s \in \cI_k} (\|\bK(s) - \bK(t_k)\| + \LF\LFP\|\xi - \by_1(s) \| )
    \end{align*}
    where the second-to-last line uses $\|\xi\| = 1$ and $\|\bK(s)\| \le \LFU\LFP$, and the last uses $\step \le 1/2\LFX$ and well as a bound of an integral by a maximum. By claims \Cref{claim:yone_xi,claim:K_diff},
    \begin{align*}
    &\max_{s \in \cI_k} (\|\bK(s) - \bK(t_k)\| + \LF\LFP\|\xi - \by_1(s) \| ) \\
    &\le  \step\left(\MF \KF\LFP +  \LF^{-1}\Lcl^2 \right) + \LF\LFP\exp(1/2)\step\Lcl\\
    &=  \step \left(\LFP\MF\KF + \Lcl(\LF\LFP\exp(1/2)+\LF^{-1}\Lcl) \right)\\
    &\le  \step \left(\LFP\MF\KF + 3\LF^{-1}\Lcl^2 \right),
    \end{align*}
    where in the last inequality we use $\exp(1/2) \le 2$ and $\LF\LFP \le (1+\LF\LFP) = \Lcl \LF^{-1}$. Therefore, again using $\exp(1/2) \le 2$,
    \begin{align*}
    \|\by_1(t_{k+1}) - \by_2(t_{k+1})\| \le 2\step^2\left(\LF\LFP\MF\KF + 3\Lcl^2 \right).
    \end{align*}
    Quantifying over all unit-norm initial conditions $\xi$ concludes the proof.
    \end{proof}
    We now establish a qualitatively similar bound on $\step\|\bYctk - \bYsubk \|$.
    \begin{lemma}\label{lem:perturb_of_Y's} $\|\bYctk - \bYsubk \| \le 2\step^2\left(\MF \KF\LFP^2\LF +  7\LFP\Lcl^2 \right)$. 
    \end{lemma}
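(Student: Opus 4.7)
The plan is to recognize $\bYsubk$ as the left-endpoint rectangle-rule approximation of $\bYctk$. Since $\bPhi_1(t_k,t_k) = \eye$, the integrand defining $\bYctk$ evaluated at $s = t_k$ is exactly $\eye + \bK(t_k)^\top \bK(t_k)$, so $\bYsubk = \int_{t_k}^{t_{k+1}}(\eye + \bK(t_k)^\top \bK(t_k))\,\rmd s$ and hence
\begin{align*}
\bYctk - \bYsubk = \int_{t_k}^{t_{k+1}}\!\left[\bPhi_1(s,t_k)^\top(\eye + \bK(s)^\top\bK(s))\bPhi_1(s,t_k) - (\eye + \bK(t_k)^\top\bK(t_k))\right]\rmd s.
\end{align*}
The task is then reduced to bounding the integrand in operator norm uniformly on $\cI_k$ and multiplying by $\step$, which already produces the desired $\step^2$ scaling.

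Second, I would split the integrand into (i) the ``pure conjugation'' piece $\bPhi_1^\top \bPhi_1 - \eye$ and (ii) $\bPhi_1^\top \bK(s)^\top \bK(s)\bPhi_1 - \bK(t_k)^\top\bK(t_k)$, and further decompose (ii) by adding and subtracting $\bK(s)^\top \bK(s)$. For (i), the identity $\bPhi_1^\top \bPhi_1 - \eye = (\bPhi_1 - \eye)^\top \bPhi_1 + (\bPhi_1 - \eye)$ combined with $\|\bPhi_1 - \eye\| \le 2\step \Lcl$ from \Cref{claim:yone_xi} (and the resulting $\|\bPhi_1\| \le 2$ once $\step \Lcl \le 1/2$) gives a bound of order $\step \Lcl$. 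The conjugation part of (ii), $\bPhi_1^\top \bK(s)^\top \bK(s)\bPhi_1 - \bK(s)^\top \bK(s)$, is controlled by the same identity scaled by $\|\bK(s)\|^2 \le (\LF \LFP)^2$, using \Cref{lem:K_Acl_bounds}. The remaining piece $\bK(s)^\top\bK(s) - \bK(t_k)^\top\bK(t_k)$ is bounded by $2\LF\LFP \cdot \|\bK(s) - \bK(t_k)\|$, and \Cref{claim:K_diff} supplies $\|\bK(s)-\bK(t_k)\| \le \step(\MF \KF \LFP + \LF^{-1}\Lcl^2)$.

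Third, I would collect the three bounds, multiply by $\step$, and consolidate. The contribution from $\|\bK(s)-\bK(t_k)\|$ produces a term $\step^2 \cdot 2\LF \LFP^2 \MF \KF$, which matches the first summand of the target bound exactly. All remaining contributions are multiples of $\step^2 \Lcl^2 \LFP$ after using the elementary relations $\LF \le \Lcl$, $\LF \LFP \le \Lcl / \LF$, and $\LFP \ge 1$ (which holds since $\bP(t) \succeq \eye$). The main obstacle is not conceptual but bookkeeping: choosing the splits and applying these inequalities carefully enough to absorb every stray factor into the coefficient $7\LFP \Lcl^2$, rather than a larger constant. No new analytic input beyond \Cref{claim:yone_xi}, \Cref{claim:K_diff}, and \Cref{lem:K_Acl_bounds} is required.
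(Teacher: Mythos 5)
Your proposal is correct and follows essentially the same route as the paper: both recognize $\bYsubk$ as the rectangle-rule approximation of $\bYctk$, reduce to a uniform bound on the integrand over $\cI_k$, and control it via the conjugation identity for $\bPhi_1^\top(\cdot)\bPhi_1 - (\cdot)$ together with \Cref{claim:yone_xi}, \Cref{claim:K_diff}, and \Cref{lem:K_Acl_bounds}. The only difference is cosmetic — you split $\eye + \bK^\top\bK$ into two conjugation terms where the paper treats it as one block — and the final bookkeeping into the coefficient $7\LFP\Lcl^2$ proceeds identically.
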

    \begin{proof} Recall the definitions
    \begin{align*}
    \bYctk &:=  \int_{s=t_k}^{t_k} \bPhi_1(s,t_k)^\top (\eye + \bK(s)^\top\bK(s)) \bPhi_1(s,t_k)\rmd s\\
    \bYsubk &:= \step(\eye + \bK(t_k)^\top \bK(t_k))
    \end{align*}
    We can then express
    \begin{align*}
    &\bYctk - \bYsubk = \bYctk - \step(\eye + \bK(t_k)\bK(t_k)^\top) =  \int_{s=t_k}^{t_k} \bZ_k(s),\\
    & \bZ_k(s) := \left\{\bPhi_1(s,t_k)^\top (\eye + \bK(s)^\top\bK(s)) \bPhi_1(s,t_k) - (\eye + \bK(t_k)^\top\bK(t_k))\right\} \rmd s
    \end{align*}
    Thus, 
    \begin{align}
    \|\bYctk - \bYsubk \| \le \step \max_{s \in \cI_k}\|\bZ_k(s)\|. \label{eq:bY_Z_bound}
    \end{align}
    With numerous applications of the triangle inequality,
    \begin{align*}
    \|\bZ_k(s)\| &\le \|\eye - \bPhi_1(s,t_k)\| \|\eye + \bK(s)^\top\bK(s)\| \|\bPhi_1(s,t_k)\| \\
    &\quad+\|\eye + \bK(s)^\top\bK(s)\| \|\eye - \bPhi_1(s,t_k)\| + \|\bK(s) - \bK(t_k)\|(\|\bK(s)\|+\|\bK(t_k)\|).
    \end{align*}
    Using $\|\bK(s)\|\vee\|\bK(t_k)\| \le \LF\LFP$ due to \Cref{lem:K_cont}, we have
    \begin{align*}
    \|\bZ_k(s)\| &\le (1+\LF^2\LFP^2)(1+ \|\bPhi_1(s,t_k)\|)\|\eye - \bPhi_1(s,t_k)\| + 2\LFP\LF\|\bK(s) - \bK(t_k)\|\\
    &\le 3(1+\LF^2\LFP^2)\|\eye - \bPhi_1(s,t_k)\| + 2\LFP\LF\|\bK(s) - \bK(t_k)\|  \tag{\Cref{claim:yone_xi}}\\
    &\le 6\step \Lcl (1+\LF^2\LFP^2) + 2\LFP\LF\|\bK(s) - \bK(t_k)\| \tag{\Cref{claim:yone_xi}}\\
    &\le 12\step \LF^2\LFP (1+\LF^2\LFP^2) + 2\step\LFP\LF\left(\MF \KF\LFP +  \LF^{-1}\Lcl^2 \right) \tag{\Cref{claim:K_diff}}.
    \end{align*}
    We can upper bound $\LF^2(1+\LF^2\LFP^2) \le \LF^2(1+\LF\LFP)^2 = \Lcl^2$, and simplify $2\step\LFP\LF\left(\MF \KF\LFP +  \LF^{-1}\Lcl^2 \right) = 2\step\left(\MF \KF\LFP^2\LF +  \LFP\Lcl^2 \right)$. This gives
    \begin{align*}
    \|\bZ_k(s)\| \le 12\step \LFP\Lcl^2 + 2\step\left(\MF \KF\LFP^2\LF +  \LFP\Lcl^2 \right) = 2\step\left(\MF \KF\LFP^2\LF +  7\LFP\Lcl^2 \right).
    \end{align*}
    Plugging the above bound into \Cref{eq:bY_Z_bound} concludes. 
    \end{proof}

\subsubsection{Concluding the proof of \Cref{prop:Ricatti_value_disc}}
    From \Cref{lem:perturb_of_xs,lem:perturb_of_Y's}, we have
    \begin{align*}
    \|\bXctk - \bXsubk\|&\le  2\step^2\left(\LF\LFP\MF\KF + 3\Lcl^2 \right), \quad \|\bYctk - \bYsubk \| \le 2\step^2\LFP\left(\MF \KF\LFP\LF +  7\Lcl^2 \right)
    \end{align*}
    Therefore, using $\LFP \ge 1$ 
    \begin{align}
    2\|\bXctk - \bXsubk\| + \|\bYctk - \bYsubk \| \le 2\step^2\LFP\left(3\MF \KF\LFP\LF +  13\Lcl^2 \right). \label{eq:Deltil_prebound}
    \end{align}
    Now, we invoke \Cref{prop:lyap_pert_max_not_factored}. We construct linear interpolation (here, $s \in [0,1]$ parametrizes the interpolation and not time)
    \begin{align*}
    \bXk(s) = (1-s)\bXctk + s\bXsubk, \quad \bYk(s) = (1-s)\bYctk + s\bYsubk.
    \end{align*}
    Then, by \Cref{lem:cost_to_go_char}, the interpolator $\bLamk(s)$ defined in \Cref{prop:lyap_pert_max_not_factored}  satisfies $\bLamk(0) = \bPctk$ and $\bLamk(1) = \bPsubk$. In particular, 
    \begin{align*}
    \|\bLamk[1:K+1](0)\|_{\maxop} &= \max_{k \in [K+1]}\|\bPctk\| \tag{since $\bLamk = \bPctk$ and definition of $\|\cdot\|_{\maxop}$ } \\
    &= \max_{k \in [K+1]}\|\bP(t_k)\|  \tag{by \Cref{eq:Pct_def} }\\
    &\le \sup_{t \in [T]}\|\bP(t)\|  \\
    &\le \LFP \numberthis \label{eq:Lamkzero_bound},
    \end{align*}
    where the last inequality is by \Cref{asm:LFP}. Moreover, the term $\tilde \Delta(s)$ defined in \Cref{eq:Deltil_suffices} satisfies 
    \begin{align*}
    \forall s \in [0,1], \tilde \Delta(s) &= \step^{-1}(2\|\bXctk - \bXsubk\| + \step\|\bYctk - \bYsubk \| \\
    &\le \step \cdot 2\LFP\left(3\MF \KF\LFP\LF +  13\Lcl^2 \right) \tag{by \Cref{eq:Deltil_prebound}}.
    \end{align*}
    Hence, recalling $\|\bLamk[1:K+1](0)\|_{\maxop} \le \LFP$ due to \Cref{eq:Lamkzero_bound}, it holds that long as $2\step\LFP^2\left(3\MF \KF\LFP\LF +  13\Lcl^2 \right) \le \frac{1}{2}$, it holds that
    \begin{align*}
    \max_{k \in [K+1]}\|\bPsubk\| = \|\bLamk[1:K+1](1)\|_{\maxop} \le 2\|\bLamk[1:K+1](0)\|_{\maxop} \le 2\LFP.
    \end{align*}
    Lastly, we note the condition $2\step\LFP^2\left(3\MF \KF\LFP\LF +  13\Lcl^2 \right) \le \frac{1}{2}$ is equivalent to 
    \begin{align*}
    \step &\le \frac{1}{4\LFP^2\left(3\MF \KF\LFP\LF +  13\Lcl^2 \right)}\\
    &\le \frac{1}{4\LFP^2\left(3\MF \KF\LFP\LF +  13\LF^2(1+\LF\LFP)^2 \right)} := \stepric \tag{Definition of $\Lcl$ in \Cref{eq:LCL_def}}
    \end{align*}
    This concludes the proof of \Cref{prop:Ricatti_value_disc}. \qed

\subsection{Proof of \Cref{lem:init_policy}}
The proof is similar to \Cref{prop:Ricatti_value_disc}. Let $\pi = \pione$.  As $\bKkpi = 0$ for all $k$, we that $\Lpi = 1$, and that $\Lampik[K+1] = \eye$, and 
\begin{align}
\Lampik = (\bAkpi)^\top\Lampik[k+1]\bAkpi + \step \eye \label{eq:Lampik_reminder}
\end{align} On the other hand, following the arguments of \Cref{prop:Ricatti_value_disc}, we it can be shown that $V^\pi(t_k;\bu = 0,\xi) = \xi^\top \bPctk \xi$, where $\bPctk$ satisfies the recursion $\bPctk[K+1] = \eye$ and 
    \begin{align*}
    \bPctk &= \bPhi_1(t_{k+1},t_k)^\top \bPctk[k+1]\bPhi_1(t_{k+1},t_k) + \bYctk, \quad \bYctk :=  \int_{s=t_k}^{t_k} \bPhi_1(s,t_k)^\top\bPhi_1(s,t_k)\rmd s,
    \end{align*}
    where  $\Phi_1(s,s) = \eye$ and where (using that we consider $V^\pi(t_k;\bu = 0,\xi)$ with $\bu = 0$, so the corresponding $\bK(t)$ in \Cref{eq:phione_rel_trans} vanishes)
    \begin{align*}
    \ddt\bPhi_1(t,s) = \Api(s)\bPhi_1(t,s). 
    \end{align*}
    Hence, $\bPhi_1(t_{k+1},t_k) = \bAkpi$, so that 
    \begin{align*}
    \bPctk &= (\bAkpi)^\top \bPctk[k+1]\bAkpi + \bYctk, \quad \bYctk :=  \int_{s=t_k}^{t_k} \bPhi_1(s,t_k)^\top\bPhi_1(s,t_k)\rmd s
    \end{align*}
    Along the lines of \Cref{lem:perturb_of_Y's}, it can be shown that for $\step \le \stepric$, $\bPhi_1(s,t_k)^\top\bPhi_1(s,t_k) \succeq \frac{1}{2}\eye$ for all $t \in \cI_k$. Thus,
    \begin{align*}
    \bPctk &\succeq (\bAkpi)^\top \bPctk[k+1]\bAkpi + \frac{1}{2}\eye.
    \end{align*} 
    Comparing to \Cref{eq:Lampik_reminder}, we find that $\bPctk  \succeq \frac{1}{2}\Lampik$. As $\|\bPctk\| = \sup_{\xi:\|\xi\| = 1} V^\pi(t_k;\bu = 0,\xi)\le \LFP$, we conclude $\|\Lampik\| \le 2\LFP$, as neeeded.

\subsection{Proof of \Cref{lem:Kpi_bounds}}\label{sec:proof:lem:Kpi_bounds}
     We recall the definitions $\Kpiinf := \max_{1 \le j \le k \le K+1}\|\Phicldisc{k,j}\|$,
     and
     \begin{align*}
    \Kpione &:=  \max_{k \in [K+1]}\step \left(\sum_{j=1}^{k}\|\Phicldisc{k,j}\| \vee \sum_{j=k}^{K+1}\|\Phicldisc{j,k}\|\right)\\
    \Kpitwo^2 &:= \max_{k\in [K+1]}\step \left(\sum_{j=1}^{k}\|\Phicldisc{k,j}\|^2 \vee  \sum_{j=k}^{K+1}\|\Phicldisc{j,k}\|^2\right),
    \end{align*}
    and recall the definitions  $\Lampik[K+1] = \eye$, and $\Lampik = (\bAclkpi)^\top\Lampik[k+1]\bAclkpi + \step \eye$, and $\Kpist := \max_{k \in \{k_0,k_0+1,\dots,K+1\}}\|\Lampik\|$. 

    Let us first bound $\|\Phicldisc{k,j}\| $ for $j \ge k_0$. 
    \begin{claim}\label{claim:j_ge_knot} For $j \ge k_0$, and $\step \le 1/6\LF\Lpi$, $\|\Phicldisc{k,j}\| \le \sqrt{\max\{1,6\LF\Lpi\}\Kpist(1 - \step/\Kpist)^{k-j}}$ 
    \end{claim}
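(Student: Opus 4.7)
The plan is to obtain the bound as a direct application of the generic discrete Lyapunov stability estimate in \Cref{lem:Lyap_lb_disc}(c), instantiated with $\bXk \gets \bAclkpi$ and with the recursion $\bLamk = \Lampik$ from \Cref{cond:pi_cond}. By construction, $\Lampik$ satisfies precisely the Lyapunov recursion of \Cref{lem:Lyap_lb_disc} with terminal condition $\Lampik[K+1] = \eye$, $\bQ = \eye$, and $\bYk = 0$; moreover, the discrete transition operator that the lemma produces, $\Phidisc_{k,j} := \bXk[k-1]\cdots \bXk[j]$, coincides exactly with $\Phicldisc{k,j}$. Thus once we can verify the lemma's hypothesis that $\|\eye - \bAclkpi\|_{\op} \le \kappa \step$ for a suitable $\kappa$, the conclusion $\|\Phicldisc{k,j}\|^2 \le \max\{1,2\kappa\}\|\Lampik[k_0:K+1]\|_{\maxop}(1-\step/\|\Lampik[k_0:K+1]\|_{\maxop})^{k-j}$ is immediate upon restricting the indices to the range $k \ge j \ge k_0$, and substituting $\|\Lampik[k_0:K+1]\|_{\maxop} = \Kpist$ by \Cref{cond:pi_cond}.

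The central calculation is therefore to show that $\bAclkpi = \bAkpi + \bBkpi \bKkpi$ is close to the identity, with a constant that gives the advertised $6\LF\Lpi$. To do this, I would invoke \Cref{lem:bound_on_open_loop} to bound $\|\bAkpi - \eye\| = \|\Phiolpi(t_{k+1},t_k) - \eye\| \le \step \LF \exp(\step \LF)$, and \Cref{lem:bound_on_bBkpi} to bound $\|\bBkpi\| \le \step \LF \exp(\step \LF)$. Combining these with $\|\bKkpi\| \le \Lpi$ gives
\begin{align*}
\|\bAclkpi - \eye\| \le \step \LF \exp(\step \LF)(1 + \Lpi) \le 2\step \LF \Lpi \exp(\step \LF).
\end{align*}
Under the hypothesis $\step \le 1/6\LF\Lpi$, one has $\exp(\step \LF) \le \exp(1/6) \le 6/5$, so $\|\bAclkpi - \eye\| \le 3 \step \LF \Lpi$. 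This identifies $\kappa = 3\LF\Lpi$, and the step-size hypothesis ensures $\kappa \step \le 1/2$, which is exactly what \Cref{lem:Lyap_lb_disc}(c) requires.

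Plugging $\kappa = 3\LF\Lpi$ into the conclusion of \Cref{lem:Lyap_lb_disc}(c) yields $\|\Phicldisc{k,j}\|^2 \le \max\{1,6\LF\Lpi\}\,\Kpist\,(1-\step/\Kpist)^{k-j}$, and taking square roots gives the claim. The only subtlety I anticipate is indexing: \Cref{lem:Lyap_lb_disc} is stated with the Lyapunov recursion terminating at $K+1$, and its $\|\bLamk[1:K+1]\|_{\maxop}$ is the max over \emph{all} $k$, whereas $\Kpist$ only maxes over $k \ge k_0$. This is harmless because the claim only asks for $j \ge k_0$, so we may reindex the system to begin at time $k_0$ (or equivalently, observe that the proof of \Cref{lem:Lyap_lb_disc}(c) only uses the norms $\|\Lampik\|$ for $k$ in the range of interest $[j, K+1]$, all of which lie in $[k_0, K+1]$). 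No other obstacle is anticipated; the argument is essentially a bookkeeping of a standard Lyapunov contraction estimate tailored to the closed-loop discretized dynamics.
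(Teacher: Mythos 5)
Your proposal is correct and follows essentially the same route as the paper: both invoke \Cref{lem:Lyap_lb_disc}(c) with $\bXk \gets \bAclkpi$, $\bQ = \eye$, $\bYk = 0$, bound $\|\bAclkpi - \eye\| \le 3\step\LF\Lpi$ via \Cref{lem:bound_on_open_loop} and \Cref{lem:bound_on_bBkpi}, and take $\kappa = 3\LF\Lpi$. The indexing point you flag (restricting the recursion to terminate at $k_0$ so that $\|\Lampik[k_0:K+1]\|_{\maxop} = \Kpist$) is exactly how the paper handles it.
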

    \begin{proof} We apply \Cref{lem:Lyap_lb_disc} with $\bX_k \gets \bAkpi$, $\bQ = \eye$, and $\bY_k = 0$, and only take the recursion back to $k = k_0$. That lemma shows that, as long as $\|\bAkpi - \eye\| \le \kappa \step$ for some $\kappa \le 2/\step$, it holds that (for $j \ge k_0$)
    \begin{align*}
    \|\Phicldisc{k,j}\|^2 \le  \max\{1,2\kappa\}\Kpist(1 - \step/\Kpist)^{k-j}. 
    \end{align*}
    \Cref{lem:bound_on_bBkpi,lem:bound_on_open_loop}, and using $\Lpi \ge 1$, we have that for $\step \le 1/4\LF$, $\|\bAclkpi - \eye\| \le \|\bAkpi - \eye\| + \|\bBkpi \bKkpi\| \le \exp(1/4)\step\LF(1+\Lpi) \le 2\exp(1/4)\step\LF\Lpi \le 3\step \LF\Lpi$. Hence, for $\step \le 1/6\LF\Lpi$, we can take $\kappa := 3\LF\Lpi$ and have $\kappa \le 1/2\step$. For this choice of $\kappa$, we get
    \begin{align*}
    \|\Phicldisc{k,j}\|^2 \le  \max\{1,6\LF\Lpi\}\Kpist(1 - \step/\Kpist)^{k-j}. 
    \end{align*}
    \end{proof}
    Next, we bound $\|\Phicldisc{k,j}\| $ for $k \le k_0$. 
    \begin{claim}\label{claim:k_le_knot} For $k \ge k_0$, $\|\Phicldisc{k,j}\| \le \exp(k_0\step \LF)$.  
    \end{claim}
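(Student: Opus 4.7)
The plan is to reduce the claim to a one-step operator-norm bound on the open-loop transitions, using the observation that on the initial window the closed-loop and open-loop linearizations coincide. Specifically, under the conventions in force throughout the analysis, every policy to which \Cref{lem:Kpi_bounds} is applied satisfies $\bKkpi = 0$ for $k \le k_0$: for $\pi^{(1)}$ this is part of \Cref{asm:LFP}, and \Cref{alg:gain_est} explicitly sets $\bhatKk = 0$ on that range, so by induction this property propagates to each $\pi^{(n)}$ and each intermediate $\tilde{\pi}^{(n)}$ (which inherits its gains from $\pi^{(n)}$). Consequently, for every index $i < k_0$ we have $\bAclkpi[i] = \bAkpi[i] + \bBkpi[i]\cdot 0 = \bAkpi[i]$, and gains contribute nothing to $\Phicldisc{k,j}$ when $j \le k \le k_0$.

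First, I would invoke \Cref{lem:bound_on_open_loop} — the same lemma used in the proof of \Cref{claim:j_ge_knot} — to obtain the one-step estimate $\|\bAkpi[i]\| = \|\Phiolpi(t_{i+1},t_i)\| \le \exp(\step \LF)$. This is itself a direct consequence of the Picard lemma applied to the matrix ODE $\dds \Phiolpi(s,t_i) = \Api(s)\Phiolpi(s,t_i)$ with initial condition $\Phiolpi(t_i,t_i) = \eye$, combined with the uniform bound $\|\Api(s)\| \le \LF$ guaranteed by feasibility of $\pi$ and \Cref{asm:max_dyn_final}. No Lyapunov argument is needed here since we are not summing over $k_0$ steps — we simply want to control the product.

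Second, from the definition $\Phicldisc{k,j} = \bAclkpi[k-1]\cdot\bAclkpi[k-2]\cdots\bAclkpi[j]$ and submultiplicativity, combined with $\bAclkpi[i] = \bAkpi[i]$ on the initial window,
\begin{align*}
\|\Phicldisc{k,j}\| \;\le\; \prod_{i=j}^{k-1}\|\bAkpi[i]\| \;\le\; \exp\bigl((k-j)\step\LF\bigr) \;\le\; \exp(k_0\step\LF),
\end{align*}
using $k-j \le k_0 - 1 \le k_0$ for $1 \le j \le k \le k_0$. This finishes the claim.

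The ``hard part'' is essentially not hard at all: it is the bookkeeping step of verifying that the vanishing-gains invariant on $\{1,\ldots,k_0\}$ is indeed maintained for every policy the lemma is applied to — once that is in hand, the closed-loop transition factors trivially through the open-loop one and the bound is immediate. I would note (as this appears to be a typo in the claim statement) that the intended index range is $1 \le j \le k \le k_0$, matching the preceding sentence ``bound $\|\Phicldisc{k,j}\|$ for $k \le k_0$''; the case $j < k_0 < k$ will then be handled separately in the proof of \Cref{lem:Kpi_bounds} by splitting $\Phicldisc{k,j} = \Phicldisc{k,k_0}\,\Phicldisc{k_0,j}$ and applying Claims~\ref{claim:j_ge_knot} and this claim in turn.
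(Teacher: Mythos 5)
Your proof is correct and matches the paper's argument exactly: on the window $j \le k \le k_0$ the gains vanish, so $\bAclkpi[i] = \bAkpi[i]$, and the one-step bound $\|\bAkpi[i]\| \le \exp(\step\LF)$ from \Cref{lem:bound_on_open_loop} combined with submultiplicativity yields $\exp(k_0\step\LF)$. Your observation that the stated index range ``$k \ge k_0$'' is a typo for ``$k \le k_0$'' is also correct — the paper's own proof opens with ``For $j \le k \le k_0$.''
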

    \begin{proof} For $j \le k \le k_0$, we have $\bKkpi[j] = 0$. Hence $\bAclkpi[j] = \bAkpi[j]$, and from \Cref{lem:bound_on_open_loop}, we get $\|\bAclkpi[j]\| \le \exp(\step \LF)$.  Thus $\|\Phicldisc{k,j}\| \le \prod_{j=1}^{k}\|\bAkpi[j]\| \le \exp(k\step  \LF) \le \exp(k_0 \step \LF)$.
    \end{proof}
    Finally, we bound we bound $\|\Phicldisc{k,j}\| $ for $j \le k_0$.
    \begin{claim}\label{claim:j_le_knot} For $j \le k_0$, $\|\Phicldisc{k,j}\| \le \sqrt{\max\{1,6\LF\Lpi\}\Kpist}\exp(\step k_0 \LF)$.  
    \end{claim}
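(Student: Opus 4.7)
The plan is to handle this case by decomposing the transition operator at the threshold index $k_0$ and applying the two previous claims to each piece. Specifically, I would split into two subcases depending on whether $k \le k_0$ or $k > k_0$.

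For the case $k \le k_0$, the bound is immediate from Claim~\ref{claim:k_le_knot}, since $\|\Phicldisc{k,j}\| \le \exp(k_0 \step \LF)$ and the prefactor $\sqrt{\max\{1,6\LF\Lpi\}\Kpist} \ge 1$ (recall $\Kpist \ge 1$ because $\Lampik \succeq \step \eye$ plus a PSD term, and the recursion started from $\eye$).

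For the case $k > k_0 \ge j$, I would use the semigroup property $\Phicldisc{k,j} = \Phicldisc{k,k_0} \cdot \Phicldisc{k_0,j}$, which is immediate from the product definition in \Cref{defn:cl_linearizations}. Then I would apply Claim~\ref{claim:j_ge_knot} (with $j \gets k_0$) to the first factor to get $\|\Phicldisc{k,k_0}\| \le \sqrt{\max\{1,6\LF\Lpi\}\Kpist}(1-\step/\Kpist)^{(k-k_0)/2} \le \sqrt{\max\{1,6\LF\Lpi\}\Kpist}$, and apply Claim~\ref{claim:k_le_knot} to the second factor to get $\|\Phicldisc{k_0,j}\| \le \exp(k_0 \step \LF)$. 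Multiplying the two bounds via submultiplicativity of the operator norm yields the desired inequality.

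There is no real obstacle here: the step size conditions $\step \le 1/6\LF\Lpi$ needed for Claim~\ref{claim:j_ge_knot} are already assumed in the overall lemma statement, and Claim~\ref{claim:k_le_knot} requires no step size condition beyond feasibility. The only minor care needed is verifying that $\sqrt{\max\{1,6\LF\Lpi\}\Kpist} \ge 1$ to subsume the $k \le k_0$ case, which follows trivially from $\Kpist \ge 1$. With the three claims in hand, the conclusion $\mu_{\pi,q} \le \mu_q(\Kpist,L_\pi)$ for $q \in \{1,2,\infty\}$ then follows by plugging these uniform bounds into the definitions in \Cref{defn:pinorms} and performing the geometric-series summations, which give the claimed closed forms for $\Kinf(\mu,L), \Ktwo(\mu,L), \Kone(\mu,L)$.
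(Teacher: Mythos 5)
Your proof is correct and follows essentially the same route as the paper: decompose $\Phicldisc{k,j} = \Phicldisc{k,k_0}\,\Phicldisc{k_0,j}$, bound the first factor by $\sqrt{\max\{1,6\LF\Lpi\}\Kpist}$ via \Cref{claim:j_ge_knot} and the second by $\exp(k_0\step\LF)$ via \Cref{claim:k_le_knot}, then multiply. Your explicit handling of the $k\le k_0$ subcase and the check that the prefactor is at least $1$ are minor additions the paper leaves implicit, but nothing differs in substance.
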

    \begin{proof} For $j < k_0$, we have $\|\Phicldisc{k,j}\| = \|\Phicldisc{k,k_0}\Phicldisc{k,j}\| \le \|\Phicldisc{k,k_0}\|\|\Phicldisc{k_0,j}\|$. The first term is at most $\max\{1,6\LF\Lpi\}\Kpist$  by \Cref{claim:j_ge_knot}, and the second term at most $\exp(\step k_0 \LF)$ by \Cref{claim:k_le_knot}. 
    \end{proof}
    We can now bound all terms of interest. Directly from the dichotmoty in \Cref{claim:j_ge_knot,claim:j_ge_knot}, we have $\Kpiinf \le \sqrt{\max\{1,6\LF\Lpi\}\Kpist}\exp(\step k_0 \LF)$. Next, for any $k$, we can bound via \Cref{claim:j_ge_knot,claim:j_le_knot}
    \begin{align*}
    \step \sum_{j=1}^{k}\|\Phicldisc{k,j}\|^2 &\le \step \sum_{j=1}^{k_0}\|\Phicldisc{k,j}\|^2 + \step \sum_{j=k_0}^{k}\|\Phicldisc{k,j}\|^2\\
    &\le \max\{1,6\LF\Lpi\}\Kpist\left((\step k_0)\exp(2\step k_0 \LF) + \step \sum_{j=k_0}^{k}(1 - \step/\Kpist)^{k-k_0}\right) \tag{\Cref{claim:j_ge_knot,claim:j_le_knot}}\\
    &\le \max\{1,6\LF\Lpi\}\Kpist\left((\step k_0)\exp(2\step k_0 \LF) + \step \underbrace{\sum_{n=0}^{\infty}(1 - \step/\Kpist)^{n}}_{= \frac{1}{1-(1-\step/\Kpist)} = \Kpist/\step}\right)\\
    &\le \max\{1,6\LF\Lpi\}\Kpist\left((\step k_0)\exp(2\step k_0 \LF) + \Kpist\right). 
    \end{align*}
    and show the same bound for $ \step \sum_{j=k}^{K+1}\|\Phicldisc{j,k}\|^2$,
    which yields the desired upper bound on $\Kpitwo$. Finally, to bound $\Kpione$, 
    \begin{align*}
    \step \sum_{j=1}^{k}\|\Phicldisc{k,j}\| &\le  \step \sum_{j=1}^{k_0}\|\Phicldisc{k,j}\| + \step \sum_{j=k_0}^{k}\|\Phicldisc{k,j}\| \\
    &\le  \sqrt{ \max\{1,6\LF\Lpi\}\Kpist}\left((\step k_0)\exp(\step k_0 \LF) + \step\sum_{j=k_0}^{k}\sqrt{(1 - \step/\Kpist)^{k-k_0}}\right) \tag{\Cref{claim:j_ge_knot,claim:j_le_knot}}\\
    &\le  \sqrt{ \max\{1,6\LF\Lpi\}\Kpist}\left((\step k_0)\exp(\step k_0 \LF) + \step\sum_{n=0}^{\infty}\sqrt{(1 - \step/\Kpist)^{n}}\right)\\
    &\overset{(i)}{\le}  \sqrt{ \max\{1,6\LF\Lpi\}\Kpist}\left((\step k_0)\exp(\step k_0 \LF) + 2\step\sum_{n=0}^{\infty}(1 - \step/\Kpist)^{n}\right)\\
    &= \sqrt{ \max\{1,6\LF\Lpi\}\Kpist}\left((\step k_0)\exp(\step k_0 \LF) + 2\Kpist\right),
    \end{align*}
    where in $(i)$, we use that $\sum_{n\ge0}\sqrt{(1-\gamma)^n} = \sum_{n \ge 0}\sqrt{(1-\gamma)^{2n}} + \sqrt{(1-\gamma)^{2n+1}} \le 2\sum_{n \ge 0}\sqrt{(1-\gamma)^{2n}} = 2\sum_{n \ge 0}(1-\gamma)^{n}$. One can establish the same bound for $\step \sum_{j=k}^{K+1}\|\Phicldisc{j,k}\| $, which gives the desired bound on $\Kpione$.
    \qed

\newpage
\section{Optimization Proofs}\label{app:opt_proofs}
\subsection{Proof of Descent Lemma (\Cref{lem:descent_lem})}\label{sec:lem:descent_lem}

	\begin{proof}[Proof of \Cref{lem:descent_lem}] 
	For simplicity, write $\errnab = \errnabpi[\pin](\delta)$, $\Lnabpiinf = \Lnabpiinf[\pin]$, and $\errx = \errx(\delta)$ and $\errnab(\delta)$ and $M \ge  \Mtayjpi[\pin]$. Note that if $\pi$ and $\tilde{\pi}$ have the same input sequence but possibly different gains, $\Jdisc(\pi) = \Jdisc(\tilde \pi)$. Therefore,
	\begin{align*}
	\Jdisc(\pi^{(n+1)}) = \Jdisc(\tilde{\pi}^{(n+1)}).
	\end{align*}
	Define the input 
	\begin{align*}
	\bchuk^{(n)} = \bukn - \eta \bnabhatkn + \bKk^{\pin}(\bxkpi - \bhatxk)
	\end{align*}
	Then, as in \Cref{eq:oracle_id},
	\begin{align*}
	\buk^{\tilde{\pi}^{(n)}} = \uoffpink\,(\btiluk[1:K]) = \btiluk^{\pin}\,(\bchuk[1:K]), \quad \bxk^{\tilde{\pi}^{(n)}} = \xoffpink\,(\btiluk[1:K]) = \btilxk^{\pin}\,(\bchuk[1:K]), \quad \forall k \in [K].
	\end{align*}
	Consequently, we have the quality
	\begin{align*}
	\Jdisc(\tilde{\pi}^{(n)}) := \Jpidisc[\tilde{\pi}^{(n+1)}](\buk[1:K]^{\tilde{\pi}^{(n)}}) = \Jpidisc[\pin](\bchuk[1:K]) = \Jpidisc[\pin](\buk[1:K]^{\pin} +\updelta\bchuk^{(n)}),
	\end{align*}
	where we introduced
	\begin{align*}
	\updelta\bchuk^{(n)} := \bchuk^{(n)}  - \bukn = \underbrace{  - \frac{\eta}{\step} \bnabhatkn}_{=\updelta\bchuk^{(n;1)}} + \underbrace{\bKk^{\pin}(\bxkpi - \bhatxk)}_{=\updelta\bchuk^{(n;2)}}
	\end{align*}

	\begin{claim} We have
	\begin{align*}
	\sqrt{\step}\|\updelta\bchuk[1:K]^{(n)}\|_{\ell_2} &\le \sqrt{T}(\eta( \Lnabpiinf + \frac{1}{\step}\errnab  ) +\errx)\\
	\max_k \|\updelta\bchuk^{(n)}\| &\le (\eta( \Lnabpiinf + \frac{1}{\step}\errnab  ) +\errx)
	\end{align*}
	\end{claim}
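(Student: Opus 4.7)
The plan is to bound each of the two terms $\updelta\bchuk^{(n;1)} = -\frac{\eta}{\step}\bnabhatkn$ and $\updelta\bchuk^{(n;2)} = \bKk^{\pin}(\bxkpi - \bhatxk)$ separately, and then apply the triangle inequality (both entrywise and in $\ell_2$).

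First, I would control $\|\bnabhatkn\|$ by splitting it into the true discretized gradient and the estimation error. On the event $\Evest(\delta)$, \Cref{lem:grad_err} gives $\max_k \|\bnabhatkn - (\nabla\Jdisc(\pin))_k\| \le \errnab$, and since $\pin$ is estimation-friendly (and hence feasible), \Cref{lem:grad_bound} gives $\max_k \|(\nabla\Jdisc(\pin))_k\| \le \step \Lnabpiinf$. Combining these yields
\begin{align*}
\tfrac{\eta}{\step}\max_k\|\bnabhatkn\| \le \eta\bigl(\Lnabpiinf + \tfrac{1}{\step}\errnab\bigr),
\end{align*}
which gives the gradient contribution in $\ell_\infty$. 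For $\ell_2$, note that $\|(\nabla \Jdisc(\pin))_{1:K}\|_{\ell_2} \le \sqrt{K}\step\Lnabpiinf$ and $\|\bnabhatkn[1:K] - \nabla\Jdisc(\pin)\|_{\ell_2} \le \sqrt{K}\errnab$, so after multiplying by $\sqrt{\step}/\step$ and using $\sqrt{\step K} = \sqrt{T}$, the gradient contribution becomes $\eta\sqrt{T}(\Lnabpiinf + \frac{1}{\step}\errnab)$.

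Second, I would bound the state-estimation part using the event $\Evest(\delta)$, which by \Cref{prop:markov_est} gives $\max_k\|\bhatxk - \bxkpi\| \le \errx$. Thus $\max_k\|\updelta\bchuk^{(n;2)}\| \le \|\bKk^{\pin}\|\errx \le L_{\pin}\errx$, and $\sqrt{\step}\|\updelta\bchuk[1:K]^{(n;2)}\|_{\ell_2} \le L_{\pin}\sqrt{T}\errx$ (the factor $L_{\pin}$ being absorbed into the $\errx$ term, or else carried through; here we simply treat $\max_k\|\bKk^{\pin}\|$ as bounded by the inductive hypothesis on $L_{\pin}$, which the theorem assumes via feasibility). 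Adding the two contributions via the triangle inequality yields both claimed bounds.

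There is no real obstacle; this is a routine triangle-inequality computation. The only subtle point is ensuring that the good estimation event $\Evest(\delta)$ is in force (so that both \Cref{lem:grad_err} and the state-trajectory bound from \Cref{prop:markov_est} apply), and that $\pin$ is estimation-friendly so these lemmas are applicable. Once the claim is established, it is used downstream to verify that $\max_k\|\updelta\bchuk^{(n)}\|_\infty$ and $\sqrt{\step}\|\updelta\bchuk[1:K]^{(n)}\|_{\ell_2}$ lie within the Taylor-expansion radii $\Btaypiinf, \Btaypitwo$ and the stability radius $\Bstabpi$ required to invoke \Cref{prop:taylor_exp_dyn,lem:taylor_expansion_of_cost,prop:Kpi_bounds_stab}, thereby yielding the descent inequality and stability preservation for $\tilde{\pi}^{(n)}$.
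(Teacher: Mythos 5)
Your proof is correct and follows essentially the same route as the paper: decompose $\updelta\bchuk^{(n)}$ into the gradient-step and feedback terms, bound the first via \Cref{lem:grad_err} together with \Cref{lem:grad_bound}, bound the second via the trajectory-estimation guarantee on $\Evest(\delta)$, and deduce the $\ell_2$ bound from the $\ell_\infty$ bound using $\sqrt{\step}\,\|\cdot\|_{\ell_2}\le\sqrt{\step K}\max_k\|\cdot\|=\sqrt{T}\max_k\|\cdot\|$. You are in fact slightly more careful than the paper on one point: the paper's proof bounds $\max_k\|\bKk^{\pin}(\bxkpi-\bhatxk)\|$ directly by $\errx(\delta)$, silently dropping the gain factor $\|\bKk^{\pin}\|\le L_{\pin}$ (with $L_{\pin}\ge 1$) that you correctly flag as needing to be carried through or absorbed into the constants.
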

	\begin{proof} The first bound follows from the second. We have that 
	\begin{align*}
	\max_k \|\updelta\bchuk^{(n)}\| &\le \max_k \|\updelta\bchuk^{(n;1)}\| + \max_k \|\updelta\bchuk^{(n;2)}\|\\
	&\le \frac{\eta}{\step}\max_k \|\bnabhatkn\| +\errx\\
	&\le \frac{\eta}{\step}\left(\max_k \|\Jpidisc[\pin](\buk[1:K]^{\pin})\| +  \errnab\right) +\errx\\
	&\le \frac{\eta}{\step}(\step \Lnabpiinf + \errnab) +\errx \tag{\Cref{lem:grad_bound}}\\
	&= \eta( \Lnabpiinf + \frac{1}{\step}\errnab  ) +\errx
	\end{align*}
	\end{proof}

	As a consequence of the above claim, it holds that if 
	\begin{align*}
	(\eta( \Lnabpiinf + \frac{1}{\step}\errnab  ) +\errx) \le \min\left\{\frac{\Rfeas}{8},\Bstabpi,\Btaypiinf,\frac{\Btaypitwo}{\sqrt{T}}\right\},
	\end{align*} 
	then (a) \Cref{prop:Kpi_bounds_stab} implies stability of $\tilde \pi^{(n)}$:
	\begin{align*}
	\mu_{\tilde{\pi}^{(n)},\star} \le 2\mu_{\pin,\star}, \quad L_{\tilde{\pi}^{(n)}} = L_{\pin},
	\end{align*}
	and (b)  the Taylor expansion in \Cref{lem:taylor_expansion_of_cost} implies that
	\begin{align*}
	&\Jdisc(\tilde{\pi}^{(n)}) = \Jpidisc[\pin](\buk[1:K]^{\pin} +\updelta\bchuk^{(n)}) \\
	&=\Jdisc(\pin) + \left\langle  \updelta\bchuk[1:K]^{(n)}, \nabla \Jpidisc[\pin](\buk[1:K]^{\pin})\right\rangle + \frac{R\step}{2}\|\updelta\bchuk[1:K]^{(n)}\|_{\ell_2}^2 \\
	&\le \Jdisc(\pin) + \sum_{i=1}^2\underbrace{\left\langle  \updelta\bchuk[1:K]^{(n;i)}, \nabla \Jpidisc[\pin](\buk[1:K]^{\pin})\right\rangle + M\step\|\updelta\bchuk[1:K]^{(n;i)}\|_{\ell_2}^2}_{\Term_i} \tag{AM-GM}.
	\end{align*}
	It remains to massage the above display to obtain the descent descent guarantee:
	\begin{align*}
	\Term_1 &= \left\langle  \updelta\bchuk[1:K]^{(n;1)}, \nabla \Jpidisc[\pin](\buk[1:K]^{\pin})\right\rangle + M\step\|\updelta\bchuk[1:K]^{(n;1)}\|_{\ell_2}^2\\
	&\le \langle  \updelta\bchuk[1:K]^{(n;1)}, \bnabhatkn[1:K] \rangle +  \|\updelta\bchuk^{(n;1)}\|_{\ell_2}\sqrt{K}\errnab  + M\step\|\updelta\bchuk[1:K]^{(n;1)}\|_{\ell_2}^2\\
	&\le \langle  \updelta\bchuk[1:K]^{(n;1)}, \bnabhatkn[1:K]\rangle +  \frac{K}{4R\step}\errnab^2  + 2M\step\|\updelta\bchuk[1:K]^{(n;1)}\|_{\ell_2}^2\\
	&=  (-\frac{\eta}{\step} + 2M\frac{\eta^2}{\step})\|\bnabhatkn[1:K]\|_{\ell_2}^2 +  \frac{T}{4R\step^2}\errnab^2 \\
	&\ge  -\frac{\eta}{2\step}\|\bnabhatkn[1:K]\|_{\ell_2}^2 +  \frac{T}{4M\step^2}\errnab^2,
	\end{align*}
	where the last step uses $\eta \le \frac{1}{4M}$.
	Then, 
	\begin{align*}
	\Term_2 &= \left\langle  \updelta\bchuk[1:K]^{(n;2)}, \nabla \Jpidisc[\pin](\buk[1:K]^{\pin})\right\rangle + M\step\|\updelta\bchuk[1:K]^{(n;2)}\|_{\ell_2}^2\\
	&\le \|\updelta\bchuk[1:K]^{(n;2)}\|_{\ell_2} \|\nabla \Jpidisc[\pin](\buk[1:K]^{\pin})\|_{\ell_2} + M\step\|\updelta\bchuk[1:K]^{(n;2)}\|_{\ell_2}^2\\
	&\le \sqrt{\step}\|\updelta\bchuk[1:K]^{(n;2)}\|_{\ell_2} \cdot \sqrt{K}\step\Lnabpiinf + M\step\|\updelta\bchuk[1:K]^{(n;2)}\|_{\ell_2}^2 \tag{\Cref{lem:grad_bound}}\\
	&\le \sqrt{ K}\max_{k}\|\updelta\bchuk^{(n;2)}\| \cdot \sqrt{K}\step\Lnabpiinf + M\step K\max_{k}\|\updelta\bchuk\|^2 \\
	&= T(\errx \Lnabpiinf + M \errx^2).
	\end{align*}
	Thus,
	\begin{align*}
	&\Jdisc(\pi^{(n+1)}) - \Jdisc(\pi^{(n)}) \\
	&\le  \Term_1 + \Term_2 \\
	&\le -\frac{\eta}{2\step}\|\bnabhatkn[1:K]\|_{\ell_2}^2 +  T(\frac{1}{4M\step^2}\errnab^2 + \errx \Lnabpiinf + M \errx^2).
	\end{align*}

	\end{proof}

\subsection{Proof of \Cref{prop:Jpijac}}\label{sec:proof:Jpijac}
	The proof of \Cref{prop:Jpijac} makes liberal use of the definitions of the linearizations given in \Cref{defn:Jlin_prelim}, which we recall without further comment. Going forward, introduce the Jacobian linearization of the stabilized cost:
	\begin{align*}
	\Jpijac(\bbaru) &:= V(\btilxpijac(t \mid \bbaru)) + \int_{0}^T Q(\btilxpijac(t \mid \bbaru),\btilupijac(t \mid \bbaru),t)\rmd t.
	\end{align*}
	We now characterize some properties of $\Jpijac$.
	\begin{lemma}[Valid First-Order Approximation]\label{lem:valid_first_order} We have that $\nabla_{\bbaru}\Jpi(\bbaru) = \nabla_{\bbaru}\Jpijac(\bbaru)$.
	\end{lemma}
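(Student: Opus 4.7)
The key observation is that $\Jpijac$ is defined from $\Jpi$ by substituting the stabilized trajectories $(\btilxpict,\btilupict)$ with their Jacobian linearizations $(\btilxpijac,\btilupijac)$ around $\bbaru=\upi$ (see \Cref{defn:Jac_lin_ct}). By construction of the Jacobian linearization, these objects satisfy two key properties at the expansion point: \textbf{(i)} the values agree, $\btilxpijac(t\mid\upi)=\btilxpict(t\mid\upi)=\xpi(t)$ and $\btilupijac(t\mid\upi)=\btilupict(t\mid\upi)=\upi(t)$, since the perturbation $\bbaru-\upi$ vanishes; and \textbf{(ii)} the Fréchet derivatives agree, i.e.\ $\rmD\btilxpijac(t\mid\bbaru)[\delv]\big|_{\bbaru=\upi}=\rmD\btilxpict(t\mid\bbaru)[\delv]\big|_{\bbaru=\upi}$ (and similarly for the inputs), because the Jacobian linearization is by definition the first-order Taylor expansion of the underlying map in $\cU$.

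My plan is therefore to interpret the claim as the equality of gradients at the expansion point $\bbaru=\upi$ (which is what is actually needed later in the PL step of \Cref{prop:Jpijac}) and prove it by chain rule. Concretely, I would write out the directional derivative of $\Jpi$ at $\bbaru=\upi$ along an arbitrary $\delv\in\cU$: using smoothness of $V$ and $Q$ (\Cref{asm:cost_asm}),
\begin{align*}
\rmD\Jpi(\upi)[\delv] &= \langle \partx V(\xpi(T)), \rmD\btilxpict(T\mid\bbaru)[\delv]\big|_{\upi}\rangle \\
&\quad +\int_0^T\!\!\left(\langle\partx Q(\xpi(t),\upi(t),t),\rmD\btilxpict(t\mid\bbaru)[\delv]\big|_{\upi}\rangle +\langle\partu Q(\xpi(t),\upi(t),t),\rmD\btilupict(t\mid\bbaru)[\delv]\big|_{\upi}\rangle\right)\rmd t.
\end{align*}
The same chain-rule computation applied to $\Jpijac$ produces an identical expression: the outer partial derivatives of $V$ and $Q$ are evaluated at the common points $(\xpi(t),\upi(t))$ by (i), and the inner directional derivatives of the trajectory/input coincide by (ii). Matching term-by-term and invoking the (a.e.) uniqueness of the Riesz representative in $\ltwou$ gives $\nabla_{\bbaru}\Jpi(\bbaru)\big|_{\bbaru=\upi}=\nabla_{\bbaru}\Jpijac(\bbaru)\big|_{\bbaru=\upi}$, which is the content of the lemma.

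I do not anticipate any substantive obstacle; the only bookkeeping point is ensuring $\btilxpict,\btilupict$ are genuinely Fréchet differentiable in $\bbaru\in\cU$ so that the Jacobian linearizations $\btilxpijac,\btilupijac$ from \Cref{defn:Jac_lin_ct} are well defined. This is already implicit in \Cref{lem:jac_cl_implicit,lem:linearizations}, which explicitly solve the linearized ODEs under the $\cctwo$ regularity of $\fdyn$ and give the derivative transition operator $\Phiclpi(s,t)$; consequently the interchange of $\rmD$ with the cost integral is justified by dominated convergence on the compact horizon $[0,T]$ under the uniform bounds of \Cref{asm:max_dyn_final,asm:cost_asm}.
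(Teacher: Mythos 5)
Your proof is correct and takes essentially the same route as the paper, whose entire proof reads ``Immediate from the chain rule, and the fact that the Jacobian linearizations are defined as the first-order Taylor expansion of the true dynamics''; you have simply spelled out the chain-rule computation and the agreement of values and first derivatives at the expansion point. Your reading of the statement as an identity at $\bbaru=\upi$ is also the intended one, since that is exactly how the lemma is invoked in the proof of \Cref{prop:Jpijac}.
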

	\begin{proof} Immediate from the chain rule, and the fact that the Jacobian linearizations are defined as the first-order Taylor expansion of the true dynamics.
	\end{proof}

	\begin{lemma}[Congruence with the Open-Loop]\label{lem:ol_congruence}
	\begin{align*}
	\inf_{\btilu} \Jpijac(\btilu) = \inf_{\btilu} \Jjac_T(\btilu;\upi).
	\end{align*}
	\end{lemma}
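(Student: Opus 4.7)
The plan is to exhibit a cost-preserving bijection between closed-loop nominal inputs $\btilu \in \cU$ (as arguments of $\Jpijac$) and open-loop perturbations $\bbaru \in \cU$ (as arguments of $\Jjac_T(\,\cdot\,;\upi)$). Once both directions are in hand, equality of the two infima is immediate.

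For the forward direction, given any $\btilu \in \cU$, I would set $\bbaru(t) := \btilupijac(t\mid \btilu)$ to be the effective input actually applied after feedback, which lies in $\cU$ since $\btilxpijac$ is continuous on $[0,T]$ and the feedback gains $\bKkpi$ are uniformly bounded. The implicit characterization in \Cref{lem:jac_cl_implicit} says that $\updelta \btilxpijac(t \mid \btilu)$ satisfies the affine ODE
\begin{align*}
\ddt \updelta\btilxpijac(t) = \Api(t) \updelta\btilxpijac(t) + \Bpi(t)\bigl(\bbaru(t) - \upi(t)\bigr), \quad \updelta\btilxpijac(0) = 0,
\end{align*}
which is exactly the ODE that \Cref{lem:jac_ol_implicit} identifies as the defining equation of $\updelta \xjac(t \mid \bbaru; \upi)$. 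By uniqueness of solutions to affine ODEs (e.g.\ \Cref{lem:solve_affine_ode}), $\btilxpijac(t\mid \btilu) = \xjac(t\mid \bbaru;\upi)$ pointwise. Substituting this identity (and the identity $\btilupijac(t\mid\btilu) = \bbaru(t)$) into the respective cost functionals immediately yields $\Jpijac(\btilu) = \Jjac_T(\bbaru;\upi)$.

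For the reverse direction, given any $\bbaru \in \cU$, I would define
\begin{align*}
\btilu(t) := \bbaru(t) - \bKkpi[k(t)]\bigl(\xjac(t_{k(t)} \mid \bbaru;\upi) - \xpi(t_{k(t)})\bigr),
\end{align*}
where the right-hand side is built entirely from $\bbaru$ (so there is no circularity) and lies in $\cU$ because $\xjac(\cdot\mid\bbaru;\upi)$ is continuous on $[0,T]$. One then verifies, again from \Cref{lem:jac_cl_implicit} and ODE uniqueness, that the trajectory driven by this $\btilu$ in closed loop coincides with $\xjac(\cdot\mid\bbaru;\upi)$, and consequently $\btilupijac(t\mid\btilu) = \bbaru(t)$. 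The two cost functionals therefore agree on this corresponding pair as well.

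Having produced, from each element on either side, a matching element on the other with identical cost, I would conclude $\inf_{\btilu}\Jpijac(\btilu) = \inf_{\bbaru}\Jjac_T(\bbaru;\upi)$. The only mildly delicate point is confirming that the reverse map $\bbaru \mapsto \btilu$ lands in $\cU$ and that it indeed inverts the forward map; both reduce to standard uniqueness of linear ODE solutions together with the boundedness of $\bKkpi$ and continuity of $\xjac$ on the compact interval $[0,T]$, so I do not expect any serious obstacle.
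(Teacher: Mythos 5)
Your proposal is correct and follows essentially the same route as the paper: both proofs exhibit a cost-preserving correspondence between inputs for $\Jpijac$ and inputs for $\Jjac_T(\cdot\,;\upi)$ by cancelling (or reinstating) the feedback term and invoking the implicit ODE characterizations of \Cref{lem:jac_cl_implicit,lem:jac_ol_implicit} together with uniqueness of solutions to affine ODEs. The paper proves one inequality and declares the converse symmetric, whereas you spell out both directions; your use of the sampled deviation $\xjac(t_{k(t)}\mid\bbaru;\upi)-\xpi(t_{k(t)})$ in the feedback-cancelling input is the correct form matching the discrete-time gains.
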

	\begin{proof} We prove $\inf_{\btilu} \Jpijac(\btilu) \le \inf_{\btilu} \Jjac_T(\btilu;\upi)$; the converse can be proved similarly. Fix any $\bbaru_1 \in \cU$. It suffices to exhibit some $\bbaru_2 \in \cU$ such that, for all $t \in [0,T]$,
	\begin{align*}
	\btilxpijac(t\mid \bbaru_2) = \xjac(t \mid \bbaru_1;\upi), \quad \btilupijac(t\mid \bbaru_2) = \bbaru_1(t).
	\end{align*}
	By substracting off $\xpi(t)$ and $\upi(t)$, it suffices to show that 
	\begin{align*}
	\updelta\btilxpijac(t\mid \bbaru_2) = \updelta\xjac(t \mid \bbaru_1;\upi), \quad \updelta\btilupijac(t\mid \bbaru_2) = \updelta\bbaru_1(t).
	\end{align*}
	It can be directly checked from \Cref{lem:jac_cl_implicit,lem:jac_ol_implicit} that the input $\bbaru_2(t) = \bbaru_1(t) - \bKkpi[k(t)]\updelta\xjac(t \mid \bbaru_1;\upi)$ ensures the above display holds.
	\end{proof}

	\newcommand{\alphapi}{\alpha_{\pi}}
	The last lemma contains our main technical endeavor, and its proof is defered to \Cref{lem:strong_cvx_establish} just below.
	\begin{lemma}[Strong Convexity]\label{lem:strong_cvx_establish} Suppose $\step \le \min\{\frac{1}{4\LF},\frac{1}{16\Lpi\LF}\}$. Then, $\bbaru \mapsto \Jpijac(\bbaru) - \alphapi \|\bbaru\|_{\cL_2(\cU)}^2$ is convex, where $\alphapi:=\frac{\alpha}{64\max\{1,\Lpi^2\}}$.
	\end{lemma}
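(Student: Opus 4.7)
}
The plan is to exploit the fact that the Jacobian-linearized trajectories $\btilxpijac(t;\bbaru)$ and $\btilupijac(t;\bbaru)$ are \emph{affine} in $\bbaru$, and then invert the gain feedback to recover $\bbaru$ from the trajectory pair.  By \Cref{asm:convexity}, the map $(x,u)\mapsto Q(x,u,t)-\tfrac{\alpha}{2}(\|x\|^2+\|u\|^2)$ is convex in $(x,u)$, so composing with the affine maps $\bbaru\mapsto(\btilxpijac(t;\bbaru),\btilupijac(t;\bbaru))$ preserves convexity.  Together with convexity of $V$, this shows
\[
\bbaru\;\mapsto\; \Jpijac(\bbaru)\;-\;\tfrac{\alpha}{2}\int_0^T\!\bigl(\|\btilxpijac(t;\bbaru)\|^2+\|\btilupijac(t;\bbaru)\|^2\bigr)\rmd t
\]
is convex.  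Hence, writing $\bv=\bbaru-\upi$ and using that the quadratic-in-$\bv$ parts of $\|\btilxpijac(t;\bbaru)\|^2$ and $\|\btilupijac(t;\bbaru)\|^2$ are respectively $\|\mathcal{L}_1^t\bv\|^2$ and $\|\mathcal{L}_2^t\bv\|^2$ for the linear maps $\mathcal{L}_1^t\bv=\updelta\btilxpijac(t;\upi+\bv)$ and $\mathcal{L}_2^t\bv=\updelta\btilupijac(t;\upi+\bv)$, the lemma reduces to the \emph{quadratic} inequality
\begin{equation}\label{eq:plan-reduction}
\int_0^T\!\bigl(\|\mathcal{L}_1^t\bv\|^2+\|\mathcal{L}_2^t\bv\|^2\bigr)\rmd t\;\ge\;\frac{1}{32\max\{1,\Lpi^2\}}\,\|\bv\|_{\ltwou}^2 \qquad \forall\, \bv\in\cU.
\end{equation}

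The core computation is an inversion argument based on \Cref{lem:jac_cl_implicit}, which gives the crucial identity $\mathcal{L}_2^t\bv=\bv(t)+\bKkpi[k(t)]\,\mathcal{L}_1^{\tkt}\bv$.  Rearranging and using $(a+b)^2\le 2a^2+2b^2$ with $\|\bKkpi\|\le\Lpi$ yields the pointwise bound $\|\bv(t)\|^2\le 2\|\mathcal{L}_2^t\bv\|^2+2\Lpi^2\|\mathcal{L}_1^{\tkt}\bv\|^2$.  Integrating produces
\[
\|\bv\|_{\ltwou}^2\;\le\; 2\!\int_0^T\!\|\mathcal{L}_2^t\bv\|^2\rmd t\;+\;2\Lpi^2\step\!\sum_{k=1}^{K}\|\mathcal{L}_1^{t_k}\bv\|^2,
\]
so the remaining task is to absorb $\step\sum_k\|\mathcal{L}_1^{t_k}\bv\|^2$ into $\int_0^T\|\mathcal{L}_1^t\bv\|^2\rmd t$ up to a small multiplicative error.

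This absorption is the step that will need the most care, and is where the step-size hypothesis enters.  Applying the implicit ODE from \Cref{lem:jac_cl_implicit}, $\ddt\updelta\btilxpijac(t)=\Api(t)\updelta\btilxpijac(t)+\Bpi(t)\updelta\btilupijac(t)$, and integrating from $\tkt$ to $t$ with Cauchy--Schwarz gives $\|\updelta\btilxpijac(t)-\updelta\btilxpijac(\tkt)\|^2\le 2\step\LF^2\int_{\cI_{k(t)}}(\|\updelta\btilxpijac\|^2+\|\updelta\btilupijac\|^2)\rmd s$; summing over $k$ and using $(a-b)^2\le 2a^2+2b^2$ yields
\[
\step\!\sum_{k=1}^K\|\mathcal{L}_1^{t_k}\bv\|^2\;\le\; 2\!\int_0^T\!\|\mathcal{L}_1^t\bv\|^2\rmd t\;+\;4\step^2\LF^2\!\int_0^T\!\bigl(\|\mathcal{L}_1^t\bv\|^2+\|\mathcal{L}_2^t\bv\|^2\bigr)\rmd t.
\]
Under $\step\le 1/(4\LF)$ we have $\step^2\LF^2\le 1/16$, so the second term is negligible, giving $\step\sum_k\|\mathcal{L}_1^{t_k}\bv\|^2\le 3\int_0^T\|\mathcal{L}_1^t\bv\|^2\rmd t+\tfrac14\int_0^T\|\mathcal{L}_2^t\bv\|^2\rmd t$.

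Substituting this back and using $\Lpi\ge1$ yields $\|\bv\|_{\ltwou}^2\le (2+\Lpi^2/2)\int_0^T\|\mathcal{L}_2^t\bv\|^2\rmd t+6\Lpi^2\int_0^T\|\mathcal{L}_1^t\bv\|^2\rmd t\le 6\Lpi^2\int_0^T(\|\mathcal{L}_1^t\bv\|^2+\|\mathcal{L}_2^t\bv\|^2)\rmd t$, establishing \eqref{eq:plan-reduction} with constant $1/(6\Lpi^2)\ge 1/(32\Lpi^2)$ and completing the argument.  The second step-size condition $\step\le 1/(16\Lpi\LF)$ is not needed for this lemma per se, but ensures compatibility with the Taylor-expansion hypotheses used later; the main analytical obstacle is the discretization control of $\step\sum_k\|\mathcal{L}_1^{t_k}\bv\|^2$ above, and the gain-induced coupling of $\bv$ to the sampled state in \Cref{lem:jac_cl_implicit} is what forces the $\Lpi^{-2}$ loss in $\alphapi$.
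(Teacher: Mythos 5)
Your proposal is correct, and it follows the paper's high-level strategy exactly in its first half: both arguments use affineness of the Jacobian-linearized trajectories together with \Cref{asm:convexity} to reduce the lemma to the coercivity bound $\int_0^T(\|\updelta\btilxpijac(t\mid\bbaru)\|^2+\|\updelta\btilupijac(t\mid\bbaru)\|^2)\,\rmd t \gtrsim \Lpi^{-2}\|\bbaru-\upi\|_{\ltwou}^2$ (the paper's \Cref{eq:Qpi_lb}), and both prove that bound by inverting the gain relation $\updelta\btilupijac(t)=\bv(t)+\bKkpi[k(t)]\updelta\btilxpijac(\tkt)$ from \Cref{lem:jac_cl_implicit}. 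Where you genuinely diverge is in how the coercivity bound is established. The paper packages the problem into Hilbert-space operators $\bbT_1,\bbT_2,\bbK,\bbL,\bbW$ with $\bbT_2=\bbL\bbT_1+\bbW$, and proves an abstract dichotomy (\Cref{lem:PSD_hilbert_space}): either the sampled state deviation $\|\bbT_1\bv\|$ is small, in which case $\|(\eye_{\cU}+\bbK\bbT_1)\bv\|$ is bounded below, or it is large, in which case $\|\bbL\bbT_1\bv+\bbW\bv\|$ is. This requires the operator-norm estimates $\sigma_{\min}(\bbL)\ge 1/2$ and $\|\bbW\|_{\op}\le 2\step\LF$, and it is the condition $\|\bbW\|_{\op}\le 1/(8\Lpi)$ that forces $\step\le 1/(16\Lpi\LF)$. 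You instead run a direct chain: $\|\bv(t)\|^2\le 2\|\mathcal{L}_2^t\bv\|^2+2\Lpi^2\|\mathcal{L}_1^{\tkt}\bv\|^2$, followed by the Riemann-sum comparison $\step\sum_k\|\mathcal{L}_1^{t_k}\bv\|^2\le 2\int_0^T\|\mathcal{L}_1^t\bv\|^2\rmd t+4\step^2\LF^2\int_0^T(\|\mathcal{L}_1^t\bv\|^2+\|\mathcal{L}_2^t\bv\|^2)\rmd t$ obtained from the linearized ODE and Cauchy--Schwarz. I have checked these estimates and they are correct; they yield the constant $1/(6\Lpi^2)$, which comfortably dominates the $1/(32\Lpi^2)$ you need (and the paper's $1/(64\Lpi^2)$), and your observation that only $\step\le 1/(4\LF)$ is used is accurate. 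The trade-off is that your route is more elementary and avoids the case analysis and the operator $\bbW$ entirely, at the cost of being specific to this quadratic form, whereas the paper's \Cref{lem:PSD_hilbert_space} is a reusable abstract statement. One small bookkeeping point in your favor: you correctly track the factor $\frac{\alpha}{2}$ from \Cref{asm:convexity} (hence the target $\frac{2\alphapi}{\alpha}=\frac{1}{32\max\{1,\Lpi^2\}}$), which the paper's \Cref{claim:str_cvx} elides by writing $\alpha\tilde\cQ^\pi$ in place of $\frac{\alpha}{2}\tilde\cQ^\pi$.
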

	We may now conclude the proof of our proposition.
	\begin{proof}[Proof of \Cref{prop:Jpijac}] Suppose that  $\pi$ satisfies \Cref{cond:pi_cond}, and suppose  $\step \le \min\{\frac{1}{4\LF},\frac{1}{16\Lpi\LF}\}$ and $\|\nabla_{\bbaru}\Jpi(\bbaru)\big{|}_{\bbaru = \upi}\|_{\cL_2(\cU))} \le \epsilon_0$. By \Cref{lem:valid_first_order}, $\|\nabla_{\bbaru}\Jpijac(\bbaru)\big{|}_{\bbaru = \upi}\|_{\cL_2(\cU))} \le \epsilon_0$. By \Cref{lem:strong_cvx_establish} and the fact that strong convex functions satisfy the PL-inequality (e.g. \citet[Theorem 2]{karimi2016linear}), we have
	\begin{align*}
	\Jpijac(\upi) \le \inf_{\bbaru} \Jpijac(\bbaru) + \frac{\epsilon_0^2}{\alpha_{\pi}} =  \inf_{\bbaru} \Jpijac(\bbaru) +  \frac{64\epsilon_0^2\max\{1,\Lpi^2\}}{\alpha}.
	\end{align*}
	Finally, by \Cref{lem:ol_congruence}, $ \inf_{\bbaru} \Jpijac(\bbaru) =  \inf_{\bbaru} \Jjac_T(\bbaru;\bu)$, which implies the proposition.
	\end{proof}

\subsubsection{Proof of \Cref{lem:strong_cvx_establish}}\label{sec:lem:strong_cvx_establish}
	
	\begin{proof} We claim that suffices to show the following PSD lower bound:
	\begin{align}
	\forall \bbaru \in \cU, \cQ^{\pi}(\bbaru) \ge \frac{\alphapi}{\alpha} \|\bbaru\|_{\cL_2}^2, \label{eq:Qpi_lb}
	\end{align}
	where we define
	\begin{align*}
	\cQ^{\pi}(\bbaru) := \int_{0}^T (\|\updelta\btilxpijac(t \mid \bbaru)\|^2 + \|\updelta\btilupijac(t \mid \bbaru)\|^2)\rmd t,
	\end{align*}
	and where we define the deviations $\updelta (\cdot)$ as in \Cref{lem:jac_cl_implicit}.
	\begin{claim}\label{claim:str_cvx} If \Cref{eq:Qpi_lb} holds, then \Cref{lem:strong_cvx_establish} holds.
	\end{claim}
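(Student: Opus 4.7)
The plan is to derive a Hessian-form lower bound on $\Jpijac$ by combining the convexity hypotheses on $V$ and $Q$ with the fact that the Jacobian-linearized state and control trajectories are affine in $\bbaru$, and then to apply \Cref{eq:Qpi_lb} to translate the resulting bound into the claimed convexity of $\Jpijac - \alphapi\|\cdot\|_{\cL_2(\cU)}^2$.

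By \Cref{defn:Jac_lin_ct}, the maps $\bbaru\mapsto\btilxpijac(t\mid\bbaru)$ and $\bbaru\mapsto\btilupijac(t\mid\bbaru)$ are affine in $\bbaru$, so for any direction $\delta\in\cU$ the first $\eta$-derivatives of $\btilxpijac(t\mid\bbaru+\eta\delta)$ and $\btilupijac(t\mid\bbaru+\eta\delta)$ coincide with the linearized sensitivities $\xi_x(t):=\updelta\btilxpijac(t\mid\upi+\delta)$ and $\xi_u(t):=\updelta\btilupijac(t\mid\upi+\delta)$ --- independently of the basepoint $\bbaru$ --- while the second $\eta$-derivatives vanish identically. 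Applying the chain rule to the definition of $\Jpijac$ then gives
\begin{align*}
\frac{\rmd^2}{\rmd\eta^2}\Jpijac(\bbaru+\eta\delta)\Big|_{\eta=0} \;=\; \xi_x(T)^\top\nabla^2 V(\btilxpijac(T\mid\bbaru))\,\xi_x(T) \;+\; \int_{0}^{T}(\xi_x,\xi_u)^\top\,\nabla^2 Q\,(\xi_x,\xi_u)\,\rmd t.
\end{align*}

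By \Cref{asm:convexity}, $\nabla^2 V\succeq 0$ and $\nabla^2 Q\succeq \alpha\,\eye$, so this second variation is bounded below by $\alpha\int_{0}^{T}(\|\xi_x(t)\|^2+\|\xi_u(t)\|^2)\,\rmd t$, which is exactly $\alpha\,\cQ^{\pi}(\upi+\delta)$ by the definition of $\cQ^{\pi}$. Invoking the hypothesized quadratic lower bound \Cref{eq:Qpi_lb} at the point $\upi+\delta$ yields $\alpha\,\cQ^{\pi}(\upi+\delta)\ge \alphapi\,\|\delta\|^2_{\cL_2(\cU)}$, so the Hessian form of $\Jpijac$ dominates $\alphapi\,\eye$ on $\cL_2(\cU)$ uniformly in the basepoint $\bbaru$. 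This is equivalent to the claimed convexity of $\Jpijac-\alphapi\|\cdot\|^2_{\cL_2(\cU)}$, up to the standard constant-factor convention in passing between Hessian bounds and the ``subtract quadratic'' form of strong convexity.

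The main obstacle I expect is the bookkeeping in the first step: carefully verifying, from the explicit construction in \Cref{defn:Jac_lin_ct}, that the basepoint-dependent directional derivative $\frac{\rmd}{\rmd\eta}\btilxpijac(t\mid\bbaru+\eta\delta)|_{\eta=0}$ is indeed the formal perturbation $\updelta\btilxpijac(\cdot\mid\upi+\delta)$ computed at the nominal input $\upi$, so that \Cref{eq:Qpi_lb} can be applied with $\delta$ playing the role of the input on the right-hand side (rather than $\upi+\delta$). Once this identification is in place the argument reduces to the single chain-rule computation above followed by the hypothesized quadratic inequality.
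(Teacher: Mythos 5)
Your proposal is correct in substance and arrives at the result by the second-variation route rather than the paper's decomposition argument. The paper never differentiates twice: it writes $\Jpijac(\bbaru) - \alphapi\|\bbaru\|_{\ltwou}^2$ as the sum of $\Jpijac - \alpha\tilde\cQ^{\pi}$ (convex because $V$ and $Q(\cdot,\cdot,t)-\tfrac{\alpha}{2}(\|x\|^2+\|u\|^2)$ are convex and the \JL{} trajectories are affine in $\bbaru$, plus the observation that $\tilde\cQ^\pi - \cQ^\pi$ is affine) and $\alpha\cQ^{\pi} - \alphapi\|\cdot\|_{\ltwou}^2$ (convex because $\cQ^\pi$ is a homogeneous quadratic form and \Cref{eq:Qpi_lb} makes this difference PSD). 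Your chain-rule computation of $\tfrac{\rmd^2}{\rmd\eta^2}\Jpijac(\bbaru+\eta\delta)$ is the infinitesimal version of exactly the same three ingredients, and it is valid here because the costs are $\cctwo$ and the \JL{} maps are affine; the paper's version buys you freedom from justifying twice-differentiability and from the line-restriction argument needed to pass from a uniform Hessian bound back to convexity, at the cost of introducing the auxiliary functional $\tilde\cQ^\pi$. The ``main obstacle'' you flag — whether \Cref{eq:Qpi_lb} can be applied with the direction $\delta$ rather than the basepoint $\upi+\delta$ on the right-hand side — is precisely what the paper's observation that $\cQ^\pi$ is ``quadratic with no linear term'' resolves: the functional $\delta \mapsto \int_0^T(\|\updelta\btilxpijac(t\mid\upi+\delta)\|^2 + \|\updelta\btilupijac(t\mid\upi+\delta)\|^2)\rmd t$ is a homogeneous quadratic form in the perturbation (by \Cref{lem:linearizations}), so a bound of the form $q(\delta)\ge c\|\upi+\delta\|_{\ltwou}^2$ for all $\delta$ is equivalent, by replacing $\delta$ with $t\delta$ and sending $t\to\infty$, to $q(\delta)\ge c\|\delta\|_{\ltwou}^2$; in any case the operator-theoretic proof of \Cref{eq:Qpi_lb} establishes the bound directly in the perturbation variable. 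Two minor caveats: your Hessian bound yields convexity of $\Jpijac - \tfrac{\alpha}{2}\cQ^\pi$-type quantities, so tracking the factor of $2$ from \Cref{asm:convexity} through to the final constant would change $64$ to $128$ in the definition of $\alphapi$ (the paper's own proof carries the same factor-of-two slip), and you should state explicitly that a uniform-in-basepoint lower bound on the second variation along every line implies convexity of the regularized functional on $\cU$ — a one-line restriction-to-lines argument.
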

	\begin{proof}[Proof of \Cref{claim:str_cvx}]
	Note that $\bbaru \mapsto \btilxpijac(t \mid \bbaru)$ and $\bbaru \mapsto \btilupijac(t \mid \bbaru)$  are affine, that $\updelta\btilxpijac(t \mid \bbaru) = \btilxpijac(t \mid \bbaru) -\xpi(t)$ and $\updelta\btilupijac(t \mid \bbaru) = \btilupijac(t \mid \bbaru) -\upi(t)$ are linear (no affine term), and that the diferences $\btilxpijac(t \mid \bbaru) - \updelta\btilxpijac(t \mid \bbaru) = \xpi(t)$ and $\btilupijac(t \mid \bbaru) - \updelta\btilupijac(t \mid \bbaru) = \upi(t)$ are independent of $\bbaru$. Hence, we cocnlude $\cQ^\pi(\bbaru)$ is a quadratic function with no linear term, and $\tilde\cQ^\pi(\bbaru) - \cQ^\pi(\bbaru)$ is linear, where we define
	\begin{align*}
	\tilde\cQ^{\pi}(\bbaru) := \int_{0}^T (\|\btilxpijac(t \mid \bbaru)\|^2 + \|\btilupijac(t \mid \bbaru)\|^2)\rmd t,
	\end{align*}
	\Cref{asm:convexity} implies that $\Jpijac(\bbaru) - \alpha\tilde \cQ^{\pi}(\bbaru)$ is convex, and since the difference $\tilde\cQ^{\pi}(\bbaru) - \cQ^{\pi}(\bbaru)$ is linear, that $\Jpijac(\bbaru) - \alpha\tilde \cQ^{\pi}(\bbaru)$ is also convex. Lastly, as $\cQ^{\pi}(\bbaru)$ is quadratic with no linear term, \Cref{eq:Qpi_lb} implies $\alpha\cQ^{\pi}(\bbaru) - \alphapi \|\bbaru\|_{\cL_2}^2$ is convex. Thus,  $\Jpijac(\bbaru) -  \alphapi \|\bbaru\|_{\cL_2}^2 = (\Jpijac(\bbaru) - \alpha\tilde \cQ^{\pi}(\bbaru)) - (\alpha\cQ^{\pi}(\bbaru) - \alphapi \|\bbaru\|_{\cL_2}^2)$ is convex.
	\end{proof}
	\newcommand{\bbarx}{\bar{\bx}}
	To verify \Cref{eq:Qpi_lb}, let us define a few salient operators. Let $\cX$ denote the space of $\cL_2$ bounded curves $\bx(t) \in \R^{\dimx}$. We define linear operators $\bbT_1:\cU \to \cX$ and $\bbT_2:\cU \to \cX$ and $\bbK:\cX \to \cU$ via
	\begin{align*}
	\bbT_1[\bbaru] (t)= \updelta\btilxpijac(\tkt \mid \bbaru), \quad \bbT_2[\bbaru] (t)= \updelta\btilxpijac(t \mid \bbaru), \quad \bbK[\bbarx](t) = \bKkpi[k(t)]\bbarx(t).
	\end{align*}
	Then, letting $\eye_{\cU}$ denote the identity operator of $\cU$, we can write
	\begin{align*}
	\cQ^{\pi}(\bbaru) = \| \bbT_2[\bbaru]\|_{\cL_2(\cX)}^2 + \|(\eye_{\cU}+ \bbK\bbT_1)[\bbaru]\|_{\cL_2(\cU)}^2.
	\end{align*}
	Next, we relate $\bbT_2$ and $\bbT_1$. Define the operators $\bbL: \cX \to \cX$ and $\bbW:\cU \to \cX$ by
	\begin{align*}
	\bbL[\bbarx](t) = \Phiolpi(t,\tkt)\bbarx(t), \quad \bbW[\bbaru](t) = \int_{s=\tkt}^t \Phiolpi(t,s)\Bpi(s)\bbaru(s)\rmd s.
	\end{align*}
	Then, it can be checked from \Cref{lem:jac_cl_implicit,lem:solve_affine_ode} that
	\begin{align*}
	\bbT_2[\bbaru] = \bbL \bbT_1[\bbaru] + \bbW[\bbaru].
	\end{align*}
	Hence,
	\begin{align*}
	\cQ^{\pi}(\bbaru) = \| (\bbL\bbT_1+\bbW)[\bbaru]\|_{\cL_2(\cX)}^2 + \|(\eye_{\cU}+ \bbK\bbT_1)[\bbaru]\|_{\cL_2(\cU)}^2.
	\end{align*}
	With this representation of $\cQ^\pi$, we establish a lower bound by applying the following lemma, whose proof we below:
	\begin{lemma}\label{lem:PSD_hilbert_space} Let $\cA,\cB$ be Hilbert spaces with norms $\|\cdot\|_{\cA}$ and $\|\cdot\|_{\cB}$, let $\bI_{\cA}$ denote the identity operator on $\bU$, and let $\bbT,\bbW: \cA \to \cB$, $\bbL: \cB \to \cB$, and $\bK: \cB \to \cA$ be linear operators, and let $\|\cdot\|_{\op}$ denote operator norms. Then, if $\|\bbW\|_{\op} \le \frac{\min\{1,\sigma_{\min}(\bbL)\}}{4\|\bbK\|}$, it holds for any $\ba \in \cA$,
	\begin{align*}
	\|(\bbL\bbT + \bbW) [\ba]\|_{\cB}^2 + \|(\bI_{\cA} + \bbK\bbT )[\ba]\|_{\cA}^2 \ge \|\ba\|^2 \cdot \frac{\min\{1,\sigma_{\min}(\bbL)^2\}}{16\max\{1,\|\bbK\|_{\op}^2\}},
	\end{align*} 
	where $\sigma_{\min}(\bbL) := \inf_{\ba:\|\ba\|_{\cA} = 1}\|\bbL \ba\|_{\cA}$.
	\end{lemma}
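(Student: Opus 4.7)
The plan is to run a case split on the size of $\|\bbT \ba\|_\cB$ relative to $\|\ba\|_\cA$. Intuitively, the two quadratic terms play complementary roles: the first is large when $\bbT\ba$ is large (because $\bbL$ is bounded below by $\sigma_{\min}(\bbL)$ and $\bbW$ is controlled by hypothesis), while the second is large when $\bbT\ba$ is small (because then $\bbK\bbT\ba$ is a small perturbation of the identity term $\ba$). Choosing the threshold that balances these two regimes will give a uniform lower bound.

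More concretely, I would fix the threshold $C := \tfrac{1}{2\max\{1,\|\bbK\|_{\op}\}}$ and consider two cases. In the case $\|\bbT \ba\|_\cB > C\|\ba\|_\cA$, the reverse triangle inequality gives
\[
\|(\bbL\bbT + \bbW)[\ba]\|_\cB \;\ge\; \sigma_{\min}(\bbL)\|\bbT\ba\|_\cB - \|\bbW\|_{\op}\|\ba\|_\cA \;\ge\; \bigl(\sigma_{\min}(\bbL) C - \|\bbW\|_{\op}\bigr)\|\ba\|_\cA.
\]
The hypothesis $\|\bbW\|_{\op} \le \tfrac{\min\{1,\sigma_{\min}(\bbL)\}}{4\|\bbK\|_{\op}}$ is exactly what is needed so that this lower bound is at least $\tfrac{\min\{1,\sigma_{\min}(\bbL)\}}{4\max\{1,\|\bbK\|_{\op}\}}\|\ba\|_\cA$. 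Squaring then yields the claimed bound on the first term alone. In the complementary case $\|\bbT\ba\|_\cB \le C\|\ba\|_\cA$, I use the triangle inequality in the opposite direction:
\[
\|(\bI_\cA + \bbK\bbT)[\ba]\|_\cA \;\ge\; \|\ba\|_\cA - \|\bbK\|_{\op}\|\bbT\ba\|_\cB \;\ge\; (1 - \|\bbK\|_{\op} C)\|\ba\|_\cA \;\ge\; \tfrac{1}{2}\|\ba\|_\cA,
\]
by the choice of $C$, and squaring gives $\tfrac{1}{4}\|\ba\|_\cA^2$, which dominates the target $\tfrac{\min\{1,\sigma_{\min}(\bbL)^2\}}{16\max\{1,\|\bbK\|_{\op}^2\}}\|\ba\|_\cA^2$. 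Taking the minimum across cases finishes the proof.

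The only subtlety is making sure the constants line up, particularly the interplay between the $\min\{1,\sigma_{\min}(\bbL)\}$ and $\max\{1,\|\bbK\|_{\op}\}$ factors in the stated hypothesis and conclusion; the choice $C = \tfrac{1}{2\max\{1,\|\bbK\|_{\op}\}}$ and the specific form of the bound on $\|\bbW\|_{\op}$ are designed precisely so that the two cases yield comparable lower bounds. No functional-analytic machinery beyond the triangle inequality and the defining bound $\|\bbL \bv\|_\cB \ge \sigma_{\min}(\bbL)\|\bv\|_\cB$ is required, so the argument is essentially a finite-dimensional computation lifted verbatim to Hilbert spaces.
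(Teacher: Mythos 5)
Your overall strategy is exactly the paper's: split on the size of $\|\bbT\ba\|$ against a threshold, and use the reverse triangle inequality in each branch (the paper sets $\eta = \frac{\min\{1,\sigma_{\min}(\bbL)\}}{2\|\bbK\|_{\op}}$ and considers $\|\bbT\ba\|\le\eta$ versus $\|\bbT\ba\|>\eta$). The idea is right, but your specific threshold $C = \tfrac{1}{2\max\{1,\|\bbK\|_{\op}\}}$ does not make the first case close. The hypothesis bounds $\|\bbW\|_{\op}$ by $\frac{\min\{1,\sigma_{\min}(\bbL)\}}{4\|\bbK\|_{\op}}$, a quantity that \emph{grows} as $\|\bbK\|_{\op}\to 0$, whereas your $\sigma_{\min}(\bbL)\,C$ saturates at $\sigma_{\min}(\bbL)/2$ once $\|\bbK\|_{\op}\le 1$. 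Concretely, take $\sigma_{\min}(\bbL)=1$ and $\|\bbK\|_{\op}=0.01$: the hypothesis permits $\|\bbW\|_{\op}$ as large as $25$, while $\sigma_{\min}(\bbL)\,C = \tfrac12$, so $\sigma_{\min}(\bbL)C - \|\bbW\|_{\op}$ is negative and your Case~1 lower bound is vacuous. Your claim that the hypothesis is ``exactly what is needed'' in that case is only true when $\|\bbK\|_{\op}\ge 1$.

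The fix is to let the threshold scale as $1/\|\bbK\|_{\op}$, matching the hypothesis: with $C = \tfrac{1}{2\|\bbK\|_{\op}}$ you get, in the small-$\bbT\ba$ case, $1 - \|\bbK\|_{\op}C = \tfrac12$, and in the large-$\bbT\ba$ case, $\sigma_{\min}(\bbL)C - \|\bbW\|_{\op} \ge \frac{\min\{1,\sigma_{\min}(\bbL)\}}{2\|\bbK\|_{\op}} - \frac{\min\{1,\sigma_{\min}(\bbL)\}}{4\|\bbK\|_{\op}} = \frac{\min\{1,\sigma_{\min}(\bbL)\}}{4\|\bbK\|_{\op}} \ge \frac{\min\{1,\sigma_{\min}(\bbL)\}}{4\max\{1,\|\bbK\|_{\op}\}}$, and squaring both branches gives the stated constant. (For what it is worth, the paper's own write-up of the second case silently drops a factor of $\sigma_{\min}(\bbL)$ in passing from $\|\bbT\ba\|>\eta$ to $\|\bbL\bbT\ba\|>\eta$, so its constants are also only tight up to the regime $\sigma_{\min}(\bbL)\ge 1$ in which the lemma is applied; but the structure of the argument, which you reproduced, is correct.)
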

	To apply the lemma, we first perform a few computations. Throughout, we use $\|\cdot\|_{\op}$ to denote operator norm, and $\sigma_{\min}$ to denote minimal singular value as an operator, e.g. 
	\begin{align*}
	\|\bbK\|_{\op} = \sup_{\bbarx \ne 0} \frac{\|\bbK[\bbarx]\|_{\cL_2(\cU)}}{\|\bbarx\|_{\cL_2(\cX)}}, \sigma_{\min}(\bbL) = \inf_{\bbarx \ne 0} \frac{\|\bbL[\bbarx]\|_{\cL_2(\cU)}}{\|\bbarx\|_{\cL_2(\cX)}}.
	\end{align*}
	\begin{claim} Under \Cref{cond:pi_cond}, $\|\bbK\|_{\op} \le \Lpi$.
	\end{claim}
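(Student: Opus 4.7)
The claim is essentially immediate from the definition of $\bbK$ and the $\cL_2$ norms on $\cU$ and $\cX$, combined with the bound $\max_k \|\bKkpi\| \le \Lpi$ given by \Cref{defn:pigains} under \Cref{cond:pi_cond}. My plan is simply to expand the definition of the operator norm and apply Cauchy-Schwarz-style bounds pointwise in $t$.

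Specifically, for any $\bbarx \in \cX$, by definition $\bbK[\bbarx](t) = \bKkpi[k(t)]\bbarx(t)$, so
\begin{align*}
\|\bbK[\bbarx]\|_{\cL_2(\cU)}^2 = \int_0^T \|\bKkpi[k(t)]\bbarx(t)\|^2 \rmd t \le \int_0^T \|\bKkpi[k(t)]\|_{\op}^2 \|\bbarx(t)\|^2 \rmd t.
\end{align*}
I would then pull out the uniform bound $\|\bKkpi[k(t)]\|_{\op} \le \max_{k \in [K]} \|\bKkpi\| \le \Lpi$ (the second inequality being the definition of $\Lpi$ in \Cref{defn:pigains}), obtaining
\begin{align*}
\|\bbK[\bbarx]\|_{\cL_2(\cU)}^2 \le \Lpi^2 \int_0^T \|\bbarx(t)\|^2 \rmd t = \Lpi^2 \|\bbarx\|_{\cL_2(\cX)}^2.
\end{align*}
Taking square roots and supremizing over $\bbarx \ne 0$ yields $\|\bbK\|_{\op} \le \Lpi$.

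There is no real obstacle here — the claim is a one-line pointwise operator bound lifted to $\cL_2$ by monotonicity of the integral. The only subtlety worth flagging is that the function $t \mapsto k(t)$ is piecewise constant on the intervals $\cI_k$ (as defined in the discretization setup of \Cref{sec:setting}), so $t \mapsto \bKkpi[k(t)]$ is a measurable step function of $t$ and the integrand is well-defined; this is already implicit in the definition of $\istep$ and the treatment of $\cU$ throughout the paper, so it needs at most a parenthetical mention.
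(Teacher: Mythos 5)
Your proof is correct and is essentially identical to the paper's: the paper simply observes that $\bbK$ is a block-diagonal (pointwise-in-$t$) multiplication operator, so its $\cL_2(\cX)\to\cL_2(\cU)$ operator norm is bounded by $\max_k\|\bKkpi\| \le \Lpi$, which is exactly the integral computation you spell out. No gaps; your parenthetical about measurability of $t\mapsto \bKkpi[k(t)]$ is a fine (and optional) addition.
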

	\begin{proof}
	Since $\bbK$ is a (block-)diagonal operator in $t$, i.e. $\bbK[\bbarx](t) = \bKkpi[k(t)]\bbarx(t)$,  its $\cL_2(\cX) \to \cL_2(\cU)$ operator is bounded by $\max_{k}\|\bKkpi\|$, which is at most $\Lpi$ under \Cref{cond:pi_cond}.
	\end{proof} 
	\begin{claim} For $\step \le \LF/4$, $\sigma_{\min}(\bbL) \ge 1-\step \LF\exp(\step\LF) \ge \frac{1}{2}$.
	\end{claim}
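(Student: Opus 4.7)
The plan is to exploit the fact that $\bbL$ acts diagonally in $t$: for each $t \in [0,T]$, $\bbL[\bbarx](t) = \Phiolpi(t,\tkt)\bbarx(t)$, and the fibers at distinct times do not interact. Consequently, for any $\bbarx \in \cX$,
\begin{align*}
\|\bbL[\bbarx]\|_{\cL_2(\cX)}^2 = \int_0^T \|\Phiolpi(t,\tkt)\bbarx(t)\|^2 \rmd t \ \ge\ \Big(\inf_{t \in [0,T]} \sigma_{\min}(\Phiolpi(t,\tkt))\Big)^2\, \|\bbarx\|_{\cL_2(\cX)}^2,
\end{align*}
so it suffices to lower bound $\sigma_{\min}(\Phiolpi(t,\tkt))$ uniformly in $t$. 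Moreover, by the reverse triangle inequality, $\sigma_{\min}(\Phiolpi(t,\tkt)) \ge 1 - \|\Phiolpi(t,\tkt) - \eye\|$, so the task reduces to uniformly bounding the norm of $\Phiolpi(t,\tkt) - \eye$ for $t \in [\tkt, \tkt + \step)$.

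Next, I will invoke the ODE defining $\Phiolpi$, namely $\dds \Phiolpi(s,\tkt) = \Api(s)\Phiolpi(s,\tkt)$ with $\Phiolpi(\tkt,\tkt) = \eye$. Since $\pi$ is feasible, \Cref{asm:max_dyn_final} yields $\|\Api(s)\| \le \LF$, and the already-established bound $\|\Phiolpi(s,\tkt)\| \le \exp(\LF(s-\tkt))$ (from \Cref{lem:bound_on_open_loop}, or a direct Gronwall argument) gives, upon integrating the defining ODE,
\begin{align*}
\|\Phiolpi(t,\tkt) - \eye\| \ \le\ \int_{\tkt}^{t}\|\Api(s)\|\|\Phiolpi(s,\tkt)\|\rmd s \ \le\ \LF(t-\tkt)\exp(\LF(t-\tkt)) \ \le\ \step\LF\exp(\step\LF).
\end{align*}
Combining with the previous paragraph establishes $\sigma_{\min}(\bbL) \ge 1 - \step\LF\exp(\step\LF)$. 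For the second inequality, assuming the evident typo $\step \le 1/(4\LF)$, we have $\step \LF \le 1/4$, hence $\step\LF\exp(\step\LF) \le \tfrac{1}{4}e^{1/4} < \tfrac{1}{2}$, so $\sigma_{\min}(\bbL) \ge 1/2$, as required. The main (and only) delicacy is ensuring the Gronwall-type bound on $\|\Phiolpi(t,\tkt)\|$ is available on the short interval $[\tkt,\tkt+\step)$, which is immediate from \Cref{lem:bound_on_open_loop} under feasibility of $\pi$; no other obstacle arises.
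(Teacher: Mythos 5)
Your proof is correct and follows essentially the same route as the paper: exploit the block-diagonal (in $t$) structure of $\bbL$ to reduce to $\inf_{t}\sigma_{\min}(\Phiolpi(t,\tkt))$, then apply $\sigma_{\min}(M)\ge 1-\|M-\eye\|$ together with the bound $\|\Phiolpi(t,\tkt)-\eye\|\le \step\LF\exp(\step\LF)$ from \Cref{lem:bound_on_open_loop}. You also correctly read the hypothesis as the intended $\step\le 1/(4\LF)$ (a typo the paper makes repeatedly), and your numeric check $\tfrac14 e^{1/4}<\tfrac12$ closes the argument.
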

	\begin{proof} Again, since $\bbL$ is a block-diagonal operator in $t$, i.e. $\bbL[\bbarx](t) = \Phiolpi(t,\tkt)\bbarx(t)$, $\sigma_{\min}(\bbL) = \inf_{t \in [0,T]}\sigma_{\min}(\Phiolpi(t,\tkt))$. By $\dots$, $\|\Phiolpi(t,\tkt)-\eye\| \le \step \LF \step(\step \LF) \le 1/2$. Thus,$ \inf_{t \in [0,T]}\sigma_{\min}(\Phiolpi(t,\tkt)) \ge 1/2$.
	\end{proof}
	\begin{claim} $\|\bbW\|_{\op} \le \step\LF\exp(\step \LF) $, which is at most $2\step \LF$ for $\step \le \LF/4$.
	\end{claim}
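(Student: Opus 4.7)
\textbf{Proof proposal for the claim $\|\bbW\|_{\op} \le \step\LF\exp(\step \LF)$.}

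The plan is to exploit the ``block-diagonal'' (in fact triangular) structure of $\bbW$ with respect to the partition $\{\cI_k\}_{k=1}^K$ of $[0,T]$. Specifically, for $t \in \cI_k$ the value $\bbW[\bbaru](t) = \int_{\tkt}^t \Phiolpi(t,s)\Bpi(s)\bbaru(s)\,\rmd s$ depends only on the restriction of $\bbaru$ to $\cI_k$, and the output lies in $\cL_2(\cI_k;\R^{\dimx})$. Consequently, $\|\bbW[\bbaru]\|_{\cL_2(\cX)}^2 = \sum_{k=1}^K \|\bbW[\bbaru]|_{\cI_k}\|_{\cL_2(\cI_k;\R^{\dimx})}^2$, and it suffices to establish the bound on each interval separately.

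Fix $k$ and $t \in \cI_k$. By \Cref{lem:bound_on_open_loop} (or directly from Picard's lemma, since $\|\Api(\cdot)\|\le \LF$), we have $\|\Phiolpi(t,s)\| \le \exp(\LF(t-s)) \le \exp(\step \LF)$ for all $s \in [\tkt, t]$, and $\|\Bpi(s)\| \le \LF$ by \Cref{asm:max_dyn_final}. Combining these pointwise bounds with Cauchy--Schwarz gives
\begin{align*}
\|\bbW[\bbaru](t)\|^2
&\le \lrbig{\int_{\tkt}^t \|\Phiolpi(t,s)\Bpi(s)\| \cdot \|\bbaru(s)\|\rmd s}^2\\
&\le \LF^2 \exp(2\step \LF) \cdot (t-\tkt) \cdot \int_{\tkt}^t \|\bbaru(s)\|^2 \rmd s\\
&\le \step\LF^2 \exp(2\step \LF) \int_{\tkt}^t \|\bbaru(s)\|^2 \rmd s.
\end{align*}

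Now integrate over $t \in [0,T]$ and swap the order of integration using Fubini--Tonelli. For each $s \in [0,T]$, the set $\{t \in [0,T] : \tkt \le s \le t\}$ is contained in $[s, t_{k(s)+1}]$, which has length at most $\step$. Therefore
\begin{align*}
\int_0^T \int_{\tkt}^t \|\bbaru(s)\|^2 \rmd s\, \rmd t = \int_0^T \|\bbaru(s)\|^2 \cdot (t_{k(s)+1}-s)\rmd s \le \step \|\bbaru\|_{\cL_2(\cU)}^2,
\end{align*}
yielding $\|\bbW[\bbaru]\|_{\cL_2(\cX)}^2 \le \step^2 \LF^2 \exp(2\step\LF) \|\bbaru\|_{\cL_2(\cU)}^2$, and the claimed bound follows by taking square roots. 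The final simplification $\step\LF\exp(\step \LF) \le 2\step \LF$ under $\step \le 1/4\LF$ uses $\exp(1/4) \le 2$.

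There is no real obstacle here: the argument is a routine Volterra/Young's-inequality estimate, and the only point worth care is the block-diagonal observation that lets us bound the kernel uniformly by $\LF \exp(\step \LF)$ (rather than by $\LF \exp(T \LF)$), avoiding any exponential-in-$T$ dependence.
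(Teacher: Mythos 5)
Your proof is correct and follows essentially the same route as the paper's: Cauchy--Schwarz on the inner integral, the pointwise bounds $\|\Phiolpi(t,s)\|\le \exp(\step\LF)$ and $\|\Bpi(s)\|\le\LF$, and a Fubini swap in which the $t$-range for each fixed $s$ has length at most $\step$. The only cosmetic difference is that you make the block-structure observation explicit up front, and you correctly read the stated condition as $\step\LF\le 1/4$ so that $\exp(1/4)\le 2$.
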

	\begin{proof} For any $\bbaru$, we bound
	\begin{align*}
	\|\bbW[\bbaru]\|_{\cL_2(\cU)}^2 &= \int_{t=0}^T \left\|\int_{s=\tkt}^t \Phiolpi(t,s)\Bpi(s)\bbaru(s)\rmd s\right\|^2\\
	&\le \int_{t=0}^T (t-\tkt)\int_{s=\tkt}^t \left\|\Phiolpi(t,s)\Bpi(s)\bbaru(s)\rmd s\right\|^2 \tag{Cauchy-Schwartz}\\
	&\le \step\exp(2\step \LF)\int_{t=0}^T \int_{s=\tkt}^t \|\Bpi(s)\|^2\left\|\bbaru(s)\rmd s\right\|^2 \tag{$t-\tkt \le \step$, \Cref{lem:bound_on_open_loop}}\\
	&\le \step\LF^2\exp(2\step \LF)\int_{t=0}^T \int_{s=\tkt}^t \left\|\bbaru(s)\rmd s\right\|^2 \tag{ \Cref{asm:max_dyn_final}}\\
	&= \step\LF^2\exp(2\step \LF)\int_{s=0}^T \int_{t=s}^{t_{k+1}(t)} \left\|\bbaru(s)\rmd s\right\|^2 \rmd s\\
	&\le \step^2\LF^2\exp(2\step \LF)\int_{s=0}^T  \left\|\bbaru(s)\rmd s\right\|^2 \rmd s \\
	&=(\step\LF\exp(\step \LF))^2 \|\bbaru\|_{\cL_2(\cU)}^2.
	\end{align*}
	\end{proof}
	With the above three claims, \Cref{lem:PSD_hilbert_space} implies that as long as $\step \le \min\{\frac{1}{4\LF},\frac{1}{16\Lpi\LF}\}$,
	\begin{align*}
	\cQ^{\pi}(\bbaru) = \| (\bbL\bbT_1+\bbW)[\bbaru]\|_{\cL_2(\cX)}^2 + \|(\eye_{\cU}+ \bbK\bbT_1)[\bbaru]\|_{\cL_2(\cU)}^2\ge \frac{1}{64\max\{1,\Lpi^2\}} = \frac{\alphapi}{\alpha}.
	\end{align*}
	\end{proof}

	\begin{proof}[Proof of \Cref{lem:PSD_hilbert_space}] Without loss of generality, assume $\|\ba\|_{\cA} = 1$, and define $\eta = \frac{\min\{1,\sigma_{\min}(\bbL)\}}{2\|\bbK\|_{\op}}$ . We consider two cases: First, if $\|\bbT\ba\| \le \eta$, then 
	\begin{align*}
	\|(\bbL\bbT + \bbW) [\ba]\|_{\cB}^2 + \|(\bI_{\cA} + \bbK\bbT )[\ba]\|_{\cA}^2 \ge \|(\bI_{\cA} + \bbK\bbT )[\ba]\|_{\cA}^2 \ge (\|\ba\| - \|\bbK\|_{\op}\eta)^2 = (1 - \frac{1}{2})^2 \ge \frac{1}{4}.
	\end{align*}
	Otherwise, suppose $\|\bbT\ba\| > \eta$. Then, 
	\begin{align*}
	\|(\bbL\bbT + \bbW) [\ba]\|_{\cB}^2 + \|(\bI_{\cA} + \bbK\bbT )[\ba]\|_{\cA}^2 \ge \|(\bbL\bbT + \bbW) [\ba]\|_{\cB}^2 \ge (\eta - \|\bbW\|_{\op}\|\ba\|)^2 = (\frac{\min\{1,\sigma_{\min}(\bbL)\}}{2\|\bbK\|_{\op}} - \|\bbW\|_{\op})^2.
	\end{align*}
	Hence, if $\|\bbW\|_{\op} \le \frac{\min\{1,\sigma_{\min}(\bbL)\}}{4\|\bbK\|_{\op}}$, the above is at least
	\begin{align*}
	\frac{\min\{1,\sigma_{\min}(\bbL)^2\}}{16\max\{1,\|\bbK\|_{\op}^2\}}.
	\end{align*}
	\end{proof}


\newcommand{\psipi}{\psi_{\pi}}
\newcommand{\Ktilpi}{\tilde{K}_{\pi}}

\newcommand{\Lclone}{L_{\mathrm{cl},1}}
\newcommand{\Lcltwo}{L_{\mathrm{cl},2}}
\newcommand{\Lclthree}{L_{\mathrm{cl},3}}
\section{Discretization Arguments}\label{app:dt_args}
In this section, we use discretizations of the Markov and transition operators. Again, we assume $\pi$ is feasible. We define the shorthand.
\begin{definition}[Useful Shorthand]\label{defn:shorthand} Define the short hand 
\begin{align}\psipi(k_2,k_1) := \|\Phicldisc{k_2,k_1+1}\|, \quad \Lol := \exp(\step \LF), \label{eq:shorthand}
 \end{align}
 and note that $\psipi(k_2,k_1) \le \Kpiinf$ due to \Cref{lem:Kpi_bounds}.
\end{definition}

\subsection{Discretization of Open-Loop Linearizations}
All lemmas in this section assume $\pi$ is feasible. 
\begin{lemma}[Continuity of $\xpi(\cdot)$]\label{lem:continuity_x} Then,
\begin{align*}
\|\xpi(t) - \xpi(t')\| \le \KF|t-t'|
\end{align*}
\end{lemma}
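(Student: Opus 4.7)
The plan is to deduce this bound directly from the assumed uniform upper bound on $\|\fdyn\|$, using only the ODE definition of $\xpi$ and the feasibility of $\pi$. No appeal to linearizations or to any of the more delicate perturbation machinery developed elsewhere in the appendix is needed, so the proof should be extremely short.

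First I would recall that $\xpi$ is, by definition, the solution of the open-loop ODE
\begin{align*}
\tfrac{\rmd}{\rmd t}\xpi(t) = \fdyn(\xpi(t),\upi(t)), \qquad \xpi(0) = \xiinit,
\end{align*}
and that $\pi$ is assumed feasible, so by \Cref{defn:feas} the pair $(\xpi(t),\upi(t))$ lies in the feasible set for every $t \in [0,T]$ (in fact it even lies in the feasible set when scaled by $2$, but we do not need this). Applying \Cref{asm:max_dyn_final} to this pair yields the pointwise bound $\|\tfrac{\rmd}{\rmd t}\xpi(t)\| = \|\fdyn(\xpi(t),\upi(t))\| \le \KF$ for all $t \in [0,T]$.

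Next, I would integrate this pointwise bound. Without loss of generality assume $t' \le t$. By the fundamental theorem of calculus applied componentwise, followed by the triangle inequality for the Bochner integral,
\begin{align*}
\|\xpi(t) - \xpi(t')\|
= \left\|\int_{t'}^{t} \fdyn(\xpi(s),\upi(s))\,\rmd s\right\|
\le \int_{t'}^{t} \|\fdyn(\xpi(s),\upi(s))\|\,\rmd s
\le \KF\,(t - t'),
\end{align*}
which is the claimed estimate $\|\xpi(t)-\xpi(t')\| \le \KF |t - t'|$. The case $t \le t'$ is symmetric.

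There is no real obstacle here; the only subtlety worth flagging is that the bound $\|\fdyn(x,u)\| \le \KF$ from \Cref{asm:max_dyn_final} is only guaranteed on feasible $(x,u)$, and we use feasibility of $\pi$ (\Cref{cond:feasbility} / \Cref{defn:feas}) precisely to ensure this bound applies along the entire trajectory $(\xpi(s),\upi(s))_{s \in [0,T]}$. Once this is observed, the proof is a one-line integration.
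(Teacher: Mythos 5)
Your proposal is correct and follows essentially the same argument as the paper: bound $\|\tfrac{\rmd}{\rmd s}\xpi(s)\| = \|\fdyn(\xpi(s),\upi(s))\| \le \KF$ using feasibility of $\pi$ together with \Cref{asm:max_dyn_final}, then integrate over $[t',t]$. Your added remark about why feasibility is needed for the pointwise bound to apply is a fair observation but does not change the substance.
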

\begin{proof} Assume without loss of generality that $t' \ge t$. By \Cref{asm:max_dyn_final} feasibility of $\pi$,
\begin{align*}
\|\frac{\rmd}{\rmd s} \xpi(s)\| = \|\fdyn(\xpi(s),\upi(s)\| \le \KF. 
\end{align*}
Hence, as $\xpi(t') = \xpi(t) + \int_{s=t}^{t'} \fdyn(\xpi(s),\upi(s))\rmd s$, the bound follows.
\end{proof}

\begin{lemma}\label{lem:Bpi_Api}  For all $k,s \in \cI_k$, we have $\|\Bpi(t) - \Bpi(s)\| \vee \|\Api(t) - \Api(s)\| \le \step \KF\MF$. 
\end{lemma}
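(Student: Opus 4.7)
The plan is straightforward and essentially a one-step application of the Hessian bound combined with the Lipschitz continuity of the nominal state trajectory. The key observation is that on each interval $\cI_k = [t_k,t_{k+1})$, the policy's nominal input $\upi(\cdot) = \istep(\bukpi[1:K])$ is \emph{constant} by construction of the zero-order hold map $\istep$. So, for $s,t \in \cI_k$, the pairs $(\xpi(t),\upi(t))$ and $(\xpi(s),\upi(s))$ differ only in their first coordinate.

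First, I would invoke feasibility of $\pi$ (\Cref{cond:feasbility}) to ensure the pairs $(\xpi(t),\upi(t))$ and $(\xpi(s),\upi(s))$ lie in the feasible region, so that \Cref{asm:max_dyn_final} applies to bound $\|\nabla^{\,2}\fdyn\|\le \MF$ along the segment joining them. This is needed to use a mean-value/Taylor argument for the gradient $\partial_x\fdyn$ and $\partial_u\fdyn$.

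Second, I would apply the mean-value theorem to the $\cC^1$ maps $x\mapsto \partial_x \fdyn(x,\upi)$ and $x\mapsto \partial_u \fdyn(x,\upi)$ (holding $\upi$ fixed, which is legitimate since $\upi$ is constant on $\cI_k$), giving
\begin{align*}
\|\Api(t)-\Api(s)\| &= \|\partial_x \fdyn(\xpi(t),\upi)-\partial_x \fdyn(\xpi(s),\upi)\| \le \MF\|\xpi(t)-\xpi(s)\|,\\
\|\Bpi(t)-\Bpi(s)\| &= \|\partial_u \fdyn(\xpi(t),\upi)-\partial_u \fdyn(\xpi(s),\upi)\| \le \MF\|\xpi(t)-\xpi(s)\|.
\end{align*}
Finally, I would invoke \Cref{lem:continuity_x} to bound $\|\xpi(t)-\xpi(s)\|\le \KF|t-s|\le \KF\step$ (using $s,t\in\cI_k$ so $|t-s|\le \step$), yielding the claimed bound.

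There is no real obstacle here; the only subtlety is to make sure one does not accidentally introduce a term from $\|\upi(t)-\upi(s)\|$, which vanishes precisely because the discretization keeps $\upi$ constant on $\cI_k$. If $t$ or $s$ were allowed to cross an interval boundary, an additional $\MF\|\upi(t)-\upi(s)\|$ contribution would appear (bounded by $\MF\cdot 2\Rfeas$ in the worst case), but the restriction $s,t\in\cI_k$ removes this term entirely.
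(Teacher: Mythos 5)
Your proposal is correct and follows essentially the same route as the paper's proof: exploit that $\upi$ is constant on $\cI_k$ under the zero-order hold, apply the mean value theorem with the Hessian bound $\MF$ from \Cref{asm:max_dyn_final} (using feasibility so the bound applies along the segment), and conclude with $\|\xpi(t)-\xpi(s)\|\le \KF|t-s|\le \KF\step$ from \Cref{lem:continuity_x}. Your remark about the vanishing $\|\upi(t)-\upi(s)\|$ term is exactly the point the paper's proof flags with its ``$\upi\in\Utrajstep$'' annotation.
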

\begin{proof} We bound $\|\Bpi(t) - \Bpi(s)\| $ as $\|\Api(t) - \Api(s)\|$ is similar. Assume $s \le t$ without loss of generality. Then, 
\begin{align*} 
\|\Bpi(t) - \Bpi(s)\| &= \|\partu \fdyn(\xpi(t),\upi(t)) - \partu \fdyn(\xpi(s),\upi(s))\\
&= \|\partu \fdyn(\xpi(t),\upi(s)) - \partu \fdyn(\xpi(s),\upi(s))\| \tag{$\upi \in \Utrajstep$}\\
&= \|\partu \fdyn(\xpi(t),\upi(s)) - \partu \fdyn(\xpi(s),\upi(s))\| \tag{$\upi \in \Utrajstep$}\\
&\le \|\xpi(t)-\xpi(s)\| \max_{\alpha \in [0,1]}\|\partu \fdyn(\alpha\xpi(t)+ (1-\alpha)\xpi(s),\upi(s))\| \tag{Mean Value Theorem}\\
&\le \|\xpi(t)-\xpi(s)\| \MF \tag{\Cref{asm:max_dyn_final} and convexity of feasibility}\\
&\le (t-s) \KF\MF \le \step \KF\MF. \tag{\Cref{lem:continuity_x}}
\end{align*}
\end{proof}
\begin{lemma}[Bound on $\bBkpi$]\label{lem:bound_on_bBkpi} For any $k \in [K],$ $\|\bBkpi\| \le \step \Lol \LF = \step \LF\exp(\step \LF)$.
\end{lemma}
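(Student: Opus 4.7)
The strategy is a direct three-line estimate starting from the definition
\[
\bBkpi \;=\; \int_{s=t_k}^{t_{k+1}} \Phiolpi(t_{k+1},s)\,\Bpi(s)\,\rmd s
\]
given in \Cref{defn:cl_linearizations}. First I would pass the operator norm inside the integral via the triangle inequality, so that it remains to bound $\|\Phiolpi(t_{k+1},s)\|$ and $\|\Bpi(s)\|$ uniformly for $s \in [t_k,t_{k+1}]$.

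Next, the bound $\|\Bpi(s)\| \le \LF$ is immediate from $\Bpi(s) = \partu \fdyn(\xpi(s),\upi(s))$, feasibility of $\pi$, and \Cref{asm:max_dyn_final}. For the transition operator, I would invoke \Cref{lem:bound_on_open_loop} (which is precisely the Gronwall/Picard bound on $\dds \Phiolpi(s,t) = \Api(s)\Phiolpi(s,t)$ given the uniform bound $\|\Api(s)\| \le \LF$) to conclude $\|\Phiolpi(t_{k+1},s)\| \le \exp((t_{k+1}-s)\LF) \le \exp(\step \LF) = \Lol$.

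Combining these yields $\|\bBkpi\| \le \int_{t_k}^{t_{k+1}} \Lol \cdot \LF \,\rmd s = \step \LF \exp(\step \LF)$, which is exactly the desired bound. There is no genuine obstacle here: the only nontrivial input is the a priori norm bound on the open-loop transition operator, and that is already established as a separately stated lemma; everything else is substitution into the definition and integration on an interval of length $\step$.
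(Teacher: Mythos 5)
Your proof is correct and follows the paper's argument exactly: bound the integrand uniformly by $\Lol \LF$ using \Cref{asm:max_dyn_final} for $\|\Bpi(s)\|$ and \Cref{lem:bound_on_open_loop} for $\|\Phiolpi(t_{k+1},s)\|$, then integrate over the interval of length $\step$. No differences worth noting.
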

\begin{proof} $\|\bBkpi\|  = \int_{s=t_k}^{t_{k+1}} \Phiolpi(t_{k+1},s)\Bpi(s)\rmd s \le \step \max_{s \in \cI_k} \|\Phiolpi(t_{k+1},s)\|\|\Bpi(s)\|$. We bound $\|\Bpi(s)\| \le \LF$ by \Cref{asm:max_dyn_final} and $\|\Phiolpi(t_{k+1},s)\| \le \Lol$ by \Cref{lem:bound_on_open_loop} below.
\end{proof}

\subsection{Discretization of Transition and Markov Operators}

We begin by discretizing the open-loop transition operator. 
\begin{lemma}[Discretization of Open-Loop Transition Operator]\label{lem:bound_on_open_loop} Recall $\Lol = \exp(\step \Lol)$. $\|\Phiolpi(t',t) - \eye\| \le (t'-t)\LF \exp((t'-t)\LF)$. Moreover, $\Phiolpi(t,t') \le \exp((t'-t)\LF)$. In particular, if $t,t' \in \cI_k$, then 
\begin{align*}
\|\Phiolpi(t',t) - \eye\| \le \step \LF \Lol , \quad \Phiolpi(t',t) \le \Lol = \exp(\step \LF)
\end{align*}
\end{lemma}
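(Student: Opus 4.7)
The plan is to prove this lemma directly from the defining ODE $\dds \Phiolpi(s,t) = \Api(s)\Phiolpi(s,t)$ with initial condition $\Phiolpi(t,t) = \eye$ (see \Cref{defn:open_loop_linearized_matrices}), using only that $\|\Api(s)\| \le \LF$ for all $s$. The latter follows from \Cref{asm:max_dyn_final} together with feasibility of $\pi$, since $\Api(s) = \partx \fdyn(\xpi(s),\upi(s))$ is evaluated at a feasible pair.

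First I would establish the exponential bound $\|\Phiolpi(t',t)\| \le \exp((t'-t)\LF)$ by an application of Picard's Lemma (\Cref{lem:picard}): compare the curve $s \mapsto \Phiolpi(s,t)\xi$ for an arbitrary unit vector $\xi$ to the constant curve $z(s) \equiv \xi$, using that $y \mapsto \Api(s)y$ is $\LF$-Lipschitz and $\|\dds z - \Api(s)z\| = \|\Api(s)\xi\| \le \LF$. Then quantify over unit $\xi$ and simplify, noting the residual integral can be absorbed into the exponential prefactor. (Alternatively, one can simply apply Gronwall to $\|\Phiolpi(s,t)\|$.)

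Next, for the difference bound, I would write
\begin{align*}
\Phiolpi(t',t) - \eye = \int_{s=t}^{t'} \dds \Phiolpi(s,t)\,\rmd s = \int_{s=t}^{t'} \Api(s)\Phiolpi(s,t)\,\rmd s,
\end{align*}
take operator norms, pull the $\LF$ bound on $\|\Api(s)\|$ out of the integral, and use the exponential bound from the previous step to control $\|\Phiolpi(s,t)\|$ uniformly in $s \in [t,t']$ by $\exp((t'-t)\LF)$. This yields $\|\Phiolpi(t',t)-\eye\| \le \LF(t'-t)\exp((t'-t)\LF)$.

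Finally, the specialization to $t,t' \in \cI_k$ is immediate: $|t'-t| \le \step$, so both bounds become $\step\LF\exp(\step \LF) = \step\LF\Lol$ and $\exp(\step \LF) = \Lol$ respectively, matching the statement. There is no real obstacle here; the only thing to be careful about is quantifying over all unit vectors when invoking Picard, and ensuring the Lipschitz constant used in Picard's Lemma is indeed $\LF$ uniformly (which is exactly \Cref{asm:max_dyn_final} combined with feasibility of $\pi$).
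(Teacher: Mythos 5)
Your proof is correct and follows essentially the same route as the paper's: both rest on Picard's Lemma (\Cref{lem:picard}) applied to the curve $s\mapsto\Phiolpi(s,t)\xi$ for unit vectors $\xi$, with $\|\Api(s)\|\le\LF$ supplied by \Cref{asm:max_dyn_final} and feasibility of $\pi$. One small caveat: comparing to the constant curve $z\equiv\xi$ yields the \emph{difference} bound, and via the triangle inequality only $\|\Phiolpi(t',t)\|\le 1+(t'-t)\LF e^{(t'-t)\LF}$, which exceeds $e^{(t'-t)\LF}$, so for the operator-norm bound you should compare to the zero curve (as the paper does) or use your Gronwall alternative, either of which gives the stated constant.
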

\begin{proof}[Proof of \Cref{lem:bound_on_open_loop}] For the first part, it suffices to bound the ODE $\by(t') = \Phiolpi(t',t)\xi$, where $\xi \in \R^{\dimx}$ is an arbitrary initial condition with $\|\xi\| = 1$. Then $\by(t) = \Phiolpi(t,t)\xi = \xi$, and $\dds \by(s) = \Api(s)\by(s)$. Hence, $\|\dds \by(s)\| \le \LF\|\by(s)\|$. The result now follows by comparison to the constant ODE $\bz(t) = \by(t)$, $\dds \bz(s) = 0$, and Picard's Lemma (\Cref{lem:picard}). The second part follows from Picard's lemma with comparison the to the stationary curve $\bz(0) = 0$. 
\end{proof}
Next, we bound the difference between the operator $\Phitilclpi$ in the definition of $\Phiclpi$, and the identity matrix. 
\begin{lemma}\label{lem:bound_on_til} Recall the definition $\Lol = \exp(\step \LF)$ and 
\begin{align*}
\Phitilclpi(s,t_{k}) = \Phiolpi(s,t_{k}) + (\int_{s'=t_k}^{s}\Phiolpi(s,s')\Bpi(s')\rmd s)\bKk.
\end{align*}
Then,
\begin{align*}\|\Phitilclpi(t,\tkt) - \eye\| \le \step \LF\Lol(1+\Lpi).
\end{align*} 
Similary, 
\begin{align*}\|\Phitilclpi(t,\tkt) - \eye\| \le \step\LF\Lol(1+\Lpi).
\end{align*}
\end{lemma}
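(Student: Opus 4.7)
The plan is to bound $\|\Phitilclpi(t,\tkt)-\eye\|$ by splitting via the triangle inequality into the open-loop transition contribution and the feedback-integral contribution, and then applying standard operator-norm bounds to each piece.

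First, I would write
\begin{align*}
\Phitilclpi(t,\tkt)-\eye \;=\; \bigl(\Phiolpi(t,\tkt)-\eye\bigr) \;+\; \Bigl(\int_{s'=\tkt}^{t}\Phiolpi(t,s')\Bpi(s')\,\rmd s'\Bigr)\bKkpi[k(t)],
\end{align*}
which follows directly from the definition of $\Phitilclpi$ in \Cref{defn:cl_jac_ct}. The triangle inequality and submultiplicativity then give
\begin{align*}
\|\Phitilclpi(t,\tkt)-\eye\| \;\le\; \|\Phiolpi(t,\tkt)-\eye\| \;+\; \Bigl\|\int_{s'=\tkt}^{t}\Phiolpi(t,s')\Bpi(s')\,\rmd s'\Bigr\|\cdot\|\bKkpi[k(t)]\|.
\end{align*}

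Next, I would bound each factor. Since $t,\tkt\in\cI_{k(t)}$, \Cref{lem:bound_on_open_loop} gives $\|\Phiolpi(t,\tkt)-\eye\|\le\step\LF\Lol$ and $\|\Phiolpi(t,s')\|\le\Lol$ for all $s'\in[\tkt,t]$. By \Cref{asm:max_dyn_final} together with feasibility of $\pi$, $\|\Bpi(s')\|\le\LF$ uniformly. Hence the integral is bounded by
\begin{align*}
\Bigl\|\int_{s'=\tkt}^{t}\Phiolpi(t,s')\Bpi(s')\,\rmd s'\Bigr\| \;\le\; (t-\tkt)\,\Lol\LF \;\le\; \step\LF\Lol.
\end{align*}
Finally, by \Cref{defn:pigains}, $\|\bKkpi[k(t)]\|\le\Lpi$. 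Combining these,
\begin{align*}
\|\Phitilclpi(t,\tkt)-\eye\| \;\le\; \step\LF\Lol + \step\LF\Lol\cdot\Lpi \;=\; \step\LF\Lol(1+\Lpi),
\end{align*}
which is exactly the claimed bound (and also establishes the second, identical inequality in the statement). There is no real obstacle here; the only thing to keep track of is making sure the operator-norm factorization of the integrand is done in the right order so that $\bKkpi[k(t)]$ factors out on the right and contributes only a multiplicative $\Lpi$.
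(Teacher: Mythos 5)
Your proposal is correct and follows essentially the same route as the paper: the identical triangle-inequality decomposition into the open-loop term $\|\Phiolpi(t,\tkt)-\eye\|$ and the feedback-integral term, each bounded via \Cref{lem:bound_on_open_loop}, $\|\Bpi(\cdot)\|\le\LF$, and $\|\bKkpi[k(t)]\|\le\Lpi$. No gaps.
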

\begin{proof}[Proof of \Cref{lem:bound_on_til}] Let $t_k = \tkt$ for shorthand. We have
\begin{align*}
\|\Phitilclpi(t,t_k) - \eye\| &= \|\Phiolpi(t,t_k)  + \int_{s=t_k}^{t}\Phiolpi(t,s)\Bpi(s)\rmd s)\bKkpi - \eye\| \\
 &\le \|\Phiolpi(t,t_k)  - \eye\|  + |t-t_k|\|\bKkpi\| \max_{s \in [t_k,t]}\|\Phiolpi(t,s)\Bpi(s)\|\\
  &\le \|\Phiolpi(t,t_k)  - \eye\|  + |t-t_k|\Lpi\LF \max_{s \in [t_k,t]}\|\Phiolpi(t,s)\| \tag{\Cref{asm:max_dyn_final}}\\
  &\le |t-t_k|\LF \exp(\LF(t - t_k))  + |t-t_k|\Lpi\LF \max_{s \in [t_k,t]} \exp(\LF(t - t_k)) \tag{\Cref{lem:bound_on_open_loop}}\\
  &\le \step \LF(1+\Lpi) \exp(\LF \step) = \step \LF(1+\Lpi)\Lol.
\end{align*}
The bound on $\|\Phitilclpi(t,\tkt) - \eye\|$ can be derived similarly.

\end{proof}

\begin{lemma}[Discretization of Closed-Loop Transition Operator]\label{lem:bounds_on_cl}
 Let $s > t$ such that $\tks > \tkt$. Then, under \Cref{cond:pi_cond},
\begin{align*}
\|\Phiclpi(s,\tktpl) - \Phicldisc{k(s),k(t)+1}\| \le  2\LF\Lol\Lpi \psipi(k(s),k(t)).
\end{align*}
\end{lemma}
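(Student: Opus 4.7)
The plan is to reduce the claimed bound to the product of two factors: the short-time continuous-time propagator $\Phitilclpi(s,t_{k(s)}) - \eye$ within the single interval $\cI_{k(s)}$, and the discrete closed-loop transition operator $\Phicldisc{k(s),k(t)+1}$. The first factor will be controlled by \Cref{lem:bound_on_til}, and the second is exactly $\psipi(k(s),k(t))$ by the shorthand in \eqref{eq:shorthand}.

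Setting $k_1 := k(t)$ and $k_2 := k(s)$, the hypothesis $\tks > \tkt$ gives $k_2 > k_1$, and $\tktpl = t_{k_1+1}$. The first step is to evaluate $\Phiclpi(s,\tktpl)$ using \Cref{defn:cl_jac_ct}, taking $\tktpl$ as the right-endpoint limit of $\cI_{k_1}$ so that the trailing factor $\Phiolpi(t_{k_1+1},t_{k_1+1})$ collapses to the identity; this yields the clean factorization $\Phiclpi(s,\tktpl) = \Phitilclpi(s,t_{k_2}) \cdot \Phicldisc{k_2,k_1+1}$. The target difference then rewrites as $(\Phitilclpi(s,t_{k_2}) - \eye) \cdot \Phicldisc{k_2,k_1+1}$, and submultiplicativity of the operator norm gives $\|\Phiclpi(s,\tktpl) - \Phicldisc{k_2,k_1+1}\| \le \|\Phitilclpi(s,t_{k_2}) - \eye\| \cdot \|\Phicldisc{k_2,k_1+1}\|$.

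The second step estimates each factor separately. Since $s \in \cI_{k_2}$, we have $|s - t_{k_2}| \le \step$, so \Cref{lem:bound_on_til} yields $\|\Phitilclpi(s,t_{k_2}) - \eye\| \le \step \LF \Lol (1+\Lpi) \le 2\LF \Lol \Lpi$, using $\Lpi \ge 1$ (from \Cref{cond:pi_cond}) and absorbing the $\step$ factor (which is $\le 1$ in all regimes considered). The remaining factor is $\|\Phicldisc{k_2,k_1+1}\| = \psipi(k_2,k_1) = \psipi(k(s),k(t))$ directly by the definition in \eqref{eq:shorthand}. Multiplying these two bounds yields the claimed inequality.

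The one subtle point, and essentially the only place where care is needed, is the interpretation of $\Phiclpi$ at the boundary point $\tktpl$, which formally lies in $\cI_{k_1+1}$ under the half-open convention $\cI_k = [t_k,t_{k+1})$. I would justify the right-endpoint interpretation by appealing directly to the underlying closed-loop Jacobian linearization ODE: propagating a perturbation $\xi$ from time $t_{k_1+1}$ through the discrete closed-loop recursion up to $t_{k_2}$ (producing $\Phicldisc{k_2,k_1+1}\xi$) and then through the affine ODE on $\cI_{k_2}$ (applying $\Phitilclpi(s,t_{k_2})$) reproduces exactly the factored form $\Phitilclpi(s,t_{k_2})\Phicldisc{k_2,k_1+1}\xi$. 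Once this identification is in hand, the remainder of the argument is pure submultiplicativity plus the two citations above, and no further obstacle arises.
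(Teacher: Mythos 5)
Your proof is correct and follows essentially the same route as the paper's: factor $\Phiclpi(s,\tktpl) = \Phitilclpi(s,t_{k(s)})\,\Phicldisc{k(s),k(t)+1}$, write the difference as $(\Phitilclpi(s,t_{k(s)}) - \eye)\,\Phicldisc{k(s),k(t)+1}$, bound the first factor by $\step\LF\Lol(1+\Lpi) \le 2\LF\Lol\Lpi$ via \Cref{lem:bound_on_til}, and identify the second with $\psipi(k(s),k(t))$. Your explicit handling of the right-endpoint convention at $\tktpl$ is a point the paper glosses over, but it does not change the argument.
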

\begin{proof}[Proof of \Cref{lem:bounds_on_cl}]
As $\tks > \tkt$, we can write $s \in \cI_{k_2}$ and $t \in \cI_{k_1}$ for $k_2 > k_1$; then 
\begin{align*}
k_2 = k(s), \quad k_1 = k(t).
\end{align*}
 We now have
\begin{align*}
\|\Phiclpi(s,\tktpl) - \Phicldisc{k(s),k(t)+1}\| &= \| \Phitilclpi(s,t_{k_2}) - \eye\|\cdot \|\Phicldisc{k_2,k_1+1}\|\\
&= \| \Phitilclpi(s,t_{k_2}) - \eye\|\cdot\psipi(k_2,k_1)
\end{align*} 
Directly from { \Cref{lem:bound_on_til} and $k_2 = k(s)$, $\|\Phitilclpi(s,t_{k_2}) - \eye\| \le  \step \LF\Lol(1+\Lpi)$.} This yields, with $\Lpi \ge 1$, 
\begin{align*}
\|\Phiclpi(s,\tktpl) - \Phicldisc{k(s),k(t)+1}\| \le \step \LF\Lol(1+\Lpi)\psipi(k_2,k_1) \le 2\LF\Lol\Lpi \psipi(k_2,k_1).
\end{align*}
\end{proof}

\begin{lemma}\label{lem:Phicldisc_onestep_bound} Suppose \Cref{cond:pi_cond} holds and that $\step \le 1/4\LF\max\{1,\Lpi\}$. Then,  $\|\Phicldisc{k+1,k}\| \le 5/3$.
\end{lemma}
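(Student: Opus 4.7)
The plan is to unwind $\Phicldisc{k+1,k}$ into a single closed-loop step, bound the open-loop pieces using the discretization lemmas already available, and then plug in the step-size hypothesis.

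First, by the convention $\Phicldisc{k,k} = \eye$ in Definition \ref{defn:disc_Jac_linz_dyns}, the one-step transition collapses to $\Phicldisc{k+1,k} = \bAclkpi = \bAkpi + \bBkpi\bKkpi$. So the triangle inequality gives $\|\Phicldisc{k+1,k}\| \le \|\bAkpi\| + \|\bBkpi\|\,\|\bKkpi\|$. The first norm is bounded by \Cref{lem:bound_on_open_loop} applied at $(t_{k+1},t_k)$: $\|\bAkpi\| = \|\Phiolpi(t_{k+1},t_k)\| \le \Lol = \exp(\step\LF)$. The second by \Cref{lem:bound_on_bBkpi}: $\|\bBkpi\| \le \step\Lol \LF$. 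Finally $\|\bKkpi\| \le \Lpi$ by \Cref{cond:pi_cond} / \Cref{defn:pigains}. Combining,
\begin{align*}
\|\Phicldisc{k+1,k}\| \le \Lol(1+\step \LF\Lpi).
\end{align*}

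The step-size hypothesis $\step \le 1/(4\LF\max\{1,\Lpi\})$, together with $\Lpi \ge 1$, yields $\step\LF\Lpi \le 1/4$ and in particular $\step\LF \le 1/4$. Therefore $\Lol \le \exp(1/4)$ and $(1+\step\LF\Lpi) \le 5/4$, so
\begin{align*}
\|\Phicldisc{k+1,k}\| \le \tfrac{5}{4}\exp(1/4).
\end{align*}

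The only thing left is the numerical check that $\tfrac{5}{4}\exp(1/4) \le 5/3$, i.e., $\exp(1/4) \le 4/3$. This follows from, e.g., the Taylor estimate $\exp(1/4) \le 1 + 1/4 + (1/4)^2 = 21/16 < 4/3$ (since $21/16 = 1.3125 < 1.333 = 4/3$), or directly from $\exp(1/4) \approx 1.284$. There is no genuine obstacle here --- the entire content of the lemma is the bookkeeping that the one-step discrete-time closed-loop operator stays close to the identity when $\step$ is small relative to $\LF\Lpi$, and all the supporting perturbation bounds have already been assembled in the preceding discretization lemmas.
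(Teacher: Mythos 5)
Your proof is correct and follows essentially the same route as the paper's: decompose $\Phicldisc{k+1,k} = \bAkpi + \bBkpi\bKkpi$, invoke \Cref{lem:bound_on_open_loop} and \Cref{lem:bound_on_bBkpi} together with $\|\bKkpi\| \le \Lpi$ to get $\exp(\step\LF)(1+\step\LF\Lpi)$, and finish with the numerical check $\tfrac{5}{4}e^{1/4} \le \tfrac{5}{3}$. Nothing to add.
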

\begin{proof} We have
\begin{align*}
\|\Phicldisc{k+1,1}\| = \|\Phiolpi(t_{k+1},t_k) + \bBkpi \bKk\| \le \|\Phiolpi(t_{k+1},t_k)\| + \Lpi \|\bBkpi\|,
\end{align*}
where we use $\|\bKk\| \le \Lpi$ under \Cref{cond:pi_cond}. By \Cref{lem:bound_on_bBkpi}, $\|\bBkpi\| \le \step \LF\exp(\step \LF)$ and by \Cref{lem:bound_on_open_loop},$\|\Phiolpi(t_{k+1},t_k)\| \le \Lol = \exp(\step \LF)$. Then, $\|\Phicldisc{k+1,1}\| \le \exp(\step \LF)(1 + \step \LF \Lpi)$. For $\step \le 1/\LF\max\{1,\Lpi\}$, we have $\|\Phicldisc{k+1,1}\| \le \exp(1/4)(1+1/4) \le 5/3$.
\end{proof}
Finally, we turn to a discretization of the Markov operator:
\begin{lemma}[Discretization of Closed-Loop Markov Operator]\label{lem:correct_Psiclpi_disc} The following bounds hold:
\begin{itemize}
\item[(a)]For any $k_2 > k_1$, we have
\begin{align*}
\max_{t \in \cI_{k_1},s \in \cI_{k_2}}\|\step^{-1}\Psicldisc{k_2,k_1} - \Psiclpi(s,t)\| \le \step\psipi(k_2,k_1) \underbrace{\Lol\left( \KF\MF + 4\LF^2\Lpi \right)}_{\Lclone}.
\end{align*}
\item[(b)]  For any $k_2 > k_1$, we have 
\begin{align*}
\sup_{t \in \cI_{k_1}}\|\step^{-1}\Psicldisc{k_2,k_1} - \Psiclpi(k_2,t)\| &\le \step\psipi(k_2,k_1) \underbrace{\Lol\left( \KF\MF + 2\LF^2\right)}_{\Lcltwo}\\
&\le \step\Kpiinf \Lol\left( \KF\MF + 2\LF^2\right).
\end{align*}
\item[(c)] For any $(s,t)$ with $\tks = \tkt$, we have
\begin{align*}
\|\Psicldisc{\tks,\tkt} \| \le \Lol\LF
\end{align*}
\item[(d)] For any $1 \le k_1 < k_2 \le K$ and $t$ for which $\tkt = k_1$, 
\begin{align*}
 \frac{1}{\step}\|\Psicldisc{k_2,k_1}\| \vee\|\Psiclpi(t_{k_2},t)\|  \le  \Lol\LF\psipi(k_2,k_1) \le \Lol \LF \Kpiinf.
\end{align*}
\end{itemize}
\end{lemma}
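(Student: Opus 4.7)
\textbf{Proof Plan for \Cref{lem:correct_Psiclpi_disc}.}

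The common engine for all four parts is the factorization of the continuous-time closed-loop transition operator through the discretization grid. Concretely, for $t \in \cI_{k_1}$ and $s \in \cI_{k_2}$ with $k_2 > k_1$, \Cref{defn:cl_jac_ct} gives
\begin{align*}
\Psiclpi(s,t) \;=\; \Phiclpi(s,t)\,\Bpi(t) \;=\; \Phitilclpi(s,t_{k_2})\,\Phicldisc{k_2,k_1+1}\,\Phiolpi(t_{k_1+1},t)\,\Bpi(t),
\end{align*}
while the discrete analogue decomposes as
\begin{align*}
\tfrac{1}{\step}\Psicldisc{k_2,k_1} \;=\; \Phicldisc{k_2,k_1+1}\cdot\tfrac{1}{\step}\int_{t_{k_1}}^{t_{k_1+1}}\Phiolpi(t_{k_1+1},s')\,\Bpi(s')\,\rmd s',
\end{align*}
using the definition of $\bBkpi$ in \Cref{defn:open_loop_linearized_matrices}. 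The discrete operator therefore differs from the continuous one in exactly two places: a ``top'' factor $\Phitilclpi(s,t_{k_2})$ replaced by the identity, and a ``bottom'' factor $\Phiolpi(t_{k_1+1},t)\Bpi(t)$ replaced by its average over $\cI_{k_1}$.

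Parts (c) and (d) are immediate from the factorization together with the one-step bounds already proved. For (c), $s,t$ in the same interval means $\Psiclpi(s,t) = \Phiolpi(s,t)\Bpi(t)$, and we apply \Cref{lem:bound_on_open_loop} ($\|\Phiolpi(s,t)\|\le\Lol$) and \Cref{asm:max_dyn_final} ($\|\Bpi\|\le\LF$). For (d), the discrete bound uses $\|\Phicldisc{k_2,k_1+1}\|=\psipi(k_2,k_1)$ together with \Cref{lem:bound_on_bBkpi} giving $\|\bBkpi\|\le\step\Lol\LF$; the continuous bound evaluates $\Phitilclpi(t_{k_2},t_{k_2}) = \eye$ exactly and then bounds $\|\Phiolpi(t_{k_1+1},t)\Bpi(t)\|\le\Lol\LF$.

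For (a), I will decompose the error as
\begin{align*}
\tfrac{1}{\step}\Psicldisc{k_2,k_1} - \Psiclpi(s,t) \;=\; \underbrace{[\eye-\Phitilclpi(s,t_{k_2})]\,\Phicldisc{k_2,k_1+1}\,\Phiolpi(t_{k_1+1},t)\Bpi(t)}_{\text{top error}} \;+\; \underbrace{\Phicldisc{k_2,k_1+1}\,\bigl[\tfrac{1}{\step}\bBkpi[k_1]-\Phiolpi(t_{k_1+1},t)\Bpi(t)\bigr]}_{\text{quadrature error}}.
\end{align*}
The top error is controlled by \Cref{lem:bound_on_til} ($\|\Phitilclpi(s,t_{k_2}) - \eye\|\le\step\LF\Lol(1+\Lpi)$), the definition of $\psipi$, and the one-step bounds above, yielding a contribution of order $\step\Lol^2\LF^2\Lpi\,\psipi(k_2,k_1)$. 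For part (b), the top error vanishes identically since $\Phitilclpi(t_{k_2},t_{k_2}) = \eye$, leaving only the quadrature error, which is why the stated $\Lcltwo$ does not contain the $\Lpi$-dependent piece.

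Bounding the quadrature error is the main technical step. I will estimate the derivative of $s'\mapsto\Phiolpi(t_{k_1+1},s')\Bpi(s')$ on $\interior(\cI_{k_1})$ via the product rule: the ODE $\dds\Phiolpi(t_{k_1+1},s') = -\Phiolpi(t_{k_1+1},s')\Api(s')$ combined with \Cref{claim:B_change} (which gives $\|\dds\Bpi(s')\|\le\MF\KF$, crucially using that $\upi$ is constant on $\cI_{k_1}$) yields a derivative bound of $\Lol(\LF^2 + \MF\KF)$. Averaging versus evaluating at any fixed $t\in\cI_{k_1}$ then produces the $\step\Lol(\MF\KF + 2\LF^2)$ contribution matching $\Lcltwo$. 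The main obstacle is purely bookkeeping: keeping the constants clean enough to match $\Lclone$ and $\Lcltwo$ exactly, which will require using $\Lol\le 2$ (a consequence of $\step\le\stepdyn$ applied where needed) and $\Lpi\ge 1$ to absorb cross terms.
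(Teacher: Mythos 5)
Your proposal is correct and follows essentially the same route as the paper: the same factorization of $\Psiclpi(s,t)$ through the grid, the same split into a ``transition-operator mismatch'' term (controlled via \Cref{lem:bound_on_til}, exactly as in the paper's \Cref{lem:bounds_on_cl}) and a quadrature term on $\bBkpi[k_1]$, with the term vanishing at $s=t_{k_2}$ explaining the smaller constant in (b), and with (c), (d) read off from \Cref{lem:bound_on_open_loop} and \Cref{lem:bound_on_bBkpi}. The only cosmetic differences are that you bound the quadrature error by one derivative estimate on $s'\mapsto\Phiolpi(t_{k_1+1},s')\Bpi(s')$ where the paper splits it into a $\|\Bpi(t)-\Bpi(s)\|$ piece (\Cref{lem:Bpi_Api}) and a $\|\Phiolpi(t_{k_1+1},s)-\Phiolpi(t_{k_1+1},t)\|$ piece, and that your top-error term carries an extra factor of $\Lol$ from $\|\Phiolpi(t_{k_1+1},t)\Bpi(t)\|\le\Lol\LF$, which is absorbed into $4\LF^2\Lpi$ using $\Lpi\ge 1$ and $\Lol\le e^{1/4}$ (the paper's own write-up silently drops this same factor).
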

\begin{proof} Let us start with the first bound. Set $k_1 = k(t)$, and $k_2 = k(s)$. Note that $\Psicldisc{k_2,k_1} := \Phicldisc{k_2,k_1+1}\bBkpi[k_1]$

\begin{align*}
&\|\step^{-1}\Psicldisc{k_2,k_1} - \Psiclpi(s,t)\| \\
&= \| \step^{-1}\Phicldisc{k_2,k_1+1}\bBkpi[k_1] - \Phiclpi(s,t_{k_1+1})\Phiclpi(t_{k_1+1},t)\Bpi(t)\| \\
&\le  \|\Phicldisc{k_2,k_1+1}(\Phiclpi(t_{k_1+1},t)\Bpi(t) - \step^{-1}\bBkpi[k_1])\| + \|(\Phiclpi(s,t_{k_1+1})-\Phicldisc{k_2,k_1+1})\Bpi(t)\|\\
&\le  \psipi(k_2,k_1) \|\Phiclpi(t_{k_1+1},t)\Bpi(t) - \step^{-1}\bBkpi[k_1]\| + \LF\|\Phiclpi(s,t_{k_1+1})-\Phicldisc{k_2,k_1+1}\| \tag{\Cref{asm:max_dyn_final,eq:shorthand}}\\
&\le \psipi(k_2,k_1) \|\Phiclpi(t_{k_1+1},t)\Bpi(t) - \step^{-1}\bBkpi[k_1]\| + 2\step\LF^2\Lol\Lpi \psipi(k_2,k_1) \tag{\Cref{lem:bounds_on_cl}}.
\end{align*}
Finally, we bound
\begin{align*}
&\|\Phiclpi(t_{k_1+1},t)\Bpi(t) - \step^{-1}\bBkpi[k_1]\| \\
&= \|\Phiolpi(t_{k_1+1},t)\Bpi(t) - \step^{-1}\int_{s=t_{k_1}}^{t_{k_1+1}}\Phiolpi(t_{k_1+1},s) \Bpi(s)\rmd s\|\\
&\le \|\Phiolpi(t_{k_1+1},t)\Bpi(t) - \step^{-1}\int_{s=t_{k_1}}^{t}\Phiolpi(t_{k_1+1},t) \Bpi(s)\rmd s\| + \|\step^{-1}\int_{s=t_{k_1}}^{t}\Phiolpi(t_{k_1+1},s) - \Phiolpi(t_{k_1+1},t) \Bpi(s)\rmd s\|\\
&\le \|\Phiolpi(t_{k_1+1},t)\|\max_{s \in \cI_{k_1}}\|\Bpi(t) - \Bpi(s)\| + \LF\max_{s \in \cI_{k_1}}\|\Phiolpi(t_{k_1+1},s) - \Phiolpi(t_{k_1+1},t)\|\|\Bpi(s)\|\\
&\le \step \Lol \KF\MF + \LF\max_{s \in \cI_{k_1}}\|\Phiolpi(t_{k_1+1},s) - \Phiolpi(t_{k_1+1},t)\|\\
&\le \step \Lol \KF\MF +  2\step \LF^2 \Lol
\end{align*}
where the second-tolast step uses \Cref{lem:bound_on_open_loop,lem:Bpi_Api} and \Cref{asm:max_dyn_final}, and the last step uses $\|\|\Phiolpi(t_{k_1+1},s) - \Phiolpi(t_{k_1+1},t)\| \le \|\Phiolpi(t_{k_1+1},s) - \eye\| + \|\Phiolpi(t_{k_1+1},t) - \eye\| \le 2\step \LF \Lol$ by \Cref{lem:bound_on_open_loop}. Combining with the previous display, 

\begin{align*}
&\|\step^{-1}\Psicldisc{k_2,k_1} - \Psiclpi(s,t)\| \\
&\le \step\psipi(k_2,k_1) \left( \Lol \KF\MF +  2 \LF^2 \Lol + 2\LF^2\Lol\Lpi \right)\\
&\le \step\Lol\psipi(k_2,k_1) \left( \KF\MF + 4\LF^2\Lpi \right) \tag{$\Lpi \ge 1$}
\end{align*}
This concludes the proof of (a).

For (b), the argument is the same, but the contribution of $2\LF^2\Lol\Lpi$ vanishes, as $\Phiclpi(\tks,t_{k_1+1}) = \Phicldisc{k_2,k_1+1}$.

For (c), we note that if $\tks = \tkt$, $\|\Psiclpi(s,t) \| = \|\Phiolpi(s,t)\Bpi(t) \| \le \Lol\LF$ by \Cref{asm:max_dyn_final,lem:bound_on_open_loop}. For the final inequality, we have by \Cref{eq:shorthand,lem:bound_on_bBkpi} that 
\begin{align*}
\|\Psicldisc{\tks,\tkt}\| &= \|\Phicldisc{k_2,k_1+1}\bBkpi[k_1]\| \le \step \Lol\LF\psipi(k_2,k_1).
\end{align*}
The bound on $\|\Psiclpi(t_{k_2},t)\| $ is similar.
\end{proof}

\subsection{Discretization of the Gradient (Proof of \Cref{prop:grad_disc})}\label{sec:prop:grad_disc}

We use the shorthand from \Cref{defn:shorthand}.  Recall that $\nabtil\cJ_T(\pi) = \frac{1}{\step}\step(\nabla \Jdisc(\pi))$ is the continuous-time inclusion of the discrete-time gradient, renormalized by $\step^{-1}$. Thus, from \Cref{lem:grad_compute_dt},
\begin{align*}
\nabtil \cJ_T( \pi)(t) &= \step^{-1}\Qupi(\tkt) +\step^{-1} (\Psicldisc{T,\tkt})^\top\left(\partx V(\xpi(T))\right)\\
&\quad+ \sum_{j = k(t)+1}^{K}(\Psicldisc{j,k(t)})^\top (\Qxpi(t_j) + \bKk[j]\Qupi(t_j)). 
\end{align*}
From \Cref{lem:grad_compute_ct}, we can write
\begin{align*}
\nabla \cJ_T( \pi)(t) &=  \Qupi(t) + \Psiclpi(T,t)^\top\left(\partx V(\xpi(T))\right)\\
&\quad+\int_{s=t_{k(t)}}^t \Psiclpi(s,t)^\top  \Qxpi(s)\rmd s  \\
&\qquad + \sum_{j = k(t)+1}^K \int_{s \in \cI_j} \left(\Psiclpi(s,t)\Qxpi(s) + \Psiclpi(t_{j},t)^\top \bKk[j]\top \Qupi(s)\right)\rmd s,
\end{align*}
and therefore decompose the error into five terms via the triangle inequality.
\begin{align*}
\|\nabtil \cJ_T( \pi)(t) - \nabla \cJ_T( \pi)(t)\| &\le \underbrace{\|\Qupi(t) -\Qupi(\tkt)\|}_{:=\Term_1}  + \underbrace{\| (\Psiclpi(T,t) - \step^{-1} (\Psicldisc{T,\tkt}))^\top\left(\partx V(\xpi(T))\right)\|}_{:=\Term_2}\\
&\quad+ \underbrace{\int_{s=t}^{t_{k(t)+1}} \|\Psiclpi(\tks,\tkt)^\top  \Qxpi(s)\|\rmd s}_{:=\Term_3}\\
&\quad+ \step \sum_{j = k(t)+1}^K \Term_{4,j} + \Term_{5,j},
\end{align*}
where
we further define
\begin{align*}
\Term_{4,j}  &:= \sup_{s \in \cI_j}\|\Psiclpi(s,t)\Qxpi(s) - \step^{-1}(\Psicldisc{j,k(t)})^\top (\Qxpi(t_j)\|\\
\Term_{5,j} &:= \sup_{s \in \cI_j}\|\Psiclpi(t_j,t)^\top (\bKk[j])^\top \Qupi(s) - \step^{-1}(\Psicldisc{j,k(t)})^\top(\bKk[j])^\top \Qupi(t_j)\|
\end{align*}
Before contitnuous, we apply the following lemma.
\begin{lemma}[Discretization of Cost-Gradient]\label{lem:cost_disc} For $z \in \{x,u\}$,
\begin{align*}
\|Q_{x}^{\pi}(t)  - Q_{x}^{\pi}(t_k(t))\| \vee \|Q_{u}^{\pi}(t)  - Q_{u}^{\pi}(t_k(t))\|  \le  \step\MQ(1+\KF).
\end{align*}
\end{lemma}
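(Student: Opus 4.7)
The plan is to exploit three simple facts in combination: (i) the nominal input is a zero-order hold, so $\upi$ is constant on the interval $\cI_{k(t)}$ containing $t$; (ii) the nominal state $\xpi(\cdot)$ is $\KF$-Lipschitz in time by \Cref{lem:continuity_x}; and (iii) by \Cref{asm:cost_asm}, the map $(x,u,t)\mapsto\partial_{z} Q(x,u,t)$ (for $z\in\{x,u\}$) is jointly $\MQ$-Lipschitz on the feasible set, since $\|\nablatwo Q(x,u,t)\|\le\MQ$. Because $\pi$ is feasible, all relevant $(\xpi(s),\upi(s))$ with $s\in\cI_{k(t)}$ remain feasible, so this Lipschitz estimate applies along the segment joining $(\xpi(t),\upi(t),t)$ to $(\xpi(\tkt),\upi(\tkt),\tkt)$.

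With these ingredients in hand, the argument is a one-line triangle inequality. Writing
\begin{align*}
\|Q_{z}^{\pi}(t)-Q_{z}^{\pi}(\tkt)\|
&=\|\partial_z Q(\xpi(t),\upi(t),t)-\partial_z Q(\xpi(\tkt),\upi(\tkt),\tkt)\|\\
&\le \MQ\bigl(\|\xpi(t)-\xpi(\tkt)\|+\|\upi(t)-\upi(\tkt)\|+|t-\tkt|\bigr),
\end{align*}
the middle term vanishes by (i), while (ii) gives $\|\xpi(t)-\xpi(\tkt)\|\le\KF\,|t-\tkt|\le\step\KF$, and $|t-\tkt|\le\step$ by the definition of $k(t)$. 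Summing yields exactly $\step\MQ(1+\KF)$, for both $z=x$ and $z=u$.

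The only subtlety is justifying the joint $\MQ$-Lipschitz property of $\partial_z Q$ from the stated Hessian bound --- this is where I expect readers to pause, since $\nablatwo Q$ in \Cref{asm:cost_asm} is most naturally parsed as the Hessian in $(x,u)$. The cleanest way to get the $+1$ contribution is to appeal to the $\cctwo$ regularity of $Q$ in all three arguments with a Hessian bound extending to $t$, which is consistent with the assumption as written (``$\cctwo$ and $\|\nablatwo Q(x,u,t)\|\le\MQ$''). Beyond that point everything is routine, and no further machinery from the rest of the appendix is invoked.
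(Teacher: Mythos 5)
Your proof is correct and is essentially identical to the paper's: both substitute $\upi(t)=\upi(\tkt)$ using the zero-order hold, apply the $\MQ$-Lipschitz bound on $\partial_z Q$ coming from the Hessian bound in \Cref{asm:cost_asm}, and invoke \Cref{lem:continuity_x} together with $|t-\tkt|\le\step$ to obtain $\step\MQ(1+\KF)$. Your remark about the $t$-dependence of $\nablatwo Q$ is a fair reading of a point the paper also glosses over (it simply writes ``Integrating \Cref{asm:cost_asm}''), and does not change the argument.
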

\begin{proof} We bound $\|Q_{x}^{\pi}(t)  - Q_{x}^{\pi}(t_k(t))\|$ as the bound on $\|Q_{u}^{\pi}(t)  - Q_{u}^{\pi}(t_k(t))\|$ is similar.
\begin{align*}
\|Q_{x}^{\pi}(t)  - Q_{x}^{\pi}(t_k(t))\| &= \|\partx Q(\xpi(t), \upi(t),t)) - \partx Q(\xpi(\tkt) \upi(\tkt),\tkt)\|\\
&= \|\partx Q(\xpi(t), \upi(\tkt),t)) - \partx Q(\xpi(\tkt), \upi(\tkt),\tkt)\| \tag{$\upi(\cdot)\in \Utrajstep$}\\
&\le \MQ|t-\tkt| + \MQ\|\xpi(\tkt) - \xpi(t)\| \tag{Integrating \Cref{asm:cost_asm}}\\
&\le \MQ\step + \MQ\KF\step \tag{\Cref{lem:continuity_x}} = \step\MQ(1+\KF).
\end{align*}
\end{proof}

From \Cref{lem:cost_disc}, we bound 
\begin{align*}\Term_1 \le \step\MQ(1+\KF).
\end{align*}
Next, using that $T/\step$ is integral by assumption, i.e. $t = t_{k(T)}$, we have
\begin{align*}
\Term_2 &= \| (\Psiclpi(T,t) - \step^{-1} (\Psicldisc{T,\tkt}))^\top\left(\partx V(\xpi(T))\right)\| \\
&\le \LQ\|\Psiclpi(T,t) - \step^{-1} (\Psicldisc{T,\tkt})\| \tag{\Cref{asm:cost_asm}}\\
&\le \step\LQ\Lcltwo\psipi(k(T), k(t)) = \step\LQ\Lcltwo\psipi(K+1, k(t)) \tag{\Cref{lem:correct_Psiclpi_disc}(b)},
\end{align*}

For the third term, we have
\begin{align*}
\Term_3 &= \int_{s=t}^{t_{k(t)+1}} \|\Psiclpi(\tks,\tkt)^\top  \Qxpi(s)\|\rmd s\\
&\le \LQ \int_{s=t}^{t_{k(t)+1}} \|\Psiclpi(\tks,\tkt)^\top\|\rmd s \tag{\Cref{asm:cost_asm}}\\
&\le \step\LQ \max_{s \in [t,t_{k(t)+1})} \|\Psiclpi(\tks,\tkt)^\top\| \tag{ignoring interval endpoint due to integration}\\
&= \LQ \Lol \LF,
\end{align*}
where in the last step we use \Cref{lem:correct_Psiclpi_disc}(c). Summarizing these the bounds on the first and third term, 
\begin{align}
\Term_1 + \Term_2 + \Term_3  \le \step(\LQ \Lol \LF + \MQ(1+\KF) + \LQ\Lcltwo\psipi(K+1, k(t)) \label{eq:Terms_one_two_three}
\end{align}
Next, we turn to the fourth and and fifth terms. We bound
\begin{align*}
\Term_{4,j}  &:= \sup_{s \in \cI_j}\|\Psiclpi(s,t)\Qxpi(s) - \step^{-1}(\Psicldisc{j,k(t)})^\top (\Qxpi(t_j)\|\\
&\le \|\step^{-1}\Psicldisc{j,k(t)}\| \cdot \sup_{s \in \cI_j}\|\Qxpi(s) - \Qxpi(t_j)\| + \|\step^{-1}\Psicldisc{j,k(t)} - \Psiclpi(s,t)\|\|\Qxpi(t_j)\|\\
&\le \Lol\LF\psipi(j,k(t)) \cdot \sup_{s \in \cI_j}\|\Qxpi(s) - \Qxpi(t_j)\| + \step \Lclone\psipi(j,k(t))\|\Qxpi(t_j)\|\tag{\Cref{lem:correct_Psiclpi_disc}(a\&c)}\\
&\le \Lol\LF\psipi(j,k(t)) \cdot \step\MQ(1+\KF)) + \step \Lclone\psipi(j,k(t))\LQ\tag{\Cref{asm:cost_asm,lem:cost_disc}}\\
&=\step\psipi(j,k(t))(\Lol\LF \MQ(1+\KF) + \LQ\Lclone).
\end{align*}
Moreover,
\begin{align*}
\Term_{5,j} &= \sup_{s \in \cI_j}\|\Psiclpi(t_{j},t)^\top (\bKk[j])^\top \Qupi(s) - \step^{-1}(\Psicldisc{j,k(t)})^\top(\bKk[j])^\top \Qupi(t_j)\|\\
&\le \|\step^{-1}(\Psicldisc{j,k(t)})\| \|\bKk[j]\| \sup_{s \in \cI_j}\| \Qupi(s) -\Qupi(t_j)\|\\
&\quad + \sup_{s \in \cI_{j}}\|\step^{-1}(\Psicldisc{j,k(t)}) - \Psiclpi(t_{j},t)\| \|\bKk[j]\| \|\Qupi(t_j)\|\\
&\le \Lpi \|\step^{-1}(\Psicldisc{j,k(t)})\| \sup_{s \in \cI_j}\| \Qupi(s) -\Qupi(t_j)\|\\
&\quad + \Lpi\sup_{s \in \cI_{j}}\|\step^{-1}(\Psicldisc{j,k(t)}) - \Psiclpi(t_{j},t)\| \|\Qupi(t_j)\| \tag{\Cref{cond:pi_cond}}\\
&\le \Lpi \Lol\LF \psipi(j,k(t)) \sup_{s \in \cI_j}\| \Qupi(s) -\Qupi(t_j)\| \tag{\Cref{lem:correct_Psiclpi_disc}(d)}\\
&\quad + \Lpi \Lcltwo\psipi(j,k(t)) \|\Qupi(t_j)\| \tag{\Cref{lem:correct_Psiclpi_disc}(b)}\\
&\le \step \Lpi \Lol\LF \psipi(j,k(t))\MQ(1+\KF)  + \Lpi \Lcltwo\psipi(j,k(t)) \LQ \tag{\Cref{asm:cost_asm,lem:cost_disc}}\\
&= \step \psipi(j,k(t)) \left(\Lpi \Lol\LF \MQ(1+\KF)  + \Lpi \Lcltwo \LQ\right).
\end{align*}
Hence,
\begin{align*}
\Term_{4,j} + \Term_{5,j} \le \step \psipi(j,k(t)) \left((1+\Lpi)\Lol\LF \MQ(1+\KF)  + \LQ(\Lclone + \Lpi \Lcltwo) \right)
\end{align*}
Using the definitions of $\Lclone,\Lcltwo$ in \Cref{lem:correct_Psiclpi_disc} and $\Lpi \ge 1$, we have 
\begin{align*}
(\Lclone + \Lpi \Lcltwo) &= \Lol(\KF\MF + 4\LF^2\Lpi  + \Lpi(\KF\MF + 2\LF^2))\\
&= \Lol((1+\Lpi)\KF\MF + 6\LF^2\Lpi \le \Lpi(2\KF\MF + 6\LF^2)) 
\end{align*}
Substituing into the the previous display and again using $\Lpi, \ge 1$, 
\begin{align*}
\Term_{4,j} + \Term_{5,j} &\le \step \psipi(j,k(t)) \Lol\left(2\Lpi\LF \MQ(1+\KF)  + \LQ \Lpi(2\KF\MF + 6\LF^2 ) \right)\\
&\le 2\Lpi\step \psipi(j,k(t)) \Lol\left(\LF \MQ(1+\KF)  + \LQ (\KF\MF + 3\LF^2 ) \right)\\
&\le \step \psipi(j,k(t)) \underbrace{2\Lpi\Lol\left(\LF \MQ(1+\KF)  + \LQ (\KF\MF + 3\LF^2 ) \right)}_{:= \Lclthree}.
\end{align*}
Hence,
\begin{align*}
\step \sum_{j=k(t)+1}^K  \Term_{4,j} + \Term_{5,j}  &\le \step \Lclthree \cdot (\step \sum_{j=k(t)+1}^K\psipi(j,k(t)))\\
&\le \step \Lclthree \Kpione. \tag{ $\step\sum_{j=k(t)+1}^K\psipi(j,k(t))= \step\sum_{j=k(t)+1}^{K}\|\Phicldisc{j,k}\| \le \Kpione$}
\end{align*}
In sum, we conclude that 
\begin{align*}
&\|\nabtil \cJ_T( \pi)(t) - \nabla \cJ_T( \pi)(t)\| \\
&\le \step\left(\LQ \Lol \LF + \MQ(1+\KF) + \LQ\Lcltwo\psipi(K+1, k(t)) +  \Kpione\Lclthree\right)\\
&\le \step\left(\LQ \Lol \LF + \MQ(1+\KF) + \LQ\Lcltwo\Kpiinf +  \Kpione\Lclthree\right)\\
&\le \step\max\{\Kpiinf,\Kpione,1\}\left(\LQ \Lol \LF + \MQ(1+\KF) + (\LQ\Lcltwo+\Lclthree)\right).
\end{align*}
Finally, using the definition of $\Lcltwo := \Lol\left( \KF\MF + 2\LF^2\right)$ in \Cref{lem:correct_Psiclpi_disc}(b) and the definition of $\Lclthree := 2\Lpi\Lol\left(\LF \MQ(1+\KF)  + \LQ (\KF\MF + 3\LF^2 ) \right)$ defined above, and $\Lpi \ge 1$, 
\begin{align*}
\MQ(1+\KF) + \LQ \Lol \LF + \LQ\Lcltwo+\Lclthree \le \Lpi\Lol\left((1+\LF) \MQ(1+\KF)  + \LQ (3\KF\MF + 8\LF^2 +\LF) \right).
\end{align*}
 Thus, recalling $\Lol = \exp(\step \LF)$,
\begin{align*}
\|\nabtil \cJ_T( \pi)(t) - \nabla \cJ_T( \pi)(t)\| \le \step e^{\step \LF}\max\{\Kpiinf,\Kpione,1\}\Lpi \left((1+\LF) \MQ(1+\KF)  + \LQ (3\KF\MF + 8\LF^2 +\LF) \right),
\end{align*}
as needed.

\qed

\part{Experiments}

\section{Experiments Details}
\label{append:experimentdetails}

\subsection{Implementation Details}
\texttt{trajax}~\cite{trajax2021github} is used for nonlinear iLQR trajectory optimization and \texttt{haiku}+\texttt{optax}~\cite{haiku2020github,deepmind2020jax} for training neural network dynamics models.

\subsection{Environments}

\subsubsection{Pendulum}

We consider simple pendulum dynamics with state $(\theta, \dot{\theta})$ and input $u$:
\begin{align*}
     \ddot{\theta} &= \sin(\theta) + u.
\end{align*}
To integrate these dynamics, we use a standard forward Euler approximation with step size $\tau=0.1$,
applying a zero-order hold to the input.
The goal is to swing up the pendulum to the origin state $(0, 0)$.
We consider the cost function:
\begin{align*}
    c((\theta, v), u) = \theta^2 + v^2 + u^2.
\end{align*}

\paragraph{Evaluation details.} All methods were evaluated over a horizon of length $T = 50$ on initial states sampled from $\mathrm{Unif}([-1 + \pi, 1 + \pi] \times \{0\})$.

\paragraph{Random state sampling distribution.}
For learning from random states and actions, we sample 
the initial condition from $\mathrm{Unif}([-5, 5]^2)$
and random actions from $\mathrm{Unif}([-1, 1])$.

\subsubsection{Quadrotor}

The 2D quadrotor is described by the state vector:
$$
    (x, z, \phi, \dot{x}, \dot{z}, \dot{\phi}),
$$
with input $u = (u_1, u_2)$
and dynamics:
\begin{align*}
   \ddot{x} &= -u_1 \sin(\phi) / m, \\
   \ddot{z} &= u_1 \cos(\phi)/m - g, \\
   \ddot{\phi} &= u_2 / I_{xx}.
\end{align*}
The specific constants we use are 
$m = 0.8$, $g = 0.1$, and $I_{xx} = 0.5$.
Again, we integrate these dynamics using forward Euler with step size $\tau=0.1$.
The task is to move the quadrotor to the origin state.
The cost function we use is:
$$
    c((x, z, \phi, \dot{x}, \dot{z}, \dot{\phi}), (u_1, u_2)) = 
    x^2 + z^2 + 10\phi^2 + 0.1(\dot{x}^2 + \dot{z}^2 + \dot{\phi}^2) + 0.1 (u_1^2 + u_2^2).
$$

\paragraph{Evaluation details.} All methods were evaluated over a horizon of length $T = 50$ on initial states sampled from $\mathrm{Uniform}([-0.5, 0.5]^2 \times \{0\}^4)$

\paragraph{Random state sampling distribution.}
For learning from random states and actions, we sample the initial condition
from $\mathrm{Unif}([-3, 3]^6)$ and random actions from $\mathrm{Unif}([-1.5, 1.5])$.

\subsection{Neural network training}

For modeling environment dynamics, we consider three layer fully connected neural networks 
For pendulum, we set the width to $96$, the learning rate to $10^{-3}$, and the
activation to $\mathrm{swish}$.
For quadrotor, we set the width to $128$, the learning rate to $5 \times 10^{-3}$,
and the activation to $\mathrm{gelu}$.
We use the Adam optimizer with $10^{-4}$ additive weight decay and
a cosine decay learning schedule.

\subsection{Least Squares}




While \Cref{alg:learn_mpc_feedback} features a method of moments estimator to estimate
the Markov transition operators, our implementation relies on using regularized least squares.
Specifically, we solve:
\begin{align}
\underbrace{\begin{bmatrix} \bhatPsi{j}{1} \\ \bhatPsi{j}{2} \\ \dots \\ \bhatPsi{j}{j-1}
\end{bmatrix}^\top }_{\in \R^{\dimx \times (j-1)\dimu}} =  \left(\sum_{i=1}^N \underbrace{\begin{bmatrix} \bwki[1]\\ \bwki[2]\\ \dots\\ \bwki[j-1]\end{bmatrix}}_{\in \R^{(j-1)\dimu}}\begin{bmatrix} \bwki[1]\\
\bwki[2]\\ \dots\\ \bwki[j-1] \end{bmatrix}^\top + \lambda \eye\right)^{-1}
\cdot\sum_{i=1}^N
\begin{bmatrix} \bwki[1]\\ \bwki[2]\\ \dots \\ \bwki[j-1] \end{bmatrix}
(\byk[j]^{(i)} - \bhatxk[j])^\top \label{eq:group_ls}
\end{align}

\subsection{Scaling the gain matrix}

In order to stabilize the gain computation during 
gain estimation (\Cref{alg:gain_est}), we scale the update to the $\bhatPk$ matrix as:
\begin{align*}
    \bhatPk \gets \bhatPk \cdot \frac{1}{1+0.01\| \bhatPk \|_F}.
\end{align*}

\end{document}